\Crefname{construction}{Construction}{Constructions}
\Crefname{assumption}{Assumption}{Assumptions}
\Crefname{subsubsection}{Section}{Sections}
\newcommand{\LineComment}[1]{\Statex \(\triangleright\) \emph{#1}}
\newcommand{\nc}{\newcommand}
\nc{\sups}[1]{^{\scriptscriptstyle{#1}}}
\nc{\subs}[1]{_{\scriptscriptstyle{#1}}}
\newcommand{\wb}{\widebar}
\newcommand{\abs}[1]{\left| {#1} \right|}
\newtheorem*{rep@theorem}{\rep@title}
\newcommand{\newreptheorem}[2]{%
\newenvironment{rep#1}[1]{%
 \def\rep@title{#2 \ref{##1}}%
 \begin{rep@theorem}}%
 {\end{rep@theorem}}}
\newcommand\xlabel[2][]{\phantomsection\def\@currentlabelname{#1}\label{#2}}
\theoremstyle{plain}
\newtheorem{theorem}{Theorem}
\newtheorem{lemma}[theorem]{Lemma}
\newtheorem{corollary}[theorem]{Corollary}
\newtheorem{proposition}[theorem]{Proposition}
\newtheorem{fact}[theorem]{Fact}
\newtheorem{claim}[theorem]{Claim}
\newtheorem{assumption}[theorem]{Assumption}
\theoremstyle{definition}
\newtheorem{definition}[theorem]{Definition}
\newtheorem{defn}[theorem]{Definition}
\newtheorem{remark}[theorem]{Remark}
\newtheorem{observation}[theorem]{Observation}
\newtheorem{mainquestion*}{Main Question}
\numberwithin{theorem}{section}
\nc{\DMO}{\DeclareMathOperator}
\newcommand{\coreset}{emulator\xspace}
\newcommand{\trunccore}{truncated emulator\xspace}
\newenvironment{namedproof}[1]{\paragraph{Proof of #1.}\hspace{-1em}}{\hfill$\qed$\vspace{1em}}
\DeclareMathOperator*{\argmin}{arg\,min} %
\DeclareMathOperator*{\argmax}{arg\,max}
\nc{\Ccov}{C_{\mathsf{cov}}}
\nc{\Cnrm}{C_{\mathsf{nrm}}}
\nc{\Cemp}{C_{\mathsf{empnrm}}}
\newcommand{\IdealPC}{{\tt IdealGreedyCover}\xspace}
\DMO{\prox}{prox}
\DMO{\UCB}{UCB}
\DMO{\LCB}{LCB}
\nc{\pexp}{q_{\mathrm{exp}}}
\nc{\nn}{\nonumber}
\nc{\rk}{\mathrm{rk}}
\nc{\brk}[3]{{\rm br}_{#1}^{#2}({#3})}
\nc{\co}{{\rm co}}
\nc{\br}[2]{{\rm br}^{#1}({#2})}
\nc{\depth}[1]{{\rm d}({#1})}
\nc{\tA}{\textsc{A}}
\nc{\child}[2]{{\rm ch}_{#1}({#2})}
\nc{\parent}[1]{{\rm pa}({#1})}
\nc{\dg}{\dagger}
\nc{\bB}{\mathbf{B}}
\nc{\Span}{{\rm Span}}
\nc{\indsig}[2]{\mathcal{I}_{#1}({#2})}
\nc{\total}{{\rm fin}}
\nc{\early}{{\rm pre}}
\nc{\zsink}{z_{\rm sink}}
\nc{\lowv}{{\rm low}}
\nc{\ol}{\overline}
\nc{\ul}{\underline}
\nc{\madec}[3]{\texttt{ma-dec}_{#1}({#2}, {#3})}
\nc{\madeco}[1]{\texttt{ma-dec}_{#1}}
\nc{\madecd}[3]{\texttt{ma-dec}^{\texttt{d}}_{#1}({#2}, {#3})}
\nc{\SF}{\mathscr{F}}
\nc{\PiMarkov}{\Pi^{\rm markov}}
\nc{\Mbar}{\wb{M}}
\nc{\Mhat}{{\hat{M}}}
\nc{\phiavg}{\phi^{\mathsf{avg}}}
\nc{\phiavgm}[1]{\phi\sups{{#1},\mathsf{avg}}}
\nc{\phiall}{\phi^{\mathsf{all}}}
\nc{\phiorig}{\phi^{\mathsf{orig}}}
\nc{\muorig}{\mu^{\mathsf{orig}}}
\nc{\thetaorig}{\theta^{\mathsf{orig}}}
\nc{\phitravg}{\bar{\phi}^{\mathsf{avg}}}
\nc{\gamvec}{\gamma}
\nc{\til}{\widetilde}
\nc{\td}{\tilde}
\nc{\wh}{\widehat}
\nc{\todo}[1]{\ifnum\Comments=1 {\color{red}  [TODO: #1]}\fi}
\nc{\old}[1]{\ifnum\Comments=1 {\color{brown}  [OLD: #1]}\fi}
\nc{\noah}[1]{\ifnum\Comments=1 {\color{purple} [ng: #1]}\fi}
\nc{\dhruv}[1]{\ifnum\Comments=1 {\color{magenta} [dr: #1]}\fi}
\nc{\ankur}[1]{\ifnum\Comments=1 {\color{green} [am: #1]}\fi}
\nc{\BP}{\mathbb{P}}
\nc{\BI}{\mathbb{I}}
\nc{\fools}[3]{\MF_{#3}({#1}, {#2})}
\nc{\fool}[2]{\MF({#1},{#2})}
\nc{\clip}[2]{{\rm clip}\left[ \left. {#1} \right| {#2} \right]}
\nc{\imax}{\omega}
\DMO{\conv}{conv}
\nc{\MH}{\mathcal{H}}
\nc{\traj}{\mathscr{H}}
\nc{\MV}{\mathcal{V}}
\nc{\MC}{\mathcal{C}}
\nc{\MI}{\mathcal{I}}
\nc{\st}{\star}
\nc{\lng}{\langle}
\nc{\rng}{\rangle}
\DMO{\OOPT}{opt}
\nc{\dopt}[2]{\ell_{\OOPT}({#1},{#2})}
\nc{\grad}{\nabla}
\nc{\MG}{\mathcal{G}}
\nc{\MP}{\mathcal{P}}
\nc{\PP}{\mathbb{P}}
\nc{\TT}{\mathbb{T}}
\nc{\TTmax}{\TT_{\max}}
\DMO{\REG}{Reg}
\DMO{\WREG}{wReg}
\nc{\reg}[2]{{\Delta}_{{#1}}({#2})}
\nc{\wreg}[2]{{\Delta}^{\rm w}_{{#1}}({#2})}
\nc{\Reg}[2]{{\REG}_{{#1}}({#2})}
\nc{\wReg}[2]{{\WREG}_{{#1}}({#2})}
\DMO{\Ham}{Ham}
\DMO{\Gap}{Gap}
\DMO{\GD}{GD}
\DMO{\GDA}{GDA}
\DMO{\EG}{EG}
\nc{\TE}{\til{\E}}
\nc{\Var}{\mathbb{V}}
\DMO{\Cov}{Cov}
\DMO{\OGDA}{OGDA}
\DMO{\Unif}{\unif}
\nc{\unif}{\mathsf{unif}}
\DMO{\Tr}{Tr}
\nc{\Qu}{\ul{Q}}
\nc{\Qo}{\ol{Q}}
\nc{\Ro}{\ol{R}}
\nc{\Vu}{\ul{V}}
\nc{\Vo}{\ol{V}}
\nc{\RanQ}{\Delta Q}
\nc{\RanV}{\Delta V}
\nc{\clipQ}{\Delta \breve{Q}}
\nc{\frzQ}{\Delta \mathring{Q}}
\nc{\clipV}{\Delta \breve{V}}
\nc{\clipdelta}{\breve{\delta}}
\nc{\cliptheta}{\breve{\theta}}
\nc{\delmin}{\Delta_{{\rm min}}}
\nc{\delmins}[1]{\Delta_{{\rm min},{#1}}}
\nc{\gapfinal}[1]{\max \left\{ \frac{\frzQ_{{#1}}^{k^\st}(x,a)}{2H}, \frac{\delmin}{4H} \right\}}
\nc{\post}[2]{R({#1}; {#2})}
\nc{\posts}[3]{R_{#3}({#1}; {#2})}
\nc{\algnst}[1]{\begin{align*}#1\end{align*}}
\nc{\algn}[1]{\begin{align}#1\end{align}}
\nc{\matx}[1]{\left(\begin{matrix}#1\end{matrix}\right)}
\renewcommand{\^}[1]{^{(#1)}}
\nc{\nuu}{\nu}
\nc{\bel}[1]{\mathbf{b}({#1})}
\nc{\nbel}[1]{\bar{\mathbf{b}}({#1})}
\nc{\sbel}[2]{\mathbf{b}'_{#1}({#2})}
\nc{\nsbel}[2]{\bar{\mathbf{b}}'_{#1}({#2})}
\nc{\bv}{\mathbf{v}}
\nc{\bfr}{\mathbf{r}}
\nc{\bone}{\mathbf{1}}
\nc{\bX}{\mathbf{X}}
\nc{\bY}{\mathbf{Y}}
\nc{\bG}{\mathbf{G}}
\nc{\bz}{\mathbf{z}}
\nc{\bw}{\mathbf{w}}
\nc{\bA}{\mathbf{A}}
\nc{\bJ}{\mathbf{J}}
\nc{\bK}{\mathbf{K}}
\nc{\bb}{\mathbf{b}}
\nc{\ba}{\mathbf{a}}
\nc{\bc}{\mathbf{c}}
\nc{\bC}{\mathbf{C}}
\nc{\BR}{\mathbb R}
\nc{\BA}{\mathbb{A}}
\nc{\BC}{\mathbb C}
\nc{\bx}{\mathbf{x}}
\nc{\bS}{\mathbf{S}}
\nc{\bM}{\mathbf{M}}
\nc{\bR}{\mathbf{R}}
\nc{\bN}{\mathbf{N}}
\nc{\NN}{\mathbb{N}}
\nc{\by}{\mathbf{y}}
\nc{\sy}{y}
\nc{\sx}{x}
\nc{\MO}{\mathcal O}
\nc{\MU}{\mathcal{U}}
\nc{\MUr}{\MU^{\mathsf{r}}}
\nc{\ME}{\mathcal{E}}
\nc{\MN}{\mathcal{N}}
\nc{\MK}{\mathcal{K}}
\nc{\MM}{\mathcal{M}}
\nc{\MS}{\mathcal{S}}
\nc{\MT}{\mathcal{T}}
\nc{\BF}{\mathbb F}
\nc{\BQ}{\mathbb Q}
\nc{\MX}{\mathcal{X}}
\nc{\Xreach}{\MX^{\mathsf{rch}}}
\nc{\preach}{p^{\mathsf{rch}}}
\nc{\tilXreach}{\til{\MX}^{\mathrm{rch}}}
\nc{\MA}{\mathcal{A}}
\nc{\MD}{\mathcal{D}}
\nc{\MB}{\mathcal{B}}
\nc{\MZ}{\mathcal{Z}}
\nc{\MJ}{\mathcal{J}}
\nc{\MW}{\mathcal{W}}
\nc{\MR}{\mathcal{R}}
\nc{\MY}{\mathcal{Y}}
\nc{\BZ}{\mathbb Z}
\nc{\BN}{\mathbb N}
\nc{\ep}{\epsilon}
\nc{\etarch}{\eta_{\mathsf{rch}}}
\nc{\trunc}{\sigma_{\mathsf{trunc}}}
\nc{\tsmall}{\sigma_{\mathsf{bkup}}}
\nc{\phitr}{\bar{\phi}}
\nc{\phidt}[1]{\phi\sups{\mathsf{DT}(#1)}}
\nc{\mutr}{\bar{\mu}}
\nc{\term}{\mathfrak{t}}
\nc{\epstat}{\varepsilon_{\mathsf{stat}}}
\nc{\epnnnt}{\varepsilon_{\mathsf{nnnt}}}
\nc{\nstat}{n_{\mathsf{PSDP}}}
\nc{\nstatrch}{n_{\mathsf{PSDPrch}}}
\nc{\npsdprew}{n_{\mathsf{PSDPrew}}}
\nc{\nfe}{n_{\mathsf{FE}}}
\nc{\epdisc}{\varepsilon_{\mathsf{disc}}}
\nc{\epneg}{\varepsilon_{\mathsf{neg}}}
\nc{\epapx}{\varepsilon_{\mathsf{apx}}}
\nc{\eprelax}{\varepsilon_{\mathsf{relax}}}
\nc{\pidisc}{\pi_{\mathsf{disc}}}
\nc{\Pidisc}{\Pi_{\mathsf{disc}}}
\nc{\pilin}{\pi^{\mathsf{lin}}}
\nc{\avgphi}{\phi^{\mathsf{avg}}}
\nc{\VI}{{\tt ValIteration}\xspace}
\nc{\PSDP}{{\tt PSDP}\xspace}
\nc{\PSDPrew}{{\tt PSDPrew}\xspace}
\nc{\PC}{{\tt GreedyCover}\xspace}
\nc{\FE}{{\tt FeatureEstimation}\xspace}
\nc{\OPT}{{\tt POEM}\xspace}
\nc{\SLM}{{\tt ExploreRchLMDP}\xspace}
\nc{\DrawData}{{\tt DrawReachableTrajectoryData}\xspace}
\nc{\DrawDataTrunc}{{\tt DrawTrajectoryData}\xspace}
\nc{\SLMt}{{\tt ExploreLMDP}\xspace}
\nc{\Paramst}{{\tt ParamSettings}}
\nc{\ESCt}{{\tt EstTruncEmulator}\xspace}
\nc{\ESC}{{\tt EstEmulator}\xspace}
\nc{\OLIVE}{{\tt OLIVE}\xspace}
\nc{\GOLF}{{\tt GOLF}\xspace}
\nc{\BilinUCB}{{\tt BiLin-UCB}\xspace}
\nc{\VOX}{{\tt VOX}\xspace}
\nc{\pifinal}{\pi_{\mathsf{final}}}
\nc{\pifinals}[1]{\pi_{\mathsf{final}}^{#1}}
\nc{\Psiapx}{\Psi^{\mathsf{apx}}}
\nc{\Psiapxt}[1]{\Psi^{\mathsf{apx},#1}}
\nc{\vep}{\varepsilon}
\nc{\gapfn}[1]{\varepsilon_{#1}}
\nc{\ggapfn}[2]{\varphi_{#1}({#2})}
\nc{\epsahk}{\gapfn{0}}
\nc{\BH}{\mathbb H}
\nc{\BG}{\mathbb{G}}
\nc{\D}{\Delta}
\nc{\MF}{\mathcal{F}}
\nc{\One}[1]{\mathbbm{1}\left[{#1}\right]}
\nc{\eptrunc}{\varepsilon_{\mathsf{trunc}}}
\nc{\bOne}{\mathbf{1}}
\nc{\Aopt}{\mathcal{A}^{\rm opt}}
\nc{\Amul}{\mathcal{A}^{\rm mul}}
\nc{\SP}{\mathsf P}
\nc{\SQ}{\mathsf Q}
\nc{\DO}{\accentset{\circ}{\D}}
\nc{\mf}{\mathfrak}
\nc{\mfp}{\mathfrak{p}}
\nc{\mfq}{\mf{q}}
\nc{\Sp}{\mbox{Spec}}
\nc{\Spm}{\mbox{Specm}}
\nc{\hookuparrow}{\mathrel{\rotatebox[origin=c]{90}{$\hookrightarrow$}}}
\nc{\hookdownarrow}{\mathrel{\rotatebox[origin=c]{-90}{$\hookrightarrow$}}}
\nc{\hra}{\hookrightarrow}
\nc{\tra}{\twoheadrightarrow}
\nc{\sgn}{{\rm sgn}}
\nc{\aut}{{\rm Aut}}
\nc{\Hom}{{\rm Hom}}
\nc{\img}{{\rm Im}}
\DMO{\id}{Id}
\DMO{\supp}{supp}
\DMO{\KL}{KL}
\nc{\kld}[2]{\KL({#1}||{#2})}
\nc{\ren}[2]{D_2({#1}||{#2})}
\nc{\chisq}[2]{\chi^2({#1}||{#2})}
\nc{\tvd}[2]{D_{\mathsf{TV}}({#1}, {#2})}
\nc{\hell}[2]{D_{\mathsf{H}}^2({#1}, {#2})}
\nc{\dbi}[3][\pi]{D_{\mathsf{bi}}^{#1}({#2} \| {#3})}
\DMO{\BSS}{BSS}
\DMO{\BES}{BES}
\DMO{\BGS}{BGS}
\DMO{\poly}{poly}
\nc{\indep}{\perp}
\DMO{\sink}{sink}
\nc{\fp}[1]{\MP_1({#1})}
\nc{\BO}{\mathbb{O}}
\nc{\BT}{\mathbb{T}}
\nc{\RR}{\mathbb{R}}
\nc{\Gradient}{\nabla}
\DMO{\diag}{diag}
\nc{\norm}[1]{\left \lVert #1 \right \rVert}
\nc{\EE}{\mathbb{E}}
\nc{\MQ}{\mathcal{Q}}
\nc{\epcvx}{\varepsilon_{\mathsf{cvx}}}
\nc{\mures}[1]{\mu^{\mathsf{res},#1}}
\nc{\ff}{{\mathbf{f}}}
\nc{\epfinal}{\varepsilon_{\mathsf{final}}}
\DMO{\PR}{Pr}
\renewcommand{\Pr}{\PR}
\nc{\E}{\mathbb{E}}
\nc{\ra}{\rightarrow}
\renewcommand{\t}{\top}
\nc{\hc}{\{0,1\}^n}
\nc{\pmhc}[1]{\{-1,1\}^{#1}}
\title{Exploring and Learning in Sparse Linear MDPs \\ without Computationally Intractable Oracles}
\author{Noah Golowich\thanks{Email: \texttt{nzg@mit.edu}. Supported by a Fannie \& John Hertz Foundation Fellowship and an NSF Graduate Fellowship.} \\ MIT \and Ankur Moitra\thanks{Email: \texttt{moitra@mit.edu}. Supported in part by a Microsoft Trustworthy AI Grant, an ONR grant and a David and Lucile Packard Fellowship.} \\ MIT \and Dhruv Rohatgi\thanks{Email: \texttt{drohatgi@mit.edu}. Supported by a U.S. DoD NDSEG Fellowship.} \\ MIT}
\date{\today}
\begin{document}
\maketitle

\begin{abstract}

The key assumption underlying linear Markov Decision Processes (MDPs) is that the learner has access to a known feature map $\phi(x, a)$ that maps state-action pairs to $d$-dimensional vectors, and that the rewards and transition probabilities are linear functions in this representation. But where do these features come from? In the absence of expert domain knowledge, a tempting strategy is to use the ``kitchen sink" approach and hope that the true features are included in a much larger set of potential features. In this paper we revisit linear MDPs from the perspective of feature selection. In a \emph{$k$-sparse} linear MDP, there is an unknown subset $S \subset [d]$ of size $k$ containing all the relevant features, and the goal is to learn a near-optimal policy in only $\poly(k,\log d)$ interactions with the environment. Our main result is the first polynomial-time algorithm for this problem. In contrast, earlier works either made prohibitively strong assumptions that obviated the need for exploration, or required solving computationally intractable optimization problems. 

Along the way we introduce the notion of an \emph{emulator}: a succinct approximate representation of the transitions, that still suffices for computing certain Bellman backups.
Since linear MDPs are a non-parametric model, it is not even obvious whether polynomial-sized emulators exist. We show that they do exist, and moreover can be computed efficiently via convex programming.

As a corollary of our main result, we give an algorithm for learning a near-optimal policy in block MDPs whose decoding function is a low-depth decision tree; the algorithm runs in quasi-polynomial time and takes a polynomial number of samples (in the size of the decision tree). This can be seen as a reinforcement learning analogue of classic results in computational learning theory. Furthermore, it gives a natural model where improving the sample complexity via representation learning is computationally feasible.

\end{abstract}

\newpage
\tableofcontents
\newpage

\section{Introduction}

In sequential decision-making tasks, an agent interacts with a changing environment over a sequence of timesteps. At each step, the agent chooses an action \--- which stochastically affects the state of the environment \--- and receives feedback in the form of an immediate reward and information about the subsequent state. The agent wants to find a policy \--- i.e. a mapping from states to actions \--- that maximizes the cumulative reward. \emph{Episodic reinforcement learning (RL)} is the algorithmic problem of learning a good policy via successive, independent episodes of interaction with the environment through trial and error.

Striking empirical advances have been made in recent years by applying reinforcement learning algorithms, augmented with tools from deep learning, to decision-making tasks such as manipulating robots \cite{gu2017deep}, playing strategic games like Go \cite{silver2018general}, personalizing treatment plans in healthcare \cite{yu2021reinforcement}, and learning to navigate complex traffic situations with self-driving cars \cite{sallab2017deep}. Nevertheless, these algorithms also suffer from well-documented and challenging failure modes, such as sample inefficiency, instability, non-convergence, and sensitivity to hyperparameters \cite{ilyas2020closer,engstrom2020implementation} that limit their real-world deployment, which underscores the need for algorithms with strong provable guarantees. On the theoretical front, the textbook model where one can show rigorous guarantees is that of \emph{tabular} Markov Decision Processes (MDPs). This model assumes that the states, actions and transitions can be explicitly written down in a table; the running time and sample complexity of the learning algorithm are allowed to depend polynomially on the size of this representation, i.e., on the number of states and actions. By now, we have a wide range of algorithms that achieve strong provable guarantees in this setting (e.g., \cite{kearns2002near, brafman2002r,jaksch2010nearoptimal,azar2017minimax,jin2018q}).

However there is a sharp disconnect between tabular MDPs and real-world applications, in which the number of possible states of the environment is often astronomical \--- far too large to explicitly write down, let alone exhaustively explore. For this reason, a major challenge in theoretical reinforcement learning is to move beyond the tabular setting, and find new frameworks that more accurately model the difficulties faced in empirical applications: %
are there natural and well-motivated models where we can, simultaneously,

\begin{enumerate}
    \item[(1)] permit a very large or even infinite number of states, and
    \item[(2)] develop statistically efficient methods for learning a near optimal policy?
\end{enumerate}

\noindent There are by now many models that meet these two conditions \--- containing, for example, block MDPs \cite{du2019provably}, MDPs with low Eluder dimension \cite{wang2020reinforcement}, low Bellman rank MDPs \cite{jiang2017contextual}, bilinear classes of MDPs \cite{du2021bilinear}, and MDPs with bounded decision estimation coefficient \cite{foster2021statistical}. However, this literature typically sidesteps computational considerations. In particular, while learning a near-optimal policy in these models is statistically tractable (in that it only requires a small number of episodes), the algorithms for doing so require access to oracles that can solve computationally hard optimization subproblems. While sometimes justifiable on the grounds that stochastic gradient descent is a powerful heuristic for minimizing over function classes such as neural networks, more often than not the required subproblems are far more complex than minimization. For example, many algorithms rely on oracles that can solve max-min objectives (see e.g. recent work on low-rank MDPs \cite{modi2021model, zhang2022efficient, mhammedi2023efficient}), which is computationally challenging both in theory \cite{daskalakis2021complexity} and in practice \cite{razaviyayn2020nonconvex}. %

In this paper, our main focus is on what happens when we add a third goal to our list:

\begin{enumerate}
    \item[(3)] end-to-end computational efficiency \--- i.e., enable frameworks for learning where the running time is not conditional on unimplementable oracles.
\end{enumerate}

\noindent Unfortunately, ``computationally tractable'' models (by which we mean models that meet all three conditions) are exceedingly rare in the existing literature, as discussed further in \cref{sec:related}. %
While focusing exclusively on sample complexity has led to increasingly general models, these have necessitated increasingly abstract algorithmic frameworks. In the same way that VC dimension is often not the correct abstraction for understanding computationally efficient supervised learning, identifying computationally tractable models for reinforcement learning remains a significant challenge.

\paragraph{Linear MDPs.} An important example of a computationally tractable model is a \emph{linear MDP (LMDP)}. %
In this model, it is assumed that the agent knows a feature mapping $\phi$ that maps each state-action pair $(x, a)$ to a $d$-dimensional \emph{feature vector}. Moreover, the rewards and transition probabilities can be expressed as linear functions in these features. Notice that there could be an exponential or even infinite number of states. Thus it is not even obvious how to describe a policy in the first place, because we can no longer do so by exhaustively listing the action to take at each possible state.

Nevertheless, the seminal work %
\cite{jin2020provably} gave a computationally efficient algorithm, \texttt{LSVI-UCB}, for learning linear MDPs. %
As previously mentioned, the more restrictive tabular MDP setting admits a wide variety of algorithms. Some directly learn the parameters of the model (e.g., \cite{azar2017minimax}); such algorithms are called \emph{model-based}. In contrast, others learn a smaller subset of information that still determines the optimal policy (e.g., \cite{jin2018q}); these algorithms are called \emph{model-free}. In linear MDPs, model-based learning seems like a dead end because there are an infinite number of parameters. Instead, \texttt{LSVI-UCB} employs a model-free approach: it learns the \emph{value function} of the optimal policy, i.e. the cumulative future reward obtained by the policy as a function of the current state and action. The value function, which also uniquely determines the optimal policy, turns out to be linear in $\phi(x, a)$, and with an appropriate potential function for incentivizing exploration, a near optimal policy can be found through a combination of classic techniques and linear regression. 

While linear MDPs are therefore computationally tractable, they overlook an important issue: Where do the features come from? The feature mapping needs to somehow distill all the relevant information about $(x, a)$ into a low-dimensional vector, and this is no easy task.

\subsection{Model} 

\paragraph{Learning the right features?} An MDP that is linear in a set of features $\phi_1,\dots,\phi_d$ is also linear in any set of features that contains $\phi_1,\dots,\phi_d$. Hence, in the absence of expert domain knowledge, a tempting strategy for designing $\phi$ is the ``kitchen-sink approach'': enumerate \emph{all} features that could potentially be useful, whether manually by collating a variety of heuristics and rich feature classes, or automatically by learning a neural network and extracting an intermediate layer. One could then hope that the true features are included in this much larger set. This overparametrization may work, but since we have more features than we truly need, we pay the price of a dramatically larger sample complexity for learning a near-optimal policy. 
  
The above approach motivates \emph{sparse linear MDPs} (formally defined in \cref{sec:prelim}), which will be our main model of interest. This model was first formally introduced in \cite{hao2021online} but has had numerous precedents in the bandit \cite{abbasi2012online} and reinforcement learning \cite{wen2013efficient} literature. %
As before, we assume that there is a known $d$-dimensional feature mapping $\phi(x, a)$. But rather than just assuming that the MDP is linear in $\phi$, we assume that there is an unknown subset $S \subseteq [d]$ of size $k \ll d$, such that the rewards and transition probabilities are each linear functions in the \emph{$k$-dimensional} feature mapping $(x,a) \mapsto \phi(x,a)|_S$. This model captures the problem of feature selection in reinforcement learning. More precisely, we ask: 
\begin{mainquestion*}\label{eq:main-question}
    Given $\poly(k,\log d) \ll d$ interactions with the environment, can we efficiently leverage the sparsity of the model to learn a near-optimal policy?
\end{mainquestion*}

As suggested by the terminology, sparse linear MDPs are connected to the well-studied supervised learning problem of sparse linear regression. In particular, consider the static setting where in each step we observe a covariate $x$ and a reward $y$ that is assumed to be a sparse linear function of $x$. %
Sparse linear regression usually refers to the problem of estimating $\E[y|x]$ from few samples. There is a rich literature on the sample complexity of this problem, not only from the perspective of information-theoretic rates \cite{scarlett2016limits, reeves2019all} but also from the perspective of computationally efficient algorithms such as Orthogonal Matching Pursuit \cite{tropp2007signal, cai2011orthogonal}, the Lasso \cite{tibshirani1996regression, wainwright2009sharp}, and variants thereof \cite{candes2007dantzig, needell2009cosamp, belloni2011square, kelner2022power}. The Lasso, in particular, is the workhorse behind feature selection in a wide range of applications \cite{usai2009lasso, zhang2019forecasting, ilyas2022datamodels} because of its simplicity and efficacy.

More generally, the sparse linear MDP model is motivated by the same principles as the notion of attribute-efficiency in PAC learning \cite{klivans2006toward}, where the goal is to give algorithms whose sample complexity depends on the description complexity of the learned hypothesis \--- the intuition being that learning simpler hypotheses should require fewer mistakes. Here, too, the hope is that linear MDPs that admit particularly succinct optimal policies (as is the case for a sparse linear MDP) ought to enable learning a near-optimal policy from a far fewer number of episodes.

\paragraph{Prior work.} Recent works \cite{hao2021online, zhu2023provably} studied sparse linear MDPs but require prohibitively strong assumptions that mitigate the need for exploration. %
 In particular, they assume bounds on the condition number of certain covariance matrices. In sparse linear regression, if one wishes to recover the true parameters of the underlying regressor, then such assumptions on the covariate distribution are typically made (e.g. \cite{wainwright2009sharp, raskutti2010restricted}) and, to an extent, are necessary. Though they are often considered tame in the setting of supervised learning, this is not the case in RL, where there is no fixed distribution. Instead, the distribution over covariates depends on which state-action pairs $(x, a)$ we reach, which in turn depends on the policy we play; thus well-conditionedness assumptions in RL must specify a policy. In particular, the work of \cite{hao2021online} assumes that the algorithm is given a policy up front that induces a well-conditioned feature distribution, thus obviating the need for exploration, which is a basic component of online RL. The work of \cite{zhu2023provably} assumes that \emph{every} policy induces a well-conditioned feature distribution. This assumption also removes the need for exploration since it implies that every policy is essentially exploratory.\footnote{We also remark that it is not clear if the approach of \cite{zhu2023provably}, which treats a more general setting of function approximation in RL, can actually be made computationally efficient in the case of sparse linear regression.} %

Interestingly, if one only cares about \emph{statistical} efficiency, then \cref{eq:main-question} has a positive answer, without any well-conditionedness assumptions, as a straightforward consequence of the statistical efficiency of online RL in MDPs with low Bellman rank \cite{jiang2017contextual, jin2021bellman,du2021bilinear}. Again, these existing algorithms all rely on computationally intractable oracles (see further discussion in \cref{sec:related}) and so do not answer our main question about \emph{computational} efficiency.

We remark that, in terms of analogies to sparse linear regression, reinforcement learning (where \emph{any} near-optimal policy suffices) is closer in spirit to the task of prediction error minimization than parameter estimation. For the former, sparse linear regression is statistically tractable without condition number bounds and with minimal distributional assumptions; such a result is analogous to the implications of the results of \cite{jiang2017contextual, jin2021bellman,du2021bilinear} for sparse linear MDPs. Moreover
computationally efficient algorithms such as Lasso achieve good rates for minimizing prediction error in sparse regression so long as the true sparse regressor has bounded entries (see e.g. Theorem 7.20 in \cite{wainwright2019high}). At a technical level, the focus of this paper is obtaining an analogue of such a result for sparse linear MDPs.%

\subsection{Main result} 

We study the problem of learning a near-optimal policy in a sparse linear MDP in the standard episodic RL interaction model (formalized in \cref{sec:prelim}), where each episode lasts $H$ timesteps (i.e. the horizon is $H$), and at each timestep the agent chooses from a finite set of $A$ actions. %
Our main result is the first end-to-end algorithmic guarantee for learning sparse linear MDPs without condition number assumptions:

\begin{theorem}[Efficient learning of sparse linear MDPs; informal version of \cref{thm:main}]\label{thm:main-informal}
  Let $d,k,A,H \in \NN$ and $\epsilon,\delta>0$. Let $M$ be a $d$-dimensional $k$-sparse linear MDP (\cref{def:sparse-lmdp}) with $A$ actions and planning horizon $H$. %
  With probability at least $1-\delta$, the algorithm $\OPT(\epsilon,\delta)$ outputs a policy with suboptimality at most $\epsilon$. Moreover, the sample complexity of the algorithm is $\poly(k,A,H,\epsilon^{-1},\log(d/\delta))$ and the time complexity is $\poly(d,A,H,\epsilon^{-1},\log(1/\delta))$. %
\end{theorem}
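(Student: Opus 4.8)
The plan is to sidestep the optimism-based exploration of \texttt{LSVI-UCB} --- which in the sparse regime would require optimizing over sparse confidence ellipsoids, a computationally intractable step --- in favor of an explicit model-based scheme organized around the emulator. Since a $k$-sparse linear MDP has low Bellman rank, statistical tractability is already known \cite{jiang2017contextual}; the content here is to obtain it with an oracle-free, $\poly(d)$-time algorithm. The algorithm \OPT builds, layer by layer, a \emph{policy cover} $\Psi_h$ for each layer $h$: a set of $\poly(k)$ policies whose uniform mixture induces, at layer $h$, a feature distribution that dominates --- up to a $\poly(k)$ multiplicative factor on the $\le k$-dimensional subspace spanned by $\{\phi(x,a)|_S\}$ --- the layer-$h$ feature distribution of \emph{every} policy. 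Given $\Psi_1,\dots,\Psi_{h-1}$, one step of the construction does the following: (i) collect $\poly(k,\log d)$ trajectories by rolling in with these covers and taking a random action at step $h-1$; (ii) run {\tt EstEmulator}, which solves a convex (Lasso-type) program on this data to output an emulator --- a succinct surrogate for the transitions (and rewards) of the first $h-1$ layers that is accurate, measured through Bellman backups, on the reachable subspace; (iii) run {\tt GreedyCover}, which repeatedly feeds an exploration reward (favoring directions poorly covered by the $\Psi_h$ assembled so far) to {\tt PSDP} --- using the emulator for planning --- and appends the returned policy to $\Psi_h$. After all layers are processed, one final {\tt PSDP} call against the reward, using the full-horizon emulator, outputs $\pifinal$.

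Three ingredients let this run without condition-number assumptions. First, \textbf{sparse regression for prediction error}: the emulator-fitting step only asks for small \emph{in-sample} prediction error on the exploratory roll-in distribution, so the Lasso prediction-error bound (in the style of Theorem 7.20 of \cite{wainwright2019high}, using only boundedness of the true $k$-sparse parameters) applies with $\poly(k)\cdot\log d/\epsilon^2$ samples and no restricted-eigenvalue hypothesis. Second, \textbf{policy covers upgrade in-sample accuracy to uniform accuracy}: the inductively maintained cover guarantees that the layer-$h$ roll-in dominates every policy's layer-$h$ feature distribution on the relevant subspace, so small prediction error there yields small Bellman-backup error along \emph{any} policy --- precisely the quantity that {\tt PSDP}'s performance-difference analysis charges against. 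Third, \textbf{greedy coverage terminates quickly}: each round of {\tt GreedyCover} either certifies that $\Psi_h$ already covers all policies or exhibits a direction along which the next {\tt PSDP} policy captures a constant fraction of the missing second moment, and a determinant potential on the accumulated feature covariance then caps the number of rounds at $\poly(k)$ up to logarithmic factors. Chaining these via a standard telescoping/performance-difference argument shows $\pifinal$ is $\epsilon$-suboptimal; a union bound over the $\poly(k,H)$ estimation events controls the failure probability, and since every convex program is over a $d$-dimensional parameter and every {\tt PSDP}/{\tt GreedyCover} invocation runs in $\poly(d,A,H,\epsilon^{-1})$ time, the sample and time complexities come out as stated.

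The main obstacle --- the technical heart of the argument --- is to show that a \emph{succinct} emulator with the required backup-accuracy property exists at all, and that it is the output of a \emph{convex} program. Linear MDPs are non-parametric: the transition kernel from $(x,a)$ is an arbitrary signed combination of $d$ fixed measures, reweighted by $\phi(x,a)$, so a priori no finite object reproduces Bellman backups. The resolution is that only the action of the transition operator on next-layer value functions matters, these are linear in the next-layer features, and hence the operator is pinned down on the relevant subspace by a matrix of pushed-forward next-layer features that is effectively $k$-row-sparse, whose estimation is a convex regression. Two robustness issues the proof must then handle explicitly: (a) value functions must be clipped to stay bounded, so the backup operator is only \emph{approximately} linear and the emulator must tolerate this --- this is what forces the \trunccore variant and the reachability truncation $\Xreach$, through which {\tt ExploreLMDP} reduces to {\tt ExploreRchLMDP}; and (b) errors from an imperfect earlier-layer cover must not compound uncontrollably across the $H$ inductive steps, which requires checking that the $\poly(k)$ domination factors and the additive estimation errors combine only polynomially --- never through an implicit condition number.
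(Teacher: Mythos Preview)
Your high-level architecture --- layer-by-layer policy covers, an emulator from a convex program, a greedy cover step driving \PSDP, and truncation to remove reachability --- matches the paper. The genuine gap is your termination argument for \PC. You claim $\poly(k)$ rounds via ``a determinant potential on the accumulated feature covariance,'' but that covariance is $d\times d$ and generically full rank (only the $\mu$-vectors are sparse; the $\phi$-vectors are arbitrary in $[-1,1]^d$), so the elliptic potential lemma yields $\poly(d)$ rounds, not $\poly(k)$. Restricting the potential to the $k$-dimensional slice indexed by $S$ would require identifying $S$ --- proper sparse regression, believed to cost $d^{\Omega(k)}$ time. This is precisely the obstruction the paper discusses in \cref{sec:prior-failures}, and it is also why your description of the cover as ``dominating on the subspace spanned by $\{\phi(x,a)|_S\}$'' is not an operational definition: nothing in the algorithm can test that condition.

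What your emulator description misses is the mechanism that replaces the potential argument. The emulator is not a regression matrix for the feature pushforward but a \emph{set of simulated $\mu$-vectors} $(\hat\mu_{h+1}^j)_{j=1}^m$, one per sampled next state, constrained so that $\sum_j\|\hat\mu_{h+1}^j\|_1\le\Cnrm$, each $\langle\phi_h(x,a),\hat\mu_{h+1}^j\rangle$ is approximately nonnegative, and the map $\phi_h\mapsto\phiavg_{h+1}$ is preserved (\cref{def:not-core-set}). \PC then runs the logic of \IdealPC directly on these vectors: each accepted round removes emulator vectors of total $\ell_1$ norm at least $\xi/2$, so the budget $\Cnrm$ forces termination after $2\Cnrm/\xi$ rounds (\cref{lem:pc-size-bound}). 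This is a norm-budget argument with no spectral content, and it is why the result extends to the $\ell_1$-bounded setting (\cref{defn:l1lmdp}), where no low-dimensional subspace exists at all. A secondary omission: removing reachability needs more than a single truncation. Because \PSDP with truncated covers incurs additive error comparable to the truncation threshold $\trunc$, while closing the inductive step requires error $\ll\trunc$, the paper runs the entire construction in $\poly(\Cnrm,A,H)$ phases with a growing backup cover $\Gamma^t$ that absorbs truncated states discovered in earlier phases (\cref{lem:gamma-stop}); your one-line reduction does not address this circularity.
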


We remark that the sample complexity and running time depend polynomially on the number of actions $A$ and the planning horizon $H$; such dependence is known to be necessary \cite{hao2021online}, even ignoring computational considerations. On the other hand, we re-emphasize that our result does \emph{not} require any type of well-conditionedness assumption on the distribution of feature vectors drawn from the MDP, such as the restricted eigenvalue conditions \cite{bickel2009simultaneous, raskutti2010restricted} required in the prior works \cite{hao2021online, zhu2023provably}. All we require are minimal boundedness assumptions (stated formally in \cref{def:sparse-lmdp}), and there is evidence that such assumptions are necessary (see \cref{app:slr-to-rl}). %

Unsurprisingly, the algorithm \OPT incorporates key tools from the sparse regression literature, e.g., the Lasso convex program. But it requires significant additional innovations to address the challenging problem of exploration in online RL. As intuition, sparse linear regression can be solved by finding a low-complexity regressor that fits the given data. In our setting, though, the data distribution, namely the distribution of the feature vectors $\phi(x,a)$, is not static \--- it depends on our choice of policy. This seemingly causes a chicken-and-egg%
problem common to representation learning in RL: if we haven't explored the MDP completely, then we cannot guarantee that our learned features are accurate on all policies. However, without accurate feature selection, exploration would require too many episodes. Often, generic representation learning frameworks in RL resolve this by repeatedly (a) learning (possibly inaccurate) features from data collected so far, and (b) exploring the MDP using the current estimated features to improve the feature estimates in future rounds. Unfortunately, implementing this framework for sparse linear MDPs runs into a fundamental issue: 
the statistical complexity of exploration in these frameworks is only small if the estimated features are also sparse, i.e., the feature learning step must be \emph{proper} (see further discussion in \cref{sec:related}). But proper sparse linear regression is likely computationally intractable (see e.g. \cite{gupte2020fine}). We circumvent this issue with a completely new algorithm that explicitly works even for \emph{analytically} sparse linear MDPs (i.e., satisfying an $\ell_1$-relaxation of sparsity; see \cref{defn:l1lmdp}), which were only very recently shown to be even statistically tractable \cite{xie2022role}.

\paragraph{Our techniques.} The bulk of our algorithm \OPT is an efficient subroutine \SLMt{}  (\cref{alg:slm-trunc}) that constructs a small \emph{policy cover} for the MDP, namely a set of policies that visits nearly every state $x$ in proportion to the maximum probability that any policy can reach $x$. As we are considering the episodic finite-horizon setting of RL, this is accomplished iteratively, step-by-step. To construct the policy cover at any step $h$, we require that we already have policy covers at previous steps, and we require two more key ingredients:
\begin{itemize}
\item First, we give a procedure that greedily chooses policies so as to cover as many uncovered states in each iteration as possible. An idealized version of this procedure, which assumes that the transitions of the MDP are known, is given in \IdealPC (\cref{alg:idealpc}). The actual version used in our algorithm is \PC (\cref{alg:pc}), which essentially runs \IdealPC using \emph{estimates} of the true transitions on a representative set of states.
\item Second, we introduce the notion of an \emph{emulator} (\cref{def:not-core-set}) and give a convex program for constructing one, which is then passed as input to \PC. An emulator is a set of estimated transitions that satisfy certain natural properties, encapsulating what it means for the estimates to be ``representative'' of the whole MDP. Roughly, the estimated transitions must represent valid probabilities (i.e., be non-negative) and must allow one to approximately compute Bellman backups of certain functions. One of our conceptual (and technical) contributions is in showing that a succinct emulator even \emph{exists} \--- and thus there is a succinct representation of all pertinent information about the transitions of a linear MDP that suffices for exploration.%
\end{itemize}
The success of the two components at any one step relies crucially on their success at constructing a policy cover at previous steps. Typically, arguments which proceed in this manner are prone to suffering from an exponential blowup of errors: A policy cover which suffers from $\ep$ error at one step may lead to errors growing with $k\ep$ or $d\ep$ at the following step, and so on. To counteract such exponential growth, we apply the technique of analyzing a \emph{truncated MDP}, as discussed in \cref{section:trunc-overview}. While this type of technique has been used for similar purposes recently (see e.g. \cite{golowich2022learning, mhammedi2023representation, mhammedi2023efficient}), its application in our setting faces unique challenges due to %
the interaction between \PC and the approximate nature of an emulator. To compensate, we need to define a sequence of truncated MDPs that adapt to the progress that the algorithm makes over time (\cref{sec:trunc-mdps}). 

\paragraph{Emulators: A model-based approach to learning nonparametric systems.} A major challenge in analyzing linear MDPs is their non-parametric nature: the full set of parameters in a linear MDP is unidentifiable in general, and most prior algorithms accounted for this fact by exploiting that certain model-free characteristics of a linear MDP, such as the optimal value function, are parametric and thus can be approximately learned. While at first it may seem like model-based learning is a dead end, our approach using emulators is in some relaxed sense model-based, given that an emulator is used as a direct replacement for the true transitions in the \PC algorithm. It is an interesting direction for future work to determine whether emulators can be used to construct new model-based algorithms, or provide new analyses of existing algorithms, for other problems in the study of RL with linear function approximation. 

\subsection{Application: Block MDPs}
\label{sec:bmdp}
In this subsection, we discuss an application of \cref{thm:main-informal} which yields an end-to-end efficient learning algorithm for a certain family of \emph{block MDPs}. %
A block MDP (defined formally in \cref{def:bmdp}) is an MDP with a potentially infinite state space $\MX$ but a small \emph{latent} state space $\MS$, and an unknown decoding function $\rho^\st: \MX \to \MS$ such that the transition dynamics between two states $x,x' \in \MX$ only depend on $\rho^\st(x)$, $\rho^\st(x')$ and the action taken. %
Moreover, the reward at state $x$ only depends on $\rho^\st(x)$ and the action taken at $x$. Block MDPs are a useful theoretical abstraction for environments such as robot navigation with rich observations (e.g., images) but much simpler underlying state (e.g., the robot's location in space). It is generally assumed that $\rho^\st$ comes from some known function class $\Phi$, and a recent line of work has shown sample complexity bounds for learning block MDPs that are polynomial in $|\MA|$, $|\MS|$ and $\log |\Phi|$ (and have no dependence on $|\MX|$) \cite{du2019provably, misra2020kinematic, zhang2022efficient, mhammedi2023representation}. As usual, these algorithms rely on optimization oracles that are computationally intractable to implement for essentially any interesting class $\Phi$ (unless $\poly(|\Phi|)$ is an acceptable time complexity \cite{modi2021model}). Even worse, as shown in \cref{sec:block-lb}, there are natural parametric function classes $\Phi$ where the time complexity of \emph{any} learning algorithm for the corresponding family of block MDPs must scale essentially polynomially in $|\Phi|$. In fact, we have the following (informal) observation, phrased formally in \cref{rmk:supervised-rl}:

\begin{observation}\label{obs:pac-to-rl}
For any class of decoding functions $\Phi$, if the associated supervised (improper) learning problem is computationally hard with stochastic noise, then the associated reinforcement learning problem is computationally hard too.
\end{observation}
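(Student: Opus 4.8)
The plan is to prove the contrapositive: given a polynomial-time, polynomial-sample reinforcement learning algorithm for the block MDPs associated to $\Phi$, we build a polynomial-time, polynomial-sample \emph{improper} learner for the corresponding supervised problem with stochastic noise, contradicting the hypothesized hardness. The reduction is the standard ``RL simulates supervised learning'' embedding, with the block structure doing exactly the job of hiding the target inside the unknown decoder.

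First I would set up the block MDP. Fix a supervised instance: a covariate distribution $D$ over $\MX$, a target decoder $c = \rho^\st \in \Phi$ (think of $\MS = \{0,1\}$ for binary labels; the general case is identical with $|\MA| = |\MS|$), and a noise rate $\eta < 1/2$, so a labeled sample is $(x,y)$ with $x \sim D$ and $y = c(x)$ with probability $1-\eta$ and $y = 1 - c(x)$ otherwise. I construct a constant-horizon block MDP in which, from a trivial start block, the agent transitions (independently of its action) into latent state $s \in \MS$ with probability $\Pr_{x\sim D}[c(x) = s]$ and emits an observation $x$ drawn from $D$ conditioned on $c(x) = s$; thus the decoder is $\rho^\st = c$ (after padding $\Phi$ with the trivial start block, which the ``associated block MDP family'' should be defined to absorb), the latent state space has size $|\MS|$, and the log-size of the decoder class is $O(\log|\Phi|)$. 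The agent then picks an action $a \in \MA$ and the episode ends with reward $\One{a = y}$. Hence $\E[\,\text{reward} \mid x, a\,] = (1-\eta)\One{a = c(x)} + \eta\One{a \neq c(x)}$, the optimal policy plays $a = c(x)$, and every policy $\pi$ has suboptimality exactly $(1-2\eta)\cdot\Pr_{x\sim D}[\pi(x) \neq c(x)]$.

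Next I would run the RL algorithm against this block MDP, simulating each episode with one fresh labeled sample $(x,y)$ from the supervised oracle: the only environment randomness needed is $x \sim D$ (which fixes the latent state $s = c(x)$ and the emission) together with the noisy reward $\One{a = y}$, and neither requires knowing $c$. When the RL algorithm returns a policy $\wh\pi$ with suboptimality at most $\vep$, I extract the hypothesis $h(x) := \wh\pi(x)$, the action $\wh\pi$ takes at observation $x$; since the pre-history is trivial this is a well-defined function of $x$, and any internal randomization can be left in (classification error is an expectation) or resolved by taking the mode. By the identity above, $h$ has classification error at most $\vep/(1-2\eta)$. Choosing $\vep$ a small constant multiple of $(1-2\eta)$ times the error threshold at which the noisy supervised problem is hard, and observing that the RL algorithm's $\poly(|\MA|,|\MS|,H,\vep^{-1},\log|\Phi|)$ sample and time bounds are polynomial in the supervised problem's parameters, yields an efficient improper learner and hence the contradiction.

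The main thing to get right \--- more careful bookkeeping than genuine obstacle \--- is faithfulness of the embedding: one must ensure the constructed instance really lies in the block MDP family ``associated to $\Phi$'' (possibly by defining that family to absorb a trivial start block and an action-independent first transition, equivalently a light closure of $\Phi$ under such padding), that the emission distributions $D \mid c(x) = s$ are legitimate and efficiently sampleable given the oracle, and that the quantitative slack \--- the factor $1/(1-2\eta)$ between suboptimality and classification error, plus the polynomial overheads \--- is benign enough that a hard error regime for the noisy supervised problem maps to a hard suboptimality regime for RL. A secondary point, worth stating explicitly, is that the RL hardness one derives is for \emph{improper} policy learning, which is precisely why the supervised side is phrased for improper learning.
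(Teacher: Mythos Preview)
Your proposal is correct and captures the same contrapositive reduction as the paper, but the encodings differ in a useful way worth noting. The paper's construction (\cref{sec:block-lb}) takes horizon $H=2$ with the observation $x$ appearing at step~1 and two extra ``indicator'' states $\mf s_0,\mf s_1$ at step~2; the noise is placed in the \emph{transition} $\BP_1(\mf s_1\mid x,a) = \tfrac13 + \tfrac13\,\One{\rho^\st(x)=a}$, and the reward at step~2 is deterministic, namely $\One{x_2=\mf s_1}$. Your version instead keeps the noise in the \emph{reward} $\One{a=y}$ and pads with a trivial start block. Both yield the same suboptimality--error identity (up to the constant $1-2\eta$ versus $\tfrac13$), and both require a mild closure of the associated block-MDP family---you pad $\Phi$ to absorb a start state, the paper pads $\MX$ with $\{\mf s_0,\mf s_1\}$ and extends each $\rho\in\Phi$ to send $\mf s_b\mapsto b$. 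The paper's choice has the minor advantage of fitting \cref{def:bmdp} verbatim (deterministic $\bfr_h$), whereas yours leans on the usual stochastic-reward extension; conversely, yours is arguably the more direct translation of ``noisy classification'' and avoids introducing extra observed states. The caveats you flag---faithfulness of the embedding, the $1/(1-2\eta)$ slack, and the match between improper RL hardness and improper supervised hardness---are exactly the right ones, and the paper handles them in the same way.
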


But a converse reduction is not known. This leaves open a fundamental question:

\begin{mainquestion*}
    Are there {\em any} well-motivated classes of block MDPs for which we can give computationally efficient learning algorithms?
\end{mainquestion*}

\noindent Given that efficient supervised learning is a prerequisite, it is natural to start from decoding functions for which the associated concept class already has a distribution-independent\footnote{The reduction in \cref{sec:block-lb} shows that distribution-specific supervised learning is a prerequisite for learning in horizon-$2$ block MDPs, where the distribution for the supervised learning problem is the initial state distribution of the block MDP. However, as the horizon increases beyond 2, the state distributions at later timesteps depend on the policy in increasingly complex ways. Thus, while distribution-independent supervised learning may not be a strict prerequisite, it's unclear how to avoid it, even if the initial state distribution is assumed to be ``nice''.} PAC learning algorithm in the presence of stochastic noise. 

We consider the family of block MDPs where $\MX = \{0,1\}^n$, $\MS = [s]$, and $\Phi = \Phi_{n,s}$ is the class of depth-$\log(s)$ decision tree decoders $\rho: \{0,1\}^n \to \MS$, where the internal nodes of the tree are labelled by variables $i \in [n]$, and the leaves of the tree are labelled by elements of $\MS$. We call these \emph{decision tree block MDPs}. Decision trees can be PAC-learned in quasi-polynomial time \cite{ehrenfeucht1989learning, rivest1987learning, sakakibara1993noise} and have wide-ranging applications because they are simple, flexible and interpretable. In the same spirit, they are a natural choice for the decoding function in a block MDP and they lead us to reinforcement learning style generalizations of classic supervised learning problems. Observe that $|\Phi| = O(n^{s})$; as discussed in \cref{sec:related}, the previously best known algorithm for reinforcement learning in this family (e.g., \OLIVE \cite{jiang2017contextual}) had sample complexity $O(s \log n)$, but running time at least $n^{O(s)}$, which is exponential in the size of the decision tree. However, as an immediate corollary of our main result for sparse linear MDPs, we can obtain an algorithm with running time \emph{quasi-polynomial} in the size of the decision tree and polynomial sample complexity:

\begin{corollary}[Informal version of \cref{cor:dec-tree-mdp}]
  \label{cor:dec-tree-mdp-informal}
Let $n,s,A,H \in \NN$ and $\epsilon,\delta>0$. Let $M$ be a decision tree block MDP on $\{0,1\}^n$ with $A$ actions, horizon $H$, and (unknown) decoding function $\rho^\st \in \Phi_{n,s}$. Then with probability at least $1-\delta$, $\OPT(\epsilon,\delta)$, with an appropriate feature mapping, outputs a policy with suboptimality at most $\epsilon$. Moreover, the sample complexity of the algorithm is $\poly(s, A, H, \epsilon^{-1}, \log(n/\delta))$, and the time complexity is $\poly(n^{\log s}, A, H, \epsilon^{-1},\log(1/\delta))$.
\end{corollary}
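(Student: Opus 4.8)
\textbf{Proof proposal for \cref{cor:dec-tree-mdp-informal}.} The plan is to exhibit a decision tree block MDP as a $k$-sparse linear MDP with feature dimension $d = O\big(A\cdot n^{\lceil\log s\rceil}\big)$ and sparsity $k = O(As^2)$, using a feature map that is \emph{known} to the learner (it does not depend on the unknown tree), and then to invoke \cref{thm:main-informal} essentially verbatim. Since this reduction changes only how we view the MDP and not the MDP itself, the optimal policy and its value are unchanged, so the $\epsilon$-suboptimality guarantee transfers for free; all that needs checking is that the construction meets the boundedness requirements of \cref{def:sparse-lmdp} with parameters $\poly(s,A)$, after which the sample and time complexities follow by substitution.

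For the feature map I would use the low-degree monomial embedding, tensored with action indicators: index coordinates by pairs $(T,a')$ with $T\subseteq[n]$, $|T|\le\lceil\log s\rceil$, and $a'\in\MA$, and set $\phi(x,a)_{(T,a')} = \mathbbm{1}[a=a']\prod_{i\in T}x_i$, so that $d = A\cdot\sum_{j\le\lceil\log s\rceil}\binom{n}{j} = O(A\,n^{\lceil\log s\rceil})$. A depth-$\lceil\log s\rceil$ tree has $O(s)$ leaves, and each leaf region $\ell$ is defined by a conjunction of at most $\lceil\log s\rceil$ literals; expanding $\prod_{i\in P_\ell}x_i\prod_{i\in N_\ell}(1-x_i) = \sum_{T\subseteq N_\ell}(-1)^{|T|}\prod_{i\in P_\ell\cup T}x_i$ shows $\mathbbm{1}[x\in\ell] = \sum_T c_{\ell,T}\prod_{i\in T}x_i$ with $c_{\ell,T}\in\{-1,0,1\}$, at most $2^{\lceil\log s\rceil}=O(s)$ of them nonzero, each with $|T|\le\lceil\log s\rceil$. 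Because a block MDP's transition and reward kernels depend on $x$ only through $\rho^\st(x)$ — equivalently, only through which leaf contains $x$ — one can write $P(x'\mid x,a) = \sum_\ell\mathbbm{1}[x\in\ell]\,\mu_{\ell,a}(x')$ and $R(x,a) = \sum_\ell\mathbbm{1}[x\in\ell]\,\bar R(\ell,a)$, where $\mu_{\ell,a}(\cdot) = \sum_{s'}\bar P(s'\mid\mathrm{label}(\ell),a)\,q(\cdot\mid s')$ is the latent transition pushed through the emission kernel and $\bar R(\ell,a)\in[0,1]$. Substituting the monomial expansion of $\mathbbm{1}[x\in\ell]$ and regrouping by $T$ then exhibits both $P$ and $R$ as linear in $\phi|_S$ — explicitly $\mu(x')_{(T,a')} = \sum_\ell c_{\ell,T}\mu_{\ell,a'}(x')$, and analogously for the reward parameter — where $S = \{(T,a'): a'\in\MA,\ c_{\ell,T}\ne 0\text{ for some leaf }\ell\}$ satisfies $|S|\le A\cdot O(s)\cdot O(s) = O(As^2) =: k$. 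Note that the action-dependence of the transitions is absorbed into $\phi$ (via the $\mathbbm{1}[a=a']$ factor) rather than into $\mu$, as the linear-MDP template demands.

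The remaining step is to verify the boundedness conditions of \cref{def:sparse-lmdp}. The restricted features $\phi|_S(x,a)$ are $0/1$-valued with $\le k$ nonzero entries, hence $\|\phi|_S(x,a)\|_2\le\sqrt k$; and each coordinate of $\mu(\MX)|_S$ (resp.\ of the reward parameter restricted to $S$) is a sum of at most $O(s)$ signed copies of probability distributions over $\MX$ (resp.\ of rewards in $[0,1]$), so the relevant $\ell_1$/$\ell_2$ norms are $\poly(s,A)$. A standard rescaling $\phi\mapsto\phi/\beta$, $\mu\mapsto\beta\mu$ with $\beta=\poly(s,A)$ then puts everything in the canonical form required by \cref{def:sparse-lmdp}. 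Since $\log d = O(\log A + \log s\cdot\log n)$, \cref{thm:main-informal} yields sample complexity $\poly(k,A,H,\epsilon^{-1},\log(d/\delta)) = \poly(s,A,H,\epsilon^{-1},\log(n/\delta))$ and time complexity $\poly(d,A,H,\epsilon^{-1},\log(1/\delta)) = \poly(n^{\log s},A,H,\epsilon^{-1},\log(1/\delta))$; and $\phi(x,a)$ is evaluable in $\poly(d)$ time, so the resulting algorithm is fully implementable.

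I expect the reduction itself to be routine; the real work (and the main obstacle) lies in the bookkeeping of the last two steps: (i) confirming that a block MDP whose kernels factor through $\rho^\st$ genuinely fits the linear-MDP template with the $\mu$ and reward parameter written above — in particular the correct handling of the action argument — and (ii) tracking the norms through the monomial expansion and the rescaling carefully enough to certify that every parameter entering \cref{def:sparse-lmdp}, and hence the complexity bounds of \cref{thm:main-informal}, is $\poly(s,A)$ and not accidentally $\poly(n)$ or exponential in $\log s$. Finally, since \cref{def:sparse-lmdp} (via the $\ell_1$-relaxation of \cref{defn:l1lmdp}) is weaker than exact $k$-sparsity, the construction above — which is exactly $k$-sparse — is a fortiori a valid input to \OPT, so no additional argument is needed on that front.
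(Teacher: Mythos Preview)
Your approach is correct and follows the same overall plan as the paper---embed the block MDP as a sparse linear MDP and invoke \cref{thm:main-informal}---but with a different choice of feature map. The paper indexes features by Boolean conjunctions of $\log s$ \emph{literals} (variables or their negations), tensored with actions, giving dimension $\binom{2n}{\log s}\cdot A$; with this choice each leaf of the depth-$\log s$ tree \emph{is} a single conjunction feature, so the leaf indicators need no expansion, the sparsity is exactly $sA$, and the bound $B$ comes out to $s$ (\cref{prop:phidt-sparse}). You instead use multilinear monomials in the $x_i$ (no negations), which forces the inclusion-exclusion step and inflates both the sparsity to $O(As^2)$ and the norm bound to $O(s^2)$. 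Since everything stays $\poly(s,A)$, the final sample and time bounds match after substitution. The paper's conjunction features are the cleaner encoding here---they directly respect the partition structure of the leaves and avoid the signed coefficients---while your monomial embedding is a reasonable alternative, arguably more standard from a low-degree-polynomial perspective. One small correction: the boundedness in \cref{def:sparse-lmdp} is in $\ell_\infty$, not $\ell_2$, so your $\phi$ already has $\|\phi(x,a)\|_\infty\le 1$ without any rescaling; only the $\mu$ and $\theta$ bounds require the $\poly(s)$ accounting you sketched.
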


\cref{cor:dec-tree-mdp-informal} follows from \cref{thm:main} by observing that for any unknown $\rho \in \Phi_{n,s}$, the corresponding block MDP is actually \emph{linear} with respect to a fixed and known feature mapping $\phidt{n,s}: \{0,1\}^n \to \RR^{\binom{2n}{\log s} \cdot A}$, where each dimension is identified with an action $a$ and a size-$\log(s)$ Boolean clause $\MC$ on $\{0,1\}^n$, and a state-action pair $(\bar x,\bar a)$ is mapped to the indicator vector $\mathbbm{1}[a=\bar a] \cdot \MC(\bar x)$ (see \cref{eq:phi-dt} in \cref{sec:dec-tree-linear} for a formal definition).  %
Moreover, these linear MDPs turn out to be $sA$-sparse (\cref{prop:phidt-sparse}). Thus, based on \cref{thm:main}, we can add decision tree block MDPs to our list of tractable models, at least if we allow quasi-polynomial running time. 

\begin{remark}
  One might wonder if it is possible to get \emph{polynomial} time (i.e., $\poly(n, A, H,1/\ep, \log(1/\delta))$) in \cref{cor:dec-tree-mdp-informal} as opposed to quasipolynomial time. However, in light of \cref{obs:pac-to-rl}, this would imply an improvement of the best-known time bound of $n^{O(\log s)}$ \cite{sakakibara1993noise} for the problem of learning depth-$\log s$ decision trees with stochastic noise. Moreover, it would then imply an improvement for the problem of learning $\log(s)$-sparse parities with noise, for which the best known algorithm still requires $n^{O(\log s)}$ time \cite{valiant2012finding}.
\end{remark}

\begin{remark}
A limitation of our result is that it can only handle \emph{depth-$\log(s)$} decision tree decoding functions. The classical algorithms for PAC learning decision trees \cite{ehrenfeucht1989learning, rivest1987learning, sakakibara1993noise} actually handle \emph{size-$s$} decision trees in the same amount of time, using the fact that any size-$s$ decision tree has at least one leaf at depth at most $\log(s)$. Thus, it's natural to ask whether our result extends to this level of generality as well. However, there are several obstacles that would have to be resolved first. First, the above PAC learning algorithms require not only $n^{O(\log s)}$ time but also $n^{O(\log s)}$ samples (for learning size-$s$ decision trees on $n$ variables). While $O(s\log n)$ samples suffice given $n^{\Omega(s)}$ time, no known algorithm achieves comparable sample complexity with $n^{s^{o(1)}}$ time \cite{klivans2006toward, servedio2012attribute}. Sparse linear regression achieves the best of both worlds (time complexity $n^{O(\log s)}$ and sample complexity $O(s\log n)$) but only for depth-$\log s$ decision trees.

Second, even if the goal is to solve the RL problem with $n^{O(\log s)}$ time and $n^{O(\log s)}$ samples (i.e. matching the PAC learning results), a prerequisite would be solving the \emph{decision tree regression} problem for size-$s$ decision trees. Concretely, \cite{ehrenfeucht1989learning, rivest1987learning} only work in the realizable (i.e. noiseless) PAC learning setting. These results were subsequently extended to the stochastic noise setting where the noise variance is the same at every covariate \cite{sakakibara1993noise}. However, none of these algorithms apply when each leaf of the decision tree has a different noise level (or more generally, when each leaf has an associated distribution on $[0,1]$, and the goal is to find a potentially improper hypothesis that approximately minimizes squared prediction error). We call this the problem of decision tree regression, and by the techniques in \cref{sec:block-lb,app:slr-to-rl}, it can be shown to be necessary for reinforcement learning in decision tree block MDPs.
\end{remark}

\subsection{Additional related work}
\label{sec:related}
To our knowledge, the only works that give algorithms for (even special cases of) sparse linear MDPs while considering the question of computational efficiency are \cite{hao2021online, zhu2023provably}, and as previously mentioned, both make strong additional assumptions. %
On the other hand, there is a vast literature on sample-efficient algorithms for learning various classes of MDPs with large state spaces. Some of these classes subsume sparse linear MDPs and/or decision tree block MDPs, with no additional assumptions. Thus, it is natural to ask whether the corresponding algorithms can be made computationally efficient, without losing sample-efficiency, for our models of interest. It is also natural to ask whether simple modifications of existing algorithms for linear MDPs could be attribute-efficient (i.e. obtaining sample complexity $\poly(k,\log d)$ for $k$-sparse $d$-dimensional linear MDPs) without losing computational efficiency. We discuss these various approaches and explain the obstacles to making them computationally efficient.

\paragraph{Global optimism frameworks.} Some of the most general-purpose algorithms in theoretical RL are \OLIVE \cite{jiang2017contextual}, \GOLF \cite{jin2021bellman}, and \BilinUCB \cite{du2021bilinear}, which apply to MDPs with low Bellman, low Bellman-Eluder dimension, and low Bilinear rank, respectively.  These complexity measures scale linearly with $k$ for $k$-sparse linear MDPs: in particular it follows from \cite[Proposition 9.9]{agarwal2022reinforcement} that the $V$-type Bellman rank of $k$-sparse linear MDPs is at most $k$, which implies in turn that the Bellman-Eluder dimension and Bilinear rank scale linearly with $k$. 
Thus, these works imply that $k$-sparse linear MDPs are statistically tractable with only $\poly(k,A,H,\log d)$ samples.\footnote{See \cite[Corollary 9.19]{agarwal2022reinforcement}, which shows that the sample complexity for learning an $\ep$-optimal policy is bounded above by $\tilde O(H^6 A^2 k^3 \ep^{-2})$.} In fact, recent work showed that the statistical complexity of \GOLF can actually be bounded in terms of the \emph{coverability coefficient} \cite{xie2022role}, which gave the first statistical guarantee for learning analytically sparse linear MDPs.\footnote{This is not an explicit consequence of the results of \cite{xie2022role}, which have sample complexity scaling with $\log |\mathcal{F}|$, where $\mathcal{F}$ is the value function class. In our setting $\mathcal{F}$ is the class of sparse linear functions in $\phi$, which is infinite. However, typically $\log|\mathcal{F}|$ can be replaced by the Rademacher complexity of $\mathcal{F}$, which in our case is $\poly(k,\log d)$.\label{fn:rademacher}} %

However, this generality comes at a price. The above algorithms use the principle of \emph{global optimism}, which requires optimizing over the set of all value functions within the class that approximately fit the current dataset. For sparse linear MDPs, the function class has covering number roughly $d^k$, and does not have nice geometric properties such as convexity, so it is unclear how one would modify \OLIVE, \GOLF, or \BilinUCB to achieve time complexity better than $\poly(d^k)$ \--- note that in the analogous setting of sparse linear regression, this is the time complexity of the brute-force estimator.

\paragraph{Oracle-efficient frameworks for low-rank MDPs.} %
In an effort to develop more practical algorithms than those relying on global optimism, a recent line of work has given ``oracle-efficient'' algorithms in the more concrete setting of \emph{low-rank MDPs} (see \cite{agarwal2020flambe,uehara2022representation,cheng2023improved,misra2020kinematic,modi2021model, zhang2022efficient, mhammedi2023representation}, which require additional structural assumptions such as block latent structure, and the more recent work \cite{mhammedi2023efficient}, which solves this problem given an oracle which can solve certain max-min optimization problems). A low-rank MDP with rank $k$ is a generalization of a linear MDP: the $k$-dimensional feature mapping $\phi$ is now unknown, but it lies in a known function class $\Phi$, thus formalizing the problem of representation learning in RL. Any low-rank MDP with rank $k$ has $V$-type Bellman rank at most $k$, and thus can be learned by \OLIVE \cite{jiang2017contextual}, \GOLF \cite{jin2021bellman}, or \BilinUCB \cite{du2021bilinear} with sample complexity $\poly(k,A,H,\log |\Phi|)$ (where for infinite function classes $\Phi$, the term $|\Phi|$ can be replaced by a covering number of $\Phi$). However, these algorithms are not oracle-efficient, because they work with $\Phi$ directly. In contrast, the recent work of \cite{mhammedi2023efficient} proposed an algorithm \VOX that matches the sample complexity of these algorithms, and only accesses $\Phi$ through optimization oracles (and otherwise has time complexity $\poly(k,A,H)$).

For any known $d$-dimensional feature mapping $\phi$, the $k$-sparse linear MDPs with feature mapping $\phi$ form a family of low-rank MDPs, where the feature class $\Phi$ is the set of $k$-dimensional feature mappings $\phi|_S$, as $S$ ranges over subsets of $[d]$ of size at most $k$. Thus, \cite{mhammedi2023efficient} gives an oracle-efficient algorithm for learning sparse linear MDPs with sample complexity $\poly(k,A,H,\log d)$. However, the required oracle needs to perform proper sparse linear regression (see Line 13 of Algorithm 5 in \cite{mhammedi2023efficient}) as a key step in the representation learning subroutine, which finds a set $S$ (of size at most $k$) so that the features $\phi|_S$ have low representation error on a class of discriminator functions. The naive way takes time scaling with $(d/\ep)^k$, by iterating over an $\ep$-cover of $\Phi$.\footnote{Technically, to fully implement the required step in the representation learning subroutine, one has to combine the cover-based approach with \cite[Section 7]{modi2021model} to perform optimization over the class of discriminator functions.}

Unfortunately, proper $k$-sparse linear regression is widely-believed to require $d^{\Omega(k)}$ time \cite{gupte2020fine, zhang2014lower}.\footnote{\cite{gupte2020fine} gives a lower bound in the Gaussian random design setting with noiseless responses, and \cite{zhang2014lower} gives a lower bound in the worst-case fixed-design setting with Gaussian noise (both lower bounds are conditional on popular worst-case complexity hypotheses). Neither of these directly address the specific flavor of sparse linear regression at stake here \--- proper, approximate risk minimization given noisy samples with i.i.d. covariates drawn from a distribution with bounded support \--- but nor are there non-trivial algorithms known for this problem.} To avoid this intractability, one option is to relax the representation learning subroutine to return a set of $k/\epsilon$ features that have $\epsilon$-near optimal representation error. This approach could plausibly have a computationally efficient implementation, namely via a greedy algorithm for sparse linear regression, such as Orthogonal Matching Pursuit \cite{cai2011orthogonal}. However, there is a catch: exploration given $\ep$-misspecified features, where the misspecification results from the representation error, typically leads to error terms of at least $\ep \cdot k' > \Omega(1)$, where $k' := k/\ep$ denotes the dimension of the estimated features. %

Earlier works on oracle-efficient RL in block/low-rank MDPs \cite{modi2021model, zhang2022efficient} use the same representation learning subroutine and therefore encounter the same issue of compounding feature complexity. Even for decision tree block MDPs, all known learning algorithms for the decoding function \--- whether the classic PAC learning algorithms \cite{ehrenfeucht1989learning, rivest1987learning, sakakibara1993noise} or the sparse linear regression algorithm (induced by the embedding of decision tree block MDPs as sparse linear MDPs) \--- are improper learners, producing hypotheses that are either unbounded-length decision lists or dense linear combinations of subcubes, respectively. As a result, trying to make e.g., the \texttt{BRIEE} algorithm \cite{zhang2022efficient} computationally efficient for decision tree MDPs still runs into the issues discussed above. 

Our algorithm circumvents this issue, explicitly achieving sample-efficiency for all $\ell_1$-bounded linear MDPs (\cref{defn:l1lmdp}), which are a natural relaxation of sparse linear MDPs. Note that unlike sparse linear MDPs, $\ell_1$-bounded linear MDPs with norm bound $k$ do not fall into the class of low-rank MDPs with rank $k$, so are not captured by the above works \cite{modi2021model, zhang2022efficient, mhammedi2023efficient}, even ignoring computational efficiency. Avoiding the ``curse of improper representation learning'' as discussed above in a more general setting (beyond sparse linear MDPs) is an interesting direction for future research.

\paragraph{Linear MDPs, \texttt{LSVI-UCB}, and Eluder dimension.} Since every sparse linear MDP is a linear MDP, \texttt{LSVI-UCB} \cite{jin2020provably} does give a computationally efficient (i.e. $\poly(d,A,H)$-time) algorithm for learning sparse linear MDPs. Of course the sample complexity is also $\poly(d,H)$; without the sparsity assumption, this is necessary even to have accurate regression estimates. Under the sparsity assumption, can the sample complexity be improved by replacing all instances of ridge regression in \texttt{LSVI-UCB} with Lasso? Unfortunately, this seems unlikely: part of the reason that \texttt{LSVI-UCB} incurs $\poly(d)$ sample complexity is the need to bound the number of different directions of feature space that need to be explored (via the elliptic potential lemma). It is the case that $k$-sparse linear MDPs only have $k$ different ``important directions'', but without some form of representation learning to identify these directions, $\poly(d)$ dependence seems unavoidable. On the other hand, incorporating representation learning into \texttt{LSVI-UCB} would seem to lead back to the issues discussed above for low-rank MDP frameworks.

The importance of some form of representation learning for obtaining the correct sample complexity is made more explicit by considering  \cite{wang2020reinforcement}, which generalizes \texttt{LSVI-UCB} to an algorithm $\mathcal{F}$-\texttt{LSVI}, addressing the setting of general function approximation (i.e. where Bellman backups are assumed to lie in some potentially non-linear function class $\mathcal{F}$). The metric that governs the sample complexity of $\mathcal{F}$-\texttt{LSVI} is the \emph{Eluder dimension} \cite{russo2013eluder} of $\mathcal{F}$. For the class of bounded linear functions, the Eluder dimension can be shown to be $\poly(d)$. Unfortunately, even for the class of $1$-sparse bounded linear functions, the Eluder dimension is still $\poly(d)$. Thus, even ignoring computational issues, $\mathcal{F}$-\texttt{LSVI} is not known to be sample-efficient for sparse linear MDPs.

\paragraph{Policy-cover approaches.} As we discuss further in \cref{sec:overview}, our algorithm for sparse linear MDPs proceeds by iteratively learning a \emph{policy cover} for each layer of the MDP. This has been done in various other settings (e.g. \cite{golowich2022learning, mhammedi2023efficient}), but sparse linear MDPs pose unique challenges. In \cref{sec:prior-failures} we discuss why existing approaches for constructing policy covers do not suffice in our setting without incurring $\poly(d)$ sample complexity or $d^k$ time.

\paragraph{Broader perspective: computationally efficient RL.} Besides linear MDPs, there are very few models in theoretical RL that admit computationally and statistically efficient learning algorithms with provable guarantees and also permit large state spaces. %
Some examples can be found within the class of partially observable MDPs (POMDPs), which can be thought of as highly structured MDPs with an infinite state space, taken to be the space of distributions over latent states. For example, \cite{kwon2023reward} establishes a polynomial-time algorithm for learning mixtures of a constant number of MDPs with identical transitions \cite{kwon2023reward}, and \cite{golowich2023planning, golowich2022learning} prove a quasipolynomial-time algorithm for learning  POMDPs satisfying a non-degeneracy condition on the emission distribution. Other examples include certain models with \emph{deterministic} dynamics (e.g. \cite{jin2020sample, uehara2023computationally}). As mentioned in \cref{sec:bmdp}, block MDPs with only a polynomial number of possible decoding functions can also be learned efficiently \cite{modi2021model}, i.e. with time complexity $\poly(|\MA|,|\MS|,|\Phi|)$ and sample complexity $\poly(|\MA|,|\MS|,\log|\Phi|)$. 
Finally, see also \cite{deng2022polynomial} for a tractable variant of linear MDPs.

\subsection{Organization of the paper}

In \cref{sec:prelim} we introduce all necessary notation and background, including formal definition of a sparse linear MDP, and the episodic RL interaction model. In \cref{sec:learning}, we formally prove that \cref{thm:main-informal} follows from an efficient algorithm for constructing policy covers. The intervening sections are devoted our algorithms for constructing policy covers. Since the full analysis is quite technically involved, we include a complete exposition of a slightly simplified algorithm (with a much simpler analysis that nonetheless contains most of the main ideas) that works under an additional \emph{reachability} assumption. Specifically:

\begin{itemize}
    \item In \cref{sec:overview} we give an overview of \SLM (\cref{alg:slm}), our algorithm for constructing policy covers in a reachable sparse linear MDP. We also sketch the analysis. The complete analysis is given in \cref{sec:policy-cover-reachable}.
    \item In \cref{section:trunc-overview}, we give an overview of \SLMt (\cref{alg:slm-trunc}), our algorithm for constructing policy covers in an arbitrary sparse linear MDP. In particular, we discuss how the algorithm and analysis differ from in the reachable case. The complete analysis is given in \cref{sec:trunc-mdps,sec:policy-cover-unreachable} (culminating in \cref{theorem:main-pc-trunc}, which is used in \cref{sec:learning} to prove \cref{thm:main-informal}).
\end{itemize}

\section{Preliminaries}\label{sec:prelim}

In this section, we give background on \emph{linear MDPs} and the \emph{episodic RL model}, and define the two special cases we will be interested in \--- \emph{sparse linear MDPs}, and \emph{$\ell_1$-bounded linear MDPs}.

\subsection{Linear MDPs and Episodic RL}\label{sec:prelim-linear}

For $d \in \BN$, a \emph{$d$-dimensional linear MDP (LMDP) $M$} is a tuple $M = (H, \MX, \MA, \BP_1\sups{M}, (\phi_h\sups{M})_{h \in [H]},\allowbreak (\mu_{h+1}\sups{M})_{h \in [H-1]},\allowbreak (\theta_h\sups{M})_{h \in [H]})$, where $H$ denotes the \emph{horizon}, $\MX$ denotes the \emph{state space}, $\MA$ denotes the \emph{action space}, $\BP_1\sups{M} \in \Delta(\MX)$ denotes the \emph{initial state distribution}, $\theta_h\sups{M} \in\BR^d$ denote the \emph{environmental reward vectors}, and \emph{feature mappings} $\phi_h\sups{M} : \MX \times \MA \ra \BR^d, \mu_{h+1}\sups{M} : \MX \ra \BR^d$ are used to define the transition distributions. Associated to the vectors $\theta_h\sups{M}$ (for $h \in [H]$) are the environmental reward functions $\bfr = (\bfr_1, \ldots, \bfr_{H})$, where $\bfr_h : \MX \times \MA \ra [0,1]$ is defined by $\bfr_h(x,a) = \lng \theta_h\sups{M}, \phi_h\sups{M}(x,a) \rng$.  
Finally, the transitions are defined as follows: for timestep $1 \leq h < H$, states $x,x' \in \MX$ and action $a \in \MA$, we define 
\[\BP\sups{M}_h(x_{h+1}=x'|x_h=x,a_h=a) := \langle \phi_h\sups{M}(x,a), \mu_{h+1}\sups{M}(x')\rangle.\]

For notational convenience, we let $\BP\sups{M}_h(x,a)$ denote the distribution (on $\MX$) with the above density. For this distribution to be well-defined, the following properties are necessary:
\begin{itemize}
    \item For all $h \in [H-1]$, $x,x'\in\MX$, $a \in \MA$, it holds that $\langle \phi_h\sups{M}(x,a),\mu_{h+1}\sups{M}(x')\rangle \geq 0$.
    \item For all $h \in [H-1]$, $x \in \MX$, $a \in \MA$, it holds that $\sum_{x' \in \MX} \langle \phi_h\sups{M}(x,a),\mu_{h+1}\sups{M}(x')\rangle = 1.$ 
\end{itemize}

\paragraph{Episodic RL access model.} We assume that our learning algorithm can interact with its environment in episodes, and that the environment is specified by a linear MDP $M = (H,\MX,\MA,\BP_1,\allowbreak(\phi_h)_{h \in [H]},(\mu_{h+1})_{h\in[H-1]},(\theta_h)_{h \in [H]})$. Concretely, a single episode proceeds as follows. First, a state $x_1 \sim \BP_1$ is drawn and observed. At each step $1 \leq h \leq H$, the algorithm chooses an action $a_h$, and observes reward $r_h = \bfr_h(x_h,a_h) = \langle\theta_h,\phi_h(x_h,a_h)\rangle$.\footnote{As in \cite{jin2020provably}, we assume (for notational simplicity) that rewards are deterministic, but our results extend unchanged to the setting where the reward is a random variable on $[0,1]$ with mean $\bfr_h(x_h,a_h)$.} If $h < H$, then a new state $x_{h+1} \sim \BP_h(x_h,a_h)$ is drawn and observed. As is standard in the linear MDP setting, we assume that the mappings $(\phi_h\sups{M})_h$ are known (i.e. for any $x \in \MX$, $a \in \MA$, and $h \in [H]$, the algorithm can query $\phi_h(x,a)$) but the mappings $(\mu_{h+1}\sups{M})_h$ are unknown. 

It is typical in the literature to make some assumptions on the norms of the feature mappings $\phi_h\sups{M}, \mu_{h+1}\sups{M}$. \cref{def:bounded-lmdp} below gives one such assumption, which requires boundedness of the $\ell_\infty$ norms of the features.\footnote{It is somewhat more standard to require that the $\ell_2$ norm be bounded (as in \cite{jin2020provably,agarwal2022vo}), but due to equivalence of the $\ell_2, \ell_\infty$ norms, assuming that $\ell_\infty$ norms be bounded only changes the results in \cite{jin2020provably,agarwal2022vo} by a $\poly(d)$ factor.}  %

\begin{definition}
  \label{def:bounded-lmdp}
Let $B>0$. We say that a linear MDP $M = (H,\MX,\MA,\BP_1,(\phi_h)_h,(\mu_{h+1})_h,(\theta_h)_h)$ is a \emph{bounded linear MDP with bound $B$} if for all $h \in [H]$, $x,x' \in \MX$, and $a \in \MA$, it holds that $\norm{\phi_h(x,a)}_\infty \leq 1$ and $\norm{\theta_h}_\infty \leq B$, and for all $h \in [H-1]$ it holds that $\sum_{x' \in \MX} \norm{\mu_{h+1}(x')}_\infty \leq B$.
\end{definition}

\begin{remark}
Prior works \cite{jin2020provably, agarwal2022vo} claim to rely on only the weaker assumption that $\norm{\sum_{x'\in\MX} \mu_{h+1}(x')}_\infty \leq B$ rather than $\sum_{x' \in \MX} \norm{\mu_{h+1}(x')}_\infty \leq B$. However, this claim is incorrect; their proofs actually require the above stronger condition (see Lemma B.1 in \cite{jin2020provably}, which is implicitly cited in \cite{agarwal2022vo} to show that linear MDPs are complete under linear function approximation with bounded weights).
\end{remark}

\subsection{Sparse linear MDPs and $\ell_1$-bounded linear MDPs}

\cref{def:sparse-lmdp} below formalizes the definition of a sparse linear MDP: it simply requires that the MDP be a bounded linear MDP (as per \cref{def:bounded-lmdp}) in a small subset of unknown features.%

\begin{definition}
  \label{def:sparse-lmdp}
  Let $d,k,B \in \NN$. We say that a $d$-dimensional linear MDP $M = (H,\MX,\MA,\BP_1,(\phi_h)_h,\allowbreak(\mu_{h+1})_h,(\theta_h)_h)$ is a \emph{$k$-sparse linear MDP ($k$-SLMDP) with bound $B$} if there is an (unknown) set $S \subseteq [d]$ of size $|S| \leq k$ such that:
  \begin{itemize} 
  \item $\supp(\mu_{h+1}(x')) \subseteq S$ for all $h \in [H-1]$ and $x' \in \MX$, and
  \item $\supp(\theta_h) \subseteq S$ for all $h \in [H]$.
  \end{itemize}
  Moreover, we require that for all $h \in [H], x,x' \in \MX, a \in \MA$, it holds that $\| \phi_h(x,a) \|_\infty \leq 1$ and $\norm{\theta_h}_\infty \leq B$ and for all $h \in [H-1]$, it holds that $\sum_{x' \in \MX} \| \mu_{h+1}(x') \|_\infty \leq B$. 
\end{definition}

We omitted the norm bound $B$ in \cref{thm:main} for simplicity; to be fully precise, one may think of the guarantee of \cref{thm:main} as applying whenever $B = \poly(k)$.

Our formal (and most general) guarantees require making an additional definition. This is because our algorithms will actually apply in the (strictly more general) setting of ``analytic sparsity'', which we also refer to as the setting of $\ell_1$-bounded linear MDPs:

\begin{definition}[``analytically sparse'' / $\ell_1$-bounded linear MDPs]\label{defn:l1lmdp}
Let $\Cnrm \geq 1$. We say that a $d$-dimensional linear MDP $M = (H,\MX,\MA,\BP_1,(\phi_h)_h,\allowbreak(\mu_{h+1})_h,(\theta_h)_h)$ with feature mappings $(\phi_h\sups{M})_h$ and $(\mu_{h+1}\sups{M})_h$ is a \emph{$\ell_1$-bounded linear MDP with bound $\Cnrm$} if for all $h \in [H]$, $x,x' \in \MX$, and $a \in \MA$, it holds that $\norm{\phi_h(x,a)}_\infty \leq 1$ and $\norm{\theta_h}_1 \leq \Cnrm$, and for all $h \in [H-1]$, it holds that $\sum_{x' \in \MX} \norm{\mu_{h+1}(x')}_1 \leq \Cnrm$.%
\end{definition}

In particular, a $k$-sparse linear MDP with $\ell_\infty$ bound $B$ is immediately an $\ell_1$-bounded linear MDP with bound $kB$.

\subsection{Policies and value functions}

For any state space $\MX$ and action space $\MA$, a \emph{(randomized Markovian) policy} $\pi = (\pi_h)_{h=1}^{H}$ is a collection of mappings $\pi_h: \MX \to \Delta(\MA)$; we let $\Pi$ denote the set of randomized Markovian policies. Any policy $\pi$ and MDP $M = (H,\MX,\MA,\BP_1,(\phi^M_h)_h,\allowbreak(\mu^M_{h+1})_h,(\theta^M_h)_h)$ together define a distribution over \emph{trajectories} $(x_1,a_1,r_1,x_2,\dots,x_H,a_H,r_H)$ as follows: first, we draw $x_1 \sim \BP\sups{M}_1(\cdot)$. Then, for $1 \leq h \leq H$, we draw $a_h \sim \pi_h(x_h)$ and observe $r_h \in [0,1]$ with $r_h = \langle \phi_h^M(x_h,a_h), \theta^M_h\rangle$. If $h < H$, we then draw $x_{h+1} \sim \BP\sups{M}_h(\cdot|x_h,a_h)$. Let $\traj_H$ denote the set of trajectories. For a (possibly vector-valued) function $f: \traj_H \to \RR^k$ we will often write 
\[ \E\sups{M, \pi}[f(x_1,a_1,r_1,\dots,x_H,a_H,r_H)]\]
to denote the expectation of $f$ over trajectories $(x_1,a_1,r_1,\dots,x_H,a_H,r_H)$ drawn from $M$ under policy $\pi$. Similarly, for a subset $\ME \subseteq \traj_H$ we may write $\Pr\sups{M,\pi}[\ME]$ to denote $\Pr[(x_1,a_1,r_1,\dots,x_H,a_H,r_H) \in \ME]$ where $(x_1,a_1,r_1,\dots,x_H,a_H,r_H)$ is drawn from $M$ under policy $\pi$.

\begin{definition}[$Q$-functions and $V$-functions]\label{def:qv}
For any MDP $M = (H,\MX,\MA,\BP_1,(\phi_h)_h,\allowbreak(\mu_{h+1})_h,\allowbreak(\theta_h)_h)$, policy $\pi \in \Pi$, and collection of reward functions $\bfr = (\bfr_1,\dots,\bfr_{H})$ where $\bfr_h: \MX\times\MA \to [0,1]$ for $h \in [H]$ (which may or may not be the environmental reward functions), we define $Q\sups{M,\pi,\bfr}_1,\dots,Q\sups{M,\pi,\bfr}_H: \MX\times\MA \to \RR$ and $V\sups{M,\pi,\bfr}_1,\dots,V\sups{M,\pi,\bfr}_{H+1}:\MX\to\RR$ by backwards induction as follows. First, $V\sups{M,\pi,\bfr}_{H+1} = 0$. Then for $h = H,\dots,1$, we define
\[ Q\sups{M,\pi,\bfr}_h(x,a) := \bfr_h(x,a) + \E\sups{M,\pi}[V\sups{M,\pi,\bfr}_{h+1}(x_{h+1}) | x_h = x, a_h = a]\]
and
\[V\sups{M,\pi,\bfr}_h(x) := \E_{a \sim \pi_h(x)} Q\sups{M,\pi,\bfr}_h(x,a).\]
For convenience, we also write $Q\sups{M,\pi}_h$ and $V\sups{M,\pi}_h$ to denote $Q\sups{M,\pi,\bfr}_h$ and $V\sups{M,\pi,\bfr}_h$ respectively, when $\bfr$ denotes the environmental reward functions for $M$.
\end{definition}

\begin{lemma}\label{lemma:linear-q}
For any $\ell_1$-bounded $d$-dimensional linear MDP $M = (H,\MX,\MA,\BP_1,(\phi_h)_h,\allowbreak(\mu_{h+1})_h,(\theta_h)_h)$ with bound $\Cnrm\geq 1$, policy $\pi \in \Pi$, and vectors $\theta'_1,\dots,\theta'_H \in \RR^d$, define $\bfr = (\bfr_1,\dots,\bfr_H)$ by
\[\bfr_h(x,a) := \langle \phi\sups{M}_h(x,a), \theta'_h\rangle.\] 
Then for every $1 \leq h \leq H$ there is some $\bv_h \in \RR^d$ such that $Q\sups{M,\pi,\bfr}_h(x,a) = \langle \phi\sups{M}_h(x,a), \bv_h\rangle$ for all $x \in\MX$, $a \in \MA$, and $\norm{\bv_h}_1 \leq \Cnrm \cdot \sum_{k=h}^H \norm{\theta'_k}_1$.
\end{lemma}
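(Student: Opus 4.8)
The plan is to prove the statement by downward induction on $h \in \{H,\dots,1\}$, at each step constructing the vector $\bv_h$ explicitly out of the value function $V\sups{M,\pi,\bfr}_{h+1}$ and carrying along a pointwise bound on that value function. The key auxiliary estimate I would establish first is
\[ \abs{V\sups{M,\pi,\bfr}_h(x)} \le \sum_{k=h}^H \norm{\theta'_k}_1 \qquad \text{for all } x \in \MX,\ h \in [H], \]
which is immediate from \cref{def:qv}: the quantity $V\sups{M,\pi,\bfr}_h(x)$ equals the expected cumulative reward $\E\sups{M,\pi}\big[\sum_{k=h}^H \bfr_k(x_k,a_k) \,\big|\, x_h = x\big]$, and each summand obeys $\abs{\bfr_k(x_k,a_k)} = \abs{\langle \phi\sups{M}_k(x_k,a_k),\theta'_k\rangle} \le \norm{\phi\sups{M}_k(x_k,a_k)}_\infty \norm{\theta'_k}_1 \le \norm{\theta'_k}_1$ by H\"older's inequality together with the assumption $\norm{\phi\sups{M}_k(x,a)}_\infty \le 1$ from \cref{defn:l1lmdp}.

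For the base case $h = H$, since $V\sups{M,\pi,\bfr}_{H+1} \equiv 0$, \cref{def:qv} gives $Q\sups{M,\pi,\bfr}_H(x,a) = \bfr_H(x,a) = \langle \phi\sups{M}_H(x,a),\theta'_H\rangle$, so I take $\bv_H := \theta'_H$, and then $\norm{\bv_H}_1 = \norm{\theta'_H}_1 \le \Cnrm\norm{\theta'_H}_1$ as $\Cnrm \ge 1$. For the inductive step, given $\bv_{h+1}$ with $Q\sups{M,\pi,\bfr}_{h+1}(x,a) = \langle\phi\sups{M}_{h+1}(x,a),\bv_{h+1}\rangle$, I would expand $Q\sups{M,\pi,\bfr}_h(x,a) = \bfr_h(x,a) + \sum_{x'\in\MX}\BP\sups{M}_h(x_{h+1}=x'\mid x_h=x,a_h=a)\,V\sups{M,\pi,\bfr}_{h+1}(x')$, substitute the linear-MDP transition density $\langle\phi\sups{M}_h(x,a),\mu\sups{M}_{h+1}(x')\rangle$, and pull $\phi\sups{M}_h(x,a)$ out of the sum to obtain $Q\sups{M,\pi,\bfr}_h(x,a) = \langle \phi\sups{M}_h(x,a),\bv_h\rangle$ with
\[ \bv_h := \theta'_h + \sum_{x'\in\MX}\mu\sups{M}_{h+1}(x')\,V\sups{M,\pi,\bfr}_{h+1}(x'). \]
The triangle inequality then gives $\norm{\bv_h}_1 \le \norm{\theta'_h}_1 + \sum_{x'\in\MX}\norm{\mu\sups{M}_{h+1}(x')}_1\,\abs{V\sups{M,\pi,\bfr}_{h+1}(x')}$; inserting the auxiliary bound on $\abs{V\sups{M,\pi,\bfr}_{h+1}}$ and the hypothesis $\sum_{x'\in\MX}\norm{\mu\sups{M}_{h+1}(x')}_1 \le \Cnrm$ from \cref{defn:l1lmdp} yields $\norm{\bv_h}_1 \le \norm{\theta'_h}_1 + \Cnrm\sum_{k=h+1}^H\norm{\theta'_k}_1 \le \Cnrm\sum_{k=h}^H\norm{\theta'_k}_1$, again using $\Cnrm \ge 1$.

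The main thing to be careful about — essentially the only nontrivial point — is that the factor $\Cnrm$ in the final bound must not accumulate across levels: one should bound $\abs{V\sups{M,\pi,\bfr}_{h+1}(x')}$ directly by the sum of reward magnitudes $\sum_{k=h+1}^H\norm{\theta'_k}_1$ as above, and \emph{not} via $\norm{\phi\sups{M}_{h+1}(\cdot)}_\infty\norm{\bv_{h+1}}_1 \le \norm{\bv_{h+1}}_1$ combined with the inductive norm bound, which would introduce an extra factor of $\Cnrm$ at each timestep and blow the estimate up to roughly $\Cnrm^{H-h}$. A minor secondary point is to observe that the (possibly infinite) sum $\sum_{x'\in\MX}\mu\sups{M}_{h+1}(x')\,V\sups{M,\pi,\bfr}_{h+1}(x')$ converges absolutely in $\ell_1$, which follows from $\sum_{x'\in\MX}\norm{\mu\sups{M}_{h+1}(x')}_1 \le \Cnrm$ and the uniform boundedness of $V\sups{M,\pi,\bfr}_{h+1}$, so that the interchange of $\langle \phi\sups{M}_h(x,a),\cdot\rangle$ with the summation is legitimate.
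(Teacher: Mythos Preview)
Your proposal is correct and matches the paper's proof essentially line for line: the same base case $\bv_H = \theta'_H$, the same explicit formula $\bv_h = \theta'_h + \sum_{x'} V\sups{M,\pi,\bfr}_{h+1}(x')\,\mu_{h+1}(x')$, and the same norm estimate via $|V\sups{M,\pi,\bfr}_{h+1}(x')| \le \sum_{k=h+1}^H \norm{\theta'_k}_1$ together with $\sum_{x'}\norm{\mu_{h+1}(x')}_1 \le \Cnrm$ and $\Cnrm \ge 1$. Your additional remarks about avoiding the accumulation of $\Cnrm$ factors and about absolute convergence are valid observations that the paper leaves implicit.
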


\begin{proof}
When $h = H$ we can set $\bv_h := \theta'_H$. When $h < H$, we have by \cref{def:qv} that
\begin{align*}
Q\sups{M,\pi,\bfr}_h(x,a)
&= \bfr_h(x,a) + \E\sups{M,\pi}[V\sups{M,\pi,\bfr}_{h+1}(x_{h+1})|x_h=x,a_h=a] \\
&= \langle \phi_h(x,a), \theta'_h\rangle + \sum_{x'\in\MX} \langle \phi_h(x,a), \mu_{h+1}(x')\rangle \cdot V\sups{M,\pi,\bfr}_{h+1}(x') \\ 
&= \left\langle \phi_h(x,a), \theta'_h + \sum_{x'\in\MX} V\sups{M,\pi,\bfr}_{h+1}(x') \mu_{h+1}(x')\right\rangle.
\end{align*}
Setting $\bv_h := \theta'_h + \sum_{x'\in\MX} V\sups{M,\pi,\bfr}_{h+1}(x')\mu_{h+1}(x')$, the bound $\norm{\bv_h}_1 \leq \Cnrm \sum_{k=h}^H \norm{\theta'_k}_1$ follows from the fact that $|V\sups{M,\pi,\bfr}_{h+1}(x')| \leq \sum_{k=h+1}^H \norm{\theta'_k}_1$ for all $x'\in\MX$, and $\sum_{x'\in\MX} \norm{\mu_{k+1}(x')}_1 \leq \Cnrm$, and $\Cnrm \geq 1$.
\end{proof}

A similar argument (using the fact that the environmental rewards lie in $[0,1]$ almost surely) gives the following lemma:

\begin{lemma}\label{lemma:linear-q-env}
For any $\ell_1$-bounded $d$-dimensional linear MDP $M = (H,\MX,\MA,\BP_1,(\phi_h)_h,\allowbreak(\mu_{h+1})_h,(\theta_h)_h)$ with bound $\Cnrm\geq 1$, and policy $\pi \in \Pi$.
Then for every $1 \leq h \leq H$ there is some $\bv_h \in \RR^d$ such that $Q\sups{M,\pi}_h(x,a) = \langle \phi\sups{M}_h(x,a), \bv_h\rangle$ for all $x \in\MX$, $a \in \MA$, and $\norm{\bv_h}_1 \leq \Cnrm H$.
\end{lemma}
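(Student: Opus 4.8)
The plan is to mimic the backwards-induction argument in the proof of Lemma~\ref{lemma:linear-q} essentially verbatim; the only substantive change is the source of the bound on the value functions. In Lemma~\ref{lemma:linear-q} the value functions are controlled by $\sum_k \norm{\theta'_k}_1$, whereas here we exploit that the environmental rewards $\bfr_j$ take values in $[0,1]$ almost surely, which immediately yields $0 \le V\sups{M,\pi}_{h+1}(x') \le H-h$ for every $x' \in \MX$ and $h$, since $V\sups{M,\pi}_{h+1}(x') = \E\sups{M,\pi}\!\big[\sum_{j=h+1}^H \bfr_j(x_j,a_j) \,\big|\, x_{h+1}=x'\big]$ is a sum of $H-h$ terms each in $[0,1]$.

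Concretely, I would argue by downward induction on $h$. For $h=H$, take $\bv_H := \theta_H$, so that $Q\sups{M,\pi}_H(x,a) = \bfr_H(x,a) = \langle \phi\sups{M}_H(x,a), \theta_H\rangle$ and $\norm{\bv_H}_1 = \norm{\theta_H}_1 \le \Cnrm \le \Cnrm H$. For $h<H$, expand using \cref{def:qv} and the linear-MDP transition formula exactly as in the displayed computation in the proof of Lemma~\ref{lemma:linear-q}, giving
\[ Q\sups{M,\pi}_h(x,a) = \Big\langle \phi\sups{M}_h(x,a),\ \theta_h + \sum_{x'\in\MX} V\sups{M,\pi}_{h+1}(x')\, \mu\sups{M}_{h+1}(x') \Big\rangle, \]
so it suffices to set $\bv_h := \theta_h + \sum_{x'\in\MX} V\sups{M,\pi}_{h+1}(x')\mu\sups{M}_{h+1}(x')$ and bound its $\ell_1$ norm. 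Using $\norm{\theta_h}_1 \le \Cnrm$, the bound $|V\sups{M,\pi}_{h+1}(x')| \le H-h$ just noted, and $\sum_{x'\in\MX}\norm{\mu\sups{M}_{h+1}(x')}_1 \le \Cnrm$, we get
\[ \norm{\bv_h}_1 \le \norm{\theta_h}_1 + (H-h)\sum_{x'\in\MX}\norm{\mu\sups{M}_{h+1}(x')}_1 \le \Cnrm + (H-h)\Cnrm = (H-h+1)\Cnrm \le \Cnrm H. \]

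I do not expect a real obstacle here, since the argument is a direct adaptation of the preceding lemma. The only point requiring a small amount of care is bookkeeping the constant: to land on $\Cnrm H$ rather than $\Cnrm(H+1)$, one must use the sharp estimate $|V\sups{M,\pi}_{h+1}(x')| \le H-h$ (there are only $H-h$ reward terms left after step $h$, which together with the current reward $\theta_h$ accounts for exactly $H-h+1 \le H$ contributions of size $\Cnrm$), rather than the crude bound $|V\sups{M,\pi}_{h+1}| \le H$. Everything else is routine.
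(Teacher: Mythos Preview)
Your proposal is correct and follows essentially the same approach as the paper: define $\bv_h$ exactly as in Lemma~\ref{lemma:linear-q}, then bound $\norm{\bv_h}_1$ using $|V\sups{M,\pi}_{h+1}(x')| \le H-h$ together with $\norm{\theta_h}_1 \le \Cnrm$ and $\sum_{x'}\norm{\mu_{h+1}(x')}_1 \le \Cnrm$. The paper's proof is just a one-sentence sketch of this, and your more detailed writeup (including the care about landing on $\Cnrm H$ rather than $\Cnrm(H+1)$) matches it precisely.
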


\begin{proof}
The vectors $\bv_h$ are defined as in \cref{lemma:linear-q}. The bound on $\bv_h$ follows from the fact that $|V_{h+1}\sups{M,\pi}(x')| \leq H-h$ for all $x' \in \MX$ together with the assumptions that $\sum_{x'\in\MX} \norm{\mu_{k+1}(x')}_1 \leq \Cnrm$ and $\norm{\theta_h}_1 \leq \Cnrm$.
\end{proof}

\subsection{Additional notation}

\begin{definition}[Compositions of policies]
For policies $\pi, \pi' \in \Pi$ and $h \in [H+1]$, we let $\pi \circ_h \pi'$ denote the policy which follows $\pi$ up to step $h-1$, and then follows $\pi'$ at step $h$ and thereafter. In particular, $\pi \circ_1 \pi' = \pi'$, and $\pi \circ_{H+1} \pi' = \pi$. 
\end{definition}

\begin{definition}[Visitation distribution]
For an MDP $M$, policy $\pi$, and step $h \in [H]$, we define the \emph{state visitation density} by $d\sups{M,\pi}_h: \MX \to [0,1]$ by $d\sups{M,\pi}_h(x) = \Pr\sups{M,\pi}[x_h=x].$ Overloading the notation, we also define the \emph{state-action visitation density} by  $d\sups{M,\pi}_h(x,a) = \Pr\sups{M,\pi}[x_h=x, a_h = a]$.
\end{definition}

\begin{definition}[Average feature vector]
  \label{def:phi-avg}
Let $M = (H,\MX,\MA,\BP_1,(\phi^M_h)_h,\allowbreak(\mu^M_{h+1})_h,(\theta^M_h)_h)$ be a linear MDP. For $h \in [H]$ and $x \in \MX$, we define 
\[ \phiavgm{M}_h(x) := \frac{1}{|\MA|} \sum_{a \in \MA} \phi\sups{M}_h(x,a).\]
\end{definition}

\paragraph{Sampling a trajectory.} For a policy $\pi$, we write $(x_1,a_1,r_1,\dots,a_H,r_H) \sim \pi$ in our algorithms to denote that the algorithm interacts with the MDP for an episode, following policy $\pi$, and observes $(x_1,a_1,r_1,\dots,a_H,r_H)$. For a set of policies $\Psi \subset \Pi$, we write $(x_1,a_1,r_1,\dots,a_H,r_H) \sim \unif(\Psi)$ to denote that the algorithm draws a policy $\pi \sim \Psi$ and then follows policy $\pi$. Finally, we will sometimes use the notation $(x_1,a_1,r_1,\dots,a_H,r_H) \sim \unif(\Psi) \circ_h \unif(\MA) \circ_{h+1} \pi'$. This denotes that the algorithm draws a policy $\pi \sim \Psi$, and then follows the composition policy $\pi \circ_h \unif(\MA) \circ_{h+1} \pi'$ (which takes a uniformly random action at step $h$).

\paragraph{Implicit superscripts.} When the MDP $M$ is clear from context (e.g. in \cref{sec:overview} and \cref{sec:policy-cover-reachable}), we may omit the superscripts of $M$ in $\BP^M_h$, $\E\sups{M,\pi}$, $\Pr\sups{M,\pi}$, $d\sups{M,\pi}_h$; in such cases we are always referring to the MDP that the learning algorithm has sample access to.

\section{Technical Overview I: $H=1$ \& reachable case}\label{sec:overview}

Throughout \cref{sec:overview} and \cref{sec:policy-cover-reachable}, we let $M = (H,\MX,\MA,\BP_1,(\phi_h)_h,\allowbreak(\mu_{h+1})_h,(\theta_h)_h)$ refer to the $\ell_1$-bounded $d$-dimensional linear MDP (\cref{defn:l1lmdp}), with $\ell_1$-bound $\Cnrm \geq 1$, that the learning algorithm has sample access to. Thus, the feature mappings for $M$ are denoted by $\phi_1,\dots,\phi_{H}: \MX\times\MA \to \RR^d$ and $\mu_2,\dots,\mu_H: \MX\to\RR^d$ (for notational purposes, we will later implicitly extend these mappings to $d+1$ dimensions and a slightly larger state space \--- see Section~\ref{sec:trunc-mdp-def}). The average feature vector at step $h$ and state $x$ is denoted by $\phiavg_h(x)$. We assume that the $\ell_1$-bound $\Cnrm$ is known.

As observed after \cref{defn:l1lmdp}, any $k$-sparse linear MDP with norm bound $B$ is immediately an $\ell_1$-bounded linear MDP with $\ell_1$-bound $\Cnrm := kB$. Our goal is to learn a near-optimal policy in $M$, using only $\poly(\Cnrm,A,H,\log(d))$ samples and $\poly(d,A,H)$ time.

\subsection{The case $H=1$}

Since sparse linear regression exhibits an apparent computational/statistical tradeoff in some regimes, it may not be immediately apparent why we could hope that learning in sparse linear MDPs (let alone $\ell_1$-bounded linear MDPs) is both statistically and computationally tractable. We start by explaining why this is possible in the simplest case, where the horizon has length $1$ (i.e. the contextual bandits case). The key is the assumption (indirectly made in \cref{def:sparse-lmdp} and directly made in \cref{defn:l1lmdp}) that the unknown reward vector $\theta_1$ has bounded $\ell_1$ norm. As a result, we can apply the ``slow rate'' error guarantees for Lasso, which hold regardless of covariate distribution.

Concretely, when $H=1$, an episode proceeds as follows. First, a state $x \sim \BP_1$ is drawn and observed. Next, the algorithm chooses an action $a \in \MA$ (and may query $\phi_1(x,a)$). Finally, the reward $r = \langle \phi_1(x,a), \theta_1\rangle$ is independently sampled and observed. 

Suppose that the algorithm interacts for $N = \Theta(\Cnrm^4 A^2 \epsilon^{-4} \log d)$ episodes, and in each episode plays a uniformly random action, obtaining trajectories $(x^i,a^i,r^i)_{i=1}^N$. Estimating $\theta_1$ from the samples $(\phi_1(x^i,a^i), r^i)_{i=1}^N$ is exactly the problem of (noiseless) sparse linear regression. In particular, ordinary least squares would be inconsistent since $N \ll d$. But by standard guarantees (see e.g. \cref{cor:random-design-prediction-error}), since $\norm{\theta_1}_1 \leq \Cnrm$, the constrained Lasso estimator

\[ \hat{\theta} := \argmin_{\theta \in \RR^d: \norm{\theta}_1 \leq \Cnrm} \sum_{i=1}^N \left(\langle \phi_1(x^i,a^i), \theta\rangle - r^i\right)^2 \] 
satisfies, with high probability, that

\[ \E_{(x,a) \sim \BP_1 \times \unif(\MA)} \langle \phi_1(x,a), \hat\theta - \theta_1\rangle^2 \lesssim \frac{\Cnrm^2 \sqrt{\log(d)}}{\sqrt{N}} \leq \frac{\epsilon^2}{A}.\]
Summing over all actions, it follows that $\hat\theta$ and $\theta_1$ are close under the covariate distribution induced by any policy:

\[\max_{\pi \in \Pi} \E_{x\sim\BP_1} \langle \phi_1(x,\pi(x)), \hat\theta - \theta_1\rangle^2 \leq \E_{x\sim\BP_1} \sum_{a\in\MA} \langle\phi_1(x,a),\hat\theta-\theta_1\rangle^2 \leq \epsilon^2.\]

It then follows straightforwardly that the policy $\hat\pi$ defined by $\hat\pi(x) := \argmax_{a \in \MA} \langle \phi_1(x,a), \hat\theta\rangle$ is at most $O(\epsilon)$-suboptimal compared to the optimal policy $\pi^\st(x) := \argmax_{a \in \MA} \langle \phi_1(x,a),\theta_1\rangle$.

\begin{remark}
The sample complexity dependence on $\Cnrm$ could be improved in the above argument, by employing more sophisticated generalization bounds than those stated in \cref{sec:sparse-regression-lemmas} (see e.g. \cite{srebro2010optimistic}). However, in the setting of general $H$ we will pay factors of $\Cnrm$ in the sample complexity for many other reasons, so we will not seek to optimize the polynomial dependence.
\end{remark}

\subsection{General $H$: the challenge of exploration}

When $H=1$ (or more generally, when $H$ is a constant), reinforcement learning tends to be much simpler, because the policy $\pi_{\textsf{unif}}$ that takes a uniformly random action at every step has the following \emph{exploratory} property: no policy can visit a state too much more often than $\pi_{\textsf{unif}}$. However, as $H$ increases, this form of random exploration becomes less useful. Any reinforcement learning algorithm that avoids exponential sample complexity must perform more directed exploration of the state space. Over the years, two paradigms have emerged to perform this task: \emph{optimism}, which adds confidence bonuses to value function estimates to encourage exploration of unvisited states (e.g., \cite{jin2020provably}), and \emph{policy cover}-based approaches, which explicitly construct sets of policies to explore as many ``regions'' of state space as possible (e.g., \cite{golowich2022learning}). Our algorithm follows the latter of these approaches, constructing a policy cover at each step $h = 1, \ldots, H$.

Before formalizing our definition of a policy cover, we make a simplifying assumption. Specifically, for the remainder of \cref{sec:overview}, we will assume that the MDP is \emph{reachable}, as defined below. We will show how to remove this assumption in Part II of the technical overview (\cref{section:trunc-overview}). Most of the algorithmic ideas already show up in the reachable case, but the analysis in the general case is significantly more technical. %

\begin{defn}\label{defn:rch}
Let $\eta \in (0,1)$. We say that the MDP $M$ is $\eta$-\emph{reachable} if for all $h \in [H-1]$ and $x \in \MX$, it holds that
  \begin{align}
\max_{\pi \in \Pi} d\sups{M,\pi}_{h+1}(x) \geq \eta \cdot \| \mu_{h+1}(x) \|_1\nonumber.
  \end{align}
\end{defn}
Reachability dictates that for each state $x$ at step $h+1$, there is some policy $\pi$ visiting $x$ with sufficiently large probability, as a function of $\| \mu_{h+1}(x) \|_1$; note that we must necessarily have $\max_{\pi \in \Pi} d_{h+1}\sups{M, \pi}(x) \leq \| \mu_{h+1}(x) \|_1$, since $\| \phi_h(x,a) \|_\infty \leq 1$ for all $x,a$.

We now define our notion of a policy cover (for the reachable case).

\begin{definition}\label{defn:pc}
Let $\alpha \in (0,1)$. We say that a collection of policies $\Psi \subset \Pi$ is an \emph{$\alpha$-approximate policy cover at step $h\in [H]$} if for all $x \in \MX$, %
\begin{align}
\frac{1}{|\Psi|} \sum_{\pi' \in \Psi} d_h^{M,\pi'}(x) \geq \alpha \cdot \max_{\pi \in \Pi} d_h^{M,\pi}(x)\label{eq:approx-pc}. 
\end{align}
\end{definition}

Once we have a policy cover $\Psi_h$ for each step $h \in [H]$, we can find a near-optimal policy for the MDP via the \emph{Policy Search by Dynamic Programming} (PSDP) algorithm (\cref{alg:psdp-rew}).  \cref{alg:psdp-rew} is a variant of the standard PSDP algorithm \cite{bagnell2003policy} that uses $\ell_1$-regularized least squares rather than ordinary least squares to achieve sample complexity polynomial in $\Cnrm$ (and $1/\alpha$) rather than in $d$.

\begin{lemma}[PSDP; special case of Lemma~\ref{lem:psdp-rew}]\label{lem:psdp-rew-overview}
 There is a sufficiently large $C > 0$ so that the following holds. Fix $\alpha, \delta \in (0,1)$, and suppose we are given $\Psi_1, \ldots, \Psi_H \subset \Pi$, so that each $\Psi_h$ is an $\alpha$-approximate policy cover at step $h$ (per \cref{defn:pc}). Then for any $n \in \BN$ (denoting the sample complexity), $\PSDPrew(\Psi_{1:H}, n)$ (\cref{alg:psdp-rew}) outputs a policy $\hat \pi$ that, with probability $1-\delta$, satisfies
  \begin{align}
V\sups{M, \hat\pi}_1(x_1) \geq \max_{\pi \in \Pi} V\sups{M, \pi}_1(x_1) - \frac{C_{\ref{lem:psdp-rew}} \Cnrm H^2 \sqrt{A} \log^{1/4}(Hd/\delta)}{\alpha n^{1/4}}\label{eq:rew-psdp-bound-overview}.
  \end{align}
\end{lemma}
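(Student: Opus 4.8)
The plan is to analyze $\PSDPrew$, which I expect to be a backward-induction (dynamic programming) algorithm over steps $h = H, H-1, \dots, 1$: at step $h$, having already committed to a policy $\hat\pi_{h+1:H}$ for later steps, it collects $n$ trajectories of the form $(x_1,a_1,r_1,\dots) \sim \unif(\Psi_h) \circ_h \unif(\MA) \circ_{h+1} \hat\pi$, and then performs $\ell_1$-constrained least squares regression of the observed "return from step $h$'' (i.e., $r_h + \sum_{k>h} r_k$, or perhaps $r_h + V_{h+1}^{M,\hat\pi}$ estimated via the rollout) onto the feature vector $\phi_h(x_h,a_h)$. By Lemma~\ref{lemma:linear-q-env}, the true $Q$-function $Q_h^{M,\hat\pi}$ is linear in $\phi_h$ with $\ell_1$-norm of the weight vector at most $\Cnrm H$, so it lies in the constraint set; hence the "slow-rate'' Lasso guarantee (the random-design prediction-error bound cited as \cref{cor:random-design-prediction-error}) applies. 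The key steps, in order: (1) set up the backward induction and define the per-step regression problem and the linear realizability of $Q_h^{M,\hat\pi}$ via \cref{lemma:linear-q-env}; (2) invoke the Lasso slow-rate bound to get that, with probability $1-\delta/H$, the learned linear functional $\langle \phi_h(\cdot),\hat\theta_h\rangle$ is $\epsilon_h^2$-close in squared error to $Q_h^{M,\hat\pi}$ under the covariate distribution $\unif(\Psi_h) \circ_h \unif(\MA)$, where $\epsilon_h^2 \lesssim \Cnrm^2 H^2 \sqrt{A \log(Hd/\delta)}/\sqrt{n}$ after accounting for the $A$-fold blowup from the uniform-action step and the $O(H)$ range of the return; (3) union bound over the $H$ steps.

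The crux — and the main obstacle — is the policy-cover-to-suboptimality conversion, i.e., turning per-step regression accuracy on the covariate distribution induced by $\Psi_h$ into a global suboptimality bound against the \emph{optimal} policy $\pi^\st$. This is the standard PSDP performance-difference argument, but the policy-cover twist is essential: the performance difference lemma expresses $V_1^{M,\pi^\st} - V_1^{M,\hat\pi}$ as $\sum_h \E^{M,\pi^\st}[\,Q_h^{M,\hat\pi}(x_h,\pi^\st_h(x_h)) - V_h^{M,\hat\pi}(x_h)\,]$, so we need the regression error to control the greedy-policy gap \emph{under the visitation distribution of $\pi^\st$}, not under $\unif(\Psi_h)$. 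Here we use \cref{defn:pc}: since $\frac{1}{|\Psi_h|}\sum_{\pi'\in\Psi_h} d_h^{M,\pi'}(x) \geq \alpha \max_\pi d_h^{M,\pi}(x) \geq \alpha\, d_h^{M,\pi^\st}(x)$, any nonnegative function whose average under $\unif(\Psi_h)$ is at most $\epsilon_h^2$ has expectation at most $\epsilon_h^2/\alpha$ under $d_h^{M,\pi^\st}$ — this is the density-ratio change-of-measure that converts cover quality $\alpha$ into the $1/\alpha$ factor. Applying Cauchy–Schwarz to pass from squared-error $\epsilon_h^2/\alpha$ to the per-step gap $\sqrt{\epsilon_h^2/\alpha} = \epsilon_h/\sqrt\alpha$, and using that $\hat\pi_h$ is defined greedily w.r.t.\ $\hat\theta_h$ (so the gap at $x_h$ is bounded by twice the regression error at the two actions $\pi^\st_h(x_h)$ and $\hat\pi_h(x_h)$, each of which is covered by the uniform-action component), gives a per-step contribution of $O(\sqrt{A}\,\epsilon_h/\sqrt\alpha)$; note the extra $\sqrt A$ because the uniform-action step only guarantees accuracy averaged over actions.

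Summing the $H$ per-step contributions and plugging in $\epsilon_h \lesssim \Cnrm H (A\log(Hd/\delta))^{1/4}/n^{1/4}$ yields total suboptimality $O\!\big(H \cdot \sqrt{A}\cdot \Cnrm H (A\log(Hd/\delta))^{1/4}/(\sqrt\alpha\, n^{1/4})\big)$; being slightly more careful about where the $A$ factors land (one $\sqrt A$ from averaging over actions in the regression target, and the per-step return range contributing the $H^2$) recovers the claimed bound $\frac{C\,\Cnrm H^2 \sqrt A \log^{1/4}(Hd/\delta)}{\alpha\, n^{1/4}}$, with the $1/\alpha$ (rather than $1/\sqrt\alpha$) presumably arising because the change-of-measure is applied to the squared error before Cauchy–Schwarz in the precise accounting, or because both the definition of the greedy policy and the performance-difference step each contribute a factor — I would track this constant carefully in the full proof. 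The only genuinely delicate points are (a) ensuring the regression target is an unbiased estimate of $Q_h^{M,\hat\pi}(x_h,a_h)$ given the already-fixed tail policy $\hat\pi$ (which requires the rollout to actually follow $\hat\pi$ from step $h{+}1$ on, so that conditional on $(x_h,a_h)$ the observed return has the right mean), and (b) the conditioning/independence bookkeeping so that the high-probability events at different steps compose — both of which are handled by the standard "condition on the success of later steps, then the step-$h$ data is fresh'' argument.
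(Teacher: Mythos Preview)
Your proposal is correct and matches the paper's proof essentially step-for-step: performance difference lemma, linearity of $Q_h^{M,\hat\pi}$ via \cref{lemma:linear-q-env}, the Lasso slow-rate bound \cref{cor:random-design-prediction-error}, change of measure via the policy-cover property, and Jensen to pass from squared to first moment. On the $1/\alpha$ versus $1/\sqrt\alpha$ point you flag: the paper applies Jensen on the cover distribution \emph{first} and then the change of measure (yielding $1/\alpha$), whereas your ordering---change of measure on the squared error, then Jensen---would in fact give the tighter $1/\sqrt\alpha$, so either ordering suffices for the stated bound.
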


The outstanding challenge is how to efficiently construct a policy cover $\Psi_{h+1}$ at step $h+1$, given policy covers $\Psi_{1:h}$ at steps $1,\dots,h$. Via a slight variant of \cref{lem:psdp-rew-overview}, we can use $\Psi_{1:h}$ to efficiently (approximately) optimize $\E^{M,\pi}\langle \phi_{h}(x_{h},a_{h}), \theta\rangle$ over policies $\pi \in \Pi$ for any fixed $\theta \in \RR^d$. Leveraging this fact, a natural approach is to construct a cover $\Psi_{h+1}$ by adaptively choosing ``unexplored'' directions $\theta \in \BR^d$, and for each such $\theta$, adding to $\Psi_{h+1}$ a policy $\pi\^\theta$ that maximizes $\E\sups{M, \pi\^\theta}\lng \phi_h(x_h, a_h), \theta \rng$. A key question is how to choose the directions $\theta$, and in particular, how to formalize the notion of what it means for a direction $\theta$ to be ``unexplored''. There are several natural approaches for this question which are considered in prior work, including the use of barycentric spanners \cite{golowich2022learning, mhammedi2023efficient} and various approaches to adapt them to the setting of sparse linear MDPs, such as representation learning \cite{mhammedi2023efficient}. However, there are issues with making these approaches computationally efficient. The main difficulty is that we need the parameter $\alpha$ in \cref{eq:approx-pc} to scale with $1/\Cnrm$ rather than $1/d$, in order for PSDP to be sample-efficient at subsequent steps. For the purposes of intuition, this is roughly equivalent to needing that the $\Psi_{h+1}$ is fairly small, i.e. $|\Psi_{h+1}|$ scales with $\Cnrm$ rather than $d$.

As we discuss further in \cref{sec:prior-failures}, in the general case of $\ell_1$-bounded linear MDPs (\cref{defn:l1lmdp}), all of the approaches from prior work do not even certify that there \emph{exists} a small cover $\Psi_{h+1}$, let alone addressing the question of how to efficiently construct such a cover. %
In \cref{sec:existence-pc}, we discuss how to prove that there exists a small policy cover in $\ell_1$-bounded linear MDPs. Subsequently, in \cref{sec:greedycover-overview,sec:emulator-overview}, we discuss how such a cover can be found efficiently. %

\subsection{Why does a small policy cover exist?}
\label{sec:existence-pc}
As discussed above, at the very least we need an \emph{existential} proof that there is some policy cover $\Psi_{h+1}$ of size scaling with $\Cnrm$ rather than $d$. %
If the features $(\mu_{h+1}(x): x \in \MX)$ are truly sparse, with a common support $U$ of size $\Cnrm$, existence of such a cover follows from the existence of a barycentric spanner for the $\Cnrm$-dimensional polytope $\{\E^{M,\pi}[\phi_h(x_h,a_h)_U]: \pi \in \Pi\}$. However, under the weaker assumption that $\sum_{x \in \MX} \norm{\mu_{h+1}(x)}_1 \leq \Cnrm$, this argument no longer works.

Instead, we describe, in \cref{alg:idealpc}, a procedure that generates a cover of size $O(\Cnrm)$ for (reachable) $\ell_1$-bounded linear MDPs with norm bound $\Cnrm$. The construction maintains a set of states that have not been covered so far. At every step, it computes the policy that maximizes the probability of reaching one of these uncovered states. If this probability is small, the procedure terminates. Otherwise, the policy is added to the cover and the procedure repeats. We remark in passing that \cref{alg:idealpc} can be modified to work more generally for (non-linear) MDPs which have a bounded \emph{coverability coefficient}, as defined in \cite{xie2022role}, using the equivalence of coverability coefficient and the notion of \emph{cumulative reachability} \cite[Lemma 3]{xie2022role}. 

To be clear, \cref{alg:idealpc} is not implementable, let alone computationally efficient, as stated, since it requires knowledge of the $\mu_{h+1}(x)$ vectors. However, it will motivate our final  computationally efficient algorithm.

\makeatletter
\renewcommand*{\ALG@name}{Construction}
\makeatother
\algblockdefx{MRepeat}{EndRepeat}{\textbf{repeat}}{}
\algnotext{EndRepeat}
\begin{algorithm}[t]
    \caption{$\IdealPC(M,h,\Cnrm,\xi)$: Idealized Policy Cover Construction}
    \label[construction]{alg:idealpc}
    
    \begin{algorithmic}[1]
    \State Initialize $\Psiapx_{h+1} = \emptyset$ and $\MB_1 = \MX$, and $t = 1$.
    \MRepeat
        \State Set $\mures{t} \gets \sum_{x \in \MB_t} \mu_{h+1}(x)$.
        \State Choose $\pi^t \in \argmax_{\pi \in \Pi} \E^{M,\pi}\langle \phi_h(x_h,a_h), \mures{t}\rangle$.
        \State \textbf{Break} if $\E^{M,\pi^t}\langle \phi_h(x_h,a_h), \mures{t}\rangle < \xi$. Otherwise, add $\pi^t$ to $\Psiapx_{h+1}$.
        \State Set $\MG_t \gets \{x \in \MB_t: \E^{M,\pi^t}\langle \phi_h(x_h,a_h), \mures{t}\rangle \geq \frac{\xi}{2\Cnrm} \cdot \norm{\mu_{h+1}(x)}_1\}$.
        \State Set $\MB_{t+1} \gets \MB_t \setminus \MG_t$ and $t \gets t+1$.
    \EndRepeat
    \State $\MB \gets \MB_t$.
    \State\textbf{return} $\Psiapx_{h+1}, \MB$.
    \end{algorithmic}
\end{algorithm}
\makeatletter
\renewcommand*{\ALG@name}{Algorithm}
\makeatother

The following lemma states the guarantee of \cref{alg:idealpc}.

\begin{lemma}\label{lem:idealpc-guarantees}
Fix $h \in [H-1]$ and $\xi>0$. The outputs $\Psiapx_{h+1} \subset \Pi$ and $\MB \subseteq \MX$ of the algorithm $\IdealPC(M,h,\Cnrm,\xi)$ have the property that $|\Psiapx_{h+1}| \leq 2\Cnrm/\xi$, and moreover for all $\pi \in\Pi$,
\begin{enumerate}
    \item $\Pr^{M,\pi}[x_{h+1}\in\MB] < \xi$.
    \item For all $x \in \MX\setminus \MB$ there is some $\pi' \in \Psiapx_{h+1}$ such that $d_{h+1}^{M,\pi'}(x) \geq \frac{\xi}{2\Cnrm} d_{h+1}^{M,\pi}(x)$.
\end{enumerate}
\end{lemma}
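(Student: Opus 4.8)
The plan is to track the potential $\Phi_t := \sum_{x \in \MB_t} \norm{\mu_{h+1}(x)}_1$, show that it starts at most $\Cnrm$ and drops by more than $\xi/2$ every time a policy is added to $\Psiapx_{h+1}$, and then read off all three conclusions directly from the structure of the loop. Throughout I will use the one-step unrolling identity $\E^{M,\pi}\langle \phi_h(x_h,a_h),\mu_{h+1}(x)\rangle = d_{h+1}^{M,\pi}(x)$, and more generally $\E^{M,\pi}\langle\phi_h(x_h,a_h),\mures{t}\rangle = \sum_{x\in\MB_t} d_{h+1}^{M,\pi}(x) = \Pr^{M,\pi}[x_{h+1}\in\MB_t]$ (so in particular $\MG_t$ is exactly the set of $x\in\MB_t$ with $d_{h+1}^{M,\pi^t}(x)\ge \tfrac{\xi}{2\Cnrm}\norm{\mu_{h+1}(x)}_1$); all these sums converge since the summands are nonnegative and the per-state transition masses sum to $1$, so $\mures{t}$ and the functional $\pi \mapsto \E^{M,\pi}\langle\phi_h(x_h,a_h),\mures{t}\rangle$ are well-defined even when $\MX$ is infinite. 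I will also use the bound $d_{h+1}^{M,\pi}(x)\le\norm{\mu_{h+1}(x)}_1$ noted right after \cref{defn:rch}, that $\sum_{x\in\MX}\norm{\mu_{h+1}(x)}_1\le\Cnrm$ (\cref{defn:l1lmdp}), and the fact that $\pi\mapsto \E^{M,\pi}\langle\phi_h(x_h,a_h),\mures{t}\rangle$ is the value of a finite-horizon reward-maximization problem with reward supported at step $h$ and taking values in $[0,1]$, so the $\argmax$ over $\Pi$ is attained.

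For the size bound, consider any iteration $t$ at which the algorithm does not break, so $c_t := \E^{M,\pi^t}\langle\phi_h(x_h,a_h),\mures{t}\rangle \ge \xi$ and $\pi^t$ is added. Write $c_t = \sum_{x\in\MB_t} d_{h+1}^{M,\pi^t}(x)$ and split $\MB_t = \MG_t \sqcup (\MB_t\setminus\MG_t)$. Every $x \in \MB_t\setminus\MG_t$ fails the defining inequality of $\MG_t$, so $\sum_{x\in\MB_t\setminus\MG_t} d_{h+1}^{M,\pi^t}(x) < \tfrac{\xi}{2\Cnrm}\sum_{x\in\MX}\norm{\mu_{h+1}(x)}_1 \le \xi/2$. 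Hence $\sum_{x\in\MG_t} d_{h+1}^{M,\pi^t}(x) = c_t - \sum_{x\in\MB_t\setminus\MG_t} d_{h+1}^{M,\pi^t}(x) > \xi/2$, and therefore $\Phi_t - \Phi_{t+1} = \sum_{x\in\MG_t}\norm{\mu_{h+1}(x)}_1 \ge \sum_{x\in\MG_t} d_{h+1}^{M,\pi^t}(x) > \xi/2$. Since $0\le\Phi_t\le\Phi_1\le\Cnrm$, there can be at most $2\Cnrm/\xi$ such iterations; in particular the loop terminates, $\MB = \MB_T$ for some $T$, and $|\Psiapx_{h+1}|\le 2\Cnrm/\xi$.

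For conclusion (1), at the terminal iteration $T$ we have $\mures{T} = \sum_{x\in\MB}\mu_{h+1}(x)$, and since $\pi^T$ maximizes $\E^{M,\pi}\langle\phi_h(x_h,a_h),\mures{T}\rangle$ over $\Pi$ while the break condition holds, every $\pi\in\Pi$ satisfies $\Pr^{M,\pi}[x_{h+1}\in\MB] = \E^{M,\pi}\langle\phi_h(x_h,a_h),\mures{T}\rangle \le \E^{M,\pi^T}\langle\phi_h(x_h,a_h),\mures{T}\rangle < \xi$. For conclusion (2), fix $\pi\in\Pi$ and $x\in\MX\setminus\MB$; since $\MB_1 = \MX \supseteq \MB_2\supseteq\cdots\supseteq\MB_T=\MB$, there is a unique iteration $t<T$ with $x\in\MB_t\setminus\MB_{t+1}=\MG_t$, and this $t$ did not break, so $\pi^t\in\Psiapx_{h+1}$. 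Membership $x\in\MG_t$ gives $d_{h+1}^{M,\pi^t}(x)\ge \tfrac{\xi}{2\Cnrm}\norm{\mu_{h+1}(x)}_1 \ge \tfrac{\xi}{2\Cnrm} d_{h+1}^{M,\pi}(x)$, so $\pi' := \pi^t$ works.

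The only real content is the potential-decrease step; everything else is unrolling definitions. The one place requiring a little care is interchanging sums over the (possibly infinite) state space with the expectations and invoking the $\ell_1$-bound — but since every term is nonnegative this is just Tonelli, so I do not anticipate any genuine obstacle. I also note that reachability (\cref{defn:rch}) is not used anywhere in this lemma; it will only be needed later, when the covering guarantee and the smallness of $\MB$ are translated into a performance bound for the resulting policy cover.
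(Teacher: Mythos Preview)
Your proof is correct and follows essentially the same argument as the paper: both show that each non-terminal iteration removes at least $\xi/2$ from the potential $\sum_{x\in\MB_t}\norm{\mu_{h+1}(x)}_1$ (by bounding the mass on $\MB_t\setminus\MG_t$ by $\xi/2$ and then using $d_{h+1}^{M,\pi^t}(x)\le\norm{\mu_{h+1}(x)}_1$), and both read off conclusions (1) and (2) directly from the break condition and the definition of $\MG_t$. Your extra care about well-definedness over infinite $\MX$ and attainment of the $\argmax$ is appropriate (the paper treats $\IdealPC$ as an idealized construction and does not dwell on these points), but the mathematical content is the same.
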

\begin{proof}
The first claim is immediate from the \textbf{Break} condition in the construction: by definition of $\mures{t}$, $\pi^t$ is a policy that maximizes $\Pr^{M,\pi}[x_{h+1} \in \MB_t]$. The second claim is similarly immediate from the definition of $\MG_t$. It remains to bound $|\Psiapx_{h+1}|$. For this, note that for every $\pi^t$ added to $\Psiapx_{h+1}$, on the one hand $\Pr^{M,\pi^t}[x_{h+1} \in \MB_t] \geq \xi$, and on the other hand $\Pr^{M,\pi^t}[x_{h+1} \in \MB_t\setminus \MG_t] \leq \frac{\xi}{2\Cnrm}\sum_{x \in \MG_t}\norm{\mu_{h+1}(x)}_1 \leq \frac{\xi}{2}$. Thus $\sum_{x \in \MG_t} \norm{\mu_{h+1}(x)}_1 \geq \Pr^{M,\pi^t}[x_{h+1} \in \MG_t] \geq \frac{\xi}{2}$. But the sets $\MG_1,\MG_2,\dots \subseteq \MX$ are disjoint and $\sum_{x \in \MX} \norm{\mu_{h+1}(x)}_1 \leq \Cnrm$, so it must be that $t \leq 2\Cnrm/\xi$.
\end{proof}

The set $\Psiapx_{h+1}$ is not quite a policy cover at step $h+1$ in the sense of \cref{defn:pc}, since there is a set $\MB$ of potentially uncovered states. However, if $M$ is $\eta$-reachable and $\xi \ll \eta$, then we \emph{can} get a policy cover at step $h+2$: %
\begin{lemma}
  \label{lem:idealpc-pc}
  In the setting of \cref{lem:idealpc-guarantees}, if $\xi \leq \eta/(2A)$, then the set
  \begin{align} \Psi_{h+2} := \{\pi' \circ_{h+1} \unif(\MA): \pi' \in \Psiapx_{h+1}\}\label{eq:define-psih2}
  \end{align}
  is an $\alpha$-approximate policy cover at step $h+2$. 
\end{lemma}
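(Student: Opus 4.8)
The plan is to fix an arbitrary state $x$ at step $h+2$ and an arbitrary policy $\pi \in \Pi$, and show that the average visitation probability of $x$ under the cover $\Psi_{h+2}$ is at least $\alpha \cdot d_{h+2}^{M,\pi}(x)$ for an appropriate $\alpha = \poly(\xi/\Cnrm, 1/A, \eta)$. The key idea is to route the flow that $\pi$ uses to reach $x$ through its state at step $h+1$. Concretely, write $d_{h+2}^{M,\pi}(x) = \sum_{y \in \MX} d_{h+1}^{M,\pi}(y) \cdot \Pr^{M,\pi}[x_{h+2} = x \mid x_{h+1} = y]$, and since each term $\Pr^{M,\pi}[x_{h+2}=x\mid x_{h+1}=y]$ is at most $A$ times the probability of the same transition under the policy that plays $\unif(\MA)$ at step $h+1$ (the action at step $h+1$ is chosen uniformly with probability $1/A$), it suffices to lower bound $\sum_{y} d_{h+1}^{M,\pi}(y) \cdot \Pr^{M,\unif(\MA)}[x_{h+2} = x \mid x_{h+1}=y]$ by something proportional to the corresponding quantity where $\pi$ is replaced at step $h+1$ by the cover.

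The main step is to handle the set $\MB$ of uncovered states. First I would discard the contribution of $y \in \MB$: by \cref{lem:idealpc-guarantees}(1), $\Pr^{M,\pi}[x_{h+1}\in\MB] < \xi$, so $\sum_{y\in\MB} d_{h+1}^{M,\pi}(y)\cdot(\text{anything} \le 1) < \xi$. On the other hand, for the target state $x$ itself, I would lower bound $\max_{\pi'} d_{h+2}^{M,\pi'}(x)$ using $\eta$-reachability: there is a policy reaching $x$ at step $h+2$ with probability at least $\eta\|\mu_{h+2}(x)\|_1$, and simultaneously $d_{h+2}^{M,\pi}(x) \le \|\mu_{h+2}(x)\|_1$, so $d_{h+2}^{M,\pi}(x) \le \eta^{-1} \max_{\pi'} d_{h+2}^{M,\pi'}(x)$. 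This lets me compare against the best policy's visitation rather than $\pi$'s. Now for $y \in \MX\setminus\MB$, \cref{lem:idealpc-guarantees}(2) gives a policy $\pi'_y \in \Psiapx_{h+1}$ with $d_{h+1}^{M,\pi'_y}(y) \ge \frac{\xi}{2\Cnrm} d_{h+1}^{M,\pi}(y)$; the policy $\pi'_y \circ_{h+1}\unif(\MA) \in \Psi_{h+2}$ then reaches $x$ through $y$ with probability at least $\frac{1}{A}d_{h+1}^{M,\pi'_y}(y)\Pr^{M,\unif(\MA)}[x_{h+2}=x\mid x_{h+1}=y] \ge \frac{\xi}{2A\Cnrm} d_{h+1}^{M,\pi}(y)\Pr^{M,\unif(\MA)}[\cdots]$. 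Summing over $y \in \MX\setminus\MB$ and averaging over the $|\Psi_{h+2}| = |\Psiapx_{h+1}| \le 2\Cnrm/\xi$ policies in the cover, I get that the cover reaches $x$ with average probability at least $\frac{\xi}{2A\Cnrm}\cdot\frac{1}{|\Psiapx_{h+1}|}\big(d_{h+2}^{M,\pi}(x) - \xi\big) \ge \frac{\xi^2}{4A\Cnrm^2}\big(d_{h+2}^{M,\pi}(x)-\xi\big)$.

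The last step is to absorb the additive $-\xi$ into a multiplicative bound using reachability. Since $\xi \le \eta/(2A) \le \eta$ and $d_{h+2}^{M,\pi}(x) \le \|\mu_{h+2}(x)\|_1 \le \eta^{-1}\max_{\pi'} d_{h+2}^{M,\pi'}(x)$ — wait, this only helps if $d_{h+2}^{M,\pi}(x)$ is not too small; instead I would argue that if $d_{h+2}^{M,\pi}(x) \ge 2\xi$ then $d_{h+2}^{M,\pi}(x) - \xi \ge \frac{1}{2}d_{h+2}^{M,\pi}(x)$, giving the claimed cover bound with $\alpha \asymp \xi^2/(A\Cnrm^2)$; and if $d_{h+2}^{M,\pi}(x) < 2\xi$, I need a separate argument that some policy in $\Psi_{h+2}$ still reaches $x$ with probability $\gtrsim \alpha \cdot d_{h+2}^{M,\pi}(x)$. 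This small-probability regime is where I expect the real subtlety: here I would again invoke $\eta$-reachability to find $y^\st \notin \MB$ (note $\|\mu_{h+1}\|_1$-reachability ensures the relevant predecessor $y$ of $x$ can itself be reached, and $\Pr^{M,\pi}[x_{h+1}\in\MB]<\xi$ forces most of the mass to be outside $\MB$) through which $x$ is reached with non-negligible probability, and then apply \cref{lem:idealpc-guarantees}(2) to $y^\st$ directly without passing through the additive-error sum. The main obstacle is thus the bookkeeping to ensure reachability is applied at the right layer ($h+2$ for $x$, and implicitly at $h+1$ for its predecessors) so that the additive $\xi$ losses are always dominated; once the regimes are separated correctly, each piece is a short calculation, and the final $\alpha$ will be $\poly(\xi,\eta,1/A,1/\Cnrm)$ as required by \cref{defn:pc}.
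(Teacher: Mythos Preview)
Your decomposition through step $h+1$, the split into $\MB$ versus $\MX\setminus\MB$, and the use of \cref{lem:idealpc-guarantees}(2) on the complement are all exactly what the paper does. The gap is in how you bound the contribution of $\MB$ and then try to absorb it.

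You bound $\sum_{y\in\MB} d_{h+1}^{M,\pi}(y)\cdot\Pr^{M,\unif(\MA)}[x_{h+2}=x\mid x_{h+1}=y]$ by $\xi$, using only that the conditional probability is at most $1$. This additive $\xi$ does not scale with the target state $x$: if $\|\mu_{h+2}(x)\|_1$ is tiny (say $10^{-100}$), then $d_{h+2}^{M,\pi}(x)\le 10^{-100}\ll\xi$ for \emph{every} policy $\pi$, your bound is negative, and no case split can rescue it. Your sketched ``small-probability'' argument does not work: reachability at step $h+2$ tells you only that some policy reaches $x$ with probability $\ge\eta\|\mu_{h+2}(x)\|_1$, which is itself tiny; it gives you no single predecessor $y^\st\notin\MB$ with usable mass, because the mass reaching $x$ may be spread over many predecessors, each contributing far less than $\xi$.

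The fix the paper uses is short and removes the case split entirely. First, take $\pi$ to be the \emph{maximizer} of $d_{h+2}^{M,\pi}(x)$ (not an arbitrary policy; this is all the definition of a policy cover requires). Second, use the sharper bound $\Pr^{M,\unif(\MA)}[x_{h+2}=x\mid x_{h+1}=y]=\langle\phiavg_{h+1}(y),\mu_{h+2}(x)\rangle\le\|\mu_{h+2}(x)\|_1$, so the $\MB$ contribution is at most $\xi\cdot\|\mu_{h+2}(x)\|_1$ rather than $\xi$. Now $\eta$-reachability applied to the maximizer gives $\xi\cdot\|\mu_{h+2}(x)\|_1\le(\xi/\eta)\,d_{h+2}^{M,\pi}(x)\le\frac{1}{2A}d_{h+2}^{M,\pi}(x)$, which is absorbed directly into the left-hand side. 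No regime separation is needed, and the final $\alpha$ is $\xi^2/(8\Cnrm^2 A)$.
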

\begin{proof}[Proof of \cref{lem:idealpc-pc}]
  For any state $x' \in \MX$, let $\pi \in \Pi$ be the policy that maximizes $d^{M,\pi}_{h+2}(x')$. We have that
\begin{align*}
\frac{1}{A} d^{M,\pi}_{h+2}(x')
&\leq d^{M,\pi \circ_{h+1} \unif(\MA)}_{h+2}(x') \\
&= \E^{M,\pi \circ_{h+1}\unif(\MA)} \langle \phi_{h+1}(x_{h+1},a_{h+1}), \mu_{h+2}(x')\rangle \\
&= \sum_{x \in \MX} d^{M,\pi}_{h+1}(x) \cdot \langle \phiavg_{h+1}(x,a), \mu_{h+2}(x')\rangle \\
&= \sum_{x \in \MB} d^{M,\pi}_{h+1}(x) \cdot \langle \phiavg_{h+1}(x), \mu_{h+2}(x')\rangle + \sum_{x \in \MX\setminus\MB} d^{M,\pi}_{h+1}(x) \cdot \langle \phiavg_{h+1}(x), \mu_{h+2}(x')\rangle.
\end{align*}

By the first claim of \cref{lem:idealpc-guarantees}, the first term is at most $\xi \cdot \sup_{x \in \MB} |\langle\phiavg_{h+1}(x),\mu_{h+2}(x')\rangle| \leq \xi \cdot \norm{\mu_{h+2}(x')}_1$. By the second claim of the lemma and the fact that $|\Psiapx_{h+1}| \leq 2\Cnrm/\xi$, we can bound the second term as
\begin{align} 
\sum_{x \in \MX\setminus\MB} d^{M,\pi}_{h+1}(x) \cdot \langle \phiavg_{h+1}(x), \mu_{h+2}(x')\rangle 
&\leq \frac{4\Cnrm^2}{\xi^2} \cdot \frac{1}{|\Psiapx_{h+1}|}\sum_{\pi' \in\Psiapx_{h+1}} \sum_{x \in \MX\setminus\MB} d^{M,\pi'}_{h+1}(x) \cdot \langle \phiavg_{h+1}(x), \mu_{h+2}(x')\rangle \label{eq:cover-application-overview}\\
&\leq \frac{4\Cnrm^2}{\xi^2} \cdot \frac{1}{|\Psiapx_{h+1}|}\sum_{\pi' \in\Psiapx_{h+1}} \sum_{x \in \MX} d^{M,\pi'}_{h+1}(x) \cdot \langle \phiavg_{h+1}(x), \mu_{h+2}(x')\rangle \nonumber\\
&= \frac{4\Cnrm^2}{\xi^2} \cdot \frac{1}{|\Psiapx_{h+1}|}\sum_{\pi' \in\Psiapx_{h+1}} d^{M,\pi' \circ_{h+1} \unif(\MA)}_{h+2}(x')\nonumber.
\end{align}
Thus, summing the two bounds and using the definition of $\Psi_{h+2}$, we get that
\begin{align} \frac{1}{A} d^{M,\pi}_{h+2}(x') \leq \xi \cdot \norm{\mu_{h+2}(x')}_1 + \frac{4\Cnrm^2}{\xi^2} \cdot \frac{1}{|\Psi_{h+2}|} \sum_{\pi' \in\Psi_{h+2}} d^{M,\pi'}_{h+2}(x').\label{eq:h2-pc-intro}\end{align}
So long as $\xi \leq \eta/(2A)$, we get from $\eta$-reachability that $\xi\cdot\norm{\mu_{h+2}(x')}_1 \leq \frac{1}{2A} d^{M,\pi}_{h+2}(x')$. Thus,
\[d^{M,\pi}_{h+2}(x') \leq \frac{8\Cnrm^2 A}{\xi^2} \cdot \frac{1}{|\Psi_{h+2}|} \sum_{\pi' \in \Psi_{h+2}} d^{M,\pi'}_{h+2}(x'),\]
so $\Psi_{h+2}$ is an $\alpha$-policy cover (\cref{defn:pc}) with $\alpha = 8\Cnrm^2 A/\xi^2$.
\end{proof}

\subsection{Making the construction efficient}
\label{sec:greedycover-overview}
There are two glaring issues with the construction of $\Psiapx_{h+1},\Psi_{h+2}$ described above. First, it requires iterating over all states $x \in \MX$. Second, it requires access to the features $\mu_{h+1}(x)$, which are unknown to the algorithm. To overcome these issues, we implement an analogue of \IdealPC that works with a smaller set of \emph{simulated} feature vectors. In particular, \cref{alg:pc} gives a computationally-efficient and sample-efficient algorithm \PC that takes as input a set of $m$ vectors $\hat\mu_{h+1}^1,\dots,\hat\mu_{h+1}^m \in \RR^d$, which are to be interpreted as a sort of simulation of the set of true feature vectors $\{ \mu_{h+1}(x) \ : \ x \in \MX \}$ at step $h+1$.

\algblockdefx{MRepeat}{EndRepeat}{\textbf{repeat}}{}
\algnotext{EndRepeat}
\begin{algorithm}[t]
	\caption{$\PC(h, (\hat \mu_{h+1})_{j=1}^m, \xi, \Cemp, \Psi_{1:h}, N)$: Policy Cover Construction}
	\label{alg:pc}
	\begin{algorithmic}[1]\onehalfspacing
 \vspace{1em}
		\Require~ 
  \begin{minipage}{\dimexpr\textwidth-3cm}
  Step $h \in [H]$; target rewards $\hat\mu_{h+1}^1,\dots,\hat\mu_{h+1}^m$; tolerance $\xi > 0$; empirical norm parameter $\Cemp \in \RR$; policy covers $\Psi_1, \ldots, \Psi_{h}$; sample complexity $N$ (per call to \PSDP and \FE).
  \end{minipage}\vspace{1em}
		\State Initialize $\Psiapx_{h+1} = \emptyset$ and $\MB_1 =[m]$, and $t = 1$.
        \MRepeat
            \State Set $\mures{t} \gets \sum_{j \in \MB_t} \hat \mu_{h+1}^j$.
            \State Choose $\pi^t \gets \PSDP(h, \mures{t}, \Psi_{1:h-1}, N)$. \label{line:pc-psdp}\Comment{\emph{See \cref{alg:psdp} for \PSDP.}}
            \State Set $\hat \phi_h^t \gets \FE(h, \pi^t, N)$. \label{line:pc-fe} \Comment{\emph{See \cref{alg:fe} for \FE.}}
            \State \textbf{Break} if $\langle \hat{\phi}_h^t, \mures{t}\rangle < \xi$. Otherwise, add $\pi^t$ to $\Psiapx_{h+1}$.\label{alg:pc:line:break}
            \State Set $\MG_t \gets \left\{ j \in \MB_t \ : \ \lng \hat \phi_h^{t}, \hat \mu_{h+1}^j \rng \geq (\xi/(2\Cemp)) \cdot \| \hat \mu_{h+1}^j \|_1  \right\}$.\label{alg:pc:line:jidef}
            \State Set $\MB_{t+1} \gets \MB_t \setminus \MG_t$ and $t \gets t+1$. 
        \EndRepeat
        \State Set $\pifinal \gets \pi^t$. 
		\State \textbf{Return:} Intermediate policy cover $\Psiapx_{h+1}$, the sets $\MB := \MB_t$, $\MG := [m] \setminus \MB_t$, and final policy $\pifinal$. 
	\end{algorithmic}
\end{algorithm}

Just like \IdealPC, this algorithm maintains a set of ``uncovered'' vectors indexed by $\MB_t \subseteq [m]$, and repeatedly computes a policy $\pi^t$ (\cref{line:pc-psdp}) that approximately optimizes in the direction $\mures{t} = \sum_{j \in \MB_t} \hat\mu^j_{h+1}$, this time using the efficient \PSDP algorithm (\cref{alg:psdp}). In \cref{line:pc-fe}, it uses the efficient \FE algorithm (\cref{alg:fe}) to compute an estimate $\hat \phi_h^t$ of the expected feature vector $\E\sups{M, \pi^t}[\phi_h(x_h, a_h)]$ at step $h$ under $\pi^t$. The estimate $\hat \phi_h^t$ is in turn used to add indices to the set $\MG_t$ of explored vectors (\cref{alg:pc:line:jidef}). Apart from statistical errors resulting from the use of \PSDP and \FE,  %
 the guarantees of \PC are analogous to those of \IdealPC. In particular, we have the following:

\begin{lemma}[Informal version of \cref{lem:emp-pc-guarantee-rch}]
  \label{lem:emp-pc-guarantee-rch-informal}
 Suppose that $\sum_{j=1}^m \norm{\hat\mu^j_{h+1}}_1 \leq \Cemp$. Then, with high probability, the output $(\Psiapx_{h+1},\MB,\pifinal)$ of \PC satisfies $|\Psiapx_{h+1}| \leq 2\Cemp/\xi$, as well as  the following conditions for all $\pi \in \Pi$:  %
\begin{enumerate}
    \item $\sum_{j \in \MB} \left\langle \E^{M,\pi}[\phi_h(x_h,a_h)], \hat\mu_{h+1}^j\right\rangle \leq 3\xi/2$.
    \item For all $j \in [m]\setminus \MB$, there is some $\pi' \in \Psiapx_{h+1}$ so that $\lng\E^{\pi'}[\phi_h(x_h, a_h)], \hat \mu_{h+1}^j \rng \geq \frac{\xi}{4 \Cemp} \cdot \lng \E^\pi[\phi_h(x_h, a_h)], \hat \mu_{h+1}^j \rng$. 
    \end{enumerate}
  \end{lemma}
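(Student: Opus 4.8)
The plan is to mirror the analysis of \IdealPC (\cref{lem:idealpc-guarantees}), tracking carefully how the statistical errors from \PSDP and \FE propagate. The key realization is that \PC is running \IdealPC, but with three sources of inexactness: (i) the policy $\pi^t$ returned by \PSDP only \emph{approximately} maximizes $\E^{M,\pi}\langle\phi_h(x_h,a_h),\mures{t}\rangle$, rather than exactly; (ii) the quantity $\langle\hat\phi_h^t,\mures{t}\rangle$ used in the \textbf{Break} condition is an estimate of $\E^{M,\pi^t}\langle\phi_h(x_h,a_h),\mures{t}\rangle$ produced by \FE, not the true value; and (iii) the same estimate $\hat\phi_h^t$ is used in the definition of $\MG_t$. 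First I would invoke the guarantees of \PSDP (from the analogue of \cref{lem:psdp-rew-overview}) and \FE to say that, with high probability and for sample complexity $N$ large enough (polynomially in $\Cnrm, A, H, \log(md/\delta)$ and $1/\xi$), we have $\|\hat\phi_h^t - \E^{M,\pi^t}[\phi_h(x_h,a_h)]\|_\infty$ small — say at most some $\varepsilon_0$ — and $\E^{M,\pi^t}\langle\phi_h(x_h,a_h),\mures{t}\rangle \geq \max_{\pi}\E^{M,\pi}\langle\phi_h(x_h,a_h),\mures{t}\rangle - \varepsilon_1$, for $\varepsilon_0,\varepsilon_1$ chosen as small polynomial fractions of $\xi/\Cemp$. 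A union bound over the at most $2\Cemp/\xi$ iterations (which I establish as part of the argument) controls the total failure probability; since the error guarantees are only logarithmic in the number of events, this is harmless.

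Next, for the cardinality bound and Claim 1: on each iteration $t$ where $\pi^t$ is added, the \textbf{Break} condition failing gives $\langle\hat\phi_h^t,\mures{t}\rangle \geq \xi$, hence $\E^{M,\pi^t}\langle\phi_h(x_h,a_h),\mures{t}\rangle \geq \xi - \varepsilon_0\|\mures{t}\|_1 \geq \xi - \varepsilon_0\Cemp \geq \xi/2$ if $\varepsilon_0 \leq \xi/(2\Cemp)$. On the other hand, the portion of this probability mass landing in $\MB_t \setminus \MG_t$ is small: each $j \in \MB_t\setminus\MG_t$ has $\langle\hat\phi_h^t,\hat\mu_{h+1}^j\rangle < (\xi/(2\Cemp))\|\hat\mu_{h+1}^j\|_1$, so summing over $j$ and using $\sum_j\|\hat\mu_{h+1}^j\|_1 \leq \Cemp$ gives $\sum_{j\in\MB_t\setminus\MG_t}\langle\hat\phi_h^t,\hat\mu_{h+1}^j\rangle < \xi/2$, hence $\sum_{j\in\MB_t\setminus\MG_t}\langle\E^{M,\pi^t}[\phi_h(x_h,a_h)],\hat\mu_{h+1}^j\rangle < \xi/2 + \varepsilon_0\Cemp \leq \xi$ (absorbing into constants). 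Subtracting, $\sum_{j\in\MG_t}\langle\E^{M,\pi^t}[\phi_h(x_h,a_h)],\hat\mu_{h+1}^j\rangle \gtrsim \xi$, and since this is at most $\sum_{j\in\MG_t}\|\hat\mu_{h+1}^j\|_1$ (as $\|\E^{M,\pi^t}[\phi_h]\|_\infty\leq 1$) and the $\MG_t$ are disjoint with total $\ell_1$-mass $\leq\Cemp$, we conclude $t \leq O(\Cemp/\xi)$ — tightening constants as needed to hit exactly $2\Cemp/\xi$ by choosing the error slack small enough. For Claim 1, when \PC breaks at iteration $t$, $\pi^t$ is (approximately) optimal for direction $\mures{t}$, so for \emph{any} $\pi\in\Pi$ we have $\sum_{j\in\MB}\langle\E^{M,\pi}[\phi_h(x_h,a_h)],\hat\mu_{h+1}^j\rangle = \E^{M,\pi}\langle\phi_h(x_h,a_h),\mures{t}\rangle \leq \E^{M,\pi^t}\langle\phi_h(x_h,a_h),\mures{t}\rangle + \varepsilon_1 \leq \langle\hat\phi_h^t,\mures{t}\rangle + \varepsilon_0\Cemp + \varepsilon_1 < \xi + \xi/2 = 3\xi/2$, using the \textbf{Break} inequality $\langle\hat\phi_h^t,\mures{t}\rangle < \xi$ and choosing $\varepsilon_0\Cemp + \varepsilon_1 \leq \xi/2$.

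Finally, for Claim 2: if $j \in [m]\setminus\MB$, then $j \in \MG_t$ for some iteration $t$ where $\pi^t$ was added to $\Psiapx_{h+1}$, so $\langle\hat\phi_h^t,\hat\mu_{h+1}^j\rangle \geq (\xi/(2\Cemp))\|\hat\mu_{h+1}^j\|_1$. Translating to the true expected feature vector, $\langle\E^{M,\pi^t}[\phi_h(x_h,a_h)],\hat\mu_{h+1}^j\rangle \geq (\xi/(2\Cemp))\|\hat\mu_{h+1}^j\|_1 - \varepsilon_0\|\hat\mu_{h+1}^j\|_1 \geq (\xi/(4\Cemp))\|\hat\mu_{h+1}^j\|_1$ provided $\varepsilon_0 \leq \xi/(4\Cemp)$. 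On the other side, for arbitrary $\pi\in\Pi$, $\langle\E^{M,\pi}[\phi_h(x_h,a_h)],\hat\mu_{h+1}^j\rangle \leq \|\hat\mu_{h+1}^j\|_1$, so taking $\pi' = \pi^t \in \Psiapx_{h+1}$ yields $\langle\E^{M,\pi'}[\phi_h(x_h,a_h)],\hat\mu_{h+1}^j\rangle \geq (\xi/(4\Cemp))\langle\E^{M,\pi}[\phi_h(x_h,a_h)],\hat\mu_{h+1}^j\rangle$, which is Claim 2.

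The main obstacle I anticipate is bookkeeping the error budget across all three sources simultaneously while keeping the final constants exactly as stated ($3\xi/2$, $\xi/(4\Cemp)$, $2\Cemp/\xi$): one must pick the sample complexity $N$ large enough that $\varepsilon_0$ and $\varepsilon_1$ are each below the \emph{tightest} of the several thresholds above (roughly $\min\{\xi/(4\Cemp), \xi/(4\Cemp)\cdot(1/(\text{iteration count}))\}$ — but since the per-iteration error is only $\log N$-dependent, one $N = \poly(\Cemp/\xi, A, H, \log(md/\delta))$ suffices for all of them), and then verify each inequality goes through with room to spare. The only genuinely delicate point is that the $\MB$ in Claim 1 is the \emph{final} $\MB_t$ at the break, and $\mures{t} = \sum_{j\in\MB}\hat\mu_{h+1}^j$ for that same $t$, so the optimality-of-$\pi^t$ step and the break-condition step refer to a common direction; this is automatic from the algorithm's structure but worth stating explicitly. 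None of the steps require new ideas beyond those already present in the proof of \cref{lem:idealpc-guarantees}; the content is entirely in the propagation of \PSDP/\FE error, which is standard once their guarantees are invoked.
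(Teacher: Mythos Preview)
Your proposal is essentially the same as the paper's proof, with one small difference worth pointing out. The paper establishes the cardinality bound $|\Psiapx_{h+1}|\le 2\Cemp/\xi$ \emph{deterministically} by working with the estimates $\hat\phi_h^t$ throughout (using only that $\|\hat\phi_h^t\|_\infty\le 1$, which \FE guarantees regardless of statistical error): from $\langle\hat\phi_h^t,\mures{t}\rangle\ge\xi$ and $\langle\hat\phi_h^t,\sum_{j\in\MB_t\setminus\MG_t}\hat\mu_{h+1}^j\rangle<\xi/2$ one gets $\sum_{j\in\MG_t}\|\hat\mu_{h+1}^j\|_1\ge\langle\hat\phi_h^t,\sum_{j\in\MG_t}\hat\mu_{h+1}^j\rangle\ge\xi/2$ exactly. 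You instead translate to $\E^{M,\pi^t}[\phi_h(x_h,a_h)]$ first, which costs $\varepsilon_0\Cemp$ at each step and means you can only approach $2\Cemp/\xi$ as $\varepsilon_0\to 0$; it also creates a mild circularity (the union bound is over the iteration count, whose bound you derive using the \FE accuracy that the union bound is meant to secure). This is easily repaired by noting the deterministic version, and the rest of your argument for Claims~1 and~2 matches the paper's exactly.
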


  That is, for fixed $h$, given a set of $m$ target reward vectors $\hat\mu^j_{h+1}$ (for $j \in [m]$) with total $\ell_1$ norm at most $\Cemp$, we can construct a set of $2\Cemp/\xi$ policies that approximately optimize most of these rewards. At first glance, it may seem that there is a much simpler way of obtaining such a guarantee: optimize each vector $\hat \mu_{h+1}^j$ independently. %
  However, the fact that \PC yields only $2\Cemp/\xi$ polices, as opposed to $m$ policies, as the simpler approach would ensure, %
  will prove crucial in the downstream analysis (we will have $m \gg 2\Cemp/\xi$).

Of course, the challenge remains of finding a small set of vectors $\{\hat\mu_{h+1}^1,\dots,\hat\mu_{h+1}^m\}$ that somehow ``simulates'' the MDP $M$ sufficiently well for the guarantee of \cref{lem:emp-pc-guarantee-rch-informal} to be useful. However, the insights from the inefficient construction in the previous section \--- in particular, using $\Psi_{h+1}$ to construct a policy cover for step $h+2$, rather than directly trying to cover step $h+1$\--- provide a blueprint for the properties that we need $\{\hat\mu^1_{h+1},\dots,\hat\mu^m_{h+1}\}$ to satisfy.

\paragraph{An \coreset for the MDP.} We wish to show that the guarantee of \cref{lem:emp-pc-guarantee-rch-informal} implies that the output of \PC yields a policy cover for $M$, in a similar way that \cref{lem:idealpc-pc} uses \cref{lem:idealpc-guarantees} to guarantee that the output of \IdealPC yields a policy cover of $M$.
The key step in the proof of \cref{lem:idealpc-pc} (see \cref{eq:cover-application-overview}) uses the guarantee of \IdealPC to obtain that, for any $\pi \in \Pi$ and $x' \in \MX$, we have
\begin{align}
  &  \sum_{x \in \MX} \lng \E\sups{M, \pi}[\phi_h(x_h, a_h)], \mu_{h+1}(x)\rng \cdot \lng \phiavg_{h+1}(x), \mu_{h+2}(x') \rng\nonumber\\ & \lesssim \frac{4\Cnrm^2}{\xi^2} \cdot \E_{\pi' \sim \unif(\Psiapx_{h+1})} \sum_{x \in \MX} \lng \E\sups{M, \pi'}[\phi_h(x_h, a_h)], \mu_{h+1}(x)\rng \cdot \lng \phiavg_{h+1}(x), \mu_{h+2}(x') \rng \label{eq:cover-main-step},
\end{align}
where the $\lesssim$ ignores additive error terms arising from the ``bad'' set $\MB$. 
For context, we note that the left-hand side is bounded below by $\frac 1A \cdot d_{h+2}\sups{M, \pi}(x')$, and the right-hand side is bounded above by $\frac{4\Cnrm^2}{\xi^2} \E_{\pi' \sim \unif(\Psiapx_{h+1})} d_{h+2}\sups{M, \pi' \circ_{h+1} \unif(\MA)}(x')$, so \cref{eq:cover-main-step} establishes that, up to error terms, we obtain the desired cover property. 

If we instead use \PC, we cannot obtain a guarantee in terms of the true vectors $\hat \mu_{h+1}(x)$; instead, we can show that
for any states $\tilde{x}^1,\dots,\tilde{x}^m \in \MX$, %
\begin{align}
&\sum_{j \in [m]} \langle\E^{M,\pi}[\phi_h(x_h,a_h)],\hat\mu_{h+1}^j\rangle \cdot \langle\phiavg_{h+1}(\tilde{x}^j),\mu_{h+2}(x')\rangle \nonumber\\
&\qquad\lesssim \frac{8\Cnrm^2}{\xi^2} \cdot \E_{\pi' \sim \unif(\Psiapx_{h+1})}\sum_{j \in [m]} \langle\E^{M,\pi'}[\phi_h(x_h,a_h)],\hat\mu_{h+1}^j\rangle \cdot \langle\phiavg_{h+1}(\tilde{x}^j),\mu_{h+2}(x')\rangle.\label{eq:greedycover-result-intro}
\end{align}

Our goal is to choose the feature vectors $\hat \mu_{h+1}^j$, as well as the states $\tilde x^j$, so that \emph{validity of \cref{eq:cover-main-step} for any policy $\pi$ is {equivalent} to validity of \cref{eq:greedycover-result-intro} for $\pi$}, up to small error terms. 
This will be the case if the vectors $\{\hat\mu_{h+1}^1,\dots,\hat\mu_{h+1}^m\}$ and states $\{\tilde{x}^1,\dots,\tilde{x}^m\}$ are chosen so as to satisfy that for all policies $\pi \in \Pi$, the following discrepancy is small:

\[ \norm{\sum_{x \in \MX} \langle \EE\sups{M, \pi}[\phi_h(x_h,a_h)], \mu_{h+1}(x)\rangle \cdot \phiavg_{h+1}(x) - \sum_{j = 1}^m \langle \EE\sups{M, \pi}[\phi_h(x_h,a_h)], \hat\mu_{h+1}^j\rangle \cdot \phiavg_{h+1}(\tilde{x}^j)}_\infty.\]

Besides this we need two other properties: first, to even apply \PC we need a bound on $\sum_{j\in[m]} \norm{\hat\mu_{h+1}^j}_1$. Second, at various points we need the inner products $\langle \E\sups{M, \pi}[\phi_h(x_h,a_h)], \hat\mu_{h+1}^j\rangle$ to be approximately non-negative, for certain policies $\pi$. Thus, we are motivated to make the following definition.

\begin{definition}[Emulator for reachable $\ell_1$-bounded linear MDP]\label{def:not-core-set}
    Fix $h \in [H]$. For any $m \in \NN$ and $\epapx, \epneg, C > 0$, a set of vectors $(\hat\mu_{h+1}^j)_{j=1}^m \subset \RR^d$ is a \emph{$(\epapx, \epneg, C)$-\coreset} for the MDP $M$ at step $h$ if the following conditions hold:
\begin{enumerate}
    \item\label{it:emulator-C-bound} $\sum_{j=1}^m \norm{\hat\mu_{h+1}^j}_1 \leq C$.
    \item\label{it:emulator-epneg-bound} For any policy $\pi \in \Pi$ and $j \in [m]$,
    \[ \langle \EE\sups{M, \pi}[\phi_h(x_h,a_h)], \hat\mu_{h+1}^j\rangle \geq -\epneg\norm{\hat\mu_{h+1}^j}_1.\]
    \item\label{it:emulator-epapx-bound} There are states $(\tilde x^j)_{j=1}^m \subseteq \MX$ so that for any policy $\pi \in \Pi$,
    \[ \norm{\sum_{x \in \MX} \langle \EE\sups{M, \pi}[\phi_h(x_h,a_h)], \mu_{h+1}(x)\rangle \cdot \phiavg_{h+1}(x) - \sum_{j = 1}^m \langle \EE\sups{M, \pi}[\phi_h(x_h,a_h)], \hat\mu_{h+1}^j\rangle \cdot \phiavg_{h+1}(\tilde{x}^j)}_\infty \leq \epapx.\]
\end{enumerate}
\end{definition}

Our main result about \PC is that this definition encapsulates the needed properties: if $\{\hat\mu_{h+1}^1,\dots,\hat\mu_{h+1}^m\}$ is an \coreset with appropriately small error parameters (relative to the reachability parameter $\eta$), then the output $\Psiapx_{h+1}$ satisfies that $\Psi_{h+2} := \{\pi\circ_{h+1}\unif(\MA): \pi \in\Psiapx_{h+1}\}$ is an approximate policy cover for step $h+2$. 
See \cref{lemma:pc-coreset-guarantee} for the formal statement.

\subsection{A convex program for finding an \coreset}
\label{sec:emulator-overview}
Notice that there trivially exists a $(0,0,\Cnrm)$-\coreset of size $|\MX|$: namely, the set of vectors $\{\mu_{h+1}(x): x \in \MX\}$. The key question is whether there is a \emph{small} \coreset, and how to construct it efficiently. We answer both of these questions together. In \cref{alg:esc}, we give a convex program that takes as input a policy cover $\Psi_h$ for step $h$ as well as parameters $n,m \in \BN$, uses the cover to draw trajectory data $\MC_h = (x_h^i,a_h^i,x_{h+1}^i)_{i=1}^n$ and $\MD_h = (\tilde{x}_{h+1}^j)_{j=1}^m$, and then solves a convex program to produce an \coreset of size $m$ at step $h$, with one vector for each element of $\MD_h$. There are two criteria to check: feasibility of the program, and correctness of any feasible solution.

\begin{algorithm}[t]
	\caption{$\ESC(h, \Psi_h, \epcvx, \Cnrm,n,m)$}
	\label{alg:esc}
	\begin{algorithmic}[1]\onehalfspacing
		\Require~ Step $h \in [H]$; policy cover $\Psi_h$; tolerance $\epcvx>0$; norm parameter $\Cnrm > 0$; ``transition'' dataset size $n$; ``state'' dataset size $m$
        \State $\MC_h,\MD_h \gets \DrawData(h,\Psi_h,n,m)$ \Comment{\cref{alg:dd}}
  \LineComment{{We write $\MC_h = \{ (x_h^i, a_h^i, x_{h+1}^i) \}_{i=1}^n$, and $\MD_h = \{ \tilde x_{h+1}^j \}_{j=1}^m$. }}        
        \State For each $\ell \in [d]$, solve $\ell_1$-constrained regression
        \begin{equation} \hat \bw_{\ell} := \argmin_{w \in \RR^d: \norm{w}_1 \leq \Cnrm} \sum_{i=1}^n \left(\langle \phi_h(x_h^i, a_h^i), w\rangle - \phiavg_{h+1}(x_{h+1}^i)_\ell\right)^2
        \label{eq:wj-guarantee}
        \end{equation}
        \State Find $\hat\mu_{h+1}^1,\dots,\hat\mu_{h+1}^m \in \RR^d$ satisfying the following convex program \cref{eq:program}:\footnotemark
        
        \begin{subequations}
        \label{eq:program}
        \begin{align}
        \sum_{j=1}^m \norm{\hat\mu_{h+1}^j}_1 &\leq \Cnrm & \label{eq:hatmu-cnrm-constraint}\\
        \langle \phi_h(x_h^i,a_h^i), \hat\mu_{h+1}^j\rangle &\geq 0 &\forall i \in [n], j \in [m] \label{eq:hatmu-nneg-constraint} \\
        \frac{1}{n} \sum_{i=1}^n \left(\langle \phi_h(x_h^i,a_h^i), \hat \bw_\ell\rangle - \sum_{j=1}^m \left\langle \phi_h(x_h^i,a_h^i), \hat\mu_{h+1}^j \right \rangle \phiavg_{h+1}(\tilde x_{h+1}^j)_\ell \right)^2 &\leq \epcvx^2 &\forall \ell \in [d] \label{eq:what-wprime-constraint}
        \end{align}
        \end{subequations}
		\State \textbf{Return:} $(\hat\mu_{h+1}^j)_{j=1}^m$ if \cref{eq:program} is feasible, otherwise $\perp$
	\end{algorithmic}
\end{algorithm}

\footnotetext{For simplicity, we assume in the main body of the paper that it is possible to find an exact solution to the convex program (whenever it is feasible) in polynomial time. However, solving it approximately suffices; we postpone these optimization details to \cref{sec:opt-details}.}

\paragraph{Correctness.} Correctness depends on choosing the datasets $\MC_h, \MD_h$ appropriately. We show (\cref{thm:muhat-coreset}) that so long as $n,m \gg \poly(\Cnrm,A,H,\alpha^{-1},\epneg^{-1},\epapx^{-1})$ (up to logarithmic factors) and $\MC_h,\MD_h$ are drawn from $\pi \circ_h \unif(\MA)$ for a uniformly random policy $\pi \sim \Unif(\Psi_h)$, then with high probability the output of \ESC is an $(\epapx,\epneg,\Cnrm)$-emulator at step $h$. In particular, constraint \cref{eq:hatmu-cnrm-constraint} of \cref{alg:esc} immediately implies that \cref{it:emulator-C-bound} of \cref{def:not-core-set} is satisfied. By a concentration argument, constraint \cref{eq:hatmu-nneg-constraint} implies that \cref{it:emulator-epneg-bound} is satisfied for policies in the policy cover $\Psi_h$. A rejection sampling argument, together with the coverage property of $\Psi_h$, establishes that this guarantee can be extended to all policies. Finally, constraint \cref{eq:what-wprime-constraint} together with generalization bounds implies that \cref{it:emulator-epapx-bound} holds for policies in $\Psi_h$, which again can be extended to all policies using the fact that $\Psi_h$ is an $\alpha$-approximate policy cover at step $h$.

\paragraph{Feasibility.} In \cref{lem:randomize-no-trunc,lem:feasibility}, we give a constructive proof of feasibility of the convex program. For each state $\tilde{x}_{h+1}^j$ in $\MD_h$, we let $\psi_{h+1}^j$ be an importance-weighted rescaling of the true feature vector $\mu_{h+1}(\tilde{x}_{h+1}^j)$, so that if $\tilde{x}_{h+1}^j$ had low likelihood of being drawn under the distribution of $\MD_h$, then $\psi_{h+1}^j$ is scaled up, and vice versa. As a result of the importance-weighting, for any $(x,a)\in\MX\times\MA$ and function $g: \MX\to\RR$, the vector $\psi_{h+1}^j$ yields an unbiased estimate $\langle \phi_h(x,a), \psi_{h+1}^j\rangle \cdot g(\tilde{x}_{h+1}^j)$ of the population mean $\E_{x'\sim\BP_h(x,a)}[g(x')]$. Moreover, the coverage property of $\Psi_h$ and the reachability assumption on the MDP imply that none of the vectors $\psi_{h+1}^j$ are too large, so empirical averages concentrate. This is the key to showing that $(\psi_{h+1}^j)_{j=1}^m$ satisfy \cref{eq:what-wprime-constraint} with high probability; the other constraints hold by similar (or simpler) arguments.

\paragraph{Putting it all together.} Summarizing, once we have shown that the convex program in \cref{alg:esc} is feasible and that any solution to it yields an emulator at any given step, we can then pass this emulator to \PC (\cref{alg:pc}), which constructs a policy cover at this step. Our final algorithm \SLM (\cref{alg:slm}) simply repeats this procedure for each step $h \in [H]$. The complete analysis of \SLM is given in \cref{sec:policy-cover-reachable}.

\begin{algorithm}[t]
    \caption{$\DrawData(h,\Psi_h,n,m)$}
    \label{alg:dd}
    \begin{algorithmic}[1]\onehalfspacing
            \State $\MC_h, \MD_h \gets \emptyset$
            \For{$1 \leq i \leq n$}
                \State Sample $(x_1^i,a_1^i,\dots,x_h^i,a_h^i ,x_{h+1}^i) \sim \unif(\Psi_h) \circ_h \unif(\MA)$
                \State Update dataset: $\MC_h \gets \MC_h \cup \{(x_h^i,a_h^i,x_{h+1}^i)\}$
            \EndFor
            \For{$1 \leq j \leq m$}
                \State Sample $(\tilde{x}_1^j,\tilde{a}_1^j,\dots,\tilde x_h^j, \tilde a_h^j, \tilde x_{h+1}^j) \sim \unif(\Psi_h) \circ_h \unif(\MA)$
                \State Update dataset: $\MD_h \gets \MD_h \cup \{\tilde x_{h+1}^j\}$
            \EndFor
            \State \textbf{Return:} $\MC_h$, $\MD_h$
    \end{algorithmic}
\end{algorithm}

\begin{algorithm}[t]
	\caption{$\SLM(\delta)$: Explore Reachable $\ell_1$-Bounded Linear MDP}
	\label{alg:slm}
	\begin{algorithmic}[1]\onehalfspacing
		\Require Failure probability $\delta\in(0,1)$.
		\State Let $\pi_\unif = \unif(\MA) \circ \unif(\MA) \circ \dots \circ \unif(\MA)$ be the uniform policy
        \State Set $\Psi_1 = \Psi_2 = \{\pi_\unif\}$
        \State \label{line:xi-reachable} $\xi \gets \etarch/(9A)$
        \State $\epapx \gets \xi^2 \etarch /(54\Cnrm^2 A)$ and $\epneg \gets \xi^2 \etarch/(102\Cnrm^3 A)$ and $\alpha \gets \xi^2/(16\Cnrm^2 A)$ \label{line:eps-reachable}
        \State \label{line:N-reachable} $N \gets \max(\nstatrch(\xi/(4\Cnrm),\alpha,\delta\xi/(12H\Cnrm)), \nfe(\xi/(4\Cnrm), \delta\xi/(12H\Cnrm))$
        \State \label{line:nm-reachable} Set $n,m$ as
        \[n \gets C_{\ref{thm:muhat-coreset}}\max\left(\frac{\Cnrm^2 A H^3 \log(8AH\Cnrm/\epneg)\log(24dH/\delta)}{\alpha \epneg^4}, \frac{\Cnrm^4 A^2 \log(12dH/\delta)}{\alpha^2 \epapx^4}\right)\]
        \[m \gets C_{\ref{thm:muhat-coreset}} \frac{A^3\log(24dnH/\delta)}{\alpha^3\etarch\epapx^2}.\]
        \For{$1 \leq h \leq H-2$}

            \State $(\hat\mu_{h+1}^j)_{j=1}^m \gets \ESC(h,\Psi_h,\epapx\sqrt{\alpha/(4A)},\Cnrm,n,m)$\Comment{\cref{alg:esc}}\label{line:call-esc}
            \State $\Psiapx_{h+1} \gets \PC(h,(\hat\mu_{h+1}^j)_{j=1}^m,\xi,\Cnrm,\Psi_{1:h-1},N)$\Comment{\cref{alg:pc}}
            \State $\Psi_{h+2} \gets \{\pi \circ_{h+1} \unif(\MA): \pi \in \bar\Psi_{h+1}\}$
        \EndFor
        \State \textbf{Return:} $\Psi_{1:H}$
	\end{algorithmic}
\end{algorithm}

\section{Technical Overview II: beyond reachability}\label{section:trunc-overview}

We now drop the assumption that $M$ is $\eta$-reachable (\cref{defn:rch}). In the previous section, this assumption was crucially used in the final step of proving that $\Psi_{h+2}$ is an $\alpha$-policy cover: an empirical version of \cref{eq:h2-pc-intro} (which takes into account an approximation error $\ep$ between the emulator and the true feature vectors $\mu_{h+1}(x)$) states that for any policy $\pi$ and state $x$, the visitation probability $d^{M,\pi}_{h+2}(x)$ is upper bounded by $\poly(\Cnrm,\xi^{-1}) \cdot \E_{\pi' \sim \unif(\Psi_{h+2})} d^{M,\pi'}_{h+2}(x)$ plus some additive error term $O(\epsilon + \xi) \cdot \norm{\mu_{h+2}(x)}_1$. In the case that $M$ is reachable, we can use reachability to show that the error term can be removed (see the discussion after \cref{eq:h2-pc-intro}). But without reachability, if $\max_{\pi \in \Pi} d^{M,\pi}_{h+2}(x)$ is very small (relative to $\norm{\mu_{h+2}(x)}_1$), then it could be that $d^{M,\pi'}_{h+2}(x) = 0$ for all $\pi' \in \Psi_{h+2}$, in which case $\Psi_{h+2}$ is not an approximate policy cover in the sense of \cref{defn:pc}. 

To try to fix this analysis, one approach may be to essentially ignore the hard-to-reach states because they cannot contribute significant reward to any policy anyways. Relaxing the definition of policy cover, we could define an $(\alpha,\epsilon)$-policy cover $\Psi$ at step $h$ to be a set of policies satisfying the bound \[\E_{\pi'\sim\unif(\Psi)}\left[ d^{\pi'}_h(x)\right] \geq \alpha \cdot \max_{\pi\in\Pi} d^\pi_h(x) - \epsilon \cdot \norm{\mu_h(x)}_1\] for all $x \in \MX$. Then consider the relaxed inductive hypothesis that $\Psi_{1:h}$ are $(\alpha,\epsilon)$-policy covers at steps $1,\dots,h$. Unfortunately, this has an issue of compounding error: the best we could hope to show is that $\Psi_{h+2}$ is an $(\alpha,C\epsilon)$-policy cover for a constant $C>1$. As a result, by the final layer we will incur additive error exponential in the horizon $H$.

This is a common technical obstacle in the analysis of many reinforcement learning algorithms that explore layer by layer: a relatively simple analysis that works in the reachable setting breaks down in the general case. Fortunately, often the algorithm still provably works in the general case, and all that needs to change is the proof. The proof for the general case typically proceeds by analyzing the exploratory guarantees of the algorithm on input $M$, with respect to a \emph{truncated} MDP $\Mbar$ that is both statistically close to $M$ and also \emph{reachable}. In a learning algorithm based on constructing policy covers layer by layer, the inductive hypothesis is (roughly) that $\Psi_{1:h}$ are approximate policy covers for steps $1,\dots,h$ with respect to the truncated MDP $\Mbar$. 

\paragraph{A first attempt: same algorithm, different analysis.} For a general MDP $M$, a truncated MDP $\Mbar(\emptyset)$ can be obtained from $M$ by adding a ``terminal'' state $\term$ at every layer (we use the notation $\emptyset$ for reasons that will become clear later). We construct the transitions of $\Mbar(\emptyset)$  by ``truncating'' those of $M$ step-by-step, in the order of increasing $h$. At each step $h$, we will have an intermediate MDP $\Mbar_h(\emptyset)$ for which the transitions at steps $1$ through $h-1$ are truncated, and the transitions at step $h$ onwards are identical to those of $M$. We then let $\Xreach_{h+1}(\emptyset)$ denote the set of ``reachable'' states, i.e. the states for which $\max_{\pi\in\Pi} d\sups{\Mbar_h(\emptyset),\pi}_{h+1}(x) \geq \trunc \cdot \norm{\mu_{h+1}(x)}_1$ for some threshold $\trunc>0$. Whenever some state at layer $h$ is about to transition to a state \emph{outside} of $\Xreach_{h+1}(\emptyset)$, we redefine the corresponding transition in $\Mbar_{h+1}(\emptyset)$ to instead go to $\term$. Also, $\term$ always transitions to $\term$. We then let $\Mbar(\emptyset)$ denote the final truncated MDP in this process, i.e., $\Mbar(\emptyset) = \Mbar_H(\emptyset)$. %

By an inductive argument, one can show that $\Mbar(\emptyset)$ is both $\trunc$-reachable (except possibly at the state $\term$) and close to $M$ (since every policy transitions to $\term$ with probability at most $\trunc \cdot \poly(\Cnrm,H)$). Moreover, when $M$ is a linear MDP, one can define $\Mbar(\emptyset)$ to respect the linearity by augmenting the feature vectors with an extra dimension. We formally define $\Mbar(\emptyset)$ in \cref{sec:trunc-mdps}.

Now, we would like to inductively argue that the sets $\Psi_{h}$ computed by \SLM are ``truncated policy covers'' in the sense that a uniformly random policy from $\Psi_h$ covers each state in proportion to its maximum visitation probability  \emph{under the truncated MDP} $\Mbar(\emptyset)$, as defined formally in \cref{def:ih-trunc}. Since $\Mbar(\emptyset)$ is close to $M$, once we have constructed truncated policy covers at all steps $h \in [H]$, it can be shown that \PSDP finds a near-optimal policy (\cref{lem:psdp-rew}), just like in the reachable setting.

\begin{defn}[Truncated policy cover]
  \label{def:ih-trunc}
Consider $h \in [H]$ and $\Psi_h \subset \Pi$.  For $\alpha > 0$, we say that $\Psi_h$ is an \emph{$\alpha$-truncated policy cover} at step $h$ if 
  for all $x \in \MX$,  %
\begin{align}
\frac{1}{|\Psi_h|} \sum_{\pi' \in \Psi_h} d_h\sups{M, \pi'}(x) \geq \alpha \cdot \max_{\pi \in \Pi} d_h\sups{\bar M(\emptyset), \pi}(x)\label{eq:pc-trunc}.
\end{align}
\end{defn}

We will show in  \cref{lem:reachability-in-trunc} that the truncated MDP $\Mbar(\emptyset)$ is $\trunc$-reachable. Thus, a similar argument to that of the reachable case guarantees that we can avoid the aforementioned issue of an additive error term in the coverage inequality \cref{eq:pc-trunc}. 
However, since \cref{def:ih-trunc} is weaker than \cref{defn:pc}, it remains to check whether the inductive hypothesis that $\Psi_{1:h}$ are $\alpha$-\emph{truncated} policy covers (per \cref{def:ih-trunc}) for steps $1,\dots,h$ suffices for the rest of the proof.

Under the new inductive hypothesis, we can prove that the output of \ESCt (\cref{alg:esc-trunc}) is with high probability a \emph{truncated} emulator, as defined below. This definition is (roughly) the analogue of \cref{def:not-core-set} for the truncated MDP $\Mbar(\emptyset)$.

\begin{definition}[Weaker version of \cref{def:trunc-core}]\label{def:trunc-core-overview}
    Fix $h \in [H]$. For any $m \in \NN$ and $\epapx, \epneg, C > 0$, a set of vectors $(\hat\mu^j)_{j=1}^m$ is a \emph{$(\epapx, \epneg, C)$-\trunccore} for the MDP at step $h$ if the following conditions hold:
\begin{enumerate}
    \item\label{it:norm-bound-ov} $\sum_{j=1}^m \norm{\hat\mu^j}_1 \leq C$
    \item \label{it:approx-nonneg-ov} For any policy $\pi \in \Pi$ and $j \in [m]$,
      \begin{align} \EE\sups{\Mbar(\emptyset), \pi}\left[ \max \left\{ 0, \max_{a \in \MA}- \langle \phi_h(x_h,a), \hat\mu^j\rangle \right\} \right] &\leq   \epneg\norm{\hat\mu^j}_1.
        \end{align}
    \item \label{it:pol-approx-ov} There are states $(\tilde x^j)_{j=1}^m \subseteq \MX$ so that for any policy $\pi \in \Pi$,
      \begin{align}
                \norm{\sum_{x \in \MX} \langle \EE\sups{\Mbar(\emptyset), \pi}[\phi_h(x_h,a_h)], \mu_{h+1}(x)\rangle \cdot \phiavg_{h+1}(x) - \sum_{j = 1}^m \langle \EE\sups{\Mbar(\emptyset), \pi}[\phi_h(x_h,a_h)], \hat\mu^j\rangle \cdot \phiavg_{h+1}(\tilde{x}^j)}_\infty \leq & \epapx\nonumber.
      \end{align}
\end{enumerate}
\end{definition}

The next step would be to analyze \PC (\cref{alg:pc}), proving an analogue of \cref{lem:emp-pc-guarantee-rch} under the weaker assumption that the inputs $\Psi_{1:h}$ provided to \PC are only \emph{truncated} policy covers. Unfortunately, at this point we encounter an issue. Recall that the sets $\Psi_{1:h}$ are needed for the policy optimization algorithm \PSDP, which is invoked by \PC. Whereas true policy covers certify an upper bound on $\max_{\pi \in \Pi} d^{M,\pi}_h(x)$ (i.e., by \cref{eq:approx-pc}), truncated policy covers only certify an upper bound on $\max_{\pi \in \Pi} d^{\Mbar(\emptyset),\pi}_h(x)$. Hence, when \PSDP is passed truncated policy covers as input, we incur an additional error term, due to the discrepancy between $\Mbar(\emptyset)$ and $M$, in the suboptimality of the output policy. With a naive analysis, this error term scales with the probability that the optimal policy $\pi^\st$ (for the given reward vector) ever visits a ``truncated'' state $x \in \MX\setminus\Xreach_g(\emptyset)$ for all $g \leq h$, which can bounded by a multiple of the truncation threshold $\trunc$ that was used to define $\Mbar(\emptyset)$. Incurring this error at the last step of the algorithm, once $\Psi_{1:H}$ have been constructed and we are applying \PSDP with the environmental rewards, is fine. Indeed, this is what we do in \cref{lem:psdp-rew}. However, incurring this error in each induction step, while constructing the covers $\Psi_{1:H}$, would be problematic: completing the induction step and proving that we have found a policy cover for step $h+2$ requires that all additive errors are smaller than the reachability parameter of $\Mbar(\emptyset)$ \--- which is precisely $\trunc$!

The key to solving this issue is a refined analysis of \PSDP with truncated policy covers, which lets us implement a ``win-win'' argument. Above, we observed that \PSDP only fails if the optimal policy $\pi^\st$ visits the truncated states with non-trivial probability. However, since we do not know $\pi^\st$, this is not algorithmically useful. In \cref{lem:psdp-trunc} (with $\Gamma=\emptyset$), we show that the new error term in the suboptimality of the policy $\hat\pi$  produced by \PSDP can be bounded in terms of the probabilities that the policies $\Gamma^1 := \{\pi' \circ_{h-1} \unif(\MA) \circ_h \hat\pi: h \in [H]\}$ visit truncated states. Crucially, we know all of these policies. Thus, if \PSDP fails and hence the policies in $\Gamma^1$ visit truncated states with non-trivial probability, we can rerun the entire algorithm, constructing new policy covers at every step. Moreover, in this rerun of the algorithm, $\DrawDataTrunc$ and $\PSDP$ will draw additional data from $\unif(\Gamma^1)$. We can then define a ``less-truncated'' MDP $\Mbar(\Gamma^1)$ which does not truncate states that are well-covered by policies in $\Gamma^1$ (see \cref{sec:trunc-mdps}). In this way, we have made progress, since \PSDP will only fail if some policy visits states truncated by $\Mbar(\Gamma^1)$. %
Repeating this argument, we may have to define a larger set $\Gamma^2 \subset \Pi$ when analyzing the rerun of the algorithm, and so forth, but in each repetition where PSDP fails, we must discover new states. Thus, after a bounded number of repetitions, there is at least one repetition where we do not discover any new states, and thus PSDP will succeed at all steps; in particular, we will succeed in constructing all the truncated policy covers $\Psi_{1:H}$.

\paragraph{Full algorithm.}
The full algorithm (\SLMt; \cref{alg:slm-trunc}) proceeds in multiple \emph{phases} $1 \leq t \leq T$. In each phase $t$, the algorithm has a set $\Gamma^t \subset \Pi$ of policies, as discussed in the previous paragraph. %
Each phase proceeds in $H$ steps: at each step $h$, the algorithm has already constructed policy covers $\Psi_{1:h}^t$. First (\cref{line:trunc-emulator}), it constructs a truncated emulator at step $h$ (\cref{def:trunc-core}). Then (\cref{line:trunc-greedycover}) it passes this emulator to \PC to construct a set of policies that yields a truncated policy cover at step $h+2$ (\cref{line:odd-psi}). It then uses the output of \PC to append a collection of new policies to the set $\Gamma^t$ (\cref{line:append-gammat}), which must explore new states in the event that some call to \PSDP within \PC fails. We will be able to show that, at some phase $t$, all calls to \PSDP succeed, and thus the resulting collection $\Psi_{1:H}^t$ is a truncated policy cover at all steps $h \in [H]$. Rather than attempt to determine such a value of $t$ directly, \cref{alg:slm-trunc} simply returns all of the policy covers $\Psi_{1:H}^t$ (for $t \in [T]$).  \OPT (\cref{alg:opt}) will then concatenate these policy covers together (potentially losing a factor of $T$ in the coverage bound) and compute a near-optimal policy using \PSDPrew (\cref{alg:psdp-rew}).%

Finally, we mention that in the inductive steps over steps $h \in [H]$, \cref{alg:slm-trunc} only considers odd values of $h$: notice that the argument sketched above establishes that if we have policy covers at all steps up to step $h$, then we can efficiently construct one at step $h+2$. To yield policy covers at even steps, we simply append all policies from a cover at the previous step with a uniformly random action (\cref{line:even-psi}), at the cost of losing  a factor of $A$ in the coverage parameter. 

The formal guarantees of \SLMt are given in \cref{theorem:main-pc-trunc}, which is then used to analyze \OPT and prove \cref{thm:main-informal} in \cref{sec:learning}.

\section{Constructing a policy cover in a reachable LMDP}
\label{sec:policy-cover-reachable}

Throughout this section we assume that $M$ is an $\etarch$-reachable (\cref{defn:rch}), $\ell_1$-bounded $d$-dimensional linear MDP (\cref{defn:l1lmdp}) with bound $\Cnrm$, for known parameters $\etarch>0$ and $\Cnrm \geq 1$. The main result of the section is an analysis of \SLM (\cref{alg:slm}); we show that given interactive access to $M$ (as formalized in \cref{sec:prelim-linear}), the algorithm finds a policy cover for $M$ at all steps:

\begin{theorem}\label{theorem:main-pc}
  Let $\delta>0$. Then with probability at least $1-\delta$, the output $\Psi_{1:H}$ of $\SLM(\delta)$ (\cref{alg:slm}) satisfies that $\Psi_h$ is an $(\etarch^2 / (1296\Cnrm^2 A^3))$-approximate policy cover (\cref{defn:pc}) for all $h \in [H]$. Moreover, the sample complexity of $\SLM(\delta)$ is $\poly(\Cnrm,A,H,\etarch^{-1}, \allowbreak\log(d/\delta))$, and the time complexity is $\poly(d,\Cnrm,A,H,\etarch^{-1},\log(1/\delta))$.
\end{theorem}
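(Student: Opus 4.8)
The plan is to prove \cref{theorem:main-pc} by induction on the step index $h$, showing that with high probability every set $\Psi_h$ produced by $\SLM(\delta)$ is an $\alpha$-approximate policy cover at step $h$ in the sense of \cref{defn:pc}, where $\alpha := \xi^2/(16\Cnrm^2 A)$ is the value fixed in \cref{line:eps-reachable} of \cref{alg:slm}; since $\xi = \etarch/(9A)$, this equals $\etarch^2/(1296\Cnrm^2 A^3)$, matching the claimed bound. The base cases are immediate: for $h=1$ every policy induces the same law $\BP_1$ on $x_1$, so $\Psi_1=\{\pi_\unif\}$ is a $1$-cover; for $h=2$, writing $d_2^{M,\pi}(x)=\E_{x_1\sim\BP_1}\langle\phi_1(x_1,\pi_1(x_1)),\mu_2(x)\rangle$ and using that each term $\langle\phi_1(x_1,a),\mu_2(x)\rangle\ge 0$ is a transition probability, the uniform policy satisfies $d_2^{M,\pi_\unif}(x)\ge\frac1A d_2^{M,\pi}(x)$, so $\Psi_2=\{\pi_\unif\}$ is a $(1/A)$-cover, hence an $\alpha$-cover.

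For the inductive step, I would fix $1\le h\le H-2$ and assume $\Psi_1,\dots,\Psi_{h+1}$ (all constructed before iteration $h$ of the main loop) are $\alpha$-covers; the goal is to show iteration $h$ outputs a $\Psi_{h+2}$ that is an $\alpha$-cover at step $h+2$. This is a three-link chain. First, since $\Psi_h$ is an $\alpha$-cover and $n,m$ are set as in \cref{line:nm-reachable}, the call $\ESC(h,\Psi_h,\epcvx,\Cnrm,n,m)$ with $\epcvx=\epapx\sqrt{\alpha/(4A)}$ does not return $\perp$ and in fact returns a genuine $(\epapx,\epneg,\Cnrm)$-emulator at step $h$ per \cref{def:not-core-set}, except with probability $\delta/\poly(H,\Cnrm)$; this is the correctness-plus-feasibility lemma for \ESC, which I would state and prove separately as \cref{thm:muhat-coreset}. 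Second, feeding this emulator with $\Psi_{1:h-1}$ into $\PC(h,\cdot,\xi,\Cnrm,\Psi_{1:h-1},N)$ yields, with high probability, a set $\Psiapx_{h+1}$ satisfying the conclusions of \cref{lem:emp-pc-guarantee-rch-informal}, where the errors of the internal \PSDP and \FE calls are controlled by the choice of $N$ in \cref{line:N-reachable} via \cref{lem:psdp-rew-overview}-type bounds. Third, I would re-run the idealized argument of \cref{lem:idealpc-pc} with the emulated quantities in place of the exact ones — tracking the emulator discrepancy $\epapx$ and the approximate-nonnegativity slack $\epneg$, and using $\etarch$-reachability at step $h+2$ to absorb the additive $O(\epapx+\epneg+\xi)\cdot\|\mu_{h+2}(x)\|_1$ error term — to conclude that $\Psi_{h+2}=\{\pi\circ_{h+1}\unif(\MA):\pi\in\Psiapx_{h+1}\}$ is an $\alpha$-cover at step $h+2$. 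I would package links two and three as \cref{lemma:pc-coreset-guarantee}. A union bound over the $\le H-2$ iterations and their internal failure events (the $\delta$ in \cref{alg:slm} is divided by $12H\Cnrm$ for exactly this) then gives overall failure probability $\delta$.

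What remains is parameter bookkeeping and complexity. The bookkeeping step verifies that the explicit choices in \cref{line:xi-reachable,line:eps-reachable,line:N-reachable,line:nm-reachable} satisfy the hypotheses of the lemmas above: that $\xi\le\etarch/(2A)$ (as required by \cref{lem:idealpc-pc}), that $\epapx,\epneg$ are small enough relative to $\etarch/(\Cnrm A\xi)$ for the reachability-based cancellation in link three, and that $n,m,N$ are large enough to push every statistical error term (Lasso slow-rate terms of order $\Cnrm^2\sqrt{\log d/n}$, importance-weighted concentration terms of order $A/(m\alpha\etarch)$, and \PSDP errors of order $\Cnrm H^2\sqrt A\,\alpha^{-1}N^{-1/4}$) below a constant fraction of its tolerance; these are routine polynomial inequalities. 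For complexity, each of the $H-2$ iterations draws $O(n+m)$ trajectories in $\DrawData$ and $O((\Cnrm/\xi)N)$ trajectories inside \PC (at most $2\Cnrm/\xi$ inner rounds, each invoking \PSDP and \FE with $N$ samples), and $n,m,N=\poly(\Cnrm,A,H,\etarch^{-1},\log(d/\delta))$, giving the stated sample complexity; the time complexity additionally accounts for solving the convex program \cref{eq:program}, which has $dm$ variables and $O(nm+d)$ constraints and so is solvable in $\poly(d,m,n)$ time, yielding $\poly(d,\Cnrm,A,H,\etarch^{-1},\log(1/\delta))$ overall.

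The hard part will be the two deferred lemmas rather than the induction itself. In \cref{thm:muhat-coreset}, feasibility of \cref{eq:program} is the crux: I would exhibit a feasible point by importance-weighting the true vectors $\mu_{h+1}(\tilde x_{h+1}^j)$ against the sampling density of $\MD_h$, and the key observation is that $\etarch$-reachability of $M$ together with the $\alpha$-cover property of $\Psi_h$ is precisely what bounds these reweighted vectors, so that the empirical constraints concentrate around their (benign) means. Correctness then demands a uniform-convergence argument over the continuous feasible set (a Rademacher/covering bound exploiting $\|\phi_h\|_\infty\le 1$ and the $\ell_1$ constraint), followed by a rejection-sampling step that exports guarantees from $\Psi_h$ to all policies — this is where the $A/\alpha$ blow-up enters and dictates the gap between $\epcvx$ and $\epapx$. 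For \cref{lemma:pc-coreset-guarantee}, the delicate point is to rewrite the coverage computation of \cref{lem:idealpc-pc} in terms of the emulated transitions and the witnessing states $\tilde x^j$, showing that the discrepancy built into \cref{def:not-core-set} is exactly what makes \cref{eq:greedycover-result-intro} equivalent, up to additive $\epapx,\epneg,\xi$ terms, to the exact coverage inequality \cref{eq:cover-main-step}.
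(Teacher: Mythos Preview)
Your proposal is correct and follows essentially the same approach as the paper: induction on $h$ with the same base cases, the same two key lemmas (\cref{thm:muhat-coreset} for the emulator guarantee and \cref{lemma:pc-coreset-guarantee} packaging the \PC analysis with the reachability-based cancellation), and the same union-bound and complexity accounting. One small wrinkle: the internal \PSDP calls inside \PC are governed by \cref{lem:psdp} (the direction-optimization variant) rather than the \PSDPrew-style \cref{lem:psdp-rew-overview}, and the $\delta\xi/(12H\Cnrm)$ split in \cref{line:N-reachable} is to union-bound over the up-to-$3\Cnrm/\xi$ \PSDP/\FE calls within each \PC invocation, whereas the top-level iteration union bound is a simple $\delta/H$ per step---but these are bookkeeping details that do not affect the structure.
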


The proof is by induction on the step $h$. Suppose that the algorithm has constructed policy covers $\Psi_{1:h+1}$ for steps $1,\dots,h+1$; we then want to show that at step $h$, the algorithm constructs a policy cover for step $h+2$. As discussed in the overview, this proof is split into two modular pieces. First, we show that \ESC (\cref{alg:esc}) produces an \coreset for $M$ at step $h$ (\cref{def:not-core-set}):

\begin{lemma}\label{thm:muhat-coreset}
There is a universal constant $C_{\ref{thm:muhat-coreset}}$ so that the following holds. Fix $n,m \in \NN$, $h \in [H]$, and $\epneg, \epapx,\alpha,\delta > 0$. Let $\Psi_h$ be an $\alpha$-approximate policy cover for step $h$. Suppose that the following bounds hold:
\begin{align}
  \epcvx \leq & \sqrt{\frac{\alpha}{4A}} \cdot \epapx\nonumber\\
  n \geq &  C_{\ref{thm:muhat-coreset}}\max\left(\frac{\Cnrm^2 A H^3 \log(8AH\Cnrm/\epneg)\log(12d/\delta)}{\alpha \epneg^4}, \frac{\Cnrm^4 A^2 \log(6d/\delta)}{\alpha^2 \epapx^4}\right)\nonumber\\
  m \geq &  C_{\ref{thm:muhat-coreset}} \frac{A^3\log(12dn/\delta)}{\alpha^3\etarch\epapx^2}.\nonumber
  \end{align}

Then with probability at least $1-\delta$, the output of $\ESC(h, \Psi_h,\epapx\sqrt{\frac{\alpha}{4A}}, \Cnrm,n,m)$ is a $(\epapx,\epneg,\Cnrm)$-\coreset{} (\cref{def:not-core-set}) for step $h$. 
\end{lemma}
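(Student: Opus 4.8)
I would split the argument into (i) \emph{feasibility} --- with probability $\ge 1-\delta/2$ the convex program \cref{eq:program} admits a solution, so $\ESC$ does not return $\perp$ --- and (ii) \emph{correctness} --- with probability $\ge 1-\delta/2$, \emph{every} feasible $(\hat\mu_{h+1}^j)_{j=1}^m$, with $\tilde x^j := \tilde x_{h+1}^j$, satisfies the three conditions of \cref{def:not-core-set}; a final union bound finishes. Both parts are statements about the datasets $\MC_h=\{(x_h^i,a_h^i,x_{h+1}^i)\}_{i=1}^n$ and $\MD_h=\{\tilde x_{h+1}^j\}_{j=1}^m$ drawn in \cref{alg:dd} (the two for-loops are independent) from $\unif(\Psi_h)\circ_h\unif(\MA)$. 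Throughout I would write $\bar d_h(x,a):=\E_{\pi'\sim\unif(\Psi_h)}[d\sups{M,\pi'\circ_h\unif(\MA)}_h(x,a)]=\tfrac1A\E_{\pi'\sim\unif(\Psi_h)}[d\sups{M,\pi'}_h(x)]$ for the law of $(x_h^i,a_h^i)$ and $\bar d_{h+1}(x):=\E_{\pi'\sim\unif(\Psi_h)}[d\sups{M,\pi'\circ_h\unif(\MA)}_{h+1}(x)]$ for the law of $x_{h+1}^i$ and $\tilde x_{h+1}^j$. The $\alpha$-cover property of $\Psi_h$ gives the two change-of-measure bounds used repeatedly: $d\sups{M,\pi}_h(x,a)\le\tfrac A\alpha\bar d_h(x,a)$ for every $\pi$, and $\max_\pi d\sups{M,\pi}_{h+1}(x)\le\tfrac A\alpha\bar d_{h+1}(x)$ (the latter using nonnegativity of $\langle\phi_h(\cdot),\mu_{h+1}(x)\rangle$ to pass from $\phi_h$ to $\phiavg_h$); combined with $\etarch$-reachability, the second bound yields the crucial \emph{importance-weight bound} $\norm{\mu_{h+1}(x)}_1/\bar d_{h+1}(x)\le A/(\alpha\etarch)$. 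Finally set $\bw_\ell^\st:=\sum_{x'\in\MX}\phiavg_{h+1}(x')_\ell\,\mu_{h+1}(x')$; by a direct computation (cf.\ \cref{lemma:linear-q}) $\norm{\bw_\ell^\st}_1\le\Cnrm$ and $\langle\phi_h(x,a),\bw_\ell^\st\rangle=\E_{x'\sim\BP_h(x,a)}[\phiavg_{h+1}(x')_\ell]$ is the (linear-in-$\phi_h$) Bellman backup of the $\ell$-th coordinate of $\phiavg_{h+1}$, and $\hat\bw_\ell$ from \cref{eq:wj-guarantee} is its $\ell_1$-constrained least-squares estimate from bounded responses $\phiavg_{h+1}(x_{h+1}^i)_\ell\in[-1,1]$.

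\textbf{Feasibility.} I would exhibit the explicit (analysis-only) witness $\psi_{h+1}^j:=\mu_{h+1}(\tilde x_{h+1}^j)\big/\big(m\,\bar d_{h+1}(\tilde x_{h+1}^j)\big)$, an importance-reweighting of the true features. Constraint \cref{eq:hatmu-nneg-constraint} holds \emph{deterministically}, since $\langle\phi_h(x,a),\mu_{h+1}(x')\rangle\ge0$ always. For \cref{eq:hatmu-cnrm-constraint}, the importance-weight bound gives $\norm{\psi_{h+1}^j}_1\le A/(m\alpha\etarch)$ while $\E[\sum_j\norm{\psi_{h+1}^j}_1]=\sum_{x'}\norm{\mu_{h+1}(x')}_1\le\Cnrm$, so a Bernstein bound gives $\sum_j\norm{\psi_{h+1}^j}_1\le\Cnrm$ with high probability (a negligible rescaling handles the boundary case). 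The main point is \cref{eq:what-wprime-constraint}: conditioning on $\MC_h$, the quantity $\sum_j\langle\phi_h(x_h^i,a_h^i),\psi_{h+1}^j\rangle\phiavg_{h+1}(\tilde x_{h+1}^j)_\ell$ is an average over $j\in[m]$ of i.i.d.\ terms each of magnitude $\le A/(m\alpha\etarch)$ with mean $\langle\phi_h(x_h^i,a_h^i),\bw_\ell^\st\rangle$; since $\sum_{x'}\langle\phi_h(x,a),\mu_{h+1}(x')\rangle=1$ the per-term variance is $\le A/(m\alpha\etarch)$, so Bernstein and a union over $i\in[n],\ell\in[d]$ give deviation $\lesssim\sqrt{A\log(nd/\delta)/(m\alpha\etarch)}\ll\epcvx$ for the stated $m$. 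On the other side, the slow-rate Lasso guarantee (\cref{cor:random-design-prediction-error}) plus uniform concentration over the $\ell_1$-ball of radius $\Cnrm$ give $\tfrac1n\sum_i\langle\phi_h(x_h^i,a_h^i),\hat\bw_\ell-\bw_\ell^\st\rangle^2\lesssim\Cnrm^2\sqrt{\log(d/\delta)/n}\ll\epcvx^2$ for the stated $n$. Combining the two estimates (via $(a+b)^2\le2a^2+2b^2$) shows $(\psi_{h+1}^j)_{j=1}^m$ satisfies \cref{eq:what-wprime-constraint} for all $\ell$, hence the program is feasible.

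\textbf{Correctness.} Fix any feasible $(\hat\mu_{h+1}^j)_j$ and take $\tilde x^j:=\tilde x_{h+1}^j$. Condition \cref{it:emulator-C-bound} is exactly \cref{eq:hatmu-cnrm-constraint}. For condition \cref{it:emulator-epneg-bound}: \cref{eq:hatmu-nneg-constraint} makes the empirical mean of $\big(-\langle\phi_h(x_h^i,a_h^i),\hat\mu_{h+1}^j\rangle\big)_+$ vanish, so uniform Rademacher concentration for the class $\{(x,a)\mapsto(-\langle\phi_h(x,a),w\rangle)_+:\norm w_1\le1\}$ (a $1$-Lipschitz image of linear functions, bounded by $1$), applied to $\hat\mu_{h+1}^j/\norm{\hat\mu_{h+1}^j}_1$ and rescaled by positive homogeneity, gives $\E_{\bar d_h}\big[(-\langle\phi_h(x,a),\hat\mu_{h+1}^j\rangle)_+\big]\le\norm{\hat\mu_{h+1}^j}_1\cdot O(\sqrt{\log(d/\delta)/n})$; then for any $\pi$ the change-of-measure $d\sups{M,\pi}_h\le\tfrac A\alpha\bar d_h$ yields $\langle\E\sups{M,\pi}[\phi_h(x_h,a_h)],\hat\mu_{h+1}^j\rangle\ge-\E\sups{M,\pi}\big[(-\langle\phi_h(x_h,a_h),\hat\mu_{h+1}^j\rangle)_+\big]\ge-\tfrac A\alpha\E_{\bar d_h}\big[(-\langle\phi_h,\hat\mu_{h+1}^j\rangle)_+\big]\ge-\epneg\norm{\hat\mu_{h+1}^j}_1$. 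For condition \cref{it:emulator-epapx-bound}, set $p_\ell:=\bw_\ell^\st-\sum_j\phiavg_{h+1}(\tilde x^j)_\ell\,\hat\mu_{h+1}^j$ (so $\norm{p_\ell}_1\le2\Cnrm$), and note that the $\ell$-th coordinate of the target vector is $\langle\E\sups{M,\pi}[\phi_h(x_h,a_h)],p_\ell\rangle$. Splitting $p_\ell=(\bw_\ell^\st-\hat\bw_\ell)+\big(\hat\bw_\ell-\sum_j\phiavg_{h+1}(\tilde x^j)_\ell\hat\mu_{h+1}^j\big)$, the first summand has $\bar d_h$-population risk $O(\Cnrm^2\sqrt{\log(d/\delta)/n})$ by slow-rate Lasso and the second has $\bar d_h$-population risk $O(\epcvx^2)$ by \cref{eq:what-wprime-constraint} plus uniform concentration of squared linear functions over the $\ell_1$-ball; hence $\E_{\bar d_h}[\langle\phi_h(x,a),p_\ell\rangle^2]\lesssim\epcvx^2+\Cnrm^2\sqrt{\log(d/\delta)/n}$. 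Finally Jensen followed by the change-of-measure give $|\langle\E\sups{M,\pi}[\phi_h(x_h,a_h)],p_\ell\rangle|\le(\E\sups{M,\pi}\langle\phi_h(x_h,a_h),p_\ell\rangle^2)^{1/2}\le\sqrt{A/\alpha}\,(\E_{\bar d_h}\langle\phi_h,p_\ell\rangle^2)^{1/2}\le\epapx$, using $\epcvx\le\sqrt{\alpha/(4A)}\,\epapx$ and the lower bound on $n$; maximizing over $\ell$ gives condition \cref{it:emulator-epapx-bound}.

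\textbf{Expected difficulty.} The crux is feasibility, and within it \cref{eq:what-wprime-constraint}: the witness is defined via the \emph{unknown} $\mu_{h+1}$ and density $\bar d_{h+1}$, and the only reason its importance-weighted empirical average concentrates is the bound $\norm{\mu_{h+1}(x)}_1/\bar d_{h+1}(x)\le A/(\alpha\etarch)$, which uses both the $\alpha$-cover property of $\Psi_h$ and $\etarch$-reachability --- and this bound is exactly what forces the polynomial size of $m$. A recurring secondary point (in all three correctness conditions) is that every generalization statement must be made \emph{uniformly} over $\ell_1$-balls of radius $O(\Cnrm)$, since $\hat\bw_\ell$ and $\hat\mu_{h+1}^j$ are data-dependent; this is fine because such classes have Rademacher complexity $O(\Cnrm\sqrt{\log d/n})$ and composing with $1$-Lipschitz (positive part) or $O(\Cnrm)$-Lipschitz-on-range (square) outer maps costs only the expected factors. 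Matching the precise polynomial in the hypothesis (in particular the $\epneg^{-4}$ and the $H^3$) presumably requires somewhat cruder bookkeeping than the sketch above, but the sketch already establishes feasibility and all three emulator conditions for $n,m$ at least as large as stated.
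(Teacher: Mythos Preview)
Your proposal is correct and, for feasibility and for condition~\cref{it:emulator-epapx-bound}, follows the paper's proof essentially verbatim: the same importance-weighted witness $\psi_{h+1}^j\propto\mu_{h+1}(\tilde x_{h+1}^j)/\bar d_{h+1}(\tilde x_{h+1}^j)$ (the paper's \cref{lem:randomize-no-trunc,lem:feasibility}), and the same split $p_\ell=(\bw_\ell^\st-\hat\bw_\ell)+(\hat\bw_\ell-\sum_j\phiavg(\tilde x^j)_\ell\hat\mu_{h+1}^j)$ handled by Lasso slow rate plus uniform $\ell_1$-ball generalization, then change-of-measure and Jensen (the paper's \cref{lem:muhat-approx}).

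The one genuine difference is condition~\cref{it:emulator-epneg-bound}. You go through a uniform Rademacher bound for $\{(x,a)\mapsto(-\langle\phi_h(x,a),w\rangle)_+:\norm{w}_1\le1\}$ followed by a change-of-measure $d_h^\pi\le(A/\alpha)\bar d_h$. The paper instead proves the stronger pointwise statement $\langle\E^\pi[\phi_h(x_h,a_h)],\theta\rangle\ge\inf_i\langle\phi_h(x_h^i,a_h^i),\theta\rangle-\epneg\norm{\theta}_1$ for \emph{all} $\pi,\theta$ (\cref{lemma:hat-mu-apx-nonnegative}) via a policy-discretization lemma (\cref{lemma:policy-disc}) plus rejection sampling: it builds a finite set $\Pidisc$ with $\log|\Pidisc|\lesssim\Cnrm^2H^3\epneg^{-2}\log(AH\Cnrm/\epneg)\log d$, rejection-samples $\MC_h$ down to the visitation distribution of each $\pi\in\Pidisc$, applies Hoeffding to $\phi_h$ itself, and union-bounds over $\Pidisc$. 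This is exactly where the $H^3$ and the $\epneg^{-4}$ in the hypothesis come from, so your closing remark is slightly inverted: those factors are not ``cruder bookkeeping'' you would need to add, they are artifacts of the paper's detour through $\Pidisc$ that your Rademacher route avoids. Your argument already proves the lemma as stated (for $n$ at least as large as in the hypothesis) and in fact would yield a tighter requirement on $n$ for this piece. The paper's route buys a statement that is uniform over $\theta$ and compares to the empirical $\inf$ rather than to zero, which is not needed here but is reused in the more delicate truncated setting; your route is shorter and self-contained for the present lemma.
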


\begin{remark}[Sample and computational efficiency of \ESC]
  \label{rmk:convex-program-efficient}
  In the context of \cref{thm:muhat-coreset}, it is straightforward to see that the the sample cost of \ESC is $n+m$. For simplicity, we have assumed that the convex program \cref{eq:program} can be solved exactly in time $\poly(n,m,d)$, in which case the computational cost of \ESC is $\poly(n,m,d)$. In fact, since it is not known how to efficiently solve generic convex programs exactly, we need to be more careful with our analysis of the computational complexity. As shown in \cref{sec:opt-details}, in the event that \cref{eq:program} is feasible, the ellipsoid algorithm can compute, in time 
  $\poly(n,m,d, \log(\Cnrm/(\epapx\epcvx\epneg)))$, a solution to a relaxation of \cref{eq:program}, which suffices to compute a $(3\epapx, 3\epneg, \Cnrm)$-\coreset. By decreasing the parameters $\epapx, \epneg$ that are passed to \ESC in \SLM (\cref{alg:slm}) by a factor of 3 and adjusting $\epcvx, n,m$ appropriately, we obtain that an $(\epapx, \epneg, \Cnrm)$-\coreset can be computed in $\poly(n,m,d, \log(\Cnrm/(\epapx\epcvx\epneg)))$ time. 
\end{remark}

Second, we show in \cref{lemma:pc-coreset-guarantee} that \PC (\cref{alg:pc}), given an \coreset (and the earlier policy covers $\Psi_{1:h}$), produces a set of policies $\Psiapx_{h+1}$ such that extending each policy with a random action at step $h+1$ yields a policy cover for step $h+2$. We remark that \PC takes as input a parameter $N$ which is passed to its \PSDP and \FE subroutines and determines the number of samples used in the calls to \PSDP and \FE; the value of $N$ is determined by the function $\nstatrch(\cdot, \cdot, \cdot)$, which is defined in \cref{lem:psdp}, and the function $\nfe(\cdot,\cdot)$, which is defined in \cref{lem:fe}. 

\begin{lemma}\label{lemma:pc-coreset-guarantee}
Fix $h \in [H]$, $N \in \NN$, and $\epapx, \epneg,\alpha,\Cemp,\delta > 0$. Fix $\xi \in (0,\Cemp)$, and let $\Psi_{1:h}$ be $\alpha$-approximate policy covers for steps $1,\dots,h$ respectively. Let $(\hat\mu_{h+1}^\ell)_{\ell=1}^m$ be an $(\epapx,\epneg,\Cemp)$-\coreset{} (\cref{def:not-core-set}) for step $h$.

Suppose that $N \geq \max\{\nstatrch(\xi/(4\Cemp), \alpha, \delta\xi/(6\Cemp)), \nfe(\xi/(4\Cemp), \delta\xi/(6\Cemp))\}$. Also suppose that
\begin{equation} \frac{9\Cemp^2}{\xi^2} \epapx + \frac{17\Cemp^3}{\xi^2} \epneg + \frac{3}{2}\xi \leq \frac{\etarch}{2A}.
\label{eq:rch-param-constraint}
\end{equation}

Let $\Psiapx_{h+1}$ denote the output of \PC (\cref{alg:pc}) with parameters $\xi,\Cemp,N$. Define 
\[\Psi_{h+2} := \{\pi \circ_{h+1} \unif(\MA): \pi \in \Psiapx_{h+1}\}.\] Then with probability at least $1-\delta$, $\Psi_{h+2}$ is an $\xi^2/(16\Cemp^2 A)$-approximate policy cover (\cref{defn:pc}) for step $h+2$. %
\end{lemma}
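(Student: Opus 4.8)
The plan is to run the idealized analysis of \cref{lem:idealpc-pc} almost verbatim, with \cref{lem:idealpc-guarantees} replaced by the \PC guarantee (the formal version of \cref{lem:emp-pc-guarantee-rch-informal}) and with the true feature vectors $\mu_{h+1}(x)$ replaced by the emulator vectors $\hat\mu_{h+1}^\ell$. We may assume $h+2\le H$, since otherwise there is nothing to prove. The first step is to feed the emulator to \PC: since emulator property~\cref{it:emulator-C-bound} gives $\sum_{\ell}\norm{\hat\mu_{h+1}^\ell}_1\le\Cemp$, the $\Psi_{1:h}$ are $\alpha$-approximate covers, and $N$ meets the stated bound, the \PC guarantee applies; I would condition on its success event (of probability $\ge 1-\delta$), on which $\abs{\Psiapx_{h+1}}\le 2\Cemp/\xi$ and, for every $\pi\in\Pi$: (a) $\sum_{\ell\in\MB}\langle\E^{M,\pi}[\phi_h(x_h,a_h)],\hat\mu_{h+1}^\ell\rangle\le 3\xi/2$, and (b) for each $\ell\in[m]\setminus\MB$ there is $\pi_\ell'\in\Psiapx_{h+1}$ with $\langle\E^{M,\pi_\ell'}[\phi_h(x_h,a_h)],\hat\mu_{h+1}^\ell\rangle\ge\frac{\xi}{4\Cemp}\langle\E^{M,\pi}[\phi_h(x_h,a_h)],\hat\mu_{h+1}^\ell\rangle$.

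Next I would fix $x'\in\MX$, let $\pi$ attain $\max_{\pi'\in\Pi}d^{M,\pi'}_{h+2}(x')$, and abbreviate $\nu:=\norm{\mu_{h+2}(x')}_1$. Exactly as in the opening lines of the proof of \cref{lem:idealpc-pc}, $\frac1A d^{M,\pi}_{h+2}(x')\le d^{M,\pi\circ_{h+1}\unif(\MA)}_{h+2}(x')=L(\pi)$, where for any policy $\pi'$ I set $L(\pi'):=\sum_{x\in\MX}\langle\E^{M,\pi'}[\phi_h(x_h,a_h)],\mu_{h+1}(x)\rangle\cdot\langle\phiavg_{h+1}(x),\mu_{h+2}(x')\rangle$, which equals $d^{M,\pi'\circ_{h+1}\unif(\MA)}_{h+2}(x')$. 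Let $\tilde x^1,\dots,\tilde x^m$ be the states furnished by emulator property~\cref{it:emulator-epapx-bound}, put $c_\ell:=\langle\phiavg_{h+1}(\tilde x^\ell),\mu_{h+2}(x')\rangle$ (so $0\le c_\ell\le\nu$, since $c_\ell=\frac1A\sum_a\BP_{h+1}(x'\mid\tilde x^\ell,a)$), and set $\hat L(\pi'):=\sum_{\ell=1}^m\langle\E^{M,\pi'}[\phi_h(x_h,a_h)],\hat\mu_{h+1}^\ell\rangle\,c_\ell$. Contracting the $\ell_\infty$-bound of property~\cref{it:emulator-epapx-bound} against $\mu_{h+2}(x')$ gives $\abs{L(\pi')-\hat L(\pi')}\le\epapx\nu$ for every $\pi'$.

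The core of the argument is to bound $\hat L(\pi)$ by splitting over $\MB$ and $[m]\setminus\MB$. On $\MB$: using $c_\ell\le\nu$, then emulator property~\cref{it:emulator-epneg-bound} to pass from $\langle\E^{M,\pi}[\phi_h],\hat\mu^\ell\rangle c_\ell$ to $\nu(\langle\E^{M,\pi}[\phi_h],\hat\mu^\ell\rangle+\epneg\norm{\hat\mu^\ell}_1)$, then (a) and $\sum_\ell\norm{\hat\mu^\ell}_1\le\Cemp$, yields $\sum_{\ell\in\MB}(\cdots)\le\nu(3\xi/2+\epneg\Cemp)$. On $[m]\setminus\MB$: since every $c_\ell\ge 0$, we may first discard the indices with $\langle\E^{M,\pi}[\phi_h],\hat\mu^\ell\rangle<0$ (this is what keeps the constants tight), apply (b) to the rest, regroup by $\pi'\in\Psiapx_{h+1}$, and use emulator property~\cref{it:emulator-epneg-bound} once more to pass from positive-part sums to $\hat L(\pi')$; combining with $\abs{\Psiapx_{h+1}}\le 2\Cemp/\xi$ and $\E_{\pi'\sim\unif(\Psiapx_{h+1})}\hat L(\pi')\le\E_{\pi'\sim\unif(\Psiapx_{h+1})}L(\pi')+\epapx\nu=D+\epapx\nu$, where $D:=\frac1{\abs{\Psi_{h+2}}}\sum_{\pi''\in\Psi_{h+2}}d^{M,\pi''}_{h+2}(x')$, gives $\sum_{\ell\in[m]\setminus\MB}(\cdots)\le\frac{8\Cemp^2}{\xi^2}D+\nu\bigl(\tfrac{8\Cemp^2}{\xi^2}\epapx+\tfrac{16\Cemp^3}{\xi^2}\epneg\bigr)$. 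Adding the two bounds together with the $\abs{L(\pi)-\hat L(\pi)}\le\epapx\nu$ slack, and using $\Cemp>\xi$ to fold the stray $\epapx$ and $\epneg\Cemp$ into the $\Cemp^2/\xi^2$ and $\Cemp^3/\xi^2$ terms, I get
\[\frac1A\,d^{M,\pi}_{h+2}(x')\;\le\;\frac{8\Cemp^2}{\xi^2}\,D\;+\;\nu\Bigl(\frac{9\Cemp^2}{\xi^2}\epapx+\frac{17\Cemp^3}{\xi^2}\epneg+\frac32\xi\Bigr).\]

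To conclude, $\etarch$-reachability gives $\nu=\norm{\mu_{h+2}(x')}_1\le\frac1{\etarch}\max_{\pi'}d^{M,\pi'}_{h+2}(x')=\frac1{\etarch}d^{M,\pi}_{h+2}(x')$, so by the hypothesis \cref{eq:rch-param-constraint} the entire $\nu$-error term is at most $\frac1{2A}d^{M,\pi}_{h+2}(x')$; rearranging yields $d^{M,\pi}_{h+2}(x')\le\frac{16\Cemp^2 A}{\xi^2}D$, i.e. $\frac1{\abs{\Psi_{h+2}}}\sum_{\pi''\in\Psi_{h+2}}d^{M,\pi''}_{h+2}(x')\ge\frac{\xi^2}{16\Cemp^2 A}\max_{\pi'}d^{M,\pi'}_{h+2}(x')$, which — as $x'$ was arbitrary and the only randomness used is the single event from Step 1 — is the claimed cover property. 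The main obstacle is the $[m]\setminus\MB$ bookkeeping: the inner products $\langle\E^{M,\pi}[\phi_h],\hat\mu^\ell\rangle$ can be negative (precisely why the emulator carries the $\epneg$-nonnegativity property), and one must re-sum the per-index guarantee (b) into an average over the \emph{small} set $\Psiapx_{h+1}$ so the final blow-up is $\Cemp^2/\xi^2$ rather than $m$; getting the error to land below $\etarch/(2A)$ through \cref{eq:rch-param-constraint} is what forces discarding the negative-inner-product indices up front instead of bounding them crudely. Everything else mirrors \cref{lem:idealpc-pc} or is routine.
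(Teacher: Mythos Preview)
The proposal is correct and follows essentially the same approach as the paper. The only differences are stylistic: the paper factors the key ``core bound'' into a separate auxiliary lemma (\cref{lem:aux-pc-coreset}) applied with $g(j)=\langle\phiavg_{h+1}(\tilde x^j),\mu_{h+2}(x')\rangle$, whereas you do the same computation inline; and on the $[m]\setminus\MB$ part the paper handles possibly-negative inner products by two direct applications of the $\epneg$-nonnegativity property (once for the max-to-sum step, once to extend the sum back to all of $[m]$), while you instead discard negative indices up front and use nonnegativity once to convert positive-part sums to $\hat L(\pi')$. Both routes yield the same $\frac{9\Cemp^2}{\xi^2}\epapx+\frac{17\Cemp^3}{\xi^2}\epneg+\frac{3}{2}\xi$ error term (in fact your $\epneg$ accounting is slightly sharper than the $16\Cemp^3/\xi^2$ you wrote, but this only helps), and the final reachability-and-rearrange step is identical.
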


Given these two pieces, the proof of \cref{theorem:main-pc} is immediate.

\begin{proof}[Proof of \cref{theorem:main-pc}]
  The parameters $\xi, \epapx, \epneg, \alpha, N, n, m$ are set in \cref{line:xi-reachable,line:eps-reachable,line:N-reachable,line:nm-reachable} of \cref{alg:slm}. 
  We prove the following statement for each $h \in [H]$ by induction: with probability at least $1-h\delta/H$, the sets $\Psi_{1:h}$ are $\alpha$-approximate policy covers for steps $1,\dots,h$ respectively. Since $\Psi_1$ is non-empty, it is a $1$-approximate policy cover (with probability $1$). Similarly, since $\Psi_2$ contains a policy that takes uniformly random actions at step 1, it is a $1/A$-approximate policy cover with probability $1$. Note that $1/A \geq \alpha$, so the induction statement holds for $h \in \{1,2\}$. 

Now fix $h \in [H-2]$ and suppose that $\Psi_k$ is an $\alpha$-approximate policy cover for step $k$, for all $k \in [h+1]$. By the induction hypothesis, this event holds with probability at least $1-(h+1)\delta/H$ over the randomness of the first $h-1$ iterations within \SLM. We now consider iteration $h$. By \cref{thm:muhat-coreset} and choice of $n,m$ (\cref{line:nm-reachable}), and the fact that the parameter $\epcvx$ passed to \ESC (on \cref{line:call-esc}) is given by $\epcvx = \sqrt{\alpha/(4A)} \cdot \epapx$, the set of vectors $(\hat\mu_{h+1}^j)_{j=1}^m$ is a $(\epapx,\epneg,\Cnrm)$-\coreset{} for step $h$ with probability at least $1-\delta/(2H)$. Suppose that this event occurs. Then applying \cref{lemma:pc-coreset-guarantee}, by choice of $N$ (\cref{line:N-reachable}) and $\epapx,\epneg,\xi$ (\cref{line:xi-reachable,line:eps-reachable}; so that \cref{eq:rch-param-constraint} is satisfied), we get that $\Psi_{h+2}$ is a $\xi^2/(16\Cnrm^2 A)$-approximate policy cover for step $h+2$, with probability at least $1-\delta/(2H)$. By choice of $\alpha$ (\cref{line:eps-reachable}) and a union bound, we conclude that with probability at least $1-(h+2)\delta/H$, we have that $\Psi_{1:k}$ are $\alpha$-approximate policy covers for steps $1$ through $h+2$. This completes the induction.

It remains to analyze the sample and time complexity of the algorithm, which are dominated by $H-2$ calls to each of \ESC and \PC. By \cref{rmk:convex-program-efficient}, the sample complexity of each call to \ESC is $n+m$, and by \cref{lem:pc-size-bound}, the sample complexity of each call to \PC is $O(\Cnrm N/\xi)$. By definition of $N$ (see \cref{lem:psdp} and \cref{lem:fe} for the definitions of $\nstatrch$ and $\nfe$), we have $N = O(H^4 A^2 \Cnrm^8 \alpha^{-4}\xi^{-4} \log(Hd\Cnrm/(\delta\xi)))$. %
By choice of $n$, $m$, $\xi$, and $\alpha$, the overall sample complexity of \SLM is $\poly(\Cnrm,A,H,\eta^{-1},\log(d/\delta))$.
A similar analysis shows that the overall time complexity is $\poly(d,\Cnrm,A,H,\eta^{-1},\log(1/\delta))$, assuming that the convex program \cref{eq:program} can be solved in polynomial time (as discussed in \cref{rmk:convex-program-efficient}, we remove this assumption in \cref{sec:opt-details}, showing that it suffices to approximately solve the program, which is possible in polynomial time via the ellipsoid algorithm). We also remark that linear policies (in particular, the ones output by \PSDP; \cref{alg:psdp}) can simply be represented at all points in the algorithm by the vectors $\hat{\bw}_h$ defining the policy. %
\end{proof}

\subsection{Convex Program for constructing \coreset}\label{sec:convex-program}
In this section we prove \cref{thm:muhat-coreset} by analyzing \ESC (\cref{alg:esc}). This algorithm takes as input a policy cover $\Psi_h$ for step $h$, and constructs datasets $\MC_h = (x_h^i,a_h^i,x_{h+1}^i)_{i=1}^n$ and $\MD_h = (\tilde{x}_h^j)_{j=1}^m$ according to \DrawData (\cref{alg:dd}), i.e. by repeatedly drawing a trajectory $(x_1,\dots,x_h)$ from a random policy $\pi$ in the given policy cover $\Psi_h$, then picking a uniformly random action $a_h \sim \unif(\MA)$, and drawing a subsequent state $x_{h+1}$. The algorithm \ESC then solves a convex program to construct an \coreset.

Since $\Psi_h$ is assumed to be an $\alpha$-approximate policy cover for step $h$ (\cref{defn:pc}), the datasets $\MC_h$ and $\MD_h$ ``cover'' the MDP in the following distributional sense:

\begin{lemma}\label{lemma:beta-nu-coverage}
Fix $n,m,h \in \NN$ and $\alpha>0$, and let $\Psi_h$ be an $\alpha$-approximate policy cover for the MDP $M$ at step $h$. Let $\MC_h,\MD_h$ be the outputs of $\DrawData(h,\Psi_h,n,m)$. Let $\iota_h \in \Delta(\MX\times\MA\times\MX)$ be the distribution of the i.i.d. samples $(x_h^i,a_h^i,x_{h+1}^i) \in \MC_h$, and let $\beta_{h+1} \in \Delta(\MX)$ be the distribution of the i.i.d. samples $(\tilde{x}_{h+1}^j) \in \MD_h$. Let $\nu_h(x,a)$ denote the marginal distribution of $(x,a)$ under $(x,a,x') \sim \iota_h$. Then we have the following guarantees:
\begin{align*}
\forall  x \in \MX, \ a \in \MA, \quad \nu_h(x,a) \geq \frac{\alpha}{A} \cdot \max_{\pi \in \Pi} d_h^\pi(x,a), \qquad \beta_{h+1}(x) \geq \frac{\alpha}{A} \cdot \max_{\pi \in \Pi} d_{h+1}^\pi(x)
\end{align*}

\end{lemma}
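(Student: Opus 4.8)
The plan is to unwind the definition of $\DrawData$ (\cref{alg:dd}) to obtain explicit formulas for $\iota_h$, $\nu_h$, and $\beta_{h+1}$, and then apply the policy-cover property of $\Psi_h$ pointwise in the state at step $h$. Write $\bar d_h(x) := \frac{1}{|\Psi_h|}\sum_{\pi'\in\Psi_h} d_h^{\pi'}(x)$ for the average step-$h$ state visitation of the cover. By construction, each sample $(x_h^i,a_h^i,x_{h+1}^i)\in\MC_h$ is generated by drawing $\pi'\sim\unif(\Psi_h)$, following $\pi'$ for the first $h-1$ steps, taking $a_h\sim\unif(\MA)$, and drawing $x_{h+1}\sim\BP_h(\cdot\mid x_h,a_h)$; hence $\iota_h(x,a,x') = \bar d_h(x)\cdot\tfrac1A\cdot\BP_h(x'\mid x,a)$, so $\nu_h(x,a) = \tfrac1A\,\bar d_h(x)$. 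The samples $\tilde x_{h+1}^j\in\MD_h$ are drawn from the same process, so $\beta_{h+1}(x') = \sum_{x}\bar d_h(x)\,g_{x'}(x)$, where $g_{x'}(x):=\tfrac1A\sum_{a\in\MA}\BP_h(x'\mid x,a)\ge 0$.

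The bound on $\nu_h$ is then immediate: since $\Psi_h$ is an $\alpha$-approximate policy cover at step $h$ (\cref{defn:pc}), $\bar d_h(x)\ge\alpha\max_{\pi\in\Pi}d_h^\pi(x)$ for all $x$, and $d_h^\pi(x,a)=d_h^\pi(x)\,\pi_h(a\mid x)\le d_h^\pi(x)$ for every $\pi$; therefore $\nu_h(x,a)=\tfrac1A\bar d_h(x)\ge\tfrac\alpha A\max_{\pi}d_h^\pi(x)\ge\tfrac\alpha A\max_\pi d_h^\pi(x,a)$. For $\beta_{h+1}$, fix $x'\in\MX$ and let $\pi^\st\in\Pi$ achieve $\max_{\pi\in\Pi}d_{h+1}^\pi(x')$. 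Applying the cover property pointwise inside the sum (legitimate since $g_{x'}(x)\ge 0$),
\[ \beta_{h+1}(x') = \sum_x\bar d_h(x)\,g_{x'}(x)\ \ge\ \alpha\sum_x\big(\textstyle\max_\pi d_h^\pi(x)\big)\,g_{x'}(x)\ \ge\ \alpha\sum_x d_h^{\pi^\st}(x)\,g_{x'}(x). \]
Finally, forcing a uniform action at step $h$ loses at most a factor of $A$ in step-$(h+1)$ visitation: using $\pi^\st_h(a\mid x)\le 1$,
\[ \sum_x d_h^{\pi^\st}(x)\,g_{x'}(x) = \frac1A\sum_{x,a}d_h^{\pi^\st}(x)\BP_h(x'\mid x,a) \ \ge\ \frac1A\sum_{x,a}d_h^{\pi^\st}(x)\,\pi^\st_h(a\mid x)\,\BP_h(x'\mid x,a) = \frac1A\,d_{h+1}^{\pi^\st}(x'). \]
Combining the last two displays gives $\beta_{h+1}(x')\ge\tfrac\alpha A\,d_{h+1}^{\pi^\st}(x') = \tfrac\alpha A\max_\pi d_{h+1}^\pi(x')$, as claimed.

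I do not anticipate a genuine obstacle: the statement is essentially bookkeeping about the sampling distribution of \DrawData. The one point worth isolating is the last display — the observation that the step-$(h+1)$ visitation is a nonnegative linear functional of the step-$h$ state visitation once the step-$h$ action is drawn uniformly (with the uniformization costing only a factor $1/A$), which is precisely what lets the pointwise step-$h$ coverage guarantee of $\Psi_h$ transfer into a lower bound against the step-$(h+1)$ visitation of every policy.
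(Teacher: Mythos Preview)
Your proof is correct and essentially the same as the paper's: both unwind the sampling distribution of \DrawData, use the cover property of $\Psi_h$ at step $h$, and absorb the uniform action into a factor of $1/A$. The only cosmetic difference is that the paper derives the $\beta_{h+1}$ bound by directly plugging the already-established inequality $\nu_h(x,a)\ge\frac{\alpha}{A}d_h^\pi(x,a)$ into $\beta_{h+1}(x')=\sum_{x,a}\nu_h(x,a)\BP_h(x'\mid x,a)$, whereas you re-derive it from the cover property via $\bar d_h$; both routes are one-liners once $\nu_h$ is handled.
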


\begin{proof}
    Consider a sample $(x,a,x') \sim \iota_h$. The first bound is immediate from the definition of an $\alpha$-approximate policy cover (\cref{defn:pc}) together with the fact that $a \sim \unif(\MA)$ is independent of $x$. Next, for any $x' \in \MX$ and $\pi \in \Pi$, \[\beta_{h+1}(x') = \sum_{x\in\MX,a\in\MX} \nu_h(x,a)\BP_h(x'|x,a) \geq \frac{\alpha}{A} \sum_{x\in\MX,a\in\MA} d_h^\pi(x,a) = d_{h+1}^\pi(x')\]
    which proves the second bound.
\end{proof}

The two datasets serve two distinct purposes in \ESC, and the coverage property is crucial for both.

\paragraph{The ``state'' dataset $\MD_h$.} The \coreset produced by \ESC will consist of one vector for each of the $m$ states $\tilde{x}_{h+1}^j \in \MD_h$; in fact, the requisite states in \cref{it:emulator-epapx-bound} of \cref{def:not-core-set} will precisely be the elements of $\MD_h$. Intuitively, a set of states that ``misses'' some important parts of the MDP should not be able to emulate the entire MDP; indeed, the coverage property of $\MD_h$ is crucial in establishing feasibility of the convex program \cref{eq:program}.

\paragraph{The ``transitions'' dataset $\MC_h$.} The convex program will pick one vector for each state in $\MD_h$. The role of $\MC_h$ is to enforce empirical analogues of the non-negativity property (\cref{it:emulator-epneg-bound}) and approximation property (\cref{it:emulator-epapx-bound}) of an \coreset. Standard generalization bounds ensure that these properties hold for policies in $\Psi_h$, and the coverage property then ensures that in fact the properties hold for all policies. 

\begin{remark}
One might wonder whether the constraint \cref{eq:what-wprime-constraint} can be simplified; indeed, a more obvious empirical analogue of \cref{it:emulator-epapx-bound} might be the constraint
\[\frac{1}{n}\sum_{i=1}^n \left(\phiavg_{h+1}(x_{h+1}^i)_\ell - \sum_{j=1}^m \left\langle \phi_h(x_h^i,a_h^i),\hat\mu_{h+1}^j\right\rangle \phiavg_{h+1}(\tilde{x}_{h+1}^j)_\ell\right)^2 \leq \epcvx^2\]
which avoids needing to compute the Lasso solution $\hat{\bw}_\ell$. The term $\phiavg_{h+1}(x_{h+1}^i)_\ell$ is an unbiased estimate of the quantity that $\langle \phi_h(x_h^i,a_h^i), \hat{\bw}_\ell\rangle$ is trying to approximate. However, since it may have $\Omega(1)$ variance, the above constraint may not be feasible for $\epcvx \ll 1$. Pre-computing the Lasso predictions for each coordinate of $\phiavg$ decreases the variance, thus avoiding this issue.
\end{remark}

\subsubsection{Feasibility}
We begin by showing that the convex program \cref{eq:program} is feasible, with high probability over the datasets $\MC_h,\MD_h$. In fact, for any choice of $(x_h^i,a_h^i)_{i=1}^n$, the program is feasible with high probability over the conditional samples $x_{h+1}^i \sim \BP_h(\cdot|x_h^i,a_h^i)$ and the dataset $\MD_h$. The construction is simple. Let $\beta_{h+1} \in \Delta(\MX)$ be the distribution of the states $\tilde{x}_{h+1}^j$. Then for each $j \in [m]$, we can define 
\[\psi^j_{h+1} := \frac{1}{m} \cdot \frac{\mu_{h+1}(\tilde{x}_{h+1}^j)}{\beta_{h+1}(\tilde{x}_{h+1}^j)}.\]
By definition of $\beta_{h+1}$, for any fixed $(x,a)\in\MX$, it's clear that $\langle \phi_h(x,a), m\psi_{h+1}^j\rangle \cdot \phiavg_{h+1}(\tilde{x}_{h+1}^j)_\ell$ is an unbiased estimate of $\E_{x' \sim \BP_h(x,a)} \phiavg_{h+1}(x')_\ell$. Moreover, by the coverage property of $\beta_{h+1}$ (together with the assumption that the MDP is $\etarch$-reachable), the vectors $\psi_{h+1}^j$ are bounded, so the empirical average $\sum_{j=1}^m \langle \phi_h(x,a), \psi_{h+1}^j\rangle \cdot \phiavg_{h+1}(\tilde{x}_{h+1}^j)_\ell$ concentrates with high probability over $\MD_h$. Next, standard prediction error bounds for Lasso imply that \[\frac{1}{n}\sum_{i=1}^n \left(\langle \phi_h(x_h^i,a_h^i), \hat{\bw}_\ell\rangle - \E_{x' \sim \BP_h(x_h^i,a_h^i)} \left[\phiavg_{h+1}(x')_\ell\right]\right)^2\]
is small with high probability over the conditional samples $x_{h+1}^i \sim \BP_h(\cdot|x_h^i,a_h^i)$. Combining these two pieces shows that the above choice of $\psi_{h+1}^1,\dots,\psi_{h+1}^m$ satisfies \cref{it:emulator-epapx-bound} of \cref{def:not-core-set} with high probability (with respect to the states $\tilde{x}_{h+1}^1,\dots,\tilde{x}_{h+1}^m$). \cref{it:emulator-epneg-bound} is immediate from the fact that $\langle\phi_h(x,a),\mu_{h+1}(x')\rangle \geq 0$ for all $x,x'\in\MX$, $a \in \MA$. Finally, \cref{it:emulator-C-bound} holds again by analyzing the expectation and proving concentration \--- there is a small loss, so in our formal feasibility construction we actually scale the vectors $\psi_{h+1}^j$ down slightly, but this is not important and could be avoided by slightly loosening the constraint in \cref{it:emulator-C-bound}.

The following lemma formalizes the unbiasedness/concentration parts of the above argument, and \cref{lem:feasibility} combines \cref{lem:randomize-no-trunc} with standard Lasso guarantees to complete the proof of feasibility.

\begin{lemma}
  \label{lem:randomize-no-trunc}
  Let $\delta,\zeta > 0$ and $m \in \NN$. Let $\MG$ be a set of functions $g : \MX \ra [-1,1]$. 
  Fix $h \in [H-1]$, and a distribution $\beta_{h+1} \in \Delta(\MX)$ satisfying \begin{equation}
  \beta_{h+1}(x) \geq \zeta \cdot \max_{\pi \in \Pi} d_{h+1}^\pi(x) \qquad \forall x \in \MX. 
  \label{eq:beta-hyp}
  \end{equation} Let $\MD_h = \{ \tilde{x}_{h+1}^j \}_{j=1}^m$ consist of i.i.d. draws $\tilde x_{h+1}^j \sim \beta_{h+1}$.
  Then there are vectors $\psi_{h+1}^1, \ldots, \psi_{h+1}^m \in \BR^d$, depending on the dataset $\MD_h$, with the following property. For any fixed $(x,a) \in \MX \times \MA$, with probability at least $1-\delta$ over the draw of the dataset $\MD_h$ (and the ensuing $\psi_{h+1}^j$), we have that:
  \begin{enumerate}
  \item For all $i \in [m]$, $\lng \phi_h(x,a), \psi_{h+1}^i \rng \geq 0$.
  \item For all $g \in \MG$, it holds that
    \begin{align}
\left| \E_{x' \sim \BP_h(x,a)} [g(x')] - \sum_{i=1}^m \lng \phi_h(x,a), \psi_{h+1}^i \rng \cdot g(\tilde x_{h+1}^i) \right| \leq \frac{2}{\zeta\etarch}\sqrt{\frac{2\log(2|\MG|/\delta)}{m}}\nonumber.
    \end{align}
  \item It holds that $\sum_{j=1}^m \| \psi_{h+1}^j \|_1 \leq \Cnrm$. 
  \end{enumerate}
\end{lemma}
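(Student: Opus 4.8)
The plan is to exhibit the vectors explicitly by importance sampling against $\beta_{h+1}$, and then verify the three properties in turn: (1) by a direct computation, deterministically; (2) by a change-of-measure identity followed by Hoeffding's inequality and a union bound over $\MG$; and (3) by the $\ell_1$-boundedness of $M$ together with Hoeffding's inequality. Concretely, I would set $\psi_{h+1}^j := \frac{c}{m}\cdot\frac{\mu_{h+1}(\tilde x_{h+1}^j)}{\beta_{h+1}(\tilde x_{h+1}^j)}$ for a constant $c = 1 - O\!\left(\frac{1}{\Cnrm\zeta\etarch}\sqrt{\log(1/\delta)/m}\right)$ slightly below $1$ (with the convention $\psi_{h+1}^j := 0$ when $\mu_{h+1}(\tilde x_{h+1}^j) = 0$). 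Property 1 is then immediate and holds for every draw, since $\langle\phi_h(x,a),\psi_{h+1}^j\rangle = \frac{c}{m\,\beta_{h+1}(\tilde x_{h+1}^j)}\,\BP_h(\tilde x_{h+1}^j\mid x,a)\ge 0$, because $\langle\phi_h(x,a),\mu_{h+1}(x')\rangle = \BP_h(x'\mid x,a)$ is a nonnegative transition density.

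The estimate underlying both probabilistic claims is the pointwise bound $\frac{\|\mu_{h+1}(x')\|_1}{\beta_{h+1}(x')}\le\frac{1}{\zeta\etarch}$, valid for every $x'$ in the support of $\beta_{h+1}$: this follows from the $\etarch$-reachability of $M$ (\cref{defn:rch}), i.e.\ $\max_\pi d_{h+1}^\pi(x')\ge\etarch\|\mu_{h+1}(x')\|_1$, combined with the hypothesis \cref{eq:beta-hyp} that $\beta_{h+1}(x')\ge\zeta\max_\pi d_{h+1}^\pi(x')$; the case $\mu_{h+1}(x') = 0$ (which also ensures the importance weights are well defined) is handled by the convention above. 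For Property 2, fix $(x,a)$ and $g\in\MG$ and consider the summands $Z_j := \langle\phi_h(x,a),\psi_{h+1}^j\rangle\,g(\tilde x_{h+1}^j)$, which are i.i.d.\ over $j$. Using $\langle\phi_h(x,a),\mu_{h+1}(x')\rangle\le\|\phi_h(x,a)\|_\infty\|\mu_{h+1}(x')\|_1\le\|\mu_{h+1}(x')\|_1$, the pointwise estimate, and $|g|\le1$, one gets $|Z_j|\le\frac{1}{m\zeta\etarch}$; and by the change-of-measure identity $\E_{\tilde x\sim\beta_{h+1}}\!\left[\frac{\mu_{h+1}(\tilde x)}{\beta_{h+1}(\tilde x)}g(\tilde x)\right] = \sum_{x'}\mu_{h+1}(x')g(x')$ one gets $\E[\sum_j Z_j] = c\sum_{x'}\BP_h(x'\mid x,a)g(x') = c\,\E_{x'\sim\BP_h(x,a)}[g(x')]$. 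Hoeffding's inequality together with a union bound over the $|\MG|$ functions $g$ shows that, with probability $\ge1-\delta/2$, the quantity $|\sum_j Z_j - c\,\E_{x'}[g(x')]|$ is at most $\frac{1}{\zeta\etarch}\sqrt{2\log(4|\MG|/\delta)/m}$ for all $g\in\MG$ simultaneously; adding the scaling bias $(1-c)\,|\E_{x'}[g(x')]|\le 1-c$, which is dominated by this deviation, yields the stated bound, with the factor $2$ in the claim providing the room needed for the $\delta$-splitting and the bias.

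Property 3 is a second Hoeffding bound: the nonnegative i.i.d.\ quantities $\|\psi_{h+1}^j\|_1 = \frac{c}{m}\|\mu_{h+1}(\tilde x_{h+1}^j)\|_1/\beta_{h+1}(\tilde x_{h+1}^j)$ are each at most $\frac{1}{m\zeta\etarch}$ by the pointwise estimate, and have total expectation $c\sum_{x'}\|\mu_{h+1}(x')\|_1\le c\,\Cnrm$ by the $\ell_1$-boundedness of $M$ (\cref{defn:l1lmdp}), so with probability $\ge1-\delta/2$ we have $\sum_j\|\psi_{h+1}^j\|_1\le c\,\Cnrm + \frac{c}{\zeta\etarch}\sqrt{\log(2/\delta)/(2m)}$, which is $\le\Cnrm$ by the choice of $c$. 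A union bound over the two failure events gives all three properties with probability $\ge1-\delta$. The one delicate point — bookkeeping rather than a genuine obstacle — is that the single downscaling constant $c$ must simultaneously be large enough that its fluctuation absorbs the Hoeffding error in Property 3 (forcing $1-c$ to be of order $\Cnrm^{-1}(\zeta\etarch)^{-1}m^{-1/2}\sqrt{\log(1/\delta)}$) and small enough that the induced bias stays below the Hoeffding radius in Property 2; both requirements hold because $\Cnrm\ge1$ and $|\MG|\ge1$.
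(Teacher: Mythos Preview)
Your proposal is correct and follows essentially the same approach as the paper: define $\psi_{h+1}^j$ as an importance-weighted rescaling $\frac{c}{m}\cdot\frac{\mu_{h+1}(\tilde x_{h+1}^j)}{\beta_{h+1}(\tilde x_{h+1}^j)}$, derive the pointwise bound $\|\mu_{h+1}(x')\|_1/\beta_{h+1}(x')\le 1/(\zeta\etarch)$ from reachability plus the coverage hypothesis, and use Hoeffding for Properties~2 and~3 with the downscaling absorbing the deviation in Property~3. The paper's only cosmetic difference is the choice of scaling constant: it takes $c = 1-\ep/2$ with $\ep$ equal to the target error bound itself (so the bias in Property~2 is exactly $\ep/2$ and the Hoeffding deviation there is the other $\ep/2$), whereas you tune $c$ to Property~3 first and then argue the resulting bias is dominated in Property~2 via $\Cnrm\ge 1$.
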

\begin{proof}
  Set $\epsilon := \frac{2}{\zeta\etarch} \sqrt{\frac{2\log(2|\MG|/\delta)}{m}}$. For each $j \in [m]$, we define the (random) vector $\psi_{h+1}^j := \frac{1-\ep/2}{m}\cdot  \frac{\mu_{h+1}(\tilde x_{h+1}^j)}{\beta_{h+1}(\tilde x_{h+1}^j)}$.
  Note that for any $x' \in \MX$, we have by \cref{eq:beta-hyp} and reachability (\cref{defn:rch}) that
  \begin{equation} \beta_{h+1}(x') \geq \zeta \cdot \max_{\pi \in \Pi} d_{h+1}^\pi(x') \geq \zeta \etarch \cdot \norm{\mu_{h+1}(x')}_1.
\label{eq:beta-rch-lb}
  \end{equation}

   Now fix $(x,a) \in \MX\times \MA$. We use \cref{eq:beta-rch-lb} to prove the lemma's claims. 
  
  \paragraph{Proof of 1.} The first claimed statement of the lemma follows immediately from the fact that each $\psi_{h+1}^j$ is a rescaling of some $\mu_{h+1}(x')$ by a positive scalar.
  
  \paragraph{Proof of 2.} For each $j \in [m]$ and $g \in \MG$, we have with probability $1$ that
  \begin{align}
\left|\lng \phi_h(x,a), m\psi_{h+1}^j \rng \cdot g(\tilde x_{h+1}^j)\right| \leq \left\lng \phi_h(x,a), \frac{\mu_{h+1}(\tilde x_{h+1}^j)}{\beta_{h+1}(\tilde x_{h+1}^j)} \right\rng \leq \frac{\norm{\mu_{h+1}(\tilde x_{h+1}^j)}_1}{\beta_{h+1}(\tilde x_{h+1}^j)} \leq \frac{1}{\zeta \etarch}\nonumber,
  \end{align}
  by the bounds $|g| \leq 1$, $\| \phi_h(x,a) \|_\infty \leq 1$, and \cref{eq:beta-rch-lb}. Also, for any $g \in \MG$ we have that in expectation over the draw of $\tilde x_{h+1}^j \sim \beta_{h+1}$ (which determines $\hat{\mu}_{h+1}^j$),
  \begin{align}
    \E \left[ \lng \phi_h(x,a), m\psi_{h+1}^j \rng \cdot g(\tilde x_{h+1}^j) \right] =& (1 - \ep/2) \cdot \sum_{x' \in \MX} \beta_{h+1}(x') \cdot \left \lng \phi_h(x,a), \frac{\mu_{h+1}(x')}{\beta_{h+1}(x')} \right\rng \cdot g(x') \nonumber\\
    = & (1-\ep/2) \cdot \E_{x' \sim \BP_h(x,a)} [ g(x')]\nonumber.
  \end{align}
  By Hoeffding's inequality and a union bound over $g \in \MG$, it follows that with probability at least $1-\delta/2$ over the draw of $\MD_{h}$, for all $g \in \MG$,%
  \begin{align}
\left| (1-\ep/2) \cdot \E_{x' \sim \BP_h(x,a)}[g(x')] - \frac{1}{m} \sum_{j=1}^m \lng \phi_h(x,a), m\psi_{h+1}^j \rng \cdot g(\tilde x_{h+1}^j) \right| \leq \frac{1}{\zeta \etarch} \cdot \sqrt{ \frac{2\log (2|\MG|/\delta)}{m}} = \frac{\ep}{2}\label{eq:chernoff-mu-g}.
  \end{align}
  by choice of $\epsilon$. In the event that \cref{eq:chernoff-mu-g} holds, it follows that
\begin{align}
\left| \E_{x' \sim \BP_h(x,a)}[g(x')] - \frac{1}{m} \sum_{j=1}^m \lng \phi_h(x,a), m\psi_{h+1}^j \rng \cdot g(\tilde x_{h+1}^j) \right| \leq \ep
  \end{align}
  since $\EE_{x'\sim\BP_h(x,a)}[g(x')] \in [-1,1]$, which establishes the second claim of the lemma.

  \paragraph{Proof of 3.} To establish the final claimed statement, for each $j \in [m]$, we compute that in expectation over the draw of $\tilde x_{h+1}^j \sim \beta_{h+1}$,
  \begin{align}
\E \left[ \| m\psi_{h+1}^j \|_1 \right] = & (1-\ep/2) \cdot \sum_{x' \in \MX}\beta_{h+1}(x')  \cdot \frac{\norm{\mu_{h+1}(x')}_1}{\beta_{h+1}(x')} \leq (1-\ep/2) \cdot \Cnrm\nonumber.
  \end{align}
  Additionally, by \cref{eq:beta-rch-lb}, $\| m\psi_{h+1}^j \|_1 \leq 1/(\zeta \etarch)$ for all $j \in [m]$ with probability $1$, so Hoeffding's inequality gives that with probability at least $1-\delta/2$ over the draw of $\MD_{h+1}$,
  \begin{align}
\frac{1}{m} \sum_{j=1}^m \| m\psi_{h+1}^j \|_1 \leq (1-\ep/2) \cdot \Cnrm + \frac{1}{\zeta\etarch} \cdot \sqrt{\frac{2\log 2/\delta}{m}} \leq \Cnrm\nonumber,
  \end{align}
  by choice of $\epsilon$ and the fact that $\Cnrm \geq 1$. 
\end{proof}

\begin{lemma}\label{lem:feasibility}
There is a universal constant $C_{\ref{lem:feasibility}}$ so that the following holds. Let $\epcvx, \delta,\alpha > 0$ and $n,m \in \NN$. Fix $h \in [H-1]$ and suppose that $\Psi_h$ is an $\alpha$-approximate policy cover for step $h$. Suppose that the following bounds hold: \[n \geq C_{\ref{lem:feasibility}}\epcvx^{-4}\Cnrm^2 \log(2d/\delta),\]  \[m \geq C_{\ref{lem:feasibility}}\epcvx^{-2}A^2(\alpha\etarch)^{-2}\log(4dn/\delta).\] %
Then with probability at least $1-\delta$, the algorithm $\ESC(h, \Psi_h, \epcvx,\Cnrm,n,m)$ (\cref{alg:esc}) produces a solution $(\hat\mu_{h+1}^j)_{j=1}^m$ to Program~\cref{eq:program}.

\end{lemma}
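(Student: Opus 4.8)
I will exhibit an explicit feasible point for Program~\cref{eq:program}, namely the importance-weighted vectors $\psi_{h+1}^1,\dots,\psi_{h+1}^m$ provided by \cref{lem:randomize-no-trunc}. Constraints \cref{eq:hatmu-nneg-constraint} and \cref{eq:hatmu-cnrm-constraint} then follow directly from parts~1 and~3 of that lemma, so the real work is constraint \cref{eq:what-wprime-constraint}. For this I combine two estimates: (a) part~2 of \cref{lem:randomize-no-trunc}, which controls the gap between $\sum_{j}\langle\phi_h(x_h^i,a_h^i),\psi_{h+1}^j\rangle\,\phiavg_{h+1}(\tilde x_{h+1}^j)_\ell$ and the population quantity $\E_{x'\sim\BP_h(x_h^i,a_h^i)}[\phiavg_{h+1}(x')_\ell]$; and (b) a standard slow-rate Lasso prediction-error bound (\cref{cor:random-design-prediction-error}), which controls the gap between $\langle\phi_h(x_h^i,a_h^i),\hat\bw_\ell\rangle$ and the same population quantity. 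Note the regression \cref{eq:wj-guarantee} is well-specified with an $\ell_1$-bounded regressor: by linearity of $M$, $\E_{x'\sim\BP_h(x,a)}[\phiavg_{h+1}(x')_\ell]=\langle\phi_h(x,a),\,\sum_{x'}\phiavg_{h+1}(x')_\ell\,\mu_{h+1}(x')\rangle$ and $\|\sum_{x'}\phiavg_{h+1}(x')_\ell\,\mu_{h+1}(x')\|_1\le\sum_{x'}\|\mu_{h+1}(x')\|_1\le\Cnrm$.

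\textbf{Organizing the randomness.} Since $\Psi_h$ is an $\alpha$-approximate policy cover, \cref{lemma:beta-nu-coverage} gives that the distribution $\beta_{h+1}$ of the states in $\MD_h$ satisfies \cref{eq:beta-hyp} with $\zeta=\alpha/A$. Condition on the covariate part $(x_h^i,a_h^i)_{i=1}^n$ of $\MC_h$; given this, $\MD_h$ (hence the $\psi_{h+1}^j$) is independent of the responses $x_{h+1}^i\sim\BP_h(\cdot\mid x_h^i,a_h^i)$, and the $(x_h^i,a_h^i)$ are deterministic. Apply \cref{lem:randomize-no-trunc} with $\zeta=\alpha/A$ and $\MG=\{x'\mapsto\phiavg_{h+1}(x')_\ell:\ell\in[d]\}$ (so $|\MG|=d$ and $|\phiavg_{h+1}(x')_\ell|\le1$ since $\|\phi_{h+1}\|_\infty\le1$), once for each fixed $(x_h^i,a_h^i)$ with failure probability $\delta/(2n)$; a union bound over $i\in[n]$ gives that with probability at least $1-\delta/2$ over $\MD_h$ we have $\sum_j\|\psi_{h+1}^j\|_1\le\Cnrm$, and for every $i\in[n]$: $\langle\phi_h(x_h^i,a_h^i),\psi_{h+1}^j\rangle\ge0$ for all $j$, and
\begin{equation}
\Bigl|\,\E_{x'\sim\BP_h(x_h^i,a_h^i)}[\phiavg_{h+1}(x')_\ell]-\sum_{j=1}^m\langle\phi_h(x_h^i,a_h^i),\psi_{h+1}^j\rangle\,\phiavg_{h+1}(\tilde x_{h+1}^j)_\ell\,\Bigr|\le\frac{2A}{\alpha\etarch}\sqrt{\frac{2\log(4dn/\delta)}{m}}\qquad\forall\ell\in[d].
\label{eq:feas-iw}
\end{equation}
Independently, over the responses $x_{h+1}^i$, \cref{cor:random-design-prediction-error} applied to \cref{eq:wj-guarantee} for each $\ell$ (the responses $\phiavg_{h+1}(x_{h+1}^i)_\ell\in[-1,1]$ have conditional means equal to the population quantity above), plus a union bound over $\ell\in[d]$, gives that with probability at least $1-\delta/2$,
\begin{equation}
\frac1n\sum_{i=1}^n\bigl(\langle\phi_h(x_h^i,a_h^i),\hat\bw_\ell\rangle-\E_{x'\sim\BP_h(x_h^i,a_h^i)}[\phiavg_{h+1}(x')_\ell]\bigr)^2\le O\!\Bigl(\Cnrm\sqrt{\tfrac{\log(d/\delta)}{n}}\Bigr)\qquad\forall\ell\in[d].
\label{eq:feas-lasso}
\end{equation}

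\textbf{Combining and concluding.} On the intersection of these two events (probability $\ge1-\delta$), fix $\ell\in[d]$. For each $i$ the triangle inequality bounds $|\langle\phi_h(x_h^i,a_h^i),\hat\bw_\ell\rangle-\sum_j\langle\phi_h(x_h^i,a_h^i),\psi_{h+1}^j\rangle\,\phiavg_{h+1}(\tilde x_{h+1}^j)_\ell|$ by the $i$-th Lasso residual plus the uniform-in-$i$ bound of \cref{eq:feas-iw}; squaring, using $(a+b)^2\le2a^2+2b^2$, and averaging over $i\in[n]$ shows that the left side of \cref{eq:what-wprime-constraint} with $\hat\mu_{h+1}^j$ replaced by $\psi_{h+1}^j$ is $O\bigl(\Cnrm\sqrt{\log(d/\delta)/n}\bigr)+O\bigl(A^2(\alpha\etarch)^{-2}\log(dn/\delta)/m\bigr)$, which is $\le\epcvx^2$ provided $n\ge C_{\ref{lem:feasibility}}\epcvx^{-4}\Cnrm^2\log(2d/\delta)$ and $m\ge C_{\ref{lem:feasibility}}\epcvx^{-2}A^2(\alpha\etarch)^{-2}\log(4dn/\delta)$ for a large enough universal constant. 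Combined with \cref{eq:hatmu-nneg-constraint} (part~1 of \cref{lem:randomize-no-trunc}) and \cref{eq:hatmu-cnrm-constraint} (part~3), this shows $(\psi_{h+1}^j)_{j=1}^m$ is feasible for \cref{eq:program}, so \ESC returns some feasible $(\hat\mu_{h+1}^j)_{j=1}^m$, not $\perp$.

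\textbf{Where the difficulty lies.} With \cref{lem:randomize-no-trunc} in hand the argument is mostly bookkeeping; the delicate points are: (i) the conditioning on $(x_h^i,a_h^i)$, which makes the importance-weighting concentration (randomness of $\MD_h$) and the Lasso prediction error (randomness of the $x_{h+1}^i$) independent, so the two events can be intersected; (ii) the union bound over the $n$ covariates in \cref{eq:feas-iw}, which is exactly why $m$ must grow like $\log n$; and (iii) invoking the Lasso bound in the fixed-design, bounded-response regime, so that no restricted-eigenvalue/condition-number assumption is needed and only the slow rate $\Cnrm\,n^{-1/2}$ is available — the hypotheses on $n,m$ are tuned precisely to absorb the two error terms into $\epcvx^2$. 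Reachability of $M$ enters only inside \cref{lem:randomize-no-trunc}, to keep the importance weights bounded by $1/(\zeta\etarch)$; this is why the lower bound on $m$ carries the factor $\etarch^{-2}$.
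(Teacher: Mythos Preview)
Your proof is correct and follows essentially the same route as the paper: exhibit the importance-weighted vectors $(\psi_{h+1}^j)_j$ from \cref{lem:randomize-no-trunc} as a feasible point, get \cref{eq:hatmu-cnrm-constraint} and \cref{eq:hatmu-nneg-constraint} directly from parts~3 and~1, and handle \cref{eq:what-wprime-constraint} by combining the concentration bound of part~2 (union over the $n$ covariates) with a slow-rate Lasso in-sample error bound, via $(a+b)^2\le 2a^2+2b^2$. One small slip: the bound you write in \eqref{eq:feas-lasso} is the \emph{in-sample} (fixed-design) prediction error $\frac1n\sum_i\langle\phi_h(x_h^i,a_h^i),\hat\bw_\ell-\bw_\ell^\st\rangle^2$, which is what you need, but you cite \cref{cor:random-design-prediction-error} (an out-of-sample bound); the correct reference is \cref{lem:fixed-design-prediction-error}, which is also what the paper invokes.
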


\begin{proof}
We need to show that Program~\cref{eq:program} is feasible with high probability. Indeed, we show that for any fixed samples $(x_h^i,a_h^i)_{i=1}^n$, the program is feasible with high probability over the conditional samples $x_{h+1}^i \sim \BP_h(\cdot|x_h^i,a_h^i)$ and the dataset $\MD_h = \{ \tilde x_{h+1}^j \}_{j=1}^m $. 

First, we invoke \cref{lem:randomize-no-trunc} with failure probability $\delta/(2n)$, function class \[\MG = \{ x' \mapsto \phiavg_{h+1}(x')_\ell \ : \ \ell \in [d]\},\] and the dataset $\MD_h$ constructed in \cref{alg:esc}. By \cref{lemma:beta-nu-coverage}, the condition \cref{eq:beta-hyp} of \cref{lem:randomize-no-trunc} is satisfied with $\zeta := \alpha/A$.

Now, \cref{lem:randomize-no-trunc} gives vectors $\psi_{h+1}^1,\dots,\psi_{h+1}^m \in \RR^d$. Applying the lemma with $(x,a)$ equal to each of the $n$ pairs $(x_h^i,a_h^i)_{i=1}^n$, by a union bound over these $n$ pairs, the lemma implies that with probability at least $1-\delta/2$ over the draw of $\MD_h$, the following properties hold:

\begin{itemize}
\item For all $i \in [n]$ and $j \in [m]$,
  \[\lng \phi_h(x_h^i, a_h^i), \psi_{h+1}^j \rng \geq  0.\]
\item $\sum_{j=1}^m \| \psi_{h+1}^j \|_1 \leq \Cnrm$.
\item For all $i\in [n]$ and $\ell \in [d]$,
  \begin{equation}
  \left| \E_{x' \sim \BP_h(x_h^i,a_h^i)}[\phiavg_{h+1}(x')_\ell] - \sum_{j=1}^m \lng \phi_h(x_h^i, a_h^i), \psi_{h+1}^j \rng \cdot \phiavg_{h+1}(\tilde x_{h+1}^j)_\ell \right| \leq  \frac{2A}{\alpha\etarch} \sqrt{\frac{2\log(4dn/\delta)}{m}}\label{eq:p-hatmu-constraint}.
  \end{equation}
\end{itemize}

We claim that the tuple of vectors $(\psi_{h+1}^j)_{j=1}^m$ satisfies Program~\cref{eq:program} with high probability. Indeed, in the above event, it is immediate that \cref{eq:hatmu-cnrm-constraint} and \cref{eq:hatmu-nneg-constraint} are satisfied. It remains to show that \cref{eq:what-wprime-constraint} is satisfied with high probability.
  
  Fix $\ell \in [d]$, and define the vector $\bw_\ell^\st := \sum_{x' \in \MX} \mu_{h+1}(x') \cdot \phiavg_{h+1}(x')_\ell$, so that (by definition of $\BP_h(x,a)$),
  \[\E_{x' \sim \BP_h(x,a)}[\phiavg_{h+1}(x')_\ell] = \lng \bw_\ell^\st, \phi_h(x,a) \rng.\] Note that $\| \bw_\ell^\st \|_1 \leq \sum_{x'\in\MX}\norm{\mu_{h+1}(x')}_1 \leq \Cnrm$. Thus, we can apply \cref{lem:fixed-design-prediction-error} with covariates $X_i := \phi_h(x_h^i, a_h^i)$ ($i \in [n]$), ground truth $\bw_\ell^\st$, and responses $y_i := \phiavg_{h+1}(x_{h+1}^i)_\ell$. By definition of $\hat \bw_\ell$ in \cref{eq:wj-guarantee}, the lemma gives that, for any fixed $(x_h^i, a_h^i)_{i=1}^n$, with probability at least $1-\delta/(2d)$ over the conditional draws $x_{h+1}^i \sim \BP_h(\cdot | x_h^i, a_h^i)$, %
  \begin{align}
\frac 1n \sum_{i=1}^n  \left\lng \phi_h(x_h^i,a_h^i), \hat \bw_\ell - \bw_\ell^\st \right\rng^2 &\leq  C \cdot \Cnrm \cdot \sqrt{\frac{\log(2d^2/\delta)}{n}}\label{eq:wprime-wstar-rch}. 
  \end{align}
By the union bound, this inequality holds for all $\ell \in [d]$ with probability at least $1-\delta/2$. On the other hand, squaring the bound \cref{eq:p-hatmu-constraint} and averaging over $i \in [n]$ gives that with probability at least $1-\delta/2$, for all $\ell \in [d]$,
\begin{align}
\frac 1n \sum_{i=1}^n \left\lng \phi_h(x_h^i, a_h^i), \bw_\ell^\st - \sum_{j=1}^m \psi_{h+1}^j \phiavg_{h+1}(\tilde x_{h+1}^j)_\ell \right\rng^2 \leq \frac{8A^2}{\alpha^2\etarch^2}\frac{\log(4dn/\delta)}{m}.\label{eq:wstar-uhat}
\end{align}

Combining \cref{eq:wprime-wstar-rch} and \cref{eq:wstar-uhat} via the bound $(a+b)^2 \leq 2(a^2+b^2)$ gives that with probability at least $1-\delta$, for all $\ell \in [d]$,
\begin{align*}
\frac 1n \sum_{i=1}^n \left\lng \phi_h(x_h^i, a_h^i), \hat \bw_\ell - \sum_{j=1}^m \psi_{h+1}^j \phiavg_{h+1}(\tilde x_{h+1}^j)_\ell \right\rng^2 &\leq 2C\Cnrm\sqrt{\frac{2\log(2d/\delta)}{n}} + \frac{16A^2}{\alpha^2\etarch^2}\frac{\log(4dn/\delta)}{m}\\ 
&\leq \epcvx^2
\end{align*}
where the final inequality is by choice of $n$ and $m$. In this event, \cref{eq:what-wprime-constraint} is satisfied.
\end{proof}

\subsubsection{Approximate nonnegativity}

In this section we show that with high probability the output $(\hat\mu_{h+1}^j)_{j=1}^m$ of \ESC satisfies \cref{it:emulator-epneg-bound} of \cref{def:not-core-set}, i.e. for any policy $\pi$ and index $j \in [m]$, the expectation $\E^{\pi}[\langle \phi_h(x_h,a_h), \hat\mu_{h+1}^j\rangle]$ is approximately nonnegative. For a fixed policy $\pi$, we are able to show approximate nonnegativity using constraint \cref{eq:hatmu-nneg-constraint} of the convex program, which ensures that \[\langle \phi_h(x_h^i,a_h^i),\hat\mu_{h+1}^j\rangle \geq 0\] for all $j \in [m]$ and all datapoints $(x_h^i,a_h^i)$ from $\MC_h$. Specifically, we can rejection sample $\MC_h$ to generate a subsample $S = S(\pi) \subseteq [n]$ distributed according to $d^\pi_h(x,a)$; by the coverage property (\cref{lemma:beta-nu-coverage}) this subsample will be reasonably large, so a concentration argument shows that the empirical average $\frac{1}{|S|} \sum_{i\in S} \langle \phi_h(x_h^i,a_h^i),\hat\mu_{h+1}^j\rangle$, which is nonnegative by \cref{eq:hatmu-nneg-constraint}, will approximate the true expectation.

However, this only works for a fixed policy $\pi \in \Pi$. To show that approximate nonnegativity holds with high probability uniformly over policies $\pi$, we also need to invoke \cref{lemma:policy-disc}, which shows that there is a small set of policies $\Pidisc$ such that for any linear reward function, there is a near-optimal policy in $\Pidisc$. It then suffices to union bound over $\Pidisc$. Below, we make this argument formal.

\begin{lemma}\label{lemma:hat-mu-apx-nonnegative}
Let $n \in \NN$, $h \in [H]$, and $\epneg,\delta,\zeta \in (0,1)$. Let $(x_h^i, a_h^i)_{i=1}^n$ be i.i.d. samples from a distribution $\nu_h \in \Delta(\MX\times \MA)$ satisfying
\begin{equation}
\nu_h(x,a) \geq \zeta \cdot \max_{\pi \in\Pi} d^\pi_h(x,a) \qquad \forall (x,a) \in \MX\times \MA
\label{eq:nu-hyp}
\end{equation}
Suppose that
\[n \geq \frac{65536\Cnrm^2 H^3}{\epneg^4 \zeta}\log(8AH\Cnrm/\epneg)\log(4d/\delta).\] Then it holds with probability at least $1-\delta$ that for all $\pi \in \Pi$ and all $\theta \in \RR^d$,
\[ \langle \EE^\pi[\phi_h(x_h,a_h)], \theta\rangle \geq \inf_{i \in [n]} \langle \phi_h(x_h^i, a_h^i), \theta\rangle - \epneg\norm{\theta}_1.\]
\end{lemma}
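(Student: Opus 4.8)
The plan is to reduce the claim to finitely many policies using the policy‑discretization lemma, and then, for each of those policies, convert the coverage hypothesis \cref{eq:nu-hyp} into an i.i.d.\ sample from the occupancy measure $d^\pi_h$ via rejection sampling, on which a vector Hoeffding bound applies. First I would normalize: by homogeneity it suffices to prove the inequality for all $\theta$ with $\norm{\theta}_1\le 1$ (divide through by $\norm{\theta}_1$), so that $\langle\phi_h(x,a),\theta\rangle\in[-1,1]$ for every $(x,a)$. Since the right‑hand side $\inf_{i\in[n]}\langle\phi_h(x_h^i,a_h^i),\theta\rangle$ does not depend on $\pi$ and $\langle\EE^\pi[\phi_h(x_h,a_h)],\theta\rangle=\EE^\pi[\langle\phi_h(x_h,a_h),\theta\rangle]$, it is enough to lower bound $\min_{\pi\in\Pi}\langle\EE^\pi[\phi_h(x_h,a_h)],\theta\rangle$. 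Applying \cref{lemma:policy-disc} to the (suitably shifted/rescaled) linear step‑$h$ reward $(x,a)\mapsto-\langle\phi_h(x,a),\theta\rangle$, with accuracy a small multiple of $\epneg$, yields a data‑independent set $\Pidisc\subseteq\Pi$ with $\log\abs{\Pidisc}=O(\log(AH\Cnrm/\epneg))$ such that, for every $\theta$ with $\norm{\theta}_1\le1$,
\[\min_{\pi\in\Pi}\langle\EE^\pi[\phi_h(x_h,a_h)],\theta\rangle\ \ge\ \min_{\pi\in\Pidisc}\langle\EE^\pi[\phi_h(x_h,a_h)],\theta\rangle-\tfrac{\epneg}{2}.\]
So it remains to show that, with probability at least $1-\delta$, $\langle\EE^\pi[\phi_h(x_h,a_h)],\theta\rangle\ge\inf_{i\in[n]}\langle\phi_h(x_h^i,a_h^i),\theta\rangle-\tfrac{\epneg}{2}$ holds simultaneously for all $\pi\in\Pidisc$ and all $\norm{\theta}_1\le1$.

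To handle a fixed $\pi\in\Pidisc$, I would use fresh randomness to form $S=S(\pi)\subseteq[n]$ by including each index $i$ independently with probability $\zeta\, d^\pi_h(x_h^i,a_h^i)/\nu_h(x_h^i,a_h^i)$, which lies in $[0,1]$ by \cref{eq:nu-hyp}. A short computation shows $\Pr[i\in S]=\zeta$, that conditioned on the realized set $S$ the points $\{(x_h^i,a_h^i):i\in S\}$ are i.i.d.\ from $d^\pi_h$, and that $\abs{S}\sim\mathrm{Bin}(n,\zeta)$, so a Chernoff bound gives $\abs{S}\ge\zeta n/2$ except with probability $e^{-\zeta n/8}$. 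Conditioning on such an $S$, since each coordinate of $\phi_h$ lies in $[-1,1]$, Hoeffding's inequality and a union bound over the $d$ coordinates give, except with probability $\delta/(2\abs{\Pidisc})$, that
\[\Bigl\|\tfrac{1}{\abs{S}}\sum_{i\in S}\phi_h(x_h^i,a_h^i)-\EE^\pi[\phi_h(x_h,a_h)]\Bigr\|_\infty\ \le\ \gamma\ :=\ \sqrt{\tfrac{4\log(4d\abs{\Pidisc}/\delta)}{\zeta n}}.\]
By H\"older's inequality this gives, for every $\norm{\theta}_1\le1$, that $\langle\EE^\pi[\phi_h(x_h,a_h)],\theta\rangle\ge\tfrac{1}{\abs{S}}\sum_{i\in S}\langle\phi_h(x_h^i,a_h^i),\theta\rangle-\gamma\ge\min_{i\in S}\langle\phi_h(x_h^i,a_h^i),\theta\rangle-\gamma\ge\inf_{i\in[n]}\langle\phi_h(x_h^i,a_h^i),\theta\rangle-\gamma$, where the last inequality uses $S\subseteq[n]$. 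Note the free uniformity over $\theta$: it is enough to control the $\ell_\infty$ deviation of the empirical mean feature vector.

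Finally I would union‑bound over $\pi\in\Pidisc$ (the $e^{-\zeta n/8}$ terms are dominated by $\delta/(2\abs{\Pidisc})$ under the stated lower bound on $n$), obtaining the displayed inequality uniformly over $\pi\in\Pidisc$ and $\norm{\theta}_1\le1$ with probability $\ge1-\delta$; together with the discretization step and the normalization this gives the lemma, provided the discretization error plus $\gamma$ is at most $\epneg$ — which is exactly what the hypothesis $n\gtrsim\Cnrm^2 H^3\epneg^{-4}\zeta^{-1}\log(AH\Cnrm/\epneg)\log(d/\delta)$ is for (the $\Cnrm^2H^3$ and the $\epneg^{-4}$, rather than the naive $\epneg^{-2}$, are the price of making $\Pidisc$ fine enough in the discretization step). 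The one genuinely non‑routine ingredient is the uniformity over the infinite class $\Pi$: a direct union bound is hopeless, and \cref{lemma:policy-disc} is precisely what lets us replace $\Pi$ by the small data‑independent family $\Pidisc$, using that minimizing a linear functional of the step‑$h$ feature expectation is an MDP planning problem whose optimum is (nearly) attained inside a small set of policies. Everything else — rejection sampling to turn coverage into an i.i.d.\ sample from $d^\pi_h$, and vector Hoeffding — is standard.
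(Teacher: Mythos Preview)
Your approach is essentially identical to the paper's: discretize $\Pi$ via \cref{lemma:policy-disc}, rejection-sample from $\nu_h$ to obtain an i.i.d.\ subsample from each $d^\pi_h$, apply coordinate-wise Hoeffding for an $\ell_\infty$ bound on the empirical mean feature vector (which gives uniformity over $\theta$ for free via H\"older), and union-bound over $\Pidisc$. One small slip: $\log|\Pidisc|$ is not $O(\log(AH\Cnrm/\epneg))$ but $O(\Cnrm^2 H^3 \epneg^{-2}\log(AH\Cnrm/\epneg)\log d)$ --- your final paragraph shows you know this (it is exactly where the $\epneg^{-4}$ in $n$ comes from), so just correct the stated bound.
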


\begin{proof}
Let $\Pidisc$ be the set of policies guaranteed by Lemma~\ref{lemma:policy-disc} with parameter $\epdisc := \epneg/2$. By the bound on $|\Pidisc|$ and choice of $n$, note that $n \geq 32\epneg^{-2}\zeta^{-1}\log(4d|\Pidisc|/\delta)$. Now fix some $\pi \in \Pidisc$. Define a random set $S \subseteq [n]$ by including index $i$ with probability $\frac{\zeta d_h^\pi(x_h^i, a_h^i)}{\nu_h(x_h^i,a_h^i)}.$ By \cref{eq:nu-hyp}, this nonnegative fraction is at most $1$, so the sampling procedure is well-defined. Moreover, in expectation over the randomness in both $(x_h^i,a_h^i)_{i=1}^n$ and $S$,
\[ \EE|S| = \sum_{i=1}^n \EE_{(x_h^i,a_h^i) \sim \nu_h} \frac{\zeta d_h^\pi(x_h^i,a_h^i)}{\nu_h(x_h^i,a_h^i)} = \zeta n.\]
By the Chernoff bound and the fact that $n \geq 8\zeta^{-1}\log(2|\Pidisc|/\delta)$, we have $\Pr[|S| \geq \zeta n/2] \geq 1-\delta/(2|\Pidisc|)$. Condition on $|S|$ and suppose that indeed $|S| \geq \zeta n/2$. Write $S = \{i_1 < \dots < i_{|S|}\}$, and for notational simplicity, let $(\tilde{x}^j,\tilde{a}^j)$ denote $(x^{i_j}_h, a^{i_j}_h)$ for each $1 \leq j \leq |S|$. By construction of the sampling procedure, as $j$ ranges from $1$ to $|S|$, the random variables $(\tilde{x}^j,\tilde{a}^j)$ are independent and identically distributed according to the density $(x,a) \mapsto d^\pi_h(x,a)$. Thus, by Hoeffding's inequality, with probability at least $1-2d\exp(-\epneg^2|S|/8) \geq 1-\delta/(2|\Pidisc|)$ over the draws $(\tilde{x}^j,\tilde{a}^j) \sim d_h^\pi$, we have
\[\norm{\EE^\pi[ \phi_h(x_h,a_h)] - \frac{1}{|S|} \sum_{j=1}^{|S|} \phi_h(\tilde{x}^j,\tilde{a}^j)}_\infty \leq \epneg/2.\]
In this event, we get that for any $\theta \in \RR^d$,
\begin{align*}
\langle \EE^\pi[\phi_h(x_h,a_h)], \theta\rangle
&\geq \frac{1}{|S|}\sum_{j=1}^{|S|} \langle\phi_h(\tilde{x}^j,\tilde{a}^j),\theta\rangle - \epneg \norm{\theta}_1/2 \\
&\geq \inf_{i\in [n]} \langle \phi_h(x_h^i,a_h^i), \theta\rangle - \epneg \norm{\theta}_1/2.
\end{align*}
Removing the conditioning on $|S|$, the above inequality holds with probability at least $1-\delta/|\Pidisc|$ over the randomness in $(x_h^i,a_h^i)_{i=1}^n$. By a union bound over $\pi \in \Pidisc$, with probability at least $1-\delta$, it holds for all $\pi \in \Pidisc$ and $\theta \in \RR^d$ that
\[\langle \EE^\pi[\phi_h(x_h,a_h)],\theta\rangle \geq \inf_{i \in [n]} \langle \phi_h(x_h^i,a_h^i),\theta\rangle - \epneg \norm{\theta}_1/2.\]
Suppose that this event holds. By the guarantee of Lemma~\ref{lemma:policy-disc}, we conclude that for all $\pi \in \Pi$ and $\theta \in \RR^d$,
\[ \langle \EE^\pi[\phi_h(x_h,a_h)],\theta\rangle \geq \inf_{i \in [n]} \langle \phi_h(x_h^i,a_h^i),\theta\rangle - \epneg\norm{\theta}_1,\]
which completes the proof. %
\end{proof}

\subsubsection{All-policy approximation}

It remains to show that the output $(\hat\mu_{h+1}^j)_{j=1}^m$ of \ESC satisfies \cref{it:emulator-epapx-bound} with high probability. This is a consequence of constraint \cref{eq:what-wprime-constraint} in \cref{eq:program}, together with generalization bounds for $\ell_1$-bounded linear predictors and the coverage property of the distribution from which the samples $\MC_h = (x_h^i,a_h^i)_{i=1}^n$ are drawn.

\begin{lemma}
  \label{lem:muhat-approx}
 There is a universal constant $C_{\ref{lem:muhat-approx}}$ so that the following holds. Fix $n,m \in \NN$, $h \in [H]$, and $\epcvx,\alpha,\delta>0$. Let $\Psi_h$ be an $\alpha$-approximate policy cover for step $h$. With probability at least $1-\delta$, the output of $\ESC(h,\Psi_h,\epcvx,\Cnrm,n,m)$ is either $\perp$ or a set of vectors $(\hat \mu_{h+1}^j)_{j=1}^m$ satisfying the following: there are states $(\tilde{x}_{h+1}^j)_{j=1}^m$ so that for any policy $\pi \in \Pi$,
  \begin{align*}
&\left\| \sum_{x \in \MX} \lng \E^\pi[\phi_h(x_h, a_h)], \mu_{h+1}(x) \rng \cdot \phiavg_{h+1}(x) - \sum_{j=1}^m \lng \E^\pi[\phi_h(x_h, a_h)], \hat \mu_{h+1}^j \rng \cdot \phiavg_{h+1}(\tilde x_{h+1}^j) \right\|_\infty^2 \\
&\leq \frac{2A}{\alpha}\epcvx^2 + C_{\ref{lem:muhat-approx}}\frac{\Cnrm^2 A\sqrt{\log(2d/\delta)}}{\alpha\sqrt{n}}.
  \end{align*}
\end{lemma}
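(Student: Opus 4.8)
The plan is to reduce the claim coordinate by coordinate to a statement about how well a single $\ell_1$-bounded linear functional is approximated in the $\nu_h$-population sense, and then to transfer that statement to \emph{all} policies by Jensen's inequality together with the coverage property of $\Psi_h$. First I would dispose of the $\perp$ case, where the conclusion is vacuous, so assume \ESC returns vectors $(\hat\mu_{h+1}^j)_{j=1}^m$; these necessarily satisfy constraints \cref{eq:hatmu-cnrm-constraint} and \cref{eq:what-wprime-constraint} of Program \cref{eq:program}, and we take $(\tilde x_{h+1}^j)_{j=1}^m$ to be the states in $\MD_h$. For each coordinate $\ell\in[d]$ I would introduce $\bw_\ell^\st := \sum_{x'\in\MX}\mu_{h+1}(x')\,\phiavg_{h+1}(x')_\ell$ and $\hat\bv_\ell := \sum_{j=1}^m\hat\mu_{h+1}^j\,\phiavg_{h+1}(\tilde x_{h+1}^j)_\ell$. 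Since every entry of $\phiavg_{h+1}$ is bounded by $1$ in absolute value, $\norm{\bw_\ell^\st}_1\leq\sum_{x'}\norm{\mu_{h+1}(x')}_1\leq\Cnrm$ and $\norm{\hat\bv_\ell}_1\leq\sum_j\norm{\hat\mu_{h+1}^j}_1\leq\Cnrm$ (the latter by constraint \cref{eq:hatmu-cnrm-constraint}), so $\bz_\ell := \bw_\ell^\st-\hat\bv_\ell$ satisfies $\norm{\bz_\ell}_1\leq 2\Cnrm$. A one-line computation shows that the $\ell$-th coordinate of the vector appearing in the lemma equals $\langle\E^\pi[\phi_h(x_h,a_h)],\bz_\ell\rangle$, so it suffices to bound $\max_\ell\langle\E^\pi[\phi_h(x_h,a_h)],\bz_\ell\rangle^2$.

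Next I would obtain an in-sample bound on $\tfrac1n\sum_i\langle\phi_h(x_h^i,a_h^i),\bz_\ell\rangle^2$. Constraint \cref{eq:what-wprime-constraint} already bounds $\tfrac1n\sum_i\langle\phi_h(x_h^i,a_h^i),\hat\bw_\ell-\hat\bv_\ell\rangle^2$ by $\epcvx^2$, and, exactly as in the proof of \cref{lem:feasibility} (applying \cref{lem:fixed-design-prediction-error} with covariates $\phi_h(x_h^i,a_h^i)$, ground truth $\bw_\ell^\st$, and bounded responses $\phiavg_{h+1}(x_{h+1}^i)_\ell\in[-1,1]$), one gets $\tfrac1n\sum_i\langle\phi_h(x_h^i,a_h^i),\hat\bw_\ell-\bw_\ell^\st\rangle^2 = O(\Cnrm^2\sqrt{\log(d/\delta)/n})$ for all $\ell$, with probability at least $1-\delta/2$ (union over $\ell$, using $\Cnrm\geq1$). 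Combining these via $(a+b)^2\leq 2a^2+2b^2$ yields $\tfrac1n\sum_i\langle\phi_h(x_h^i,a_h^i),\bz_\ell\rangle^2 \leq 2\epcvx^2 + O(\Cnrm^2\sqrt{\log(d/\delta)/n})$.

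To pass from this in-sample bound to a population bound under $\nu_h$, noting that the $(x_h^i,a_h^i)$ are i.i.d.\ from $\nu_h$ by \cref{lemma:beta-nu-coverage}, I would invoke a uniform convergence bound for the squared-loss class $\{(x,a)\mapsto\langle\phi_h(x,a),\bz\rangle^2 : \norm{\bz}_1\leq 2\Cnrm\}$: since $\norm{\phi_h(x,a)}_\infty\leq1$, this class is bounded by $4\Cnrm^2$; the linear functionals $\langle\phi_h(\cdot),\bz\rangle$ have Rademacher complexity $O(\Cnrm\sqrt{\log d/n})$ by Massart's lemma over the $2d$ vertices of the scaled $\ell_1$-ball (Hölder turns the supremum over $\norm{\bz}_1\leq 2\Cnrm$ into a coordinatewise maximum); and $t\mapsto t^2$ is $O(\Cnrm)$-Lipschitz on $[-2\Cnrm,2\Cnrm]$, so by contraction and the standard symmetrization/McDiarmid argument the empirical and population squared errors agree up to $O(\Cnrm^2\sqrt{\log(d/\delta)/n})$, simultaneously over all $\bz$ with $\norm{\bz}_1\leq 2\Cnrm$, with probability at least $1-\delta/2$. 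Applying this with the data-dependent $\bz_\ell$ gives $\E_{\nu_h}[\langle\phi_h,\bz_\ell\rangle^2]\leq 2\epcvx^2 + O(\Cnrm^2\sqrt{\log(d/\delta)/n})$ for all $\ell$. Finally, for any $\pi\in\Pi$ I would write $\E^\pi[\phi_h(x_h,a_h)]=\sum_{x,a}d_h^\pi(x,a)\phi_h(x,a)$, apply Jensen to $t\mapsto t^2$ to get $\langle\E^\pi[\phi_h(x_h,a_h)],\bz_\ell\rangle^2\leq\sum_{x,a}d_h^\pi(x,a)\langle\phi_h(x,a),\bz_\ell\rangle^2$, and then use the coverage bound $d_h^\pi(x,a)\leq (A/\alpha)\,\nu_h(x,a)$ from \cref{lemma:beta-nu-coverage} to bound the right side by $(A/\alpha)\,\E_{\nu_h}[\langle\phi_h,\bz_\ell\rangle^2]\leq \tfrac{2A}{\alpha}\epcvx^2 + O\!\big(\tfrac{\Cnrm^2 A}{\alpha}\sqrt{\log(d/\delta)/n}\big)$. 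Taking the maximum over $\ell\in[d]$ and union-bounding the two failure events (total probability at most $\delta$) yields the claim, with constants absorbed into $C_{\ref{lem:muhat-approx}}$ and $\log$-of-constant factors absorbed into $\log(2d/\delta)$.

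The main obstacle is the data-dependence of $\bz_\ell$: because $\hat\bv_\ell$ and the Lasso vector $\hat\bw_\ell$ are functions of the very dataset $\MC_h$ used to form the empirical average, no single-vector concentration inequality suffices, which forces the uniform-convergence step over the whole $\ell_1$-ball of radius $2\Cnrm$. The delicate point there is obtaining the clean $O(\Cnrm^2\sqrt{\log d/n})$ rate (only logarithmic in $d$), since this is exactly what makes the final bound scale with $\Cnrm$ rather than $d$; everything else — the coordinatewise reduction, the triangle inequality combining constraint \cref{eq:what-wprime-constraint} with the Lasso prediction-error bound, and the Jensen-plus-coverage transfer to all policies — is routine.
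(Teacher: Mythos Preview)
Your proposal is correct and follows essentially the same approach as the paper: reduce to coordinatewise linear functionals, combine the Lasso in-sample error on $\hat\bw_\ell - \bw_\ell^\st$ with constraint \cref{eq:what-wprime-constraint} on $\hat\bw_\ell - \hat\bv_\ell$, apply uniform convergence over the $\ell_1$-ball of radius $2\Cnrm$ (which is exactly the paper's \cref{lem:l1-generalization}) to handle the data-dependence of $\bz_\ell$, and then transfer to all policies via Jensen plus the coverage bound from \cref{lemma:beta-nu-coverage}. The only cosmetic difference is the order of operations: the paper first passes $\hat\bw_\ell - \bw_\ell^\st$ to population directly via \cref{cor:random-design-prediction-error} and $\hat\bw_\ell - \hat\bv_\ell$ to population via \cref{lem:l1-generalization}, then combines, whereas you combine in-sample first and apply the uniform convergence once to $\bz_\ell$; both orderings yield the same bound.
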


\begin{proof}
  As in the proof of \cref{lem:feasibility}, for each $\ell \in [d]$ we define $\bw_\ell^\st := \sum_{x \in \MX} \mu_{h+1}(x) \cdot \phiavg_{h+1}(x)_\ell$, so that $\E_{x' \sim \BP_h(x,a)}[\phiavg_{h+1}(x')_\ell] = \lng \bw_\ell^\st, \phi_h(x,a) \rng$. Observing that $\norm{\bw_\ell^\st}_1 \leq \Cnrm$, we apply \cref{cor:random-design-prediction-error} to compare $\bw^\st_\ell$ with the $\ell_1$-constrained regressor $\hat{\bw}_\ell$ (defined in \cref{eq:wj-guarantee}).

  Specifically, let $\nu_h\in\Delta(\MX\times\MA)$ be the distribution of each sample $(x_h^i,a_h^i)$ in $\MC_h$. By \cref{cor:random-design-prediction-error} and a union bound over $\ell \in [d]$, there is an event $\ME_1$ that occurs with probability at least $1-\delta/2$ over the draw of $\MC_h = \{(x_h^i, a_h^i, x_{h+1}^i)\}_{i \in [n]}$; in the event $\ME_1$, we have that for all $\ell \in [d]$,
  \begin{align}
    \E_{(x, a) \sim \nu_h} \left[ \lng \hat \bw_\ell - \bw_\ell^\st, \phi_h(x,a) \rng^2 \right] \leq C_{\ref{cor:random-design-prediction-error}} \cdot \Cnrm^2 \cdot \sqrt{\frac{\log(2d^2/\delta)}{n}}\label{eq:wprime-wstar-nu}. 
  \end{align}
  By \cref{lem:l1-generalization}, there is an event $\ME_2$ that occurs with probability at least $1-\delta/2$ over the draw of $\MC_h$, and in the event $\ME_2$, we have for all $\theta \in \RR^d$ that
  \begin{equation}
  \left|\EE_{(x,a)\sim\nu_h}[\langle \phi_h(x,a),\theta\rangle^2] - \frac{1}{n}\sum_{i=1}^n \langle \phi_h(x_h^i,a_h^i),\theta\rangle^2\right| \leq \frac{C_{\ref{lem:l1-generalization}}\norm{\theta}_1^2\sqrt{\log(d/\delta)}}{\sqrt{n}}.
  \label{eq:phi-generalization}
  \end{equation}
  for some constant $C$. We claim that in the event $\ME_1 \cap \ME_2$, the output of \ESC{} satisfies the guarantee claimed in the lemma statement. If the output is $\perp$, this is immediate. Otherwise, the output is a solution $(\hat\mu_{h+1}^j)_{j=1}^m$ to Program~\cref{eq:program}. Fix any $\ell \in [d]$. Set $\theta := \hat{\bw}_\ell - \sum_{j=1}^m \hat\mu_{h+1}^j \cdot \phiavg_{h+1}(\tilde x_{h+1}^j)_\ell$. By the constraint $\norm{\hat{\bw}_\ell} \leq \Cnrm$ in \cref{eq:wj-guarantee} together with the program constraint \cref{eq:hatmu-cnrm-constraint}, we have $\norm{\theta}_1 \leq 2\Cnrm$. Thus, combining the program constraint \cref{eq:what-wprime-constraint} with \cref{eq:phi-generalization} gives
  \begin{align}
\E_{(x,a) \sim \nu_h} \left[ \left\lng \phi_h(x,a),\ \hat{\bw}_\ell - \sum_{j=1}^m \hat\mu_{h+1}^j \cdot \phiavg_{h+1}(\tilde x_{h+1}^j)_\ell \right\rng^2 \right] \leq \epcvx^2 + \frac{2C_{\ref{lem:l1-generalization}}\Cnrm^2 \sqrt{\log(2d/\delta)}}{\sqrt{n}}\label{eq:wprime-what-nu}. 
  \end{align}
  Combining \cref{eq:wprime-wstar-nu} and \cref{eq:wprime-what-nu} via the bound $(a+b)^2 \leq 2a^2+2b^2$ gives that
  \begin{align}
\E_{(x_h, a_h) \sim \nu_h} \left[\left\langle \phi_h(x_h,a_h),\ \bw_\ell^\st - \sum_{j=1}^m\hat\mu_{h+1}^j \cdot \phiavg_{h+1}(\tilde x_{h+1}^j)_\ell\right\rangle^2 \right] \leq 2\epcvx^2 + \frac{C_{\ref{lem:muhat-approx}}\Cnrm^2\sqrt{\log(2d/\delta)}}{\sqrt{n}}\nonumber
  \end{align}
  where we take $C_{\ref{lem:muhat-approx}} := 4(C_{\ref{lem:l1-generalization}}+C_{\ref{cor:random-design-prediction-error}})$. Finally, by the assumption that $\Psi_h$ is an $\alpha$-approximate policy cover, \cref{lemma:beta-nu-coverage} gives that $\nu_h(x,a) \geq \frac{\alpha}{A} \cdot \max_{\pi \in \Pi} d_h^\pi(x,a)$ for all $(x,a) \in \MX\times \MA$. Thus, the above inequality implies that for all $\pi \in \Pi$,
  \begin{align}
\E^\pi \left[\left\langle \phi_h(x_h,a_h),\ \bw_\ell^\st - \sum_{j=1}^m\hat\mu_{h+1}^j \cdot \phiavg_{h+1}(\tilde x_{h+1}^j)_\ell\right\rangle^2 \right] \leq \frac{2A}{\alpha}\epcvx^2 + \frac{C_{\ref{lem:muhat-approx}}\Cnrm^2 A\sqrt{\log(2d/\delta)}}{\alpha\sqrt{n}}.\nonumber
  \end{align}
Applying Jensen's inequality and recalling the definition of $\bw_\ell^\st$ yields that
\begin{align*} 
&\left(\sum_{x \in \MX}\langle \EE^\pi[\phi_h(x_h,a_h)],\mu_{h+1}(x)\rangle \cdot \phiavg_{h+1}(x)_\ell - \sum_{j=1}^m \langle \EE^\pi[\phi_h(x_h,a_h)], \hat\mu_{h+1}^j\rangle \cdot \phiavg_{h+1}(\tilde x_{h+1}^j)_\ell\right)^2 \\
&\qquad\leq \frac{2A}{\alpha}\epcvx^2 + \frac{C_{\ref{lem:muhat-approx}}\Cnrm^2 A\sqrt{\log(2d/\delta)}}{\alpha\sqrt{n}}.
\end{align*}
Since $\ell \in [d]$ was arbitrary, this completes the proof.

\end{proof}

\subsubsection{Putting it all together}

The proof of \cref{thm:muhat-coreset} is now essentially immediate from combining \cref{lem:feasibility}, \cref{lemma:hat-mu-apx-nonnegative}, and \cref{lem:muhat-approx}.

\begin{proof}[Proof of \cref{thm:muhat-coreset}]
By \cref{lem:feasibility}, with probability at least $1-\delta/3$ the algorithm \ESC{} produces a solution $(\hat\mu_{h+1}^j)_{j=1}^m$ to Program~\cref{eq:program}.  Next, since $\Psi_h$ is a $\alpha$-approximate policy cover, \cref{lemma:beta-nu-coverage} implies that the distribution of the samples $(x_h^i,a_h^i)_{i=1}^n$ satisfies the precondition \cref{eq:nu-hyp} of \cref{lemma:hat-mu-apx-nonnegative} with parameter $\zeta := \alpha/A$. Thus, \cref{lemma:hat-mu-apx-nonnegative} implies that with probability at least $1-\delta/3$, for all $\pi \in \Pi$ and $x \in \RR^d$,
\begin{equation}
\langle \EE^\pi[\phi_h(x_h,a_h)],x\rangle \geq \inf_{i \in [n]} \langle \phi_h(x_h^i,a_h^i),x\rangle - \epneg\norm{x}_1.
\label{eq:apx-nneg-final}
\end{equation}
Finally, by \cref{lem:muhat-approx}, the choice of $n$, and the fact that $\epcvx \leq \sqrt{\frac{\alpha}{4A}} \cdot \epapx$, 
with probability at least $1-\delta/3$, the output of \ESC{} is either $\perp$ or a set of vectors $(\hat\mu_{h+1}^j)_{j=1}^m$ satisfying that for any policy $\pi \in \Pi$,
\begin{equation}\left\| \sum_{x \in \MX} \lng \E^\pi[\phi_h(x_h, a_h)], \mu_{h+1}(x) \rng \cdot \phiavg_{h+1}(x) - \sum_{j=1}^m \lng \E^\pi[\phi_h(x_h, a_h)], \hat \mu_{h+1}^j \rng \cdot \phiavg_{h+1}(\tilde x_{h+1}^j) \right\|_\infty^2 \leq \epapx^2.
\label{eq:apx-error-final}
\end{equation}

Suppose that all three of the above events hold, which happens with probability at least $1-\delta$. The output of \ESC{} is a solution to Program~\cref{eq:program}. By \cref{eq:hatmu-cnrm-constraint}, \cref{it:emulator-C-bound} of \cref{def:not-core-set} is satisfied. By \cref{eq:apx-nneg-final} together with \cref{eq:hatmu-nneg-constraint}, \cref{it:emulator-epneg-bound} is satisfied. By \cref{eq:apx-error-final}, \cref{it:emulator-epapx-bound} is satisfied. Thus, $(\hat\mu_{h+1}^j)_{j=1}^m$ is an $(\epapx,\epneg,\Cnrm)$-\coreset{} for $M$ at step $h$.
\end{proof}

\subsection{From \coreset to policy cover}
\label{sec:pc-analysis-reachable}

In this section we prove \cref{lemma:pc-coreset-guarantee}, which states that when \PC (\cref{alg:pc}) is given an $(\epapx,\epneg,\Cemp)$-\coreset as input, then the output is a set of policies $\Psiapx_{h+1}$ that forms an ``intermediate'' policy cover \--- in the sense that when each policy is extended by a uniformly random action at step $h+1$, the resulting set of policies $\Psi_{h+2}$ is a true policy cover for step $h+2$. 

\paragraph{Overview.} At a high level, the proof strategy is as follows. For any policy $\pi$ that takes a uniformly random action at step $h+1$, its visitation probability at a state $x' \in \MX$ can be written as
\[\E^\pi[\langle \phiavg_{h+1}(x_{h+1}), \mu_{h+2}(x')\rangle] = \sum_{x\in\MX} \langle \E^\pi[\phi_h(x_h,a_h)], \mu_{h+1}(x)\rangle \cdot g(x)\]
where $g: \MX\to[0,\norm{\mu_{h+2}(x')}_1]$ is the bounded functional defined by $g(x) = \langle \phiavg_{h+1}(x), \mu_{h+2}(x')\rangle$. For any \coreset $(\hat\mu_{h+1}^j)_{j=1}^m$, by \cref{it:emulator-epapx-bound} of \cref{def:not-core-set}, the above quantity can be approximated by
\begin{equation} \sum_{j=1}^m \langle \E^\pi[\phi_h(x_h,a_h)], \hat\mu_{h+1}^j\rangle \cdot g(\tilde{x}_{h+1}^j)
\label{eq:muhat-apx-visitation-overview}
\end{equation} 
for some states $(\tilde{x}_{h+1}^j)_{j=1}^m$. Next, ideally we would show that for \emph{every} $j \in [m]$, there is some policy $\pi'$ in the intermediate policy cover $\Psiapx_{h+1}$ such that
\begin{equation} \sup_{\pi \in\Pi} \langle \E^{\pi}[\phi_h(x_h,a_h)], \hat\mu_{h+1}^j\rangle \leq \gamma \cdot \langle \E^{\pi'}[\phi_h(x_h,a_h)], \hat\mu_{h+1}^j\rangle\label{eq:muhat-coverage-overview}\end{equation}
for some reasonable parameter $\gamma$. Since $\langle \E^\pi[\phi_h(x_h,a_h)],\hat\mu_{h+1}^j\rangle$ is approximately nonnegative for every $j \in [m]$ and $\pi \in \Pi$ (\cref{it:emulator-epneg-bound}), the above coverage property would let us show (hiding some terms in the big-$O$ notation) that
\begin{align}
&\sum_{j=1}^m \langle \E^\pi[\phi_h(x_h,a_h)], \hat\mu_{h+1}^j\rangle \cdot g(\tilde{x}_{h+1}^j) \leq O(\epneg\norm{\mu_{h+2}(x')}_1)  \nonumber \\
&\qquad+ |\Psiapx_{h+1}| \cdot \gamma \cdot \E_{\pi'\sim\unif(\Psiapx_{h+1})} \langle \E^{\pi'}[\phi_h(x_h,a_h)], \hat\mu_{h+1}^j\rangle \cdot g(\tilde{x}_{h+1}^j).\label{eq:muhat-sum-coverage-overview}
\end{align}
By applying the \coreset approximation guarantee (\cref{it:emulator-epapx-bound}) again, the right-hand side can then be related to the expected visitation probability of a random cover policy $\pi' \sim \unif(\Psiapx_{h+1}) \circ_{h+1} \unif(\MA)$ at state $x'$, as desired. Unfortunately, it's not clear how to construct a set of policies $\Psiapx_{h+1}$ such that \cref{eq:muhat-coverage-overview} holds for all $j \in [m]$, because there could be some $j$ for which $\sup_{\pi\in\Pi} \langle \E^\pi[\phi_h(x_h,a_h)],\hat\mu_{h+1}^j\rangle$ may be very close to $0$. However, if we use \PC to construct $\Psiapx_{h+1}$, then we can essentially prove that such $j$ are the only issue: for any given threshold $\xi>0$, due to the guarantee that $\sum_{j=1}^m \norm{\hat\mu_{h+1}^j}_1 \leq \Cemp$ (\cref{it:emulator-C-bound}), \PC produces a set $\Psiapx_{h+1}$ of $O(\Cnrm/\xi)$ policies, and \cref{eq:muhat-coverage-overview} holds for all $j \in [m]$, with $\gamma = O(\Cemp/\xi)$, except for a ``bad'' set $\MB$, which is small in the sense that $\sum_{j\in\MB}\langle\E^\pi[\phi_h(x_h,a_h)],\hat\mu_{h+1}^j\rangle \leq O(\xi)$ for all policies $\pi \in \Pi$.

For every policy $\pi \in \Pi$, the ``bad'' set $\MB$ cannot significantly affect the approximate visitation probability \cref{eq:muhat-apx-visitation-overview}. Thus, \cref{eq:muhat-sum-coverage-overview} still holds, with an additional error term scaling with $\xi$. Ultimately, this implies that for any policy $\pi \in \Pi$ and state $x'\in\MX$ (again hiding some terms),
\begin{align*} d^{\pi\circ_{h+1}\unif(\MA)}_{h+2}(x') 
&\leq O(\epapx + \epneg + \xi) \norm{\mu_{h+2}(x')}_1 \\ 
&\qquad+ \poly(\Cemp,1/\xi) \cdot \E_{\pi' \sim \unif(\Psiapx_{h+1})} d^{\pi'\circ_{h+1}\unif(\MA)}_{h+2}(x').
\end{align*}

This is the point where the reachability assumption is crucial. Suppose that $\pi$ is the policy that maximizes the left-hand side (note that restricting $\pi$ to take a uniform action at step $h+1$ only affects this maximum by at most a factor of $A$); if the MDP is $\etarch$-reachable, then the left-hand side must be at least $(\etarch/A)\norm{\mu_{h+2}(x')}_1$. So long as $O(\epapx+\epneg+\xi) \ll \etarch/A$, the error term on the right-hand side is negligible, and we may conclude that $\Psi_{h+2} := \{\pi \circ_{h+1} \unif(\MA):\pi \in \Psiapx_{h+1}\}$ is a $\poly(\Cemp,1/\xi)$-approximate policy cover for step $h+2$.

\paragraph{Analysis of \PC.} We start by proving \cref{lem:pc-size-bound} and \cref{lem:emp-pc-guarantee-rch}, which state the basic correctness guarantees that \PC satisfies for \emph{any} set of input vectors $\hat\mu_{h+1}^1,\dots,\hat\mu_{h+1}^m$ with bounded total norm $\sum_{j=1}^m \norm{\hat\mu_{h+1}^j}_ 1 \leq \Cemp$, and any given threshold $\xi>0$. For any vector $\theta\in\RR^d$ and policy $\pi\in\Pi$ let us informally refer to $\E\sups{M,\pi}[\langle \phi_h(x_h,a_h), \theta\rangle]$ as the ``reward'' of $\pi$ in direction $\theta$. Then \cref{lem:emp-pc-guarantee-rch} guarantees that \--- aside from a set of ``uncovered'' vectors $\{\hat\mu_{h+1}^j: j \in \MB\}$ for which every policy has total reward at most $O(\xi)$ \--- for every vector $\hat\mu_{h+1}^j$, there is a policy in $\Psiapx_{h+1}$ that approximately optimizes the reward in direction $\hat\mu_{h+1}^j$ (up to a multiplicative factor of $O(\Cemp/\xi)$). Finally, the size of $\Psiapx_{h+1}$ is also only $O(\Cemp/\xi)$.

The proofs use the same ideas as \cref{lem:idealpc-guarantees}; the differences arise because (unlike what is assumed for \IdealPC) we cannot exactly optimize $\E\sups{M,\pi}[\langle \phi_h(x_h,a_h), \theta\rangle]$ over policies $\pi$, and we cannot exactly compute $\E\sups{M,\pi}[\langle \phi_h(x_h,a_h)]$ for a given policy. Thus we need to account for errors in the approximate policy optimization algorithm \PSDP and the algorithm \FE.

\begin{lemma}[Size bound and sample/time complexity for \PC]\label{lem:pc-size-bound}
Let $\xi,\Cemp > 0$. Let $(\hat\mu_{h+1}^j)_{j=1}^m \subseteq \RR^d$ satisfy $\sum_{j=1}^n \norm{\hat\mu_{h+1}^j}_1 \leq \Cemp$. For any $h \in [H]$, policy sets $\Psi_{1:h}$, and $N \in \NN$, we have that \PC (\cref{alg:pc}) with parameters $h, (\hat\mu_{h+1})_{j=1}^m, \xi,\Cemp,\Psi_{1:h},N$ produces a set $\Psiapx_{h+1}$ with $|\Psiapx_{h+1}| \leq 2\Cemp/\xi$. Moreover, the sample complexity of the algorithm is $O(N\Cemp/\xi)$, and the time complexity is $O(\Cemp/\xi) \cdot \poly(H, N, d, A)$.
\end{lemma}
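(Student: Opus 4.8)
The plan is to reuse the bookkeeping from the proof of \cref{lem:idealpc-guarantees}: charge each policy that \PC appends to $\Psiapx_{h+1}$ against a disjoint chunk of $\ell_1$-mass among the inputs $\hat\mu_{h+1}^1,\dots,\hat\mu_{h+1}^m$, whose total $\ell_1$-norm is at most $\Cemp$.

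First I would record the one analytic input: the vector $\hat\phi_h^t$ returned by \FE (\cref{alg:fe}) is an empirical average of feature vectors $\phi_h(x_h,a_h)$, each of $\ell_\infty$-norm at most $1$, so $\norm{\hat\phi_h^t}_\infty \le 1$ deterministically (equivalently, one may assume \FE clips its output coordinatewise to $[-1,1]$, which is lossless). Now consider any iteration $t$ in which $\pi^t$ is added, i.e.\ the \textbf{Break} test on \cref{alg:pc:line:break} fails, so $\langle\hat\phi_h^t,\mures{t}\rangle\ge\xi$ with $\mures{t}=\sum_{j\in\MB_t}\hat\mu_{h+1}^j$. Split $\MB_t=\MG_t\cup\MB_{t+1}$ as in \cref{alg:pc:line:jidef}. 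By definition of $\MG_t$, every $j\in\MB_{t+1}$ has $\langle\hat\phi_h^t,\hat\mu_{h+1}^j\rangle<(\xi/(2\Cemp))\norm{\hat\mu_{h+1}^j}_1$; summing and using $\sum_{j\in\MB_{t+1}}\norm{\hat\mu_{h+1}^j}_1\le\Cemp$ gives $\langle\hat\phi_h^t,\sum_{j\in\MB_{t+1}}\hat\mu_{h+1}^j\rangle<\xi/2$, hence $\langle\hat\phi_h^t,\sum_{j\in\MG_t}\hat\mu_{h+1}^j\rangle\ge\xi/2$. Then Hölder together with $\norm{\hat\phi_h^t}_\infty\le 1$ and the triangle inequality yield $\sum_{j\in\MG_t}\norm{\hat\mu_{h+1}^j}_1\ge\xi/2$. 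The sets $\MG_t$ are pairwise disjoint subsets of $[m]$ (since $\MB_1=[m]$ and $\MB_{t+1}=\MB_t\setminus\MG_t$) with total $\ell_1$-mass at most $\Cemp$, so at most $2\Cemp/\xi$ iterations add a policy; in particular $\abs{\Psiapx_{h+1}}\le 2\Cemp/\xi$. Each such iteration strictly shrinks $\MB_t$ (as $\MG_t\ne\emptyset$), and the unique iteration that does not add a policy is the one that breaks and ends the loop (note $\MB_t=\emptyset$ forces $\mures{t}=0$, triggering the break test), so \PC halts after $O(\Cemp/\xi)$ iterations total.

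For the complexity, each iteration makes one call to \PSDP and one to \FE, each using $N$ samples, plus $O(md)$ arithmetic to maintain $\mures{t}$ incrementally ($\mures{t+1}=\mures{t}-\sum_{j\in\MG_t}\hat\mu_{h+1}^j$), evaluate the break test, and update $\MG_t,\MB_{t+1}$; multiplying by the $O(\Cemp/\xi)$ iterations gives sample complexity $O(N\Cemp/\xi)$, and, using that one call to \PSDP or \FE runs in $\poly(H,N,d,A)$ time (absorbing the cheap $\poly(m,d)$ bookkeeping), time complexity $O(\Cemp/\xi)\cdot\poly(H,N,d,A)$. I expect the only subtlety to be the deterministic bound $\norm{\hat\phi_h^t}_\infty\le 1$, without which the final Hölder step fails, together with the (harmless) observation that individual inner products $\langle\hat\phi_h^t,\hat\mu_{h+1}^j\rangle$ may be negative — this never hurts, since every inequality we chain is an upper bound on such inner products.
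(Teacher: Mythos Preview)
Your proof is correct and follows essentially the same argument as the paper: charge each added policy $\pi^t$ against the $\ell_1$-mass $\sum_{j\in\MG_t}\norm{\hat\mu_{h+1}^j}_1\ge\xi/2$ via the break test, the definition of $\MG_t$, and the bound $\norm{\hat\phi_h^t}_\infty\le 1$ from \cref{lem:fe}, then use disjointness of the $\MG_t$ and the total norm bound $\Cemp$. The extra bookkeeping you spell out (termination, per-iteration cost) is a faithful elaboration of what the paper summarizes in one line.
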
 

\begin{proof}
For each policy $\pi^t$ in $\Psiapx_{h+1}$, we have 
\[\xi \leq \langle \hat \phi_h^{t}, \mures{t}\rangle = \langle \hat\phi_h^t, \sum_{j \in \MB_t} \hat\mu_{h+1}^j\rangle\] where the inequality is by \cref{alg:pc:line:break} of \cref{alg:pc}, and the equality is by definition of $\mures{t}$. On the other hand, we also have
\[ \langle \hat \phi_h^{t}, \sum_{j \in \MB_t \setminus \MG_t} \hat \mu_{h+1}^j \rangle \leq \frac{\xi}{2\Cemp} \sum_{j \in \MB_t \setminus \MG_t} \norm{\hat{\mu}_{h+1}^j}_1 \leq \frac{\xi}{2}\]
where the first inequality is by definition of the set $\MG_i$ (\cref{alg:pc:line:jidef}), and the last inequality follows by assumption on $\sum_{j=1}^m \norm{\hat\mu_{h+1}^j}_1$. Comparing the above inequalities, we have \[\sum_{j \in \MG_t} \norm{\hat{\mu}_{h+1}^j}_1 \geq \langle \hat \phi_h^t, \sum_{j \in \MG_t} \hat{\mu}_{h+1}^j\rangle \geq \xi/2\]
where the first inequality uses the guarantee of \FE that $\norm{\hat\phi^t_h}_\infty \leq 1$ (\cref{lem:fe}). Since the sets $\MG_1,\MG_2,\dots,\MG_t,\dots$ are disjoint, summing the above inequality over $t$ gives
\[ \Cemp \geq \sum_{j=1}^m \norm{\hat{\mu}_{h+1}^j}_1 \geq \sum_{t: \pi^t \in \Psiapx_{h+1}} \frac{\xi}{2} = \frac{\xi |\Psiapx_{h+1}|}{2}\]
where the first inequality uses the bound on $\sum_{j=1}^m \norm{\hat\mu_{h+1}^j}_1$ again. Thus, $|\Psiapx_{h+1}| \leq 2\Cemp/\xi$ as claimed. It follows that the algorithm terminates after $O(\Cemp/\xi)$ iterations and therefore makes only $O(\Cemp/\xi)$ calls to \PSDP and \FE, which implies the claimed sample and time complexity bounds by \cref{lem:psdp} and \cref{lem:fe}.
\end{proof}

\begin{lemma}[Guarantee for \PC]
  \label{lem:emp-pc-guarantee-rch}
  Let $\alpha,\xi,\Cemp,\delta>0$ with $\xi \leq \Cemp$. Let $h \in [H]$ and let $(\hat\mu_{h+1}^j)_{j=1}^m \subseteq \RR^d$ be vectors satisfying 
  \begin{equation}
\sum_{j=1}^m \norm{\hat \mu_{h+1}^j}_1 \leq \Cemp.
\label{eq:emp-pc-cnrm-bound}
\end{equation}  Let $\Psi_{1:h}$ be $\alpha$-approximate policy covers for steps $1,\dots,h$ respectively (\cref{defn:pc}).
  Suppose that $N \geq \max\{\nstatrch(\xi/(4\Cemp), \alpha,\delta\xi/(6\Cemp)), \nfe(\xi/(4\Cemp), \delta\xi/(6\Cemp))\}$. Then \PC (\cref{alg:pc}) with parameters $\xi,\Cemp,N$ outputs a set of policies $\Psiapx_{h+1}$ of size $|\Psiapx_{h+1}| \leq 2\Cemp/\xi$ and a subset $\MG \subseteq [m]$ so that with probability at least $1-\delta$, the following conditions hold for all $\pi \in \Pi$:
  \begin{enumerate}
  \item $\sum_{j \in [m]\setminus \MG} \lng \E^\pi[\phi_h(x_h, a_h)], \hat \mu_{h+1}^j \rng \leq 3\xi/2$.
  \item For all $j \in \MG$, there is some $\pi' \in \Psiapx_{h+1}$ so that \[\lng\E^{\pi'}[\phi_h(x_h, a_h)], \hat \mu_{h+1}^j \rng \geq \frac{\xi}{4 \Cemp} \cdot \lng \E^\pi[\phi_h(x_h, a_h)], \hat \mu_{h+1}^j \rng.\]
  \end{enumerate}
\end{lemma}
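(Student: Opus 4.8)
The plan is to follow the template of the idealized analysis in \cref{lem:idealpc-guarantees}, replacing the two exact operations of \IdealPC \--- maximizing $\E^\pi[\langle\phi_h(x_h,a_h),\mures{t}\rangle]$ over $\pi$, and evaluating $\E^{\pi^t}[\phi_h(x_h,a_h)]$ \--- by their statistical surrogates \PSDP and \FE, and then tracking the resulting errors. The first step is to fix a ``good event'' $\ME$ on which all invocations of \PSDP and \FE in \cref{alg:pc} meet their guarantees. Since the $\hat\phi_h^t$ returned by \FE deterministically satisfy $\|\hat\phi_h^t\|_\infty \le 1$, the deterministic size bound $|\Psiapx_{h+1}| \le 2\Cemp/\xi$ of \cref{lem:pc-size-bound} applies regardless of $\ME$; hence \PC runs for at most $2\Cemp/\xi + 1$ iterations and makes at most $O(\Cemp/\xi)$ calls to each of \PSDP and \FE. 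Choosing the per-call failure probability to be $\delta\xi/(6\Cemp)$ and union-bounding, $\ME$ holds with probability at least $1-\delta$; this is exactly why the lemma requires $N \ge \max\{\nstatrch(\xi/(4\Cemp),\alpha,\delta\xi/(6\Cemp)),\nfe(\xi/(4\Cemp),\delta\xi/(6\Cemp))\}$, using that the policy covers $\Psi_{1:h-1}$ passed to \PSDP are $\alpha$-approximate. On $\ME$ we have, for every iteration $t$: (i) $\E^{\pi^t}[\langle\phi_h(x_h,a_h),\theta\rangle] \ge \sup_{\pi\in\Pi}\E^{\pi}[\langle\phi_h(x_h,a_h),\theta\rangle] - \tfrac{\xi}{4\Cemp}\|\theta\|_1$ for $\theta = \mures{t}$, and (ii) $\|\hat\phi_h^t - \E^{\pi^t}[\phi_h(x_h,a_h)]\|_\infty \le \tfrac{\xi}{4\Cemp}$.

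For the first conclusion, consider the final iteration $t$, at which the \textbf{Break} condition $\langle\hat\phi_h^t,\mures{t}\rangle < \xi$ triggers. Combining this with (ii) and $\|\mures{t}\|_1 \le \Cemp$ (which follows from \cref{eq:emp-pc-cnrm-bound}) gives $\E^{\pi^t}[\langle\phi_h(x_h,a_h),\mures{t}\rangle] < \xi + \tfrac{\xi}{4\Cemp}\cdot\Cemp = \tfrac{5\xi}{4}$, and then (i) upgrades this to $\sup_{\pi\in\Pi}\E^{\pi}[\langle\phi_h(x_h,a_h),\mures{t}\rangle] \le \tfrac{5\xi}{4} + \tfrac{\xi}{4} = \tfrac{3\xi}{2}$. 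Since $\MG = [m]\setminus\MB$ and $\mures{t} = \sum_{j\in\MB}\hat\mu_{h+1}^j$, expanding by linearity gives $\sum_{j\in[m]\setminus\MG}\langle\E^\pi[\phi_h(x_h,a_h)],\hat\mu_{h+1}^j\rangle \le 3\xi/2$ for every $\pi$, which is the first claim.

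For the second conclusion, fix $j\in\MG$, so $j\in\MG_t$ for the iteration $t$ at which $\pi^t$ was added to $\Psiapx_{h+1}$. By the definition of $\MG_t$ in \cref{alg:pc} we have $\langle\hat\phi_h^t,\hat\mu_{h+1}^j\rangle \ge \tfrac{\xi}{2\Cemp}\|\hat\mu_{h+1}^j\|_1$, and combining with (ii) yields $\langle\E^{\pi^t}[\phi_h(x_h,a_h)],\hat\mu_{h+1}^j\rangle \ge \tfrac{\xi}{4\Cemp}\|\hat\mu_{h+1}^j\|_1$. On the other hand, for any $\pi\in\Pi$, H\"older's inequality together with $\|\phi_h(x_h,a_h)\|_\infty\le 1$ gives $\langle\E^{\pi}[\phi_h(x_h,a_h)],\hat\mu_{h+1}^j\rangle \le \|\hat\mu_{h+1}^j\|_1$, so $\pi' := \pi^t \in \Psiapx_{h+1}$ satisfies $\langle\E^{\pi'}[\phi_h(x_h,a_h)],\hat\mu_{h+1}^j\rangle \ge \tfrac{\xi}{4\Cemp}\langle\E^{\pi}[\phi_h(x_h,a_h)],\hat\mu_{h+1}^j\rangle$, as required. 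The bound $|\Psiapx_{h+1}|\le 2\Cemp/\xi$ is exactly \cref{lem:pc-size-bound}.

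The main obstacle is not any single inequality \--- each is a one-line consequence of the \PSDP/\FE guarantees \--- but the bookkeeping around the slack budget and the parameter choices. One must verify that the $\tfrac{\xi}{4\Cemp}$-accuracy eats at most $\xi/4$ of the budget at each of the two places it is used, leaving the $3\xi/2$ and $\tfrac{\xi}{4\Cemp}$ thresholds of the statement, and that the failure probabilities and accuracies threaded through $N$ are simultaneously consistent with the $O(\Cemp/\xi)$ iteration count and with the fact that \PSDP only ever receives covers up to step $h-1$. A secondary subtlety, needed to avoid circularity, is that the size bound (hence the iteration count) must be established \emph{deterministically} before the union bound; this is fine because \FE's output is an average of vectors of $\ell_\infty$-norm at most $1$.
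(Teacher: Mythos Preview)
The proposal is correct and follows essentially the same approach as the paper's proof: bound the number of iterations via \cref{lem:pc-size-bound}, union-bound over all \PSDP and \FE calls with per-call failure $\delta\xi/(6\Cemp)$, then use the \FE guarantee followed by the \PSDP guarantee (each contributing $\xi/4$ of slack) to establish claim~(1), and the \FE guarantee combined with H\"older's inequality to establish claim~(2). Your observation that the iteration bound must hold deterministically before the union bound (which it does, since $\|\hat\phi_h^t\|_\infty\le 1$ almost surely) is a nice point that the paper uses implicitly.
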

\begin{proof}
By \cref{lem:pc-size-bound}, we have that $|\Psiapx_{h+1}| \leq 2\Cemp/\xi$. It follows that the algorithm makes at most $1 + 2\Cemp/\xi \leq 3\Cemp/\xi$ calls to each of \PSDP and \FE. Each call has failure probability at most $\delta\xi/(6\Cemp)$, so with probability at least $1-\delta$ all of the calls succeed (i.e. satisfy the guarantees of \cref{lem:psdp} and \cref{lem:fe}). We assume from now on that this event holds. By choice of $N$, the guarantees of \cref{lem:psdp} and \cref{lem:fe} hold with error at most $\epstat := \xi/(4\Cemp)$.

For convenience set $T = |\Psiapx_{h+1}|+1$. Recall that $\pifinal := \pi^T$ is the last policy computed by the procedure (the policy that is not added to $\Psiapx_{h+1}$). Note that the output set $\MG$ of \cref{alg:pc} is given by $\MG := [m] \setminus \MB_T$. 

By the termination condition (\cref{alg:pc:line:break}), we have $\langle \hat \phi_h^T, \sum_{j \in [m]\setminus \MG} \hat{\mu}_{h+1}^j \rangle = \lng \hat \phi_h^T, \sum_{j \in \MB_T} \hat \mu_{h+1}^j \rng  < \xi$. By the guarantee of \FE (\cref{lem:fe}) and (\ref{eq:emp-pc-cnrm-bound}), it follows that
\[\langle \EE^{\pifinal}[\phi_h(x_h,a_h)], \sum_{j \in [m]\setminus \MG} \hat \mu_{h+1}^j\rangle  < \xi + \epstat \Cemp.\]
By the guarantee of \PSDP (\cref{lem:psdp}) and choice of $\pifinal$, we get
\[ \sup_{\pi \in \Pi} \langle \EE^\pi[\phi_h(x_h,a_h)], \sum_{j \in [m]\setminus \MG} \hat{\mu}_{h+1}^j\rangle < \xi + 2\epstat \Cemp \leq 3\xi/2\]
which proves claim (1) of the lemma statement. Next, for each $j \in \MG$ there is some $t$ so that $j \in \MG_t$. But then the estimated feature vector $\hat\phi_h^t$ of policy $\pi^t \in \Psiapx_{h+1}$ satisfies
\[\langle \hat \phi_h^t, \hat \mu_{h+1}^j\rangle \geq (\xi/(2\Cemp)) \cdot \norm{\hat \mu_{h+1}^j}_1,\]
so by the guarantee of \FE (\cref{lem:fe}),
\begin{align*}
\langle \EE^{\pifinal}[\phi_h(x_h,a_h)], \hat\mu_{h+1}^j \rangle
&\geq \left(\frac{\xi}{2\Cemp} - \epstat\right)\cdot\norm{\hat\mu_{h+1}^j}_1 \\
&\geq \left(\frac{\xi}{2\Cemp} - \epstat\right)\cdot \sup_{\pi \in \Pi} \langle \EE^\pi[\phi_h(x_h,a_h)], \hat\mu_{h+1}^j\rangle.
\end{align*}
Since $\epstat \leq \xi/(4\Cemp)$, this completes the proof of claim (2).
\end{proof}

We now formalize \cref{eq:muhat-sum-coverage-overview}, including the error terms due to the ``bad'' set $\MB\subseteq [m]$ and the possibility that some of the inner products $\langle \E^\pi[\phi_h(x_h,a_h)],\hat\mu_{h+1}^j\rangle$ may be slightly negative. %

\begin{lemma}
  \label{lem:aux-pc-coreset}
  In the setting of \cref{lemma:pc-coreset-guarantee}, the following holds with probability at least $1-\delta$. For any $\pi \in \Pi$, $R>0$, and function $g : [m] \ra [0, R]$,
\begin{align} 
\sum_{j=1}^m \lng \E^\pi[\phi_h(x_h, a_h)], \hat \mu_{h+1}^j \rng \cdot g(j)
&\leq \frac{17\Cemp^3 R\epneg}{\xi^2} + 3R\xi/2 \nonumber \\
&+ \frac{4\Cemp}{\xi} \sum_{\pi' \in \Psiapx_{h+1}} \sum_{j =1}^m \langle \EE^{\pi'}[\phi_h(x_h,a_h)],\hat\mu_{h+1}^j\rangle \cdot g(j) \label{eq:aux-pc-coreset-bound}
\end{align}
\end{lemma}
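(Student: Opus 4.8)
The plan is to work deterministically inside the success event of \cref{lem:emp-pc-guarantee-rch}, which holds with probability at least $1-\delta$ under the hypotheses of \cref{lemma:pc-coreset-guarantee} (in particular $N \geq \max\{\nstatrch(\xi/(4\Cemp),\alpha,\delta\xi/(6\Cemp)),\nfe(\xi/(4\Cemp),\delta\xi/(6\Cemp))\}$). On that event we may use: the size bound $|\Psiapx_{h+1}| \leq 2\Cemp/\xi$ from \cref{lem:pc-size-bound}; claim (1) of \cref{lem:emp-pc-guarantee-rch}, namely $\sum_{j \in [m]\setminus\MG} \langle \E^\pi[\phi_h(x_h,a_h)],\hat\mu_{h+1}^j\rangle \leq 3\xi/2$ for all $\pi$; and claim (2), that for every $j \in \MG$ there is a witness $\pi'_j \in \Psiapx_{h+1}$ with $\langle \E^{\pi'_j}[\phi_h(x_h,a_h)],\hat\mu_{h+1}^j\rangle \geq (\xi/(4\Cemp))\langle \E^\pi[\phi_h(x_h,a_h)],\hat\mu_{h+1}^j\rangle$. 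We also use the two deterministic \coreset properties from \cref{def:not-core-set}: near-nonnegativity (\cref{it:emulator-epneg-bound}) $\langle \E^{\pi}[\phi_h(x_h,a_h)],\hat\mu_{h+1}^j\rangle \geq -\epneg\norm{\hat\mu_{h+1}^j}_1$ for all $\pi,j$, and the norm bound (\cref{it:emulator-C-bound}) $\sum_j \norm{\hat\mu_{h+1}^j}_1 \leq \Cemp$. Now fix $\pi \in \Pi$, $R>0$, and $g:[m]\to[0,R]$; since $g \geq 0$, any index $j$ with $\langle \E^\pi[\phi_h(x_h,a_h)],\hat\mu_{h+1}^j\rangle \leq 0$ contributes a nonpositive term and may simply be discarded in an upper bound. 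Split $[m] = \MG \sqcup ([m]\setminus\MG)$ and handle the two pieces separately.

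For the ``bad'' part $[m]\setminus\MG$: bounding $g(j) \leq R$ on the positive terms, $\sum_{j\in[m]\setminus\MG}\langle \E^\pi[\phi_h(x_h,a_h)],\hat\mu_{h+1}^j\rangle g(j) \leq R \cdot \big(\sum_{j\in[m]\setminus\MG}\langle \E^\pi[\phi_h(x_h,a_h)],\hat\mu_{h+1}^j\rangle + \sum_{j\in[m]\setminus\MG}\max\{0,-\langle \E^\pi[\phi_h(x_h,a_h)],\hat\mu_{h+1}^j\rangle\}\big)$, which by claim (1), near-nonnegativity, and the norm bound is at most $R(3\xi/2 + \epneg\Cemp)$. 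For the ``good'' part $\MG$: for each $j\in\MG$ with $\langle \E^\pi[\phi_h(x_h,a_h)],\hat\mu_{h+1}^j\rangle > 0$, claim (2) gives $\langle \E^\pi[\phi_h(x_h,a_h)],\hat\mu_{h+1}^j\rangle g(j) \leq (4\Cemp/\xi)\langle \E^{\pi'_j}[\phi_h(x_h,a_h)],\hat\mu_{h+1}^j\rangle g(j)$ (and both sides are positive). Grouping these indices by their witness policy yields $\sum_{j\in\MG}\langle \E^\pi[\phi_h(x_h,a_h)],\hat\mu_{h+1}^j\rangle g(j) \leq (4\Cemp/\xi)\sum_{\pi'\in\Psiapx_{h+1}}\sum_{j\in S_{\pi'}}\langle \E^{\pi'}[\phi_h(x_h,a_h)],\hat\mu_{h+1}^j\rangle g(j)$, where $S_{\pi'}\subseteq[m]$ is the set of indices routed to $\pi'$, on each of which the inner product is positive.

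The remaining step is to replace the partial inner sum $\sum_{j\in S_{\pi'}}$ by the full sum $\sum_{j=1}^m$ demanded by the statement. Since $g\geq 0$ and, on $S_{\pi'}$, $\langle \E^{\pi'}[\phi_h(x_h,a_h)],\hat\mu_{h+1}^j\rangle$ equals its positive part, $\sum_{j\in S_{\pi'}}\langle \E^{\pi'}[\phi_h(x_h,a_h)],\hat\mu_{h+1}^j\rangle g(j) \leq \sum_{j=1}^m \langle \E^{\pi'}[\phi_h(x_h,a_h)],\hat\mu_{h+1}^j\rangle g(j) + \sum_{j=1}^m \max\{0,-\langle \E^{\pi'}[\phi_h(x_h,a_h)],\hat\mu_{h+1}^j\rangle\}g(j)$, and the last sum is at most $R\epneg\sum_j\norm{\hat\mu_{h+1}^j}_1 \leq R\epneg\Cemp$ by near-nonnegativity (now applied to $\pi'$) and the norm bound. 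Summing over the at most $2\Cemp/\xi$ policies in $\Psiapx_{h+1}$, this slack totals $(4\Cemp/\xi)(2\Cemp/\xi)\,R\epneg\Cemp = 8\Cemp^3 R\epneg/\xi^2$. Adding the bad-part estimate and using $\xi\leq\Cemp$ to absorb $R\epneg\Cemp \leq R\epneg\Cemp^3/\xi^2$, we obtain $\sum_{j=1}^m \langle \E^\pi[\phi_h(x_h,a_h)],\hat\mu_{h+1}^j\rangle g(j) \leq (4\Cemp/\xi)\sum_{\pi'\in\Psiapx_{h+1}}\sum_{j=1}^m \langle \E^{\pi'}[\phi_h(x_h,a_h)],\hat\mu_{h+1}^j\rangle g(j) + 9\Cemp^3 R\epneg/\xi^2 + 3R\xi/2$, which is stronger than \cref{eq:aux-pc-coreset-bound}. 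I expect the only non-mechanical point to be exactly this re-expansion in the good-part step: because the lemma's right-hand side is a single sum over all $(\pi',j)\in\Psiapx_{h+1}\times[m]$, one cannot keep each index attached to its witness, and must pay for the potentially-negative cross terms $\langle \E^{\pi'}[\phi_h(x_h,a_h)],\hat\mu_{h+1}^j\rangle g(j)$ with $j\notin S_{\pi'}$ — this is precisely what the $\epneg$-nonnegativity of the \coreset is designed to control, and it is the source of the $\Cemp^3 R\epneg/\xi^2$ error term (the three factors of $\Cemp$ coming from the $4\Cemp/\xi$ multiplicative loss in claim (2), the $2\Cemp/\xi$ bound on $|\Psiapx_{h+1}|$, and the total norm bound $\Cemp$). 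Everything else is routine.
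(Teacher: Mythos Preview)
Your proof is correct and follows essentially the same approach as the paper: work inside the success event of \cref{lem:emp-pc-guarantee-rch}, split $[m]$ into $\MG$ and $[m]\setminus\MG$, use claim~(1) for the bad part, claim~(2) for the good part, and the emulator's near-nonnegativity (\cref{it:emulator-epneg-bound}) together with the norm bound (\cref{it:emulator-C-bound}) to control all the negative cross terms when extending to the full sum over $(\pi',j)$. The only difference is that in the good part the paper bounds $\max_{\pi'}$ by $\sum_{\pi'}$ (paying one extra $8\Cemp^3 R\epneg/\xi^2$), whereas you route each $j$ to its single witness $\pi'_j$; this is why you get the constant $9$ instead of the paper's $17$, and both are within the stated bound.
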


\begin{proof}
By \cref{lem:emp-pc-guarantee-rch}, using the norm bound $\sum_{j=1}^m \norm{\hat\mu_{h+1}^j}_1 \leq \Cemp$ guaranteed by \cref{it:emulator-C-bound} of \cref{def:not-core-set}, the output of $\PC$ is a set of policies $\Psiapx_{h+1}$ of size $|\Psiapx_{h+1}| \leq 2\Cemp/\xi$ and a set $\MG \subset [m]$ satisfying (with probability at least $1-\delta$) the following properties:
  \begin{enumerate}[label=\textbf{(\roman*)}]
  \item $\sum_{j \in [m]\setminus \MG} \lng \E^\pi[\phi_h(x_h, a_h)], \hat \mu_{h+1}^j \rng \leq 3\xi/2$.
  \item For all $j \in \MG$, there is some $\pi' \in \Psiapx_{h+1}$ so that \[\lng\E^{\pi'}[\phi_h(x_h, a_h)], \hat \mu_{h+1}^j \rng \geq \frac{\xi}{4 \Cemp} \cdot \lng \E^\pi[\phi_h(x_h, a_h)], \hat \mu_{h+1}^j \rng.\]
  \end{enumerate}
Additionally, \cref{it:emulator-epneg-bound} of \cref{def:not-core-set} guarantees that for any $\pi \in \Pi$ and $f: [m]\to[0,R]$,
\begin{align}
\sum_{j=1}^m \max\left(0, -\langle \EE^\pi[\phi_h(x_h,a_h)], \hat\mu_{h+1}^j\rangle \cdot f(j)\right)
&\leq \sum_{j=1}^m R\epneg \norm{\hat\mu_{h+1}^j}_1 \nonumber \\
&\leq R\epneg \Cemp.
\label{eq:hatmu-neg-error}
\end{align}
We use the above properties to prove the claimed bound \cref{eq:aux-pc-coreset-bound}. Fix $\pi \in \Pi$. We separate the LHS of \cref{eq:aux-pc-coreset-bound} into two terms:
\[\sum_{j=1}^m \lng \E^\pi[\phi_h(x_h, a_h)], \hat \mu_{h+1}^j \rng \cdot g(j) = \sum_{j \in [m] \backslash \MG} \lng \E^\pi[\phi_h(x_h, a_h)], \hat \mu_{h+1}^j \rng \cdot g(j) + \sum_{j\in \MG} \lng \E^\pi[\phi_h(x_h, a_h)], \hat \mu_{h+1}^j \rng \cdot g(j).\]
By \cref{eq:hatmu-neg-error} and the bound $g(j) \in [0,R]$ for all $j$, the first term can be bounded as %
\begin{align*}
\sum_{j \in [m] \backslash \MG} \lng \E^\pi[\phi_h(x_h, a_h)], \hat \mu_{h+1}^j \rng \cdot g(j)
&\leq R\epneg\Cemp + R \cdot \sum_{j \in [m] \backslash \MG} \lng \E^\pi[\phi_h(x_h, a_h)], \hat \mu_{h+1}^j \rng\\
&\leq R\epneg\Cemp + 3R\xi/2
\end{align*}
where the last inequality uses property \textbf{(i)}. Next,
\begin{align*}
&\sum_{j\in \MG} \lng \E^\pi[\phi_h(x_h, a_h)], \hat \mu_{h+1}^j \rng \cdot g(j) \\
&\leq \sum_{j \in \MG} \frac{4\Cemp}{\xi}\max_{\pi' \in \Psiapx_{h+1}} \langle \EE^{\pi'}[\phi_h(x_h,a_h)],\hat\mu_{h+1}^j\rangle \cdot g(j) \\
&\leq \sum_{j \in \MG} \frac{4\Cemp}{\xi} \left(R|\Psiapx_{h+1}|\epneg\norm{\hat\mu_{h+1}^j}_1 + \sum_{\pi' \in \Psiapx_{h+1}}\langle \EE^{\pi'}[\phi_h(x_h,a_h)],\hat\mu_{h+1}^j\rangle \cdot g(j)\right) \\
&\leq \frac{4\Cemp^2 R|\Psiapx_{h+1}|\epneg}{\xi} + \frac{4\Cemp}{\xi} \sum_{\pi' \in \Psiapx_{h+1}} \sum_{j \in \MG} \langle \EE^{\pi'}[\phi_h(x_h,a_h)],\hat\mu_{h+1}^j\rangle \cdot g(j) \\
&\leq \frac{8\Cemp^3 R\epneg}{\xi^2} + \frac{4\Cemp}{\xi} \sum_{\pi' \in \Psiapx_{h+1}} \sum_{j \in \MG} \langle \EE^{\pi'}[\phi_h(x_h,a_h)],\hat\mu_{h+1}^j\rangle \cdot g(j) \\
&\leq \frac{16\Cemp^3 R\epneg}{\xi^2} + \frac{4\Cemp}{\xi} \sum_{\pi' \in \Psiapx_{h+1}} \sum_{j =1}^m \langle \EE^{\pi'}[\phi_h(x_h,a_h)],\hat\mu_{h+1}^j\rangle \cdot g(j)
\end{align*}
where the first inequality uses property \textbf{(ii)}, the second inequality uses \cref{it:emulator-epneg-bound} of \cref{def:not-core-set}, the third inequality uses \cref{it:emulator-C-bound} of \cref{def:not-core-set}, the fourth inequality uses the bound $|\Psiapx_{h+1}| \leq 2\Cemp/\xi$, and the fifth inequality uses \cref{eq:hatmu-neg-error} together with the bound $|\Psiapx_{h+1}| \leq 2\Cemp/\xi$.

Combining the two bounds and using that $\Cemp/\xi \geq 1$, we get that
\begin{align*} 
\sum_{j=1}^m \lng \E^\pi[\phi_h(x_h, a_h)], \hat \mu_{h+1}^j \rng \cdot g(j)
&\leq \frac{17\Cemp^3 R\epneg}{\xi^2} + 3R\xi/2 \\
&+ \frac{4\Cemp}{\xi} \sum_{\pi' \in \Psiapx_{h+1}} \sum_{j =1}^m \langle \EE^{\pi'}[\phi_h(x_h,a_h)],\hat\mu_{h+1}^j\rangle \cdot g(j)
\end{align*}
as claimed. 
\end{proof}
\allowdisplaybreaks

Finally, we formally prove \cref{lemma:pc-coreset-guarantee}, combining \cref{lem:aux-pc-coreset} with the \coreset approximation guarantee (\cref{it:emulator-epapx-bound}) as sketched at the beginning of the section.

\begin{namedproof}{\cref{lemma:pc-coreset-guarantee}}
We condition on the event that the guarantee of \cref{lem:aux-pc-coreset} holds (which occurs with probability at least $1-\delta$); in this event, we show that the set $\Psi_{h+2}$, defined in the lemma statement, is a $\xi^2/(16\Cemp^2 A)$-approximate policy cover for step $h+2$.

By \cref{it:emulator-epapx-bound} of \cref{def:not-core-set}, there are states $x_{h+1}^1,\dots,x_{h+1}^m \in \MX$ such that for any $\pi \in \Pi$ and $\mu_{h+2} \in \BR^d$, we have
  \begin{align}
&\left| \sum_{x \in \MX} \lng \E^\pi[\phi_h(x_h, a_h)], \mu_{h+1}(x) \rng \cdot \lng \phiavg_{h+1}(x), \mu_{h+2} \rng -\sum_{j=1}^m \lng \E^\pi[\phi_h(x_h, a_h)], \hat \mu_{h+1}^j \rng \cdot \lng  \phiavg_{h+1}(x_{h+1}^j), \mu_{h+2} \rng \right| \nonumber\\ 
&\leq \epapx \| \mu_{h+2} \|_1\label{eq:hatmu-approx-epapx}
  \end{align}
  Fix any $\pi \in \Pi$ and $x' \in \MX$. Then we may compute
  \begin{align}
    & \lng \E^\pi[\phiavg_{h+1}(x_{h+1})], \mu_{h+2}(x') \rng \nonumber\\
    &= \sum_{x \in \MX} \lng \E^\pi[\lng \phi_h(x_h, a_h)], \mu_{h+1}(x) \rng \cdot \lng \phiavg_{h+1}(x), \mu_{h+2}(x') \rng \nonumber\\
    &\leq  \epapx \cdot \| \mu_{h+2}(x') \|_1 + \sum_{j=1}^m \lng \E^\pi[\phi_h(x_h, a_h)], \hat \mu_{h+1}^j \rng \cdot \lng \phiavg_{h+1}(x_{h+1}^j), \mu_{h+2}(x') \rng \nonumber\\
    &\leq  \epapx \cdot \| \mu_{h+2}(x') \|_1 + \frac{17\Cemp^3 \norm{\mu_{h+2}(x')}_1 \epneg}{\xi^2} + \frac{3}{2}\xi \cdot \| \mu_{h+2}(x') \|_1 \nonumber\\
    &\qquad+ \frac{4\Cemp}{\xi} \cdot  \sum_{\pi' \in \Psiapx_{h+1}} \sum_{j=1}^m  \lng \E^{\pi'}[\phi_h(x_h, a_h)], \hat \mu_{h+1}^j \rng \cdot \lng \phiavg_{h+1}(x_{h+1}^j), \mu_{h+2}(x') \rng  \nonumber\\
    &\leq \epapx \cdot \| \mu_{h+2}(x') \|_1 + \frac{17\Cemp^3 \norm{\mu_{h+2}(x')}_1 \epneg}{\xi^2} + \frac{3}{2}\xi \cdot \| \mu_{h+2}(x') \|_1 \nonumber\\
    &\qquad + \frac{4\Cemp}{\xi} \cdot \sum_{\pi' \in \Psiapx_{h+1}} \left( \epapx \cdot \| \mu_{h+2}(x') \|_1 +  \sum_{x \in \MX} \lng \E^{\pi'}[\phi_h(x_h, a_h)], \mu_{h+1}(x) \rng \cdot \lng \phiavg_{h+1}(x), \mu_{h+2}(x') \rng\right)\nonumber\\
    &\leq \left( \frac{9\Cemp^2}{\xi^2}\epapx + \frac{17\Cemp^3}{\xi^2} \epneg + \frac{3}{2}\xi \right) \cdot \| \mu_{h+2}(x') \|_1 \nonumber\\
    &\qquad+  \frac{4\Cemp}{\xi} \cdot \sum_{\pi' \in \Psiapx_{h+1}} \lng \E^{\pi'}[\phiavg_{h+1}(x_{h+1})], \mu_{h+2}(x') \rng \label{eq:cover-bound-0},
  \end{align}
  where the first inequality uses \cref{eq:hatmu-approx-epapx}; the second inequality uses \cref{lem:aux-pc-coreset} with the function $g(j) := \langle \phiavg_{h+1}(x_{h+1}^j),\mu_{h+2}(x')\rangle$ (note that $0 \leq g(j) \leq \norm{\mu_{h+2}(x')}_1$ for all $j \in [m]$ and $x' \in \MX$); the third inequality uses \cref{eq:hatmu-approx-epapx}; and the fourth inequality collects terms and uses the bounds $|\Psiapx_{h+1}| \leq 2\Cemp/\xi$ and $1 \leq \Cemp/\xi$.

  Next, since the MDP is $\etarch$-reachable (\cref{defn:rch}), we have that for any $x' \in \MX$,
  \begin{align}
\max_{\pi \in \Pi} \lng \E^\pi[\phiavg_{h+1}(x_{h+1})], \mu_{h+2}(x') \rng \geq \frac{1}{A}\max_{\pi \in \Pi} \lng \E^\pi[\phi_{h+1}(x_{h+1},a_{h+1})], \mu_{h+2}(x') \rng 
 \geq \frac{\etarch}{A} \cdot \| \mu_{h+2}(x') \|_1\nonumber.
  \end{align}
  Thus, the additive error term in \cref{eq:cover-bound-0} is bounded by
  \begin{align*}
  \left( \frac{9\Cemp^2}{\xi^2}\epapx + \frac{17\Cemp^3 }{\xi^2} \epneg + \frac{3}{2}\xi \right) \cdot \| \mu_{h+2}(x') \|_1 
&\leq \frac{\etarch}{2A} \norm{\mu_{h+2}(x')}_1 \\
&\leq \frac{1}{2} \max_{\pi \in \Pi} \lng \E^\pi[\phiavg_{h+1}(x_{h+1})], \mu_{h+2}(x') \rng
  \end{align*} 
  where the first inequality uses \cref{eq:rch-param-constraint}. Substituting the above bound into \cref{eq:cover-bound-0}, we get that for any $\pi' \in \Pi$ and $x' \in \MX$,
  \begin{align*} 
  \langle\EE^{\pi'}[\phiavg_{h+1}(x_{h+1})],\mu_{h+2}(x')\rangle 
  &\leq \frac{1}{2} \max_{\pi \in \Pi} \langle\EE^{\pi}[\phiavg_{h+1}(x_{h+1})],\mu_{h+2}(x')\rangle \\
  &\qquad+ \frac{4\Cemp}{\xi} \cdot \sum_{\bar\pi \in \Psiapx_{h+1}} \lng \E^{\bar\pi}[\phiavg_{h+1}(x_{h+1})], \mu_{h+2}(x') \rng
  \end{align*}
  and therefore for any $x' \in \MX$,
  \begin{align*}
  \max_{\pi \in \Pi} \lng \E^\pi[\phiavg_{h+1}(x_{h+1})], \mu_{h+2}(x')\rng
&\leq \frac{8\Cemp}{\xi} \sum_{\bar\pi \in \Psiapx_{h+1}} \lng \E^{\bar\pi}[\phiavg_{h+1}(x_{h+1})], \mu_{h+2}(x') \rng \\
&= \frac{8\Cemp}{\xi} \sum_{\bar\pi \in \Psiapx_{h+1}} \lng \E^{\bar\pi\circ_{h+1}\unif(\MA)}[\phi_{h+1}(x_{h+1},a_{h+1})], \mu_{h+2}(x') \rng \\
&= \frac{8\Cemp}{\xi} \sum_{\pi' \in \Psi_{h+2}} \lng \E^{\pi'}[\phi_{h+1}(x_{h+1},a_{h+1})], \mu_{h+2}(x') \rng \\
&\leq \frac{16\Cemp^2}{\xi^2} \frac{1}{|\Psi_{h+2}|}\sum_{\pi' \in \Psi_{h+2}} \lng \E^{\pi'}[\phi_{h+1}(x_{h+1},a_{h+1})], \mu_{h+2}(x') \rng
  \end{align*}
where the first equality uses the definition of $\phiavg_{h+1}$, the second equality uses the definition of $\Psi_{h+2}$, and the final inequality uses that $|\Psi_{h+2}| = |\Psiapx_{h+1}| \leq 2\Cemp/\xi$. Finally, for any $x' \in \MX$ and $\pi \in \Pi$ we know that
\[\lng \E^\pi[\phiavg_{h+1}(x_{h+1})], \mu_{h+2}(x')\rng \geq \frac{1}{A} \lng \E^\pi[\phi_{h+1}(x_{h+1},a_{h+1})], \mu_{h+2}(x')\rng.\]
It follows that
\[ 
\max_{\pi \in \Pi} \lng \E^\pi[\phi_{h+1}(x_{h+1},a_{h+1})], \mu_{h+2}(x')\rng \leq \frac{16\Cemp^2 A}{\xi^2} \frac{1}{|\Psi_{h+2}|}\sum_{\pi' \in \Psi_{h+2}} \lng \E^{\pi'}[\phi_{h+1}(x_{h+1},a_{h+1})], \mu_{h+2}(x') \rng 
\] as claimed.
\end{namedproof}

\subsection{Analysis of \PSDP and \FE}

In this section we provide sample and time complexity analysis for \PSDP (\cref{alg:psdp}), the algorithm that we use in \PC to optimize in a given direction at step $h$, given policy covers for steps $1,\dots,h$. We also analyze the simple algorithm \FE (\cref{alg:fe}) that estimates $\E^\pi[\phi_h(x_h,a_h)]$ for a given policy $\pi$.

\subsubsection{Policy Search by Dynamic Programming}

The \PSDP algorithm is given in \cref{alg:psdp}. The main guarantee of \PSDP is stated below. 

\begin{algorithm}[t]
	\caption{$\PSDP(p, k, \Cnrm,\theta, \Psi_{1:k-1},N)$: Policy Search by Dynamic Programming (variant of \cite{bagnell2003policy})}
	\label{alg:psdp}
	\begin{algorithmic}[1]\onehalfspacing
		\Require Dimension $p \in \NN$; target layer $k\in[H]$; norm parameter $\Cnrm \in \BR$; target direction $\theta\in \BR^p$; policy covers $\Psi_1, \ldots, \Psi_{k-1}$; number of samples $N\in \mathbb{N}$.
        \State Set $\hat \bw_k \gets \theta$ and define $\hat\pi_k: \MX \to \MA$ by \[ \hat\pi_k(x) := \argmax_{a\in\MA} \langle\phi_k(x,a),\hat \bw_k\rangle.\] 
		\For{$h=k-1, \dots, 1$} 
		\State $\MD_h \gets\emptyset$. 
		\For{$N$ times}
		\State Sample trajectory $(x_1, a_1, \dots, x_k, a_k)\sim
		\unif(\Psi_h)\circ_h \unif(\MA) \circ_{h+1} \hat \pi^{h+1:k}$.
        \State Set $r_k \gets \langle \phi_k(x_k,a_k),\theta\rangle$.
		\State Update dataset: $\MD_h \gets \MD_h \cup \{ (x_h, a_h, r_k) \}$.
		\EndFor
		\State Solve regression:
		\[\hat \bw_h \gets\argmin_{w \in \BR^p: \| w \|_1 \leq \Cnrm}  \sum_{(x, a, r)\in\MD_h} (\lng \phi_h(x, a), w \rng  -r)^2.\] \label{eq:psdp-mistake}
		\State Define $\hat \pi_h : \MX \ra \MA$ by
		\[
		\hat \pi_h(x)  := 
			\argmax_{a\in \MA} \lng \phi_h(x,a), \hat \bw_h \rng,
          \]
          \quad\, and write $\hat \pi^{h:k} = (\hat \pi_h, \ldots, \hat \pi_k)$. 
		\EndFor
		\State \textbf{Return:} Policy $\hat \pi^{1:k} \in \Pi$. 
	\end{algorithmic}
\end{algorithm}

\begin{lemma}[PSDP]
  \label{lem:psdp}
  There is a constant $C_{\ref{lem:psdp}} > 0$ so that the following holds. Fix $k \in [H]$ and $\alpha, \epsilon,\delta \in (0,1)$, and suppose that $\Psi_{1:k-1}$ are $\alpha$-approximate policy covers (\cref{defn:pc}) for steps $1,\dots,k-1$ respectively. Fix any $\theta \in \RR^d$ and $N \in \NN$ such that
  \[N \geq \nstatrch(\epsilon, \alpha, \delta) := \frac{C_{\ref{lem:psdp}} H^4A^2 \Cnrm^4 \log(Hd/\delta)}{\alpha^4 \epsilon^4}.\]
  Then the output of the algorithm $\PSDP(d,k,\Cnrm,\theta,\Psi_{1:k-1},N)$ is a policy $\hat \pi := \hat \pi^{1:k}$ that, with probability at least $1-\delta$, satisfies
  \begin{align}
\lng \E^{\hat \pi} [\phi_k(x_k, a_k)], \theta \rng \geq \max_{\pi \in \Pi} \lng \E^\pi[\phi_k(x_k, a_k)], \theta \rng - \epsilon\norm{\theta}_1\nonumber.
  \end{align} Moreover, the sample complexity of \PSDP with this input is $(k-1)N$, and the time complexity is $\poly(k,N,d,A)$.
\end{lemma}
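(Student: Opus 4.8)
The plan is to analyze \PSDP{} as an approximate optimizer for a reward supported only at layer $k$, and then run the standard backward-induction (performance-difference) argument, controlling the per-layer regret by the prediction error of the $\ell_1$-constrained regressions, which in turn is controlled by the coverage property of the $\Psi_h$.

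First I would set up the reduction. Define rewards $\bfr=(\bfr_1,\dots,\bfr_H)$ by $\bfr_k(x,a):=\lng\phi_k(x,a),\theta\rng$ and $\bfr_j\equiv 0$ for $j\neq k$, so that $\lng\E^\pi[\phi_k(x_k,a_k)],\theta\rng=V_1^{M,\pi,\bfr}(x_1)$ for every $\pi$ and the claim becomes: $\hat\pi$ is $\epsilon\norm{\theta}_1$-suboptimal for $\bfr$. By \cref{lemma:linear-q} applied with $\theta'_k=\theta$ and $\theta'_j=0$ otherwise, for every policy $\rho$ there is $\bv_h(\rho)\in\RR^d$ with $\norm{\bv_h(\rho)}_1\le\Cnrm\norm{\theta}_1$ and $Q_h^{M,\rho,\bfr}(x,a)=\lng\phi_h(x,a),\bv_h(\rho)\rng$; I apply this to the partial rollout policies $\hat\pi^{h+1:k}$ built by the algorithm. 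Consequently, in the dataset $\MD_h$ the conditional expectation of the label $r_k$ given $(x_h,a_h)$ equals $\lng\phi_h(x_h,a_h),\bv_h(\hat\pi^{h+1:k})\rng$, with observation noise bounded by $\norm{\theta}_1$ in magnitude. (The norm parameter passed to \PSDP{} is taken large enough that $\bv_h(\hat\pi^{h+1:k})$ is feasible for the step-$h$ regression; this holds in all invocations and I suppress the constant bookkeeping.) Note that at layer $k$ the algorithm plays the exact greedy policy for $\bfr_k$, which is optimal, so that layer contributes no regret.

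Next I would carry out the induction. Fix an arbitrary $\pi^\st\in\Pi$ and bound $V_1^{M,\pi^\st,\bfr}(x_1)-V_1^{M,\hat\pi,\bfr}(x_1)$ uniformly; by the performance-difference (simulation) lemma this equals $\sum_{h=1}^{k-1}\E^{\pi^\st}\!\big[Q_h^{M,\hat\pi,\bfr}(x_h,\pi^\st_h(x_h))-Q_h^{M,\hat\pi,\bfr}(x_h,\hat\pi_h(x_h))\big]$. Using $Q_h^{M,\hat\pi,\bfr}(x,a)=\lng\phi_h(x,a),\bv_h\rng$ with $\bv_h:=\bv_h(\hat\pi^{h+1:k})$, the greedy optimality $\lng\phi_h(x,\hat\pi_h(x)),\hat\bw_h\rng\ge\lng\E_{a\sim\pi^\st_h(x)}\phi_h(x,a),\hat\bw_h\rng$, and Cauchy--Schwarz, each layer's term is at most $2\sqrt{\E^{\pi^\st}[\lng\phi_h(x_h,\rho(x_h)),\hat\bw_h-\bv_h\rng^2]}$ for $\rho\in\{\hat\pi_h,\pi^\st_h\}$. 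Conditioning on iterations $k-1,\dots,h+1$ (which fixes $\hat\pi^{h+1:k}$, hence $\bv_h$), the dataset $\MD_h$ consists of i.i.d.\ samples $(x_h^i,a_h^i)$ from the distribution $\nu_h$ of $(x_h,a_h)$ under $\unif(\Psi_h)\circ_h\unif(\MA)$ with realizable noisy linear responses, so the slow-rate prediction-error bound for the constrained Lasso (\cref{cor:random-design-prediction-error}) gives, with probability $\ge 1-\delta/H$, that $\E_{(x,a)\sim\nu_h}[\lng\phi_h(x,a),\hat\bw_h-\bv_h\rng^2]\lesssim\Cnrm^2\norm{\theta}_1^2\sqrt{\log(Hd/\delta)/N}$. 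Since $\Psi_h$ is an $\alpha$-approximate policy cover, $\nu_h(x,a)\ge(\alpha/A)\max_{\pi\in\Pi}d_h^\pi(x,a)$ as in \cref{lemma:beta-nu-coverage}, which yields the change of measure $\E^{\pi^\st}[\lng\phi_h(x_h,\rho(x_h)),\hat\bw_h-\bv_h\rng^2]\le(A/\alpha)\,\E_{\nu_h}[\lng\phi_h(x,a),\hat\bw_h-\bv_h\rng^2]$. Summing over $h$, a union bound over the $k-1$ layers, and the choice $N=\nstatrch(\epsilon,\alpha,\delta)$ make the total at most $\epsilon\norm{\theta}_1$. The sample complexity is $(k-1)N$ by inspection, and each iteration solves one $\poly(d,N)$-size convex program plus $O(AN)$ inner-product/argmax operations, giving time $\poly(k,N,d,A)$.

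The main obstacle, and the point deserving the most care, is the interaction between the randomness of the partial rollout policy $\hat\pi^{h+1:k}$ and the regression at step $h$: the regression target $\bv_h$ is itself a random vector depending on all data drawn in later iterations, so one must argue that conditioning on the right filtration leaves the step-$h$ samples i.i.d.\ from a fixed distribution with a fixed linear target before invoking the Lasso bound, and that $\bv_h$ stays within the regression's feasible $\ell_1$-ball regardless of which $\hat\pi^{h+1:k}$ arose. A secondary subtlety is ensuring the change-of-measure and Cauchy--Schwarz steps only ever evaluate $\hat\bw_h-\bv_h$ against feature vectors of the form $\phi_h(x,\rho(x))$ for deterministic or greedy action rules, so the $A/\alpha$ coverage loss is incurred exactly once per layer; the rest is routine matching of constants to the stated form of $\nstatrch$.
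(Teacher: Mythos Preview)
Your proposal is correct and follows essentially the same approach as the paper: reduce to a reward at layer $k$, apply the performance-difference lemma, use linearity of $Q$ (\cref{lemma:linear-q}) to identify the regression target, invoke the Lasso slow-rate bound (\cref{cor:random-design-prediction-error}), and transfer the error to $\pi^\st$ via the coverage property of $\Psi_h$.

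The one (minor) difference is the order of the change-of-measure and the Jensen/Cauchy--Schwarz step. The paper first introduces $\Delta_h(x)=\max_a|\lng\phi_h(x,a),\bw_h^\st-\hat\bw_h\rng|$, bounds $\E_{\unif(\Psi_h)}[\Delta_h^2]\le A\vep_0^2$, applies Jensen, and then changes measure at the \emph{state} level, paying $1/\alpha$. You instead change measure directly on the squared error at the \emph{state-action} level (paying $A/\alpha$) and then take the square root. Your ordering yields a per-layer bound of order $\sqrt{A/\alpha}\,\vep_0$ rather than the paper's $\sqrt{A}\,\vep_0/\alpha$, so it is actually tighter in $\alpha$; either suffices for the stated $\nstatrch$. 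Your flagged subtlety about the regression's $\ell_1$-ball containing $\bv_h$ is real and is handled implicitly in the paper as well (the norm parameter in the Lasso constraint should scale with $\Cnrm\|\theta\|_1$, which is absorbed into the constant bookkeeping).
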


\begin{proof}[Proof of Lemma~\ref{lem:psdp}]
  Define reward function $\bfr = (\bfr_1,\dots,\bfr_k)$ by 
  \[\bfr_h(x,a) = \begin{cases} \langle \phi_h(x,a),\theta\rangle & \text{ if } h = k \\ 0 & \text{ otherwise } \end{cases}.\] 
  Let $\pi^\st \in \argmax_{\pi\in\Pi}\langle\E^\pi[\phi_k(x_k,a_k)],\theta\rangle$. The performance difference lemma (\cref{lemma:perf-diff}) gives
  \[\lng \E^{\pi^\st}[\phi_k(x_k, a_k)], \theta \rng - \lng \E^{\hat \pi}[\phi_k(x_k, a_k)], \theta \rng = \sum_{h=1}^k \E^{\pi^\st} \left[ Q_h\sups{M,\hat \pi,\bfr}(x_h, \pi_h^\st(x_h)) - Q_h\sups{M,\hat \pi,\bfr}(x_h, \hat \pi_h(x_h))\right].\]
  When $h=k$ we have (by definition of $\pi^\st$ and $\hat\pi$) that $Q\sups{M,\hat\pi,\bfr}_k(x,\pi^\st_k(x)) = Q\sups{M,\hat\pi,\bfr}_k(x,\hat\pi_k(x)) = \max_{a\in\MA} \langle\phi_k(x,a),\theta\rangle$ for all $x\in\MX$, so the final summand of the above summation is $0$. 
  
  Now fix $h \in [k-1]$. By definition we have $\hat\pi_h(x) \in \argmax_{a\in\MA} \langle\phi_h(x,a),\hat\bw_h\rangle$. Additionally, by \cref{lemma:linear-q}, there is some vector $\bw_h^\st \in \BR^d$ (with $\| \bw_h^\st \|_1 \leq \| \theta \|_1 \cdot \Cnrm$) so that $Q_h\sups{M, \wh \pi^{h+1:k}, \bfr}(x,a) = \lng \phi_h(x,a), \bw_h^\st\rng$. For each $x \in \MX$, let us define $\Delta_h(x) := \max_{a \in \MA} | \lng \phi_h(x,a), \bw_h^\st - \hat \bw_h \rng |$. Then for any $x \in \MX$, we have
  \begin{align}
    Q_h\sups{M,\wh \pi,\bfr}(x, \pi_h^\st(x)) - Q_h\sups{M,\wh \pi,\bfr}(x, \wh \pi_h(x)) 
    &= \lng \phi_h(x, \pi_h^\st(x)), \bw_h^\st \rng - \lng \phi_h(x, \wh \pi_h(x)), \bw_h^\st \rng\nonumber\\
    &\leq  \lng \phi_h(x, \pi_h^\st(x)), \hat \bw_h \rng - \lng \phi_h(x, \wh \pi_h(x)), \bw_h^\st \rng + \Delta_h(x)\nonumber\\
    &\leq  \lng \phi_h(x, \wh \pi_h(x)), \hat \bw_h \rng - \lng \phi_h(x, \wh \pi_h(x)), \bw_h^\st \rng + \Delta_h(x)\nonumber\\
    &\leq  2\Delta_h(x)\nonumber,
  \end{align}
  where the first and third inequalities use the definition of $\Delta_h(x)$ and the second inequality uses the fact that $\wh \pi_h(x) \in \argmax_{a \in \MA} \lng \phi_h(x,a), \hat \bw_h \rng$. It follows that
  \begin{equation} 
  \lng \E^{\pi^\st}[\phi_k(x_k, a_k)], \theta \rng - \lng \E^{\hat \pi}[\phi_k(x_k, a_k)], \theta \rng
  \leq \sum_{h=1}^{k-1} \E^{\pi^\st}[2\Delta_h(x_h)],
  \label{eq:psdp-perf-diff}
  \end{equation}
  and it only remains to upper bound each term $\E^{\pi^\st}[\Delta_h(x_h)]$. Once more, fix $h \in [k-1]$. The dataset $\MD_h$ consists of independent samples $(x,a,r)$ with 
  \[\E[r|x,a] = \E^{\wh \pi^{h+1:k}}[\lng \phi_k(x_k, a_k), \theta \rng | (x_h, a_h) = (x,a)] = Q_h\sups{M,\wh \pi^{h+1:k},\bfr}(x,a).\]
   Thus, we can apply \cref{cor:random-design-prediction-error} with covariates $(\phi_h(x,a): (x,a,r) \in \MD_h)$, ground truth $\bw^\st_h \in \RR^d$, and responses $(r: (x,a,r) \in \MD_h)$. 
  Observe that for any sample $(x,a,r) \in \MD_h$, by definition there are some $x_k\in\MX,a_k\in\MA$ such that $r = \langle \phi_k(x_k,a_k),\theta\rangle$, so $|r| \leq \norm{\theta}_1$; thus, \[|r - Q_h^{\wh\pi^{h+1:k}}(x,a)| \leq \norm{\theta}_1 + \norm{\bw_h^\st}_1 \leq 2\Cnrm \norm{\theta}_1\]
  where the last inequality uses that $\Cnrm\geq 1$.
  Recalling the definition of the regressor $\hat\bw_h$ (\cref{eq:psdp-mistake} of \cref{alg:psdp}), \cref{cor:random-design-prediction-error} gives some event $\ME_h$ that holds with probability at least $1-\delta/H$, under which
  \begin{align}
\E_{\pi \sim \unif(\Psi_h)} \E^{\pi \circ_h \unif(\MA)} \left[ \lng \phi_h(x_h, a_h), \bw_h^\st - \hat \bw_h \rng^2 \right] \leq \frac{3C_{\ref{cor:random-design-prediction-error}}\Cnrm^2\norm{\theta}_1^2 \sqrt{\log(dH/\delta)}}{\sqrt{N}} =: \vep_0^2\label{eq:good-psdp-rch-event}.
  \end{align}
   It follows from \cref{eq:good-psdp-rch-event} and the definition of $\Delta_h(x)$ that
  \begin{align}
\E_{\pi \sim \unif(\Psi_h)} \E^{\pi}[\Delta_h(x)^2] \leq \E_{\pi \sim \unif(\Psi_h)} \E^\pi\left[ \sum_{a \in \MA} \lng \phi_h(x_h, a_h), \bw_h^\st - \hat \bw_h \rng^2 \right] \leq A \cdot\vep_0^2\nonumber,
  \end{align}
  which yields, via Jensen's inequality, that $\E_{\pi \sim \unif(\Psi_h)} \E^\pi[\Delta_h(x_h)] \leq \sqrt{A} \cdot \vep_0$. Using the assumption that $\Psi_h$ is an $\alpha$-approximate policy cover for step $h$ (\cref{defn:pc}) together with non-negativity of $\Delta_h(x)$, it follows that under event $\ME_h$,
  \begin{align}
    \max_{\pi \in \Pi} \E^\pi[\Delta_h(x_h)] =& \max_{\pi \in \Pi} \sum_{x \in \MX} d_h^\pi(x) \cdot \Delta_h(x)\nonumber\\
    &\leq  \frac{1}{|\Psi_h|}\sum_{\pi' \in \Psi_h} \sum_{x \in \MX}\frac{d_h(\pi')(x)}{\alpha} \cdot \Delta_h(x) \nonumber\\
    &\leq \frac{1}{\alpha} \cdot \E_{\pi \sim \unif(\Psi_h)} \E^\pi[\Delta_h(x_h)]\nonumber\\
    &\leq \frac{\sqrt{A} \cdot \vep_0}{\alpha} \nonumber.
  \end{align}
Substituting into \cref{eq:psdp-perf-diff}, we conclude that, under the event $\bigcap_{h=1}^{k-1} \ME_h$ (which occurs with probability at least $1-\delta$), 
  \[
    \lng \E^{\pi^\st}[\phi_k(x_k, a_k)], \theta \rng - \lng \E^{\hat \pi}[\phi_k(x_k, a_k)], \theta \rng \leq \frac{2 (k-1)\sqrt{A} \vep_0}{\alpha} \leq \epsilon \norm{\theta}_1
  \]
  where the last inequality uses the definition of $\vep_0$ (from \cref{eq:good-psdp-rch-event}) and the assumption that $N \geq \nstatrch(\epsilon,\alpha,\delta)$ (where we take $C_{\ref{lem:psdp}} := 2\sqrt{3C_{\ref{cor:random-design-prediction-error}}}$).
\end{proof}

\subsubsection{Feature Estimation}
\begin{algorithm}[t]
\caption{$\FE(k,\pi,N)$: Feature Estimation}
	\label{alg:fe}
	\begin{algorithmic}[1]\onehalfspacing
		\Require Target layer $k\in[H]$; policy $\pi \in \Pi$; number of samples $N\in \mathbb{N}$.
		\For{$1 \leq i \leq N$}
		\State Sample $(x^i_1,a^i_1,\dots,x^i_k,a^i_k) \sim \pi$.
		\EndFor
		\State \textbf{Return:} $\frac{1}{N}\sum_{i=1}^N \phi_k(x_k^i,a_k^i)$. 
	\end{algorithmic}
  \end{algorithm}

  \begin{lemma}[Feature estimation]
  \label{lem:fe}
  Fix $k \in [H]$, $\pi \in \Pi$, $N \in \NN$, and $\epstat, \delta>0$. Suppose that 
  \[N \geq \nfe(\epstat,\delta) := 2\epstat^{-2}\log(2d/\delta).\]
  Then the output of $\FE(k,\pi,N)$ (\cref{alg:fe}) is a vector $\wh \phi_k \in \BR^d$ such that $\norm{\hat \phi_k}_\infty \leq 1$, and with probability at least $1-\delta$,
  \begin{align}
\left\| \E\sups{M, \pi}[\phi_k(x_k, a_k)] - \wh\phi_k \right\|_\infty \leq \epstat\nonumber.
  \end{align}
  The sample complexity of \FE with this input is $N$, and the time complexity is $\poly(N,d)$.
\end{lemma}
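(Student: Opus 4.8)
The plan is to apply Hoeffding's inequality coordinate-by-coordinate, followed by a union bound over the $d$ coordinates. First I would observe that the trajectories $(x_1^i,a_1^i,\dots,x_k^i,a_k^i) \sim \pi$ drawn in the loop of \cref{alg:fe} are i.i.d., so the vectors $\phi_k(x_k^i,a_k^i)$ are i.i.d.\ with common mean $\E\sups{M,\pi}[\phi_k(x_k,a_k)]$. The boundedness assumption on the feature map (part of \cref{defn:l1lmdp}) gives $\norm{\phi_k(x,a)}_\infty \le 1$ for every $(x,a) \in \MX \times \MA$, and hence each coordinate $\phi_k(x_k^i,a_k^i)_\ell$ lies in $[-1,1]$. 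Averaging preserves this bound coordinatewise, which immediately yields $\norm{\hat\phi_k}_\infty \le 1$ regardless of the random draws.

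For the concentration claim, I would fix a coordinate $\ell \in [d]$ and apply Hoeffding's inequality to the empirical mean $\hat\phi_{k,\ell} = \frac{1}{N}\sum_{i=1}^N \phi_k(x_k^i,a_k^i)_\ell$ of $N$ i.i.d.\ random variables taking values in $[-1,1]$: with probability at least $1 - 2\exp(-N\epstat^2/2)$ we have $|\hat\phi_{k,\ell} - \E\sups{M,\pi}[\phi_k(x_k,a_k)_\ell]| \le \epstat$. A union bound over the $d$ coordinates shows that the event $\norm{\hat\phi_k - \E\sups{M,\pi}[\phi_k(x_k,a_k)]}_\infty \le \epstat$ holds except with probability at most $2d\exp(-N\epstat^2/2)$, which is at most $\delta$ precisely under the hypothesis $N \ge \nfe(\epstat,\delta) = 2\epstat^{-2}\log(2d/\delta)$. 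The sample complexity is exactly $N$ trajectories (one per iteration of the loop), and the running time is dominated by the $N$ queries to the feature map and the averaging of $N$ vectors in $\RR^d$, i.e.\ $\poly(N,d)$.

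There is essentially no obstacle here; the lemma is a textbook concentration argument. The only minor point worth flagging is that the guarantee is stated in the $\ell_\infty$ norm rather than $\ell_2$, which is exactly why a per-coordinate Hoeffding bound plus a union bound over $[d]$ is the right tool (as opposed to a vector-valued Bernstein inequality), and why the sample complexity depends only logarithmically on the dimension $d$.
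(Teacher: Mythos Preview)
Your proposal is correct and matches the paper's proof exactly: the paper simply states ``Immediate from Hoeffding's inequality and a union bound over $[d]$,'' which is precisely the argument you have spelled out.
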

\begin{proof}
  Immediate from Hoeffding's inequality and a union bound over $[d]$.
\end{proof}

\section{Beyond reachability: the truncated MDP}\label{sec:trunc-mdps}
For the remainder of the paper, we drop the reachability condition and assume that we have sample access to an arbitrary $\ell_1$-bounded $d$-dimensional linear MDP $M = (H,\MX,\MA,\BP_1,(\phiorig_h)_h,(\muorig_{h+1})_h,\allowbreak(\thetaorig_h)_h)$, with known $\ell_1$ bound $\Cnrm \geq 1$. We fix a global parameter $\epfinal>0$; our ultimate goal is to show that \OPT (\cref{alg:opt}) finds (with high probability) a policy with suboptimality at most $\epfinal$.

\subsection{Global truncation parameters \& extended feature mappings}\label{section:extended-overview}

As discussed in \cref{section:trunc-overview}, the analysis will involve $(d+1)$-dimensional ``truncated'' MDPs. These are formally defined in \cref{sec:trunc-mdp-def} in terms of the parameter of $M$ as well as two global parameters $\trunc$ and $\tsmall$, which we set as $\trunc := \frac{\epfinal}{8\Cnrm^3 H^4}$ and $\tsmall := \frac{\trunc^7}{2^{48} A^{13} H^7 \Cnrm^9}$ ($\epfinal$ denotes an overall error parameter which will be introduced in \cref{sec:learning}). The first parameter governs the threshold at which nearly-unreachable states are truncated, and the second parameter governs how well the backup policy cover $\Gamma$ needs to visit a state for it to avoid truncation.

For notational convenience, throughout \cref{sec:trunc-mdps,sec:policy-cover-unreachable}, we extend $M$ to a $(d+1)$-dimensional linear MDP $(H, \bar\MX,\MA,\BP_1,(\phi_h)_h,(\mu_{h+1})_h,(\theta_h)_h)$ on state space $\bar\MX := \MX\cup\{\term\}$, where $\term$ is a special terminal state with $\BP_1(\term) = 0$. For any $h \in [H]$, we define $\phi_h: \bar\MX\times\MA \to \RR^{d+1}$, $\mu_{h+1}:\bar\MX\to\RR^{d+1}$, and $\theta_h : \bar\MX \times \MA \ra \BR^{d+1}$ as follows:
\[\phi_h(x,a) = \begin{cases}
[\phi^\mathsf{orig}_h(x,a), 0] & \text{ if } x \in \MX \\
e_{d+1} & \text{ if } x = \term
\end{cases}.
\]

\[\mu_{h+1}(x) = \begin{cases} 
[\mu^\mathsf{orig}_{h+1}(x), 0] & \text{ if } x \in \MX \\ 
e_{d+1} & \text{ if } x = \term
\end{cases}.
\]

\[\theta_h(x,a) = \begin{cases}
[\theta^\mathsf{orig}_h(x,a), 0] & \text{ if } x \in \MX \\
0 & \text{ if } x = \term
\end{cases}.
\]

Note that this change is without loss of generality: the state $\term$ is never visited, so interaction with $(H,\MX,\MA,\BP_1,(\phiorig_h)_h,(\muorig_{h+1})_h,(\thetaorig_h)_h)$ is identical to interaction with $(H, \bar\MX,\MA,\BP_1,(\phi_h)_h,(\mu_{h+1})_h,\allowbreak(\theta_h)_h)$. We make this change just to simplify notation in the analysis when we relate $M$ with the truncated MDPs defined in \cref{sec:trunc-mdp-def}. Since $\theta_h(\term, a) = 0$ for all $a$, and the terminal state is absorbing (by definitions of $\phi_h(\term, a), \mu_h(\term)$), it follows that $Q_h\sups{M, \pi}(\term, a) = V_h\sups{M, \pi}(\term) = 0$ for all $\pi \in \Pi,\ a \in \MA$, and $h \in [H]$. As per \cref{def:phi-avg}, we will continue to write $\phiavg_h(x) := \frac 1A \sum_{a \in \MA} \phi_h(x,a)$. At various points, it will be convenient to write $\mu^{\mathrm{orig}}_{H+1}(x) = 0 \in \BR^d$, $\mu_{H+1}(x) = 0 \in \BR^{d+1}$ as a matter of convention. %

\subsection{The truncated MDP}\label{sec:trunc-mdp-def}
In this section we define the truncated MDPs $\Mbar(\Gamma)$ that will be used in the analysis of \cref{alg:slm-trunc}. Roughly, these MDPs ``truncate'' transitions to states of $M$ that are hard to reach. We emphasize that the truncated MDPs are purely a tool for the analysis. Like $M$, their parameters are unknown to the learning algorithm. We \emph{do not} assume that the algorithm has sample access to any of the truncated MDPs.

\cref{def:r-truncation} below defines a generic notion of truncation of the linear MDP $M$, where transitions to some specified subsets $\Xreach_h \subseteq \MX$ ($h \in [H]$) of states are truncated. Subsequently we will discuss how precisely we define the sets $\Xreach_h$. We leave the environmental reward vectors of the truncated MDPs unspecified, since they do not matter for the analysis.
\begin{definition}
  \label{def:r-truncation}
Let $\Xreach = (\Xreach_2,\dots,\Xreach_H)$ for some given sets $\Xreach_2,\dots,\Xreach_{H} \subseteq \MX$. The $\Xreach$-\emph{truncation} of $M$ is the  $(d+1)$-dimensional linear MDP $\Mbar = (H,\bar\MX,\MA,\BP_1,(\phi\sups{\Mbar}_h)_h,(\mu\sups{\Mbar}_{h+1})_h)$ with state space $\bar\MX$, action space $\MA$, and horizon $H$, whose feature vectors $\phi_h\sups{\Mbar}(x,a), \mu_{h+1}\sups{\Mbar}(x)$, are defined as follows, for each $h \in [H]$: %
\begin{enumerate}
    \item For every $x \in \MX$, $a \in \MA$,%
      \[ \phi_h\sups{\Mbar}(x,a) := [\phi^{\mathsf{orig}}_h(x,a), 1- \preach_h(x,a)] \text{ and } \mu_{h+1}\sups{\Mbar}(x) := \begin{cases} \mu_{h+1}(x) & \text{ if } x \in \Xreach_{h+1} \\ 0 & \text{ otherwise}\end{cases},\]
      where $\Xreach_{H+1} := \emptyset$ and
      \begin{align}
\preach_h(x,a) = \sum_{x' \in \Xreach_{h+1}} \lng \phi_h(x,a), \mu_{h+1}(x') \rng\nonumber.
      \end{align}
    \item For every $a \in \MA$,
    \[ \phi_h\sups{\Mbar}(\term,a) = e_{d+1} = \mu_{h+1}\sups{\Mbar}(\term).\]
\end{enumerate}
\end{definition}

Essentially, whenever a state outside $\Xreach_h$ would have been visited under the dynamics of $M$ at some step $h$, instead, under the dynamics of $\Mbar$, the terminal state $\term$ is visited; moreover, for the remainder of the trajectory all states are $\term$.

\paragraph{Overview of the construction.} Suppose we are given a finite subset of policies $\Gamma \subset \Pi$. We will define a \emph{truncated MDP} $\Mbar(\Gamma)$, which will be the $\Xreach(\Gamma)$-truncation of $M$ (according to \cref{def:r-truncation}) for an appropriate choice of $\Xreach(\Gamma) = (\Xreach_2(\Gamma), \ldots, \Xreach_H(\Gamma))$. The choice of $\Xreach(\Gamma)$ (and thus $\Mbar(\Gamma)$) depends on $\trunc, \tsmall$ in addition to $\Gamma$; since these are fixed global parameters (defined in \cref{section:extended-overview}) we omit this dependence. Roughly speaking, the sets $\Xreach_2(\Gamma), \ldots, \Xreach_H(\Gamma)$ are constructed so as to satisfy the following properties:
\begin{itemize}
\item Each set $\Xreach_h(\Gamma)$ includes all states $x$ that can be visited in the MDP $\Mbar(\emptyset)$, under \emph{some} policy in $\Pi$, with probability at least $\trunc \cdot \| \mu_h(x) \|_1$. %
\item Each set $\Xreach_h(\Gamma)$ also includes states $x$ that are visited in the MDP $M$, under a \emph{uniformly random} policy drawn from $\Gamma$,  with probability at least $\tsmall \cdot \| \mu_h(x) \|_1$.
\item Each set $\Xreach_h(\Gamma)$ does not include any other states. 
\end{itemize}
For technical reasons, $\Xreach_h(\Gamma)$ is specified as above only for \emph{odd} $h \in [H]$; for even $h \in [H]$ we simply take $\Xreach_h(\Gamma) := \MX$. Below we proceed to formally define the sets  $\Xreach_h(\Gamma)$. 

\paragraph{Defining $\Xreach_h(\Gamma)$.} Next we inductively define $\Xreach_h(\Gamma)$, for each $h \in [H]$. To aid in doing so, we define, for each $\Gamma \subset \Pi$, $\Mbar_h(\Gamma)$ to be the $(\Xreach_2(\Gamma), \ldots, \Xreach_h(\Gamma), \MX, \ldots, \MX)$-truncation of $M$ (per \cref{def:r-truncation}). Note that $\Mbar_1(\Gamma) = M$.  Given $h \in [H]$ with $h \geq 2$, suppose that $\Xreach_g(\Gamma)$, has been defined for all $\Gamma \subset \Pi$ and all $2 \leq g < h$. It follows that $\Mbar_g(\Gamma)$ has been defined, as well, for all $\Gamma \subset \Pi$ and $1 \leq g < h$. We define $\Xreach_h(\Gamma)$ for each $\Gamma \subset \Pi$ as follows: first, define 
\begin{align}
 \til \MX_{h}^{\mathrm{rch}} :=  \begin{cases}
    \MX &: \mbox{ $h$ is even} \\
    \left\{ x \in \MX \ : \ \max_{\pi \in \Pi} \lng \mu_{h}(x), \E\sups{\Mbar_{h-1}(\emptyset), \pi}[\phi_{h-1}(x_{h-1}, a_{h-1})] \rng \geq \trunc \cdot \| \mu_{h}(x) \|_1 \right\} &: \mbox{ $h$ is odd,}\end{cases}\nonumber
\end{align}
and then %
\begin{align}
 \Xreach_{h}(\Gamma) := \tilXreach_{h} \cup \left\{ x \in \MX \ : \ \E_{\pi' \sim \unif(\Gamma)}[d_{h}\sups{M,\pi'}(x)] \geq \tsmall \cdot \| \mu_h(x) \|_1 \right\}\label{eq:define-xreach-gamma}.
\end{align}
If $\Gamma = \emptyset$, we use the convention that $\E_{\pi' \sim \unif(\Gamma)}[d_h\sups{M, \pi'}(x)] := 0$ for all $x \in \MX$, so that $\Xreach_h(\emptyset) = \tilXreach_h$. Having defined $\Xreach_h(\Gamma)$, it follows that $\Mbar_h(\Gamma)$ is defined (i.e., as the $(\Xreach_2(\Gamma), \ldots, \Xreach_h(\Gamma), \MX, \ldots, \MX)$-truncation of $M$). Finally, we define $\Mbar(\Gamma) = \Mbar_H(\Gamma)$.

 By the construction in \cref{def:r-truncation}, $\Mbar_h(\Gamma)$ is truncated up to the transition into state $x_h$, and in particular the transitions of $\Mbar_h(\Gamma)$ and $\Mbar_H(\Gamma)$ are identical up to step $h$; thus, for any choice of policy $\pi$,  $\Mbar_h(\Gamma)$ has the same distribution on trajectory prefixes $(x_1,a_1,\dots,x_{h-1}, a_{h-1},x_h)$ as $\Mbar_H(\Gamma) = \Mbar(\Gamma)$. In particular, we have the following fact:
\begin{fact}\label{fact:trunc-intermediate-dist}
For any $\pi \in \Pi$, $h \in [H]$, $g \leq h$, $\Gamma \subset \Pi$, and $x \in \bar\MX$, it holds that $d\sups{\Mbar_h(\Gamma), \pi}_g(x) = d\sups{\Mbar(\Gamma), \pi}_g(x)$.
\end{fact}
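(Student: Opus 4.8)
The plan is to prove this by induction on $g$, using the observation that the transitions of $\Mbar_h(\Gamma)$ and $\Mbar(\Gamma) = \Mbar_H(\Gamma)$ agree on the first $h-1$ steps. Concretely, recall from \cref{def:r-truncation} that for a tuple of sets $\Xreach = (\Xreach_2,\dots,\Xreach_H)$, the $\Xreach$-truncation of $M$ has step-$g$ feature vectors $\phi^{\Mbar}_g(x,a) = [\phi^{\mathsf{orig}}_g(x,a), 1 - \preach_g(x,a)]$ and $\mu^{\Mbar}_{g+1}(x') = \mu_{g+1}(x') \cdot \One{x' \in \Xreach_{g+1}}$ (plus the absorbing behavior at $\term$), where $\preach_g$ is determined by $\Xreach_{g+1}$. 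Thus the step-$g$ transition kernel of the truncation depends only on $\Xreach_{g+1}$. Now $\Mbar_h(\Gamma)$ is the $(\Xreach_2(\Gamma),\dots,\Xreach_h(\Gamma),\MX,\dots,\MX)$-truncation and $\Mbar(\Gamma)$ is the $(\Xreach_2(\Gamma),\dots,\Xreach_H(\Gamma))$-truncation; these two tuples agree in coordinates $2,\dots,h$, so the step-$g$ kernels agree for all $g$ with $g+1 \leq h$, i.e.\ for all $g \leq h-1$. They also share the same initial distribution $\BP_1$.

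First I would set up the induction. For $g = 1$: both MDPs have initial distribution $\BP_1$, so $d^{\Mbar_h(\Gamma),\pi}_1(x) = \BP_1(x) = d^{\Mbar(\Gamma),\pi}_1(x)$ for every $x \in \bar\MX$. For the inductive step, suppose $1 \leq g \leq h-1$ and $d^{\Mbar_h(\Gamma),\pi}_g(x) = d^{\Mbar(\Gamma),\pi}_g(x)$ for all $x$. Then for any $x' \in \bar\MX$,
\[
d^{\Mbar_h(\Gamma),\pi}_{g+1}(x') = \sum_{x \in \bar\MX} \sum_{a \in \MA} d^{\Mbar_h(\Gamma),\pi}_g(x)\, \pi_g(a \mid x)\, \BP^{\Mbar_h(\Gamma)}_g(x' \mid x,a),
\]
and since $g \leq h-1$ the kernel $\BP^{\Mbar_h(\Gamma)}_g = \BP^{\Mbar(\Gamma)}_g$ (by the discussion above, as both depend only on $\Xreach_{g+1}(\Gamma)$), and by the inductive hypothesis the visitation density at step $g$ agrees; substituting gives $d^{\Mbar_h(\Gamma),\pi}_{g+1}(x') = d^{\Mbar(\Gamma),\pi}_{g+1}(x')$. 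This completes the induction and shows the claim for all $g \leq h$.

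I do not expect any serious obstacle here; the statement is essentially a bookkeeping fact about how \cref{def:r-truncation} packages the truncation data. The only point requiring a little care is making explicit that the step-$g$ transition kernel of an $\Xreach$-truncation is a function of $\Xreach_{g+1}$ alone (through $\preach_g$ and the masking $\One{\cdot \in \Xreach_{g+1}}$), so that agreement of the two tuples $\Xreach(\cdot)$ in coordinates $2,\dots,h$ translates to agreement of kernels at steps $1,\dots,h-1$; once that is spelled out, the induction is routine. One might also remark that the same argument shows the two MDPs induce the same distribution over trajectory prefixes $(x_1,a_1,\dots,x_{h-1},a_{h-1},x_h)$, which is the slightly stronger phrasing used in the surrounding text, but the visitation-density version stated in the Fact follows immediately by marginalization.
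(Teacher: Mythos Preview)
Your proposal is correct and follows the same approach as the paper: the paper simply observes (in the text immediately preceding the Fact) that the transitions of $\Mbar_h(\Gamma)$ and $\Mbar_H(\Gamma)$ are identical up to step $h$, hence the trajectory-prefix distributions agree, and states the Fact without further proof. Your induction is a correct and more detailed spelling-out of exactly this observation.
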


On the other hand, the transition distribution of $\Mbar_h(\Gamma)$ from step $h$ to step $h+1$ is the same as that of $M$:

\begin{fact}\label{fact:trunc-intermediate-trans}
For any $h \in [H]$, $\Gamma\subset \Pi$, $x \in \bar\MX$, and $a \in \MA$, it holds that $\BP\sups{\Mbar_h(\Gamma)}_h(\cdot|x,a) = \BP\sups{M}_h(\cdot|x,a)$.
\end{fact}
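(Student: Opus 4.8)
The plan is to unwind the definition of $\Mbar_h(\Gamma)$ from \cref{sec:trunc-mdp-def} together with \cref{def:r-truncation}, and to observe that \emph{no truncation occurs at step $h$ itself}. Recall that $\Mbar_h(\Gamma)$ is by definition the $(\Xreach_2(\Gamma),\dots,\Xreach_h(\Gamma),\MX,\dots,\MX)$-truncation of $M$. Consequently, the ``reachable set at index $h+1$'' that appears in \cref{def:r-truncation} when one builds the step-$h$ transition of $\Mbar_h(\Gamma)$ is all of $\MX$, not $\Xreach_{h+1}(\Gamma)$. (This uses $h \leq H-1$; for $h = H$ there is no step-$H$ transition and the claim is vacuous, and the case $h=1$ just recovers the already-noted identity $\Mbar_1(\Gamma) = M$.)

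Given this, I would argue as follows. First dispose of the terminal state: by the extended feature mappings of \cref{section:extended-overview} we have $\phi_h(\term,a) = \mu_{h+1}(\term) = e_{d+1}$ in $M$, while \cref{def:r-truncation} sets $\phi_h\sups{\Mbar_h(\Gamma)}(\term,a) = \mu_{h+1}\sups{\Mbar_h(\Gamma)}(\term) = e_{d+1}$ as well, so $\term$ is absorbing in both MDPs and the step-$h$ transitions agree at $x = \term$. Next, for $x \in \MX$, the key point is that the augmented $(d+1)$-st coordinate of $\phi_h\sups{\Mbar_h(\Gamma)}(x,a)$ vanishes: by \cref{def:r-truncation} this coordinate equals $1 - \preach_h(x,a)$ with $\preach_h(x,a) = \sum_{x' \in \MX} \lng \phi_h(x,a), \mu_{h+1}(x') \rng$, and this sum is exactly $\sum_{x' \in \MX} \BP\sups{M}_h(x'|x,a) = 1$, since $\BP\sups{M}_h(\cdot|x,a)$ is a genuine probability distribution supported on $\MX$ for every $x \in \MX$. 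Hence $\phi_h\sups{\Mbar_h(\Gamma)}(x,a) = [\phiorig_h(x,a), 0] = \phi_h(x,a)$, and $\mu_{h+1}\sups{\Mbar_h(\Gamma)}(x') = \mu_{h+1}(x')$ for every $x' \in \MX$ because the relevant reachable set is $\MX$. Therefore $\BP\sups{\Mbar_h(\Gamma)}_h(x'|x,a) = \lng \phi_h(x,a), \mu_{h+1}(x') \rng = \BP\sups{M}_h(x'|x,a)$ for all $x' \in \MX$, and likewise $\BP\sups{\Mbar_h(\Gamma)}_h(\term|x,a) = \lng \phi_h(x,a), e_{d+1} \rng = 0 = \BP\sups{M}_h(\term|x,a)$, which gives the claimed identity of distributions on $\bar\MX$.

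There is no real obstacle here; the statement is essentially definitional. The only points requiring care are (i) identifying precisely which of the ``$\Xreach$'' sets indexes the step-$h$ transition inside $\Mbar_h(\Gamma)$ — namely the $(h+1)$-st one, which for $\Mbar_h(\Gamma)$ is $\MX$ rather than $\Xreach_{h+1}(\Gamma)$ — and (ii) using that the extended MDP $M$ of \cref{section:extended-overview} has honest transition probabilities supported on $\MX$, so that the truncation correction $1 - \preach_h$ is identically zero at step $h$.
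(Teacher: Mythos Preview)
Your proposal is correct and follows exactly the definitional unwinding the paper has in mind; indeed, the paper states this fact without proof, treating it as an immediate consequence of the fact that $\Mbar_h(\Gamma)$ is the $(\Xreach_2(\Gamma),\dots,\Xreach_h(\Gamma),\MX,\dots,\MX)$-truncation of $M$, so that no truncation occurs at step $h$. Your careful verification that $1-\preach_h(x,a)=0$ for $x\in\MX$ and the handling of $\term$ and of the vacuous case $h=H$ are exactly the points one needs to check.
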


\subsection{Properties of truncated MDPs}

The following lemma is an immediate consequence of the construction of $\Mbar(\Gamma)$.
\begin{lemma}
  \label{lem:reachability-in-trunc}
 Fix $\Gamma \subset \Pi$ and odd $h \in [H]$ with $h \geq 2$. For all $x \in \MX$, if $\mu_h\sups{\Mbar(\Gamma)}(x) \neq 0$, then either
  \begin{align}
\max_{\pi \in \Pi} d_h\sups{\Mbar(\emptyset), \pi}(x) = \max_{\pi \in \Pi} \lng \mu_h(x), \E\sups{\Mbar(\emptyset), \pi}[\phi_{h-1}(x_{h-1}, a_{h-1})] \rng \geq \trunc \cdot \| \mu_h(x) \|_1\label{eq:reachability-in-trunc}
  \end{align}
  or
  \begin{align}
\E_{\pi' \sim \unif(\Gamma)}[d_{h}\sups{M, \pi'}(x)]\geq \tsmall \cdot \| \mu_h(x) \|_1.\label{eq:gamma-good}
  \end{align}
\end{lemma}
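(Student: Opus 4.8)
The plan is to unwind the definitions of $\Mbar(\Gamma)$ and reduce the statement to a short case split. Since $\mu_h\sups{\Mbar(\Gamma)}(x) \neq 0$ and $x \in \MX$, \cref{def:r-truncation} forces $x \in \Xreach_h(\Gamma)$ and $\mu_h(x)\neq 0$. By the defining equation \cref{eq:define-xreach-gamma}, membership in $\Xreach_h(\Gamma)$ means that either $x$ lies in the set $\{x' \in \MX : \E_{\pi'\sim\unif(\Gamma)}[d_h\sups{M,\pi'}(x')] \geq \tsmall\norm{\mu_h(x')}_1\}$, in which case \cref{eq:gamma-good} holds and we are done, or else $x \in \tilXreach_h$. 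Since $h$ is odd and $h \geq 2$, the latter says exactly that $\max_{\pi\in\Pi} \lng \mu_h(x), \E\sups{\Mbar_{h-1}(\emptyset),\pi}[\phi_{h-1}(x_{h-1},a_{h-1})]\rng \geq \trunc\norm{\mu_h(x)}_1$, so it remains to identify this quantity with $\max_{\pi\in\Pi} d_h\sups{\Mbar(\emptyset),\pi}(x)$, which yields \cref{eq:reachability-in-trunc}.

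For the identification I would establish two equalities, each holding for an arbitrary fixed $\pi \in \Pi$. First, $\E\sups{\Mbar_{h-1}(\emptyset),\pi}[\phi_{h-1}(x_{h-1},a_{h-1})] = \E\sups{\Mbar(\emptyset),\pi}[\phi_{h-1}(x_{h-1},a_{h-1})]$: by \cref{fact:trunc-intermediate-dist} (with $g = h-1$) the MDPs $\Mbar_{h-1}(\emptyset)$ and $\Mbar(\emptyset)$ induce the same step-$(h-1)$ state visitation distribution under $\pi$, and since $\pi$ is Markovian the step-$(h-1)$ state--action distributions coincide as well, so the two expectations of $\phi_{h-1}$ agree term by term. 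Second, $\lng \mu_h(x), \E\sups{\Mbar(\emptyset),\pi}[\phi_{h-1}(x_{h-1},a_{h-1})]\rng = d_h\sups{\Mbar(\emptyset),\pi}(x)$: writing $d_h\sups{\Mbar(\emptyset),\pi}(x) = \sum_{x'\in\bar\MX,\, a\in\MA} d_{h-1}\sups{\Mbar(\emptyset),\pi}(x',a)\cdot \BP_{h-1}\sups{\Mbar(\emptyset)}(x\mid x',a)$ and using that $x \in \tilXreach_h = \Xreach_h(\emptyset)$ implies $\mu_h\sups{\Mbar(\emptyset)}(x) = \mu_h(x)$, the incoming transition probability is $\BP_{h-1}\sups{\Mbar(\emptyset)}(x\mid x',a) = \lng \phi_{h-1}\sups{\Mbar(\emptyset)}(x',a), \mu_h(x)\rng$, which by the form of the extended feature maps equals $\lng \phi_{h-1}(x',a), \mu_h(x)\rng$ for every $x'\in\bar\MX$ and $a\in\MA$ (the extra $(d{+}1)$-st coordinate of $\phi_{h-1}\sups{\Mbar(\emptyset)}(x',a)$, as well as the entry $e_{d+1}$ at $x' = \term$, is annihilated because $\mu_h(x)$ has vanishing last coordinate for $x \in \MX$). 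Summing over $x',a$ gives the identity. Taking the maximum over $\pi$ through both equalities produces the equality asserted in \cref{eq:reachability-in-trunc}, and combining with the $\tilXreach_h$-membership inequality above produces the lower bound.

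The argument is entirely a matter of unravelling definitions, so there is no serious obstacle; the only point demanding attention is that the step-$h$ truncation built into $\Mbar(\emptyset)$ must not perturb the transition probability into $x$ --- which is precisely why it is legitimate to restrict to $x \in \tilXreach_h$, where the incoming transition is untouched --- together with the routine bookkeeping of the $(d+1)$-st coordinate of the extended feature maps and the handling of the absorbing state $\term$ inside the expectations.
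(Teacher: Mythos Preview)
Your argument is correct and follows essentially the same route as the paper's proof: both unwind \cref{def:r-truncation} to obtain $x \in \Xreach_h(\Gamma)$, case-split via \cref{eq:define-xreach-gamma} into the $\Gamma$-reachability branch (yielding \cref{eq:gamma-good}) and the $\tilXreach_h$ branch, and then use \cref{fact:trunc-intermediate-dist} together with $\mu_h\sups{\Mbar(\emptyset)}(x) = \mu_h(x)$ to justify the equality in \cref{eq:reachability-in-trunc}. Your write-up simply fills in more of the routine details (the $(d{+}1)$-st coordinate bookkeeping and the handling of $\term$) that the paper leaves implicit.
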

\begin{proof}
  If $\mu_h\sups{\Mbar(\Gamma)}(x) \neq 0$ for $x \in \MX$, then by definition of $\mu_h\sups{\Mbar(\Gamma)}(x)$ (\cref{def:r-truncation}) we must have that $x \in \MX_h^{\mathrm{rch}}(\Gamma)$. 
Hence, either $x \in \tilXreach_h$, in which case \cref{eq:reachability-in-trunc} holds (the equality is by the fact that $\mu_h(x) = \mu\sups{\Mbar(\emptyset)}_h(x)$ and $d_{h-1}\sups{\Mbar(\emptyset),\pi} = d_{h-1}\sups{\Mbar_{h-1}(\emptyset),\pi}$, per \cref{fact:trunc-intermediate-dist}; the inequality is by definition of $\tilXreach_h$); or otherwise $x \in \Xreach_h(\Gamma) \setminus \tilXreach$, in which case \cref{eq:gamma-good} holds, by \cref{eq:define-xreach-gamma}. 
\end{proof}

\begin{lemma}\label{lem:gamma-monotonicity}
Fix $\Gamma \subset \Pi$. For all $x \in \MX$ and $\pi \in \Pi$ and $h \in [H]$, it holds that $d\sups{\Mbar(\emptyset), \pi}_h(x) \leq d\sups{\Mbar(\Gamma),\pi}_h(x) \leq d\sups{M,\pi}_h(x)$. %
\end{lemma}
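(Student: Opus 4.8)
\textbf{Proof plan for \cref{lem:gamma-monotonicity}.}

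The statement is a monotonicity fact: adding more ``protected'' states (by expanding $\Gamma$) can only increase visitation probabilities, and truncating any states can only decrease them relative to the untruncated $M$. The plan is to prove both inequalities simultaneously by induction on $h$, tracking the stronger \emph{coupled} statement that for every prefix length $g \le h$ and every state $x \in \bar\MX$, the trajectory-prefix distributions satisfy $d_g\sups{\Mbar(\emptyset),\pi}(x) \le d_g\sups{\Mbar(\Gamma),\pi}(x) \le d_g\sups{M,\pi}(x)$, where on the terminal state $\term$ the inequalities reverse (more truncation means $\term$ is hit \emph{more} often). It is cleanest to phrase the induction in terms of the non-terminal mass: for each $h$ and each \emph{set} $\MX' \subseteq \MX$, one shows $\Pr\sups{\Mbar(\emptyset),\pi}[x_h \in \MX'] \le \Pr\sups{\Mbar(\Gamma),\pi}[x_h \in \MX'] \le \Pr\sups{M,\pi}[x_h \in \MX']$; the pointwise claim is the special case $\MX' = \{x\}$.

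First I would set up the base case $h=1$: all three MDPs share the same initial distribution $\BP_1$ supported on $\MX$ (with $\BP_1(\term)=0$), so the inequalities hold with equality. For the inductive step, I would use the key structural fact that $\Xreach_h(\emptyset) = \tilXreach_h \subseteq \Xreach_h(\Gamma) \subseteq \MX$ for every $h$ and every $\Gamma$ — this is immediate from \cref{eq:define-xreach-gamma} and the convention $\Xreach_h(\emptyset) = \tilXreach_h$. Consequently, in the three-way comparison of truncations, the ``reachable sets'' are nested: $\Mbar(\emptyset)$ truncates the most, $\Mbar(\Gamma)$ an intermediate amount, and $M$ not at all. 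Now I would write, for any policy $\pi$ and any state $x' \in \MX$ at step $h+1$,
\[
d_{h+1}\sups{\Mbar(\Gamma),\pi}(x') = \One{x' \in \Xreach_{h+1}(\Gamma)} \cdot \sum_{x \in \bar\MX} \sum_{a \in \MA} d_h\sups{\Mbar(\Gamma),\pi}(x) \pi_h(a\mid x) \langle \phi_h(x,a), \mu_{h+1}(x')\rangle,
\]
using \cref{def:r-truncation} (the feature $\mu\sups{\Mbar(\Gamma)}_{h+1}(x')$ is $\mu_{h+1}(x')$ if $x' \in \Xreach_{h+1}(\Gamma)$ and $0$ otherwise, and $\phi\sups{\Mbar(\Gamma)}_h$ agrees with $\phi_h$ on the first $d$ coordinates which are all that matter for inner products with $\mu_{h+1}(x') \in \RR^d \times \{0\}$; note $x=\term$ contributes $0$ since $\mu_{h+1}(x') $ has zero last coordinate and $\phi_h(\term,a) = e_{d+1}$). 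Since all the summands $\langle \phi_h(x,a),\mu_{h+1}(x')\rangle$ are non-negative (validity of $M$'s transitions), and by the induction hypothesis $d_h\sups{\Mbar(\emptyset),\pi}(x) \le d_h\sups{\Mbar(\Gamma),\pi}(x) \le d_h\sups{M,\pi}(x)$ for each $x \in \MX$, the sum is monotone in the measure $d_h$; combined with the set nesting $\One{x' \in \Xreach_{h+1}(\emptyset)} \le \One{x' \in \Xreach_{h+1}(\Gamma)} \le 1 = \One{x' \in \MX}$ (valid since $x' \in \MX$), every factor moves in the same direction, giving $d_{h+1}\sups{\Mbar(\emptyset),\pi}(x') \le d_{h+1}\sups{\Mbar(\Gamma),\pi}(x') \le d_{h+1}\sups{M,\pi}(x')$. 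This closes the induction.

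The one subtlety — and the step I expect to require the most care — is the handling of the terminal state $\term$: in the inductive step I implicitly used that $d_h$ restricted to $\MX$ is monotone, but mass ``lost'' to $\term$ under heavier truncation is exactly what makes the $\MX$-mass smaller, so the two facts are consistent rather than in tension; still, one must be careful that the recursion for $d_{h+1}(x')$ with $x' \in \MX$ genuinely only depends on $d_h$ restricted to $\MX$ (true, because $\term$ is absorbing and $\langle \phi_h(\term,a),\mu_{h+1}(x')\rangle = \langle e_{d+1}, [\mu\sups{\mathrm{orig}}_{h+1}(x'),0]\rangle = 0$). A clean way to avoid any confusion is to carry the induction hypothesis as the pair of inequalities on $\Pr[x_h \in \MX']$ for \emph{all} $\MX' \subseteq \MX$ as stated above, since then $\term$ never enters the bookkeeping at all, and the pointwise claim of the lemma follows by taking $\MX' = \{x\}$. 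Finally I would remark that for even $h$, where $\Xreach_h(\emptyset) = \Xreach_h(\Gamma) = \MX$, the middle inequality is an equality and the argument degenerates, so no separate case analysis for parity of $h$ is needed.
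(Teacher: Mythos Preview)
Your proposal is correct and follows essentially the same approach as the paper: induction on $h$, using the nesting $\Xreach_h(\emptyset) \subseteq \Xreach_h(\Gamma) \subseteq \MX$, non-negativity of $\langle \phi_h(x,a),\mu_{h+1}(x')\rangle$, and the fact that $\term$ contributes nothing to visiting states in $\MX$. The only difference is cosmetic---you run both inequalities simultaneously and carry the indicator $\One{x' \in \Xreach_{h+1}(\cdot)}$ explicitly, whereas the paper handles the two inequalities separately and splits into the cases $\mu_h\sups{\Mbar(\Gamma)}(x)=0$ versus $\mu_h\sups{\Mbar(\Gamma)}(x)=\mu_h(x)$; one small slip in your closing remark is that for even $h$ the first inequality need not be an equality (earlier odd-step truncation can still make $d_{h-1}$ differ), though your correct conclusion that no parity case analysis is needed stands.
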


\begin{proof}
  We start by proving that $d\sups{\Mbar(\Gamma),\pi}_h(x) \leq d\sups{M,\pi}_h(x)$ for all $h\in[H]$, $\pi \in \Pi$, $x \in \MX$. We proceed by induction on $h$, noting that the base case $h=1$ is immediate since $d_1\sups{\Mbar(\Gamma), \pi}(x) = d_1\sups{M, \pi}(x) = \BP_1(x)$ for all $x \in \MX, \pi \in \Pi$. To establish the inductive step, we first note that if $\mu_h\sups{\Mbar(\Gamma)}(x) = 0$, then $d_h\sups{\Mbar(\Gamma), \pi}(x) = 0$ whereas $d_h\sups{M, \pi}(x) \geq 0$. Otherwise (i.e., if $\mu_h\sups{\Mbar(\Gamma)}(x) \neq 0$), we have $\mu_h\sups{\Mbar(\Gamma)}(x) = \mu_h(x)$. Hence, %
  \begin{align}
    d_h\sups{\Mbar(\Gamma), \pi}(x) &=  \lng \mu_h(x), \E\sups{\Mbar(\Gamma), \pi}[\phi_{h-1}(x_{h-1}, a_{h-1})] \rng \nonumber\\
                                    & = \sum_{x_{h-1} \in \MX} d_h\sups{\Mbar(\Gamma), \pi}(x_{h-1}) \cdot \lng \mu_h(x), \phi_{h-1}(x_{h-1}, \pi_{h-1}(x_{h-1}))\rng \nonumber\\
    &\leq  \sum_{x_{h-1} \in \MX} d_h\sups{M, \pi}(x_{h-1}) \cdot \lng \mu_h(x), \phi_{h-1}(x_{h-1}, \pi_{h-1}(x_{h-1}))\rng \nonumber\\
    &\leq d_h\sups{M, \pi}(x)\label{eq:monotonicity-inductive-step},
  \end{align}
  where the first inequality uses the inductive hypothesis and the fact that $\lng \mu_h(x), \phi_{h-1}(x_{h-1}, a_{h-1}) \rng \geq 0$ for all $x_{h-1}, a_{h-1}, x$.

  The proof that $d_h\sups{\Mbar(\emptyset), \pi}(x) \leq d_h\sups{\Mbar(\Gamma), \pi}(x)$ for all $h \in [H]$, $\pi \in \Pi$, $x \in \MX$ is similar. The base case $h=1$ is again immediate. For the induction step, we first note that if $\mu_h\sups{\Mbar(\Gamma)}(x) = 0$, then $\mu_h\sups{\Mbar(\emptyset)}(x) = 0$ (since $\Xreach_h(\emptyset) \subseteq \Xreach_h(\Gamma)$), and so $d_h\sups{\Mbar(\emptyset), \pi}(x) = d_h\sups{\Mbar(\Gamma), \pi}(x) = 0$ for all $\pi$. Otherwise, we have that $\mu_h\sups{\Mbar(\Gamma)}(x) = \mu_h(x)$, 
  and an analogous computation to \cref{eq:monotonicity-inductive-step} establishes that $d_h\sups{\Mbar(\emptyset), \pi} \leq d_h\sups{\Mbar(\Gamma), \pi}$. 
\end{proof}

\subsection{Closeness between truncated MDPs and $M$}

Next, we show that the truncated MDP $\Mbar(\emptyset)$ is a good approximation to the true MDP $M$.
\begin{lemma}
  \label{lem:trunc-loss-empty}
Write  $\Mbar = \Mbar(\emptyset)$.   Fix $B > 0$, and consider any vectors $\bw_1, \ldots, \bw_H \in \BR^d \times \{0\}$ with $\| \bw_h \|_1 \leq B$ for each $h \in [H]$. For each $h \in [H]$, $x \in \bar\MX$, and $a \in \MA$, define $r_h(x,a) := \lng \phi_h(x,a), \bw_h\rng$. Then for any policy $\pi \in \Pi$, 
  \begin{align}
 \E\sups{\Mbar, \pi} \left[\sum_{h=1}^H r_h(x_h,a_h) \right] \geq  \E\sups{M, \pi} \left[ \sum_{h=1}^H r_h(x_h,a_h)\right] - B \trunc  \Cnrm H^2 \nonumber.
  \end{align}
\end{lemma}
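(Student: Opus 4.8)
The plan is to bound the difference $\E\sups{M,\pi}[\sum_h r_h(x_h,a_h)] - \E\sups{\Mbar,\pi}[\sum_h r_h(x_h,a_h)]$ by tracking, step by step, the probability mass that $\Mbar = \Mbar(\emptyset)$ diverts to the terminal state $\term$ relative to $M$. Since $\phi_h(\term,a) = e_{d+1}$ and each $\bw_h \in \BR^d \times \{0\}$, we have $r_h(\term,a) = 0$ for all $h,a$; thus the reward collected along a trajectory of $\Mbar$ differs from that of $M$ only on the event that the $\Mbar$-trajectory has been absorbed into $\term$. Concretely, I would couple the two processes so that they agree until the first step at which $\Mbar$ transitions to $\term$ (which happens precisely when $M$ would transition to a state outside $\Xreach_{h+1}(\emptyset) = \tilXreach_{h+1}$), and after that step $\Mbar$ stays at $\term$ while $M$ continues. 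On this coupling, $\sum_h r_h^{M} - \sum_h r_h^{\Mbar} = \sum_{h \geq h_0} r_h^M(x_h,a_h)$ where $h_0$ is the (random) absorption step; since each $|r_h(x,a)| = |\lng\phi_h(x,a),\bw_h\rng| \leq \norm{\phi_h(x,a)}_\infty \norm{\bw_h}_1 \leq B$, this is at most $B H$ on the absorption event and $0$ otherwise. Hence the total reward gap is at most $B H \cdot \Pr\sups{M,\pi}[\text{$\Mbar$ absorbs at some step}]$.

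The key quantity is therefore $p := \Pr\sups{M,\pi}[\exists\, h \in [H-1] : x_{h+1} \notin \tilXreach_{h+1}]$ — equivalently, under the coupling, the probability that the $M$-trajectory ever makes a transition that $\Mbar$ would truncate. By a union bound over $h$, $p \leq \sum_{h=1}^{H-1} \Pr\sups{M,\pi}[x_{h+1} \notin \tilXreach_{h+1},\ x_h,\dots,x_1 \text{ not yet truncated}] \leq \sum_{h=1}^{H-1}\Pr\sups{\Mbar_h(\emptyset),\pi}[x_{h+1}\notin \tilXreach_{h+1}]$, using \cref{fact:trunc-intermediate-dist,fact:trunc-intermediate-trans} to identify the not-yet-truncated prefix distribution of $M$ under $\pi$ with that of $\Mbar_h(\emptyset)$, and then that the step-$h$ transition of $\Mbar_h(\emptyset)$ equals that of $M$. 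Now for each fixed $h$,
\begin{align*}
\Pr\sups{\Mbar_h(\emptyset),\pi}[x_{h+1}\notin \tilXreach_{h+1}]
&= \sum_{x' \notin \tilXreach_{h+1}} \E\sups{\Mbar_h(\emptyset),\pi}[\lng \phi_h(x_h,a_h),\mu_{h+1}(x')\rng] \\
&= \sum_{x' \notin \tilXreach_{h+1}} d\sups{\Mbar_h(\emptyset),\pi}_{h+1}(x').
\end{align*}
For even $h+1$, $\tilXreach_{h+1} = \MX$ so this term vanishes; for odd $h+1$, each $x' \notin \tilXreach_{h+1}$ satisfies $\max_{\pi} d\sups{\Mbar_h(\emptyset),\pi}_{h+1}(x') < \trunc\cdot\norm{\mu_{h+1}(x')}_1$, hence $d\sups{\Mbar_h(\emptyset),\pi}_{h+1}(x') < \trunc\cdot\norm{\mu_{h+1}(x')}_1$, and summing over $x'$ gives a bound of $\trunc \sum_{x'\in\MX}\norm{\mu_{h+1}(x')}_1 \leq \trunc\Cnrm$ by the $\ell_1$-boundedness of $M$. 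Thus $p \leq (H-1)\cdot\trunc\Cnrm \leq H\trunc\Cnrm$, and the reward gap is at most $BH \cdot H\trunc\Cnrm = B\trunc\Cnrm H^2$, which is exactly the claimed bound.

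The main obstacle — really the only subtle point — is making the coupling argument airtight: one must verify that the prefix distribution over $(x_1,a_1,\dots,x_h)$ is genuinely the same under $M$ with policy $\pi$ and under $\Mbar_h(\emptyset)$ with policy $\pi$ on the event of "no truncation so far", and that the step-$h$ truncation event is measurable with respect to this prefix plus the next transition. This is handled by \cref{fact:trunc-intermediate-dist} (equality of visitation distributions at steps $g \leq h$) together with \cref{fact:trunc-intermediate-trans} (the step-$h$ transition kernel of $\Mbar_h(\emptyset)$ coincides with that of $M$), so the union-bound decomposition above is justified. Everything else is a routine application of $\norm{\phi_h}_\infty \leq 1$, Hölder, and $\sum_{x'}\norm{\mu_{h+1}(x')}_1 \leq \Cnrm$; I would not belabor these.
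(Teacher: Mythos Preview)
Your proof is correct and takes essentially the same approach as the paper's: both bound the per-step truncation mass by $\trunc\Cnrm$ via the definition of $\tilXreach_{h+1}$, then accumulate $H$ such terms against a per-trajectory reward loss of at most $BH$. The paper organizes this as a telescoping/hybrid argument over the intermediate MDPs $\Mbar_h(\emptyset)$ (showing $\E\sups{\Mbar_h,\pi}[\sum_g r_g] \leq \E\sups{\Mbar_{h+1},\pi}[\sum_g r_g] + \trunc\Cnrm BH$ for each $h$) rather than as an explicit coupling plus union bound on the first absorption time, but the underlying computation is the same.
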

\begin{proof}
  Recall from above that, for $\Gamma \subset \Pi$ and each $h \in [H]$, $\Mbar_h(\Gamma)$ is defined to be the \\ $(\Xreach_2(\Gamma), \ldots, \Xreach_{h}(\Gamma), \MX, \ldots, \MX)$-truncation of $M$. 
  Throughout the proof of the lemma, we write $\Mbar_h = \Mbar_h(\emptyset)$ for each $h \in [H]$, $\Mbar = \Mbar(\emptyset)$, and $\Xreach_h = \Xreach_h(\emptyset) = \tilXreach_h$ for each $h \geq 2$. 

  Consider any policy $\pi \in \Pi$ and any $1 \leq h \leq H-1$. %
  Since the transitions of $\Mbar_{h+1}$ and $\Mbar_{h}$ are the same up to step $h$ (\cref{fact:trunc-intermediate-dist}), we have
  \begin{align}
\E\sups{\Mbar_{h+1}, \pi} \left[ \sum_{g=1}^h r_g(x_g, a_g) \right] = \E\sups{\Mbar_{h}, \pi} \left[ \sum_{g=1}^h r_g(x_g, a_g)\right]\label{eq:first-h-rewards}.
  \end{align}
 Since the transitions of $\Mbar_{h+1}$ and $\Mbar_{h}$ at steps $\{h+1,\dots,H\}$ are identical to those of $M$ (\cref{fact:trunc-intermediate-trans}), it also holds that for all $(x,a) \in \MX\times \MA$,
  \begin{align}
    &\E\sups{\Mbar_{h+1}, \pi}\left[ \sum_{g=h+1}^H r_g(x_g, a_g) \middle| (x_{h+1}, a_{h+1}) = (x,a) \right]\nonumber\\
    =\,&  \E\sups{\Mbar_{h}, \pi}\left[ \sum_{g=h+1}^H r_g(x_g, a_g) \middle| (x_{h+1}, a_{h+1}) = (x,a) \right] =: f_{h+1}(x,a).\label{eq:theta-rewards}
  \end{align}
Note that $|f_{h+1}(x,a)| \leq BH$ for all $(x,a) \in\MX\times\MA$.
We now compute
  \begin{align}
     \E\sups{\Mbar_{h}, \pi} \left[ \sum_{g=h+1}^H r_g(x_g, a_g) \right]
    &= \sum_{x \in \MX} \left\lng \mu_{h+1}\sups{\Mbar_{h}}(x), \E\sups{\Mbar_{h}, \pi} [\phi_h\sups{\Mbar_{h}}(x_h, a_h)] \right\rng \cdot f_{h+1}(x,\pi_{h+1}(x))\nonumber\\
    &= \sum_{x \in \MX} \left\lng \mu_{h+1}(x), \E\sups{\Mbar_{h+1}, \pi} [\phi_h(x_h, a_h)] \right\rng \cdot f_{h+1}(x,\pi_{h+1}(x))\nonumber\\
    &= \sum_{x \in \Xreach_{h+1}}\left\lng \mu_{h+1}\sups{\Mbar_{h+1}}(x), \E\sups{\Mbar_{h+1}, \pi} [\phi\sups{\Mbar_{h+1}}_h(x_h, a_h)] \right\rng \cdot f_{h+1}(x,\pi_{h+1}(x)) \nonumber\\
    &\quad + \sum_{x \in \MX \setminus \Xreach_{h+1}} \left\lng \mu_{h+1}(x), \E\sups{\Mbar_{h}, \pi}[\phi_h(x_h, a_h)] \right\rng \cdot f_{h+1}(x,\pi_{h+1}(x))\nonumber\\
    &\leq  \sum_{x \in \MX_{h+1}^{\mathrm{rch}}}\left\lng \mu_{h+1}\sups{\Mbar_{h+1}}(x), \E\sups{\Mbar_{h+1}, \pi} [\phi\sups{\Mbar_{h+1}}_h(x_h, a_h)] \right\rng \cdot f_{h+1}(x,\pi_{h+1}(x)) \nonumber\\
    &\quad+ \sum_{x \in \MX \backslash \MX_{h+1}^{\mathrm{rch}}}\trunc \cdot \| \mu_{h+1}(x) \|_1 \cdot  B H\nonumber\\
    &\leq  \E\sups{\Mbar_{h+1}, \pi} \left[ \sum_{g=h+1}^H r_g(x_g, a_g) \right] + \trunc \cdot \Cnrm B H\label{eq:hybrid-trunc-bound},
  \end{align}
  where the first equality uses \cref{eq:theta-rewards} together with the fact that the reward at all future states is 0 if $\term$ is ever reached; the second equality uses the definition of the feature vectors $ \mu\sups{\Mbar_h}_{h+1}(x), \phi\sups{\Mbar_h}_h(x,a)$ as well as the fact that $d\sups{\Mbar_h,\pi}_h = d\sups{\Mbar_{h+1},\pi}_h$; the third equality splits the sum and uses that $\mu_{h+1}\sups{\Mbar_{h+1}}(x) = \mu_{h+1}(x)$ for $x \in \Xreach_{h+1}$; the first inequality uses %
  the definition of $\MX^{\mathrm{rch}}_{h+1} = \Xreach_{h+1}(\emptyset)$ and
  the previously established bound on $|f_{h+1}(x,a)|$; and the final inequality uses the fact that $\mu\sups{\Mbar_{h+1}}_{h+1}(x) = 0$ for all $x \in \MX\setminus \Xreach_{h+1}$ together with another application of \cref{eq:theta-rewards}. Summarizing, we have shown (by combining \cref{eq:first-h-rewards} and \cref{eq:hybrid-trunc-bound}) that for each $h \leq H-1$, it holds that
  \begin{align}
\E\sups{\Mbar_{h}, \pi}\left[ \sum_{g=1}^H r_g(x_g, a_g)\right] \leq \E\sups{\Mbar_{h+1}, \pi} \left[ \sum_{g=1}^H r_g(x_g, a_g) \right] + \trunc \Cnrm B H \nonumber.
  \end{align}
  Telescoping the above inequality over $1 \leq h \leq H-1$ and using $\Mbar_1 = M$ and $\Mbar_{H} = \Mbar$ yields the desired result. %
\end{proof}

An immediate corollary of \cref{lem:trunc-loss-empty} is the following result bounding the probability of reaching $\MX \backslash \Xreach_h(\Gamma)$ under any policy, for each $h \in [H]$. 
\begin{lemma}
  \label{cor:not-reach-bound}
  For any $\Gamma \subset \Pi$ and $h \in [H]$ with $h \geq 2$, it holds that
   \begin{align}
 \max_{\pi \in \Pi} d_h\sups{M, \pi}(\MX \backslash \Xreach_h(\Gamma)) \leq 2 \trunc \Cnrm^2 H^2\nonumber,
   \end{align}
   Hence, for any $\Gamma \subset \Pi$ and $h \in [H]$, it holds that $\min_{\pi \in \Pi} d_h\sups{\Mbar(\Gamma), \pi}(\MX) \geq 1-2\trunc \Cnrm^2 H^3 \geq 3/4$. 
 \end{lemma}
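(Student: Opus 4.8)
The plan is to deduce this directly from \cref{lem:trunc-loss-empty} by a clever choice of reward vectors that single out the ``un-reachable'' states at step $h$. Fix $\Gamma \subset \Pi$ and $h \in [H]$ with $h \geq 2$. The key observation is that, because $\Xreach_h(\emptyset) = \tilXreach_h \subseteq \Xreach_h(\Gamma)$ (by \cref{eq:define-xreach-gamma}), we have $\MX \setminus \Xreach_h(\Gamma) \subseteq \MX \setminus \Xreach_h(\emptyset)$, so it suffices to bound $\max_{\pi \in \Pi} d_h\sups{M,\pi}(\MX \setminus \Xreach_h(\emptyset))$. To this end, fix any policy $\pi \in \Pi$ and define reward vectors $\bw_1 = \cdots = \bw_{h-1} = \bw_{h+1} = \cdots = \bw_H = 0$ and $\bw_h := \sum_{x \in \MX \setminus \Xreach_h(\emptyset)} \mu_h(x) \in \BR^d \times \{0\}$. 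With $r_g(x,a) := \lng \phi_g(x,a), \bw_g\rng$, the only nonzero rewards are at step $h$, and for $(x_{h-1},a_{h-1})$ we have $\E[r_h(x_h,a_h) \mid x_{h-1},a_{h-1}] = \lng \phi_{h-1}(x_{h-1},a_{h-1}), \bw_h\rng = \sum_{x \in \MX\setminus\Xreach_h(\emptyset)} \lng \phi_{h-1}(x_{h-1},a_{h-1}), \mu_h(x)\rng$, which sums over the dynamics of $M$ to exactly $d_h\sups{M,\pi}(\MX \setminus \Xreach_h(\emptyset))$. So $\E\sups{M,\pi}[\sum_g r_g(x_g,a_g)] = d_h\sups{M,\pi}(\MX\setminus\Xreach_h(\emptyset))$.

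On the other hand, in $\Mbar := \Mbar(\emptyset)$ the transitions into states outside $\Xreach_h(\emptyset)$ are truncated to $\term$, and since $\bw_h$ is supported on $\MX \setminus \Xreach_h(\emptyset)$ while $\phi_h\sups{\Mbar}(\term,a) = e_{d+1}$, we get $r_h\sups{\Mbar}(x_h,a_h) = \lng \phi_h\sups{\Mbar}(x_h,a_h), \bw_h\rng = 0$ for every reachable $x_h$ in $\Mbar$ (reachable states at step $h$ lie in $\Xreach_h(\emptyset)$, and on $\term$ the reward vanishes because $\bw_h \in \BR^d\times\{0\}$). Hence $\E\sups{\Mbar,\pi}[\sum_g r_g(x_g,a_g)] = 0$. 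Now I need a norm bound on $\bw_h$: since $\MX\setminus\Xreach_h(\emptyset)$ is disjoint and $\sum_{x\in\MX}\norm{\mu_h(x)}_1 \leq \Cnrm$, we have $\norm{\bw_h}_1 \leq \Cnrm$, so we may take $B := \Cnrm$ in \cref{lem:trunc-loss-empty}. (Here I am implicitly using $\norm{\bw_g}_1 \le B$ for all $g$, which holds since the others are zero; one small subtlety is that \cref{lem:trunc-loss-empty} as stated requires $\bw_g \in \BR^d\times\{0\}$, which is satisfied by construction.) Applying \cref{lem:trunc-loss-empty} gives
\[
0 = \E\sups{\Mbar,\pi}\Big[\textstyle\sum_g r_g\Big] \geq \E\sups{M,\pi}\Big[\textstyle\sum_g r_g\Big] - \Cnrm \cdot \trunc \cdot \Cnrm H^2 = d_h\sups{M,\pi}(\MX\setminus\Xreach_h(\emptyset)) - \trunc \Cnrm^2 H^2,
\]
so $d_h\sups{M,\pi}(\MX\setminus\Xreach_h(\emptyset)) \leq \trunc \Cnrm^2 H^2 \leq 2\trunc\Cnrm^2 H^2$, which is the first claim (after noting $\MX\setminus\Xreach_h(\Gamma) \subseteq \MX\setminus\Xreach_h(\emptyset)$). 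The factor of $2$ is slack left over, possibly to absorb the $\term$ state or a union over steps in downstream usage.

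For the second claim, fix $\pi \in \Pi$ and write $d_h\sups{\Mbar(\Gamma),\pi}(\MX) = 1 - \Pr\sups{\Mbar(\Gamma),\pi}[x_h = \term]$. A state $x_g$ transitions to $\term$ (under $\Mbar(\Gamma)$) precisely when the $M$-transition would have left $\Xreach_{g+1}(\Gamma)$, at some step $g \in \{1,\dots,h-1\}$ (odd $g$ only matter, since $\Xreach_{g}(\Gamma) = \MX$ for even $g$; but this only helps). By a union bound over $g \leq h-1$ and the monotonicity $d_{g}\sups{\Mbar(\Gamma),\pi} \le d_{g}\sups{M,\pi}$ from \cref{lem:gamma-monotonicity}, $\Pr\sups{\Mbar(\Gamma),\pi}[x_h = \term] \le \sum_{g=2}^{h} d_g\sups{M,\pi}(\MX\setminus\Xreach_g(\Gamma)) \le H \cdot 2\trunc\Cnrm^2 H^2 = 2\trunc\Cnrm^2 H^3$ by the first claim applied at each step. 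Hence $\min_\pi d_h\sups{\Mbar(\Gamma),\pi}(\MX) \ge 1 - 2\trunc\Cnrm^2 H^3$, and since $\trunc = \epfinal/(8\Cnrm^3 H^4) \le 1/(8\Cnrm^2 H^3)$ (using $\epfinal \le 1$, $\Cnrm \ge 1$), this is $\ge 1 - 1/4 = 3/4$. The main thing to be careful about is bookkeeping the truncation-to-$\term$ event correctly as a union over the $h-1$ transition steps and confirming the reward-at-$\term$ vanishes for our chosen $\bw$; no step here is a serious obstacle — the substance is entirely in \cref{lem:trunc-loss-empty}, which we are given.
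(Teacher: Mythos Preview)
There is a genuine gap in the first part. Two intertwined errors:

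\textbf{(i) Indexing.} Placing the reward at step $h$ does not give what you want. With $\bw_h = \sum_{x' \in \MX\setminus\Xreach_h(\emptyset)} \mu_h(x')$, the quantity $\E\sups{M,\pi}[\lng \phi_h(x_h,a_h),\bw_h\rng]$ has no clean relation to $d_h\sups{M,\pi}(\MX\setminus\Xreach_h(\emptyset))$: it pairs $\phi_h$ with $\mu_h$, which is not a transition. (Your displayed identity $\E[r_h\mid x_{h-1},a_{h-1}]=\lng\phi_{h-1}(x_{h-1},a_{h-1}),\bw_h\rng$ is simply false.) The correct choice is to put the reward at step $h{-}1$: set $\bw_{h-1}=\sum_{x'\in\MX\setminus\Xreach_h(\emptyset)}\mu_h(x')$, so that $\E\sups{M,\pi}[r_{h-1}]=\lng \E\sups{M,\pi}[\phi_{h-1}(x_{h-1},a_{h-1})],\bw_{h-1}\rng = d_h\sups{M,\pi}(\MX\setminus\Xreach_h(\emptyset))$.

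\textbf{(ii) The $\Mbar$ side is not zero.} Your claim that $r_h$ vanishes on $\Xreach_h(\emptyset)$ because ``$\bw_h$ is supported on $\MX\setminus\Xreach_h(\emptyset)$'' conflates a vector in $\BR^d$ with an indicator on states; there is no orthogonality between $\phi_h(x_h,a_h)$ and $\mu_h(x')$ when $x_h\neq x'$. Even after fixing the index to $h{-}1$, one has $\E\sups{\Mbar(\emptyset),\pi}[r_{h-1}]=\sum_{x'\in\MX\setminus\Xreach_h(\emptyset)} \lng \E\sups{\Mbar(\emptyset),\pi}[\phi_{h-1}],\mu_h(x')\rng$, which is not zero. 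The missing ingredient is the \emph{definition} of $\tilXreach_h$: for $x'\notin\tilXreach_h$ (and $h$ odd) this inner product is $<\trunc\|\mu_h(x')\|_1$, so the sum is $\le \trunc\Cnrm$. Combined with \cref{lem:trunc-loss-empty} (with $B=\Cnrm$) you then get $d_h\sups{M,\pi}(\MX\setminus\Xreach_h(\emptyset))\le \trunc\Cnrm + \trunc\Cnrm^2 H^2 \le 2\trunc\Cnrm^2 H^2$; this is precisely where the factor of $2$ comes from, not slack. For even $h$ the set is empty so the bound is trivial. This is exactly how the paper argues (state by state, with $\bw_{h-1}=\mu_h(x)$, then summing).

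Your argument for the second claim (union bound over transition steps, then telescope) is fine and matches the paper.
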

 \begin{proof}
   It suffices to prove the result for the case $\Gamma = \emptyset$, since $\MX \backslash \Xreach_h(\Gamma) \subseteq \MX \backslash \Xreach_h(\emptyset)$ for all $h \in [H]$ and $\Gamma \subset \Pi$. Consider any $h \geq 2$, and any $x \in \MX \backslash \Xreach_h(\emptyset)$. For any $\pi \in \Pi$, 
   \begin{align}
     d_h\sups{M, \pi}(x) 
     &= \E\sups{M, \pi} \left[ \lng \mu_h(x), \phi_{h-1}(x_{h-1}, a_{h-1}) \rng \right] \nonumber\\
     &\leq \E\sups{\Mbar(\emptyset), \pi} \left[ \lng \mu_h(x), \phi_{h-1}(x_{h-1}, a_{h-1}) \rng \right] +  \| \mu_h(x) \|_1 \cdot \trunc \Cnrm H^2\nonumber\\
     &\leq  \trunc \cdot \| \mu_h(x) \|_1 + \| \mu_h(x) \|_1 \cdot \trunc \Cnrm H^2\nonumber\\
     &\leq  \| \mu_h(x) \|_1  \cdot 2 \trunc \Cnrm H^2\nonumber,
   \end{align}
   where the first inequality uses \cref{lem:trunc-loss-empty} with $\bw_{h-1} = \mu_h$ and $\bw_g = 0$ for all $g \neq h-1$, and the second inequality uses that $x \not \in \Xreach_h(\emptyset) = \tilXreach_h$ and the definition of $\tilXreach_h$. Summing over all $x \in \MX \backslash \Xreach_h(\Gamma)$, we obtain the first claim of the lemma. To prove the second claim, fix $\pi \in \Pi$. Note that $d\sups{\Mbar(\Gamma),\pi}_1(\term) = 0$, and for any $h \geq 2$,
   \[ d\sups{\Mbar(\Gamma),\pi}_h(\term) = d\sups{\Mbar(\Gamma),\pi}_{h-1}(\term) + d\sups{\Mbar_{h-1}(\Gamma),\pi}_h(\MX\setminus\Xreach_h(\Gamma)) \leq d\sups{\Mbar(\Gamma),\pi}_{h-1}(\term) + d\sups{M,\pi}_h(\MX\setminus\Xreach_h(\Gamma)).\]
   Applying the first claim of the lemma statement and telescoping yields $d_h\sups{\Mbar(\Gamma),\pi}(\term) \leq 2\trunc\Cnrm^2 H^3$ for all $h \in [H]$. Finally, the bound $2\trunc\Cnrm^2 H^3 \leq 1/4$ holds by choice of $\trunc$ (\cref{section:extended-overview}).
 \end{proof}

 The next lemma upper bounds the distance between the expected feature vectors under $M$ and $\Mbar(\Gamma)$ at any fixed step $k$  by the probability of visiting the (approximately) unreachable set $\MX \backslash \Xreach_g(\Gamma)$, at steps $g \in [k]$. At a high level, the statement follows from the fact that transitions under $M$ and $\Mbar(\Gamma)$ only differ when a state in $\MX \backslash \Xreach_g(\Gamma)$ is reached in $M$, for some $g \in [H]$. %
\begin{lemma}
  \label{lem:m-mbar-delta}
  Consider any $k \in [H]$ and vector $w \in \BR^d \times \{0\}$, and suppose we are given $\Gamma \subset \Pi$. Then for all $\pi \in \Pi$, 
  \begin{align}
\left| \lng w,\E\sups{M, \pi}[\phi_k(x_k, a_k)]\rng - \lng w, \E\sups{\Mbar(\Gamma), \pi}[\phi_k(x_k, a_k)] \rng \right| &\leq \| w \|_1 \cdot \sum_{g=1}^k d_g\sups{M, \pi}(\MX \backslash \Xreach_g(\Gamma))\nonumber.
  \end{align}
\end{lemma}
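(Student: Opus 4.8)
\textbf{Proof proposal for \cref{lem:m-mbar-delta}.}

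The plan is to prove, by induction on $k$, the stronger claim that
\[ \left| \lng w, \E\sups{M,\pi}[\phi_k(x_k,a_k)] \rng - \lng w, \E\sups{\Mbar(\Gamma),\pi}[\phi_k(x_k,a_k)]\rng \right| \leq \norm{w}_1 \cdot \sum_{g=1}^k d_g\sups{M,\pi}(\MX\setminus\Xreach_g(\Gamma)) \]
for every $w \in \BR^d\times\{0\}$ (so that the extra coordinate $d+1$, which is the one that differs for $\term$, is not probed). The base case $k=1$ is trivial, since $\phi_1$ and $\BP_1$ agree under $M$ and $\Mbar(\Gamma)$ on the state space $\MX$, and on $\term$ the initial distribution places no mass; thus the left-hand side is $0$.

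For the inductive step, first I would rewrite the expected feature vector at step $k$ in terms of the state visitation at step $k$: for a policy $\pi$ and MDP $M'$ (either $M$ or $\Mbar(\Gamma)$), $\lng w, \E\sups{M',\pi}[\phi_k(x_k,a_k)]\rng = \sum_{x \in \bar\MX} d\sups{M',\pi}_k(x) \cdot \lng w, \phi_k(x,\pi_k(x))\rng$. Since $w$ has zero in coordinate $d+1$, the $\term$ term vanishes (as $\phi_k(\term,a) = e_{d+1}$), so the sum is over $x \in \MX$, where $\phi_k\sups{\Mbar(\Gamma)}(x,a)$ differs from $\phi_k(x,a)$ only in coordinate $d+1$ (it is $[\phi^{\mathsf{orig}}_k(x,a), 1-\preach_k(x,a)]$ versus $[\phi^{\mathsf{orig}}_k(x,a),0]$). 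Hence $\lng w, \phi_k\sups{\Mbar(\Gamma)}(x,\pi_k(x))\rng = \lng w, \phi_k(x,\pi_k(x))\rng$ for all $x \in \MX$, and the only source of discrepancy is the difference between $d\sups{M,\pi}_k$ and $d\sups{\Mbar(\Gamma),\pi}_k$ on $\MX$. So it suffices to bound $\sum_{x\in\MX} |d\sups{M,\pi}_k(x) - d\sups{\Mbar(\Gamma),\pi}_k(x)| \cdot |\lng w, \phi_k(x,\pi_k(x))\rng|$, and since $|\lng w,\phi_k(x,a)\rng| \leq \norm{w}_1$ (as $\norm{\phi_k(x,a)}_\infty \leq 1$), it is enough to control $\sum_{x\in\MX}|d\sups{M,\pi}_k(x) - d\sups{\Mbar(\Gamma),\pi}_k(x)|$. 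By \cref{lem:gamma-monotonicity} we have $d\sups{\Mbar(\Gamma),\pi}_k(x) \leq d\sups{M,\pi}_k(x)$ for all $x \in \MX$, so the absolute values drop and this is just $\sum_{x\in\MX}(d\sups{M,\pi}_k(x) - d\sups{\Mbar(\Gamma),\pi}_k(x)) = d\sups{\Mbar(\Gamma),\pi}_k(\term)$ (using that both MDPs have total mass $1$, and $d\sups{M,\pi}_k(\term)=0$). Finally, the computation inside the proof of \cref{cor:not-reach-bound} shows $d\sups{\Mbar(\Gamma),\pi}_k(\term) = \sum_{g=2}^k d\sups{\Mbar_{g-1}(\Gamma),\pi}_g(\MX\setminus\Xreach_g(\Gamma)) \leq \sum_{g=2}^k d\sups{M,\pi}_g(\MX\setminus\Xreach_g(\Gamma)) \leq \sum_{g=1}^k d\sups{M,\pi}_g(\MX\setminus\Xreach_g(\Gamma))$, which is exactly the claimed bound.

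The main obstacle — or rather, the one point requiring care — is the bookkeeping with the $(d+1)$-st coordinate and the terminal state: one must verify that probing with $w \in \BR^d\times\{0\}$ genuinely kills all contributions from $\term$ and from the truncation coordinate, so that the entire discrepancy reduces to the total probability mass that $\Mbar(\Gamma)$ has diverted to $\term$ by step $k$. Once that reduction is in place, monotonicity (\cref{lem:gamma-monotonicity}) and the telescoping identity for $d\sups{\Mbar(\Gamma),\pi}_k(\term)$ (already essentially established in \cref{cor:not-reach-bound}) finish the argument immediately; alternatively one can avoid the induction altogether and argue directly via this $\term$-mass identity. I would present the direct version for brevity, invoking \cref{lem:gamma-monotonicity} and the telescoping bound from \cref{cor:not-reach-bound}.
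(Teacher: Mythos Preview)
Your argument is correct and is a genuinely different, more elementary route than the paper's. The paper defines an auxiliary reward $\bfr_k(x,a)=\lng w,\phi_k(x,a)\rng$ and invokes the simulation lemma (\cref{lem:simulation}) to telescope the difference through $Q_h\sups{M,\pi,\bfr}$ and $V_{h+1}\sups{M,\pi,\bfr}$, then bounds each term via the TV distance $\tvd{\BP_h\sups{M}(x,a)}{\BP_h\sups{\Mbar(\Gamma)}(x,a)}$ and finally applies \cref{lem:gamma-monotonicity}. You instead observe that because $w_{d+1}=0$ the two MDPs give identical values of $\lng w,\phi_k(x,a)\rng$ for all $(x,a)$, so the entire discrepancy is governed by the $\ell_1$ gap between the step-$k$ state visitations on $\MX$; monotonicity collapses this to $d_k\sups{\Mbar(\Gamma),\pi}(\term)$, and the telescoping identity already established inside the proof of \cref{cor:not-reach-bound} finishes. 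Your approach is shorter and avoids the value-function machinery; the paper's simulation-lemma route is the more reusable template (it works even when the integrand depends on the transition and not only the marginal). One small notational slip: you write $\phi_k(x,\pi_k(x))$ as if $\pi$ were deterministic; for a randomized $\pi$ this should be $\E_{a\sim\pi_k(x)}[\cdot]$, but this does not affect any step of the argument.
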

\begin{proof}
  Let $\bfr = (\bfr_1,\dots,\bfr_H)$ denote the reward function given by $\bfr_k(x,a) := \lng w, \phi_k(x,a) \rng$ for all $(x,a) \in \bar\MX\times\MA$, and $\bfr_h(x,a) := 0$ for all $h \neq k$ and $(x,a) \in \bar\MX\times\MA$. For each $x \in \bar \MX$ and $\pi \in \Pi$, we have $\E_{a \sim \pi_h(x)}[Q_h\sups{M, \pi}(x,a)] = V_h\sups{M, \pi}(x)$. Then %
  \begin{align}
    &\left| \lng w,\E\sups{M, \pi}[\phi_k(x_k, a_k)]\rng - \lng w, \E\sups{\Mbar(\Gamma), \pi}[\phi_k(x_k, a_k)] \rng\right| \nonumber\\
    &= \left|\sum_{h=1}^{k-1} \E\sups{\Mbar(\Gamma), \pi} [ Q_h\sups{M, \pi,\bfr}(x_h, a_h) - V_{h+1}\sups{M, \pi,\bfr}(x_{h+1})]\right|\nonumber\\
    &= \left|\sum_{h=1}^{k-1} \E\sups{\Mbar(\Gamma), \pi}%
       \left[ \E_{x_{h+1} \sim \BP_h\sups{M}(x_h, a_h)}[V_{h+1}\sups{M, \pi,\bfr}(x_{h+1})] - \E_{x_{h+1} \sim \BP_h\sups{\Mbar(\Gamma)}(x_h, a_h)}[V_{h+1}\sups{M, \pi,\bfr}(x_{h+1})]\right]\right|\nonumber\\
    &\leq  \sum_{h=1}^{k-1} \E\sups{\Mbar(\Gamma), \pi}%
           \left[ \E_{x_{h+1} \sim \BP_h\sups{M}(x_h, a_h)}[\| w_k \|_1 \cdot \One{x_{h+1} \in \MX \backslash \Xreach_{h+1}(\Gamma)}]\right]\nonumber\\
    &\leq  \| w \|_1 \cdot \sum_{h=1}^k d_h\sups{M, \pi}(\MX \backslash \Xreach_h(\Gamma))\nonumber,
  \end{align}
  where the first equality uses \cref{lem:simulation}, the first inequality uses that $| V_{h+1}\sups{M, \pi,\bfr}(x)| \leq \| w \|_1$ for all $h, \pi, x$ as well as the bound $$\tvd{\BP_h\sups{M}(x, a)}{\BP_h\sups{\Mbar(\Gamma)}(x, a)} \leq \E_{x' \sim \BP_h\sups{M}(x,a)}\left[\One{x' \in \MX \backslash \Xreach_{h+1}(\Gamma)}\right],$$
  and the second inequality uses the fact that $d_h\sups{M, \pi}(x) \geq d_h\sups{\Mbar(\Gamma)}(x)$ for all $h, \pi, x$ (\cref{lem:gamma-monotonicity}). %
\end{proof}

\section{Constructing a policy cover in a general LMDP}
\label{sec:policy-cover-unreachable}

In this section, we prove \cref{theorem:main-pc-trunc}, which states that \SLMt (\cref{alg:slm-trunc}) succeeds in constructing a truncated policy cover of $M$ at steps $1,\dots,H$. The proof is structured similarly to that of \cref{theorem:main-pc} (for the reachable case), with the additional wrinkle raised in the technical overview (\cref{section:trunc-overview}) that necessitates the multiple phases of \SLMt. In particular, the proof is organized as follows:

\begin{itemize}
\item In \cref{sec:convex-program-unreachable}, we formally define the notion of a \emph{truncated emulator}, and we show that the algorithm \ESCt (\cref{alg:esc-trunc}), given a truncated policy cover at step $h$, constructs a truncated emulator at step $h$. This is analogous to the analysis of \ESC in \cref{sec:convex-program}.

\item In \cref{sec:emulator-pc-unreachable}, we analyze \PC (\cref{alg:pc}), and show that given truncated policy covers at steps $1,\dots,h$, and a truncated emulator at step $h$, either the algorithm constructs truncated policy covers for steps $h+1$ and $h+2$, or else it finds a set of policies with large \emph{extraneous visitation probability}. This generalizes the analysis of \PC in \cref{sec:pc-analysis-reachable}, which did not require considering this second outcome.

\item In \cref{sec:slmt-analysis}, we finally prove \cref{theorem:main-pc-trunc} by analyzing the full algorithm \SLMt, which proceeds in several phases. Each phase is analogous to the entire execution of the reachable-case algorithm \SLM, but builds on the progress made in previous phases, using a \emph{backup policy cover} $\Gamma$ consisting of certain policies discovered in previous phases. The key lemma is that there is some phase where none of the newly discovered policies have large extraneous visitation probability (i.e. the second outcome in the previous bullet doesn't happen), and thus the truncated policy cover construction in that phase succeeds.
\end{itemize}

Before proceeding, we expand upon the discussion in \cref{section:trunc-overview} and explain in greater detail why the guarantee of \PC on a truncated emulator has two potential outcomes (unlike in the reachable case), necessitating the multiple phases in \SLMt. At a high level, the challenge originates from the fact that the output vectors $\hat \mu_{h+1}^j$ of \ESCt are not exactly equal to $\mu_{h+1}(x)$ for some $x \in \MX$, and so it will not be the case that $\lng \E\sups{M, \pi}[\phi_h(x_h,a_h)], \hat \mu_{h+1}^j \rng \geq 0$ for all $\pi$. We remark that this non-negativity condition is necessary in various parts of the analysis, including in the inductive step of extending a policy cover. %

  The convex program (\ref{eq:program-trunc}) attempts to ensure that such a non-negativity condition holds by adding in the constraints  \cref{eq:hatmu-nneg-constraint-trunc}. These constraints suffice to guarantee, as per \cref{eq:core-nonneg-empty,eq:core-nonneg-gamma} of \cref{def:trunc-core}, that an approximate non-negativity statement holds for all $\pi$ in the MDPs $\Mbar(\Gamma), \Mbar(\emptyset)$. When analyzing \PSDP, though, such a guarantee is not quite sufficient: it turns out that we will need a stronger statement, which holds for the true MDP $M$.  However, there could be states $x \in \MX$ which are visited with probability up to $\trunc$ in $M$ under some policy $\pi$, but which are not reachable in $\Mbar(\Gamma)$ or $\Mbar(\emptyset)$ (i.e., they are truncated). In particular, conditions such as \cref{eq:core-nonneg-empty,eq:core-nonneg-gamma}, which hold with respect to the MDPs $\Mbar(\Gamma), \Mbar(\emptyset)$, only imply analogous non-negativity conditions for the MDP $M$ with error term growing as $O(\trunc)$, in particular, of the form:
  \begin{align}
\E\sups{M, \pi} \left[ \max \left\{0, \max_{a \in \MA} - \lng \phi_h(x_h, a), \hat \mu^j_h \rng \right\} \right] \lesssim \trunc \cdot \| \hat \mu^j \|_1\label{eq:sigmatrunc-nonneg}.
  \end{align}
  Unfortunately, \cref{eq:sigmatrunc-nonneg} is insufficient for our purposes: we need all error terms in the inductive step to be $\ll \trunc$, as otherwise, we will need to increase the truncation parameter $\trunc$ by at least a constant factor at each step, which will lead to exponential dependence on $H$ in our sample complexity.

  To overcome this issue, we make the following key observation: if the $\hat \mu^j_h$ vectors are chosen so that \cref{eq:sigmatrunc-nonneg} is approximately tight for some policy $\tilde\pi$, then in the course of the calls to \PSDP in \PC, we will actually find such a policy $\tilde\pi$. Moreover, we can show that for such $\tilde\pi$, there must be $g \leq h$ and $x \in \MX$ so that $\tilde\pi$ visits $x$ at step $g$ in $M$, but not in $\Mbar(\Gamma)$ (i.e. $x$ was truncated at step $g$). We can then add $\tilde\pi$ to the backup policy cover $\Gamma$, and in all subsequent phases of the algorithm, the state $x$ will no longer cause such a problem. We can bound the number of policies added to $\Gamma$ over the course of the algorithm using the condition that $\sum_{x \in \MX} \| \mu_h(x) \|_1 \leq \Cnrm$. Thus, after sufficiently many phases, the error terms caused by non-negativity violations as discussed above will be sufficiently small (in at least one of the phases) for our purposes.

\subsection{Convex program}
\label{sec:convex-program-unreachable}

In this section, we introduce $\ESCt$ (\cref{alg:esc-trunc}), which is a modification of the algorithm $\ESC$ for the setting when $M$ may not satisfy reachability. $\ESCt$ proceeds similarly to $\ESC$, taking as input a policy cover $\Psi_h$ for step $h$ as well as a ``backup'' policy cover $\Gamma \subset \Pi$. It then uses the sets $\Psi_h, \Gamma$ to construct datasets $\MC_h = \{ (x_h^i, a_h^i, x_{h+1}^i) \}_{i=1}^n$ and $\MD_h = \{ \tilde x_{h+1}^j \}_{j=1}^m$; note that these datasets are constructed using a mixture of policies from $\Psi_h$ and $\Gamma$, which is different from $\ESC$ (which only uses $\Psi_h$). Then, $\ESCt$ solves a convex program (\ref{eq:program-trunc}) to compute a \emph{truncated} version of an emulator, defined formally in \cref{def:trunc-core} below. We note that the program (\ref{eq:program-trunc}) is slightly different from the analogous program (\ref{eq:program}) used in $\ESC$, in that the non-negativity constraint \cref{eq:hatmu-nneg-constraint-trunc} must hold for all $a \in \MA$, whereas its analogue in $\ESC$, \cref{eq:hatmu-nneg-constraint}, only needs to hold for $a = a_h^i$. This stronger constraint is needed since, in the definition of a truncated emulator, the non-negativity condition (in \cref{eq:core-nonneg-empty,eq:core-nonneg-gamma}) involves a maximum over $a \in \MA$. 

\begin{algorithm}[t]
	\caption{$\ESCt(h, \Psi_h, \Gamma , \epcvx, \Cnrm, n, m)$}
	\label{alg:esc-trunc}
	\begin{algorithmic}[1]\onehalfspacing
		\Require Step $h \in [H]$; policy cover $\Psi_h$ and backup policy cover $\Gamma$; tolerance $\epcvx>0$; norm parameter $\Cnrm \in \RR$; sample complexity $n$; size $m$ of output \coreset{}.
  
  \State $\MC_h, \MD_h \gets \DrawDataTrunc(h,\Psi_h,\Gamma,n,m)$ \label{line:ch-dh} \Comment{\cref{alg:ddtrunc}}
  \LineComment{{We write $\MC_h = \{ (x_h^i, a_h^i, x_{h+1}^i) \}_{i=1}^n$, and $\MD_h = \{ \tilde x_{h+1}^j \}_{j=1}^m$. }}
        \State For each $\ell \in [d]$, solve regression 
        \begin{equation} \hat \bw_{\ell} := \argmin_{w \in \RR^d\times\{0\}: \norm{w}_1 \leq \Cnrm} \sum_{i=1}^n \left(\langle \phi_h(x_h^i, a_h^i), w\rangle - \phiavg_{h+1}(x_{h+1}^i)_\ell\right)^2
        \label{eq:wj-guarantee-trunc}
        \end{equation}
        \State Find $\hat\mu_{h+1}^1,\dots,\hat\mu_{h+1}^m \in \RR^d\times\{0\}$ satisfying the following convex program \cref{eq:program-trunc}:\footnotemark%
        
        \begin{subequations}
        \label{eq:program-trunc}
        \begin{align}
        \frac{1}{n} \sum_{i=1}^n \left(\langle \phi_h(x_h^i,a_h^i), \hat \bw_\ell\rangle - \sum_{j=1}^m \left\langle \phi_h(x_h^i,a_h^i), \hat\mu_{h+1}^j \right \rangle \phiavg_{h+1}(\tilde x_{h+1}^j)_\ell \right)^2 &\leq \epcvx^2 &\forall \ell \in [d] \label{eq:what-wprime-constraint-trunc}\\
        \sum_{j=1}^m \norm{\hat\mu_{h+1}^j}_1 &\leq \Cnrm & \label{eq:hatmu-cnrm-constraint-trunc}\\
        \langle \phi_h(x_h^i,a), \hat\mu_{h+1}^j\rangle &\geq 0 &\forall i \in [n], j \in [m], a \in \MA \label{eq:hatmu-nneg-constraint-trunc}
        \end{align}
        \end{subequations}
		\State \textbf{Return:} $(\hat\mu_{h+1}^j)_{j=1}^m$ if \cref{eq:program-trunc} is feasible, otherwise $\perp$
	\end{algorithmic}
\end{algorithm}

\cref{thm:mu-coreset-trunc}, whose proof is the main objective of this section, gives our guarantee for \\  $\ESCt$: it states that with high probability, $\ESCt$ will output a \emph{truncated emulator}, as defined below:
\begin{definition}\label{def:trunc-core}
    Fix $\Gamma \subset \Pi$ and $h \in [H]$. For any $m \in \NN$ and $\epapx, \epneg, C > 0$, a set of vectors $(\hat\mu^j)_{j=1}^m \subset \RR^d \times \{0\}$ is a \emph{$(\epapx, \epneg, C; \Gamma )$-\trunccore} for the MDP at step $h$ if the following conditions hold:
\begin{enumerate}
    \item\label{it:norm-bound} $\sum_{j=1}^m \norm{\hat\mu^j}_1 \leq C$
    \item \label{it:approx-nonneg} For any policy $\pi \in \Pi$ and $j \in [m]$, %
      \begin{align} \EE\sups{\Mbar(\emptyset), \pi}\left[ \max \left\{ 0, \max_{a \in \MA}- \langle \phi_h(x_h,a), \hat\mu^j\rangle \right\} \right] &\leq   \epneg\norm{\hat\mu^j}_1, \label{eq:core-nonneg-empty}\\
         \E\sups{\Mbar(\Gamma), \pi} \left[\max\left\{0,\max_{a \in \MA}-\lng\phi_h(x_h, a), \hat \mu^j \rng\right\}\right] &\leq   \epneg \norm{\hat\mu^j}_1.\label{eq:core-nonneg-gamma}
        \end{align}
    \item \label{it:pol-approx} There are states $(\tilde x^j)_{j=1}^m \subseteq \MX$ so that for any policy $\pi \in \Pi$,%
      \begin{align}
      \max_{\ell \in [d]}  \abs{\sum_{x \in \MX} \langle \EE\sups{\Mbar(\Gamma), \pi}[\phi_h(x_h,a_h)], \mu_{h+1}(x)\rangle \cdot \phiavg_{h+1}(x) - \sum_{j = 1}^m \langle \EE\sups{\Mbar(\Gamma), \pi}[\phi_h(x_h,a_h)], \hat\mu^j\rangle \cdot \phiavg_{h+1}(\tilde{x}^j)} &\leq  \epapx,\nonumber\\
                \max_{\ell \in [d]}\abs{\sum_{x \in \MX} \langle \EE\sups{\Mbar(\emptyset), \pi}[\phi_h(x_h,a_h)], \mu_{h+1}(x)\rangle \cdot \phiavg_{h+1}(x) - \sum_{j = 1}^m \langle \EE\sups{\Mbar(\emptyset), \pi}[\phi_h(x_h,a_h)], \hat\mu^j\rangle \cdot \phiavg_{h+1}(\tilde{x}^j)} &\leq  \epapx\nonumber.
      \end{align}
\end{enumerate}
\end{definition}

\footnotetext{As in the reachable case (\cref{alg:esc}), we assume for now that we can efficiently find an exact solution to the stated convex program. In \cref{sec:opt-details} we argue that an approximate solution suffices (and moreover that it can efficiently found via the ellipsoid algorithm).}

\cref{def:trunc-core} is similar to its non-truncated analogue, \cref{def:not-core-set}, with the difference that the conditions in  \cref{it:approx-nonneg,it:pol-approx} are required to hold in the two MDPs $\Mbar(\emptyset), \Mbar(\Gamma)$ (as opposed to in $M$). Moreover, the non-negativity constraint in \cref{it:approx-nonneg} is slightly stronger in the truncated case, due to the maximum with 0 inside the expectation in \cref{eq:core-nonneg-empty,eq:core-nonneg-gamma}; this is needed in order to bound the error of \PSDP with one of the vectors $\hat\mu^j$ as a target reward, when the algorithm is only given truncated policy covers (see \cref{sec:psdp-unreachable} and in particular \cref{def:nnnt}).

The remainder of the section is analyzed as follows. In \cref{section:cov-guarantee}, we prove the key coverage guarantee satisfied by the datasets $\MC_h, \MD_h$ under the induction hypothesis that $\Psi_{1:h}$ are $\alpha$-truncated covers for steps $1,\dots,h$. In \cref{sec:feasibility-trunc}, we show that the convex program in $\ESCt$ is feasible with high probability. In \cref{sec:approx-nneg-trunc}, we show how the the non-negativity constraint \cref{eq:hatmu-nneg-constraint-trunc} ensures that the output of $\ESCt$ will satisfy \cref{it:approx-nonneg} of \cref{def:trunc-core}. In \cref{sec:all-policy-trunc}, we show how the constraint \cref{eq:what-wprime-constraint-trunc} of $\ESCt$ will satisfy \cref{it:pol-approx} of \cref{def:trunc-core}. In \cref{sec:emulator-guarantee-trunc}, we put these pieces together and prove \cref{thm:mu-coreset-trunc}.

\subsubsection{Coverage guarantee}\label{section:cov-guarantee}

\begin{algorithm}[t]
    \caption{$\DrawDataTrunc(h,\Psi_h,\Gamma,n,m)$}
    \label{alg:ddtrunc}
    \begin{algorithmic}[1]\onehalfspacing
            \State $\MC_h, \MD_h \gets \emptyset$
            \For{$1 \leq i \leq n$}
                \State Sample $(x_1^i,a_1^i,\dots,x_h^i,a_h^i ,x_{h+1}^i) \sim \frac{1}{2}(\unif(\Psi_h) \circ_h \unif(\MA) + \unif(\Gamma) \circ_h \unif(\MA))$\label{line:define-ch-trunc}
                \State Update dataset: $\MC_h \gets \MC_h \cup \{(x_h^i,a_h^i,x_{h+1}^i)\}$
            \EndFor
            \For{$1 \leq j \leq m$}
                \State Sample $(\tilde{x}_1^j,\tilde{a}_1^j,\dots,\tilde x_h^j, \tilde a_h^j, \tilde x_{h+1}^j) \sim \frac{1}{2}(\unif(\Psi_h) \circ_h \unif(\MA) + \unif(\Gamma) \circ_h \unif(\MA))$
                \State Update dataset: $\MD_h \gets \MD_h \cup \{\tilde x_{h+1}^j\}$
            \EndFor
            \State \textbf{Return:} $\MC_h$, $\MD_h$
    \end{algorithmic}
\end{algorithm}

Suppose we are given $\Psi_h \subset \Pi$ so that $\Psi_h$ satisfies \cref{eq:pc-trunc}, as well as a subset $\Gamma \subset \Pi$. %
In \cref{lem:nu-beta-indhyp} below, we define distributions $\beta_{h+1} \in \Delta(\MX)$, $\iota_h \in \Delta(\MX \times \MA \times \MX)$ to be the distributions of $\MC_h, \MD_h$ as produced by \DrawDataTrunc (\cref{alg:ddtrunc}). Explicitly, $\iota_h$ is the distribution of $(x_h, a_h, x_{h+1})$ in the MDP $M$ under a policy $\pi \sim \frac 12 \cdot \left( \unif(\Gamma) \circ_h \unif(\MA) + \unif(\Psi_h) \circ_h \unif(\MA) \right)$, and $\beta_{h+1}$ is the marginal distribution of $x_{h+1}$ under $\nu_h$. %
Moreover, we recall the parameters $\tsmall, \trunc > 0$ which are used to define $\Mbar(\Gamma)$. 

  \begin{lemma}
    \label{lem:nu-beta-indhyp}
    Consider any odd $h \in [H]$ and finite subsets $\Gamma,\Psi_h \subset \Pi$. Suppose that $\Psi_h$ satisfies \cref{eq:pc-trunc} for some $\alpha \geq \frac{\tsmall}{\trunc}$. %
    For any $n,m \in \NN$, let $\MC_h,\MD_h$ be the outputs of $\DrawDataTrunc(h,\Psi_h,\Gamma,n,m)$ (\cref{alg:ddtrunc}). Let $\iota_h \in \Delta(\MX\times\MA\times\MX)$ be the distribution of the i.i.d. samples $(x_h^i,a_h^i,x_{h+1}^i) \in \MC_h$, and let $\beta_{h+1} \in \Delta(\MX)$ be the distribution of the i.i.d. samples $\tilde{x}_{h+1}^j \in \MD_h$. Let $\nu_h(x,a)$ denote the marginal distribution of $(x,a)$ under $(x,a,x') \sim \iota_h$, and $\nu_h(x)$ denote the marginal distribution of $x$. Then the following inequalities hold, for all $x \in \MX, a \in \MA$:
\begin{align}
  \nu_h(x,a) \geq \frac{\alpha}{2A} \cdot \max_{\pi \in \Pi} d_h\sups{\Mbar(\emptyset), \pi}(x,a), \qquad  \beta_{h+1}(x) \geq \frac{\alpha}{2A} \cdot \max_{\pi \in \Pi} d_{h+1}\sups{\Mbar(\emptyset),\pi}(x)\label{eq:nu-beta-indhyp-trunc}
\end{align}
\begin{gather}
\nu_h(x) \geq \frac{\tsmall}{2} \cdot \max_{\pi \in \Pi} d_h\sups{\Mbar(\Gamma), \pi}(x), \quad \nu_h(x,a) \geq \frac{\tsmall}{2A} \cdot \max_{\pi \in \Pi} d_h\sups{\Mbar(\Gamma), \pi}(x,a),\nonumber\\ 
\beta_{h+1}(x) \geq \frac{\tsmall}{2A} \cdot \max_{\pi \in \Pi} d_{h+1}\sups{\Mbar(\Gamma). \pi}(x)\label{eq:nu-beta-mtil-trunc}
\end{gather}
\end{lemma}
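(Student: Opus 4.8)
The plan is to prove \cref{lem:nu-beta-indhyp} by unpacking the definition of the mixture distribution used in \DrawDataTrunc (\cref{alg:ddtrunc}) and showing that each of the two halves of the mixture ``dominates'' a different family of visitation distributions. Write $\mu\subs{mix} := \frac12(\unif(\Psi_h) \circ_h \unif(\MA) + \unif(\Gamma) \circ_h \unif(\MA))$ for the mixture policy distribution, so that $\nu_h$ is the law of $(x_h, a_h)$ and $\iota_h$ the law of $(x_h, a_h, x_{h+1})$ under $M$ with a policy drawn from $\mu\subs{mix}$, and $\beta_{h+1}$ the law of $x_{h+1}$.

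First I would handle \cref{eq:nu-beta-indhyp-trunc}. The key inputs are: (i) $\Psi_h$ is an $\alpha$-truncated policy cover, i.e. satisfies \cref{eq:pc-trunc}, which says $\frac{1}{|\Psi_h|}\sum_{\pi' \in \Psi_h} d_h\sups{M,\pi'}(x) \geq \alpha \cdot \max_\pi d_h\sups{\Mbar(\emptyset),\pi}(x)$; and (ii) the action at step $h$ is uniform and independent of $x_h$, so that $d_h\sups{M, \pi' \circ_h \unif(\MA)}(x,a) = \frac1A d_h\sups{M,\pi'}(x)$. Since $\mu\subs{mix}$ puts mass at least $\frac12$ on $\unif(\Psi_h) \circ_h \unif(\MA)$, we get $\nu_h(x,a) \geq \frac12 \cdot \frac1A \cdot \frac{1}{|\Psi_h|}\sum_{\pi'\in\Psi_h} d_h\sups{M,\pi'}(x) \geq \frac{\alpha}{2A} \max_\pi d_h\sups{\Mbar(\emptyset),\pi}(x,a)$, using that $\max_\pi d_h\sups{\Mbar(\emptyset),\pi}(x,a) \leq \frac1A \max_\pi d_h\sups{\Mbar(\emptyset),\pi}(x)$. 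For the $\beta_{h+1}$ bound, I would push forward through the transition kernel exactly as in the proof of \cref{lemma:beta-nu-coverage}: for any $\pi \in \Pi$, $\beta_{h+1}(x') = \sum_{x,a} \nu_h(x,a)\BP_h\sups{M}(x'|x,a) \geq \frac{\alpha}{2A}\sum_{x,a} d_h\sups{\Mbar(\emptyset),\pi}(x,a)\BP_h\sups{M}(x'|x,a)$, and now use that $\BP_h\sups{\Mbar(\emptyset)}(x'|x,a) \leq \BP_h\sups{M}(x'|x,a)$ for $x' \in \MX$ (which follows from \cref{def:r-truncation}, since truncation only removes mass) to conclude $\beta_{h+1}(x') \geq \frac{\alpha}{2A} d_{h+1}\sups{\Mbar(\emptyset),\pi}(x')$; taking the max over $\pi$ gives the claim.

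Next I would handle \cref{eq:nu-beta-mtil-trunc}, which needs domination of visitations in the \emph{less-truncated} MDP $\Mbar(\Gamma)$. Here the relevant fact is \cref{lem:reachability-in-trunc}: for odd $h$, every state $x$ with $\mu_h\sups{\Mbar(\Gamma)}(x) \neq 0$ either satisfies the $\trunc$-reachability bound \cref{eq:reachability-in-trunc} in $\Mbar(\emptyset)$, or is $\Gamma$-good in the sense of \cref{eq:gamma-good}, i.e. $\E_{\pi'\sim\unif(\Gamma)}[d_h\sups{M,\pi'}(x)] \geq \tsmall \|\mu_h(x)\|_1$. In the first case, $\max_\pi d_h\sups{\Mbar(\emptyset),\pi}(x) \geq \trunc \|\mu_h(x)\|_1 \geq \trunc \max_\pi d_h\sups{\Mbar(\Gamma),\pi}(x)/\Cnrm$ — wait, more carefully: I should use that $\max_\pi d_h\sups{\Mbar(\Gamma),\pi}(x) \leq \|\mu_h(x)\|_1$ (which holds since $\|\phi_{h-1}\|_\infty \leq 1$), so in the first case $\max_\pi d_h\sups{\Mbar(\emptyset),\pi}(x) \geq \trunc \max_\pi d_h\sups{\Mbar(\Gamma),\pi}(x)$, and combining with the already-proven bound $\nu_h(x) \geq \nu_h(x, \cdot\text{-summed}) \geq \frac{\alpha}{2A}\cdot A \cdot \max_\pi d_h\sups{\Mbar(\emptyset),\pi}(x)$ hmm — I need to be careful with the marginal-in-$x$ version. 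Let me instead directly argue $\nu_h(x) \geq \frac{\alpha}{2}\max_\pi d_h\sups{\Mbar(\emptyset),\pi}(x) \geq \frac{\alpha \trunc}{2}\max_\pi d_h\sups{\Mbar(\Gamma),\pi}(x) \geq \frac{\tsmall}{2}\max_\pi d_h\sups{\Mbar(\Gamma),\pi}(x)$, where the last step uses the hypothesis $\alpha \geq \tsmall/\trunc$. In the second case ($x$ is $\Gamma$-good), the $\unif(\Gamma)\circ_h\unif(\MA)$ half of $\mu\subs{mix}$ gives $\nu_h(x) \geq \frac12 \E_{\pi'\sim\unif(\Gamma)}[d_h\sups{M,\pi'}(x)] \geq \frac{\tsmall}{2}\|\mu_h(x)\|_1 \geq \frac{\tsmall}{2}\max_\pi d_h\sups{\Mbar(\Gamma),\pi}(x)$. (For states with $\mu_h\sups{\Mbar(\Gamma)}(x) = 0$ the right-hand side is $0$ and there is nothing to prove.) The state-action version follows by inserting the $\frac1A$ from the uniform action and using $\max_\pi d_h\sups{\Mbar(\Gamma),\pi}(x,a) \leq \frac1A\max_\pi d_h\sups{\Mbar(\Gamma),\pi}(x)$, and the $\beta_{h+1}$ version follows by the same push-forward-through-$\BP_h$ argument as before, now using $\BP_h\sups{\Mbar(\Gamma)}(x'|x,a) \leq \BP_h\sups{M}(x'|x,a)$ and \cref{lem:gamma-monotonicity} to relate $d_{h+1}\sups{\Mbar(\Gamma),\pi}$ to the pushed-forward quantity.

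I expect the main obstacle to be bookkeeping rather than conceptual: making sure the two cases of \cref{lem:reachability-in-trunc} are combined correctly with the two halves of the mixture, and correctly tracking which truncated MDP's visitation distribution appears on each side (so that the inequalities $d\sups{\Mbar(\emptyset),\pi} \leq d\sups{\Mbar(\Gamma),\pi} \leq d\sups{M,\pi}$ from \cref{lem:gamma-monotonicity} are applied in the right direction), together with the factor-of-$A$ conversions between state and state-action marginals and the use of the hypothesis $\alpha \geq \tsmall/\trunc$ exactly once. There is one subtlety worth flagging: the case analysis in \cref{lem:reachability-in-trunc} is only valid for odd $h$, which is why the lemma restricts to odd $h$; for even $h$ one has $\Xreach_h(\Gamma) = \MX$ and $\Mbar(\Gamma)$ is not truncated at step $h$, so the odd-$h$ restriction is exactly what makes the argument go through. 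Everything else is a routine application of Bayes-style push-forwards and the definition of a mixture distribution.
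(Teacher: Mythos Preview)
Your proposal is correct and follows essentially the same approach as the paper: split into the $\Psi_h$-half of the mixture for \cref{eq:nu-beta-indhyp-trunc}, then case-split via \cref{lem:reachability-in-trunc} (using either the $\Psi_h$-half with $\alpha\trunc \geq \tsmall$, or the $\Gamma$-half directly) for \cref{eq:nu-beta-mtil-trunc}, and push forward through $\BP_h\sups{M}$ for the $\beta_{h+1}$ bounds.

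Two small points worth cleaning up. First, \cref{lem:reachability-in-trunc} is stated only for $h \geq 2$, so your case analysis does not cover $h=1$; the paper handles $h=1$ separately (trivially, since $d_1\sups{M,\pi} = d_1\sups{\Mbar(\emptyset),\pi} = d_1\sups{\Mbar(\Gamma),\pi} = \BP_1$). Second, the inequality you write as justification, $\max_\pi d_h\sups{\Mbar,\pi}(x,a) \leq \frac1A \max_\pi d_h\sups{\Mbar,\pi}(x)$, is false in general (a deterministic policy can put all its mass on one action); what you actually need, and what makes your displayed chain correct, is the weaker $\max_\pi d_h\sups{\Mbar,\pi}(x,a) \leq \max_\pi d_h\sups{\Mbar,\pi}(x)$.
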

\begin{proof}
  We treat the case $h=1$ separately. In this case $\nu_1(x) = \BP_1(x) = \max_{\pi \in \Pi} d\sups{M,\pi}_1(x)$. Also $\nu_1(x,a) = \frac{1}{A}\BP_1(x) \geq \frac{1}{A}\max_{\pi\in\Pi} d\sups{M,\pi}_1(x,a)$. Similarly $\beta_2(x) \geq \frac{1}{A}\max_{\pi\in\Pi} d\sups{M,\pi}_2(x)$. Then \cref{eq:nu-beta-indhyp-trunc} and \cref{eq:nu-beta-mtil-trunc} both follow from applications of \cref{lem:gamma-monotonicity} and the fact that $\alpha \geq \tsmall$. 
  
  From now on suppose that $h \geq 2$, so that \cref{lem:reachability-in-trunc} applies. We begin with the proof of \cref{eq:nu-beta-indhyp-trunc}: from \cref{eq:pc-trunc} and the fact that for $(x_h,a_h) \sim \nu_h$, the action $a_h$ is chosen uniformly in $\MA$, we have, for all $x \in \MX, a \in \MA$,
  \begin{align}
\nu_h(x,a) \geq \frac{1}{2|\Psi_h|}\sum_{\pi' \in \Psi_h} d_h\sups{M, \pi'}(x,a) \geq \frac{1}{2A|\Psi_h|} \sum_{\pi'\in\Psi_h} d_h\sups{M, \pi'}(x) \stackrel{\cref{eq:pc-trunc}}{\geq} \frac{\alpha}{2A} \cdot \max_{\pi \in \Pi} d_h\sups{\Mbar(\emptyset), \pi}(x) \geq \frac{\alpha}{2A} \max_{\pi \in \Pi} d_h\sups{\Mbar(\emptyset), \pi}(x,a)\nonumber.
  \end{align}
  Since the distribution of $x' \sim \beta_{h+1}$ is the distribution of $x' \sim \BP_h\sups{M}(\cdot | x,a)$ for $(x,a) \sim \nu_h$ and since $\BP_h\sups{\Mbar(\emptyset)}(x' | x,a) \leq \BP_h\sups{M}(x' | x,a)$ for all $x,x'\in\MX$ and $a\in\MA$, it follows that for all $x \in \MX$, $\beta_{h+1}(x) \geq \frac{\alpha}{2A} \max_{\pi \in \Pi} d_{h+1}\sups{\Mbar(\emptyset), \pi}(x)$.

  Next we prove \cref{eq:nu-beta-mtil-trunc}. Fix any $x \in \MX,\ a \in \MA$.
Using the assumption that $h$ is odd, we will apply \cref{lem:reachability-in-trunc} to the set $\Gamma$, which yields that either \cref{eq:reachability-in-trunc} or \cref{eq:gamma-good} holds. In the event that \cref{eq:reachability-in-trunc} holds, we have
  \begin{align}
    \nu_h(x,a) = \frac{1}{A} \nu_h(x) \stackrel{\cref{eq:pc-trunc}}{\geq} & \frac{\alpha}{2A} \cdot \max_{\pi \in \Pi} d_h\sups{\Mbar(\emptyset), \pi}(x) \stackrel{\cref{eq:reachability-in-trunc}}{\geq}  \frac{\alpha \trunc}{2A} \cdot \| \mu_h(x) \|_1\nonumber\\
    \geq & \frac{\alpha \trunc}{2A} \max_{\pi \in \Pi} d_h\sups{\Mbar(\Gamma), \pi}(x) \geq \frac{\alpha \trunc}{2A} \max_{\pi \in \Pi} d_h\sups{\Mbar(\Gamma), \pi}(x,a)\nonumber.
  \end{align}
  In the event that \cref{eq:gamma-good} holds, we have
  \begin{align}
    \nu_h(x,a) = \frac{1}{A} \nu_h(x) \geq \frac{1}{2|\Gamma|A} \sum_{\pi' \in \Gamma} d_h\sups{M, \pi'}(x) &\stackrel{\cref{eq:gamma-good}}{\geq} \frac{\tsmall}{2A} \cdot \| \mu_h(x) \|_1\nonumber\\ &\geq  \frac{\tsmall}{2A} \cdot \max_{\pi \in \Pi} d_h\sups{\Mbar(\Gamma), \pi}(x) \geq \frac{\tsmall}{2A} \cdot \max_{\pi \in \Pi} d_h\sups{\Mbar(\Gamma), \pi}(x,a)\nonumber.
  \end{align}
  In either case, since ${\alpha\trunc} \geq \tsmall$, we have shown that the first and second inequalities in \cref{eq:nu-beta-mtil-trunc} hold. The third inequality holds since $\BP_h\sups{\Mbar(\Gamma)}(x' | x,a) \leq \BP_h\sups{M}(x' | x,a)$ for all $x,x'\in\MX$ and $a\in\MA$. 
\end{proof}

\subsubsection{Feasibility of the convex program}
\label{sec:feasibility-trunc}
\cref{lem:randomize-trunc} below uses an importance sampling argument to ensure that the datasets $\MC_h, \MD_h$ in \cref{alg:esc-trunc} have certain properties which will ensure that (\ref{eq:program-trunc}) is feasible. Its proof is similar to its counterpart \cref{lem:randomize-no-trunc} in the reachable setting, but we can no longer rely on \cref{eq:beta-rch-lb} due to the lack of reachability. Instead, we aim for a weaker guarantee: namely, \cref{it:gapprox-trunc} of \cref{lem:randomize-trunc} only considers state-action pairs drawn from a certain roll-in distribution, for which hard-to-reach states cannot significantly contribute. We remark that there is a cost to not relying on reachability: \cref{lem:randomize-trunc} gets worse dependence on $m$ than \cref{lem:randomize-no-trunc} (namely, $m^{-1/4}$ as opposed to $m^{-1/2}$).
\begin{lemma}
  \label{lem:randomize-trunc}
  Let $\delta \in (0,1)$ and $m \in \NN$. Let $\MG$ be a non-empty set of functions $g : \MX \ra [-1,1]$. Suppose that $\delta \leq \sqrt{\frac{2\log(4|\MG|/\delta^2)}{m}}$.
  Fix $h \in [H]$, and a distribution $\nu_{h} \in \Delta(\MX\times\MA)$. Let $\beta_{h+1}$ denote the distribution of $x' \sim \BP_h(\cdot | x,a)$ for $(x,a) \sim \nu_h$.  Let $\MD_h = \{ \tilde x_{h+1}^j \}_{j=1}^m$ consist of i.i.d. draws $\tilde x_{h+1}^j \sim \beta_{h+1}$.
  Then there are vectors $\psi_{h+1}^1, \ldots, \psi_{h+1}^m \in \BR^d\times\{0\}$, depending on the dataset $\MD_h$, with the following property. %
  With probability at least $1-\delta$ over the draw of the dataset $\MD_h$ (and the ensuing $\psi_{h+1}^j$), we have that:
  \begin{enumerate}
  \item \label{it:phimu-nonneg-trunc} For all $i \in [m]$ and $(x,a) \in \MX \times \MA$,  $\lng \phi_h(x,a), \psi_{h+1}^i \rng \geq 0$. %
  \item \label{it:gapprox-trunc} It holds that
    \begin{align}
\E_{(x, a) \sim \nu_h} \left[ \sup_{g \in \MG} \left| \E_{x' \sim \BP_h(x, a)} [g(x')] - \frac{1}{m} \sum_{i=1}^m \lng \phi_h(x, a), \psi_{h+1}^i \rng \cdot g(\tilde x_{h+1}^i) \right|\right] \leq  \frac{9\Cnrm^{1/2} \log^{1/4}(4|\MG|/\delta^2)}{m^{1/4}}\nonumber.
    \end{align}
  \item \label{it:muhat-norm-trunc} It holds that $\frac{1}{m} \sum_{j=1}^m \| \psi_{h+1}^j \|_1 \leq \Cnrm$. 
  \end{enumerate}
\end{lemma}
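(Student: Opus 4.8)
\textbf{Proof plan for \cref{lem:randomize-trunc}.}
The plan is to mimic the importance-sampling construction from the proof of \cref{lem:randomize-no-trunc}, but to adapt it to the absence of reachability by (i) choosing a different scaling and truncation for the importance weights and (ii) measuring error only in expectation over the roll-in distribution $\nu_h$, rather than pointwise over all $(x,a)$. First I would define, for each $j \in [m]$, the candidate vector $\psi_{h+1}^j := \frac{1}{m}\cdot \frac{\mu_{h+1}(\tilde x_{h+1}^j)}{\beta_{h+1}(\tilde x_{h+1}^j)}$ (possibly rescaled by $1-o(1)$ as in the reachable case, to absorb concentration slack in \cref{it:muhat-norm-trunc}). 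Since each $\psi_{h+1}^j$ is a positive rescaling of some $\mu_{h+1}(x')$ and $\langle \phi_h(x,a),\mu_{h+1}(x')\rangle \ge 0$ always, \cref{it:phimu-nonneg-trunc} is immediate. For \cref{it:muhat-norm-trunc}, I would compute $\E[\|m\psi_{h+1}^j\|_1] = \sum_{x'} \beta_{h+1}(x') \cdot \frac{\|\mu_{h+1}(x')\|_1}{\beta_{h+1}(x')} = \sum_{x'}\|\mu_{h+1}(x')\|_1 \le \Cnrm$; but unlike the reachable case, $\|m\psi_{h+1}^j\|_1$ is no longer bounded by $1/(\zeta\etarch)$, so I cannot use Hoeffding directly. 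Instead I would either argue that the expectation bound $\E[\frac1m\sum_j \|m\psi_{h+1}^j\|_1] \le \Cnrm$ combined with a slight rescaling by $(1-\delta)$ suffices after a Markov/one-sided concentration argument, or absorb the loss by loosening the target constant; this is a minor point.

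The substantive step is \cref{it:gapprox-trunc}. Fix $(x,a)$ and note that $\langle \phi_h(x,a), m\psi_{h+1}^j\rangle\cdot g(\tilde x_{h+1}^j)$ is an unbiased estimate of $\E_{x'\sim\BP_h(x,a)}[g(x')]$ for each $g\in\MG$ (this is the same computation as in \cref{lem:randomize-no-trunc}). The key difficulty is that the per-sample quantity $Z_j(x,a) := \langle\phi_h(x,a),m\psi_{h+1}^j\rangle = \frac{\langle\phi_h(x,a),\mu_{h+1}(\tilde x_{h+1}^j)\rangle}{\beta_{h+1}(\tilde x_{h+1}^j)}$ is no longer uniformly bounded across $(x,a)$ (reachability is exactly what gave that bound before). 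However, I expect $\E_{x'\sim\beta_{h+1}}[Z_j(x,a)] = \sum_{x'}\langle\phi_h(x,a),\mu_{h+1}(x')\rangle = $ the total transition mass $\le 1$, and more importantly $\E_{(x,a)\sim\nu_h}\E_{x'\sim\beta_{h+1}}[Z_j(x,a)] = \E_{(x,a)\sim\nu_h}\big[\sum_{x'}\langle\phi_h(x,a),\mu_{h+1}(x')\rangle\big] \le 1$. So the second moment of the estimator is controlled \emph{on average over $\nu_h$}, even though not pointwise. The plan is therefore: bound $\E_{(x,a)\sim\nu_h}\big[\text{Var of the }m\text{-sample average for fixed }g\big]$ using that $\E_{(x,a)\sim\nu_h}[\E_{x'\sim\beta_{h+1}} Z_j(x,a)^2] \le \E_{(x,a)\sim\nu_h}\E_{x'}[Z_j \cdot \|\mu_{h+1}(x')\|_1/\beta_{h+1}(x')]$, which after one more manipulation is at most $\Cnrm$ times a quantity bounded using the definition of $\beta_{h+1}$ — I would need to be a little careful here and may end up with a bound like $\Cnrm/\beta_{h+1}^{\min}$ replaced by something integrable; the clean route is to bound $\E_{(x,a)\sim\nu_h}[Z_j(x,a)^2]$ directly by writing $Z_j \le \|\mu_{h+1}(\tilde x_{h+1}^j)\|_1/\beta_{h+1}(\tilde x_{h+1}^j)$ and using that the expectation over $\tilde x_{h+1}^j\sim\beta_{h+1}$ of this times anything $\le 1$ telescopes. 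This yields a second-moment bound of order $\Cnrm$, hence an $L^2$ error of order $\sqrt{\Cnrm/m}$ for each fixed $g$, i.e. an $L^1$ error (Jensen) of the same order.

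To upgrade from a fixed $g$ to $\sup_{g\in\MG}$ inside the expectation over $\nu_h$, I would use a union bound over $\MG$ combined with Markov's inequality applied to the nonnegative random variable $R := \E_{(x,a)\sim\nu_h}\big[\sup_g |\cdots|\big]$ — more precisely, fix $(x,a)$, apply Hoeffding/Bernstein conditionally won't work since the summands are unbounded, so instead apply the $L^2$ bound per $g$, take expectation over $\nu_h$, then a union bound over $|\MG|$ values of $g$ together with a truncation-of-the-heavy-tail argument: split $Z_j$ at a threshold $\tau \asymp \sqrt{\Cnrm m}/\log^{1/2}(|\MG|/\delta)$; the truncated part obeys bounded-differences concentration giving the $\log^{1/4}$ and $m^{-1/4}$ scaling, and the untruncated part has expectation $\le \E[Z_j^2]/\tau \le \Cnrm/\tau$, small by choice of $\tau$. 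Balancing these two contributions produces exactly the claimed rate $9\Cnrm^{1/2}\log^{1/4}(4|\MG|/\delta^2)\,m^{-1/4}$, and the assumption $\delta \le \sqrt{2\log(4|\MG|/\delta^2)/m}$ is what makes the failure probability bookkeeping close. I expect this truncation-and-balancing step to be the main obstacle: getting the exponents ($m^{-1/4}$, $\log^{1/4}$) and the constant $9$ to come out requires care, and it is genuinely where the proof departs from the reachable-case argument (which got $m^{-1/2}$ precisely because reachability bounded $Z_j$ by $1/(\zeta\etarch)$ deterministically).
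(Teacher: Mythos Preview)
Your high-level plan---importance weighting with truncation and then balancing the truncation bias against a bounded-range concentration error---is exactly the route the paper takes. But you have a genuine gap: the truncation has to be built into the \emph{definition} of the vectors $\psi_{h+1}^j$, not applied only in the analysis of \cref{it:gapprox-trunc}. The paper sets $\psi_{h+1}^j := (1-\ep/2)\cdot \One{\tilde x_{h+1}^j \in \ul\MX}\cdot \mu_{h+1}(\tilde x_{h+1}^j)/\beta_{h+1}(\tilde x_{h+1}^j)$, where $\ul\MX := \{x' : \|\mu_{h+1}(x')\|_1/\beta_{h+1}(x') \le R\}$ for a threshold $R$ to be optimized later (note this indicator is $(x,a)$-independent, so $\psi_{h+1}^j$ really is a fixed vector). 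Without this, your proposed argument for \cref{it:muhat-norm-trunc} breaks: the random variables $\|m\psi_{h+1}^j\|_1$ have mean $\le \Cnrm$ but are unbounded and have no second-moment control, so neither Markov nor a $(1-\delta)$ rescaling delivers $\frac1m\sum_j\|m\psi_{h+1}^j\|_1 \le \Cnrm$ with probability $1-\delta$ (Markov goes the wrong direction, and loosening the constant to $\Cnrm/\delta$ is not what the lemma states). Once the indicator is present, $\|m\psi_{h+1}^j\|_1 \le R$ deterministically and Hoeffding applies; this is why the paper calls it ``not a minor point.''

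For \cref{it:gapprox-trunc} you then have two error sources: a Hoeffding term $O(R\sqrt{\log(|\MG|/\delta^2)/m})$ (per fixed $(x,a)$, with probability $1-\delta^2/4$) and a truncation-bias term $\Delta_h(x,a) := \BP_h(\MX\setminus\ul\MX\mid x,a)$. The latter is not small pointwise, but its $\nu_h$-average is: $\E_{(x,a)\sim\nu_h}[\Delta_h(x,a)] = \Pr_{x'\sim\beta_{h+1}}[\|\mu_{h+1}(x')\|_1/\beta_{h+1}(x') > R] \le \Cnrm/R$ by Markov. Converting the per-$(x,a)$ high-probability Hoeffding bound into a bound on the $\nu_h$-expectation uses one more Markov step over the dataset (this is where $\delta^2$ becomes $\delta$ and the hypothesis $\delta \le \sqrt{2\log(4|\MG|/\delta^2)/m}$ is used). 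Balancing $\Cnrm/R$ against $R\sqrt{\log/m}$ gives $R = \Cnrm^{1/2} m^{1/4} \log^{-1/4}(4|\MG|/\delta^2)$, which yields the stated $\Cnrm^{1/2}\log^{1/4} m^{-1/4}$ rate. Your suggested threshold $\tau \asymp \sqrt{\Cnrm m}/\log^{1/2}$ does not balance these terms (plug it in: the Hoeffding side comes out $\Theta(\sqrt{\Cnrm})$, a constant), though you already flagged this step as the one needing care.
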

\begin{proof}
  Fix some $R > 0$, to be specified below, and set $\epsilon := 4R \sqrt{\frac{2\log(4|\MG|/\delta^2)}{m}}$. %
  Let us define $\ul \MX \subseteq \MX$ by
  \begin{align}
    \ul \MX :=&  \left\{ x' \in \MX \ : \ \frac{\| \mu_{h+1}(x') \|_1}{\beta_{h+1}(x')} \leq R \right\}.\nonumber
  \end{align}
  For all $x,x' \in \MX$ and $a \in \MA$, define
  \begin{align}
    \ul \BP_h(x'|x,a) := \BP_h(x' | x,a) \cdot \One{x' \in \ul \MX}.\nonumber
  \end{align}
  Note that $\ul \BP_h(\cdot | x,a)$ is a sub-distribution supported on $\ul\MX$, for any $x \in \MX$ and $a \in \MA$. We will write $\E_{x' \sim \ul\BP_h(x,a)}[g(x')]$ to denote $\sum_{x' \in \MX} \ul\BP_h(x'|x,a) \cdot g(x')$.

  For each $j \in [m]$, we define the (random) vector $\psi_{h+1}^j := \One{\tilde x_{h+1}^j \in \ul \MX} \cdot (1-\ep/2)\cdot  \frac{\mu_{h+1}(\tilde x_{h+1}^j)}{\beta_{h+1}(\tilde x_{h+1}^j)}$. Note that $\tilde x_{h+1}^j \in \MX$ so $\mu_{h+1}(\tilde x_{h+1}^j) \in \RR^d\times\{0\}$ (as defined in \cref{section:extended-overview}).
  
  \paragraph{Proof of \cref{it:phimu-nonneg-trunc}.} Since $\psi_{h+1}^j$ is proportional to $\mu_{h+1}(\tilde x_{h+1}^j)$ (with non-negative constant of proportionality) for each $j \in [m]$, the first claimed statement holds (with probability 1).

\paragraph{Proof of \cref{it:gapprox-trunc}.} For any $(x,a) \in \MX\times \MA$, $j \in [m]$ and $g \in \MG$, we have with probability $1$ that
  \begin{align}
    \left|\lng \phi_h(x,a), \psi_{h+1}^j \rng \cdot g(\tilde x_{h+1}^j) \right|
    &\leq \One{\tilde x_{h+1}^j \in \ul \MX} \cdot  \left\lng \phi_h(x,a), \frac{\mu_{h+1}(\tilde x_{h+1}^j)}{\beta_{h+1}(\tilde x_{h+1}^j)} \right\rng\nonumber\\
    &\leq \One{\tilde x_{h+1}^j \in \ul \MX} \cdot \frac{\norm{\mu_{h+1}(\tilde x_{h+1}^j)}_1}{\beta_{h+1}(\tilde x_{h+1}^j)} \nonumber\\
    &\leq R\label{eq:g-as-bound},
  \end{align}
  by the bounds $\| g \|_\infty \leq 1$, $\| \phi_h(x,a) \|_\infty \leq 1$, and the definition of $\ul \MX$. Also, for any $g \in \MG$ we have that in expectation over the draw of $\tilde x_{h+1}^j \sim \beta_{h+1}$ (which determines $\psi_{h+1}^j$),
  \begin{align}
    \E \left[ \lng \phi_h(x,a), \psi_{h+1}^j \rng \cdot g(\tilde x_{h+1}^j) \right] &= (1 - \ep/2) \cdot \sum_{x' \in \MX} \One{x' \in \ul \MX} \cdot \beta_{h+1}(x') \cdot \left \lng \phi_h(x,a), \frac{\mu_{h+1}(x')}{\beta_{h+1}(x')} \right\rng \cdot g(x') \nonumber\\
    &= (1-\ep/2) \cdot \E_{x' \sim \ul\BP_h(x,a)} [ g(x')]\nonumber.
  \end{align}
  By Hoeffding's inequality and a union bound, it follows that for any fixed $(x,a) \in \MX\times\MA$, with probability at least $1-\delta^2/4$ over the draw of $\MD_{h}$, for all $g \in \MG$,%
  \begin{align}
\left| (1-\ep/2) \cdot \E_{x' \sim \ul\BP_h(x,a)}[g(x')] - \frac{1}{m} \sum_{j=1}^m \lng \phi_h(x,a), \psi_{h+1}^j \rng \cdot g(\tilde x_{h+1}^j) \right| \leq R\cdot \sqrt{ \frac{2\log (4|\MG|/\delta^2)}{m}} \leq \frac{\ep}{4}\label{eq:chernoff-mu-g-trunc}.
  \end{align}
  by choice of $\epsilon$. %

Next, we have to bound the distance between $\BP_h(\cdot | x,  a)$ and $\ul \BP_h(\cdot |  x,  a)$, for $(x, a) \sim \nu_h$. To do so, define $\Delta_h: \MX\times\MA \to [0,1]$ by
  \begin{align}
\Delta_h(x,a) := \BP_h (\MX\setminus \ul \MX | x,a).\nonumber
  \end{align}
  Then for any $(x,a)\in\MX\times\MA$ and $g \in \MG$, we have that
  \begin{align}
\left| \E_{x' \sim \BP_h(x,a)} [g(x')] - \E_{x' \sim \ul \BP_h(x,a)}[g(x')]\right| = \left|\sum_{x'\in\MX} \One{x' \in \MX\setminus\ul\MX} \cdot \BP_h(x'|x,a) \cdot g(x')\right| \leq \Delta_h(x,a)\nonumber.
  \end{align}
  Thus, for any $(x,a)$, in the event that \cref{eq:chernoff-mu-g-trunc} holds for the given $(x,a)$, it follows that
  \begin{align}
    & \left| \E_{x' \sim \BP_h(x,a)}[g(x')] - \frac 1m \sum_{j=1}^m \lng \phi_h(x,a), \psi_{h+1}^j \rng \cdot g(\tilde x_{h+1}^j) \right|\nonumber\\
    &\leq \Delta_h(x,a) + \frac{\ep}{2} + \frac{\ep}{4} < \Delta_h(x,a) + \ep\label{eq:bound-by-deltah}.
  \end{align}
  Moreover, using that $\beta_{h+1}(x') = \sum_{(x,a) \in \MX \times \MA} \nu_h(x,a) \cdot \BP_h(x' | x,a)$, 
  \begin{align}
    \E_{( x,  a) \sim \nu_h}[\Delta_h( x,  a)] =&  \sum_{( x, a) \in \MX \times \MA} \nu_h( x, a) \sum_{x' \in \MX} \BP_h(x' |  x, a) \cdot \One{ \frac{ \| \mu_{h+1}(x')\|_1}{\beta_{h+1}(x')} > R } \nonumber\\
    =& \sum_{x' \in \MX} \beta_{h+1}(x') \cdot \One{ \frac{ \| \mu_{h+1}(x')\|_1}{\beta_{h+1}(x')} > R } \leq \frac{\Cnrm}{R}\label{eq:exn-deltah},
  \end{align}
  where the final inequality is by Markov's inequality and the fact that $\E_{x' \sim \beta_{h+1}}[\frac{\| \mu_{h+1}(x') \|_1}{\beta_{h+1}(x')}] \leq \Cnrm$. 
  
  For each $(x,a) \in \MX \times \MA$, let  $\BI_{x,a} \in \{0,1\}$ be equal to 1 if \cref{eq:chernoff-mu-g-trunc} does not hold, and 0 otherwise. For any fixed instantiation of $\MD_h$, we may now compute
  \begin{align}
    &  \E_{( x,  a) \sim \nu_h} \left[ \sup_{g \in \MG} \left| \E_{x' \sim \BP_h(\cdot |  x,  a)} [g(x')]-\frac 1m \sum_{i=1}^m \lng \phi_h( x,  a), \psi_{h+1}^i \rng \cdot g(\til x_{h+1}^i) \right| \right]\nonumber\\
    &\leq  \E_{( x,  a) \sim \nu_h} \left[ \Delta_h( x,  a) + \ep + (R+1)\BI_{ x,  a} \right]\nonumber\\
    &\leq  \frac{\Cnrm}{R} + \ep + (R+1)\E_{( x,  a) \sim \nu_h}[\BI_{ x,  a}]\nonumber,
  \end{align}
  where the first inequality uses \cref{eq:bound-by-deltah} for $(x,a)$ such that $\BI_{x,a}=0$, and otherwise uses \cref{eq:g-as-bound}, the bound $\| g \|_\infty\leq 1$, and the triangle inequality; and the second inequality uses \cref{eq:exn-deltah}.
  For any fixed $(x,a)$, we have previously shown that $\E_{\MD_h}[\BI_{x,a}] \leq \delta^2/4$, meaning that $\E_{\MD_h} \E_{( x,  a) \sim \nu_h}[\BI_{ x,  a}] \leq \delta^2/4$. In particular, with probability at least $1-\delta/2$ over the draw of $\MD_h$, we have $\E_{( x,  a) \sim \nu_h}[\BI_{ x, a}] \leq \delta/2$. Finally, by choosing $R = \Cnrm^{1/2} m^{1/4} \log^{-1/4}(4|\MG|/\delta^2)$, we have 
\begin{align}
\frac{\Cnrm}{R} + \ep + (R+1) \delta/2 
&\leq \frac{\Cnrm}{R} + 4R \sqrt{\frac{2\log(4|\MG|/\delta^2)}{m}} + R\delta \nonumber \\
&\leq \frac{\Cnrm}{R} + 5R \sqrt{\frac{2\log(4|\MG|/\delta^2)}{m}} \nonumber \\
&\leq \frac{9\Cnrm^{1/2} \log^{1/4}(4|\MG|/\delta^2)}{m^{1/4}}\nonumber
\end{align}
where the first two inequalities use that $1 \leq R$ and $\delta \leq \sqrt{\frac{2\log(4|\MG|/\delta^2)}{m}}$ respectively. Thus, on an event that occurs with probability at least $1-\delta/2$, we have established that the second claimed statement of the lemma holds. 

\paragraph{Proof of \cref{it:muhat-norm-trunc}.}  For each $j \in [m]$, we compute that in expectation over the draw of $\tilde x_{h+1}^j \sim \beta_{h+1}$,
  \begin{align}
\E \left[ \| \psi_{h+1}^j \|_1 \right] = & (1-\ep/2) \cdot \sum_{x' \in \MX}\beta_{h+1}(x')  \cdot \frac{\norm{\mu_{h+1}(x')}_1}{\beta_{h+1}(x')} \cdot \One{x' \in \bar \MX} \leq (1-\ep/2) \cdot \Cnrm\nonumber.
  \end{align}
  Additionally, $\| \psi_{h+1}^j \|_1 \leq R$ for all $j \in [m]$, so Hoeffding's inequality gives that with probability at least $1-\delta/2$ over the draw of $\MD_{h+1}$,
  \begin{align}
\frac{1}{m} \sum_{j=1}^m \| \psi_{h+1}^j \|_1 \leq (1-\ep/2) \cdot \Cnrm + R \cdot \sqrt{\frac{2\log 2/\delta}{m}} \leq \Cnrm\nonumber,
  \end{align}
  by choice of $\epsilon$ and the fact that $\Cnrm \geq 1$.

  By a union bound, all 3 claimed statements of the lemma hold on an event that occurs with probability at least $1-\delta$. 
\end{proof}

Using \cref{lem:randomize-trunc}, \cref{lem:feasibility-trunc} below establishes that the program (\ref{eq:program-trunc}) in $\ESCt$ (\cref{alg:esc-trunc}) is feasible with high probability over the draws of $\MC_h, \MD_h$. 
\begin{lemma}\label{lem:feasibility-trunc}
There is a universal constant $C_{\ref{lem:feasibility-trunc}}$ so that the following holds. Let $\epcvx, \delta,\alpha > 0$ and $n,m \in \NN$. Fix $h \in [H-1]$ and suppose that $\Psi_h, \Gamma \subset \Pi$ are given. Suppose that the following bounds hold:  \[n \geq C_{\ref{lem:feasibility-trunc}}\epcvx^{-4}\Cnrm^4 \log(d/\delta),\]  \[m \geq C_{\ref{lem:feasibility-trunc}}\epcvx^{-8}\Cnrm^6\log(d/\delta).\] 
\[\delta \leq \sqrt{\frac{2\log(4d/\delta^2)}{m}}.\]
Then with probability at least $1-\delta$, the algorithm $\ESCt(h, \Psi_h, \Gamma, \epcvx,\Cnrm,n,m)$ (\cref{alg:esc-trunc}) produces a feasible solution $(\hat\mu_{h+1}^j)_{j=1}^m$ to Program~\cref{eq:program-trunc}.
\end{lemma}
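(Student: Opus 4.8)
The plan is to mirror the feasibility proof in the reachable case (\cref{lem:feasibility}), but to substitute the new importance-sampling lemma \cref{lem:randomize-trunc} for \cref{lem:randomize-no-trunc}, and to keep track of the fact that the resulting accuracy of the proposed feasible point now degrades like $m^{-1/4}$ rather than $m^{-1/2}$. Concretely, fix any realization of the samples $(x_h^i, a_h^i)_{i=1}^n$ drawn by \DrawDataTrunc, and work conditionally, with the randomness only in the conditional draws $x_{h+1}^i \sim \BP_h(\cdot \mid x_h^i, a_h^i)$ and in the dataset $\MD_h$. Let $\nu_h \in \Delta(\MX\times\MA)$ be the distribution of each $(x_h^i, a_h^i)$ and $\beta_{h+1}$ the induced distribution of $x_{h+1}^i$ (equivalently of $\tilde x_{h+1}^j$); these are exactly the distributions described in \cref{lem:nu-beta-indhyp}. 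Apply \cref{lem:randomize-trunc} with the function class $\MG = \{ x' \mapsto \phiavg_{h+1}(x')_\ell : \ell \in [d] \}$ (so $|\MG| = d$) and failure probability $\delta/2$; the hypothesis $\delta \le \sqrt{2\log(4d/\delta^2)/m}$ is exactly what is assumed. This produces vectors $\psi_{h+1}^1,\dots,\psi_{h+1}^m \in \RR^d\times\{0\}$ such that, with probability at least $1-\delta/2$: (i) $\langle \phi_h(x,a), \psi_{h+1}^i\rangle \ge 0$ for \emph{all} $(x,a)\in\MX\times\MA$ and all $i$, which immediately gives constraint \cref{eq:hatmu-nneg-constraint-trunc}; (ii) $\frac1m\sum_j \norm{\psi_{h+1}^j}_1 \le \Cnrm$, which gives constraint \cref{eq:hatmu-cnrm-constraint-trunc}; and (iii) the $\nu_h$-averaged sup-over-$\MG$ approximation bound, with right-hand side $9\Cnrm^{1/2}\log^{1/4}(4d/\delta^2) m^{-1/4}$.

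The remaining work is to turn (iii), together with a Lasso guarantee for $\hat\bw_\ell$, into constraint \cref{eq:what-wprime-constraint-trunc}. Define $\bw_\ell^\st := \sum_{x'\in\MX} \mu_{h+1}(x')\cdot \phiavg_{h+1}(x')_\ell \in \RR^d\times\{0\}$, so that $\E_{x'\sim\BP_h(x,a)}[\phiavg_{h+1}(x')_\ell] = \langle \bw_\ell^\st, \phi_h(x,a)\rangle$ and $\norm{\bw_\ell^\st}_1 \le \Cnrm$. For each $\ell$, applying \cref{lem:fixed-design-prediction-error} (fixed-design Lasso, exactly as in \cref{lem:feasibility}) with covariates $\phi_h(x_h^i,a_h^i)$, ground truth $\bw_\ell^\st$, and responses $\phiavg_{h+1}(x_{h+1}^i)_\ell$, gives that with probability $\ge 1-\delta/(4d)$ over the conditional draws,
\[
\frac1n\sum_{i=1}^n \langle \phi_h(x_h^i,a_h^i), \hat\bw_\ell - \bw_\ell^\st\rangle^2 \le C\Cnrm\sqrt{\tfrac{\log(4d^2/\delta)}{n}},
\]
and a union bound over $\ell\in[d]$ makes this simultaneous with probability $\ge 1-\delta/4$. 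On the other hand, from (iii) I need an empirical (over the $n$ points) in-sample bound on $\frac1n\sum_i \langle \phi_h(x_h^i,a_h^i), \bw_\ell^\st - \sum_j \psi_{h+1}^j \phiavg_{h+1}(\tilde x_{h+1}^j)_\ell\rangle^2$. Since (iii) only bounds a $\nu_h$-expectation of the absolute error (not an in-sample average over the specific $x_h^i$), I would obtain the in-sample statement in one of two equivalent ways: either note that the $n$ points $(x_h^i,a_h^i)$ are themselves i.i.d.\ from $\nu_h$ and use Markov's inequality on the $\nu_h$-expectation bound in (iii) to control their average (losing a $1/\delta$ factor, absorbed into the $\poly$ requirements on $n$), or — cleaner — re-run the concentration step of \cref{lem:randomize-trunc} directly at the $n$ fixed points, i.e.\ apply the per-$(x,a)$ Hoeffding estimate \cref{eq:chernoff-mu-g-trunc} at each $(x_h^i,a_h^i)$ with a union bound over $i\in[n]$ and $\ell\in[d]$ plus a union bound to control $\frac1n\sum_i \Delta_h(x_h^i,a_h^i)$ via \cref{eq:exn-deltah}. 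Either route yields, on an event of probability $\ge 1-\delta/4$, that $\frac1n\sum_i\langle\phi_h(x_h^i,a_h^i), \bw_\ell^\st - \sum_j\psi_{h+1}^j\phiavg_{h+1}(\tilde x_{h+1}^j)_\ell\rangle^2 \lesssim \Cnrm \log^{1/2}(d/\delta)/\sqrt m$ for all $\ell$. Combining this with the Lasso bound via $(a+b)^2 \le 2a^2+2b^2$ gives, for all $\ell$,
\[
\frac1n\sum_{i=1}^n \left\langle \phi_h(x_h^i,a_h^i),\ \hat\bw_\ell - \sum_{j=1}^m \psi_{h+1}^j\, \phiavg_{h+1}(\tilde x_{h+1}^j)_\ell\right\rangle^2 \ \le\ O\!\left(\Cnrm\sqrt{\tfrac{\log(d/\delta)}{n}}\right) + O\!\left(\Cnrm\sqrt{\tfrac{\log(d/\delta)}{m}}\right),
\]
and by the stated lower bounds $n \ge C_{\ref{lem:feasibility-trunc}}\epcvx^{-4}\Cnrm^4\log(d/\delta)$ and $m \ge C_{\ref{lem:feasibility-trunc}}\epcvx^{-8}\Cnrm^6\log(d/\delta)$ (with $C_{\ref{lem:feasibility-trunc}}$ chosen large relative to the absolute constants above) this is at most $\epcvx^2$, so $(\psi_{h+1}^j)_{j=1}^m$ satisfies \cref{eq:what-wprime-constraint-trunc}. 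A final union bound over the $O(1)$ events above (each of probability $\ge 1-\delta/4$ or $1-\delta/2$) shows that with probability $\ge 1-\delta$ the tuple $(\psi_{h+1}^j)_j$ is feasible for Program~\cref{eq:program-trunc}, hence \ESCt returns some feasible $(\hat\mu_{h+1}^j)_j \ne \perp$.

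The main obstacle I anticipate is purely bookkeeping rather than conceptual: \cref{lem:randomize-trunc}\,\cref{it:gapprox-trunc} is stated as a bound on $\E_{(x,a)\sim\nu_h}[\sup_{g\in\MG}|\cdots|]$ — an \emph{out-of-sample} quantity with a supremum inside — whereas the convex-program constraint \cref{eq:what-wprime-constraint-trunc} is an \emph{in-sample} squared-error average over the specific $n$ datapoints. Bridging these requires either (a) re-deriving the in-sample version by going back into the proof of \cref{lem:randomize-trunc} and applying its per-point estimates at the $x_h^i$'s, which forces me to also union-bound over the $n$ points and re-control the truncation mass $\frac1n\sum_i\Delta_h(x_h^i,a_h^i)$, or (b) invoking a Markov-type transfer from the $\nu_h$-expectation to the empirical average, which costs a $\poly(1/\delta)$ factor and must be checked against the hypothesized polynomial lower bounds on $n,m$. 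In either case the exponents ($m^{-1/4}$ from \cref{it:gapprox-trunc}, hence the $\epcvx^{-8}$ requirement on $m$, versus $m^{-1/2}$ and $\epcvx^{-2}$ in the reachable \cref{lem:feasibility}) must be propagated carefully; this is exactly the price of not having reachability, as flagged in the remark preceding \cref{lem:randomize-trunc}. Everything else — the Lasso step, the definition of $\bw_\ell^\st$, and the constraint-by-constraint verification — is a direct transcription of the reachable-case argument.
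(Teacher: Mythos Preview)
Your overall plan is right and matches the paper's: invoke \cref{lem:randomize-trunc} with $\MG=\{x'\mapsto\phiavg_{h+1}(x')_\ell\}$, use its items (i)--(iii) to verify the three constraints, and combine with the fixed-design Lasso bound for $\hat\bw_\ell$. You also correctly identify the one non-trivial step: passing from the $\nu_h$-expectation in \cref{it:gapprox-trunc} to an empirical average over the $n$ sample points.

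The paper handles this step more cleanly than either of your routes. Rather than conditioning on $(x_h^i,a_h^i)_{i=1}^n$ first (as you do, following the reachable-case proof), it conditions on $\MD_h$ first. Since $\MC_h$ and $\MD_h$ are independent, once $\MD_h$ (and hence the $\psi_{h+1}^j$) is fixed on the good event of \cref{lem:randomize-trunc}, the random variable $(x,a)\mapsto \max_{\ell}\bigl|\langle\phi_h(x,a),\bw_\ell^\st-\sum_j\hat\mu_{h+1}^j\,\phiavg_{h+1}(\tilde x_{h+1}^j)_\ell\rangle\bigr|$ is a fixed bounded function (bounded by $2\Cnrm$), and a single application of Hoeffding over the $n$ i.i.d.\ draws from $\nu_h$ transfers the expectation bound to the empirical average with additive error $O(\Cnrm\sqrt{\log(1/\delta)/n})$. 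One then multiplies through by the $2\Cnrm$ bound to pass from first to second moment. This avoids re-entering the proof of \cref{lem:randomize-trunc} entirely.

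Two concrete issues with your writeup: First, your stated intermediate rate ``$\lesssim \Cnrm\log^{1/2}(d/\delta)/\sqrt m$'' is wrong; \cref{it:gapprox-trunc} gives a first-moment bound $\asymp\Cnrm^{1/2}m^{-1/4}$, so after multiplying by $2\Cnrm$ the squared-error term is $\asymp\Cnrm^{3/2}m^{-1/4}$ (this is exactly what forces $m\gtrsim\Cnrm^6\epcvx^{-8}$, as you note at the end---your final accounting is right even though the intermediate display is not). Second, your route (b) via Markov actually fails here: the hypothesis $\delta\le\sqrt{2\log(4d/\delta^2)/m}$ forces $1/\delta\gtrsim\sqrt m$, so a $1/\delta$ loss turns the $m^{-1/4}$ bound into something that grows with $m$. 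Route (a) does work, with the correct $m^{-1/4}$ rate, but is strictly more laborious than the paper's direct Hoeffding step. Also, the feasible point should be $\hat\mu_{h+1}^j:=\psi_{h+1}^j/m$, not $\psi_{h+1}^j$ itself, so that $\sum_j\|\hat\mu_{h+1}^j\|_1\le\Cnrm$.
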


\begin{proof}
  We need to show that Program~\cref{eq:program-trunc} is feasible with high probability over the draws of $\MC_h = \{ (x_h^i, a_h^i, x_{h+1}^i) \}_{i=1}^n$ and $\MD_h = \{ \tilde x_{h+1}^j \}_{j=1}^m$. 

First, we invoke \cref{lem:randomize-trunc} with failure probability $\delta/3$, function class $\MG = \{ x' \mapsto \phiavg_{h+1}(x')_\ell \ : \ \ell \in [d]\}$, and the dataset $\MD_h$ constructed in \cref{alg:esc-trunc}. By construction, the elements $\tilde x_{h+1}^j$ of $\MD_h$ are i.i.d. draws from the following process: sample a policy $\pi' \sim \frac 12 \cdot (\unif(\Psi_h)\circ_h \unif(\MA) + \unif(\Gamma)\left.\circ_h\unif(\MA))\right.$, then sample a trajectory $(\tilde x_1,\tilde a_1, \dots,\tilde x_h, \tilde a_h) \sim \pi'$, and finally a state $\tilde x_{h+1} \sim \BP_h(\cdot|\tilde x_h,\tilde a_h)$. We henceforth refer to the resulting distribution of $(\tilde x_h, \tilde a_h)$ as $\nu_h$ and to the resulting distribution of $\tilde x_{h+1}$ as $\beta_{h+1}$. Thus, we may apply \cref{lem:randomize-trunc} with these choices of of $\nu_h$ and $\beta_{h+1}$. In particular, the dataset $\MD_h = (\tilde x_{h+1}^j)_{j=1}^m$ constructed in \cref{alg:esc-trunc} is drawn exactly as required by \cref{lem:randomize-trunc}. 

Now, \cref{lem:randomize-trunc} gives that, under some event $\ME_1$ that occurs with probability at least $1-\delta/3$ over the draw of $\MD_h$ (regardless of the choice of $\MC_h$), there are vectors  $\psi_{h+1}^1,\dots,\psi_{h+1}^m \in \RR^d\times\{0\}$ so that %
\begin{align}
  \lng \phi_h(x, a), \psi_{h+1}^j \rng  \geq  0 \qquad &   \forall x \in \MX, a \in \MA\nonumber\\
  \frac{1}{m}\sum_{j=1}^m \| \psi_{h+1}^j \|_1  \leq \Cnrm &   \nonumber\\
  \E_{(\bar x, \bar a) \sim \nu_h} \left[ \max_{\ell \in [d]} \left| \E_{x' \sim \BP_h(\bar x, \bar a)}[ \phiavg_{h+1}(x')_\ell] - \frac{1}{m}\sum_{j=1}^m \lng \phi_h(\bar x, \bar a), \psi_{h+1}^j \rng \cdot  \phiavg_{h+1}(\tilde x_{h+1}^j)_\ell \right|\right] & \nonumber\\
   \leq \frac{9\Cnrm^{1/2} \log^{1/4}(36d/\delta^2)}{m^{1/4}}\qquad &  \forall i \in [n], \ell \in [d]\label{eq:p-hatmu-constraint-trunc}.
\end{align}

Define $\hat\mu_{h+1}^j := \psi_{h+1}^j / m$ for each $j \in [m]$. We claim that this choice of $(\hat\mu_{h+1}^j)_{j=1}^m$ satisfies Program~\cref{eq:program-trunc} with high probability. Indeed, under the event $\ME_1$, it is immediate that \cref{eq:hatmu-cnrm-constraint-trunc} and \cref{eq:hatmu-nneg-constraint-trunc} are satisfied. It remains to show that \cref{eq:what-wprime-constraint-trunc} is satisfied with high probability.
  
  Fix $\ell \in [d]$, and define $w_\ell^\st := \sum_{x \in \MX} \mu_{h+1}(x) \cdot \phiavg_{h+1}(x)_\ell$, so that $\E_{x' \sim \BP_h(x,a)}[\phiavg_{h+1}(x')_\ell] = \lng w_\ell^\st, \phi_h(x,a) \rng$ for all $x \in \MX$, $a \in \MA$. Note that $\| w_\ell^\st \|_1 \leq \Cnrm$. Thus, we can apply \cref{lem:fixed-design-prediction-error} with covariates $X_i := \phi_h(x_h^i, a_h^i)$ (for $i \in [n]$), ground truth $w_\ell^\st$, and responses $y_i := \phiavg_{h+1}(x_{h+1}^i)_\ell$. By definition of $\hat w_\ell$ in \cref{eq:wj-guarantee-trunc}, the lemma gives that for some constant $C > 0$, for any fixed $(x_h^i, a_h^i)_{i=1}^n$, with probability at least $1-\delta/(3d)$ over the conditional draws $x_{h+1}^i \sim \BP_h(\cdot | x_h^i, a_h^i)$, %
  \begin{align}
\frac 1n \sum_{i=1}^n  \left\lng \phi_h(x_h^i,a_h^i), \hat w_\ell - w_\ell^\st \right\rng^2 \leq & C \cdot \Cnrm \cdot \sqrt{\frac{\log(3d^2/\delta)}{n}}\label{eq:wprime-wstar}. 
  \end{align}
  By the union bound, this inequality holds for all $\ell \in [d]$ with probability at least $1-\delta/3$; we let this event be denoted by $\ME_2$. %
  
  Next, fix any choice of $\MD_h$ (which determines $(\hat\mu_{h+1}^j)_{j=1}^m$) for which $\ME_1$ holds. Since the draw of $(x_h^i, a_h^i)_{i=1}^n$ from $\nu_h$ is independent of the choice of $\MD_h$, Hoeffding's inequality gives that with probability at least $1-\delta/3$ over the choice of $\MC_h$,
  \begin{align}
    & \frac 1n \sum_{i=1}^n \max_{\ell \in [d]} \left| \left\lng \phi_h(x_h^i, a_h^i), w_\ell^\st -  \sum_{j=1}^m \phiavg_{h+1}(\tilde x_{h+1}^j)_\ell \cdot \hat \mu_{h+1}^j \right\rng \right| \nonumber\\
    &\leq \E_{(\bar x, \bar a) \sim \nu_h} \left[ \max_{\ell \in [d]} \left| \left\lng \phi_h(\bar x, \bar a), w_\ell^\st -  \sum_{j=1}^m \phiavg_{h+1}(\tilde x_{h+1}^j)_\ell \cdot \hat \mu_{h+1}^j \right\rng \right| \right] + 4\Cnrm \sqrt{\frac{\log 3/\delta}{n}}\nonumber\\
    &\leq   \frac{9\Cnrm^{1/2} \log(36d/\delta^2)^{1/4}}{m^{1/4}} +  4\Cnrm \sqrt{\frac{\log 3/\delta}{n}}\label{eq:hoeffding-nu-n},
  \end{align}
  where the final inequality uses that \cref{eq:p-hatmu-constraint-trunc} holds under $\ME_1$. By integrating over $\MD_h$, there is some event $\ME_3$ that occurs with probability at least $1-\delta/3$, so that under the event $\ME_1 \cap \ME_3$, the above bound holds. %

  By Jensen's inequality (to take the maximum over $\ell \in [d]$ outside the average over $i \in [n]$) and the fact that $\left| \lng \phi_h(x_h^i, a_h^i), w_\ell^\st - \sum_{j=1}^m \phiavg_{h+1}(\tilde x_{h+1}^j)_\ell \cdot \hat \mu_{h+1}^j \rng \right| \leq 2\Cnrm$ for all $i \in [n]$, we have from \cref{eq:hoeffding-nu-n} that, under the event $\ME_1 \cap \ME_3$,
  \begin{align}
\max_{\ell \in [d]} \frac 1n \sum_{i=1}^n \left( \left\lng \phi_h(x_h^i, a_h^i), w_\ell^\st - \sum_{j=1}^m \phiavg_{h+1}(\tilde x_{h+1}^j)_\ell \cdot \hat \mu_{h+1}^j \right\rng \right)^2  \leq & \frac{18\Cnrm^{3/2} \log^{1/4}(36d/\delta^2)}{m^{1/4}} + 8\Cnrm^2 \sqrt{\frac{\log 3/\delta}{n}}\label{eq:wstar-uhat-trunc}.
  \end{align}

Combining \cref{eq:wprime-wstar} and \cref{eq:wstar-uhat-trunc} via the bound $(a+b)^2 \leq 2(a^2+b^2)$ gives that for a sufficiently large constant $C > 0$, under the event $\ME_1 \cap \ME_2 \cap \ME_3$ (which occurs with probability at least $1-\delta$), for all $\ell \in [d]$,
\begin{align*}
\frac 1n \sum_{i=1}^n \left\lng \phi_h(x_h^i, a_h^i), \hat w_\ell - \frac{1}{m}\sum_{j=1}^m \psi_{h+1}^j \phiavg_{h+1}(\tilde x_{h+1}^j)_\ell \right\rng^2 &\leq C \cdot \left(\Cnrm^2 \sqrt{\frac{\log(3d^2/\delta)}{n}} + \frac{\Cnrm^{3/2} \log^{1/4}(36d/\delta^2)}{m^{1/4}}\right) \\ 
&\leq \epcvx^2
\end{align*}
where the final inequality is by choice of $m,n$ and holds as long as $C_{\ref{lem:feasibility-trunc}}$ is chosen sufficiently large. In this event, \cref{eq:what-wprime-constraint-trunc} is satisfied.
\end{proof}

\subsubsection{Approximate non-negativity}
\label{sec:approx-nneg-trunc}

Using \cref{lemma:policy-disc} together with a union bound over the resulting discretized set of policies, we next show that the desired non-negativity property in \cref{it:approx-nonneg} of \cref{def:trunc-core} is satisfied by the output of $\ESCt$. 
\begin{lemma}\label{lemma:hat-mu-apx-nonnegative-trunc}
There is a constant $C_{\ref{lemma:hat-mu-apx-nonnegative-trunc}}$ so that the following holds. Let $\Gamma \subset \Pi$, $n \in \NN$, $h \in [H]$, and $\epneg,\delta,\zeta \in (0,1)$ be given. Let $(x_h^i)_{i=1}^n$ be i.i.d. samples from a distribution $\nu_h \in \Delta(\MX)$ satisfying
\begin{equation}
\nu_h(x) \geq \zeta \cdot \max_{\pi \in\Pi} d\sups{\Mbar(\Gamma), \pi}_h(x) \qquad \forall x \in \MX
\label{eq:nu-hyp-trunc}
\end{equation}
Suppose that
\[n \geq \frac{C_{\ref{lemma:hat-mu-apx-nonnegative-trunc}}A^2\Cnrm^2 H^3}{\epneg^4 \zeta}\log(8AH\Cnrm/\epneg)\log(4d/\delta).\] Then it holds with probability at least $1-\delta$ that for all $\pi \in \Pi$ and all $\theta \in \RR^d\times\{0\}$,
\begin{align} \EE\sups{\Mbar(\Gamma), \pi} \left[ \min\left\{0, \min_{a \in \MA} \langle \phi_h(x_h,a), \theta\rangle\right\}\right] \geq & \min\left\{ 0, \min_{i \in [n], a \in \MA} \langle \phi_h(x_h^i, a), \theta\rangle \right\}- \epneg\norm{\theta}_1.\nonumber\\
  \end{align}
\end{lemma}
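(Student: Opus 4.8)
The structure mirrors that of \cref{lemma:hat-mu-apx-nonnegative} in the reachable case, but with two changes: (i) the relevant MDP is the truncated MDP $\Mbar(\Gamma)$ rather than $M$, and (ii) the quantity being controlled is the expectation of $\min\{0, \min_{a \in \MA}\langle\phi_h(x_h,a),\theta\rangle\}$ rather than a single inner product, so we must establish concentration of a vector-valued (indeed, all of $\phi_h(x_h,\cdot)$) quantity. The plan is as follows. First I would apply \cref{lemma:policy-disc} with discretization parameter $\epdisc := \epneg/2$ to obtain a set $\Pidisc$ of polynomially many policies with the property that for any linear reward vector, there is a near-optimal policy in $\Pidisc$; crucially $|\Pidisc|$ is $\poly(d, \Cnrm, \dots)$ so that $\log|\Pidisc|$ is absorbed into the $\log(d/\delta)$ and $\log(AH\Cnrm/\epneg)$ factors in the stated sample complexity bound on $n$.

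Next, fix $\pi \in \Pidisc$. As in the reachable analysis, I would define a random rejection-sampled subset $S = S(\pi) \subseteq [n]$ by including index $i$ with probability $\zeta \cdot d_h\sups{\Mbar(\Gamma),\pi}(x_h^i) / \nu_h(x_h^i)$; by the coverage hypothesis \cref{eq:nu-hyp-trunc} this is a valid probability, and in expectation $|S| = \zeta n$, so by a Chernoff bound $|S| \geq \zeta n/2$ with probability $\geq 1 - \delta/(4|\Pidisc|)$. Conditioned on $S$, the states $\{x_h^i : i \in S\}$ are i.i.d. from $d_h\sups{\Mbar(\Gamma),\pi}(\cdot)$. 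Here is the one genuinely new point: I need concentration of the empirical average $\frac{1}{|S|}\sum_{i \in S}\phi_h(x_h^i, a)$ toward $\EE\sups{\Mbar(\Gamma),\pi}[\phi_h(x_h,a)]$ \emph{simultaneously over all $a \in \MA$ and all coordinates $\ell \in [d+1]$} — i.e. I want $\sup_{a \in \MA}\norm{\EE\sups{\Mbar(\Gamma),\pi}[\phi_h(x_h,a)] - \frac{1}{|S|}\sum_{i\in S}\phi_h(x_h^i,a)}_\infty \leq \epneg/2$. This follows from Hoeffding's inequality applied to each of the $A(d+1)$ bounded (in $[-1,1]$) coordinate/action pairs, followed by a union bound; the failure probability is $\leq 2A(d+1)\exp(-\epneg^2|S|/8) \leq \delta/(4|\Pidisc|)$ given the stated lower bound on $n$ (this is why the bound on $n$ carries an extra factor of $A^2$ compared to \cref{lemma:hat-mu-apx-nonnegative}). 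In the good event, for any $\theta \in \RR^d \times \{0\}$ and any $a$, $|\langle\EE\sups{\Mbar(\Gamma),\pi}[\phi_h(x_h,a)] - \frac{1}{|S|}\sum_{i\in S}\phi_h(x_h^i,a),\theta\rangle| \leq (\epneg/2)\norm{\theta}_1$, and this transfers to the $\min$ over $a$ and the outer $\min$ with $0$ by the elementary fact that $|\min\{0,\min_a u_a\} - \min\{0,\min_a v_a\}| \leq \max_a|u_a - v_a|$. Hence
\[ \EE\sups{\Mbar(\Gamma),\pi}\left[\min\left\{0,\min_{a\in\MA}\langle\phi_h(x_h,a),\theta\rangle\right\}\right] \geq \min\left\{0,\min_{i\in S, a\in\MA}\langle\phi_h(x_h^i,a),\theta\rangle\right\} - (\epneg/2)\norm{\theta}_1, \]
and since $S \subseteq [n]$ the first term on the right is $\geq \min\{0,\min_{i\in[n],a\in\MA}\langle\phi_h(x_h^i,a),\theta\rangle\}$ — wait, here one must be slightly careful about the direction: replacing the $\min$ over $S$ by the $\min$ over $[n]$ only decreases the quantity, which is the wrong direction. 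Actually the correct move is: $\min\{0,\min_{i\in S,a}\langle\cdot\rangle\} \geq \min\{0,\min_{i\in[n],a}\langle\cdot\rangle\}$ fails; instead $S \subseteq [n]$ gives $\min_{i\in S} \geq \min_{i\in[n]}$ only without the outer $\min\{0,\cdot\}$, but with the outer $\min\{0,\cdot\}$ enlarging the index set can only make it smaller, so indeed $\min\{0,\min_{i\in[n],a}\langle\cdot\rangle\} \leq \min\{0,\min_{i\in S,a}\langle\cdot\rangle\}$ — that is the inequality we want. Good.

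Finally, remove the conditioning on $|S|$ (costing another $\delta/(4|\Pidisc|)$), union bound over $\pi \in \Pidisc$ to get the bound with $\epneg/2$ for all $\pi \in \Pidisc$ simultaneously with probability $\geq 1-\delta$, and then invoke the guarantee of \cref{lemma:policy-disc} to upgrade from $\Pidisc$ to all $\pi \in \Pi$: for arbitrary $\pi$, there is $\pi' \in \Pidisc$ whose expected feature vector $\EE\sups{\Mbar(\Gamma),\pi'}[\phi_h(x_h,a)]$ (for each $a$, or for the relevant linear functional) is within $\epdisc = \epneg/2$ in the appropriate sense, absorbing the final $\epneg/2$ and yielding the stated bound with error $\epneg\norm{\theta}_1$. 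The main obstacle — and it is more bookkeeping than conceptual — is checking that \cref{lemma:policy-disc} as stated really does control the policy-dependent quantity $\EE\sups{\Mbar(\Gamma),\pi}[\min\{0,\min_a\langle\phi_h(x_h,a),\theta\rangle\}]$ for the truncated MDP $\Mbar(\Gamma)$ and not just a single inner product in $M$; if \cref{lemma:policy-disc} is phrased for $M$ only, one needs the observation that $\Mbar(\Gamma)$ is itself a linear MDP (of dimension $d+1$) over state space $\bar\MX$, so \cref{lemma:policy-disc} applies to it directly with $d$ replaced by $d+1$, and the functional $\theta \in \RR^d \times \{0\}$ extends to $\RR^{d+1}$ trivially — I would make this explicit at the start of the proof.
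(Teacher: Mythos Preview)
There is a genuine gap in your concentration step. You show, via coordinate-wise Hoeffding, that for each $a$
\[
\Big|\big\langle \E\sups{\Mbar(\Gamma),\pi}[\phi_h(x_h,a)] - \tfrac{1}{|S|}\textstyle\sum_{i\in S}\phi_h(x_h^i,a),\ \theta\big\rangle\Big|\leq \tfrac{\epneg}{2}\|\theta\|_1,
\]
and then invoke the 1-Lipschitzness of $u\mapsto\min\{0,\min_a u_a\}$. But that only compares $\min\{0,\min_a\langle\E[\phi_h(\cdot,a)],\theta\rangle\}$ (the $\min$ \emph{outside} the expectation) with $\min\{0,\min_a\langle\text{empirical average},\theta\rangle\}$. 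The lemma asks for a lower bound on
\[
\E\sups{\Mbar(\Gamma),\pi}\big[\min\{0,\min_a\langle\phi_h(x_h,a),\theta\rangle\}\big],
\]
i.e.\ the $\min$ \emph{inside} the expectation. By Jensen, $\E[\min\{\cdots\}]\leq\min\{0,\min_a\langle\E[\phi_h(\cdot,a)],\theta\rangle\}$, so your inequality bounds from below only an upper bound of the target quantity --- the wrong direction. Coordinate-wise concentration of the mean feature vector simply cannot control this nonlinear functional of the state distribution.

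The paper fixes this by working directly with the function class $\MF=\{x\mapsto\min\{0,\min_{a}\langle\phi_h(x,a),\theta\rangle\}:\|\theta\|_1\leq 1\}$ and invoking uniform convergence via Rademacher complexity (\cref{lem:min-feature-rc,lem:unif-conv}), which yields
\[
\E\sups{\Mbar(\Gamma),\pi}\big[\min\{0,\min_a\langle\phi_h(x_h,a),\theta\rangle\}\,\big|\,x_h\in\MX\big]\ \geq\ \tfrac{1}{|S|}\textstyle\sum_{i\in S}\min\{0,\min_a\langle\phi_h(x_h^i,a),\theta\rangle\}\ -\ \tfrac{\epneg}{2}\|\theta\|_1,
\]
uniformly in $\theta$; the right-hand side is then trivially $\geq \min\{0,\min_{i\in[n],a}\langle\phi_h(x_h^i,a),\theta\rangle\}$. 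This is precisely where the extra factor of $A$ in the sample complexity comes from (the Rademacher bound of \cref{lem:min-feature-rc} scales with $A$), not from a union bound over actions in Hoeffding.

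Two smaller points. First, $\E|S|$ is not $\zeta n$ but $\zeta n\cdot d_h\sups{\Mbar(\Gamma),\pi}(\MX)$, because $\nu_h$ is supported on $\MX$ while $\pi$ may visit $\term$ in $\Mbar(\Gamma)$; the paper uses \cref{cor:not-reach-bound} to get $\E|S|\geq 3\zeta n/4$, and then removes the conditioning on $\{x_h\in\MX\}$ using that the $\min\{0,\cdots\}$ term vanishes at $\term$ (since $\theta\in\RR^d\times\{0\}$). Second, the last step --- extending from $\Pidisc$ to all $\pi$ --- is indeed more than bookkeeping, for the same reason: \cref{lemma:policy-disc} controls linear functionals of $\E[\phi_h(x_h,a_h)]$, not $\E[\min\{0,\min_a\langle\phi_h(x_h,a),\theta\rangle\}]$, so one must argue (e.g.\ via the linear-MDP structure of $\Mbar(\Gamma)$ at step $h-1$) that the latter is itself such a linear functional.
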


\begin{proof} 
  Let $\Pidisc$ be the set of policies guaranteed by \cref{lemma:policy-disc} applied to the ($(d+1)$-dimensional linear) MDP $\Mbar(\Gamma)$ with error parameter $\epdisc := \epneg/2$. For any $\pi \in \Pi$ and $\theta \in \RR^d\times\{0\}$, because $\phi_h\sups{\Mbar(\Gamma)}(x,a)_{1:d} = \phi_h(x,a)_{1:d}$ for all $x\in\bar\MX$, $a \in \MA$, it holds that $\lng \phi_h(x,a), \theta \rng = \lng \phi_h\sups{\Mbar(\Gamma)}(x,a), \theta \rng$ for all $h \in [H], x \in \MX, a \in \MA$. 
Thus the guarantee of \cref{lemma:policy-disc} implies that there is some $\pidisc \in \Pidisc$ so that
  \begin{equation} \langle \E\sups{\Mbar(\Gamma),\pidisc}[\phi_h(x_h,a_h)],\theta\rangle \geq \langle \E\sups{\Mbar(\Gamma),\pi}[\phi_h(x_h,a_h)],\theta\rangle - \frac{\epneg}{2} \norm{\theta}_1.
  \label{eq:pol-disc-app}
  \end{equation}

  By the bound on $|\Pidisc|$ and choice of $n$, as long as $C_{\ref{lemma:hat-mu-apx-nonnegative-trunc}}$ is sufficiently large, we have
  \begin{align}
    \label{eq:n-nonneg-lb}
    n \geq 2304A^2 \epneg^{-2}\zeta^{-1}\log(16d|\Pidisc|/\delta).
  \end{align}
  Fix some $\pi \in \Pidisc$.
  Define a random set $S \subseteq [n]$ by including index $i \in [n]$ in $S$ with probability $\frac{\zeta d_h\sups{\Mbar(\Gamma), \pi}(x_h^i)}{\nu_h(x_h^i)}.$ By \cref{eq:nu-hyp-trunc}, this nonnegative fraction is at most $1$; thus the sampling procedure is well-defined. Moreover, in expectation over the randomness in both $(x_h^i)_{i=1}^n$ and $S$,
  \[ \EE|S| = \sum_{i=1}^n \EE_{x_h^i \sim \nu_h} \frac{\zeta
      d_h\sups{\Mbar(\Gamma), \pi}(x_h^i)}{\nu_h(x_h^i)} = \zeta n \cdot d\sups{\Mbar(\Gamma),\pi}_h(\MX) \geq \frac{3\zeta n}{4}\]
  where the last inequality is by \cref{cor:not-reach-bound}. By the Chernoff bound and the fact that
  $n \geq 64\zeta^{-1}\log(4|\Pidisc|/\delta)$, we have
  $\Pr[|S| \geq \zeta n/2] \geq 1-\delta/(4|\Pidisc|)$. Condition on
  $|S| = n'$ for some $n' \geq \zeta n/2$. Write
  $S = \{i_1 < \dots < i_{n'}\}$, and for notational simplicity, let
  $\tilde{x}^j$ denote $x^{i_j}_h$ for each $1 \leq j \leq n'$. %

  By
  construction of the sampling procedure, as $j$ ranges from $1$ to
  $|S|$, the random variables $\tilde{x}^j$ are independent and
  identically distributed on $\MX$ according to the density
  $x \mapsto \frac{d\sups{\Mbar(\Gamma), \pi}_h(x)}{d\sups{\Mbar(\Gamma),\pi}_h(\MX)}$. We now need to show that the empirical average $\frac{1}{n'}\sum_{j=1}^{n'}\min\left\{0,\min_{a'\in\MA}\langle\phi_h(\tilde x^j,a'),\theta\rangle\right\}$ concentrates around its expectation under this conditional density, uniformly over $\theta$. To do so we invoke generalization bounds via Rademacher complexity. Consider the class $\MF$ of functions mapping $\MX$ to $\BR$, defined by
  \begin{align}
\MF := \left\{ x \mapsto \min\left\{ \min_{a' \in \MA} \lng \phi_h(x, a'), \theta \rng, 0 \right\} \ : \ \theta \in \BR^d\times\{0\},\ \| \theta\|_1 \leq 1 \right\}\nonumber.
  \end{align}
  By \cref{lem:min-feature-rc} (as well as the guarantee on $n$ in \cref{eq:n-nonneg-lb}) we get that \[\MR_{n'}(\MF) \leq 2A \sqrt{\frac{\pi \log d}{n'}} \leq 6A \sqrt{\frac{ \log d}{\zeta n}} \leq \frac{\epneg}{8},\] and so \cref{lem:unif-conv} with $B=1$ (together with the guarantee on $n$ in \cref{eq:n-nonneg-lb}) gives that with probability at least $1-\delta/(4|\Pidisc|)$ over the draws $\tilde x^j \sim \frac{\One{\cdot \in \MX} \cdot d_h\sups{\Mbar(\Gamma), \pi}(\cdot)}{d_h\sups{\Mbar(\Gamma), \pi}(\MX)}$, for all $\theta \in \BR^d\times\{0\}$,
  \begin{align}
    &\E\sups{\Mbar(\Gamma), \pi} \left[ \min\left\{ 0, \min_{a' \in \MA} \lng \phi_h(x_h, a'), \theta \rng \right\}\middle| x_h \in \MX \right] \nonumber\\
    &\geq \frac{1}{n'} \sum_{j=1}^{n'} \min \left\{ 0, \min_{a' \in \MA} \lng \phi_h(\tilde x^j, a'), \theta \rng \right\} -2\| \theta\|_1  \MR_{n'}(\MF)- \frac{\epneg \| \theta \|_1 }{4}\nonumber\\
    &\geq \min\left\{ 0, \min_{i \in [n], a' \in \MA} \lng \phi_h(x_h^i, a'), \theta \rng \right\} - \frac{\epneg \| \theta \|_1}{2}\nonumber.
  \end{align}
Recalling our convention that $\phi_h(\term,a) = e_{d+1}$ for all $a\in\MA$ (\cref{section:extended-overview}), so that $\langle \phi_h(\term,a),\theta\rangle = 0$, we have
\begin{align*}
\E\sups{\Mbar(\Gamma), \pi} \left[ \min\left\{ 0, \min_{a' \in \MA} \lng \phi_h(x_h, a'), \theta \rng \right\} \right] 
&= d\sups{\Mbar(\Gamma),\pi}(\MX) \cdot \E\sups{\Mbar(\Gamma), \pi} \left[ \min\left\{ 0, \min_{a' \in \MA} \lng \phi_h(x_h, a'), \theta \rng \right\}\middle| x_h \in \MX \right] \\
&\geq \E\sups{\Mbar(\Gamma), \pi} \left[ \min\left\{ 0, \min_{a' \in \MA} \lng \phi_h(x_h, a'), \theta \rng \right\}\middle| x_h \in \MX \right] \\
&\geq \min\left\{ 0, \min_{i \in [n], a' \in \MA} \lng \phi_h(x_h^i, a'), \theta \rng \right\} - \frac{\epneg \| \theta \|_1}{2}\nonumber
\end{align*}
as well. Removing the conditioning on $|S|$, the above inequality holds with probability at least $1-\delta/(2|\Pidisc|)$ over the randomness in $(x_h^i)_{i=1}^n$. By a union bound over $\pi \in \Pidisc$, with probability at least $1-\delta$, it holds for all $\pi \in \Pidisc$ and $\theta \in \RR^d \times \{0\}$ that
\[\langle \EE\sups{\Mbar(\Gamma), \pi}[\phi_h(x_h,a_h)],\theta\rangle \geq \min \left\{0, \min_{i \in [n],a \in \MA} \langle \phi_h(x_h^i,a),\theta\rangle\right\} - \epneg \norm{\theta}_1/2.\]
Suppose that this event holds. By the guarantee \cref{eq:pol-disc-app}, we conclude that for all $\pi \in \Pi$ and $\theta \in \RR^d \times \{0\}$,
\[ \langle \EE\sups{\Mbar(\Gamma),\pi}[\phi_h(x_h,a_h)],\theta\rangle \geq \min \left\{0, \min_{i \in [n],a \in \MA} \langle \phi_h(x_h^i,a),\theta\rangle\right\} - \epneg\norm{\theta}_1,\]
which completes the proof. %
\end{proof}

\subsubsection{All-policy approximation}
\label{sec:all-policy-trunc}
\begin{lemma}
  \label{lem:muhat-approx-trunc}
  There is a constant $C_{\ref{lem:muhat-approx-trunc}}$ so that the following holds. Fix $n,m \in \BN$, an odd $h \in [H]$, and $\epcvx, \alpha, \delta > 0$. Consider $\Gamma, \Psi_h \subset \Pi$ so that $\Psi_h$ is an $\alpha$-truncated cover at step $h$. With probability at least $1-\delta$, the output of $\ESCt(h, \Psi_h, \Gamma, \epcvx, \Cnrm, n, m)$ (\cref{alg:esc-trunc}) is either $\perp$ or a set of vectors $(\hat \mu_{h+1}^j)_{j=1}^m \subset \RR^d\times\{0\}$ satisfying the following: there are states $(\tilde{x}_{h+1}^j)_{j=1}^m \subseteq \MX$ so that for any policy $\pi \in \Pi$, the following inequalities hold:
  \begin{align}
&\max_{\ell \in [d]} \abs{ \sum_{x \in \MX} \lng \E\sups{\bar M(\emptyset), \pi}[\phi_h(x_h, a_h)], \mu_{h+1}(x) \rng \cdot \phiavg_{h+1}(x) - \sum_{j=1}^m \lng \E\sups{\Mbar(\emptyset), \pi}[\phi_h(x_h, a_h)], \hat \mu_{h+1}^j \rng \cdot \phiavg_{h+1}(\tilde x_{h+1}^j) }^2 \nonumber\\
    &\leq  \frac{2A}{\alpha} \cdot \left(2\epcvx^2 + C_{\ref{lem:muhat-approx-trunc}}\cdot \frac{\Cnrm^2 \sqrt{\log(2d/\delta)}}{\sqrt{n}}\right).\label{eq:mbar-empty-allpi}\\
    &\max_{\ell \in [d]} \abs{ \sum_{x \in \MX} \lng \E\sups{\bar M(\Gamma), \pi}[\phi_h(x_h, a_h)], \mu_{h+1}(x) \rng \cdot \phiavg_{h+1}(x) - \sum_{j=1}^m \lng \E\sups{\Mbar(\Gamma), \pi}[\phi_h(x_h, a_h)], \hat \mu_{h+1}^j \rng \cdot \phiavg_{h+1}(\tilde x_{h+1}^j) }^2 \nonumber\\
    &\leq  \frac{2A}{\tsmall} \cdot \left(2\epcvx^2 + C_{\ref{lem:muhat-approx-trunc}}\cdot \frac{\Cnrm^2 \sqrt{\log(2d/\delta)}}{\sqrt{n}}\right).\label{eq:mbar-gamma-allpi}
  \end{align}
\end{lemma}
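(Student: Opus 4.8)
\textbf{Proof plan for \cref{lem:muhat-approx-trunc}.}
The plan is to follow the same skeleton as the proof of \cref{lem:muhat-approx} in the reachable case, but now tracking the two MDPs $\Mbar(\emptyset)$ and $\Mbar(\Gamma)$ separately, and using the coverage guarantees of \cref{lem:nu-beta-indhyp} instead of those of \cref{lemma:beta-nu-coverage}. First I would set, for each $\ell \in [d]$, the target regressor $w_\ell^\st := \sum_{x \in \MX} \mu_{h+1}(x) \cdot \phiavg_{h+1}(x)_\ell$, so that $\E_{x' \sim \BP_h(x,a)}[\phiavg_{h+1}(x')_\ell] = \lng w_\ell^\st, \phi_h(x,a) \rng$ and $\norm{w_\ell^\st}_1 \leq \Cnrm$ (using $\sum_{x\in\MX}\norm{\mu_{h+1}(x)}_1 \leq \Cnrm$). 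Let $\nu_h \in \Delta(\MX\times\MA)$ be the distribution of the samples $(x_h^i,a_h^i)$ in $\MC_h$ (as in \cref{lem:nu-beta-indhyp}). By \cref{cor:random-design-prediction-error}, applied coordinatewise and union-bounded over $\ell \in [d]$, there is an event $\ME_1$ of probability $\geq 1-\delta/2$ on which $\E_{(x,a)\sim\nu_h}[\lng \hat\bw_\ell - w_\ell^\st, \phi_h(x,a)\rng^2] \leq C_{\ref{cor:random-design-prediction-error}} \Cnrm^2 \sqrt{\log(2d^2/\delta)/n}$ for all $\ell$. By \cref{lem:l1-generalization}, there is an event $\ME_2$ of probability $\geq 1-\delta/2$ on which the empirical and population second moments of $\lng\phi_h(x,a),\theta\rng$ agree up to $C_{\ref{lem:l1-generalization}}\norm{\theta}_1^2\sqrt{\log(d/\delta)/n}$, uniformly over $\theta$.

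On $\ME_1 \cap \ME_2$, if the output is $\perp$ the claim is immediate; otherwise it is a solution $(\hat\mu_{h+1}^j)_{j=1}^m$ to \cref{eq:program-trunc}. For each $\ell$, set $\theta := \hat\bw_\ell - \sum_{j=1}^m \hat\mu_{h+1}^j \cdot \phiavg_{h+1}(\tilde x_{h+1}^j)_\ell$, which satisfies $\norm{\theta}_1 \leq 2\Cnrm$ by \cref{eq:hatmu-cnrm-constraint-trunc} and the constraint $\norm{\hat\bw_\ell}_1 \leq \Cnrm$; combining constraint \cref{eq:what-wprime-constraint-trunc} with the generalization bound of $\ME_2$ gives $\E_{(x,a)\sim\nu_h}[\lng\phi_h(x,a),\theta\rng^2] \leq \epcvx^2 + O(\Cnrm^2\sqrt{\log(2d/\delta)/n})$, and then $(a+b)^2 \leq 2a^2+2b^2$ combined with the $\ME_1$ bound yields $\E_{(x,a)\sim\nu_h}[\lng\phi_h(x,a), w_\ell^\st - \sum_j \hat\mu_{h+1}^j\phiavg_{h+1}(\tilde x_{h+1}^j)_\ell\rng^2] \leq 2\epcvx^2 + C_{\ref{lem:muhat-approx-trunc}}\Cnrm^2\sqrt{\log(2d/\delta)/n}$ for a suitable constant. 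Now I would invoke \cref{lem:nu-beta-indhyp}: the first bound of \cref{eq:nu-beta-indhyp-trunc} gives $\nu_h(x,a) \geq \frac{\alpha}{2A}\max_{\pi}d_h\sups{\Mbar(\emptyset),\pi}(x,a)$, and the second bound of \cref{eq:nu-beta-mtil-trunc} gives $\nu_h(x,a)\geq \frac{\tsmall}{2A}\max_\pi d_h\sups{\Mbar(\Gamma),\pi}(x,a)$. (Note the factor $2A$ in the denominator here, versus $A$ in \cref{lemma:beta-nu-coverage}, which is why $2A/\alpha$ appears in the statement — actually one gets $4A/\alpha$, but since $\epcvx^2$ and the error term are free parameters absorbed into $C_{\ref{lem:muhat-approx-trunc}}$ and the calling algorithm, I would just carry the constant $2A/\alpha$ as stated, adjusting $C_{\ref{lem:muhat-approx-trunc}}$ or rescaling $\epcvx$ as needed; I should double-check this constant against how \SLMt invokes the lemma.) For any $\pi\in\Pi$, replacing $\nu_h$ by $d_h\sups{\Mbar(\emptyset),\pi}$ (resp. $d_h\sups{\Mbar(\Gamma),\pi}$) in the second-moment bound costs a factor $2A/\alpha$ (resp. $2A/\tsmall$), giving
\[\E\sups{\Mbar(\emptyset),\pi}\left[\lng\phi_h(x_h,a_h), w_\ell^\st - \textstyle\sum_j \hat\mu_{h+1}^j\phiavg_{h+1}(\tilde x_{h+1}^j)_\ell\rng^2\right] \leq \frac{2A}{\alpha}\left(2\epcvx^2 + C_{\ref{lem:muhat-approx-trunc}}\tfrac{\Cnrm^2\sqrt{\log(2d/\delta)}}{\sqrt n}\right)\]
and the analogous bound with $\Mbar(\Gamma)$ and $\tsmall$.

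Finally, applying Jensen's inequality to move the expectation over $\pi$ inside the square, and unfolding $\lng w_\ell^\st, \E\sups{\Mbar(\emptyset),\pi}[\phi_h(x_h,a_h)]\rng = \sum_{x\in\MX}\lng\E\sups{\Mbar(\emptyset),\pi}[\phi_h(x_h,a_h)],\mu_{h+1}(x)\rng\cdot\phiavg_{h+1}(x)_\ell$ (and likewise the sum over $j$), yields exactly \cref{eq:mbar-empty-allpi} for coordinate $\ell$; taking the max over $\ell\in[d]$ finishes that inequality, and the $\Mbar(\Gamma)$ computation is identical and gives \cref{eq:mbar-gamma-allpi}. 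One subtlety worth flagging: to unfold $\lng w_\ell^\st, \E\sups{\Mbar(\Gamma),\pi}[\phi_h]\rng$ as a sum over $x\in\MX$ of $\lng\E\sups{\Mbar(\Gamma),\pi}[\phi_h],\mu_{h+1}(x)\rng\phiavg_{h+1}(x)_\ell$, I should make sure $w_\ell^\st$ is defined using the \emph{true} $\mu_{h+1}(x)$ over all $x\in\MX$ (not the truncated $\mu_{h+1}\sups{\Mbar(\Gamma)}$), which is consistent with how the lemma statement writes the left-hand side — so no mismatch arises, but this is the spot where one must be careful about which MDP's $\mu$ appears. The main obstacle I anticipate is bookkeeping the precise constants so they match the $2A/\alpha$ and $2A/\tsmall$ prefactors in the statement (in particular whether the $2A$ vs $4A$ discrepancy from the mixture sampling in \DrawDataTrunc is absorbed correctly), rather than any conceptual difficulty; everything else is a direct transcription of the reachable-case argument.
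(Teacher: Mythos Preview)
Your proposal is correct and follows essentially the same approach as the paper's proof: define $w_\ell^\st$, establish the two high-probability events via \cref{cor:random-design-prediction-error} and \cref{lem:l1-generalization}, combine with the program constraint \cref{eq:what-wprime-constraint-trunc} to bound the population error under $\nu_h$, then change measure via \cref{lem:nu-beta-indhyp} and apply Jensen. Regarding your constant concern: the change of measure from $\nu_h(x,a) \geq \frac{\alpha}{2A}\max_\pi d_h\sups{\Mbar(\emptyset),\pi}(x,a)$ gives exactly a factor $2A/\alpha$ (not $4A/\alpha$), so the statement's constant is correct as written and no rescaling is needed; also note that invoking \cref{lem:nu-beta-indhyp} tacitly uses $\alpha \geq \tsmall/\trunc$, which is assumed when the lemma is applied downstream (see \cref{thm:mu-coreset-trunc}).
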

\begin{proof}
  As in the proof of \cref{lem:feasibility-trunc}, for $\ell \in [d]$, we define $\bw_\ell^\st := \sum_{x \in \MX} \mu_{h+1}(x) \cdot \phiavg_{h+1}(x)_\ell$, so that $\E_{x' \sim \BP_h(x,a)}[\phiavg_{h+1}(x')_\ell] = \lng \bw_\ell^\st, \phi_h(x,a) \rng$ and $\| \bw_\ell^\st \|_1 \leq \Cnrm$. 
  
  Let $\nu_h$ be the distribution of $(x_h, a_h) \sim \frac 12 \cdot \left(\unif(\Psi_h) \circ_h \unif(\MA) + \unif(\Gamma)\circ_h \unif(\MA)\right)$, which is the marginal distribution of $(x_h^i, a_h^i)$ for each $i \in [n]$ indexing the dataset $\MC_h$ (defined in \cref{line:define-ch-trunc} of \cref{alg:ddtrunc}). 
 By \cref{cor:random-design-prediction-error} and a union bound over $\ell \in [d]$, there is an event $\ME_1$ that occurs with probability at least $1-\delta/2$ over the draw of $\MC_h = \{(x_h^i, a_h^i, x_{h+1}^i)\}_{i \in [n]}$, so that in the event $\ME_1$, we have that for all $\ell \in [d]$,
  \begin{align}
    \E_{(x, a) \sim \nu_h} \left[ \lng \hat \bw_\ell - \bw_\ell^\st, \phi_h(x,a) \rng^2 \right] \leq 3C_{\ref{cor:random-design-prediction-error}} \cdot \Cnrm^2 \cdot \sqrt{\frac{\log(2d^2/\delta)}{n}}\label{eq:wprime-wstar-nu-trunc}.
  \end{align}
  By \cref{lem:l1-generalization}, there is an event $\ME_2$ that occurs with probability at least $1-\delta/2$ over the draw of $\MC_h$, so that in the event $\ME_2$, we have for all $\theta \in \RR^d\times\{0\}$ that
  \begin{equation}
  \left|\EE_{(x,a)\sim\nu_h}[\langle \phi_h(x,a),\theta\rangle^2] - \frac{1}{n}\sum_{i=1}^n \langle \phi_h(x_h^i,a_h^i),\theta\rangle^2\right| \leq \frac{C_{\ref{lem:l1-generalization}}\norm{\theta}_1^2\sqrt{\log(d/\delta)}}{\sqrt{n}}.
  \label{eq:phi-generalization-trunc}
  \end{equation}
  We claim that in the event $\ME_1 \cap \ME_2$, the output of \ESCt{} satisfies the guarantee claimed in the lemma statement, where the states $(\tilde x_{h+1}^j)_{j=1}^m$ are precisely as computed in \ESCt{}. If the output is $\perp$, this is immediate. Otherwise, the output is a solution $(\hat\mu_{h+1}^j)_{j=1}^m$ to Program~\cref{eq:program-trunc}. Fix any $\ell \in [d]$, and recall the definition of $\hat \bw_\ell$ in \cref{eq:wj-guarantee-trunc}. Set $\theta := \hat{\bw}_\ell - \sum_{j=1}^m \hat\mu_{h+1}^j \cdot \phiavg_{h+1}(\tilde x_{h+1}^j)_\ell$. By the constraint $\norm{\hat{\bw}_\ell} \leq \Cnrm$ in \cref{eq:wj-guarantee-trunc} together with the program constraint \cref{eq:hatmu-cnrm-constraint-trunc}, we have $\norm{\theta}_1 \leq 2\Cnrm$. Thus, combining the program constraint \cref{eq:what-wprime-constraint-trunc} with \cref{eq:phi-generalization-trunc} for this value of $\theta$ gives
  \begin{align}
\E_{(x,a) \sim \nu_h} \left[ \left\lng \phi_h(x,a),\ \hat{\bw}_\ell - \sum_{j=1}^m \hat\mu_{h+1}^j \cdot \phiavg_{h+1}(\tilde x_{h+1}^j)_\ell \right\rng^2 \right] \leq \epcvx^2 + \frac{4C_{\ref{lem:l1-generalization}}\Cnrm^2 \sqrt{\log(2d/\delta)}}{\sqrt{n}}\label{eq:wprime-what-nu-trunc}. 
  \end{align}
  Combining \cref{eq:wprime-wstar-nu-trunc} and \cref{eq:wprime-what-nu-trunc} via the bound $(a+b)^2 \leq 2a^2+2b^2$ gives that
  \begin{align}
\E_{(x_h, a_h) \sim \nu_h} \left[\left\langle \phi_h(x_h,a_h),\ \bw_\ell^\st - \sum_{j=1}^m\hat\mu_{h+1}^j \cdot \phiavg_{h+1}(\tilde x_{h+1}^j)_\ell\right\rangle^2 \right] \leq 2\epcvx^2 + \frac{C_{\ref{lem:muhat-approx-trunc}}\Cnrm^2\sqrt{\log(2d/\delta)}}{\sqrt{n}}\nonumber
  \end{align}
  where we define $C_{\ref{lem:muhat-approx-trunc}} := 6C_{\ref{cor:random-design-prediction-error}} + 8C_{\ref{lem:l1-generalization}}$. Since $\Psi_h$ is assumed to be an $\alpha$-truncated policy cover (\cref{def:ih-trunc}) and $h$ is odd, by \cref{eq:nu-beta-indhyp-trunc} of  \cref{lem:nu-beta-indhyp} and definition of $\nu_h$, we have that $\nu_h(x,a) \geq \frac{\alpha}{2A} \cdot \max_{\pi \in \Pi} d_h\sups{\Mbar(\emptyset), \pi}(x,a)$ for all $(x,a) \in \MX\times \MA$. Thus, a change-of-measure gives that for all $\pi \in \Pi$,
  \begin{align}
\E\sups{\Mbar(\emptyset), \pi} \left[\left\langle \phi_h(x_h,a_h),\ \bw_\ell^\st - \sum_{j=1}^m\hat\mu_{h+1}^j \cdot \phiavg_{h+1}(\tilde x_{h+1}^j)_\ell\right\rangle^2 \right] \leq \frac{2A}{\alpha} \cdot \left(2\epcvx^2 + \frac{C_{\ref{lem:muhat-approx-trunc}}\Cnrm^2 \sqrt{\log(2d/\delta)}}{\sqrt{n}}\right).\label{eq:mbar-empty-squareinside}
  \end{align}
In a similar manner, by \cref{eq:nu-beta-mtil-trunc} of \cref{lem:nu-beta-indhyp} and definition of $\nu_h$, we have that $\nu_h(x,a) \geq \frac{\tsmall}{2A} \cdot \max_{\pi \in \Pi} d_h\sups{\Mbar(\Gamma), \pi}(x,a)$ for all $(x,a) \in \MX \times \MA$. Thus, for all $\pi \in \Pi$, 
    \begin{align}
\E\sups{\Mbar(\Gamma), \pi} \left[\left\langle \phi_h(x_h,a_h),\ \bw_\ell^\st - \sum_{j=1}^m\hat\mu_{h+1}^j \cdot \phiavg_{h+1}(\tilde x_{h+1}^j)_\ell\right\rangle^2 \right] \leq \frac{2A}{\tsmall} \cdot \left(2\epcvx^2 + \frac{C_{\ref{lem:muhat-approx-trunc}}\Cnrm^2 \sqrt{\log(2d/\delta)}}{\sqrt{n}}\right).\label{eq:mbar-gamma-squareinside}
    \end{align}
Applying Jensen's inequality to \cref{eq:mbar-empty-squareinside} and recalling the definition of $\bw_\ell^\st$ yields that
\begin{align*} 
&\left(\sum_{x \in \MX}\langle \EE\sups{\Mbar(\emptyset), \pi}[\phi_h(x_h,a_h)],\mu_{h+1}(x)\rangle \cdot \phiavg_{h+1}(x)_\ell - \sum_{j=1}^m \langle \EE\sups{\Mbar(\emptyset), \pi}[\phi_h(x_h,a_h)], \hat\mu_{h+1}^j\rangle \cdot \phiavg_{h+1}(\tilde x_{h+1}^j)_\ell\right)^2 \\
&\qquad\leq \frac{2A}{\alpha} \cdot \left(2\epcvx^2 + \frac{C_{\ref{lem:muhat-approx-trunc}}\Cnrm^2 \sqrt{\log(2d/\delta)}}{\sqrt{n}}\right),
\end{align*}
which verifies \cref{eq:mbar-empty-allpi} since $\ell \in [d]$ is arbitrary. 
Similarly, applying Jensen's inequality to \cref{eq:mbar-gamma-squareinside} yields \cref{eq:mbar-gamma-allpi}. 
\end{proof}

\subsubsection{Guarantee for emulator construction}
\label{sec:emulator-guarantee-trunc}
\cref{thm:mu-coreset-trunc} combines the results proven earlier in this section to establish that the output of \\ \ESCt is a \trunccore (per \cref{def:trunc-core}) with high probability.
\begin{theorem}
  \label{thm:mu-coreset-trunc}
  There is a constant $C_{\ref{thm:mu-coreset-trunc}}$ so that the following holds. Fix $n,m \in \BN$, $h \in [H]$ with $h$ odd, and $\epneg, \epapx, \epcvx, \alpha,\delta > 0$. Let $\Psi_h, \Gamma \subset \Pi$ be given so that $\Psi_h$ is an $\alpha$-truncated cover at step $h$ (\cref{def:ih-trunc}). Suppose that the following bounds hold:
  \begin{align}
    n &\geq  C_{\ref{thm:mu-coreset-trunc}} \cdot \max \left\{\epcvx^{-4} \Cnrm^4 \log(d/\delta) ,\epneg^{-4} \tsmall^{-1} A^2 \Cnrm^2 H^3 \log(AH\Cnrm/\epneg) \log(d/\delta),\right. \nonumber\\
    & \qquad \qquad \quad \quad  \left. \epapx^{-4}\Cnrm^4 (\tsmall^{-2}+\alpha^{-2}) A^2 \log(d/\delta) \right\}\nonumber\\
    m &\geq  C_{\ref{thm:mu-coreset-trunc}} \cdot  \epcvx^{-8} \Cnrm^6 \log(d/\delta), \nonumber\\
    \epcvx &\leq \epapx \cdot  \min \left\{ \sqrt{\alpha/A}/4, \sqrt{\tsmall/A}/4 \right\}  \nonumber\\
    \alpha &\geq  \frac{\tsmall}{\trunc} \nonumber\\
    \delta &\leq 1/\sqrt{m}.\nonumber
  \end{align}
  Then with probability at least $1-\delta$, the output of $\ESCt(h, \Psi_h, \Gamma, \epcvx, \Cnrm, n,m)$ (\cref{alg:esc-trunc}) is a $(\epapx, \epneg, \Cnrm; \Gamma)$-\trunccore at step $h$. Moreover, the sample complexity of $\ESCt(h, \Psi_h, \Gamma, \epcvx, \Cnrm, n, m)$ is $n+m$, and the time complexity is \\  $\poly(n,m,d, \log(1/(\epneg\epapx\epcvx)))$. 
\end{theorem}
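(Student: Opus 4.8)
\textbf{Proof strategy for \cref{thm:mu-coreset-trunc}.} The plan is to combine the four technical pieces established earlier in the section — the coverage guarantee (\cref{lem:nu-beta-indhyp}), feasibility (\cref{lem:feasibility-trunc}), approximate non-negativity (\cref{lemma:hat-mu-apx-nonnegative-trunc}), and all-policy approximation (\cref{lem:muhat-approx-trunc}) — via a union bound, exactly paralleling the proof of \cref{thm:muhat-coreset} in the reachable case but with the extra bookkeeping forced by the two MDPs $\Mbar(\emptyset)$ and $\Mbar(\Gamma)$. First I would invoke \cref{lem:feasibility-trunc} with failure probability $\delta/4$: the hypotheses on $n,m$ (the first term of the max on $n$, and the bound on $m$) and the hypothesis $\delta \leq 1/\sqrt m$ are exactly what that lemma needs, so with probability $\geq 1-\delta/4$ the convex program \cref{eq:program-trunc} is feasible and \ESCt{} returns some $(\hat\mu_{h+1}^j)_{j=1}^m$ satisfying \cref{eq:what-wprime-constraint-trunc,eq:hatmu-cnrm-constraint-trunc,eq:hatmu-nneg-constraint-trunc} rather than $\perp$. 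Constraint \cref{eq:hatmu-cnrm-constraint-trunc} immediately yields \cref{it:norm-bound} of \cref{def:trunc-core}.

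Next I would establish \cref{it:approx-nonneg}. The key observation is that constraint \cref{eq:hatmu-nneg-constraint-trunc} guarantees $\langle \phi_h(x_h^i,a),\hat\mu_{h+1}^j\rangle \geq 0$ for \emph{all} $a \in \MA$ at every sampled point, so $\min\{0,\min_{i,a}\langle \phi_h(x_h^i,a),\hat\mu_{h+1}^j\rangle\} = 0$. I then apply \cref{lemma:hat-mu-apx-nonnegative-trunc} twice — once to the backup MDP $\Mbar(\Gamma)$ and once to $\Mbar(\emptyset)$ — each with failure probability $\delta/8$, error parameter $\epneg$, and the coverage parameter $\zeta$ supplied by \cref{lem:nu-beta-indhyp}: for $\Mbar(\Gamma)$ we take $\zeta := \tsmall/2$ (from \cref{eq:nu-beta-mtil-trunc}, using that $\nu_h$ is the $x$-marginal of the dataset distribution and $h$ is odd so \cref{lem:nu-beta-indhyp} applies), and for $\Mbar(\emptyset)$ we take $\zeta := \alpha/(2A)$ (from \cref{eq:nu-beta-indhyp-trunc}); note that since $\Xreach_h(\emptyset) \subseteq \Xreach_h(\Gamma)$ we have $d_h^{\Mbar(\emptyset),\pi} \le d_h^{\Mbar(\Gamma),\pi}$ by \cref{lem:gamma-monotonicity}, so the $\Mbar(\Gamma)$-coverage bound also transfers to $\Mbar(\emptyset)$, but it is cleaner to just invoke the lemma directly for each MDP. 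The second term in the $n$-bound of the theorem is exactly the sample size required by \cref{lemma:hat-mu-apx-nonnegative-trunc} with $\zeta = \tsmall/2$; the $\Mbar(\emptyset)$ case with $\zeta = \alpha/(2A) \ge \tsmall/(2\trunc A)$ is weaker and is also covered (up to constants absorbed into $C_{\ref{thm:mu-coreset-trunc}}$) since $\alpha \ge \tsmall/\trunc$. Applying the resulting inequality with $\theta = \hat\mu_{h+1}^j$ and the vanishing of the empirical minimum gives \cref{eq:core-nonneg-gamma} and \cref{eq:core-nonneg-empty} respectively, where the $\max\{0,\max_a -\langle\cdot\rangle\}$ form in \cref{def:trunc-core} is just the negation of the $\min\{0,\min_a\langle\cdot\rangle\}$ form in \cref{lemma:hat-mu-apx-nonnegative-trunc}.

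For \cref{it:pol-approx} I would invoke \cref{lem:muhat-approx-trunc} with failure probability $\delta/4$; its conclusion \cref{eq:mbar-empty-allpi,eq:mbar-gamma-allpi} bounds the relevant $\ell_\infty$ discrepancies by $\tfrac{2A}{\alpha}(2\epcvx^2 + C_{\ref{lem:muhat-approx-trunc}}\Cnrm^2\sqrt{\log(2d/\delta)}/\sqrt n)$ and the analogous expression with $\tsmall$ in place of $\alpha$. The hypothesis $\epcvx \leq \epapx \min\{\sqrt{\alpha/A}/4,\sqrt{\tsmall/A}/4\}$ makes the $\epcvx^2$ contribution at most $\epapx^2/8$ in each case, while the third term in the $n$-bound (the one with $\epapx^{-4}\Cnrm^4(\tsmall^{-2}+\alpha^{-2})A^2$) forces $\tfrac{2A}{\alpha}\cdot C_{\ref{lem:muhat-approx-trunc}}\Cnrm^2\sqrt{\log(2d/\delta)}/\sqrt n \le \epapx^2/2$ and likewise for $\tsmall$; hence both discrepancies are at most $\epapx^2$, i.e. at most $\epapx$ after taking square roots, which is exactly \cref{it:pol-approx} with states $\tilde x^j := \tilde x_{h+1}^j$. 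Finally a union bound over the four events (feasibility, the two non-negativity events, and the approximation event) costs total probability $\delta/4 + \delta/8 + \delta/8 + \delta/4 \le \delta$, and on the complement we have produced a $(\epapx,\epneg,\Cnrm;\Gamma)$-\trunccore. The sample complexity $n+m$ is immediate from inspecting \ESCt{} (\cref{alg:esc-trunc}) and \DrawDataTrunc{} (\cref{alg:ddtrunc}), and the time complexity follows from the $d$ Lasso solves plus solving \cref{eq:program-trunc}, which by the discussion in \cref{rmk:convex-program-efficient} (applied to the truncated program, cf.\ the footnote and \cref{sec:opt-details}) takes $\poly(n,m,d,\log(1/(\epneg\epapx\epcvx)))$ time.

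The main obstacle is not any single inequality but making the constants line up consistently across the three error budgets while correctly matching the two coverage parameters ($\alpha/(2A)$ versus $\tsmall/2$) to the two MDPs in each of \cref{lemma:hat-mu-apx-nonnegative-trunc} and \cref{lem:muhat-approx-trunc}; in particular one must be careful that $\Psi_h$ being an $\alpha$-\emph{truncated} cover only certifies coverage relative to $\Mbar(\emptyset)$, so coverage relative to $\Mbar(\Gamma)$ genuinely comes from the backup-policy branch (\cref{eq:gamma-good}) of \cref{lem:reachability-in-trunc} feeding into \cref{eq:nu-beta-mtil-trunc}, and this is why the hypothesis $\alpha \geq \tsmall/\trunc$ is needed — it is precisely what \cref{lem:nu-beta-indhyp} requires.
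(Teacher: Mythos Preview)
Your proposal is correct and follows essentially the same approach as the paper: combine feasibility (\cref{lem:feasibility-trunc}), approximate non-negativity (\cref{lemma:hat-mu-apx-nonnegative-trunc}), and all-policy approximation (\cref{lem:muhat-approx-trunc}) via a union bound, using \cref{lem:nu-beta-indhyp} to supply the coverage parameters. The only minor divergence is that the paper takes $\zeta = \tsmall/2$ for \emph{both} $\Mbar(\emptyset)$ and $\Mbar(\Gamma)$ in the non-negativity step, transferring via \cref{lem:gamma-monotonicity} (the alternative you already mention), which lines up exactly with the $\tsmall^{-1}$ term in the stated $n$-bound; your choice of $\zeta = \alpha/(2A)$ for $\Mbar(\emptyset)$ would introduce an extra factor of $A$ there unless you pass to the $x$-marginal more carefully or fall back on the monotonicity route.
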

\begin{remark}[Sample and computational costs; analogue of \cref{rmk:convex-program-efficient}]
  \label{rmk:esc-sample-comp-trunc}
  Note that the sample complexity of \ESCt{} in the context of \cref{thm:mu-coreset-trunc} is $n+m$. As in the reachable setting, we have assumed for simplicity that the convex program \cref{eq:program-trunc} can be solved exactly in time $\poly(n,m,d)$. As this is not strictly speaking known to be true, we once again apply the argument in \cref{sec:opt-details} to obtain that we can implement \ESCt{}  in time $\poly(n,m,d, \log(\Cnrm/(\epcvx\epapx\epneg))$,  where a relaxation of \cref{eq:program-trunc} is solved, which is still sufficient to compute a $(\epapx, \epneg,\Cnrm; \Gamma)$-\trunccore (after decreasing the values of $\epapx, \epneg$ passed to \ESCt{} by a constant factor). 
\end{remark}
\begin{proof}[Proof of \cref{thm:mu-coreset-trunc}]
  By \cref{lem:feasibility-trunc} and our choices of $n,m,\epcvx,\delta$, with probability at least $1-\delta/3$, $\ESCt(h, \Psi_h, \Gamma, \epcvx, \Cnrm, n, m)$ produces a solution $(\hat \mu_{h=1}^j)_{j=1}^m \subset \RR^d\times\{0\}$ to Program \ref{eq:program-trunc} (i.e., the program is feasible). Certainly any such solution satisfies \cref{it:norm-bound} of \cref{def:trunc-core}, with norm bound $C := \Cnrm$. 

  Next, by the assumptions that $\Psi_h$ is an $\alpha$-truncated cover at step $h$, that $h$ is odd, and that $\alpha \geq \frac{\tsmall}{\trunc}$, \cref{eq:nu-beta-mtil-trunc} of \cref{lem:nu-beta-indhyp} gives the following: letting $\nu_h \in \Delta(\MX)$ denote the distribution of $x_h$ under a policy $\pi \sim \frac 12 \cdot (\unif(\Gamma)\circ_h \unif(\MA) + \unif(\Psi_h) \circ_h \unif(\MA))$, we have that for all $x \in \MX$, \[\nu_h(x) \geq \frac{\tsmall}{2} \cdot \max_{\pi \in \Pi} d_h\sups{\Mbar(\Gamma), \pi}(x) \geq \frac{\tsmall}{2} \cdot \max_{\pi \in \Pi} d_h\sups{\Mbar(\emptyset), \pi}(x).\] (The second inequality above uses \cref{lem:gamma-monotonicity}.) In particular, the condition \cref{eq:nu-hyp-trunc} of \cref{lemma:hat-mu-apx-nonnegative-trunc} is satisfied with $\zeta = \tsmall/2$ and any $\Gamma'\in\{\emptyset,\Gamma\}$. Thus, since the marginal of the points $x_h^i$ in the dataset $\MC_h = \{(x_h^i, a_h^i, x_{h+1}^i)\}_{i=1}^n$ is according to $\nu_h$, it follows from \cref{lemma:hat-mu-apx-nonnegative-trunc} that with probability at least $1-\delta/3$, the output $(\hat \mu_{h+1}^j)_{j=1}^m$ produced by \cref{alg:esc-trunc} (if not $\perp$) satisfies, for all $\pi \in \Pi$, $j \in [m]$, and $\Gamma' \in \{ \emptyset, \Gamma\}$, 
  \begin{align}
    \E\sups{\Mbar(\Gamma'), \pi}\left[ \max\left\{ 0, \max_{a \in \MA}-\lng \phi_h(x_h, a), \hat \mu_{h+1}^j \rng \right\} \right] 
    &\leq \max \left\{0, \max_{i \in [n], a \in \MA} -\lng \phi_h(x_h^i, a), \hat \mu_{h+1}^j \rng \right\} + \epneg \| \hat \mu_{h+1}^j \|_1\nonumber\\
    &\leq \epneg \| \hat \mu_{h+1}^j \|_1\nonumber,
  \end{align}
  where the second inequality uses the constraint \cref{eq:hatmu-nneg-constraint-trunc}. The above inequality establishes that $(\hat \mu_{h+1}^j)_{j=1}^m$ satisfies \cref{it:approx-nonneg} of \cref{def:trunc-core}.

  Finally, by \cref{lem:muhat-approx-trunc} and our choice of $n, \epcvx$, with probability at least $1-\delta/3$, if the output of \cref{alg:esc-trunc} is not $\perp$, then the output $(\hat \mu_{h+1}^j)_{j=1}^m$ satisfies, for all $\pi \in \Pi, \Gamma' \in \{\emptyset, \Gamma\}$,
  \begin{align}
\max_{\ell \in [d]}\abs{ \sum_{x \in \MX} \lng \E\sups{\Mbar(\Gamma'), \pi}[\phi_h(x_h, a_h)], \mu_{h+1}(x) \rng \cdot \phiavg_{h+1}(x) - \sum_{j=1}^m \lng \E\sups{\Mbar(\Gamma'), \pi}[\phi_h(x_h, a_h)], \hat \mu_{h+1}^j \rng \cdot \phiavg_{h+1}(\tilde x_{h+1}^j) } &\leq \epapx\nonumber,
  \end{align}
  which verifies \cref{it:pol-approx} of \cref{def:trunc-core}.

\end{proof}

\subsection{From emulator to policy cover}
\label{sec:emulator-pc-unreachable}
In this section, the main technical result is \cref{lem:pc-induction-trunc}, which uses the properties of a \trunccore to show that the \PC algorithm can extend a truncated policy cover for steps $1, \ldots, h$ to a truncated policy cover for steps $1, \ldots, h+2$, unless certain policies have large \emph{extraneous visitation probability}. The following definition formalizes the set of policies that need to be considered, when the truncated policy covers given to \PC are $\Psi_{1:h}$, the backup policy cover is $\Gamma$, and the output of \PC is $\Psiapx_{h+1}$. These are also the policies that must be added to the backup policy cover in the next phase in order to make progress:

\begin{definition}[{Extraneous visitation probability}]\label{def:extra-vis-prob}
  Let $\Psi_1,\dots,\Psi_h,\Psiapx_{h+1},\Gamma \subset \Pi$ %
  be sets of policies and let $\pifinal \in \Pi$. Let $h \in [H]$. Define 
  \begin{align}
    \Sigma_h(\Psi_{1:h},\Psiapx_{h+1}, \pifinal) :=& \Psiapx_{h+1} \cup \bigcup_{\substack{0 \leq h_0 \leq h \\ \pi \in \Psi_{h_0}}} \pi \circ_{h_0} \unif(\MA) \circ_{h_0+1} \pifinal \label{eq:sigma-gamma-psi}\\
    \Delta_h(\Psi_{1:h},\Psiapx_{h+1}, \pifinal;\Gamma) :=& \max_{\substack{\pi \in \Sigma_h(\Psi_{1:h}, \Psiapx_{h+1}, \pifinal) \\ 1 \leq g \leq h}} d_g\sups{M,\pi}(\MX \backslash \Xreach_g(\Gamma))  %
    \label{eq:delta-gamma-psi}
  \end{align}
  where for notational convenience we are defining $\Psi_0$ to contain a single arbitrary policy (recall that $\pi \circ_1 \pi' = \pi'$ for all policies $\pi,\pi'$, so the choice does not matter). We call the quantities $\Delta_h(\Psi_{1:h}, \Psiapx_{h+1}, \pifinal; \Gamma)$ \emph{extraneous visitation probabilities}. 
\end{definition}

\cref{lem:emp-pc-guarantee} provides an analogue of \cref{lem:emp-pc-guarantee-rch} for the unreachable setting, stating that with high probability, when given a \trunccore $(\hat \mu_{h+1}^j)_{j=1}^m$, \PC will output a set of policies that cover most of the emulator vectors $\hat \mu_{h+1}^j$. %
\begin{lemma}[Guarantee for \PC;  unreachable setting]
  \label{lem:emp-pc-guarantee}
  Let $\alpha,\epapx,\epneg,\xi,\Cemp,\delta>0$ with $\xi \leq \Cemp$. Let $h \in [H]$, $N \in \NN$, $\Gamma \subset \Pi$, and let $(\hat\mu_{h+1}^j)_{j=1}^m \subseteq \RR^d\times\{0\}$ be an $(\epapx, \epneg, \Cemp; \Gamma)$-\trunccore at step $h$ (\cref{def:trunc-core}) so that $\epneg \leq \xi/(4\Cemp H)$. 
  Let $\Psi_{1:h}$ be $\alpha$-truncated policy covers for steps $1,\dots,h$ respectively (\cref{def:ih-trunc}).
  
  Suppose that $N \geq \max(\nstat(\xi/(8\Cemp), \alpha, \delta\xi/(6\Cemp)), \nfe(\xi/(8\Cemp), \delta\xi/(6\Cemp)))$. Then \PC (\cref{alg:pc}) with parameters $\xi,\Cemp,N$ outputs a set of policies $\Psiapx_{h+1}$ of size $|\Psiapx_{h+1}| \leq 2\Cemp/\xi$, a subset $\MG \subset [m]$ and a policy $\pifinal \in \Pi$ satisfying the following property.  %
With probability at least $1-\delta$, for all $\pi \in \Pi$:
  \begin{enumerate}
  \item \label{it:bad-inds-bound-trunc} $\sum_{j \in [m]\setminus \MG} \lng \E\sups{\Mbar(\emptyset), \pi}[\phi_h(x_h, a_h)], \hat \mu_{h+1}^j \rng \leq 3\xi/2 + \frac{2AH^2\Cemp}{\alpha} \cdot \Delta_h(\Psi_{1:h}, \Psiapx_{h+1}, \pifinal; \Gamma)$. 
  \item     \label{it:good-inds-bound-trunc} In the event that $\Delta_h(\Psi_{1:h}, \Psiapx_{h+1}, \pifinal; \Gamma) \leq \frac{\xi}{4\Cemp}$, the following holds: for all $j \in \MG$, there is some $\pi' \in \Psiapx_{h+1}$ so that
    \begin{align}
      \lng\E\sups{\Mbar(\Gamma), \pi'}[\phi_h(x_h, a_h)], \hat \mu_{h+1}^j \rng \geq  \left( \frac{\xi}{4\Cemp} - \Delta_h(\Psi_{1:h}, \Psiapx_{h+1}, \pifinal; \Gamma) \right) \cdot \lng \E\sups{\Mbar(\emptyset), \pi}[\phi_h(x_h, a_h)], \hat \mu_{h+1}^j \rng .\nonumber
    \end{align}
  \end{enumerate}
\end{lemma}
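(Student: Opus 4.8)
\textbf{Proof plan for \cref{lem:emp-pc-guarantee}.}
The strategy mirrors the reachable-case analysis of \cref{lem:emp-pc-guarantee-rch}, but every invocation of \PSDP must be replaced by its truncated counterpart (the refined guarantee alluded to in \cref{section:trunc-overview} as ``\cref{lem:psdp-trunc}''), which introduces the extraneous-visitation error term $\Delta_h := \Delta_h(\Psi_{1:h},\Psiapx_{h+1},\pifinal;\Gamma)$. First I would establish the size bound $|\Psiapx_{h+1}| \leq 2\Cemp/\xi$ and the sample/time complexity exactly as in \cref{lem:pc-size-bound}: that argument only uses the norm bound $\sum_j \norm{\hat\mu_{h+1}^j}_1 \leq \Cemp$ (\cref{it:norm-bound} of \cref{def:trunc-core}) together with the \FE guarantee $\norm{\hat\phi_h^t}_\infty \leq 1$, neither of which depends on reachability. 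This also shows that \PC makes at most $3\Cemp/\xi$ calls each to \PSDP and \FE; by choice of $N$ and a union bound, with probability at least $1-\delta$ all these calls satisfy their respective guarantees (with $\epstat := \xi/(8\Cemp)$), and I condition on this event for the rest of the argument.

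For claim \cref{it:bad-inds-bound-trunc}: set $T = |\Psiapx_{h+1}|+1$, so $\pifinal = \pi^T$ is the last policy computed, and $\MG = [m]\setminus\MB_T$. The termination condition \cref{alg:pc:line:break} gives $\langle \hat\phi_h^T, \sum_{j\in[m]\setminus\MG}\hat\mu_{h+1}^j\rangle < \xi$; by the \FE guarantee and the norm bound this yields $\langle \E\sups{M,\pifinal}[\phi_h(x_h,a_h)], \sum_{j\in[m]\setminus\MG}\hat\mu_{h+1}^j\rangle < \xi + \epstat\Cemp$. Now I invoke the truncated \PSDP guarantee, which says the policy $\pifinal$ output by \PSDP is near-optimal for the target reward direction $\sum_{j\in[m]\setminus\MG}\hat\mu_{h+1}^j$ up to the usual statistical error \emph{plus} a term of order $\frac{AH^2\Cemp}{\alpha}\Delta_h$ coming from the discrepancy between $\Mbar(\emptyset)$ (or $\Mbar(\Gamma)$) and $M$ along the policies in $\Sigma_h$; one needs here that $\|\sum_{j\in[m]\setminus\MG}\hat\mu_{h+1}^j\|_1 \le \Cemp$ and that the relevant target vector lies in $\RR^d\times\{0\}$. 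Combining, $\sup_{\pi}\langle\E\sups{\Mbar(\emptyset),\pi}[\phi_h(x_h,a_h)], \sum_{j\in[m]\setminus\MG}\hat\mu_{h+1}^j\rangle < \xi + 2\epstat\Cemp + \frac{2AH^2\Cemp}{\alpha}\Delta_h \le 3\xi/2 + \frac{2AH^2\Cemp}{\alpha}\Delta_h$, using $\epstat \le \xi/(8\Cemp)$; I also need the approximate-nonnegativity conditions \cref{eq:core-nonneg-empty,eq:core-nonneg-gamma} together with $\epneg \le \xi/(4\Cemp H)$ to pass from a sum over $[m]\setminus\MG$ in one MDP to the same sum in $\Mbar(\emptyset)$ without flipping signs, which is the reason the hypothesis on $\epneg$ is imposed.

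For claim \cref{it:good-inds-bound-trunc}, fix $j \in \MG$; then $j \in \MG_t$ for some $t$, so by \cref{alg:pc:line:jidef} the estimate satisfies $\langle\hat\phi_h^t,\hat\mu_{h+1}^j\rangle \ge (\xi/(2\Cemp))\norm{\hat\mu_{h+1}^j}_1$, and by the \FE guarantee $\langle\E\sups{M,\pi^t}[\phi_h(x_h,a_h)],\hat\mu_{h+1}^j\rangle \ge (\xi/(2\Cemp) - \epstat)\norm{\hat\mu_{h+1}^j}_1 \ge (\xi/(4\Cemp))\norm{\hat\mu_{h+1}^j}_1$ since $\epstat \le \xi/(8\Cemp) \le \xi/(4\Cemp)$. (Strictly speaking this bounds the reward of $\pi^t$, which is the policy \PSDP \emph{returns} at iteration $t$ and is added to $\Psiapx_{h+1}$; this $\pi^t$ is the desired $\pi'$.) On the other side, for an arbitrary $\pi \in \Pi$, \cref{it:approx-nonneg} of \cref{def:trunc-core} gives $\langle\E\sups{\Mbar(\emptyset),\pi}[\phi_h(x_h,a_h)],\hat\mu_{h+1}^j\rangle \le \sup_{\pi''}\langle\E\sups{\Mbar(\emptyset),\pi''}[\phi_h(x_h,a_h)],\hat\mu_{h+1}^j\rangle$, and I relate this supremum to $\sup_{\pi''}\langle\E\sups{M,\pi''}[\phi_h(x_h,a_h)],\hat\mu_{h+1}^j\rangle$ via \cref{lem:m-mbar-delta} (losing $\norm{\hat\mu_{h+1}^j}_1 \cdot \sum_{g\le h}d_g\sups{M,\pi''}(\MX\setminus\Xreach_g(\Gamma))$), then to $\sup_{\pi''}\langle\E\sups{\Mbar(\Gamma),\pi''}[\phi_h(x_h,a_h)],\hat\mu_{h+1}^j\rangle$ by \cref{lem:gamma-monotonicity} / \cref{lem:m-mbar-delta} again — these discrepancies are exactly what $\Delta_h$ controls, under the hypothesis $\Delta_h \le \xi/(4\Cemp)$, which keeps the right-hand side nonnegative. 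Chaining the inequalities gives $\langle\E\sups{\Mbar(\Gamma),\pi'}[\phi_h(x_h,a_h)],\hat\mu_{h+1}^j\rangle \ge (\xi/(4\Cemp) - \Delta_h)\langle\E\sups{\Mbar(\emptyset),\pi}[\phi_h(x_h,a_h)],\hat\mu_{h+1}^j\rangle$, as claimed.

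The main obstacle I anticipate is bookkeeping the error terms in the truncated \PSDP step of claim \cref{it:bad-inds-bound-trunc}: one must verify that the suboptimality of $\pifinal$ with respect to the aggregate target direction $\sum_{j \in [m]\setminus\MG}\hat\mu_{h+1}^j$ decomposes into (i) a statistical term absorbed into $\xi/2$ and (ii) a term bounded by $\frac{2AH^2\Cemp}{\alpha}\Delta_h$, and that the policies whose truncation probabilities enter this bound are precisely those in $\Sigma_h(\Psi_{1:h},\Psiapx_{h+1},\pifinal)$ — i.e. the $\alpha$-truncated covers $\Psi_{1:h}$ feeding \PSDP, plus the ``branch'' policies $\pi \circ_{h_0}\unif(\MA)\circ_{h_0+1}\pifinal$ that arise from the roll-in/roll-out structure of \PSDP. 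Getting the $H^2/\alpha$ factor exactly right (as opposed to, say, $H/\alpha$ or $H^3/\alpha$) requires care with the per-step performance-difference decomposition and the change-of-measure through the truncated covers, and this is where I would spend the bulk of the effort; everything else is a routine adaptation of the reachable-case proof.
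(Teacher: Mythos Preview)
Your overall strategy matches the paper's, and the argument for \cref{it:bad-inds-bound-trunc} is essentially right, though the bookkeeping is slightly off: the $\epneg$ hypothesis is not used to ``pass from one MDP to $\Mbar(\emptyset)$''. Rather, the truncated \PSDP guarantee (\cref{lem:psdp-trunc}) requires as a \emph{precondition} that the target $\theta = \sum_{j\in[m]\setminus\MG}\hat\mu_{h+1}^j$ be an $\epnnnt$-nearly non-negative target with respect to $\Gamma$ (\cref{def:nnnt}); this is verified using \cref{eq:core-nonneg-gamma} together with subadditivity of $w \mapsto \max\{0,\max_a -\lng\phi_h(x,a),w\rng\}$, yielding $\epnnnt = \epneg\Cemp$. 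The resulting $\epnnnt H = \epneg\Cemp H$ term in the \PSDP bound is what gets absorbed into $\xi/4$ via $\epneg \leq \xi/(4\Cemp H)$, so your final arithmetic $\xi + 2\epstat\Cemp + \epneg\Cemp H \leq 3\xi/2$ is correct once this term is put in the right place.

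The more substantive issue is in \cref{it:good-inds-bound-trunc}. Your plan applies \cref{lem:m-mbar-delta} to the (unknown) maximizer $\pi''$ in order to pass between $\Mbar(\emptyset)$, $M$, and $\Mbar(\Gamma)$, and says ``these discrepancies are exactly what $\Delta_h$ controls''. But $\Delta_h$ only bounds $d_g\sups{M,\cdot}(\MX\setminus\Xreach_g(\Gamma))$ for policies in $\Sigma_h(\Psi_{1:h},\Psiapx_{h+1},\pifinal)$, and an arbitrary maximizer $\pi''\in\Pi$ need not lie in $\Sigma_h$; so you cannot bound its truncation probability by $\Delta_h$. The paper sidesteps this entirely: on the $\pi$-side, it uses the trivial H\"older bound $\lng\E\sups{\Mbar(\emptyset),\pi}[\phi_h(x_h,a_h)],\hat\mu_{h+1}^j\rng \leq \norm{\hat\mu_{h+1}^j}_1$ (no change of measure needed). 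The only place \cref{lem:m-mbar-delta} is applied is to $\pi^t$ itself, to pass from $\lng\E\sups{M,\pi^t}[\phi_h],\hat\mu_{h+1}^j\rng$ to $\lng\E\sups{\Mbar(\Gamma),\pi^t}[\phi_h],\hat\mu_{h+1}^j\rng$; since $\pi^t\in\Psiapx_{h+1}\subset\Sigma_h$, this \emph{is} controlled by $\Delta_h$. Chaining then gives $\lng\E\sups{\Mbar(\Gamma),\pi^t}[\phi_h],\hat\mu_{h+1}^j\rng \geq (\xi/(4\Cemp)-\Delta_h)\norm{\hat\mu_{h+1}^j}_1 \geq (\xi/(4\Cemp)-\Delta_h)\lng\E\sups{\Mbar(\emptyset),\pi}[\phi_h],\hat\mu_{h+1}^j\rng$. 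Your route could likely be salvaged, but the paper's is both simpler and avoids the gap.
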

\begin{proof}
 By \cref{lem:pc-size-bound} and the guarantee $\sum_{j=1}^m \norm{\hat\mu_{h+1}^j}_1 \leq \Cemp$ (\cref{it:norm-bound} of \cref{def:trunc-core}), we have $|\Psiapx_{h+1}| \leq 2\Cemp/\xi$. It follows that the algorithm makes at most $1 + 2\Cemp/\xi \leq 3\Cemp/\xi$ calls to each of \PSDP and \VI. By the choice of $N$, each call has failure probability at most $\delta\xi/(6\Cemp)$, so with probability at least $1-\delta$ all of the calls succeed (i.e. satisfy the guarantees of \cref{lem:psdp-trunc,lem:fe}). We assume from now on that this event holds. Also by the choice of $N$, the guarantee of \cref{lem:fe} holds with $\ell_\infty$ error at most $\epstat := \xi/(8\Cemp)$.

\paragraph{Proof of \cref{it:bad-inds-bound-trunc}.} For convenience set $T = |\Psiapx_{h+1}|+1$. Recall that $\pifinal = \pi^T$ is the last policy computed by \cref{alg:pc} (the policy which is not added to $\Psiapx_{h+1}$). The output set $\MG$ of \cref{alg:pc} is given by $\MG := [m] \setminus \MB_T$. %

By the termination condition (\cref{alg:pc:line:break}), we have $\langle \hat \phi_h^T, \sum_{j \in [m]\setminus \MG} \hat{\mu}_{h+1}^j \rangle = \lng \hat \phi_h^T, \sum_{j \in \MB_T} \hat \mu_{h+1}^j \rng  < \xi$. By the guarantee of \FE (\cref{lem:fe}) and the fact that $\left\| \sum_{j \in [m] \backslash \MJ} \hat \mu_{h+1}^j \right\|_1 \leq \Cemp$ (per \cref{it:norm-bound} of \cref{def:trunc-core}), it follows that %
\begin{align}\label{eq:fe-piprime}\left\langle \EE\sups{M,\pifinal}[\phi_h(x_h,a_h)], \sum_{j \in [m]\setminus \MG} \hat \mu_{h+1}^j\right\rangle  < \xi + \epstat \Cemp .\end{align}

Next we apply the guarantee of \PSDP (\cref{lem:psdp-trunc}); we first verify its preconditions. Note that $\pifinal$ is the output of \PSDP at step $k:=h$ with input vector $\theta := \mures{T} = \sum_{j \in [m]\setminus \MG} \hat\mu_{h+1}^j$. By \cref{it:norm-bound} of \cref{def:trunc-core} we have $\norm{\theta}_1 \leq \Cemp$. For any $x,h$, note that the function $w \mapsto \max \{ 0, \max_{a \in \MA} - \lng \phi_h(x,a), w \rng \}$ is subadditive.  It follows by \cref{it:approx-nonneg} (in particular, \cref{eq:core-nonneg-gamma}) that $\theta$ is an $\epneg \Cemp$-nearly nonnegative target at step $h$ with respect to $\Gamma$ (\cref{def:nnnt}). By assumption, $\Psi_{1:h-1}$ are $\alpha$-truncated covers at steps $1,\dots,h-1$. Thus, by choice of $N \geq \nstat(\epstat, \alpha, \delta\xi/(6\Cemp))$, we have that for all $\pi \in \Pi$,

\begin{align*} 
\langle \E\sups{M,\pifinal}[\phi_h(x_h,a_h)],\theta\rangle 
&\geq \langle \E\sups{\Mbar(\emptyset),\pi}[\phi_h(x_h,a_h)],\theta\rangle \\
&\quad- \epstat\Cemp - \epneg\Cemp H - \frac{2A\Cemp H^2}{\alpha} \Delta_h(\Psi_{1:h},\Psiapx_{h+1},\pifinal;\Gamma)
\end{align*}
where we obtained the third term by observing that for all $h' \leq g \leq h$ and $\pi' \in \Psi_{h'-1}$, we have $d_g^{M, \pi' \circ_{h'-1} \unif(\MA) \circ_{h'} \pifinal}(\MX\setminus \Xreach_g(\Gamma)) \leq \Delta_h(\Psi_{1:h},\Psiapx_{h+1},\pifinal;\Gamma)$.

Rearranging and combining with \cref{eq:fe-piprime}, we get that for all $\pi \in \Pi$, %
\begin{align}
  & \left\langle \EE\sups{\Mbar(\emptyset), \pi}[\phi_h(x_h,a_h)], \sum_{j \in [m]\setminus \MG} \hat{\mu}_{h+1}^j\right\rangle \nonumber\\
  &\leq \left\lng \E\sups{M, \pifinal}[\phi_h(x_h, a_h)], \sum_{j \in [m] \backslash \MG} \hat \mu_{h+1}^j \right\rng + \left(\epstat + \epneg  H + \frac{2AH^2}{\alpha} \cdot \Delta_h(\Psi_{1:h},\Psiapx_{h+1},\pifinal;\Gamma)\right)\cdot\Cemp \nonumber\\
  &<  \xi + \left(2\epstat + \epneg H + \frac{2AH^2}{\alpha} \cdot \Delta_h(\Psi_{1:h},\Psiapx_{h+1},\pifinal;\Gamma)\right)\cdot\Cemp\nonumber\\
  &\leq 3\xi/2 + \frac{2AH^2\Cemp}{\alpha} \cdot \Delta_h(\Psi_{1:h},\Psiapx_{h+1},\pifinal;\Gamma)\nonumber,
\end{align}
where the first inequality uses the previously-derived \PSDP guarantee; the second inequality uses \cref{eq:fe-piprime}; and the third inequality uses the choice of $\epstat = \xi/(8\Cemp)$ and the assumption that $\epneg \leq \xi/(4\Cemp H)$. 
The above display establishes the first claim of the lemma statement.

\paragraph{Proof of \cref{it:good-inds-bound-trunc}.} To establish the second claim, we note that for each $j \in \MG$ there is some $t < T$ so that $j \in \MG_t$. But then the estimated feature vector $\hat\phi_h^t$ of policy $\pi^t \in \Psiapx_{h+1}$ satisfies
\[\langle \hat \phi_h^t, \hat \mu_{h+1}^j\rangle \geq (\xi/(2\Cemp)) \cdot \norm{\hat \mu_{h+1}^j}_1,\]
so by the guarantee of \FE (\cref{lem:fe}),
\begin{align}
\langle \EE\sups{M, \pi^t}[\phi_h(x_h,a_h)], \hat\mu_{h+1}^j \rangle
  &\geq \left(\frac{\xi}{2\Cemp} - \epstat\right)\cdot\norm{\hat\mu_{h+1}^j}_1 .\label{eq:mpi-hatmu}
\end{align}
Moreover, by \cref{lem:m-mbar-delta},
\begin{align}
  \lng \E\sups{M, \pi^t}[\phi_h(x_h, a_h)], \hat \mu_{h+1}^j \rng &\leq  \lng \E\sups{\Mbar(\Gamma), \pi^t}[\phi_h(x_h, a_h)], \hat \mu_{h+1}^j \rng +  \| \hat \mu_{h+1}^j \|_1 \cdot \sum_{g=1}^h d_g\sups{M, \pi^t}(\MX \backslash \Xreach_g(\Gamma))\nonumber\\
  &\leq  \lng \E\sups{\Mbar(\Gamma), \pi^t}[\phi_h(x_h, a_h)], \hat \mu_{h+1}^j \rng + \| \hat \mu_{h+1}^j \|_1 \cdot \Delta_h(\Psi_{1:h},\Psiapx_{h+1},\pifinal;\Gamma)\label{eq:mpi-delta},
\end{align}
where the second inequality uses that $\pi^t \in \Psiapx_{h+1}$. Combining \cref{eq:mpi-hatmu} and \cref{eq:mpi-delta} and using the fact that $\epstat \leq \xi / (4\Cemp)$, we see that, in the event that $\Delta_h(\Psi_{1:h}, \Psiapx_{h+1}, \pifinal; \Gamma) \leq \frac{\xi}{4\Cemp}$, 
\begin{align}
  \lng \E\sups{\Mbar(\Gamma), \pi^t}[\phi_h(x_h, a_h)], \hat \mu_{h+1}^j \rng \geq &\left( \frac{\xi}{4\Cemp} - \Delta_h(\Psi_{1:h},\Psiapx_{h+1},\pifinal;\Gamma) \right) \cdot \norm{ \hat \mu_{h+1}^j}_1 \nonumber\\
  \geq & \left( \frac{\xi}{4\Cemp} - \Delta_h(\Psi_{1:h},\Psiapx_{h+1},\pifinal;\Gamma) \right) \cdot \sup_{\pi \in \Pi} \langle \EE\sups{\Mbar(\emptyset), \pi}[\phi_h(x_h,a_h)], \hat\mu_{h+1}^j\rangle\nonumber,
\end{align}
which completes the proof of \cref{it:good-inds-bound-trunc}. 
\end{proof}

The next lemma, an analogue of \cref{lem:aux-pc-coreset} in the unreachable setting, is a consequence of \cref{lem:emp-pc-guarantee} and the approximate non-negativity property of a \trunccore (\cref{def:trunc-core}). Unlike \cref{lem:aux-pc-coreset}, there is a failure event (\cref{it:case-large-vis-gamma}) in which the guarantee \cref{eq:aux-coreset-trunc} may fail to hold. In this failure event, some policy found by \PC (namely, one in the set $\Sigma_h(\Psi_{1:h}, \Psiapx_{h+1}, \pifinal)$) visits a truncated set $\MX \backslash \Xreach_g(\Gamma)$, for some $g \in [H]$. 
\begin{lemma}
  \label{lem:newgamma-or-psigood}
  In the setting of \cref{lem:emp-pc-guarantee}, with probability at least $1-\delta$ (over the randomness in \PC), \textbf{at least one} of the following statements holds:
  \begin{enumerate}[label=(\alph*)]
  \item\label{it:case-large-vis-gamma} The output $(\Psiapx_{h+1}, \pifinal)$ of \PC (\cref{alg:pc}) satisfies $\Delta_h(\Psi_{1:h},\Psiapx_{h+1},\pifinal;\Gamma) \geq \frac{\xi \alpha}{8AH^2 \Cemp}$ (recall that $\Delta_h(\Psi_{1:h},\Psiapx_{h+1},\pifinal;\Gamma)$ is defined in \cref{eq:delta-gamma-psi}).
  \item\label{it:case-small-vis-gamma} For any $\pi \in \Pi$, $B > 0$, and function $g : [m] \ra [0,B]$,
    \begin{align}
      \sum_{j=1}^m \lng \E\sups{\Mbar(\emptyset), \pi}[\phi_h(x_h, a_h)], \hat \mu_{h+1}^j \rng \cdot g(j) 
      &\leq  \frac{33 \Cemp^3 B \epneg}{\xi^2} + 2B\xi \nonumber\\
      & + \frac{8 \Cemp}{\xi} \sum_{\pi' \in \Psiapx_{h+1}} \sum_{j=1}^m \lng \E\sups{\Mbar(\Gamma), \pi'}[\phi_h(x_h, a_h)], \hat \mu_{h+1}^j \rng \cdot g(j)\label{eq:aux-coreset-trunc}.
    \end{align}
  \end{enumerate}
\end{lemma}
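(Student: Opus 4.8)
\textbf{Proof plan for \cref{lem:newgamma-or-psigood}.}

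The plan is to condition on the high-probability event (probability at least $1-\delta$) under which the conclusions of \cref{lem:emp-pc-guarantee} hold, and then do a case split on whether the extraneous visitation probability $\Delta := \Delta_h(\Psi_{1:h},\Psiapx_{h+1},\pifinal;\Gamma)$ is large or small. If $\Delta \geq \frac{\xi\alpha}{8AH^2\Cemp}$, then outcome \ref{it:case-large-vis-gamma} holds and we are done. So assume $\Delta < \frac{\xi\alpha}{8AH^2\Cemp}$, and in particular (since $\alpha, A, H \geq 1$) we get $\Delta < \frac{\xi}{4\Cemp}$ as well as $\frac{2AH^2\Cemp}{\alpha}\Delta < \xi/4$; this will let us invoke both parts of \cref{lem:emp-pc-guarantee}, with the $\Delta$-error terms all absorbable into small multiples of $\xi$ or of $\epneg$-type terms. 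The goal is then to prove \cref{eq:aux-coreset-trunc} by splitting the sum $\sum_{j=1}^m \langle \E\sups{\Mbar(\emptyset),\pi}[\phi_h(x_h,a_h)],\hat\mu_{h+1}^j\rangle \cdot g(j)$ into the ``bad'' indices $[m]\setminus\MG$ and the ``good'' indices $\MG$, exactly mirroring the structure of the proof of \cref{lem:aux-pc-coreset} in the reachable case.

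For the bad indices, I would use \cref{it:bad-inds-bound-trunc} of \cref{lem:emp-pc-guarantee}: the total weight $\sum_{j\in[m]\setminus\MG}\langle\E\sups{\Mbar(\emptyset),\pi}[\phi_h(x_h,a_h)],\hat\mu_{h+1}^j\rangle$ is at most $3\xi/2 + \frac{2AH^2\Cemp}{\alpha}\Delta \leq 3\xi/2 + \xi/4$, and then I multiply by the bound $g(j)\leq B$; the approximate-nonnegativity condition \cref{eq:core-nonneg-empty} of \cref{def:trunc-core} (which controls $\sum_j\max\{0,-\langle\E\sups{\Mbar(\emptyset),\pi}[\cdots],\hat\mu_{h+1}^j\rangle\}\leq \epneg\sum_j\|\hat\mu_{h+1}^j\|_1\leq \epneg\Cemp$) handles the contribution of possibly-negative summands, giving a $B\epneg\Cemp$ term. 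For the good indices, I use \cref{it:good-inds-bound-trunc}: for each $j\in\MG$ there is $\pi'\in\Psiapx_{h+1}$ with $\langle\E\sups{\Mbar(\Gamma),\pi'}[\cdots],\hat\mu_{h+1}^j\rangle \geq (\frac{\xi}{4\Cemp}-\Delta)\langle\E\sups{\Mbar(\emptyset),\pi}[\cdots],\hat\mu_{h+1}^j\rangle$. Since $\Delta < \frac{\xi}{8\Cemp}$ we have $\frac{\xi}{4\Cemp}-\Delta \geq \frac{\xi}{8\Cemp}$, so $\langle\E\sups{\Mbar(\emptyset),\pi}[\cdots],\hat\mu_{h+1}^j\rangle \leq \frac{8\Cemp}{\xi}\max_{\pi'\in\Psiapx_{h+1}}\langle\E\sups{\Mbar(\Gamma),\pi'}[\cdots],\hat\mu_{h+1}^j\rangle$; bounding the max by the sum, adding back the $\Mbar(\Gamma)$-nonnegativity slack via \cref{eq:core-nonneg-gamma} (with its $|\Psiapx_{h+1}|\leq 2\Cemp/\xi$ factor, producing the $\Cemp^3\epneg/\xi^2$-type term), and using $g(j)\leq B$ yields the $\frac{8\Cemp}{\xi}\sum_{\pi'\in\Psiapx_{h+1}}\sum_{j=1}^m\langle\E\sups{\Mbar(\Gamma),\pi'}[\cdots],\hat\mu_{h+1}^j\rangle g(j)$ main term. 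Summing the two bounds and collecting the $\xi$-, $B\xi$-, and $\Cemp^3 B\epneg/\xi^2$-contributions (using $\Cemp/\xi\geq 1$ freely, and $\epneg\leq\xi/(4\Cemp H)$ if needed to simplify) gives \cref{eq:aux-coreset-trunc} with the stated constants.

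The main obstacle I anticipate is purely bookkeeping: tracking the three distinct sources of additive error (the $\xi$-slack from the \PC termination condition, the $\Delta$-slack from the \PSDP analysis with truncated covers, and the $\epneg$-slack from the two approximate-nonnegativity conditions), making sure each is correctly weighted by the appropriate power of $\Cemp/\xi$ and by $|\Psiapx_{h+1}|$, and verifying that when $\Delta$ is below the threshold in \ref{it:case-large-vis-gamma} all of these indeed collapse into the constants $33$, $2$, and $8$ appearing in \cref{eq:aux-coreset-trunc}. There is no deep idea beyond what is already in \cref{lem:aux-pc-coreset}; the novelty is only that the case split on $\Delta$ is threaded through, and that $\Mbar(\emptyset)$ appears on the left while $\Mbar(\Gamma)$ appears on the right, which is exactly what the two-sided conditions \cref{eq:core-nonneg-empty,eq:core-nonneg-gamma} of \cref{def:trunc-core} are designed to accommodate.
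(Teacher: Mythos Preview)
Your proposal is correct and follows essentially the same approach as the paper: condition on the event of \cref{lem:emp-pc-guarantee}, assume $\Delta < \frac{\xi\alpha}{8AH^2\Cemp}$ (so that both conclusions of \cref{lem:emp-pc-guarantee} simplify with the $\Delta$-terms absorbed), split into $[m]\setminus\MG$ and $\MG$, and use the two approximate-nonnegativity bounds \cref{eq:core-nonneg-empty,eq:core-nonneg-gamma} together with $|\Psiapx_{h+1}|\leq 2\Cemp/\xi$ exactly as in \cref{lem:aux-pc-coreset}. One tiny slip: you wrote ``since $\alpha,A,H\geq 1$'' when deducing $\Delta<\frac{\xi}{4\Cemp}$, but in fact $\alpha\leq 1$ (and $A,H\geq 1$) is what gives $\frac{\xi\alpha}{8AH^2\Cemp}\leq\frac{\xi}{8\Cemp}$; this does not affect the argument.
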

\begin{proof}
  Recall that the output of $\PC$ is a tuple $(\Psiapx_{h+1}, \MG, \pifinal)$, where $\Psiapx_{h+1} \subset \Pi, \MG \subset [m], \pifinal \in \Pi$. 
By \cref{lem:emp-pc-guarantee}, $|\Psiapx_{h+1}| \leq 2\Cemp/\xi$. Let us assume henceforth that \cref{it:case-large-vis-gamma} does not hold, i.e. $\Delta_h(\Psi_{1:h},\Psiapx_{h+1},\pifinal;\Gamma) < \frac{\xi \alpha}{8AH^2 \Cemp} < \frac{\xi}{4\Cemp}$. We then will prove \cref{it:case-small-vis-gamma}.

Combining the guarantees of \cref{lem:emp-pc-guarantee} with the assumed bound on $\Delta_h(\Psi_{1:h},\Psiapx_{h+1},\pifinal;\Gamma)$, we have that $\Psiapx_{h+1}, \MG, \pifinal$ satisfy (with probability at least $1-\delta$) the following properties for all $\pi \in \Pi$:
  \begin{enumerate}%
  \item \label{it:missed-inds-trunc} $\sum_{j \in [m]\setminus \MG} \lng \E\sups{\Mbar(\emptyset), \pi}[\phi_h(x_h, a_h)], \hat \mu_{h+1}^j \rng \leq 2\xi$.
  \item \label{it:hit-inds-trunc} For all $j \in \MG$, there is some $\pi' \in \Psiapx_{h+1}$ so that $\lng\E\sups{\Mbar(\Gamma),\pi'}[\phi_h(x_h, a_h)], \hat \mu_{h+1}^j \rng \geq \frac{\xi}{8 \Cemp} \cdot \lng \E\sups{\Mbar(\emptyset), \pi}[\phi_h(x_h, a_h)], \hat \mu_{h+1}^j \rng$. 
  \end{enumerate}
  Additionally, the fact that $(\hat \mu_{h+1}^j)_{j=1}^m$ is assumed to be an $(\epapx, \epneg, \Cemp; \Gamma)$-truncated coreset at step $h$ (\cref{def:trunc-core}) guarantees that for any $\pi \in \Pi$, $\Gamma' \in \{\emptyset, \Gamma\}$, and $f: [m]\to[0,B]$,
\begin{align}
  \sum_{j=1}^m \max\left(0, -\langle \EE\sups{\Mbar(\Gamma'), \pi}[\phi_h(x_h,a_h)], \hat\mu_{h+1}^j\rangle \cdot f(j)\right)
  &\leq \sum_{j=1}^m f(j) \cdot \max \left\{ 0, \E\sups{\Mbar(\Gamma'), \pi} \left[ \max_{a' \in \MA} - \lng \phi_h(x_h, a'), \hat \mu_{h+1}^j\rng \right] \right\}\nonumber\\
  &\leq\sum_{j=1}^m f(j) \cdot \E\sups{\Mbar(\Gamma'), \pi} \left[ \max\left\{ 0, \max_{a' \in \MA} -\lng \phi_h(x_h, a'), \hat \mu_{h+1}^j \rng \right\} \right] \nonumber\\
  & \leq  \sum_{j=1}^m B\epneg \norm{\hat\mu_{h+1}^j}_1 \nonumber \\
&\leq B\epneg \Cemp,
\label{eq:hatmu-neg-error-trunc}
\end{align}
where the second inequality uses Jensen's inequality.

We use the above properties to prove the claimed bound \cref{eq:aux-coreset-trunc}. Fix $\pi \in \Pi$ and $g: [m]\to[0,B]$. We separate the left-hand side of \cref{eq:aux-coreset-trunc} into two terms:
\begin{align*} 
&\sum_{j=1}^m \lng \E\sups{\Mbar(\emptyset), \pi}[\phi_h(x_h, a_h)], \hat \mu_{h+1}^j \rng \cdot g(j) \\
&= \underbrace{\sum_{j \in [m] \setminus \MG} \lng \E\sups{\Mbar(\emptyset), \pi}[\phi_h(x_h, a_h)], \hat \mu_{h+1}^j \rng \cdot g(j)}_{\S} + \underbrace{\sum_{j\in \MG} \lng \E\sups{\Mbar(\emptyset), \pi}[\phi_h(x_h, a_h)], \hat \mu_{h+1}^j \rng \cdot g(j)}_{\dag}.
\end{align*}

\paragraph{Bounding $\S$.} By \cref{eq:hatmu-neg-error-trunc} with $\Gamma' = \emptyset$ and the bound $g(j) \in [0,B]$ for all $j$, defining $f(j) := B - g(j) \in [0,B]$, the first term $\S$ can be bounded as %
\begin{align*}
  & \sum_{j \in [m] \backslash \MG} \lng \E\sups{\Mbar(\emptyset), \pi}[\phi_h(x_h, a_h)], \hat \mu_{h+1}^j \rng \cdot g(j)\\
  &= B \cdot \sum_{j \in [m] \backslash \MG} \lng \E\sups{\Mbar(\emptyset), \pi}[\phi_h(x_h, a_h)], \hat \mu_{h+1}^j \rng  - \sum_{j \in [m] \backslash \MG}\E\sups{\Mbar(\emptyset), \pi}[\phi_h(x_h, a_h)], \hat \mu_{h+1}^j \rng \cdot f(j) \\
&\leq  B \cdot \sum_{j \in [m] \backslash \MG} \lng \E\sups{\Mbar(\emptyset), \pi}[\phi_h(x_h, a_h)], \hat \mu_{h+1}^j \rng + B\epneg \Cemp\\
&\leq   2B\xi + B \epneg \Cemp,
\end{align*}
where the last inequality uses \cref{it:missed-inds-trunc} above. 

\paragraph{Bounding $\dag$.} Next,
\begin{align*}
& \sum_{j\in \MG} \lng \E\sups{\Mbar(\emptyset), \pi}[\phi_h(x_h, a_h)], \hat \mu_{h+1}^j \rng \cdot g(j)\\
&\leq \sum_{j \in \MG} \frac{8\Cemp}{\xi}\max_{\pi' \in \Psiapx_{h+1}} \langle \EE\sups{\pi', \Mbar(\Gamma)}[\phi_h(x_h,a_h)],\hat\mu_{h+1}^j\rangle \cdot g(j) \\
&\leq \sum_{j \in \MG} \frac{8\Cemp}{\xi} \left(B|\Psiapx_{h+1}|\epneg\norm{\hat\mu_{h+1}^j}_1 + \sum_{\pi' \in \Psiapx_{h+1}}\langle \EE\sups{\Mbar(\Gamma),\pi'}[\phi_h(x_h,a_h)],\hat\mu_{h+1}^j\rangle \cdot g(j)\right) \\
&\leq \frac{8\Cemp^2 B|\Psiapx_{h+1}|\epneg}{\xi} + \frac{8\Cemp}{\xi} \sum_{\pi' \in \Psiapx_{h+1}} \sum_{j \in \MG} \langle \EE\sups{\Mbar(\Gamma),\pi'}[\phi_h(x_h,a_h)],\hat\mu_{h+1}^j\rangle \cdot g(j) \\
&\leq \frac{16\Cemp^3 B\epneg}{\xi^2} + \frac{8\Cemp}{\xi} \sum_{\pi' \in \Psiapx_{h+1}} \sum_{j \in \MG} \langle \EE\sups{\Mbar(\Gamma),\pi'}[\phi_h(x_h,a_h)],\hat\mu_{h+1}^j\rangle \cdot g(j) \\
&\leq \frac{32\Cemp^3 B\epneg}{\xi^2} + \frac{8\Cemp}{\xi} \sum_{\pi' \in \Psiapx_{h+1}} \sum_{j =1}^m \langle \EE\sups{\Mbar(\Gamma), \pi'}[\phi_h(x_h,a_h)],\hat\mu_{h+1}^j\rangle \cdot g(j)
\end{align*}
where the first inequality uses the property of \cref{it:hit-inds-trunc} above, the second inequality uses the property \cref{eq:core-nonneg-gamma} of \cref{def:trunc-core}, the third inequality uses \cref{it:norm-bound} of \cref{def:trunc-core}, the fourth inequality uses the bound $|\Psiapx_{h+1}| \leq 2\Cemp/\xi$, and the fifth inequality uses \cref{eq:hatmu-neg-error-trunc} with $\Gamma' = \Gamma$ together with the bound $|\Psiapx_{h+1}| \leq 2\Cemp/\xi$.

\paragraph{Putting everything together.} Combining the two bounds $\S$ and $\dag$, and using that $\Cemp/\xi \geq 1$, we get that
\begin{align*} 
\sum_{j=1}^m \lng \E\sups{\Mbar(\emptyset),\pi}[\phi_h(x_h, a_h)], \hat \mu_{h+1}^j \rng \cdot g(j)
&\leq \frac{33\Cemp^3 B\epneg}{\xi^2} + 2B\xi \\
&+ \frac{8\Cemp}{\xi} \sum_{\pi' \in \Psiapx_{h+1}} \sum_{j =1}^m \langle \EE\sups{\Mbar(\Gamma),\pi'}[\phi_h(x_h,a_h)],\hat\mu_{h+1}^j\rangle \cdot g(j)
\end{align*}
as claimed. 
\end{proof}

\cref{lem:pc-induction-trunc} below is an analogue of \cref{lemma:pc-coreset-guarantee} in the unreachable setting: it states that, with high probability, \PC will either return a truncated policy cover for the next step (\cref{it:found-trunc-pc}), or otherwise will find some policy (in the set $\Sigma_h(\Psi_{1:h}, \Psiapx_{h+1}, \pifinal \Gamma)$) that explores states that were truncated in $\Mbar(\Gamma)$  (\cref{it:found-gamma-violation}).
\begin{lemma}
  \label{lem:pc-induction-trunc}
  Suppose $h \in [H]$ is odd, $\epapx, \epneg, \alpha, \Cemp, \delta > 0$, and $\xi \in (0, \Cemp)$. Suppose $\epneg \leq \xi/(4\Cemp H)$. %
  Let $\Psi_1, \ldots, \Psi_h \subset \Pi$ denote $\alpha$-truncated policy covers for steps $1,\dots,h$, and $\Gamma \subset \Pi$ be given. Let $(\hat \mu_{h+1}^j)_{j=1}^m$ denote an $(\epapx, \epneg, \Cemp; \Gamma)$-\trunccore (\cref{def:trunc-core}) at step $h$.

  Suppose that $N \geq \max(\nstat(\xi/(8\Cemp), \alpha, \delta\xi/(6\Cemp)), \nfe(\xi/(8\Cemp), \delta\xi/(6\Cemp)))$, and that
  \begin{align}
 \frac{17\Cemp^2}{\xi^2}\epapx + \frac{33\Cemp^3}{\xi^2} \epneg + {2}\xi \leq \frac{\trunc}{2A}.\label{eq:trunc-param-constraint}
  \end{align}
  Let $(\Psiapx_{h+1}, \pifinal)$ denote the output of \PC (\cref{alg:pc}) with parameters $\xi, \Cemp, N$. Define
  \begin{align}
\Psi_{h+2} := \{ \pi \circ_{h+1} \unif(\MA) \ : \ \pi \in \Psiapx_{h+1} \}.\label{eq:define-psih2-lemma}
  \end{align}
  Then with probability at least $1-\delta$, \textbf{at least one} of the following two statements holds:
  \begin{enumerate}[label=(\alph*)]
  \item \label{it:found-trunc-pc} $\Psi_{h+2}$ is a $\xi^2/(32\Cemp^2 A)$-truncated policy cover at step $h+2$ (\cref{def:ih-trunc}). 
  \item \label{it:found-gamma-violation} $\Delta_h(\Psi_{1:h},\Psiapx_{h+1},\pifinal; \Gamma) \geq \frac{\xi \alpha}{8AH^2 \Cemp}$

  \end{enumerate}
\end{lemma}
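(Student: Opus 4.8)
The plan is to mirror the proof of \cref{lemma:pc-coreset-guarantee} from the reachable case, but replacing the deterministic guarantees of \cref{lem:aux-pc-coreset} with the dichotomy in \cref{lem:newgamma-or-psigood}, and replacing the use of $\etarch$-reachability of $M$ with the reachability of $\Mbar(\emptyset)$ established in \cref{lem:reachability-in-trunc} together with the closeness bounds of \cref{sec:trunc-mdps}. Concretely, I would first invoke \cref{lem:newgamma-or-psigood} (whose preconditions are met by hypothesis): with probability at least $1-\delta$, either \cref{it:case-large-vis-gamma} of that lemma holds, in which case $\Delta_h(\Psi_{1:h},\Psiapx_{h+1},\pifinal;\Gamma) \geq \xi\alpha/(8AH^2\Cemp)$ and we are in case \cref{it:found-gamma-violation}; or else \cref{eq:aux-coreset-trunc} holds for all $\pi$ and all bounded $g$. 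So the rest of the argument assumes \cref{eq:aux-coreset-trunc} and shows that $\Psi_{h+2}$ is a $\xi^2/(32\Cemp^2 A)$-truncated policy cover at step $h+2$.

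Under that assumption, fix $x' \in \MX$ and $\pi \in \Pi$, and apply \cref{it:pol-approx} of \cref{def:trunc-core} (the $\Mbar(\emptyset)$ version) with $g(j) := \langle \phiavg_{h+1}(\tilde x^j),\mu_{h+2}(x')\rangle \in [0,\norm{\mu_{h+2}(x')}_1]$ to write $\langle \E\sups{\Mbar(\emptyset),\pi}[\phiavg_{h+1}(x_{h+1})],\mu_{h+2}(x')\rangle$ as $\sum_{j} \langle \E\sups{\Mbar(\emptyset),\pi}[\phi_h(x_h,a_h)],\hat\mu^j\rangle g(j)$ up to additive error $\epapx\norm{\mu_{h+2}(x')}_1$. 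Then feed this into \cref{eq:aux-coreset-trunc} with $B = \norm{\mu_{h+2}(x')}_1$, bounding the $\sum_j \langle\E\sups{\Mbar(\Gamma),\pi'}[\phi_h],\hat\mu^j\rangle g(j)$ terms on the right back in terms of $\langle\E\sups{\Mbar(\Gamma),\pi'}[\phiavg_{h+1}(x_{h+1})],\mu_{h+2}(x')\rangle$ by another application of \cref{it:pol-approx} (the $\Mbar(\Gamma)$ version, again up to $\epapx\norm{\mu_{h+2}(x')}_1$ per $\pi'$, with $|\Psiapx_{h+1}| \leq 2\Cemp/\xi$). Collecting the error terms — a $(9\Cemp^2/\xi^2)\epapx$ piece from the two $\epapx$ applications scaled by $8\Cemp/\xi \cdot |\Psiapx_{h+1}|$, a $(33\Cemp^3/\xi^2)\epneg$ piece, and a $2\xi$ piece — gives, for every $\pi \in \Pi$ and $x' \in \MX$,
\begin{align*}
\langle \E\sups{\Mbar(\emptyset),\pi}[\phiavg_{h+1}(x_{h+1})],\mu_{h+2}(x')\rangle
&\leq \left(\frac{17\Cemp^2}{\xi^2}\epapx + \frac{33\Cemp^3}{\xi^2}\epneg + 2\xi\right)\norm{\mu_{h+2}(x')}_1 \\
&\quad + \frac{8\Cemp}{\xi}\sum_{\pi'\in\Psiapx_{h+1}} \langle \E\sups{\Mbar(\Gamma),\pi'}[\phiavg_{h+1}(x_{h+1})],\mu_{h+2}(x')\rangle.
\end{align*}

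Now I would use reachability of $\Mbar(\emptyset)$: by \cref{lem:reachability-in-trunc}, for odd $h+2$ (note $h$ odd, so $h+2$ is odd — this is exactly why \SLMt only inducts on odd steps), any $x'$ with $\mu\sups{\Mbar(\emptyset)}_{h+2}(x') \neq 0$ satisfies $\max_{\pi} d\sups{\Mbar(\emptyset),\pi}_{h+2}(x') \geq \trunc\norm{\mu_{h+2}(x')}_1$, hence $\max_\pi \langle\E\sups{\Mbar(\emptyset),\pi}[\phiavg_{h+1}(x_{h+1})],\mu_{h+2}(x')\rangle \geq (\trunc/A)\norm{\mu_{h+2}(x')}_1$; for $x'$ with $\mu\sups{\Mbar(\emptyset)}_{h+2}(x') = 0$ both sides of the target cover inequality are zero (since $d\sups{\Mbar(\emptyset),\pi}_{h+2}(x') = 0$ for all $\pi$) so there is nothing to prove. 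By the parameter constraint \cref{eq:trunc-param-constraint}, the additive error is at most $\frac{\trunc}{2A}\norm{\mu_{h+2}(x')}_1 \leq \frac12 \max_\pi \langle\E\sups{\Mbar(\emptyset),\pi}[\phiavg_{h+1}(x_{h+1})],\mu_{h+2}(x')\rangle$; absorbing it and using $\langle\E\sups{\Mbar(\Gamma),\pi'}[\phiavg_{h+1}],\mu_{h+2}(x')\rangle \leq \langle\E\sups{M,\pi'}[\phiavg_{h+1}],\mu_{h+2}(x')\rangle$ (from \cref{lem:gamma-monotonicity}, which gives $d\sups{\Mbar(\Gamma),\pi'} \leq d\sups{M,\pi'}$ and non-negativity of the relevant inner products) yields $\max_\pi d\sups{\Mbar(\emptyset),\pi}_{h+2}(x') \leq \frac{16\Cemp A}{\xi}\sum_{\pi'\in\Psiapx_{h+1}} d\sups{M,\pi'\circ_{h+1}\unif(\MA)}_{h+2}(x')$. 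Recognizing the sum as $|\Psi_{h+2}|$ times the average over $\Psi_{h+2}$ of $d\sups{M,\cdot}_{h+2}(x')$ and using $|\Psi_{h+2}| = |\Psiapx_{h+1}| \leq 2\Cemp/\xi$, I get $\max_\pi d\sups{\Mbar(\emptyset),\pi}_{h+2}(x') \leq \frac{32\Cemp^2 A}{\xi^2} \cdot \frac{1}{|\Psi_{h+2}|}\sum_{\pi'\in\Psi_{h+2}} d\sups{M,\pi'}_{h+2}(x')$, which is exactly the $\xi^2/(32\Cemp^2 A)$-truncated policy cover condition \cref{eq:pc-trunc} at step $h+2$, establishing \cref{it:found-trunc-pc}.

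The main obstacle I anticipate is bookkeeping the several layers of approximation error and verifying they all telescope into the clean constants $17\Cemp^2/\xi^2$, $33\Cemp^3/\xi^2$, $2\xi$ appearing in \cref{eq:trunc-param-constraint} — in particular making sure the $\Mbar(\Gamma)$-vs-$\Mbar(\emptyset)$ mismatch between \cref{it:pol-approx}'s two clauses and between the two sides of \cref{eq:aux-coreset-trunc} is handled correctly, and that the factor-of-$A$ losses from passing between $\phiavg_{h+1}$ and $\phi_{h+1}(\cdot,\cdot)$ are inserted in the right places. A secondary subtlety is confirming the edge case $h+2 > H$ (handled trivially) and that the "uniform action at step $h+1$" composition interacts correctly with the definition of $\Psi_{h+2}$ in \cref{eq:define-psih2-lemma} — but this is identical to the reachable-case argument in \cref{sec:pc-analysis-reachable}.
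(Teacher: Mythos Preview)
Your proposal is correct and follows essentially the same approach as the paper's proof: invoke \cref{lem:newgamma-or-psigood} for the dichotomy, then under \cref{eq:aux-coreset-trunc} apply both clauses of \cref{it:pol-approx}, feed into \cref{eq:aux-coreset-trunc}, use reachability of $\Mbar(\emptyset)$ at the odd step $h+2$ via \cref{lem:reachability-in-trunc}, absorb the additive error using \cref{eq:trunc-param-constraint}, and pass from $\Mbar(\Gamma)$ to $M$ via \cref{lem:gamma-monotonicity}. One small point you gloss over that the paper makes explicit: the very first step --- rewriting $\langle \E\sups{\Mbar(\emptyset),\pi}[\phiavg_{h+1}(x_{h+1})],\mu_{h+2}(x')\rangle$ as $\sum_{x\in\MX}\langle\E\sups{\Mbar(\emptyset),\pi}[\phi_h(x_h,a_h)],\mu_{h+1}(x)\rangle\langle\phiavg_{h+1}(x),\mu_{h+2}(x')\rangle$ --- uses that $h+1$ is \emph{even}, so $\Xreach_{h+1}(\emptyset)=\MX$ and hence $\mu_{h+1}\sups{\Mbar(\emptyset)}(x)=\mu_{h+1}(x)$ for all $x\in\MX$; you only mention the odd/even structure in connection with reachability at step $h+2$.
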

\begin{proof}
Let us choose states $x_{h+1}^1, \ldots, x_{h+1}^m \in \MX$ per \cref{it:pol-approx} of \cref{def:trunc-core} so that for any $\pi \in \Pi$ and $w \in \BR^d \times \{ 0\}$,
  \begin{align}
   &  \left| \sum_{x \in \MX} \lng \E\sups{\Mbar(\emptyset), \pi}[\phi_h(x_h, a_h)], \mu_{h+1}(x) \rng  \lng \phiavg_{h+1}(x), w \rng -\sum_{j=1}^m \lng \E\sups{\Mbar(\emptyset), \pi}[\phi_h(x_h, a_h)], \hat \mu_{h+1}^j \rng  \lng  \phiavg_{h+1}(x_{h+1}^j), w \rng \right| \nonumber\\
    &\leq  \epapx \| w \|_1\label{eq:hatmu-approx-ep0-trunc0}\\
    &  \left| \sum_{x \in \MX} \lng \E\sups{\Mbar(\Gamma), \pi}[\phi_h(x_h, a_h)], \mu_{h+1}(x) \rng  \lng \phiavg_{h+1}(x), w \rng -\sum_{j=1}^m \lng \E\sups{\Mbar(\Gamma), \pi}[\phi_h(x_h, a_h)], \hat \mu_{h+1}^j \rng  \lng  \phiavg_{h+1}(x_{h+1}^j), w \rng \right|\nonumber\\
    &\leq \epapx \| w \|_1\label{eq:hatmu-approx-ep0-truncg}.
  \end{align}
  By choice of $\xi,\epneg, N$, the assumption that $\Psi_{1:h}$ are $\alpha$-truncated policy covers for steps $1,\dots,h$, and the fact that $(\hat\mu_{h+1}^j)_{j=1}^m$ is a \trunccore, we can invoke \cref{lem:newgamma-or-psigood}. By \cref{lem:newgamma-or-psigood}, there is an event $\ME$ that occurs with probability at least $1-\delta$ (over the randomness in \PC) so that, under $\ME$, either the output $(\Psiapx_{h+1}, \pifinal)$ of \PC satisfies $\Delta_h(\Psi_{1:h}, \Psiapx_{h+1}, \pifinal;\Gamma) \geq \frac{\xi\alpha}{8AH^2 \Cemp}$, or else \cref{eq:aux-coreset-trunc} holds. The former case is exactly \cref{it:found-gamma-violation} of the lemma statement.
  So let us assume from here on that $\Psiapx_{h+1}$ satisfies \cref{eq:aux-coreset-trunc}.

  Fix any $\pi \in \Pi$ and $x' \in \MX$. Then we may compute
  \begin{align}
    & \lng \E\sups{\Mbar(\emptyset), \pi}[\phiavg_{h+1}(x_{h+1})], \mu_{h+2}(x') \rng \nonumber\\
    &= \sum_{x \in \MX} \lng \E\sups{\Mbar(\emptyset), \pi}[ \phi_h\sups{\Mbar(\emptyset)}(x_h, a_h)], \mu_{h+1}\sups{\Mbar(\emptyset)}(x) \rng \cdot \lng \phiavg_{h+1}(x), \mu_{h+2}(x') \rng \nonumber\\
    &= \sum_{x \in \MX} \lng \E\sups{\Mbar(\emptyset), \pi}[ \phi_h(x_h, a_h)], \mu_{h+1}(x) \rng \cdot \lng \phiavg_{h+1}(x), \mu_{h+2}(x') \rng \nonumber\\
    &\leq  \epapx \cdot \| \mu_{h+2}(x') \|_1 + \sum_{j=1}^m \lng \E\sups{\Mbar(\emptyset),\pi}[\phi_h(x_h, a_h)], \hat \mu_{h+1}^j \rng \cdot \lng  \phiavg_{h+1}(x_{h+1}^j), \mu_{h+2}(x') \rng \nonumber\\
    &\leq  \epapx \cdot \| \mu_{h+2}(x') \|_1 + \frac{33\Cemp^3 \norm{\mu_{h+2}(x')}_1 \epneg}{\xi^2} + 2\xi \cdot \| \mu_{h+2}(x') \|_1 \nonumber\\
    &\qquad+ \frac{8\Cemp}{\xi} \cdot  \sum_{\pi' \in \Psiapx_{h+1}} \sum_{j=1}^m  \lng \E\sups{\Mbar(\Gamma),\pi'}[\phi_h(x_h, a_h)], \hat \mu_{h+1}^j \rng \cdot \lng  \phiavg_{h+1}(x_{h+1}^j), \mu_{h+2}(x') \rng \nonumber\\
    &\leq \epapx \cdot \| \mu_{h+2}(x') \|_1 + \frac{33\Cemp^3 \norm{\mu_{h+2}(x')}_1 \epneg}{\xi^2} + {2}\xi \cdot \| \mu_{h+2}(x') \|_1 \nonumber\\
    &\qquad + \frac{8\Cemp}{\xi} \sum_{\pi' \in \Psiapx_{h+1}} \left( \epapx \cdot \| \mu_{h+2}(x') \|_1 + \sum_{x \in \MX} \lng \E\sups{\Mbar(\Gamma),\pi'}[\phi_h(x_h, a_h)], \mu_{h+1}(x) \rng \cdot \lng \phiavg_{h+1}(x), \mu_{h+2}(x') \rng\right)\nonumber\\
    &\leq \left( \frac{17\Cemp^2}{\xi^2}\epapx+ \frac{33\Cemp^3 \epneg}{\xi^2} + 2\xi \right) \cdot \| \mu_{h+2}(x') \|_1 \nonumber\\
    &\qquad+ \frac{8\Cemp}{\xi} \sum_{\pi' \in \Psiapx_{h+1}} \sum_{x \in \MX} \lng \E\sups{\Mbar(\Gamma),\pi'}[\phi_h(x_h, a_h)], \mu_{h+1}(x) \rng \cdot \lng \phiavg_{h+1}(x), \mu_{h+2}(x') \nonumber \\
    &= \left( \frac{17\Cemp^2}{\xi^2}\epapx+ \frac{33\Cemp^3 \epneg}{\xi^2} + 2\xi \right) \cdot \| \mu_{h+2}(x') \|_1 \nonumber\\
    &\qquad+  \frac{8\Cemp}{\xi} \cdot \sum_{\pi' \in \Psiapx_{h+1}} \lng \E\sups{\Mbar(\Gamma),\pi'}[\phiavg_{h+1}(x_{h+1})], \mu_{h+2}(x') \rng \label{eq:vis-bound-1},
  \end{align}
  where the first equality uses the definition of the transition dynamics of $\Mbar(\emptyset)$ as well as the fact that $\langle \phiavg_{h+1}(\term), \mu_{h+2}(x')\rangle = 0$ for all $x'\in\MX$; the second equality uses that $h+1$ is even, so $\Xreach_{h+1}(\emptyset) = \MX$ and hence $\langle \phi\sups{\Mbar(\emptyset)}_h(x,a), \mu\sups{\Mbar(\emptyset)}_{h+1}(x')\rangle = \langle \phi_h(x,a), \mu_{h+1}(x')\rangle$ for all $x,x'\in\MX$, $a \in \MA$; the first inequality uses \cref{eq:hatmu-approx-ep0-trunc0} and the fact that $\mu_{h+2}(x') \in \BR^d \times \{ 0 \}$ as $x' \in \MX$; the second inequality uses \cref{eq:aux-coreset-trunc} with the function $g(j) := \langle \phiavg_{h+1}(x_{h+1}^j),\mu_{h+2}(x')\rangle$ (note that $0 \leq g(j) \leq \norm{\mu_{h+2}(x')}_1$ for all $j \in [m]$ and $x' \in \MX$); the third inequality uses \cref{eq:hatmu-approx-ep0-truncg}; the fourth inequality collects terms and uses the bounds $|\Psiapx_{h+1}| \leq 2\Cemp/\xi$ and $1 \leq \Cemp/\xi$; and the final equality uses the definition of the transition dynamics of $\Mbar(\Gamma)$ as well as the facts that $\Xreach_{h+1}(\Gamma)=\MX$ and $\langle\phiavg_{h+1}(\term),\mu_{h+2}(x')\rangle=0$.

Next, since $h+2$ is odd (and at least $2$), by \cref{lem:reachability-in-trunc} with $\Gamma = \emptyset$, we have that for any $x' \in \Xreach_{h+2}(\emptyset)$,
  \begin{align}
   \max_{\pi \in \Pi} \lng \E\sups{\Mbar(\emptyset), \pi}[\phiavg_{h+1}(x_{h+1})], \mu_{h+2}(x') \rng 
   &=  \max_{\pi \in \Pi} d^{\Mbar(\emptyset),\pi\circ_{h+1}\unif(\MA)}_{h+2}(x') \nonumber\\
    &\geq   \frac{1}{A}\max_{\pi \in \Pi} d\sups{\Mbar(\emptyset),\pi}_{h+2}(x') \nonumber\\
    &\geq   \frac{\trunc}{A} \cdot \| \mu_{h+2}(x') \|_1\nonumber.
  \end{align}
  Thus, for any $x' \in \Xreach_{h+2}(\emptyset)$, the additive error term in \cref{eq:vis-bound-1} is bounded as follows:
  \begin{align}
    \left( \frac{17\Cemp^2}{\xi^2}\epapx + \frac{33\Cemp^3 \epneg}{\xi^2} + {2}\xi \right) \cdot \| \mu_{h+2}(x') \|_1 
    &\leq  \frac{\trunc}{2A} \cdot \| \mu_{h+2}(x') \|_1\nonumber\\
    &\leq \frac 12 \cdot  \max_{\pi \in \Pi} \lng \E\sups{\Mbar(\emptyset), \pi}[\phiavg_{h+1}(x_{h+1})], \mu_{h+2}(x') \rng\nonumber,
  \end{align}
  where the first inequality uses \cref{eq:trunc-param-constraint}. Using the above display in \cref{eq:vis-bound-1} and rearranging  terms, we obtain that for all $x' \in \Xreach_{h+2}(\emptyset)$,
  \begin{align}
    \max_{\pi \in \Pi} \lng \E\sups{\Mbar(\emptyset), \pi} [\phiavg_{h+1}(x_{h+1})], \mu_{h+2}(x') \rng 
    &\leq  \frac{16\Cemp}{\xi} \cdot \sum_{\pi' \in \Psiapx_{h+1}} \lng \E\sups{\Mbar(\Gamma), \pi'}[\phiavg_{h+1}(x_{h+1})], \mu_{h+2}(x')\rng \nonumber\\
    &=  \frac{16\Cemp}{\xi} \cdot \sum_{\pi' \in \Psiapx_{h+1}} \lng \E\sups{\Mbar(\Gamma), \pi' \circ_{h+1} \unif(\MA)}[\phi_{h+1}(x_{h+1}, a_{h+1})], \mu_{h+2}(x')\rng \nonumber\\
    &= \frac{16\Cemp}{\xi} \cdot \sum_{\pi' \in \Psi_{h+2}} \lng \E\sups{\Mbar(\Gamma), \pi'}[\phi_{h+1}(x_{h+1}, a_{h+1})], \mu_{h+2}(x')\rng \nonumber\\
    &\leq  \frac{32\Cemp^2}{\xi^2} \cdot \frac{1}{|\Psi_{h+2}|} \cdot  \sum_{\pi' \in \Psi_{h+2}} \lng \E\sups{\Mbar(\Gamma), \pi'}[\phi_{h+1}(x_{h+1}, a_{h+1})], \mu_{h+2}(x')\rng\nonumber,
  \end{align}
where the first equality uses the definition of $\phiavg_{h+1}$, the second equality uses the definition of $\Psi_{h+2}$ in \cref{eq:define-psih2-lemma}, and the final inequality uses that $|\Psi_{h+2}| = |\Psiapx_{h+1}| \leq 2\Cemp/\xi$ (\cref{lem:emp-pc-guarantee}). Finally, for any $x' \in \MX$ and $\pi \in \Pi$ we know that
\[\lng \E\sups{\Mbar(\emptyset), \pi}[ \phiavg_{h+1}(x_{h+1})], \mu_{h+2}(x')\rng \geq \frac{1}{A} \lng \E\sups{\Mbar(\emptyset), \pi}[\phi_{h+1}(x_{h+1},a_{h+1})], \mu_{h+2}(x')\rng.\]
It follows that, for all $x' \in \Xreach_{h+2}(\emptyset)$, 
\begin{align}
  &\max_{\pi \in \Pi} \lng \E\sups{\Mbar(\emptyset), \pi}[\phi_{h+1}(x_{h+1},a_{h+1})], \mu_{h+2}(x')\rng \nonumber\\
  &\leq \frac{32\Cemp^2 A}{\xi^2} \frac{1}{|\Psi_{h+2}|}\sum_{\pi' \in \Psi_{h+2}} \lng \E\sups{\Mbar(\Gamma), \pi'}[\phi_{h+1}(x_{h+1},a_{h+1})], \mu_{h+2}(x') \rng \nonumber\\
  &\leq \frac{32\Cemp^2 A}{\xi^2} \frac{1}{|\Psi_{h+2}|}\sum_{\pi' \in \Psi_{h+2}} \lng \E\sups{M, \pi'}[\phi_{h+1}(x_{h+1},a_{h+1})], \mu_{h+2}(x') \rng \nonumber
\end{align}
where the last inequality uses that $d_{h+1}\sups{\Mbar(\Gamma),\pi'}(x) \leq d_{h+1}\sups{M,\pi'}(x)$ for all $x\in\MX$ (\cref{lem:gamma-monotonicity}) as well as the fact that $\lng \phi_{h+1}(x,a), \mu_{h+2}(x') \rng \geq 0$ for all $x,a,x'$. Thus, $\Psi_{h+2}$ satisfies the coverage condition \cref{eq:pc-trunc} of \cref{def:ih-trunc} for all states $x \in \Xreach_{h+2}(\emptyset)$. It remains to observe that \cref{eq:pc-trunc} holds trivially when $x \in \MX \setminus \Xreach_{h+2}(\emptyset)$, since $d\sups{\Mbar(\emptyset),\pi}_{h+2}(x) = 0$ for such $x$. This verifies \cref{it:found-trunc-pc} of the lemma statement.
\end{proof}

\subsection{Analysis of the full algorithm: \SLMt}
\label{sec:slmt-analysis}
We now analyze the full algorithm \SLMt{} (\cref{alg:slm-trunc}), which consists of $T$ phases. Each phase starts with a backup policy cover $\Gamma^t$ (which grows as $t$ increases) and seeks to construct truncated policy covers for steps $1,\dots,H$ as per \cref{def:ih-trunc}. As a consequence of \cref{lem:pc-induction-trunc}, for any given phase $t$, where the backup policy cover is $\Gamma^t$, for any step $h$ where we have already constructed policy covers $\Psi_{1:h}$ for steps $1,\dots,h$, if the extraneous visitation probability $\Delta_h(\Psi_{1:h},\Psiapx_{h+1},\pifinal;\Gamma^t)$ is small, then the induction step succeeds and we can construct a policy cover at step $h+2$. It remains to show that there is some phase $t$ during which the extraneous visitation probabilities are small at every step $h$; this is the content of \cref{lem:gamma-stop} below.
  \begin{lemma}
    \label{lem:gamma-stop}
Let $\epfinal,\delta>0$, and consider the execution of $\SLMt(\epfinal,\delta)$. Recall the definitions of $T$, $\tsmall$, and $\xi$ (\cref{line:mess}). Consider the event in which \ESCt never outputs $\perp$ in phases $1,\dots,T$. Then there is some $t \in [T]$ so that for all $h \in [H-2]$ with $h$ odd, the values of $\Gamma^t, \Psi_{1:h}^t, \Psiapxt{t}_{h+1}, \pifinals{t,h}$ maintained by $\SLMt(\epfinal,\delta)$ satisfy 
\[\Delta_h(\Psi_{1:h}^t, \Psiapxt{t}_{h+1}, \pifinals{t,h}; \Gamma^t) < \rho := \max\left(\frac{4H\Cnrm}{T}, \sqrt{\frac{16\Cnrm^3 H^3 \tsmall}{\xi}}\right).\] 
\end{lemma}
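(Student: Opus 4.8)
\textbf{Proof proposal for \cref{lem:gamma-stop}.}

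The plan is to argue by a potential/progress argument on the total $\ell_1$-mass of the $\mu_h$ vectors at states that are ``newly discovered'' in each phase. Fix $\epfinal, \delta$ and consider the run of $\SLMt(\epfinal,\delta)$, conditioned on the event that $\ESCt$ never returns $\perp$. Suppose, for contradiction, that for \emph{every} phase $t \in [T]$ there is some odd $h = h(t) \in [H-2]$ with $\Delta_{h(t)}(\Psi_{1:h(t)}^t, \Psiapxt{t}_{h(t)+1}, \pifinals{t,h(t)}; \Gamma^t) \geq \rho$. By definition of the extraneous visitation probability (\cref{def:extra-vis-prob}), this means there is a policy $\pi^t \in \Sigma_{h(t)}(\Psi_{1:h(t)}^t, \Psiapxt{t}_{h(t)+1}, \pifinals{t,h(t)})$ and a step $g^t \le h(t)$ with $d_{g^t}\sups{M,\pi^t}(\MX\setminus\Xreach_{g^t}(\Gamma^t)) \ge \rho$. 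The key point is that, by inspection of \SLMt (\cref{line:append-gammat}), all policies in $\Sigma_{h}(\cdots)$ are exactly the ones appended to $\Gamma^t$ in phase $t$, so $\pi^t \in \Gamma^{t+1}$. Now I would use the definition of $\Xreach_{g}(\Gamma)$ in \cref{eq:define-xreach-gamma}: once $\pi^t \in \Gamma^{t+1}$, any state $x$ with $d_{g^t}\sups{M,\pi^t}(x) \ge \tsmall\|\mu_{g^t}(x)\|_1$ lies in $\Xreach_{g^t}(\Gamma^{t+1})$ (and hence in $\Xreach_{g^t}(\Gamma^{t'})$ for all $t' > t$, since $\Gamma$ only grows and $\Xreach$ is monotone in $\Gamma$). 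So the ``mass'' of states at step $g^t$ that are excluded from $\Xreach_{g^t}(\Gamma^t)$ but captured afterward is substantial.

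Concretely, let $\MY^t := \MX\setminus\Xreach_{g^t}(\Gamma^t)$ intersected with $\{x : d_{g^t}\sups{M,\pi^t}(x) \ge \tsmall\|\mu_{g^t}(x)\|_1\}$. On one hand, $d_{g^t}\sups{M,\pi^t}(\MX\setminus\Xreach_{g^t}(\Gamma^t)) \ge \rho$; on the other hand the contribution of states \emph{not} in $\MY^t$ is at most $\sum_x \tsmall\|\mu_{g^t}(x)\|_1 \le \tsmall\Cnrm$, which by the choice $\rho \ge \sqrt{16\Cnrm^3 H^3\tsmall/\xi}$ (and crude bounds on $\xi,H$) is at most $\rho/2$. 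Hence $d_{g^t}\sups{M,\pi^t}(\MY^t) \ge \rho/2$, so using $d_{g^t}\sups{M,\pi^t}(x) \le \|\mu_{g^t}(x)\|_1$ for all $x$ we get $\sum_{x\in\MY^t}\|\mu_{g^t}(x)\|_1 \ge \rho/2$. After phase $t$, every such $x$ is in $\Xreach_{g^t}(\Gamma^{t'})$ for all $t' > t$. Therefore, if I track the monotone-increasing sets $\bigcup_h \Xreach_h(\Gamma^t)$ weighted by $\|\mu_h(\cdot)\|_1$, each phase $t$ adds at least $\rho/2$ to this potential, and the potential is bounded above by $\sum_{h}\sum_x\|\mu_h(x)\|_1 \le H\Cnrm$. (One must be slightly careful that the $\MY^t$ across phases contribute disjoint new mass at the relevant step $g^t$; this follows because a state counted in phase $t$ is already in $\Xreach_{g^t}(\Gamma^{t'})$ for $t' > t$ and so cannot be in $\MY^{t'}$ for that same $g$. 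If two phases use different steps $g$, that is fine — we can sum the per-step potentials.) Consequently the number of phases is at most $2H\Cnrm/\rho \cdot$(small factor), and by the choice $\rho \ge 4H\Cnrm/T$ this is strictly less than $T$, contradicting the assumption that every phase $t \in [T]$ fails. Hence some phase $t$ has all extraneous visitation probabilities below $\rho$.

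The main obstacle I anticipate is the bookkeeping of \emph{disjointness} of the new mass across phases: the argument needs that the $\rho/2$ of newly-covered $\mu$-mass in phase $t$ was genuinely not already covered in an earlier phase. This is where monotonicity of $\Xreach_g(\Gamma)$ in $\Gamma$ (built into \cref{eq:define-xreach-gamma}, since $\Gamma^t \subseteq \Gamma^{t+1}$ forces $\Xreach_g(\Gamma^t) \subseteq \Xreach_g(\Gamma^{t+1})$) does the heavy lifting, together with the fact that the violating state $x$ lies \emph{outside} $\Xreach_{g^t}(\Gamma^t)$ by definition of $\Delta$. A secondary subtlety is matching constants: one has to verify that $\tsmall\Cnrm \le \rho/2$ given $\rho \ge \sqrt{16\Cnrm^3 H^3\tsmall/\xi}$ and the global settings of $\tsmall,\xi$ from \cref{line:mess}; since $\tsmall$ is polynomially tiny relative to everything else (see \cref{section:extended-overview}), this is a routine check that I would relegate to a displayed inequality. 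I would also need the elementary observation that $\pi \circ_{h_0}\unif(\MA)\circ_{h_0+1}\pifinal$ and $\Psiapx_{h+1}$ policies are precisely what \SLMt\ adds to $\Gamma$ in \cref{line:append-gammat}, which is immediate from the pseudocode.
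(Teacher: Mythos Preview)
Your high-level plan (potential argument on newly discovered $\|\mu_g\|_1$-mass, disjointness across phases) is exactly right and matches the paper. But there is a genuine gap in the key step, stemming from a misreading of the definition of $\Xreach_g(\Gamma)$.

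The definition in \cref{eq:define-xreach-gamma} says $x \in \Xreach_g(\Gamma)$ when $\E_{\pi'\sim\unif(\Gamma)}[d_g\sups{M,\pi'}(x)] \ge \tsmall\|\mu_g(x)\|_1$, i.e.\ the \emph{uniform average} over $\Gamma$ must be large, not the maximum. Hence your claim ``once $\pi^t\in\Gamma^{t+1}$, any $x$ with $d_{g^t}\sups{M,\pi^t}(x)\ge\tsmall\|\mu_{g^t}(x)\|_1$ lies in $\Xreach_{g^t}(\Gamma^{t+1})$'' does not follow: having $\pi^t\in\Gamma^{t'}$ only gives $\E_{\pi'\sim\unif(\Gamma^{t'})}[d_{g^t}\sups{M,\pi'}(x)] \ge d_{g^t}\sups{M,\pi^t}(x)/|\Gamma^{t'}|$, which falls short of $\tsmall\|\mu_{g^t}(x)\|_1$ by a factor $|\Gamma^{t'}|$. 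For the same reason, your asserted monotonicity $\Xreach_g(\Gamma^t)\subseteq\Xreach_g(\Gamma^{t+1})$ is \emph{false} in general: adding policies to $\Gamma$ can dilute the average visitation at a state.

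The paper fixes exactly this. It first bounds $|\Gamma^{t'}| \le W := \tfrac{2T_0\Cnrm H^2}{\xi}$ uniformly for $t'\le T_0 := 4H\Cnrm/\rho$ (using the size bound on $\Sigma_h$ from \cref{lem:pc-size-bound}), and then defines the ``good'' set with threshold $W\tsmall$ rather than $\tsmall$:
\[
\MZ^t := \bigl\{(x,g^t): x\in\MX\setminus\Xreach_{g^t}(\Gamma^t)\ \text{and}\ d_{g^t}\sups{M,\pi^t}(x)\ge W\tsmall\cdot\|\mu_{g^t}(x)\|_1\bigr\}.
\]
Now $(x,g^t)\in\MZ^t$ and $t<t'\le T_0$ yield $\E_{\pi'\sim\unif(\Gamma^{t'})}[d_{g^t}\sups{M,\pi'}(x)]\ge W^{-1}d_{g^t}\sups{M,\pi^t}(x)\ge\tsmall\|\mu_{g^t}(x)\|_1$, so $x\in\Xreach_{g^t}(\Gamma^{t'})$ and disjointness of the $\MZ^t$ for $t\le T_0$ follows directly. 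The cost is that the ``leftover'' mass is now $W\tsmall\Cnrm$ rather than $\tsmall\Cnrm$; showing $W\tsmall\Cnrm\le\rho/2$ is precisely where \emph{both} terms in the definition of $\rho$ are used (since $W = 8H^3\Cnrm^2/(\rho\xi)$, the inequality $W\tsmall\Cnrm\le\rho/2$ unpacks to $\rho^2\ge 16\Cnrm^3 H^3\tsmall/\xi$). Your proposal treats this as a ``routine check that $\tsmall\Cnrm\le\rho/2$,'' which is too weak by the factor $W$ and is the reason the square-root term in $\rho$ exists at all.
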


We next discuss the main idea behind the proof of \cref{lem:gamma-stop}. At any phase $t$, if some policy $\pi$ (belonging to any of the policy sets $\Sigma_h(\Psi_{1:h}^t,\Psiapxt{t}_{h+1},\pifinal)$) visits a state $x \in \MX\setminus\Xreach_g(\Gamma^t)$ with at least some probability $p \cdot \norm{\mu_g(x)}_1$, then since $\pi$ gets added to subsequent backup covers (\cref{line:append-gammat}), at every subsequent phase $t'$, a uniformly random policy from $\Gamma^{t'}$ will visit $x$ with probability at least $\frac{p}{|\Gamma^{t'}|}\cdot \norm{\mu_g(x)}_1$. If $p/|\Gamma^{t'}| \geq \tsmall$, then by definition we will have $x \in \Xreach_g(\Gamma^{t'})$, so $x$ cannot contribute to any subsequent extraneous visitation probabilities. Of course, since $|\Gamma^{t'}|$ typically grows linearly in $t'$, the inequality $p / |\Gamma^{t'}| \geq \tsmall$ cannot hold indefinitely. However, if we restrict attention to the first $O(1/\delta)$ phases and assume that $\tsmall = O(\delta^2)$ (ignoring factors of $\xi$,$\Cnrm$,$H$, etc.) we can show that any state $x$ contributes extraneous visitation probability $\Omega(\delta) \cdot \norm{\mu_g(x)}_1$ to at most one phase (for each step $g$). The sum of all other contributions at each phase can be bounded by $O(\delta \cdot \Cnrm)$. As long as $\delta$ is sufficiently small, it follows that there must be some phase in the first $O(1/\delta)$ phases which has sufficiently small extraneous visitation probability of roughly $O(\delta)$ (ignoring factors of $\xi, \Cnrm, H$, etc.). 
We emphasize that, as long as the algorithm \SLMt{} does not 
prematurely exit (which happens if the convex program in \ESCt is infeasible at some step), this guarantee holds deterministically. %

\begin{algorithm}
  \caption{$\Paramst(A,H,\Cnrm,\delta,\epfinal,d)$: Para\textbf{ME}ter \textbf{S}etting\textbf{S} for \SLMt} 
  \label{alg:slm-params}
  \begin{algorithmic}[1]\onehalfspacing
    \Require action count $A$, horizon $H$, norm param.~$\Cnrm$, failure prob.~$\delta$, error $\epfinal$, dimension $d$.%
    \State $\trunc \gets \frac{\epfinal}{8\Cnrm^3 H^4}$.
    \State $\xi \gets \frac{\trunc}{16A}$.
    \State $\alpha \gets \frac{\xi^2}{32\Cnrm^2 A^2}$. 
    \State $T \gets \frac{32AH^3\Cnrm^2}{\xi\alpha}$. 
    \State $\tsmall \gets \frac{\xi^3\alpha^2}{1024A^2H^7\Cnrm^5}$. %
    \State $\epapx \gets \frac{\trunc\xi^2}{136\Cnrm^2A}$.
    \State $\epneg \gets \frac{\trunc\xi^2}{264\Cnrm^2HA}$.
    \State $\epcvx \gets \epapx \cdot \min \left\{ \sqrt{\alpha/A}/4, \sqrt{\tsmall/A}/4 \right\}$. 
    \State $m \gets 2\max\{C_{\ref{thm:mu-coreset-trunc}},C_{\ref{lem:feasibility-trunc}}\} \cdot \frac{\Cnrm^6}{\epcvx^8}\cdot\log\left(\max\{C_{\ref{thm:mu-coreset-trunc}},C_{\ref{lem:feasibility-trunc}}\} \cdot \frac{2dTH}{\delta} \cdot \frac{\Cnrm^6}{\epcvx^8}\right)$.
    \State $n \gets \max\{C_{\ref{thm:mu-coreset-trunc}},C_{\ref{lem:feasibility-trunc}}\} \cdot \max\left\{\frac{A^2 \Cnrm^2 H^3 \log(AH\Cnrm/\epneg) \log(2dTH\sqrt{m}/\delta)}{\epneg^4 \tsmall}, \frac{16\Cnrm^2 A^2 (\tsmall^{-2} + \alpha^{-2}) \log(2dTH\sqrt{m}/\delta)}{\epapx^4} \right\}$.
    \State $N \gets \max(\nstat(\xi/(8\Cnrm), \alpha, \delta\xi/(12TH\Cnrm)), \nfe(\xi/(8\Cnrm), \delta\xi/(12TH\Cnrm)))$.\label{line:define-N-trunc}
    \State \Return $(T, \trunc, \tsmall, \alpha, \xi, \epapx, \epneg, \epcvx, n,m,N)$.
    \end{algorithmic}

\end{algorithm}

\begin{algorithm}
	\caption{$\SLMt(\epfinal, \delta)$: Explore $\ell_1$-Bounded Linear MDP}
	\label{alg:slm-trunc}
	\begin{algorithmic}[1]\onehalfspacing
		\Require Error tolerance $\epfinal$; failure probability $\delta$
		\State Let $\pi_\unif = \unif(\MA) \circ \unif(\MA) \circ \dots \circ \unif(\MA)$ be the uniform policy
        \State $(T, \trunc, \tsmall, \alpha, \xi, \epapx, \epneg, \epcvx, n,m,N) \gets \Paramst(A, H, \Cnrm, \delta, \epfinal, d)$ \label{line:mess}\Comment{\cref{alg:slm-params}}.
                \State $\Gamma^1 \gets \emptyset$
        
        \For{$1 \leq t \leq T$}\Comment{Phase $t$}
        \State Set $\Psi_1^t = \Psi_2^t = \{\pi_\unif\}$
        \For{$1 \leq h \leq H-2$ with $h$ odd}
            \State $(\hat\mu_{h+1}^{j,t})_{j=1}^m \gets \ESCt(h,\Psi_h^t,\Gamma^t,\epapx\sqrt{\alpha/(4A)},\Cnrm,n,m)$ \label{line:trunc-emulator}
            \Statex\Comment{\cref{alg:esc-trunc}}
            \If{$(\hat \mu_{h+1}^{j,t})_{j=1}^m = \perp$}\Comment{\emph{$\ESCt$ returned $\perp$}}
            \State \Return $\perp$ 
            \EndIf
            \State $(\Psiapxt{t}_{h+1}, \pifinals{t,h}) \gets \PC(h,(\hat\mu_{h+1}^{j,t})_{j=1}^m,\xi,\Cnrm,\Psi_{1:h-1}^t,N)$ \label{line:trunc-greedycover}\Comment{\cref{alg:pc}}
            \State $\Psi_{h+2}^t \gets \{\pi \circ_{h+1} \unif(\MA): \pi \in \Psiapxt{t}_{h+1}\}$ \label{line:odd-psi}
            \State $\Psi_{h+3}^t \gets  \{\pi \circ_{h+2} \unif(\MA): \pi \in \Psi_{h+2}^t\}$.\label{line:even-psi} %
            \EndFor
            \State Set $\Gamma^{t+1} \gets \Gamma^t \cup \bigcup_{h\in [H-2], h \ \mathrm{ odd}} \Sigma_h(\Psi_{1:h}^t, \Psiapxt{t}_{h+1}, \pifinals{t,h})$  \label{line:append-gammat}\Comment{(\cref{def:extra-vis-prob})}
         \EndFor
        \State \textbf{Return:} $\{\Psi_{1:H}^t\}_{1 \leq t \leq T}$, $\alpha$, $T$
	\end{algorithmic}
  \end{algorithm}

\begin{proof}[Proof of \cref{lem:gamma-stop}]
  Set $T_0 := 4H\Cnrm/\rho$ and $W := \frac{2T_0 \Cnrm H^2}{\xi}$, so that $T \geq T_0$. Recall the definition of $\Sigma_h(\Psi_{1:h}^t, \Psiapxt{t}_{h+1}, \pifinals{t,h})$ from  \cref{def:extra-vis-prob}. %
  Since
  \begin{align}
| \Sigma_h(\Psi_{1:h}^t, \Psiapxt{t}_{h+1}, \pifinals{t,h})| \leq |\Psiapxt{t}_{h+1}| + 1 + h \cdot \max_{1\leq h_0 \leq h} | \Psi_{h_0}^t| \leq \frac{2\Cnrm H}{\xi}\nonumber,
  \end{align}
  for each odd $h \in [H-2]$ and $t \in [T]$, 
  (where we have used \cref{lem:pc-size-bound} with $\Cemp = \Cnrm$, the fact that the output $(\hat\mu_{h+1}^{j,t})_{j=1}^m$ of \ESCt, if not $\perp$, always satisfies $\sum_{j=1}^m \norm{\hat\mu_{h+1}^{j,t}}_1 \leq \Cnrm$, and the fact that $|\Psi_{h_0}| = |\Psiapx_{h_0-b}|$ for some $b \in \{1,2\}$ to bound the size of $\Psiapx_{h+1}$ and $\Psi_{h_0}$ by $2\Cnrm/\xi$), it follows that $|\Gamma^t| \leq \frac{2t\Cnrm H^2}{\xi} \leq W$ for all $t \in [T_0]$.  
  
  Now suppose that the lemma statement is false, i.e. for each $t \in [T]$, there is some odd $h^t \in [H-2]$ and policy $\pi^t \in \Sigma_{h^t}(\Psi_{1:h^t}^t, \Psiapxt{t}_{h^t+1}, \pifinals{t,h^t})$ so that, for some $g^t \in [h^t] \subset [H]$, $d_{g^t}\sups{M, \pi^t}(\MX \setminus \Xreach_{g^t}(\Gamma^t)) \geq \rho$. Then for each $t$, define $\MZ^t \subset \MX \times [H]$ by 
  \[\MZ^t := \{ (x,g^t) \ : \ x \in \MX \setminus \Xreach_{g^t}(\Gamma^t) \text{ and } d_{g^t}\sups{M, \pi^t}(x) \geq W \tsmall \cdot \| \mu_{g^t}(x) \|_1 \}.\] Further, write $\til \MZ^t := \{ x \ : \ (x,g^t) \in \MZ^t\}$.  We claim that for all $t \in [T]$,
  \begin{align}
  \sum_{(x,h) \in \MZ^t} \norm{\mu_h(x)}_1 = \sum_{x \in \til\MZ^t} \| \mu_{g^t}(x) \|_1 \geq \frac{\rho}{2}.\label{eq:tilzt-p}
  \end{align}
  To see that \cref{eq:tilzt-p} holds, we write
  \begin{align}
   \rho \leq  d_{g^t}\sups{M, \pi^t}(\MX \setminus \Xreach_{g^t}(\Gamma^t)) 
   &\leq  d_{g^t}\sups{M, \pi^t}(\til \MZ^t) + d_{g^t}\sups{M, \pi^t}((\MX\setminus \Xreach_{g^t}(\Gamma^t)) \setminus \til \MZ^t) \nonumber\\
    &\leq \sum_{x \in \til \MZ^t} \| \mu_{g^t}(x) \|_1  + W\tsmall \cdot \sum_{x \in \MX} \| \mu_{g^t}(x) \|_1 \nonumber\\
    &\leq  \sum_{x \in \til \MZ^t} \| \mu_{g^t}(x) \|_1 + W\tsmall \Cnrm \nonumber\\
    &\leq \sum_{x \in \til \MZ^t} \| \mu_{g^t}(x) \|_1 + \frac{\rho}{2}\nonumber,
  \end{align}
  where the final inequality holds by our assumption that $\rho \geq \sqrt{\frac{16\Cnrm^3 H^3 \tsmall}{\xi}}$ (which implies that $W\tsmall = \frac{2T_0\Cnrm\tsmall H^2}{\xi} \leq \frac{\rho}{2\Cnrm}$). Rearranging gives \cref{eq:tilzt-p}. 
  
  Moreover, for any $1 \leq t < t' \leq T_0$, we have (by construction of $\Gamma^{t'}$) that \[\pi^t \in \Sigma_{h^t}(\Psi^t_{1:h^t},\Psiapxt{t}_{h^t+1}, \pifinals{t,h^t}) \subseteq \Gamma^{t'}.\] As a result, for any $x \in \til\MZ^t$,
  \begin{align}
\E_{\pi' \sim \unif(\Gamma^{t'})}[d_{g^t}\sups{M, \pi'}(x)] \geq \frac{1}{W} \cdot d_{g^t}\sups{M, \pi^t}(x) \geq \tsmall \cdot \| \mu_{g^t}(x) \|_1\nonumber,
  \end{align}
  which implies that $x \in \Xreach_{g^t}(\Gamma^{t'})$ (by the definition of $\Xreach_{g^t}(\Gamma^{t'})$ in \cref{eq:define-xreach-gamma}), and hence $(x, g^t) \not\in \MZ^{t'}$. %
  It follows that the sets $(\MZ^t: t \in [T_0])$ are disjoint, and hence $\sum_{t \in [T_0]} \sum_{(x,h) \in \MZ^t} \| \mu_h(x) \|_1 \leq H \Cnrm$ (where this sum is well-defined because $T_0 \leq T$).  Combining this inequality with \cref{eq:tilzt-p} gives that $T_0 \cdot(\rho/2) \leq H\Cnrm$, which contradicts the definition of $T_0$. 
\end{proof}

We also observe a few non-obvious consequences of the parameter settings of \Paramst, which will be needed when we invoke \cref{thm:mu-coreset-trunc} in the proof of \cref{theorem:main-pc-trunc}:

\begin{lemma}\label{lemma:mess-facts}
Fix $\epsilon,\delta\in(0,1)$. Consider the parameters $(T,\trunc,\tsmall,\alpha,\xi,\epapx,\epneg,\epcvx,n,m,N)$ produced by $\Paramst(A,H,\Cnrm,\delta,\epsilon,d)$ (\cref{alg:slm-params}). Then the following inequalities hold:
\[ \alpha \geq \tsmall / \trunc.\]
\[ m \geq \max\{C_{\ref{thm:mu-coreset-trunc}},C_{\ref{lem:feasibility-trunc}}\} \cdot \frac{\Cnrm^6 \log(2dTH\sqrt{m}/\delta)}{\epcvx^8}.\]
As a result, the parameters $n,m,\epapx,\epneg,\epcvx,\alpha,\delta/(2TH\sqrt{m})$ satisfy the preconditions of \cref{thm:mu-coreset-trunc,lem:feasibility-trunc}.
\end{lemma}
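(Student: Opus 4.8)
\textbf{Proof proposal for \cref{lemma:mess-facts}.}

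The plan is a direct verification: substitute the parameter definitions from \Paramst (\cref{alg:slm-params}) and check each of the two stated inequalities, then observe that together with the already-verified facts the preconditions of \cref{thm:mu-coreset-trunc,lem:feasibility-trunc} are met.

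First I would check $\alpha \geq \tsmall/\trunc$. By definition, $\tsmall = \frac{\xi^3\alpha^2}{1024 A^2 H^7 \Cnrm^5}$, so $\tsmall/\trunc = \frac{\xi^3\alpha^2}{1024 A^2 H^7 \Cnrm^5 \trunc}$. Thus it suffices to show $\frac{\xi^3\alpha}{1024 A^2 H^7 \Cnrm^5 \trunc} \leq 1$, i.e. $\xi^3 \alpha \leq 1024 A^2 H^7 \Cnrm^5 \trunc$. Since $\trunc = \frac{\xi \cdot 16 A}{1}$... wait — actually $\xi = \trunc/(16A)$, so $\trunc = 16A\xi$, and the needed inequality becomes $\xi^3\alpha \leq 1024 A^2 H^7 \Cnrm^5 \cdot 16 A \xi = 2^{14} A^3 H^7 \Cnrm^5 \xi$, i.e. $\xi^2 \alpha \leq 2^{14} A^3 H^7 \Cnrm^5$. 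But $\xi \leq 1$ (since $\xi = \trunc/(16A)$ and $\trunc = \epfinal/(8\Cnrm^3H^4) \leq 1$ as $\epfinal \leq 1$, $\Cnrm,H \geq 1$) and $\alpha = \frac{\xi^2}{32\Cnrm^2 A^2} \leq 1$, so the left side is at most $1$ while the right side is at least $1$; the inequality holds with enormous room to spare. This step is entirely routine arithmetic with the given definitions.

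Second I would check $m \geq \max\{C_{\ref{thm:mu-coreset-trunc}},C_{\ref{lem:feasibility-trunc}}\} \cdot \frac{\Cnrm^6 \log(2dTH\sqrt{m}/\delta)}{\epcvx^8}$. Write $K := \max\{C_{\ref{thm:mu-coreset-trunc}},C_{\ref{lem:feasibility-trunc}}\}$ and $q := K\Cnrm^6/\epcvx^8$, so that by definition $m = 2q\log(2dTHq/\delta)$. The target inequality is $m \geq q\log(2dTH\sqrt{m}/\delta)$. Since $m = 2q\log(2dTHq/\delta) \geq q\log(2dTHq/\delta)$, it suffices to show $\log(2dTHq/\delta) \geq \log(2dTH\sqrt m/\delta)$, i.e. $q \geq \sqrt m$. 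Squaring, this is $q^2 \geq m = 2q\log(2dTHq/\delta)$, i.e. $q \geq 2\log(2dTHq/\delta)$. This holds whenever $q$ is large enough relative to $\log(dTH/\delta)$ — in particular one can argue it using the elementary fact that $x \geq 2\log(cx)$ for $x \geq 4\log(c)$ when $c$ is sufficiently large, combined with the observation that $q = K\Cnrm^6/\epcvx^8$ and $\epcvx$ is polynomially small in all the relevant parameters (so $q$ is polynomially large, while $\log(2dTHq/\delta)$ is only polylogarithmic), hence $q$ dominates $2\log(2dTHq/\delta)$ for an appropriate choice of the absolute constants; alternatively one notes $K \geq 2$ can be assumed WLOG so $q \geq 2\Cnrm^6/\epcvx^8 \geq 2/\epcvx^8$, and $1/\epcvx^8$ grows polynomially in $d,T,H,1/\delta$-free quantities but dominates the logarithm regardless. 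I would spell out this domination cleanly using that $\log u \leq u/4$ for $u \geq 16$ or a similar bound.

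Finally, for the ``as a result'' clause: \cref{lem:feasibility-trunc} needs $n \geq C_{\ref{lem:feasibility-trunc}}\epcvx^{-4}\Cnrm^4\log(d/\delta')$, $m \geq C_{\ref{lem:feasibility-trunc}}\epcvx^{-8}\Cnrm^6\log(d/\delta')$, and $\delta' \leq \sqrt{2\log(4d/\delta'^2)/m}$; and \cref{thm:mu-coreset-trunc} needs the three displayed lower bounds on $n$, the lower bound on $m$, $\epcvx \leq \epapx\min\{\sqrt{\alpha/A}/4,\sqrt{\tsmall/A}/4\}$, $\alpha \geq \tsmall/\trunc$, and $\delta' \leq 1/\sqrt m$, all with $\delta' := \delta/(2TH\sqrt m)$. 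The bound $\epcvx \leq \epapx\min\{\cdots\}$ holds with equality by the definition of $\epcvx$ in \Paramst; $\alpha \geq \tsmall/\trunc$ is the first part of this lemma; the lower bounds on $n$ and $m$ follow by inspection since $n$ and $m$ in \Paramst are defined precisely as the maxima of the required quantities (with $\log(2dTH\sqrt m/\delta) = \log(1/\delta')$ matching up, using the second part of this lemma to absorb the $\Cnrm^6\log(\cdots)/\epcvx^8$ term into $m$); and $\delta' = \delta/(2TH\sqrt m) \leq 1/\sqrt m \leq \sqrt{2\log(4/\delta'^2)/m}$ since $\delta \leq 1$, $T,H\geq 1$, and $\log(4/\delta'^2) \geq \log 4 > 1/2$. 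I do not anticipate a genuine obstacle here — the lemma is by design a bookkeeping check that the parameter choices in \Paramst were made consistently — so the only mild care needed is in the $q \geq \sqrt m$ step, tracking that the absolute constants $C_{\ref{thm:mu-coreset-trunc}}, C_{\ref{lem:feasibility-trunc}}$ (or a doubled version thereof, as already built into the definition $m = 2\max\{\cdots\}\cdot(\ldots)$) are large enough for the self-referential logarithm to close.
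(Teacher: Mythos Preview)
Your treatment of the first inequality ($\alpha \geq \tsmall/\trunc$) is fine and matches the paper's direct verification.

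Your argument for the second inequality, however, has a genuine gap. You reduce to showing $q \geq \sqrt{m}$, equivalently $q \geq 2\log(2dTHq/\delta)$, and then assert this holds because ``$q$ is polynomially large while $\log(2dTHq/\delta)$ is only polylogarithmic.'' But trace back through \Paramst: $q = K\Cnrm^6/\epcvx^8$, and $\epcvx$ is defined in terms of $\epapx,\alpha,\tsmall,A$, none of which involve the ambient dimension $d$. So $q$ carries \emph{no} $d$-dependence at all, whereas the right-hand side grows like $\log d$. For $d$ large relative to the other parameters, the inequality $q \geq 2\log(2dTHq/\delta)$ is simply false, and the route via $q \geq \sqrt{m}$ cannot close. (Adjusting the absolute constants $C_{\ref{thm:mu-coreset-trunc}}, C_{\ref{lem:feasibility-trunc}}$ does not help, since they are fixed universally and cannot depend on $d$.)

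The paper sidesteps this by proving the stronger bound $m \geq x\log(Cm)$ (with $x := q$, $C := 2dTH/\delta$) directly from $m = 2x\log(Cx)$, via the elementary identity
\[
x\log(Cm) = x\log\bigl(2Cx\log(Cx)\bigr) = x\log(Cx) + x\log\bigl(2\log(Cx)\bigr) \leq 2x\log(Cx) = m,
\]
where the inequality uses $2\log u \leq u$ for $u > 0$. Then $m \geq x\log(Cm) \geq x\log(C\sqrt{m})$ follows from $m \geq \sqrt{m}$ (which just needs $m \geq 1$), not from $q \geq \sqrt{m}$. This is why the paper's argument is uniform in $d$.
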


\begin{proof}
  For the first inequality we note that, since $\xi \leq \trunc$, we have $\tsmall/\trunc \leq \tsmall/\xi \leq \alpha$ (since $A,H,\Cnrm,T \geq 1$ and $\xi, \alpha \leq 1$). %
  For the second inequality, we use the fact that for any $x,y,C>0$, if $y = 2x\log (Cx)$ then
\[ x\log (Cy) = x\log(2Cx\log(Cx)) = x\log(Cx) + x\log(2\log(Cx)) \leq 2x\log(Cx) = y\]
where the last inequality uses that $2\log(Cx) \leq Cx$ if $Cx >0$. %
Applying this bound with $x := \max\{C_{\ref{thm:mu-coreset-trunc}}, C_{\ref{lem:feasibility-trunc}}\} \cdot \Cnrm^6/\epcvx^8$ and $C := 2dTH/\delta$ and $y := m = 2x\log(Cx)$ (by definition of $m$), we get that $m \geq x\log(Cm) \geq x\log(C\sqrt{m})$, which implies the second inequality of the lemma statement.

The remaining preconditions of \cref{thm:mu-coreset-trunc,lem:feasibility-trunc} are immediate from the definitions of $n$ and $\epcvx$, as well as the fact that we are taking the failure probability to be $\delta/(2TH\sqrt{m})$ (which is therefore trivially at most $1/\sqrt{m}$).
\end{proof}

  \begin{theorem}\label{theorem:main-pc-trunc}
  Let $\epfinal,\delta\in(0,1)$, and consider the execution of $\SLMt(\epfinal,\delta)$ (\cref{alg:slm-trunc}). Recall the definitions of $T$, $\alpha$ (\cref{line:mess}). Then with probability at least $1-\delta$, there is some $t \in [T]$ so that the output $\Psi_{1:H}^t$ of $\SLMt(\epfinal,\delta)$  satisfies that $\Psi_h^t$ is an $\alpha$-truncated policy cover for all $h \in [H]$ (\cref{def:ih-trunc}). Moreover, the sample complexity of $\SLMt(\delta)$ is $\poly(\Cnrm,A,H,\epfinal^{-1}, \log(d/\delta))$, and the time complexity is $\poly(d,\Cnrm,A,H,\epfinal^{-1},\log(1/\delta))$.
\end{theorem}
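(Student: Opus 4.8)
\textbf{Proof plan for \cref{theorem:main-pc-trunc}.}
The plan is to combine the per-phase, per-step analysis from \cref{sec:convex-program-unreachable} and \cref{sec:emulator-pc-unreachable} with the ``good phase'' existence result \cref{lem:gamma-stop}, via a union bound over all $O(TH)$ calls to \ESCt{} and \PC{}, together with an induction within the good phase. First I would set up the high-probability event: by \cref{thm:mu-coreset-trunc} (with failure probability $\delta/(2TH\sqrt m)$, whose preconditions are verified by \cref{lemma:mess-facts}), \cref{lem:feasibility-trunc}, and \cref{lem:pc-induction-trunc} (with $\Cemp = \Cnrm$ and failure probability $\delta/(2TH)$), applied to every odd $h \in [H-2]$ and every $t \in [T]$, and using the choice of $N$ from \cref{line:define-N-trunc}, I get an event $\ME$ of probability at least $1-\delta$ on which: (i) \ESCt{} never outputs $\perp$ (so the algorithm never returns $\perp$); (ii) whenever \ESCt{} is called with an $\alpha$-truncated cover $\Psi_h^t$ at an odd step $h$, its output $(\hat\mu_{h+1}^{j,t})_j$ is a $(\epapx,\epneg,\Cnrm;\Gamma^t)$-truncated emulator; and (iii) whenever \PC{} is called with truncated covers $\Psi_{1:h}^t$ and such a truncated emulator, either $\Psi_{h+2}^t$ is a $\xi^2/(32\Cnrm^2 A)$-truncated cover at step $h+2$, or $\Delta_h(\Psi_{1:h}^t,\Psiapxt{t}_{h+1},\pifinals{t,h};\Gamma^t) \geq \xi\alpha/(8AH^2\Cnrm)$. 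For (iii) I must check that the parameter settings in \Paramst{} satisfy \cref{eq:trunc-param-constraint} and $\epneg \leq \xi/(4\Cnrm H)$ and $\alpha = \xi^2/(32\Cnrm^2 A^2) \leq \xi^2/(32\Cnrm^2 A)$; this is a routine substitution of the definitions of $\epapx,\epneg,\xi,\trunc$.

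Next, I would invoke \cref{lem:gamma-stop}: on the event that \ESCt{} never outputs $\perp$ (which holds under $\ME$), there is a phase $t^\st \in [T]$ with $\Delta_h(\Psi_{1:h}^{t^\st},\Psiapxt{t^\st}_{h+1},\pifinals{t^\st,h};\Gamma^{t^\st}) < \rho$ for all odd $h \in [H-2]$. The key quantitative check here is that $\rho = \max(4H\Cnrm/T, \sqrt{16\Cnrm^3 H^3 \tsmall/\xi}) < \xi\alpha/(8AH^2\Cnrm)$, which follows from the choices $T = 32AH^3\Cnrm^2/(\xi\alpha)$ (making the first term exactly $\xi\alpha/(8AH^2\Cnrm)$) and $\tsmall = \xi^3\alpha^2/(1024A^2H^7\Cnrm^5)$ (making the second term at most the same). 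Hence in phase $t^\st$, outcome (b) of \cref{lem:pc-induction-trunc} is ruled out at every odd step, so outcome (a) holds: each call to \PC{} in phase $t^\st$ produces a $(\xi^2/(32\Cnrm^2 A))$-truncated cover at step $h+2$.

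Now I would run the induction within phase $t^\st$ over $h = 1, \ldots, H$. The base cases $h \in \{1,2\}$ hold because $\Psi_1^{t^\st} = \Psi_2^{t^\st} = \{\pi_\unif\}$: $\Psi_1$ trivially covers (since $d_1$ is fixed) and $\Psi_2$ covers with parameter $1/A$, and both exceed $\alpha$. For the inductive step at odd $h$: assuming $\Psi_{1:h}^{t^\st}$ are $\alpha$-truncated covers, (ii) gives a truncated emulator at step $h$, and then by the above, (iii)+(a) gives that $\Psi_{h+2}^{t^\st}$ is a $(\xi^2/(32\Cnrm^2 A))$-truncated cover; since $\xi^2/(32\Cnrm^2 A) \geq \xi^2/(32\Cnrm^2 A^2) = \alpha$, it is an $\alpha$-truncated cover. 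For the even step $h+3$ constructed in \cref{line:even-psi} by appending a uniform action, the coverage parameter degrades by a factor $A$, giving a $(\xi^2/(32\Cnrm^2 A^2)) = \alpha$-truncated cover (using that $d_{h+3}\sups{\Mbar(\emptyset),\pi}(x) \leq A \cdot d_{h+3}\sups{\Mbar(\emptyset),\pi\circ_{h+2}\unif(\MA)}(x)$, which follows as in the proof of \cref{lem:idealpc-pc}). Thus all $\Psi_h^{t^\st}$ ($h \in [H]$) are $\alpha$-truncated covers. Finally, for sample and time complexity: on event $\ME$ the algorithm runs $T$ phases, each making $O(H)$ calls to \ESCt{} (sample cost $n+m$ each, time $\poly(n,m,d,\log(1/(\epneg\epapx\epcvx)))$ by \cref{rmk:esc-sample-comp-trunc}) and $O(H)$ calls to \PC{} (sample cost $O(\Cnrm N/\xi)$, time $O(\Cnrm/\xi)\poly(H,N,d,A)$ by \cref{lem:pc-size-bound}); substituting the values of $T,n,m,N,\xi,\alpha,\trunc,\tsmall,\epapx,\epneg,\epcvx$ from \Paramst{} (each of which is $\poly(\Cnrm,A,H,\epfinal^{-1},\log(d/\delta))$ in magnitude, with $N = \poly(\Cnrm,A,H,\epfinal^{-1},\log(d/\delta))$ from \cref{lem:psdp-trunc} and \cref{lem:fe}) yields the claimed bounds.

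The main obstacle I anticipate is not the logical structure but the bookkeeping of the parameter chain: verifying that \Paramst's settings simultaneously satisfy \cref{eq:trunc-param-constraint}, the precondition $\epneg \leq \xi/(4\Cnrm H)$, the relation $\alpha \geq \tsmall/\trunc$, the stopping inequality $\rho < \xi\alpha/(8AH^2\Cnrm)$, and the hypotheses of \cref{thm:mu-coreset-trunc} — all while keeping every quantity polynomially bounded — requires care, though each individual check is a direct substitution. A secondary subtlety is making sure the failure probabilities allocated to the individual \ESCt{} and \PC{} calls (which involve the extra $\sqrt m$ factor from the $\delta \leq 1/\sqrt m$ requirement in \cref{thm:mu-coreset-trunc}) sum to at most $\delta$ over all $O(TH)$ calls; this is why \Paramst{} defines $m$ and $n$ with logarithmic dependence on $TH/\delta$ and an extra $\sqrt m$ inside the log for $n$.
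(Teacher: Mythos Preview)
Your proposal is correct and follows essentially the same approach as the paper. The only organizational difference is that the paper packages the per-phase analysis into an explicit dichotomy claim (for each phase $t$, with high probability either all $\Psi_h^t$ are $\alpha$-truncated covers or some $\Delta_h$ is large) and then uses \cref{lem:gamma-stop} to select the phase where the second alternative is excluded, whereas you union-bound all individual calls upfront and run the induction only within the good phase $t^\st$; these are equivalent reorderings of the same argument (one minor point: your parameter check gives $\rho = \xi\alpha/(8AH^2\Cnrm)$ rather than strict inequality, but since \cref{lem:gamma-stop} already gives $\Delta_h < \rho$, the conclusion $\Delta_h < \xi\alpha/(8AH^2\Cnrm)$ still follows).
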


\begin{proof}
  We start by proving the following claim, which shows that every phase either produces a collection of truncated policy covers (the desired outcome) or has large extraneous visitation probability at some timestep. 
  \begin{claim}
    \label{clm:slmdp-inductive-step}
    Fix any $t \in [T]$. Conditioned on $\Gamma^t$, with probability at least $1-\delta/(2T)$ over the randomness in phase $t$, \textbf{at least one} of the following statements holds: %
  \begin{itemize}
  \item For every $h \in [H]$, $\Psi_h^t$ is an $\alpha$-truncated policy cover at step $h$. %
  \item There is some odd $h \in [H-2]$ so that $\Delta_h(\Psi_{1:h}^t, \Psiapxt{t}_{h+1}, \pifinals{t,h};\Gamma^t) \geq \frac{\xi\alpha}{8AH^2 \Cnrm}$, where $\xi$ is as defined in \cref{line:mess}.
  \end{itemize}
\end{claim}
\begin{proof}[Proof of \cref{clm:slmdp-inductive-step}]
  Fix any $t \in [T]$ and condition on $\Gamma^t$. We prove by induction that for all odd $h \in [H-2]$, the following statement (that we denote by $\mathfrak{S}(h)$) holds: with probability at least $1-(h+1)\delta/(2TH)$ over the randomness of the first $h$ steps of phase $t$, either \textbf{(i)} $\Psi_g^t$ is an $\alpha$-truncated policy cover at step $g$, for all $g \leq \min(h+3,H)$, or \textbf{(ii)} there is some odd $g \leq h$ so that $\Delta_g(\Psi_{1:g}^t,\Psiapxt{t}_{g+1},\pifinals{t,g};\Gamma^t) \geq \frac{\xi\alpha}{8AH^2\Cnrm}$.
  
  Since $\Psi_1^t$ contains the policy that takes uniformly random actions, it is a $(1/A)$-truncated policy cover for step $1$. Similarly, $\Psi_2^t$ is a $(1/A^2)$-truncated policy cover for step $2$. Since $\alpha \leq 1/A^2$, this proves the statement $\mathfrak{S}(-1)$.

  Now fix an odd value of $h \in [H-2]$ and suppose that the statement $\mathfrak{S}(h-2)$ holds. Let us condition on the first $h$ steps of phase $t$. On the one hand, if event \textbf{(ii)} holds for $h-2$, then we immediately get that \textbf{(ii)} holds for $h$. On the other hand, suppose that event \textbf{(i)} holds for $h-2$, i.e. suppose that $\Psi_g^t$ is an $\alpha$-truncated policy cover for all steps $g \leq h+1$. By \cref{thm:mu-coreset-trunc} (which is applicable by \cref{lemma:mess-facts}), the set of vectors $(\hat\mu_{h+1}^{j,t})_{j=1}^m$ is a $(\epapx,\epneg,\Cnrm; \Gamma)$-\trunccore{} for step $h$ with probability at least $1-\delta/(2TH\sqrt{m}) \geq 1-\delta/(2TH)$. Suppose that this event occurs. We can then apply \cref{lem:pc-induction-trunc} with the sets $\Psi_1^t, \ldots, \Psi_h^t, \Gamma^t \subset \Pi$ and vectors $(\hat\mu_{h+1}^{j,t})_{j=1}^m$. By the fact that $(\hat\mu_{h+1}^{j,t})_{j=1}^m$ is a $(\epapx,\epneg,\Cnrm; \Gamma)$-\trunccore{}; the choice of $N$; and the choices of $\epapx,\epneg,\xi$ (so that in particular \cref{eq:trunc-param-constraint} and the bound $\epneg \leq \xi/(4\Cnrm H)$ both hold), the preconditions of \cref{lem:pc-induction-trunc} are satisfied with parameters $\epapx,\epneg,\alpha,\Cnrm,\delta/(2TH),\xi$. Thus we get that, with probability at least $1-\delta/(2TH)$, either \textbf{(a)} $\Psi_{h+2}^t$ is a $\xi^2/(32\Cnrm^2 A)$-truncated policy cover for step $h+2$, or \textbf{(b)} $\Delta_h(\Psi_{1:h}^t, \Psiapxt{t}_{h+1}, \pifinals{t,h};\Gamma^t) \geq \frac{\xi\alpha}{8AH^2 \Cnrm}$. In the former case, since $\alpha = \frac{\xi^2}{32\Cnrm^2 A^2}$, we get that $\Psi_{h+2}^t$ and $\Psi_{h+3}^t$ (if $h+3 \leq H$) are $\alpha$-truncated policy covers for steps $h+2$ and $h+3$ respectively. Hence, \textbf{(i)} holds for $h$. In the latter case, \textbf{(ii)} holds for $h$. A union bound (over the bad event of \cref{thm:mu-coreset-trunc}, the bad event of \cref{lem:pc-induction-trunc}, and the bad event of the statement $\mathfrak{S}(h-2)$) proves the statement $\mathfrak{S}(h)$, completing the inductive step.
\end{proof}

  For each $t \in [T]$, let $\ME_t$ be the good event of \cref{clm:slmdp-inductive-step}. By \cref{clm:slmdp-inductive-step} and a union bound over $t \in [T]$, we have that $\ME_1\cap\dots\cap\ME_T$ holds with probability at least $1-\delta/2$. Next, for each $t \in [T]$ and odd $h \in [H-2]$, let $\MF_{t,h}$ be the event that \ESCt does not output $\perp$ in iteration $t$ at step $h$. For each $t \in [T]$ and odd $h \in [H-2]$, we apply \cref{lem:feasibility-trunc} with the sets $\Psi_h^t,\Gamma^t$ and parameters $\epcvx, \delta/(2TH\sqrt{m}),\alpha,n,m$. By \cref{lemma:mess-facts}, the preconditions of \cref{lem:feasibility-trunc} are satisfied. Thus, by \cref{lem:feasibility-trunc} and a union bound, we have that $\cap_{t,h} \MF_{t,h}$ holds with probability at least $1-\delta/(2\sqrt{m}) \geq 1-\delta/2$. 
  
  Consider the event in which both $\ME_1\cap\dots\cap\ME_T$ and $\cap_{t,h} \MF_{t,h}$ hold, which occurs with probability at least $1-\delta$. By \cref{lem:gamma-stop}, since \ESCt never outputs $\perp$, there is some $t^\st \in [T]$ so that for all $h \in [H-2]$ with $h$ odd, it holds that \[\Delta_h(\Psi_{1:h}^{t^\st}, \Psiapxt{t^\st}_{h+1}, \pifinals{t^\st,h};\Gamma^{t^\st}) < \max\left(\frac{4H\Cnrm}{T},\sqrt{\frac{16\Cnrm^3 H^3\tsmall}{\xi}}\right) \leq \frac{\xi\alpha}{8AH^2 \Cnrm}\]
  where the last inequality is by choice of $T$ and $\tsmall$. But since we have assumed that event $\ME_{t^\st}$ holds, it must then be that $\Psi_h^{t^\st}$ is an $\alpha$-truncated policy cover at step $h$, for all $h \in [H]$.

  It remains to analyze the sample and time complexity of \SLMt{}, which are dominated by the (at most $HT$) calls to each of \ESCt{} and \PC. By \cref{rmk:esc-sample-comp-trunc}, the sample complexity of each call to \ESCt{} is $n+m$, and by \cref{lem:pc-size-bound}, the sample complexity of each call to \PC is $O(N \Cnrm/\xi)$, where $N,n,m$ are defined in  \cref{alg:slm-params}. Altogether, the sample complexity is $O(TH \cdot (n + m + N\Cnrm/\xi))$, which is bounded above by $\poly(\Cnrm, A, H, \epfinal^{-1}, \log(d/\delta))$, where we have used the definitions of $\nstat$ and $\nfe$ in \cref{lem:psdp-trunc,lem:fe} respectively. A similar analysis shows that the overall time complexity is $\poly(d, \Cnrm, A, H, \epfinal^{-1}, \log(1/\delta))$, where we use \cref{rmk:esc-sample-comp-trunc} to bound the running time of each call to \ESCt{}. 
\end{proof}

\subsection{Analysis of \PSDP on truncated covers}

We now re-analyze \PSDP (\cref{alg:psdp}) under the weaker assumption (compared to \cref{lem:psdp}) that the inputs $\Psi_{1:k-1}$ are \emph{truncated} policy covers. As previously discussed in \cref{section:trunc-overview}, this leads to additional error terms in the suboptimality of the output policy, stemming from a generalized performance difference lemma (\cref{lem:pd-trunc}) that compares the value of a policy $\pi$ in the original MDP $M$ with the value of the optimal policy $\pi^\st$ (with respect to $M$) in the \emph{truncated} MDP $\Mbar(\emptyset)$.

This is made simpler when the rewards are non-negative, since then the value of any policy under $\Mbar(\emptyset)$ is no more than its value under $M$ (by definition of truncation, see e.g. \cref{lem:gamma-monotonicity}). In our applications of \PSDP we were not able to exactly ensure this, but we could ensure that the target reward vector satisfied the following approximate non-negativity property:

\label{sec:psdp-unreachable}
\begin{defn}[Nearly non-negative target]
  \label{def:nnnt}
  Consider $\Gamma \subset \Pi$ and $h \in [H]$. For $\ep > 0$, we say that a vector $w \in \BR^d \times \{0\}$ is a \emph{$\ep$-nearly non-negative target with respect to $\Gamma$} at step $h$ if
  \begin{align}
\max_{\pi \in \Pi} \E\sups{\Mbar(\Gamma), \pi} \left[ \max\left\{ 0, \max_{a \in \MA} \left\{ - \lng \phi_h(x_h, a), w \rng \right\} \right\} \right] \leq \ep\nonumber.
  \end{align}
\end{defn}

\begin{lemma}[\PSDP; unreachable setting]
  \label{lem:psdp-trunc}
  There is a constant $C_{\ref{lem:psdp-trunc}}>0$ so that the following holds. Fix $k \in [H]$ and $\alpha, \epnnnt,\epstat, \delta \in (0,1)$, and some $\theta \in \BR^d\times \{0\}$, as well as the following:
  \begin{itemize}
  \item $\Psi_1, \ldots, \Psi_{k-1} \subset \Pi$, so that each $\Psi_h$ is an $\alpha$-truncated cover at step $h$ (per \cref{eq:pc-trunc}).
  \item $\Gamma \subset \Pi$, so that $\theta$ is an $\epnnnt$-nearly non-negative target at step $k$ with respect to $\Gamma$ (per \cref{def:nnnt}).
  \end{itemize}
  Fix $N \in \NN$ such that 
  \begin{align}
    N \geq \nstat(\epstat,\alpha,\delta) := \frac{C_{\ref{lem:psdp-trunc}}H^4 A^2\Cnrm^4 \log(Hd/\delta)}{\alpha^4\epstat^4}.\label{eq:psdp-N-choice}
  \end{align}
  Then the output of the algorithm $\PSDP(d+1,k,\Cnrm,\theta,\Psi_{1:k-1},N)$ (\cref{alg:psdp}) is a policy $\hat \pi := \hat \pi^{1:k}$ that, with probability at least $1-\delta$, satisfies
  \begin{align}
    \lng \E\sups{M, \hat \pi} [\phi_k(x_k, a_k)], \theta \rng \geq &  \max_{\pi \in \Pi} \lng \E\sups{\Mbar(\emptyset), \pi}[\phi_k(x_k, a_k)], \theta \rng - \epstat\norm{\theta}_1 - \epnnnt H \nonumber\\
    & - \frac{2A \| \theta \|_1}{\alpha} \sum_{h=1}^k \sum_{g=h}^k \E_{\pi' \sim \unif(\Psi_{h-1})} d_g\sups{M, \pi' \circ_{h-1}\unif(\MA)\circ_h \hat\pi}(\MX \backslash \Xreach_g(\Gamma))\nonumber.
  \end{align}
\end{lemma}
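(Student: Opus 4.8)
\textbf{Proof proposal for \cref{lem:psdp-trunc}.}

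The plan is to follow the structure of the proof of \cref{lem:psdp} (the reachable case), but with the performance-difference lemma applied across the \emph{pair} of MDPs $M$ (for $\hat\pi$) and $\Mbar(\emptyset)$ (for $\pi^\st$), absorbing the extra error into the statement. First I would fix $\pi^\st \in \argmax_{\pi\in\Pi}\langle\E\sups{\Mbar(\emptyset),\pi}[\phi_k(x_k,a_k)],\theta\rangle$ and define the reward $\bfr$ supported only at step $k$ by $\bfr_k(x,a) = \langle\phi_k(x,a),\theta\rangle$. Since $\hat\pi$ is produced by \PSDP with target direction $\theta$ at step $k$, I want to compare $V\sups{M,\hat\pi,\bfr}_1$ with $V\sups{\Mbar(\emptyset),\pi^\st,\bfr}_1$. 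The key idea is a two-step decomposition: (1) relate $V\sups{\Mbar(\emptyset),\pi^\st,\bfr}_1$ to $V\sups{M,\pi^\st,\bfr}_1$ using \cref{lem:gamma-monotonicity} (together with the $\epnnnt$-near-nonnegativity of $\theta$ to control the sign issue), picking up an $\epnnnt H$ term; and (2) relate $V\sups{M,\pi^\st,\bfr}_1 - V\sups{M,\hat\pi,\bfr}_1$ to the per-step policy suboptimality via the standard performance difference lemma (\cref{lemma:perf-diff}) \emph{evaluated in $M$}, exactly as in \cref{lem:psdp}.

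For step (1): since $\mu\sups{\Mbar(\emptyset)}_{h+1}(x') = \mu_{h+1}(x')\cdot\One{x'\in\Xreach_{h+1}(\emptyset)}$, the value $V\sups{\Mbar(\emptyset),\pi^\st,\bfr}_1$ equals $\E\sups{\Mbar(\emptyset),\pi^\st}[\langle\phi_k(x_k,a_k),\theta\rangle]$. If $\theta$ were genuinely nonnegative-in-direction (i.e. $\langle\phi_k(x,a),\theta\rangle\geq 0$ everywhere), then because $\Mbar(\emptyset)$ only diverts mass to the absorbing state $\term$ with $\langle\phi_k(\term,a),\theta\rangle=0$, monotonicity (\cref{lem:gamma-monotonicity}) would give $\E\sups{\Mbar(\emptyset),\pi^\st}[\langle\phi_k(x_k,a_k),\theta\rangle] \leq \E\sups{M,\pi^\st}[\langle\phi_k(x_k,a_k),\theta\rangle]$ directly. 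To handle the general case I would split $\langle\phi_k(x,a),\theta\rangle$ into positive and negative parts: the positive part is monotone as above, and the negative part contributes at most $\E\sups{\Mbar(\emptyset),\pi^\st}[\max\{0,\max_a -\langle\phi_k(x_k,a),\theta\rangle\}] \leq \epnnnt$ by \cref{def:nnnt} (with $\Gamma=\emptyset$, which is covered since $\Mbar(\emptyset)$ is a valid choice; actually the hypothesis gives it for $\Gamma$, but $\Mbar(\emptyset)$ visits a subset of states of $\Mbar(\Gamma)$ by \cref{lem:gamma-monotonicity}, so the bound transfers with the max-with-zero structure). Summing the step-$k$ discrepancy gives the $\epnnnt H$ slack (the $H$ is a conservative bound; a single $\epnnnt$ may suffice, but $\epnnnt H$ is what is claimed and is safe). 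Actually, the cleanest route is to invoke \cref{lem:m-mbar-delta} or a direct telescoping as in \cref{lem:trunc-loss-empty} with $\bw_k = \theta$ and $\bw_g=0$ otherwise, combined with the nonnegativity structure; I would use whichever gives the cleanest $\epnnnt H$ bound.

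For step (2): this is essentially verbatim from \cref{lem:psdp}. Applying \cref{lemma:perf-diff} in $M$ with policy $\hat\pi$ and comparator $\pi^\st$, and using \cref{lemma:linear-q} to write $Q\sups{M,\hat\pi^{h+1:k},\bfr}_h(x,a) = \langle\phi_h(x,a),\bw^\st_h\rangle$ with $\norm{\bw^\st_h}_1 \leq \Cnrm\norm{\theta}_1$, I get $V\sups{M,\pi^\st,\bfr}_1 - V\sups{M,\hat\pi,\bfr}_1 \leq \sum_{h=1}^{k-1}\E\sups{M,\pi^\st}[2\Delta_h(x_h)]$ where $\Delta_h(x) := \max_{a}|\langle\phi_h(x,a),\bw^\st_h - \hat\bw_h\rangle|$. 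Now the crucial difference: the regression in \PSDP at step $h$ draws roll-ins from $\unif(\Psi_h)\circ_h\unif(\MA)\circ_{h+1}\hat\pi^{h+1:k}$ in the \emph{true} MDP $M$, so \cref{cor:random-design-prediction-error} controls $\E_{\pi\sim\unif(\Psi_h)}\E\sups{M,\pi\circ_h\unif(\MA)}[\langle\phi_h,\bw^\st_h-\hat\bw_h\rangle^2]$. But $\Psi_h$ is only a \emph{truncated} cover, so the change-of-measure to $\pi^\st$ only bounds $\max_{\pi}\E\sups{\Mbar(\emptyset),\pi}[\Delta_h(x_h)]$, not $\max_\pi\E\sups{M,\pi}[\Delta_h(x_h)]$. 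The discrepancy is exactly $\E\sups{M,\pi^\st}[\Delta_h(x_h)] - \E\sups{\Mbar(\emptyset),\pi^\st}[\Delta_h(x_h)]$, which by \cref{lem:gamma-monotonicity} and the fact that the two distributions differ only on $\MX\setminus\Xreach_g(\Gamma)$ is bounded (using $\Delta_h \leq 2\Cnrm\norm{\theta}_1$) by something like $2\Cnrm\norm{\theta}_1 \sum_{g\leq h} d\sups{M,\pi^\st}_g(\MX\setminus\Xreach_g(\Gamma))$. \textbf{The main obstacle} is that this uses $\pi^\st$, which is unknown and algorithmically inaccessible; the win-win trick is to instead bound the discrepancy in terms of visitation of truncated states by the policies $\pi'\circ_{h-1}\unif(\MA)\circ_h\hat\pi$ for $\pi'\in\Psi_{h-1}$ — i.e. the roll-in/roll-out policies the algorithm actually uses — rather than $\pi^\st$. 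Concretely, one first transfers the change of measure through $\Psi_{h-1}$: $\max_\pi \E\sups{M,\pi}[\Delta_h(x_h)] \leq \frac{1}{\alpha}\E_{\pi'\sim\unif(\Psi_{h-1})}\E\sups{M,\pi'\circ_{h-1}\unif(\MA)}[\Delta_h(x_h)] + (\text{truncation slack})$, and since $\Delta_h$ is a step-$h$ quantity while $\pi'\circ_{h-1}\unif(\MA)$ rolls in via $\Psi_{h-1}$ (not $\Psi_h$), one needs to be careful about which cover indexes which step — matching the shift in the claimed error sum $\sum_{h=1}^k\sum_{g=h}^k$ over $\unif(\Psi_{h-1})\circ_{h-1}\unif(\MA)\circ_h\hat\pi$. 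The truncation slack at each application is bounded by $2\Cnrm\norm{\theta}_1$ times $d\sups{M,\cdot}_g(\MX\setminus\Xreach_g(\Gamma))$ summed over the relevant $g$, producing exactly the final term $\frac{2A\norm{\theta}_1}{\alpha}\sum_{h=1}^k\sum_{g=h}^k \E_{\pi'\sim\unif(\Psi_{h-1})} d\sups{M,\pi'\circ_{h-1}\unif(\MA)\circ_h\hat\pi}_g(\MX\setminus\Xreach_g(\Gamma))$. Summing over $h\in[k-1]$, choosing $\vep_0$ per \cref{cor:random-design-prediction-error} with $|r|\leq\norm{\theta}_1$ and $\norm{\bw^\st_h}_1\leq\Cnrm\norm{\theta}_1$, and using $N\geq\nstat(\epstat,\alpha,\delta)$ to force $\frac{2(k-1)\sqrt{A}\,\vep_0}{\alpha}\leq\epstat\norm{\theta}_1$, then a union bound over the $k-1$ good events $\ME_h$, gives the claimed inequality.

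I expect steps (1) and the regression/union-bound bookkeeping to be routine given \cref{lem:psdp,lem:gamma-monotonicity,cor:random-design-prediction-error}; the genuinely delicate part is correctly tracking the index shifts (which cover, which step, and where the extra $\circ_{h-1}\unif(\MA)\circ_h\hat\pi$ composition comes from) so that the truncation-slack terms assemble into precisely the stated double sum, and verifying that the near-nonnegativity hypothesis with respect to $\Gamma$ (not $\emptyset$) is exactly what is needed in step (1) — this is where one must use that $\Mbar(\emptyset)$'s visitation is dominated by $\Mbar(\Gamma)$'s (\cref{lem:gamma-monotonicity}) so that $\epnnnt$-near-nonnegativity w.r.t.\ $\Gamma$ implies the corresponding bound under $\Mbar(\emptyset)$.
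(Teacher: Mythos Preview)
Your two-step decomposition---first move $\pi^\st$ from $\Mbar(\emptyset)$ to $M$, then apply the standard performance-difference lemma in $M$---has a genuine gap at step (2). After the PDL in $M$, you are left with $\E\sups{M,\pi^\st}[\Delta_h(x_h)]$, whereas the truncated cover only controls $\E\sups{\Mbar(\emptyset),\pi}[\Delta_h(x_h)]$. The proposed fix, ``transfer the change of measure through $\Psi_{h-1}$: $\max_\pi \E\sups{M,\pi}[\Delta_h(x_h)] \leq \frac{1}{\alpha}\E_{\pi'}[\cdots] + (\text{truncation slack})$'', is simply not a valid inequality: the truncated cover gives $d_h\sups{\Mbar(\emptyset),\pi}(x) \leq \frac{1}{\alpha}\E_{\pi'\in\Psi_h}d_h\sups{M,\pi'}(x)$, never a bound on $d_h\sups{M,\pi}$. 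The residual $\E\sups{M,\pi^\st}[\Delta_h] - \E\sups{\Mbar(\emptyset),\pi^\st}[\Delta_h]$ depends on $\pi^\st$'s visitation in $M$ to truncated states, which you can only bound crudely by $O(\|\theta\|_1 \Cnrm^3 H^3 \trunc)$ via \cref{cor:not-reach-bound}---not the claimed double-sum error, and (as \cref{section:trunc-overview} explains) too coarse to close the induction. Your step (1) has a smaller version of the same problem: the negative-part bound you need is $\E\sups{M,\pi^\st}[(\cdot)_-]-\E\sups{\Mbar(\emptyset),\pi^\st}[(\cdot)_-]$, not $\E\sups{\Mbar(\emptyset),\pi^\st}[(\cdot)_-]$, and the former again picks up a $\trunc$-dependent term.

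The paper avoids this dead end by proving a cross-MDP performance-difference lemma, \cref{lem:pd-trunc}, which telescopes through the intermediate MDPs $\Mbar_h(\emptyset)$ and directly yields per-step terms $\E\sups{\Mbar(\emptyset),\pi^\st}[Q_h\sups{M,\hat\pi}(x_h,a_h)-V_h\sups{M,\hat\pi}(x_h)]$---already under $\Mbar(\emptyset)$, so the truncated cover applies cleanly to bound $\E\sups{\Mbar(\emptyset),\pi^\st}[\Delta_h]$. The telescoping residuals $\E\sups{\Mbar_h,\pi^\st}[V_h\sups{M,\hat\pi}(x_h)]-\E\sups{\Mbar_{h-1},\pi^\st}[V_h\sups{M,\hat\pi}(x_h)]$ are then handled by passing through $V_h\sups{\Mbar(\Gamma),\hat\pi}$: the difference $|V_h\sups{M,\hat\pi}(x)-V_h\sups{\Mbar(\Gamma),\hat\pi}(x)|$ measures $\hat\pi$'s \emph{forward} visitation (from state $x$) to $\MX\setminus\Xreach_g(\Gamma)$, and because the outer expectation is under $\Mbar_{h-1}(\emptyset)$ rather than $M$, it can be legitimately transferred via \cref{lemma:indhyp-shifted-trunc} to $\E_{\pi'\sim\unif(\Psi_{h-1})} d_h\sups{M,\pi'\circ_{h-1}\unif(\MA)}$, producing exactly the policies $\pi'\circ_{h-1}\unif(\MA)\circ_h\hat\pi$ in the double sum. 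The $\epnnnt$-near-nonnegativity is used not at step (1) but inside \cref{lem:pd-trunc}, to bound $-V_h\sups{\Mbar(\Gamma),\hat\pi}(x)$ for $x\in\Xreach_h(\Gamma)\setminus\Xreach_h(\emptyset)$; this is precisely why the hypothesis is stated with respect to $\Gamma$ rather than $\emptyset$.
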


While the guarantee of \cref{lem:psdp-trunc} depends on the backup cover $\Gamma$, notice that the algorithm \PSDP does not. 

\begin{proof}
  Let $\bfr = (\bfr_1,\dots,\bfr_H)$ be the reward function which is given by $(x,a) \mapsto \lng \theta, \phi_k(x,a) \rng$ at step $k$ and $0$ at all other steps.

  Fix $h \in [k]$. By \cref{lemma:linear-q}, there is some vector $\bw_h^\st \in \BR^d\times\{0\}$ with $\| \bw_h^\st \|_1 \leq \| \theta \|_1 \cdot \Cnrm$ so that $Q_h\sups{M, \hat \pi^{h+1:k},\bfr}(x,a) = \lng \phi_h(x,a), \bw_h^\st\rng$ for all $x \in \MX$ and $a \in \MA$. Define $\vep_0 \in (0,1)$ by $\vep_0^2 := \frac{12C_{\ref{cor:random-design-prediction-error}} \cdot \Cnrm^2 \| \theta \|_1^2 \sqrt{\log(Hd/\delta)}}{\sqrt N}$. Since $\E\sups{M, \hat \pi^{h+1:k}}[\lng \phi_k(x_k, a_k), \theta \rng | (x_h, a_h) = (x,a)] = Q_h\sups{M, \hat \pi^{h+1:k},\bfr}(x,a)$, it follows
  by \cref{cor:random-design-prediction-error} that, for some event $\ME_h$ that holds with probability at least $1-\delta/H$,
  \begin{align}
\E_{\pi' \sim \unif(\Psi_h)} \E\sups{M, \pi'\circ_h \unif(\MA)} \left[ \lng \phi_h(x_h, a_h), \bw_h^\st - \hat \bw_h \rng^2 \right] \leq \vep_0^2/4\label{eq:good-psdp-event}.
  \end{align}
  For each $x \in \MX$ and $h \in [k]$, let us write $\Delta_h(x) := \max_{a \in \MA} | \lng \phi_h(x,a), \bw_h^\st - \hat \bw_h \rng |$. It follows from \cref{eq:good-psdp-event} that
  \begin{align}
\E_{\pi' \sim \unif(\Psi_h)} \E\sups{M, \pi'}[\Delta_h(x_h)^2] \leq \E_{\pi' \sim \unif(\Psi_h)} \E\sups{M, \pi'}\left[ \sum_{a \in \MA} \lng \phi_h(x_h, a), \bw_h^\st - \hat \bw_h \rng^2 \right] \leq A \cdot\vep_0^2/4\nonumber,
  \end{align}
  which yields, via Jensen's inequality, that $\E_{\pi' \sim \unif(\Psi_h)} \E\sups{M, \pi'}[\Delta_h(x_h)] \leq \sqrt{A} \cdot \vep_0/2$. Using the assumption that each $\Psi_h$ satisfies the condition \cref{eq:pc-trunc} together with non-negativity of $\Delta_h(x)$, it follows that under the event $\ME_h$,
  \begin{align}
    \max_{\pi \in \Pi} \E\sups{\Mbar(\emptyset), \pi}[\Delta_h(x_h)] 
    &= \max_{\pi \in \Pi} \sum_{x \in \MX} d_h\sups{\Mbar(\emptyset), \pi}(x) \cdot \Delta_h(x)\nonumber\\
    &\leq \E_{\pi' \sim \unif(\Psi_h)} \sum_{x \in \MX} \frac{1}{\alpha} \cdot d_h\sups{M, \pi'}(x)\cdot \Delta_h(x)\nonumber\\
    &= \frac{1}{\alpha} \E_{\pi' \sim \unif(\Psi_h)} \E\sups{M, \pi'} [\Delta_h(x_h)] \leq \frac{\sqrt{A} \cdot \vep_0}{2\alpha}\label{eq:delta-psdp-bound}.
  \end{align}
  where we have defined $\Delta_h(\term) := 0$. Define $\pi^\st \in \Pi$ by $\pi^\st := \argmax_{\pi \in \Pi} \lng \E\sups{\Mbar(\emptyset), \pi} [\phi_k(x_k, a_k)], \theta \rng$. Then for any $x \in \MX$, we can bound
  \begin{align}
    Q_h\sups{M, \hat \pi,\bfr}(x, \pi_h^\st(x)) - Q_h\sups{M, \hat \pi,\bfr}(x, \hat \pi_h(x)) 
    &= \lng \phi_h(x, \pi_h^\st(x)), \bw_h^\st \rng - \lng \phi_h(x, \hat \pi_h(x)), \bw_h^\st \rng\nonumber\\
    &\leq  \lng \phi_h(x, \pi_h^\st(x)), \hat \bw_h \rng - \lng \phi_h(x, \hat \pi_h(x)), \bw_h^\st \rng + \Delta_h(x)\nonumber\\
    &\leq  \lng \phi_h(x, \hat \pi_h(x)), \hat \bw_h \rng - \lng \phi_h(x, \hat \pi_h(x)), \bw_h^\st \rng + \Delta_h(x)\nonumber\\
    &\leq  2\Delta_h(x)\nonumber,
  \end{align}
  where the first and third inequalities use the definition of $\Delta_h(x)$ and the second inequality uses the definition $\hat \pi_h(x) := \argmax_{a \in \MA} \lng \phi_h(x,a), \hat \bw_h \rng$. When $x=\term$ we have $Q\sups{M,\hat\pi,\bfr}(\term,a) = 0$ for all $a\in\MA$, so the above inequality still holds. Then by \cref{lem:pd-trunc}, we conclude that, under the event $\bigcap_{h=1}^k \ME_h$ (which occurs with probability at least $1-\delta$), 
  \begin{align}
    & \lng \E\sups{\Mbar(\emptyset), \pi^\st}[\phi_k(x_k, a_k)], \theta \rng - \lng \E\sups{M, \hat \pi}[\phi_k(x_k, a_k)], \theta \rng \nonumber\\
    &\leq  \epnnnt k +  \sum_{h=1}^{k} \E\sups{\Mbar(\emptyset), \pi^\st}[Q_h\sups{M, \hat\pi,\bfr}(x_h, a_h) - V_h\sups{M, \hat\pi,\bfr}(x_h)] \nonumber\\
    &\qquad+ \frac{2A \| \theta \|_1}{\alpha} \sum_{h=1}^k \sum_{g=h}^k \E_{\pi' \sim \unif(\Psi_{h-1})} d_g\sups{M, \pi' \circ_{h-1}\unif(\MA)\circ_h \hat\pi}(\MX \backslash \Xreach_g(\Gamma))\nonumber\\
    &=  \epnnnt k +  \sum_{h=1}^{k} \E\sups{\Mbar(\emptyset), \pi^\st}[Q_h\sups{M, \hat\pi,\bfr}(x_h, \pi_h^\st(x_h)) - Q_h\sups{M, \hat\pi,\bfr}(x_h, \hat \pi_h(x_h))] \nonumber\\
    &\qquad+ \frac{\norm{\theta}_1 2A}{\alpha} \sum_{h=1}^k \sum_{g=h}^k \E_{\pi' \sim \unif(\Psi_{h-1})} d_g\sups{M, \pi' \circ_{h-1}\unif(\MA)\circ_h \hat\pi}(\MX \backslash \Xreach_g(\Gamma))\nonumber\\
    &\leq   \epnnnt k +  \sum_{h=1}^{k} \E\sups{\Mbar(\emptyset), \pi^\st}[2\Delta_h(x_h)] + \frac{\norm{\theta}_1 2A}{\alpha} \sum_{h=1}^k \sum_{g=h}^k \E_{\pi' \sim \unif(\Psi_{h-1})} d_g\sups{M, \pi'\circ_{h-1}\unif(\MA) \circ_h \hat\pi}(\MX \backslash \Xreach_g(\Gamma))\nonumber\\
    &\leq  \epnnnt k + \frac{\vep_0 H \sqrt{A}}{\alpha} +\frac{\norm{\theta}_1 2A}{\alpha} \sum_{h=1}^k \sum_{g=h}^k \E_{\pi' \sim \unif(\Psi_{h-1})} d_g\sups{M, \pi'\circ_{h-1}\unif(\MA) \circ_h \hat\pi}(\MX \backslash \Xreach_g(\Gamma)) \nonumber
  \end{align}
  where the last inequality is by \cref{eq:delta-psdp-bound}. Substituting in the definition of $\vep_0$ and the definition of $N$ in \cref{eq:psdp-N-choice} completes the proof, as long as the constant $C_{\ref{lem:psdp-trunc}}$ is chosen sufficiently large. 
\end{proof}

The below lemma is used in the proof of \cref{lem:psdp-trunc}, and can be viewed as a variant of the performance difference lemma. 
\begin{lemma}
  \label{lem:pd-trunc}
  Consider any $k \in [H]$ and vector $w \in \BR^d\times\{0\}$. Also fix the following:
  \begin{itemize}
  \item $\Psi_1, \ldots, \Psi_{k-1} \subset \Pi$ such that each $\Psi_h$ is an $\alpha$-truncated cover at step $h$ (per \cref{eq:pc-trunc}). 
  \item $\Gamma \subset \Pi$ such that $w$ is an $\epnnnt$-nearly non-negative target at step $k$ with respect to $\Gamma$ (per \cref{def:nnnt}).
  \end{itemize}
  Let $\bfr = (\bfr_1,\dots,\bfr_H)$ denote the reward function defined by $(x,a) \mapsto \lng w, \phi_k(x,a) \rng$ at step $k$ and $0$ at all other steps. Then for any $\pi,\pi^\st \in \Pi$,
  \begin{align}
    & \lng w, \E\sups{\Mbar(\emptyset), \pi^\st}[\phi_k(x_k, a_k)] \rng - \lng w, \E\sups{M, \pi}[\phi_k(x_k, a_k)]\rng \nonumber\\
    &\leq \epnnnt k +  \sum_{h=1}^{k} \E\sups{\Mbar(\emptyset), \pi^\st}[Q_h\sups{M, \pi,\bfr}(x_h, a_h) - V_h\sups{M, \pi,\bfr}(x_h)] \nonumber\\
    &\qquad+ \frac{2A \| w \|_1}{\alpha} \sum_{h=1}^k \sum_{g=h}^k \E_{\pi' \sim \unif(\Psi_{h-1})} d_g\sups{M, \pi' \circ_{h-1}\unif(\MA)\circ_h \pi}(\MX \backslash \Xreach_g(\Gamma))\nonumber.
  \end{align}
\end{lemma}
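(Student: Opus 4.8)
\textbf{Plan for proving \cref{lem:pd-trunc}.}

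The plan is to adapt the standard performance difference lemma to account for the two MDPs $\Mbar(\emptyset)$ (where $\pi^\st$ is evaluated) and $M$ (where $\pi$ is evaluated), tracking exactly where transitions differ. First I would write the telescoping decomposition along the trajectory of $\pi^\st$ in the MDP $\Mbar(\emptyset)$: using $V_{k+1} = 0$ (at step $k$ we only care about the reward at step $k$; all other rewards are zero), we have
\begin{align*}
\lng w, \E\sups{\Mbar(\emptyset),\pi^\st}[\phi_k(x_k,a_k)]\rng
&= \E\sups{\Mbar(\emptyset),\pi^\st}\left[ V_1\sups{M,\pi,\bfr}(x_1)\right] + \sum_{h=1}^{k} \E\sups{\Mbar(\emptyset),\pi^\st}\left[Q_h\sups{M,\pi,\bfr}(x_h,a_h) - V_h\sups{M,\pi,\bfr}(x_h)\right] \\
&\quad+ \sum_{h=1}^{k-1} \E\sups{\Mbar(\emptyset),\pi^\st}\left[V_{h+1}\sups{M,\pi,\bfr}(x_{h+1})\right] - \E\sups{\Mbar(\emptyset),\pi^\st}\left[Q_h\sups{M,\pi,\bfr}(x_h,a_h)\right].
\end{align*}
The first term equals $\E\sups{\Mbar(\emptyset),\pi^\st}[V_1\sups{M,\pi,\bfr}(x_1)] = V_1\sups{M,\pi,\bfr}(x_1) = \lng w, \E\sups{M,\pi}[\phi_k(x_k,a_k)]\rng$ since $\BP_1$ is shared and $V_1\sups{M,\pi,\bfr}$ is a function of the initial state only; so this term cancels against the left-hand side contribution we want. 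The middle sum is exactly the desired ``$\sum_h \E\sups{\Mbar(\emptyset),\pi^\st}[Q_h - V_h]$'' term. The remaining task is to bound the last sum, which measures the discrepancy between one-step transitions of $\Mbar(\emptyset)$ and $M$ under $\pi^\st$.

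For each $h$, the difference $\E\sups{\Mbar(\emptyset),\pi^\st}[V_{h+1}\sups{M,\pi,\bfr}(x_{h+1}) - Q_h\sups{M,\pi,\bfr}(x_h,a_h)]$ is an expectation over $(x_h,a_h) \sim d\sups{\Mbar(\emptyset),\pi^\st}_h$ of $\E_{x' \sim \BP\sups{\Mbar(\emptyset)}_h(x_h,a_h)}[V_{h+1}] - \E_{x' \sim \BP\sups{M}_h(x_h,a_h)}[V_{h+1}]$. Since truncation only removes mass (sending it to $\term$ where $V = 0$), and $V_{h+1}\sups{M,\pi,\bfr}$ equals $\E\sups{M,\pi}[\lng w, \phi_k\rng | x_{h+1}]$, which is bounded in absolute value by $\norm{w}_1$ but is \emph{not} necessarily nonnegative — this is precisely where the nearly-non-negative target hypothesis enters. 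I would split $V_{h+1}\sups{M,\pi,\bfr}$ into its positive part and negative part. For the positive part, removing transition mass (to $\term$) only \emph{decreases} the expectation, so that contribution to $\E\sups{\Mbar(\emptyset),\pi^\st}[V_{h+1} - Q_h]$ is $\le 0$, which is favorable. For the negative part, I would bound $|V_{h+1}\sups{M,\pi,\bfr}(x')^-|$ pointwise and relate the total negative contribution to the $\epnnnt$-near-nonnegativity of $w$ with respect to $\Gamma$ and to the probability of visiting $\MX\setminus\Xreach_g(\Gamma)$. Concretely, the transitions of $\Mbar(\emptyset)$ at step $h$ differ from those of $M$ only by diverting mass from states in $\MX\setminus\Xreach_{h+1}(\emptyset) \subseteq \MX\setminus\Xreach_{h+1}(\Gamma)$, and each such diversion costs at most the negative part of $V_{h+1}$, which in turn (by the definition of $\bfr$ and Definition~\ref{def:nnnt} applied along trajectories of suitable policies, combined with the truncated-cover property \cref{eq:pc-trunc} via a change of measure as in the proof of \cref{lem:psdp}) is controlled by $\epnnnt$ plus a multiple of $\sum_{g \ge h} \E_{\pi' \sim \unif(\Psi_{h-1})} d_g\sups{M,\pi'\circ_{h-1}\unif(\MA)\circ_h \pi}(\MX\setminus\Xreach_g(\Gamma))$.

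The main obstacle I anticipate is bookkeeping the negative part of $V_{h+1}\sups{M,\pi,\bfr}$ correctly: it is an expectation under $\pi$ (not $\pi^\st$) rolled forward in $M$, so to relate it to Definition~\ref{def:nnnt} (which speaks about $\Mbar(\Gamma)$ and a maximum over actions) one must carefully argue that $\E\sups{M,\pi}[\lng w,\phi_k(x_k,a_k)\rng \mid x_{h+1}=x'] \ge -\epnnnt$ up to the extraneous-visitation error — i.e. the negativity of $V_{h+1}$ can only come from mass that $\pi$, rolled out from $x'$, places on states outside $\Xreach_g(\Gamma)$ (for $g > h$), or from an $\epnnnt$-sized slack. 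Making this precise requires yet another telescoping/change-of-measure step internal to this bound, and getting the policy composition $\pi' \circ_{h-1}\unif(\MA)\circ_h\pi$ and the factor $2A/\alpha$ to line up exactly (the $A$ from the uniform-action insertion, the $1/\alpha$ from the cover change-of-measure, the $2$ from a crude bound) is the fiddly part. The reward-nonnegativity intuition from the discussion preceding \cref{def:nnnt} is the right guide, but turning ``approximately nonnegative'' into the stated clean inequality is where the real work lies; the rest is routine telescoping identical in spirit to the proof of \cref{lem:psdp}.
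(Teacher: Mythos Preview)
Your telescoping decomposition is equivalent to the paper's (the paper telescopes through the intermediate MDPs $\Mbar_h(\emptyset)$, but by \cref{fact:trunc-intermediate-dist} and \cref{fact:trunc-intermediate-trans} this yields exactly your identity), and your positive/negative split of $V_{h+1}^{M,\pi,\bfr}$ is the right idea. However, there is a concrete error that points to a genuine missing step.

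You write $\MX\setminus\Xreach_{h+1}(\emptyset) \subseteq \MX\setminus\Xreach_{h+1}(\Gamma)$. This containment is backwards: by construction $\Xreach_{h+1}(\emptyset) = \tilXreach_{h+1} \subseteq \Xreach_{h+1}(\Gamma)$, so in fact $\MX\setminus\Xreach_{h+1}(\Gamma) \subseteq \MX\setminus\Xreach_{h+1}(\emptyset)$. The set of states truncated by $\Mbar(\emptyset)$ at step $h+1$ is \emph{larger} than the set truncated by $\Mbar(\Gamma)$, not smaller. This matters because your plan to bound the negative part of $V_{h+1}^{M,\pi,\bfr}$ by ``$\epnnnt$ plus extraneous visitation'' hinges on applying the $\epnnnt$-nearly-nonnegative-target condition, and that condition is an \emph{expectation} bound under $\Mbar(\Gamma)$, not a pointwise bound. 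To invoke it, you must weight states by $d_{h+1}^{\Mbar(\Gamma),\,\cdot\,}$; but the weights you actually have are $d_{h+1}^{\Mbar_h(\emptyset),\pi^\st}$, which dominate $d_{h+1}^{\Mbar(\Gamma),\pi^\st}$ only on $\Xreach_{h+1}(\Gamma)$.

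The fix --- and this is what the paper does --- is to split the diverted states $\MX\setminus\Xreach_{h+1}(\emptyset)$ into two pieces. On $\Xreach_{h+1}(\Gamma)\setminus\Xreach_{h+1}(\emptyset)$, one uses $d_{h+1}^{\Mbar_h(\emptyset),\pi^\st}(x) \le d_{h+1}^{\Mbar(\Gamma),\pi^\st}(x)$ to switch measure, and then the near-nonnegativity of $w$ (applied with the composite policy $\pi^\st\circ_{h+1}\pi$ in $\Mbar(\Gamma)$) delivers the $\epnnnt$ contribution. On $\MX\setminus\Xreach_{h+1}(\Gamma)$, one simply pays $\|w\|_1$ times the total weight there, and then uses the truncated-cover change of measure (\cref{lemma:indhyp-shifted-trunc}) to convert $d_{h+1}^{\Mbar_h(\emptyset),\pi^\st}(\MX\setminus\Xreach_{h+1}(\Gamma))$ into $\frac{A}{\alpha}\E_{\pi'\sim\unif(\Psi_h)}d_{h+1}^{M,\pi'\circ_h\unif(\MA)}(\MX\setminus\Xreach_{h+1}(\Gamma))$; this is what produces the $g=h$ term in the double sum (after reindexing). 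Your plan covers the future-step discrepancy $|V^{M,\pi}-V^{\Mbar(\Gamma),\pi}|$ correctly (yielding the $g>h$ terms), but this current-step split is the missing ingredient.
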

\begin{proof}[Proof of \cref{lem:pd-trunc}]
 Note that, for each $h < k$, each $x \in \bar\MX$, $a \in \MA$, and each $\pi \in \Pi$,
  \begin{align}
Q_h\sups{M, \pi,\bfr}(x,a) = \E_{x' \sim \BP_h\sups{M}(x,a)} [V_{h+1}\sups{M, \pi,\bfr}(x')] = \E_{x'\sim \BP_h\sups{\Mbar_h}(x,a)}[V_{h+1}\sups{M,\pi,\bfr}(x')]\label{eq:qv-eq}.
  \end{align}
  where the last equality is by \cref{fact:trunc-intermediate-trans}.

  Consider policies $\pi, \pi^\st$ as in the lemma statement. For $1 \leq h \leq H$, write $\Mbar_h := \Mbar_h(\emptyset)$. Then $\Mbar_1 = M$ and $\Mbar_{H} = \Mbar$. As a matter of convention, we further write $\Mbar_0 := M$. By telescoping, we have
  \begin{align}
    & \lng w, \E\sups{\Mbar, \pi^\st}[\phi_k(x_k, a_k)] \rng - \lng w, \E\sups{M, \pi}[\phi_k(x_k, a_k)]\rng\nonumber\\
    &= \E\sups{\Mbar, \pi^\st}[Q_k\sups{M, \pi,\bfr}(x_k, a_k)] - V_1\sups{M, \pi,\bfr}(x_1)\nonumber\\
    &= \E\sups{\Mbar, \pi^\st}[Q_k\sups{M, \pi,\bfr}(x_k, a_k)] - V_1\sups{M, \pi,\bfr}(x_1) + \sum_{h=1}^{k-1} \E\sups{\Mbar_{h}, \pi^\st}[Q_h\sups{M, \pi,\bfr}(x_h, a_h)] - \E\sups{\Mbar_{h}, \pi^\st}[V_{h+1}\sups{M, \pi,\bfr}(x_{h+1})]  \nonumber\\
    &= \sum_{h=1}^k \E\sups{\Mbar_{h}, \pi^\st}[Q_h\sups{M, \pi,\bfr}(x_h, a_h)] - \E\sups{\Mbar_{h-1}, \pi^\st}[V_h\sups{M, \pi,\bfr}(x_h)]\nonumber\\
    &= \sum_{h=1}^{k} \left( \E\sups{\Mbar_{h}, \pi^\st}[V_h\sups{M, \pi,\bfr}(x_h)] - \E\sups{\Mbar_{h-1}, \pi^\st}[V_h\sups{M, \pi,\bfr}(x_h)] \right) + \sum_{h=1}^{k} \E\sups{\Mbar_{h}, \pi^\st}[Q_h\sups{M, \pi,\bfr}(x_h, a_h) - V_h\sups{M, \pi,\bfr}(x_h)]\label{eq:pd-decomposition}
  \end{align}
  where the second equality is by \cref{eq:qv-eq}. By \cref{fact:trunc-intermediate-dist}, the second summation above is precisely the second term in the claimed bound of the lemma statement. We proceed to bound each term of the first summation above. If $\bfr$ were exactly non-negative, then every term in the first summation would be non-positive (since $\Mbar_h$ is ``more truncated'' than $\Mbar_{h-1}$), but since we only assume approximate non-negativity (in the sense specified by the second lemma hypothesis), bounding these terms requires more work and leads to the additional error terms in the lemma statement. For each $1 \leq h \leq k$, we use the following triangle inequality:

  \begin{align*}
  &\E\sups{\Mbar_{h}, \pi^\st}[V_h\sups{M, \pi}(x_h)] - \E\sups{\Mbar_{h-1}, \pi^\st}[V_h\sups{M, \pi}(x_h)]\\
  &\leq \underbrace{\E\sups{\Mbar_h,\pi^\st}[V\sups{\Mbar(\Gamma),\pi}_h(x_h)] - \E\sups{\Mbar_{h-1},\pi^\st}[V\sups{\Mbar(\Gamma),\pi}_h(x_h)]}_{\S} \\  
  &+\underbrace{\left|\E\sups{\Mbar_h,\pi^\st}[V\sups{M,\pi}_h(x_h)] - \E\sups{\Mbar_h,\pi^\st}[V\sups{\Mbar(\Gamma),\pi}_h(x_h)]\right|}_{\ddag}
  + \underbrace{\left|\E\sups{\Mbar_{h-1},\pi^\st}[V\sups{M,\pi}_h(x_h)] - \E\sups{\Mbar_{h-1},\pi^\st}[V\sups{\Mbar(\Gamma),\pi}_h(x_h)]\right|}_\dag
  \end{align*}
 where $V_h\sups{\Mbar(\Gamma),\pi}(x)$ denotes the $V$-value function for the MDP $\Mbar(\Gamma)$, policy $\pi$, and reward given by $(x,a) \mapsto \langle w,\phi_k(x,a)\rangle$ for all $(x,a) \in \MX\times\MA$ at step $k$, reward $0$ for $x=\term$, and reward $0$ for all other steps. (In particular, in the above inequality and in the remainder of the proof of this lemma, we omit the superscript $\bfr$ in the value functions.) We proceed to bound terms $\S$, $\ddag$, and $\dag$ individually.
 
 \paragraph{Bounding $\ddag$ and $\dag$.} For any $h \in [H]$ and $x \in \MX$, and all $\pi \in \Pi$, we have that
  \begin{align}
| V_h\sups{M, \pi}(x) - V_h\sups{\Mbar(\Gamma), \pi}(x) | &\leq  \| w \|_1 \sum_{g=h+1}^k \BP\sups{M, \pi}(x_g \in \MX \backslash \Xreach_g(\Gamma) \ | \ x_h = x)\label{eq:vf-trunc-error},
  \end{align}
  where we have used the fact that $|\lng w, \phi_k(x_k, a_k)\rng| \leq \| w \|_1$ (see the proof of \cref{lem:m-mbar-delta}). Moreover $V_h\sups{M,\pi}(\term) = V_h\sups{\Mbar(\Gamma),\pi}(\term) = 0$. Thus,
  \begin{align}
    & \left| \E\sups{\Mbar_{h-1}, \pi^\st}[V_h\sups{M, \pi}(x_h)] - \E\sups{\Mbar_{h-1}, \pi^\st}[V_h\sups{\Mbar(\Gamma), \pi}(x_h)]\right| \nonumber\\
    &\leq \| w \|_1 \cdot \sum_{x \in \MX}d_h\sups{\Mbar_{h-1}, \pi^\st}(x) \sum_{g=h+1}^k   \BP\sups{M, \pi}(x_g \in \MX \backslash \Xreach_g(\Gamma) \ | \ x_h = x)\nonumber\\
    &\leq \frac{\| w \|_1 A}{\alpha}\cdot \EE_{\pi'\sim\unif(\Psi_{h-1})}\sum_{x \in \MX} d_h\sups{M,\pi'\circ_{h-1}\unif(\MA)}(x) \sum_{g=h+1}^k   \BP\sups{M, \pi}(x_g \in \MX \backslash \Xreach_g(\Gamma) \ | \ x_h = x)\nonumber\\
    &= \frac{\| w \|_1 A}{\alpha} \cdot \E_{\pi' \sim \unif(\Psi_{h-1})} \sum_{g=h+1}^k d_g\sups{M, \pi' \circ_{h-1} \unif(\MA) \circ_h \pi}(\MX \backslash \Xreach_g(\Gamma))\label{eq:vm-mtil-2},
  \end{align}
  where the second inequality uses \cref{lemma:indhyp-shifted-trunc} (note that $\Mbar_0  =\Mbar_1$, so when $h=1$ the desired inequality still holds).
  By an identical argument, we also have that
  \begin{align}
 \left| \E\sups{\Mbar_{h}, \pi^\st}[V_h\sups{M, \pi}(x_h)] - \E\sups{\Mbar_{h}, \pi^\st}[V_h\sups{\Mbar(\Gamma), \pi}(x_h)]\right|  & \leq \frac{A \| w \|_1}{\alpha} \cdot \E_{\pi' \sim \unif(\Psi_{h-1})} \sum_{g=h+1}^k d_g\sups{M, \pi' \circ_{h-1}\unif(\MA) \circ_h \pi}(\MX \backslash \Xreach_g(\Gamma))\label{eq:vm-mtil-1}.
  \end{align}
  \paragraph{Bounding $\S$.} We finally bound $\S$ using the assumption that $w$ is an $\epnnnt$-nearly non-negative target at step $k$ with respect to $\Gamma$ (\cref{def:nnnt}). For any $1 \leq h \leq k$, since $V\sups{\Mbar(\Gamma),\pi}_h(\term) = 0$ and $d\sups{\Mbar_h,\pi^\st}_h(x) = d\sups{\Mbar_{h-1},\pi^\st}_h(x)$ for all $x \in \Xreach_h(\emptyset)$ (and on the other hand $d\sups{\Mbar_h,\pi^\st}_h(x) = 0$ for all $x \in \MX\setminus\Xreach_h(\emptyset)$), we have that
  \begin{align}
    & \E\sups{\Mbar_{h}, \pi^\st}[V_h\sups{\Mbar(\Gamma), \pi}(x_h)] - \E\sups{\Mbar_{h-1}, \pi^\st}[V_h\sups{\Mbar(\Gamma), \pi}(x_h)] \nonumber\\
    &= \sum_{x \in \MX\backslash \Xreach_h(\emptyset)} -d_h\sups{\Mbar_{h-1}, \pi^\st}(x) \cdot V_h\sups{\Mbar(\Gamma), \pi}(x)\nonumber\\
    &= \sum_{x \in \MX\backslash \Xreach_h(\Gamma)} -d_h\sups{\Mbar_{h-1}, \pi^\st}(x) \cdot V_h\sups{\Mbar(\Gamma), \pi}(x) + \sum_{x \in \Xreach_h(\Gamma) \backslash \Xreach_h(\emptyset)} -d_h\sups{\Mbar_{h-1}, \pi^\st}(x) \cdot V_h\sups{\Mbar(\Gamma), \pi}(x)\nonumber
  \end{align}
  
  To bound the first term, we simply note that $|V_h\sups{\Mbar(\Gamma),\pi}(x)| \leq \norm{w}_1$ for all $x \in \MX$ and $\pi \in \Pi$, so that
  \begin{align*}
  \sum_{x \in \MX\backslash \Xreach_h(\Gamma)} -d_h\sups{\Mbar_{h-1}, \pi^\st}(x) \cdot V_h\sups{\Mbar(\Gamma), \pi}(x) 
  &\leq \| w \|_1 \cdot d_h\sups{\Mbar_{h-1}, \pi^\st}(\MX\backslash \Xreach_h(\Gamma))  \\
  &\leq \frac{A \| w \|_1}{\alpha} \cdot \E_{\pi' \sim \unif(\Psi_{h-1})} d_h\sups{M, \pi'\circ_{h-1}\unif(\MA)}(\MX \backslash \Xreach_h(\Gamma))
  \end{align*}
  where the final inequality uses \cref{lemma:indhyp-shifted-trunc}. %

  To bound the second term, note that for any $x \in \MX$, 
  \begin{align}
-V_h\sups{\Mbar(\Gamma), \pi}(x) \leq \E\sups{\Mbar(\Gamma), \pi} \left[ \max\left\{ 0, \max_{a \in \MA} -\lng w, \phi_k(x_k, a) \rng \right\} \ \middle| \ x_h = x \right]\nonumber.
  \end{align}
  Hence,
  \begin{align}
  &\sum_{x \in \Xreach_h(\Gamma) \backslash \Xreach_h(\emptyset)} -d_h\sups{\Mbar_{h-1}, \pi^\st}(x) \cdot V_h\sups{\Mbar(\Gamma), \pi}(x)\nonumber \\
    &\leq \sum_{x \in \Xreach_h(\Gamma) \backslash \Xreach_h(\emptyset)} d_h\sups{\Mbar_{h-1}, \pi^\st}(x) \cdot \E\sups{\Mbar(\Gamma), \pi}\left[ \max\left\{ 0, \max_{a \in \MA} -\lng w, \phi_k(x_k,a) \rng \right\}  \ \middle| \ x_h = x\right]\nonumber\\
    &\leq \sum_{x \in \Xreach_h(\Gamma) \backslash \Xreach_h(\emptyset)} d_h\sups{\Mbar(\Gamma), \pi^\st}(x) \cdot \E\sups{\Mbar(\Gamma), \pi}\left[ \max\left\{ 0, \max_{a \in \MA} -\lng w, \phi_k(x_k,a) \rng \right\}  \ \middle| \ x_h = x\right]\nonumber\\
    &\leq \E\sups{\Mbar(\Gamma), \pi^\st \circ_h \pi}\left[ \max\left\{ 0, \max_{a\in \MA} -\lng w, \phi_k(x_k, a) \rng \right\} \right]\nonumber\nonumber\\
    &\leq \epnnnt k \nonumber
  \end{align}
  where the second inequality uses that $d_h\sups{\Mbar_{h-1}, \pi^\st}(x) \leq d_h\sups{\Mbar_{h-1}(\Gamma), \pi^\st}(x) = d_h\sups{\Mbar(\Gamma), \pi^\st}(x)$ for all $x \in \Xreach_h(\Gamma)$; and
  the final inequality uses the assumption that $w$ is an $\epnnnt$-nearly non-negative target at step $k$ with respect to $\Gamma$. We conclude that

  \begin{align}
  &\E\sups{\Mbar_{h}, \pi^\st}[V_h\sups{\Mbar(\Gamma), \pi}(x_h)] - \E\sups{\Mbar_{h-1}, \pi^\st}[V_h\sups{\Mbar(\Gamma), \pi}(x_h)] \nonumber\\
  &\leq \epnnnt k + \frac{A \| w \|_1}{\alpha} \cdot \E_{\pi' \sim \unif(\Psi_{h-1})} d_h\sups{M, \pi'\circ_{h-1}\unif(\MA)}(\MX \backslash \Xreach_h(\Gamma)). \label{eq:mtil-nonneg}
  \end{align}

  \paragraph{Putting everything together.} Combining the bounds \cref{eq:vm-mtil-2}, \cref{eq:vm-mtil-1}, and \cref{eq:mtil-nonneg} on $\dag$, $\ddag$, and $\S$, and substituting into \cref{eq:pd-decomposition}, we have that
  \begin{align}
    & \lng w, \E\sups{\Mbar, \pi^\st}[\phi_k(x_k, a_k)] \rng - \lng w, \E\sups{M, \pi}[\phi_k(x_k, a_k)]\rng\nonumber\\
    &\leq  \sum_{h=1}^{k} \E\sups{\Mbar_{h}, \pi^\st}[Q_h\sups{M, \pi}(x_h, a_h) - V_h\sups{M, \pi}(x_h)] + \sum_{h=1}^k  \left| \E\sups{\Mbar_{h-1}, \pi^\st}[V_h\sups{M, \pi}(x_h)] - \E\sups{\Mbar_{h-1}, \pi^\st}[V_h\sups{\Mbar(\Gamma), \pi}(x_h)]\right|\nonumber\\
    & \quad+ \sum_{h=1}^k  \left| \E\sups{\Mbar_{h}, \pi^\st}[V_h\sups{M, \pi}(x_h)] - \E\sups{\Mbar_{h}, \pi^\st}[V_h\sups{\Mbar(\Gamma), \pi}(x_h)]\right| + \sum_{h=1}^k \E\sups{\Mbar_{h}, \pi^\st}[V_h\sups{\Mbar(\Gamma), \pi}(x_h)] - \E\sups{\Mbar_{h-1}, \pi^\st}[V_h\sups{\Mbar(\Gamma), \pi}(x_h)] \nonumber\\
    &\leq \epnnnt k +  \sum_{h=1}^{k} \E\sups{\Mbar_{h}, \pi^\st}[Q_h\sups{M, \pi}(x_h, a_h) - V_h\sups{M, \pi}(x_h)] \nonumber \\ 
    &\qquad+ \frac{2A \| w \|_1}{\alpha} \sum_{h=1}^k \sum_{g=h}^k \E_{\pi' \sim \unif(\Psi_{h-1})} d_g\sups{M, \pi' \circ_{h-1}\unif(\MA) \circ_h \pi}(\MX \backslash \Xreach_g(\Gamma)) \nonumber.
  \end{align}
  The proof is completed by noting that $\E\sups{\Mbar_{h}, \pi^\st}[Q_h\sups{M, \pi}(x_h, a_h) - V_h\sups{M, \pi}(x_h)] = \E\sups{\Mbar, \pi^\st}[Q_h\sups{M, \pi}(x_h, a_h) - V_h\sups{M, \pi}(x_h)]$ for each $h$ (using \cref{fact:trunc-intermediate-dist} with $\Gamma=\emptyset$). 
\end{proof}
We finally prove the following helper lemma, which relates the state visitation probabilities under any policy $\pi$ in $\Mbar_h(\emptyset)$ to the visitation probabilities under a uniform policy from $\Psi_{h-1}$ in $M$, and was used in the proof of \cref{lem:pd-trunc}.

\begin{lemma}\label{lemma:indhyp-shifted-trunc}
Let $\alpha>0$ and $2 \leq h \leq H$. Suppose that $\Psi_{h-1}$ is an $\alpha$-truncated cover at step $h-1$. Then for all $x \in \MX$ and $\pi \in \Pi$, it holds that
\[ d\sups{\Mbar_h(\emptyset),\pi}_h(x) \leq d\sups{\Mbar_{h-1}(\emptyset),\pi}_{h}(x) \leq \frac{A}{\alpha} \cdot \EE_{\pi' \sim \unif(\Psi_{h-1})} d\sups{M,\pi'\circ_{h-1}\unif(\MA)}_h(x).\]
\end{lemma}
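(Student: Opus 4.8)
\textbf{Proof proposal for Lemma~\ref{lemma:indhyp-shifted-trunc}.}

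The plan is to prove the two inequalities separately. The first inequality, $d\sups{\Mbar_h(\emptyset),\pi}_h(x) \leq d\sups{\Mbar_{h-1}(\emptyset),\pi}_h(x)$, follows directly from \cref{fact:trunc-intermediate-dist}: since $\Mbar_h(\emptyset)$ and $\Mbar_{h-1}(\emptyset)$ have identical transitions up to step $h-1$, they induce the same distribution on trajectory prefixes $(x_1,a_1,\dots,x_{h-1},a_{h-1},x_h)$, so in fact equality holds at step $h$. (Alternatively, \cref{lem:gamma-monotonicity} applied appropriately gives the inequality directly; but equality via \cref{fact:trunc-intermediate-dist} is cleanest.) Actually, one must be slightly careful: \cref{fact:trunc-intermediate-dist} as stated compares $\Mbar_h(\Gamma)$ with $\Mbar(\Gamma)$, so I would note that the same reasoning—transitions agree up to step $h-1$—shows $d\sups{\Mbar_h(\emptyset),\pi}_h(x) = d\sups{\Mbar_{h-1}(\emptyset),\pi}_h(x)$ since $\Mbar_{h-1}(\emptyset)$ and $\Mbar_h(\emptyset)$ differ only in whether step $h-1$ transitions are truncated via $\Xreach_h(\emptyset)$, which does not affect the marginal at step $h$. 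This gives the first inequality (as an equality).

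For the second inequality, I would first write, using the definition of the transition dynamics,
\[ d\sups{\Mbar_{h-1}(\emptyset),\pi}_h(x) = \sum_{x_{h-1}\in\bar\MX} d\sups{\Mbar_{h-1}(\emptyset),\pi}_{h-1}(x_{h-1}) \cdot \lng \mu\sups{\Mbar_{h-1}(\emptyset)}_h(x), \phi_{h-1}(x_{h-1},\pi_{h-1}(x_{h-1}))\rng. \]
Since $\mu\sups{\Mbar_{h-1}(\emptyset)}_h(x) \in \{\mu_h(x), 0\}$ (it is truncated or not), we may upper bound this by replacing $\mu\sups{\Mbar_{h-1}(\emptyset)}_h(x)$ with $\mu_h(x)$, using nonnegativity of $\lng\mu_h(x),\phi_{h-1}(\cdot,\cdot)\rng$. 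Next, I would bound $d\sups{\Mbar_{h-1}(\emptyset),\pi}_{h-1}(x_{h-1}) \leq d\sups{M,\pi}_{h-1}(x_{h-1})$ for $x_{h-1}\in\MX$ (by \cref{lem:gamma-monotonicity} with $\Gamma=\emptyset$; the $\term$ term contributes zero since $\lng\mu_h(x),\phi_{h-1}(\term,a)\rng = \lng\mu_h(x),e_{d+1}\rng = 0$ as $\mu_h(x)\in\BR^d\times\{0\}$). This yields $d\sups{\Mbar_{h-1}(\emptyset),\pi}_h(x) \leq d\sups{M,\pi\circ_h\pi'}_h(x)$ actually more simply $\leq \sum_{x_{h-1}} d\sups{M,\pi}_{h-1}(x_{h-1}) \lng\mu_h(x),\phi_{h-1}(x_{h-1},\pi_{h-1}(x_{h-1}))\rng = d\sups{M,\pi}_h(x)$. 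Now I apply the $\alpha$-truncated cover property \cref{eq:pc-trunc} of $\Psi_{h-1}$ at step $h-1$: $\frac{1}{|\Psi_{h-1}|}\sum_{\pi'\in\Psi_{h-1}} d\sups{M,\pi'}_{h-1}(x_{h-1}) \geq \alpha \cdot \max_{\tilde\pi} d\sups{\Mbar(\emptyset),\tilde\pi}_{h-1}(x_{h-1}) \geq \alpha \cdot d\sups{\Mbar_{h-1}(\emptyset),\pi}_{h-1}(x_{h-1})$, where the last step uses \cref{fact:trunc-intermediate-dist} to replace $\Mbar_{h-1}(\emptyset)$ by $\Mbar(\emptyset)$ at step $h-1$. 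Substituting back and pulling the uniform action at step $h-1$ out of $\phi_{h-1}$ (introducing a factor of $A$ since $\phi_{h-1}(x_{h-1},\pi_{h-1}(x_{h-1})) \leq A\,\phiavg_{h-1}(x_{h-1})$ coordinatewise—valid because all relevant inner products with $\mu_h(x)$ are nonnegative) gives the claimed bound with the $A/\alpha$ factor.

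The main obstacle I anticipate is bookkeeping around the terminal state $\term$ and the extra coordinate: one must repeatedly invoke that $\mu_h(x)\in\BR^d\times\{0\}$ for $x\in\MX$ so that inner products against the $(d+1)$-st basis vector vanish, and that the truncated feature $\mu\sups{\Mbar_{h-1}(\emptyset)}_h(x)$ is exactly $\mu_h(x)$ or $0$. One also has to be careful that when passing from $d\sups{M,\pi}_{h-1}(x_{h-1})$ with action $\pi_{h-1}(x_{h-1})$ to a uniform action, the inequality $\langle \mu_h(x), \phi_{h-1}(x_{h-1},a)\rangle \leq A\langle \mu_h(x),\phiavg_{h-1}(x_{h-1})\rangle$ holds because every summand in $\phiavg_{h-1}$ contributes a nonnegative inner product with $\mu_h(x)$ (as these are valid transition probabilities). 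Finally, I would double-check the edge case $h=2$: here $\Mbar_1(\emptyset) = M$, so $d\sups{\Mbar_1(\emptyset),\pi}_2(x) \leq d\sups{M,\pi}_2(x)$ trivially, and $\Psi_1$ being an $\alpha$-truncated cover at step $1$ (which holds since it contains $\pi_\unif$) makes the argument go through unchanged. Modulo these routine verifications, the proof is a short chain of monotonicity and change-of-measure steps.
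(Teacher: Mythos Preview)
Your argument for the second inequality is essentially the paper's: expand $d\sups{\Mbar_{h-1}(\emptyset),\pi}_h(x)$ as a sum over $x_{h-1}$, apply the $\alpha$-truncated cover property \cref{eq:pc-trunc} at step $h-1$ (using \cref{fact:trunc-intermediate-dist} to identify $d_{h-1}\sups{\Mbar_{h-1}(\emptyset),\pi}=d_{h-1}\sups{\Mbar(\emptyset),\pi}$), and replace the action at step $h-1$ by a uniform one at the cost of a factor $A$. The paper does these steps in the opposite order, but the content is the same. Your detour through $d\sups{M,\pi}_h(x)$ via \cref{lem:gamma-monotonicity} is unnecessary and should be dropped; also, $\mu\sups{\Mbar_{h-1}(\emptyset)}_h(x)=\mu_h(x)$ exactly (since the truncation set at step $h$ in $\Mbar_{h-1}(\emptyset)$ is $\MX$), so no ``upper-bounding'' is needed there.

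However, your argument for the \emph{first} inequality contains a genuine error. You claim that $\Mbar_h(\emptyset)$ and $\Mbar_{h-1}(\emptyset)$ ``differ only in whether step $h-1$ transitions are truncated via $\Xreach_h(\emptyset)$, which does not affect the marginal at step $h$,'' and hence that equality holds. But truncating the step-$(h{-}1)$-to-$h$ transition is \emph{precisely} what affects the marginal at step $h$: for any $x\in\MX\setminus\Xreach_h(\emptyset)$ one has $d\sups{\Mbar_h(\emptyset),\pi}_h(x)=0$ while $d\sups{\Mbar_{h-1}(\emptyset),\pi}_h(x)$ may be positive, so equality fails in general. The paper's (one-line) argument is the correct fix: if $x\in\Xreach_h(\emptyset)$ then the two quantities coincide, and otherwise the left side is zero. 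Once you replace your equality claim with this case split, the proof is complete and matches the paper.
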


\begin{proof}
Throughout the proof of the lemma, we write $\Mbar_h := \Mbar_h(\emptyset)$ for $h \in [H]$. 
The first inequality in the lemma statement is by construction of the truncated MDPs: if $x \in \Xreach_h(\emptyset)$ then $d\sups{\Mbar_h,\pi}_h(x) = d\sups{\Mbar_{h-1},\pi}_{h}(x)$, and otherwise $d\sups{\Mbar_h,\pi}_h(x) = 0$. It remains to prove the second inequality. We have that
  \begin{align}
  d_h\sups{\Mbar_{h-1},\pi}(x) 
  &\leq A \cdot d_h\sups{\Mbar_{h-1}, \pi \circ_{h-1} \unif(\MA)}(x) \nonumber\\
  &= \sum_{z \in \MX} d_{h-1}\sups{\Mbar_{h-1},\pi}(z) \sum_{a \in \MA} \BP\sups{\Mbar_{h-1}}[x_h=x|x_{h-1}=z,a_{h-1}=a]\nonumber\\
  &\leq \frac{1}{\alpha}\sum_{z \in \MX} \EE_{\pi' \sim\unif(\Psi_{h-1})} d_{h-1}\sups{M,\pi'}(z) \sum_{a \in \MA} \BP\sups{\Mbar_{h-1}}[x_h=x|x_{h-1}=z,a_{h-1}=a]\nonumber\\
  &\leq \frac{1}{\alpha}\sum_{z \in \MX} \EE_{\pi' \sim\unif(\Psi_{h-1})} d_{h-1}\sups{M,\pi'}(z) \sum_{a \in \MA} \BP\sups{M}[x_h=x|x_{h-1}=z,a_{h-1}=a]\nonumber\\
  &= \frac{A}{\alpha} \cdot \EE_{\pi' \sim \unif(\Psi_{h-1})} d_h\sups{M,\pi'\circ_{h-1}\unif(\MA)}(x)\label{eq:indhyp-shifted-trunc}
  \end{align}
  where the second inequality uses the fact that $d_{h-1}\sups{\Mbar_{h-1},\pi} = d_{h-1}\sups{\Mbar(\emptyset),\pi}$ (\cref{fact:trunc-intermediate-dist}) together with the assumption that $\Psi_{h-1}$ is an $\alpha$-truncated cover at step $h-1$ (per \cref{eq:pc-trunc}), and the final equality uses the definition of $\pi'\circ_{h-1}\unif(\MA)$.
\end{proof}

\section{Learning a near-optimal policy}\label{sec:learning}

In this section we formally present and analyze \OPT{} (\cref{alg:opt}), our algorithm for learning a near-optimal policy in a $\ell_1$-bounded linear MDP. This will yield a proof of the following theorem, which implies \cref{thm:main-informal}. %
\begin{theorem}[Efficient learning of $\ell_1$-bounded linear MDPs]\label{thm:main}
  Let $d,\Cnrm,A,H \in \NN$ and $\epfinal,\delta>0$. Let $M$ be an $\ell_1$-bounded $d$-dimensional linear MDP (\cref{defn:l1lmdp}) with $A$ actions, planning horizon $H$, and norm bound $\Cnrm$. With probability at least $1-\delta$, the algorithm $\OPT(\epfinal,\delta)$ outputs a policy with suboptimality at most $\epfinal$. Moreover, the sample complexity of the algorithm is $\poly(\Cnrm,A,H,\epfinal^{-1},\log(d/\delta))$ and the time complexity is $\poly(d,A,H,\epfinal^{-1},\log(1/\delta))$.
\end{theorem}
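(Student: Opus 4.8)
The plan is to derive \cref{thm:main} as a fairly direct consequence of \cref{theorem:main-pc-trunc} (the guarantee for \SLMt) combined with a version of the PSDP guarantee for the environmental rewards on truncated policy covers. Concretely, \OPT would call $\SLMt(\epsilon',\delta/2)$ with an appropriately scaled error parameter $\epsilon'$, obtaining with probability $1-\delta/2$ a collection $\{\Psi_{1:H}^t\}_{t\in[T]}$ such that for \emph{some} (unknown) $t^\star\in[T]$, each $\Psi_h^{t^\star}$ is an $\alpha$-truncated policy cover at step $h$ (\cref{def:ih-trunc}). Since we do not know $t^\star$, the natural move is to concatenate the covers: set $\widehat\Psi_h := \bigcup_{t\in[T]}\Psi_h^t$, which is then an $(\alpha/T)$-truncated policy cover at step $h$ (each piece of the union only helps the coverage inequality, and we pay a factor of $|\{1,\dots,T\}| = T$ for averaging over the enlarged set). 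Then \OPT runs $\PSDPrew(\widehat\Psi_{1:H},n)$ (\cref{alg:psdp-rew}) with the environmental rewards and sample budget $n$ chosen large enough via \cref{lem:psdp-rew}.

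First I would state and invoke the analogue of \cref{lem:psdp-rew-overview}/\cref{lem:psdp-rew} for truncated covers: given $\alpha$-truncated policy covers $\Psi_{1:H}$, \PSDPrew produces a policy $\hat\pi$ with $V_1\sups{M,\hat\pi}(x_1) \ge \max_{\pi} V_1\sups{M,\pi}(x_1) - \poly(\Cnrm,A,H,\alpha^{-1})\, n^{-1/4} - O(\trunc \Cnrm^2 H^3)$, where the extra additive term is the truncation error coming from the discrepancy between $M$ and $\Mbar(\emptyset)$ (bounded via \cref{cor:not-reach-bound} and \cref{lem:trunc-loss-empty}). The key point is that, unlike in the inductive steps inside \SLMt where all errors needed to be $\ll \trunc$, here we incur this $O(\trunc)$-type error only \emph{once}, at the very end, so it is affordable: recalling that $\trunc = \epfinal/(8\Cnrm^3 H^4)$ (set in \cref{section:extended-overview}/\cref{alg:slm-params}), the truncation error is $O(\epfinal/H)$-ish, i.e.\ at most $\epfinal/2$ after choosing the constant appropriately. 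The remaining $\poly(\cdots)n^{-1/4}$ term is made $\le \epfinal/2$ by choosing $n$ polynomially large in $\Cnrm,A,H,1/\epfinal,\log(1/\delta)$, using that $\alpha^{-1} = O(\Cnrm^2 A^2/\xi^2) = \poly(\Cnrm,A,H,1/\epfinal)$ and $T = \poly(\Cnrm,A,H,1/\epfinal)$ from \cref{alg:slm-params}.

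Then I would assemble the pieces: with probability $\ge 1-\delta/2$ from \SLMt and $\ge 1-\delta/2$ from \PSDPrew, a union bound gives the claimed $1-\delta$ success probability, and the suboptimality is at most $\epfinal/2 + \epfinal/2 = \epfinal$. For the complexity bounds, the sample complexity of \OPT is the sample complexity of $\SLMt$ (which is $\poly(\Cnrm,A,H,\epfinal^{-1},\log(d/\delta))$ by \cref{theorem:main-pc-trunc}) plus the $n$ samples used by \PSDPrew, which is also $\poly(\Cnrm,A,H,\epfinal^{-1},\log(d/\delta))$; the time complexity is dominated by \SLMt, namely $\poly(d,\Cnrm,A,H,\epfinal^{-1},\log(1/\delta))$, together with the $\poly(H,n,d,A)$ time of \PSDPrew, and noting that linear policies can be represented succinctly by their defining weight vectors. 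Finally, \cref{thm:main-informal} follows by specializing to a $k$-sparse linear MDP with $B = \poly(k)$, which is an $\ell_1$-bounded linear MDP with $\Cnrm = kB = \poly(k)$, so all $\poly(\Cnrm,\dots)$ factors become $\poly(k,\dots)$.

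The main obstacle I expect is carefully bookkeeping the truncation error in the final \PSDPrew step: one must verify that the environmental reward vectors $(\theta_h)_h$ (which lie in $\RR^d\times\{0\}$ after the extension in \cref{section:extended-overview}, with $\|\theta_h\|_1 \le \Cnrm$ and rewards in $[0,1]$) give a value function under $\Mbar(\emptyset)$ that is close to the value under $M$ — this is exactly \cref{lem:trunc-loss-empty} with $\bw_h = \theta_h$ and $B=\Cnrm$, yielding error $\le \Cnrm \trunc \Cnrm H^2 = \Cnrm^2 \trunc H^2$, which by the choice of $\trunc$ is $O(\epfinal/(\Cnrm H^2))$ and hence negligible. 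One also must check that the max over $\pi$ of $V_1\sups{\Mbar(\emptyset),\pi}$ is within $O(\trunc\Cnrm^2 H^3)$ of $\max_\pi V_1\sups{M,\pi}$, again via \cref{lem:trunc-loss-empty} (applied to $\pi^\star$ for $M$) and \cref{lem:gamma-monotonicity}. Everything else is routine substitution of the parameter settings from \cref{alg:slm-params} into the polynomial bounds.
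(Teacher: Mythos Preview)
Your proposal is correct and follows essentially the same route as the paper: concatenate the covers $\bigcup_{t\in[T]}\Psi_h^t$ to get an $(\alpha/T)$-truncated cover at each step, invoke \cref{lem:psdp-rew} on these covers (incurring the one-time truncation error, which the paper computes as $4\trunc\Cnrm^3 H^4 = \epfinal/2$ by the choice of $\trunc$), and union-bound the failure events of \SLMt and \PSDPrew. The only discrepancy is in the exact powers of $\Cnrm,H$ in the truncation error (you wrote $O(\trunc\Cnrm^2 H^3)$; the paper's argument via \cref{lem:m-mbar-delta} and \cref{cor:not-reach-bound} with $\|\bw_h^\st\|_1\le \Cnrm H$ gives $4\trunc\Cnrm^3 H^4$), but this is pure bookkeeping that you already flagged.
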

The bulk of the work was in proving \cref{theorem:main-pc-trunc}, where we showed that \SLMt{} produces truncated policy covers (\cref{def:ih-trunc}) for all $h \in [H]$. It remains to show that truncated policy covers suffice for policy optimization.

\begin{algorithm}[t]
\caption{$\OPT(\epfinal,\delta)$: \textbf{Po}licy Learning with \textbf{Em}ulator-driven Exploration}
	\label{alg:opt}
	\begin{algorithmic}[1]\onehalfspacing
		\Require Error tolerance $\epfinal$; failure probability $\delta$
		\State $\{\Psi_{1:H}^t\}_{1\leq t \leq T}, \alpha, T \gets \SLMt(\epfinal,\delta/2)$.\Comment{\cref{alg:slm-trunc}}
        \State $N \gets \npsdprew(\epfinal/2, \alpha/T, \delta/2)$.\label{line:poem-N}
        \State $\hat\pi \gets \PSDPrew((\cup_{1\leq t \leq T} \Psi_h^t)_{1 \leq h \leq H}, N)$.\Comment{\cref{alg:psdp-rew}}
        \State \textbf{Return:} $\hat\pi$.
	\end{algorithmic}
  \end{algorithm}

\begin{algorithm}[t]
	\caption{$\PSDPrew(p, \Psi_{1:H},N)$: Policy Search by Dynamic Programming (variant of \cite{bagnell2003policy})}
	\label{alg:psdp-rew}
	\begin{algorithmic}[1]\onehalfspacing
		\Require Dimension $p \in \NN$; norm parameter $\Cnrm \in \BR$; policy covers $\Psi_1, \ldots, \Psi_{H}$; number of samples $N\in \mathbb{N}$.
  
		\For{$h=H, \dots, 1$} 
		\State $\MD_h \gets\emptyset$. 
		\For{$n$ times}
		\State Sample $(x_h, a_h, r_{h:H})\sim
		\unif(\Psi_h)\circ_h \unif(\MA) \circ_{h+1} \hat \pi^{h+1:H}$.
		\State Update dataset: $\MD_h \gets \MD_h \cup \{ (x_h, a_h, \sum_{g=h}^H r_g) \}$.
		\EndFor
		\State Solve regression:
		\[\hat \bw_h \gets\argmin_{w \in \BR^p: \| w \|_1 \leq \Cnrm}  \sum_{(x, a, r)\in\MD} (\lng \phi_h(x, a), w \rng  -r)^2.\] \label{eq:psdp-rew-mistake}
		\State Define $\hat \pi_h : \MX \ra \MA$ via
		\[
		\hat \pi_h(x)  := 
			\argmax_{a\in \MA} \lng \phi_h(x,a), \hat \bw_h \rng,
          \]
          and write $\hat \pi^{h:H} = (\hat \pi_h, \ldots, \hat \pi_H)$. 
		\EndFor
		\State \textbf{Return:} Policy $\hat \pi^{1:H} \in \Pi$. 
	\end{algorithmic}
\end{algorithm}

We accomplish this by introducing \PSDPrew{} (\cref{alg:psdp-rew}), a slight variant of the \PSDP algorithm (\cref{alg:psdp}) discussed previously. Algorithmically, the only difference with \PSDP is that \PSDPrew aims to find an optimal policy for the value function given by the \emph{environmental (true) rewards} of the MDP, rather than the rewards induced by an input vector at a particular step. As such, it requires policy covers at all steps, namely $\Psi_1, \ldots, \Psi_H$. 

Analytically, the proof is a slight generalization of \cref{lem:psdp}, which assumed access to true policy covers (see \cref{defn:pc}, a stronger condition than that of a truncated policy cover). However, it is significantly simpler than the proof of \cref{lem:psdp-trunc} (which also only assumed access to truncated policy covers), because the parameter $\trunc$ is allowed to appear in the error below, whereas \cref{lem:psdp-trunc} required more fine-grained control \--- to be useful for inductively constructing the truncated policy covers that we now get to use.

\begin{lemma}[\PSDP with environmental rewards]\label{lem:psdp-rew}
 There is a constant $C_{\ref{lem:psdp-rew}} > 0$ so that the following holds. Fix $\alpha, \ep, \delta \in (0,1)$, and suppose that $\Psi_{1:H}$ are $\alpha$-truncated policy covers (\cref{def:ih-trunc}) at step $1,\dots,H$ respectively. Fix any $N \in \NN$ such that 
 \begin{equation} N \geq \npsdprew(\epsilon,\alpha,\delta) := \frac{C_{\ref{lem:psdp-rew}} H^8 A^2 \Cnrm^4 \log(Hd/\delta)}{\alpha^4 \epsilon^4}.\label{eq:npsdp-rew}\end{equation}
 Then the output of the algorithm $\PSDPrew(\Psi_{1:H}, N)$ (\cref{alg:psdp-rew}) is a policy $\hat \pi$ that, with probability at least $1-\delta$, satisfies
  \begin{align}
\E\sups{M,\hat\pi}\left[\sum_{h=1}^H r_h\right] \geq \max_{\pi \in \Pi} \E\sups{M, \pi}\left[\sum_{h=1}^H r_h\right] - \epsilon - 4 \trunc \Cnrm^3 H^4\label{eq:rew-psdp-bound}
  \end{align}
  where $\trunc$ %
  is the truncation parameter used in defining $\Mbar(\emptyset)$ (\cref{section:extended-overview}), which in turn is used in the definition of an $\alpha$-truncated policy cover.
\end{lemma}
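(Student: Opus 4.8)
The plan is to follow the proof of \cref{lem:psdp} closely, modifying it to handle (i) the full environmental (cumulative) reward rather than a single step, and (ii) the fact that $\Psi_{1:H}$ are only \emph{truncated} policy covers (\cref{def:ih-trunc}). Let $\bfr$ denote the environmental reward functions, let $\pi^\st := \argmax_{\pi\in\Pi} V_1\sups{M,\pi,\bfr}(x_1)$, and recall the intermediate truncated MDPs $\Mbar_0:=M, \Mbar_1,\dots,\Mbar_H=\Mbar(\emptyset)$ from \cref{sec:trunc-mdps}. For each $h$, \cref{lemma:linear-q-env} applied to $M$ with policy $\hat\pi^{h+1:H}$ gives a vector $\bw_h^\st$ with $\norm{\bw_h^\st}_1\le\Cnrm H$ and $Q_h\sups{M,\hat\pi,\bfr}(x,a)=\langle\phi_h(x,a),\bw_h^\st\rangle$. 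Since the dataset $\MD_h$ in \cref{alg:psdp-rew} consists of i.i.d.\ triples $(x_h,a_h,\sum_{g\ge h}r_g)$ with $(x_h,a_h)\sim\unif(\Psi_h)\circ_h\unif(\MA)$, conditional mean $Q_h\sups{M,\hat\pi^{h+1:H},\bfr}(x_h,a_h)$, and responses bounded by $H$, \cref{cor:random-design-prediction-error} applied with ground truth $\bw_h^\st$ (and constraint radius $\Cnrm H$, the relevant norm bound) gives an event $\ME_h$ of probability $\ge 1-\delta/H$ on which $\E_{\pi'\sim\unif(\Psi_h)}\E\sups{M,\pi'\circ_h\unif(\MA)}[\langle\phi_h(x_h,a_h),\hat\bw_h-\bw_h^\st\rangle^2]\le\vep_0^2/4$, where $\vep_0^2=O(\Cnrm^2 H^2\sqrt{\log(Hd/\delta)/N})$. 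Setting $\Delta_h(x):=\max_a|\langle\phi_h(x,a),\bw_h^\st-\hat\bw_h\rangle|$, summing over actions and applying Jensen gives $\E_{\pi'\sim\unif(\Psi_h)}\E\sups{M,\pi'}[\Delta_h(x_h)]\le\sqrt{A}\,\vep_0/2$; the truncated-cover property of $\Psi_h$ then upgrades this to $\max_{\pi\in\Pi}\E\sups{\Mbar(\emptyset),\pi}[\Delta_h(x_h)]\le\sqrt{A}\,\vep_0/(2\alpha)$, exactly as in \cref{lem:psdp} but with $\Mbar(\emptyset)$-visitations on the left.

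Next I would run the telescoping performance-difference argument from the proof of \cref{lem:pd-trunc} (with $\Gamma=\emptyset$), but now carrying the entire environmental reward rather than a single coordinate:
\begin{align*}
V_1\sups{\Mbar(\emptyset),\pi^\st,\bfr}(x_1) - V_1\sups{M,\hat\pi,\bfr}(x_1)
&= \sum_{h=1}^H\Bigl(\E\sups{\Mbar_h,\pi^\st}[V_h\sups{M,\hat\pi,\bfr}(x_h)] - \E\sups{\Mbar_{h-1},\pi^\st}[V_h\sups{M,\hat\pi,\bfr}(x_h)]\Bigr)\\
&\quad+ \sum_{h=1}^H \E\sups{\Mbar_h,\pi^\st}\bigl[Q_h\sups{M,\hat\pi,\bfr}(x_h,a_h) - V_h\sups{M,\hat\pi,\bfr}(x_h)\bigr].
\end{align*}
The crucial point \--- and the reason this lemma is much easier than \cref{lem:psdp-trunc} \--- is that each summand of the first sum equals $-\sum_{x\in\MX\setminus\Xreach_h(\emptyset)}d_h\sups{\Mbar_{h-1},\pi^\st}(x)\,V_h\sups{M,\hat\pi,\bfr}(x)$, which is $\le0$ because the environmental rewards are nonnegative, so $V_h\sups{M,\hat\pi,\bfr}\ge0$. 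Hence the entire first sum drops out, and we do not need any fine-grained control over the truncated states. For the second sum, \cref{fact:trunc-intermediate-dist} replaces $\E\sups{\Mbar_h,\pi^\st}$ by $\E\sups{\Mbar(\emptyset),\pi^\st}$, and the standard argument from \cref{lem:psdp} bounds $Q_h\sups{M,\hat\pi,\bfr}(x,\pi^\st_h(x))-Q_h\sups{M,\hat\pi,\bfr}(x,\hat\pi_h(x))$ by $2\Delta_h(x)$ (the contribution of $x_h=\term$ vanishing since $Q_h\sups{M,\hat\pi,\bfr}(\term,\cdot)\equiv0$). Combining with the bound on $\E\sups{\Mbar(\emptyset),\pi^\st}[\Delta_h(x_h)]$ above yields $V_1\sups{\Mbar(\emptyset),\pi^\st,\bfr}(x_1)-V_1\sups{M,\hat\pi,\bfr}(x_1)\le H\sqrt{A}\,\vep_0/\alpha$.

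To finish, I would apply \cref{lem:trunc-loss-empty} with $\bw_h=\theta_h$ (the extended environmental reward vectors, which lie in $\BR^d\times\{0\}$ with $\norm{\theta_h}_1\le\Cnrm$) to get $V_1\sups{\Mbar(\emptyset),\pi^\st,\bfr}(x_1)\ge V_1\sups{M,\pi^\st,\bfr}(x_1)-\trunc\Cnrm^2 H^2$, so that $V_1\sups{M,\hat\pi,\bfr}(x_1)\ge\max_{\pi\in\Pi}V_1\sups{M,\pi,\bfr}(x_1)-H\sqrt{A}\,\vep_0/\alpha-\trunc\Cnrm^2 H^2$. Taking $N\ge\npsdprew(\epsilon,\alpha,\delta)$ forces $H\sqrt{A}\,\vep_0/\alpha\le\epsilon$ (this is precisely where the $H^8A^2\Cnrm^4\alpha^{-4}\epsilon^{-4}$ dependence arises from $\vep_0^2=O(\Cnrm^2 H^2\sqrt{\log(Hd/\delta)/N})$), and $\trunc\Cnrm^2 H^2\le 4\trunc\Cnrm^3 H^4$ since $\Cnrm,H\ge1$; a union bound over $\ME_1,\dots,\ME_H$ gives the failure probability $\delta$, and the sample/time complexity bounds follow immediately from the $H$ regressions, each over a dataset of $N$ trajectories. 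The only non-routine point, which I would write out carefully, is the telescoping identity and the verification that its ``truncation-cost'' terms are nonpositive under nonnegative rewards; the rest is bookkeeping inherited from \cref{lem:psdp} and \cref{lem:pd-trunc}.
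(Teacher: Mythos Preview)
Your proposal is correct, and the regression/covering part (the $\Delta_h$ analysis via \cref{cor:random-design-prediction-error} and the truncated-cover change of measure) is identical to the paper's. The route by which you pass from $M$ to $\Mbar(\emptyset)$, however, is genuinely different from the paper's.

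The paper first writes the ordinary performance-difference lemma in $M$, obtaining $\sum_h \E\sups{M,\pi^\st}[Q_h\sups{M,\hat\pi}(x_h,\pi_h^\st(x_h))-Q_h\sups{M,\hat\pi}(x_h,\hat\pi_h(x_h))]$, and then transfers each expectation from $M$ to $\Mbar(\emptyset)$ \emph{term by term} using \cref{lem:m-mbar-delta} (with $w=\bw_h^\st$, $\|\bw_h^\st\|_1\le\Cnrm H$) combined with \cref{cor:not-reach-bound}; this is where the $4\trunc\Cnrm^3 H^4$ truncation cost appears. You instead run the $\Mbar_h$-telescoping of \cref{lem:pd-trunc} directly on the cumulative reward, and then exploit the nonnegativity of environmental rewards to drop the ``truncation-cost'' sum outright; the only place truncation error enters is the single application of \cref{lem:trunc-loss-empty} at the end, which gives the tighter $\trunc\Cnrm^2 H^2$ (well within the stated $4\trunc\Cnrm^3 H^4$). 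Your route is cleaner and avoids invoking \cref{lem:m-mbar-delta} and \cref{cor:not-reach-bound} altogether; the paper's route is more ``black-box'' (it does not re-open the telescoping of \cref{lem:pd-trunc}) but pays a larger constant. Both are valid, and the only non-routine step you flag—verifying the telescoping identity for multi-step rewards and the sign of its first sum—goes through exactly as you describe, using $\BP_h\sups{\Mbar_h}=\BP_h\sups{M}$ (\cref{fact:trunc-intermediate-trans}) and $V_h\sups{M,\hat\pi,\bfr}\ge 0$.
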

\begin{proof}
Let $\bfr=(\bfr_1,\dots,\bfr_H)$ denote the environmental reward function of $M$, specified by $\bfr_h(x,a) = \langle\phi_h(x,a),\theta_h\rangle$; recall that $\E^M[r_h|x_h=x,a_h=a] = \bfr_h(x,a)$ and $r_h \in [0,1]$ almost surely, for all $(x,a) \in \MX\times\MA$. For each $h \in [H]$, by \cref{lemma:linear-q-env}, there is some vector $\bw_h^\st \in \BR^d$ with $\| \bw_h^\st \|_1 \leq \Cnrm H$ so that, for all $(x,a) \in \MX \times \MA$, $Q_h\sups{M, \hat \pi}(x,a) = \lng \phi_h(x,a), \bw_h^\st \rng$.

Let $\pi^\st \in \argmax_{\pi \in\Pi} \E^{M,\pi}\left[\sum_{h=1}^H r_h\right]$. The performance difference lemma (\cref{lemma:perf-diff}) gives
\[\E\sups{M,\pi^\st}\left[\sum_{h=1}^H r_h\right] - \E\sups{M,\hat\pi} \left[\sum_{h=1}^H r_h\right] = \sum_{h=1}^H \E\sups{M,\pi^\st}[Q\sups{M,\hat\pi}_h(x_h,\pi^\st_h(x_h)) - Q\sups{M,\hat\pi}_h(x_h,\hat\pi_h(x_h))].\]

However, we can only bound the RHS difference in expectation under $\Mbar(\emptyset)$, not under $M$ (unlike in \cref{lem:psdp}), so we must bound the discrepancy. By \cref{lem:m-mbar-delta} with $k := h$ and policy $\pi^\st$, we have
\begin{align}
  \left| \E\sups{M,\pi^\st}[Q_h\sups{M, \hat \pi}(x_h, a_h)] - \E\sups{\Mbar(\emptyset), \pi^\st}[Q_h\sups{M, \hat \pi}(x_h, a_h)] \right|  
  &= \left| \E\sups{M, \pi^\st}[\lng \phi_h(x_h, a_h), \bw_h^\st \rng] - \E\sups{\Mbar(\emptyset), \pi^\st}[\lng \phi_h(x_h, a_h), \bw_h^\st \rng] \right|\nonumber\\
  &\leq  \| \bw_h^\st \|_1 \cdot \sum_{g=1}^h d_g\sups{M, \pi^\st}(\MX \backslash \Xreach_g(\emptyset)) \nonumber\\
  &\leq 2\trunc \Cnrm^3 H^4\nonumber
\end{align}
where the final inequality uses the bound $\norm{\bw^\st_h}_1 \leq \Cnrm H$ together with \cref{cor:not-reach-bound}. Similarly, applying \cref{lem:m-mbar-delta} with $k := h$ and policy $\pi^\st \circ_h \hat\pi$ yields
\begin{align}
\left| \E\sups{M, \pi^\st}[Q_h\sups{M, \hat \pi}(x_h, \hat \pi_h(x_h))] - \E\sups{\Mbar(\emptyset), \pi^\st}[Q_h\sups{M, \hat \pi}(x_h, \hat \pi_h(x_h))] \right| &\leq 2\trunc \Cnrm^3 H^4\nonumber.
\end{align}
Therefore the suboptimality of $\hat\pi$ can be bounded as
\begin{align} 
\E\sups{M,\pi^\st}\left[\sum_{h=1}^H r_h\right] - \E\sups{M,\hat\pi} \left[\sum_{h=1}^H r_h\right] 
&\leq \sum_{h=1}^H \E^{\Mbar(\emptyset),\pi^\st}[Q\sups{M,\hat\pi}_h(x_h,\pi^\st_h(x_h)) - Q\sups{M,\hat\pi}_h(x_h,\hat\pi_h(x_h))] \nonumber \\ 
&\qquad+ 4\trunc \Cnrm^3 H^4.\label{eq:m-mbar-pd}
\end{align}

The remainder of the proof proceeds akin to that of \cref{lem:psdp}. Fix $h \in [H]$. By definition we have $\hat\pi_h(x) \in \argmax_{a \in \MA} \langle \phi_h(x,a),\hat\bw_h\rangle$ for all $x \in \MX$. For each $x \in \MX$ and $h \in [H]$, let us define $\Delta_h(x) := \max_{a \in \MA} | \lng \phi_h(x,a), \bw_h^\st - \hat \bw_h \rng |$. Then for any $x \in \MX$, we have
  \begin{align}
    Q_h\sups{M, \wh \pi}(x, \pi_h^\st(x)) - Q_h\sups{M, \wh \pi}(x, \wh \pi_h(x)) 
    &= \lng \phi_h(x, \pi_h^\st(x)), \bw_h^\st \rng - \lng \phi_h(x, \wh \pi_h(x)), \bw_h^\st \rng\nonumber\\
    &\leq  \lng \phi_h(x, \pi_h^\st(x)), \hat \bw_h \rng - \lng \phi_h(x, \wh \pi_h(x)), \bw_h^\st \rng + \Delta_h(x)\nonumber\\
    &\leq  \lng \phi_h(x, \wh \pi_h(x)), \hat \bw_h \rng - \lng \phi_h(x, \wh \pi_h(x)), \bw_h^\st \rng + \Delta_h(x)\nonumber\\
    &\leq  2\Delta_h(x)\label{eq:q-delta-ub},
  \end{align}
  where the first and third inequalities use the definition of $\Delta_h(x)$ and the second inequality uses the fact that $\wh \pi_h(x) \in \argmax_{a \in \MA} \lng \phi_h(x,a), \hat w_h \rng$. Substituting into \cref{eq:m-mbar-pd}, we get
  \begin{equation}
    \E\sups{M,\pi^\st}\left[\sum_{h=1}^H r_h\right] - \E\sups{M,\hat\pi} \left[\sum_{h=1}^H r_h\right]
    \leq \sum_{h=1}^H \E^{\Mbar(\emptyset),\pi^\st}[2\Delta_h(x_h)] + 4\trunc \Cnrm^3 H^4,\label{eq:m-mbar-pd-2}
  \end{equation}
  so it remains to bound $\E^{\Mbar(\emptyset),\pi^\st}[2\Delta_h(x_h)]$ for all $h$. Once more fix $h \in [H]$. The dataset $\MD_h$ consists of $N$ independent samples $(x,a,r)$ with 
  \[ \E[r|x,a] = \E\sups{M,\hat\pi} \sum_{g=h}^H r_g = Q\sups{M,\hat\pi}_h(x,a).\] 
  Thus, we can apply \cref{cor:random-design-prediction-error} with covariates $(\phi_h(x,a): (x,a,r) \in \MD_h)$, ground truth $\bw^\st_h$, and responses $(r: (x,a,r) \in \MD_h)$. For any sample $(x,a,r) \in \MD_h$, it holds that $|r| \leq H$, so $|r - Q\sups{M,\hat\pi}_h(x,a)| \leq 2H$. Also, we have seen that $\norm{\bw^\st_h}_1 \leq \Cnrm H$. Recalling the definition of $\hat\bw_h$, \cref{cor:random-design-prediction-error} gives some event $\ME_h$ that holds with probability at least $1-\delta/H$, under which
    \begin{align}
\E_{\pi' \sim \unif(\Psi_h)\circ_h \unif(\MA)} \E\sups{M, \pi'} \left[ \lng \phi_h(x_h, a_h), \bw_h^\st - \hat \bw_h \rng^2 \right] \leq \frac{3C_{\ref{cor:random-design-prediction-error}} \cdot \Cnrm^2 H^2 \sqrt{\log(dH/\delta)}}{\sqrt{N}} =: \vep_0^2. \label{eq:good-psdp-rew-event}
  \end{align}
  It follows from \cref{eq:good-psdp-rew-event}, and the fact that the action $a_h$ above is uniformly random, that
  \begin{align}
\E_{\pi' \sim \unif(\Psi_h)} \E\sups{M, \pi'}[\Delta_h(x_h)^2] \leq \E_{\pi \sim \unif(\Psi_h)} \E\sups{M, \pi'}\left[ \sum_{a \in \MA} \lng \phi_h(x_h, a), \bw_h^\st - \hat \bw_h \rng^2 \right] \leq A \cdot\vep_0^2\nonumber,
  \end{align}
  which yields, via Jensen's inequality, that $\E_{\pi' \sim \unif(\Psi_h)} \E\sups{M, \pi'}[\Delta_h(x_h)] \leq \sqrt{A} \cdot \vep_0$. By the assumption that each $\Psi_h$ is an $\alpha$-truncated policy cover (\cref{def:ih-trunc}) together with non-negativity of $\Delta_h(x)$, it follows that
  \begin{align}
    \max_{\pi \in \Pi} \E\sups{\Mbar(\emptyset), \pi}[\Delta_h(x_h)] 
    &= \max_{\pi \in \Pi} \sum_{x \in \MX} d_h\sups{\Mbar(\emptyset), \pi}(x) \cdot \Delta_h(x)\nonumber\\
    &\leq  \E_{\pi' \sim \unif(\Psi_h)} \sum_{x \in \MX} \frac{1}{\alpha} \cdot d_h\sups{M, \pi'}(x)\cdot \Delta_h(x)\nonumber\\
    &= \frac{1}{\alpha} \E_{\pi' \sim \unif(\Psi_h)} \E\sups{M, \pi'} [\Delta_h(x_h)] \nonumber\\
    &\leq \frac{\sqrt{A} \cdot \vep_0}{\alpha}\label{eq:delta-alpha-ub}. 
  \end{align}
  Substituting into \cref{eq:m-mbar-pd-2} yields
  \[\E\sups{M,\pi^\st}\left[\sum_{h=1}^H r_h\right] - \E\sups{M,\hat\pi} \left[\sum_{h=1}^H r_h\right]
    \leq \frac{2\vep_0H\sqrt{A}}{\alpha} + 4\trunc \Cnrm^3 H^4\]
 which completes the proof by definition of $\vep_0$ and choice of $N$ in \cref{eq:npsdp-rew}, with $C_{\ref{lem:psdp-rew}} := 144C_{\ref{cor:random-design-prediction-error}}^2$.
\end{proof}

\begin{proof}[Proof of \cref{thm:main}]
    By \cref{theorem:main-pc-trunc}, with probability at least $1-\delta/2$, there is some $t \in [T]$ so that $\Psi_h^t$ is an $\alpha$-truncated policy cover at step $h$, for all $h \in [H]$. In this event, it is immediate (from \cref{def:ih-trunc}) that for all $h \in [H]$, the set $\bigcup_{t\in [T]} \Psi_h^t$ is an $\alpha/T$-truncated policy cover at step $h$. By \cref{lem:psdp-rew} (with $\ep$ set to $\epfinal/2$, $\delta$ set to $\delta/2$, and $\alpha$ set to the value defined in \cref{alg:slm-trunc}) and choice of $N$ on \cref{line:poem-N} of \cref{alg:opt}, it follows that with probability at least $1-\delta/2$, the output policy $\hat\pi$ computed by \PSDPrew satisfies 
    \[ \E\sups{M,\hat\pi}\left[\sum_{h=1}^H r_h\right] \geq \max_{\pi\in\Pi} \E\sups{M,\pi}\left[\sum_{h=1}^H r_h\right] - \frac{\epfinal}{2} - 4\trunc\Cnrm^3 H^4.\] 
    But now recall that we defined $\trunc := \frac{\epfinal}{8\Cnrm^3 H^4}$ (\cref{section:extended-overview}). Substituting in, we get
    \[ \E\sups{M,\hat\pi}\left[\sum_{h=1}^H r_h\right] \geq \max_{\pi\in\Pi} \E\sups{M,\pi}\left[\sum_{h=1}^H r_h\right] - \epfinal.\]
    By the union bound, this occurs with probability at least $1-\delta$, as desired. The claimed sample complexity bound is immediate from \cref{theorem:main-pc-trunc} and the definition of $N = \npsdprew(\epfinal/2,\alpha/T,\delta/2)$ (\cref{eq:npsdp-rew}), noting that $T/\alpha \leq \poly(\Cnrm,A,H,\epfinal^{-1})$ (as defined in \cref{alg:slm-params}). The claimed time complexity bound is immediate from \cref{theorem:main-pc-trunc} and inspection of \PSDPrew.
\end{proof}

\subsection*{Acknowledgments}

We thank Zakaria Mhammedi, Dylan Foster, and Sasha Rakhlin for helpful early discussions on this problem, and for pointing out the relevance of the concentrability coefficient \cite{xie2022role}.

\bibliographystyle{amsalpha}
\bibliography{bib}

\appendix

\section{Technical lemmas}

\subsection{Lemmas for sparse regression}\label{sec:sparse-regression-lemmas}
\begin{lemma}
  \label{lem:l1-generalization}
  There is a constant $C_{\ref{lem:l1-generalization}}$ with the following property. Fix $n,d \in \BN$, $k > 0$, and consider a distribution $\nu$ on $\BR^d$ which is supported on $\{ x \in \BR^d \ : \ \| x \|_\infty \leq 1\}$. Then, with probability $1-\delta$ over $X_1, \ldots, X_n \sim \nu$, it holds that
  \begin{align}
\sup_{w \in \BR^d :\ \| w \|_1\leq k} \left| \E_{X \sim \nu}[\lng w, X \rng^2] - \frac 1n \sum_{i=1}^n \lng w, X_i \rng^2\right| \leq \frac{C_{\ref{lem:l1-generalization}}k^2 \sqrt{\log (d/\delta)}}{\sqrt n}\nonumber.
  \end{align}
\end{lemma}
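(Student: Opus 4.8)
\textbf{Proof plan for Lemma~\ref{lem:l1-generalization}.}

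The plan is to bound the uniform deviation via symmetrization and Rademacher complexity, specializing to the $\ell_1$-ball. First, define the function class $\MF := \{ x \mapsto \lng w, x\rng^2 : \norm{w}_1 \leq k\}$; since $\norm{w}_1 \leq k$ and $\norm{x}_\infty \leq 1$ imply $|\lng w,x\rng| \leq k$, every $f \in \MF$ takes values in $[0,k^2]$. By the standard symmetrization inequality (combined with a bounded-differences / McDiarmid argument for the high-probability statement), with probability at least $1-\delta$ over $X_1,\dots,X_n \sim \nu$,
\[ \sup_{f \in \MF} \left| \E_{X\sim\nu}[f(X)] - \frac1n\sum_{i=1}^n f(X_i)\right| \leq 2\,\E\left[\MR_n(\MF)\right] + k^2\sqrt{\frac{2\log(2/\delta)}{n}},\]
where $\MR_n(\MF)$ denotes the empirical Rademacher complexity $\E_\sigma \sup_{f\in\MF} \frac1n \sum_i \sigma_i f(X_i)$.

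Next I would control $\MR_n(\MF)$. The map $t \mapsto t^2$ is Lipschitz with constant $2k$ on the relevant domain $[-k,k]$, so by the Ledoux--Talagrand contraction inequality, $\MR_n(\MF) \leq 2k \cdot \MR_n(\ML)$ where $\ML := \{x \mapsto \lng w, x\rng : \norm{w}_1 \leq k\}$ is the class of $\ell_1$-bounded linear functionals. For this class there is the classical bound $\MR_n(\ML) \leq k \sqrt{\frac{2\log(2d)}{n}}$ (for instance via the dual characterization $\sup_{\norm{w}_1 \leq k} \lng w, v\rng = k\norm{v}_\infty$ applied to $v = \frac1n\sum_i \sigma_i X_i$, followed by a maximal inequality over the $2d$ coordinates $\pm e_j$, each of which is a sum of bounded independent mean-zero terms). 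Chaining these together gives $\E[\MR_n(\MF)] \leq 4k^2 \sqrt{\frac{2\log(2d)}{n}}$, and substituting into the symmetrization bound yields the claimed inequality after absorbing the $\log(2d)$ and $\log(2/\delta)$ terms into a single $\log(d/\delta)$ (up to adjusting the universal constant $C_{\ref{lem:l1-generalization}}$, using $\log(2/\delta) \leq 2\log(d/\delta)$ for $d \geq 1$ and similarly for the $\log(2d)$ factor).

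I do not expect a genuine obstacle here — every ingredient (symmetrization, McDiarmid, contraction, the $\ell_1$-ball Rademacher bound) is textbook. The only point requiring mild care is bookkeeping: ensuring the McDiarmid step is applied to the correct quantity (the supremum, as a function of the samples, has bounded differences $k^2/n$ since $f \in [0,k^2]$), and making sure the final constant is stated as universal and independent of $n,d,k,\delta$. One could alternatively invoke an off-the-shelf uniform-convergence lemma such as the paper's own \cref{lem:unif-conv} together with \cref{lem:min-feature-rc}-style Rademacher estimates, but the self-contained route above is cleanest for this particular quadratic-in-$\ell_1$-ball class.
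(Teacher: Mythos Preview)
Your proposal is correct, but the paper takes a much more direct route that exploits the quadratic structure. The paper observes that $\lng w, X\rng^2 = w^\top X X^\top w$, so the quantity of interest is $|w^\top (\Sigma - \hat\Sigma) w|$ where $\Sigma = \E_{X\sim\nu}[XX^\top]$ and $\hat\Sigma = \frac{1}{n}\sum_i X_i X_i^\top$. Each entry $(\Sigma - \hat\Sigma)_{ab}$ is an average of bounded mean-zero random variables, so a single Hoeffding plus a union bound over the $d^2$ entries gives $\|\Sigma - \hat\Sigma\|_\infty \leq C\sqrt{\log(d/\delta)/n}$, and then the elementary inequality $|w^\top M w| \leq \|w\|_1^2 \|M\|_\infty$ finishes immediately.

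Both arguments yield the same rate. The paper's proof is shorter and uses nothing beyond Hoeffding and H\"older, at the cost of being specific to quadratics; your symmetrization--contraction--$\ell_1$-Rademacher route is the general-purpose machinery and would extend verbatim to other Lipschitz losses composed with the linear class, which the entrywise argument would not. For this particular lemma, though, the paper's approach is the cleaner one.
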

\begin{proof}
Define $\Sigma = \E_{X\sim \nu} XX^\t$ and $\hat\Sigma = \frac{1}{n}\sum_{i=1}^n X_i X_i^\t$. For any $a,b \in [d]$ by Hoeffding's inequality we have with probability at least $1-\delta/d^2$ that $|\Sigma_{ab} - \hat\Sigma_{ab}| \leq \frac{C\sqrt{\log(d/\delta)}}{\sqrt{n}}$ for an absolute constant $C$. By the union bound we get that $\norm{\Sigma - \hat\Sigma}_\infty \leq \frac{C\sqrt{\log(d/\delta)}}{\sqrt{n}}$ with probability at least $1-\delta$. In this event, we have $|w^\t (\Sigma-\hat\Sigma) w| = \left|\sum_{a,b=1}^d (\Sigma-\hat\Sigma)_{ab}w_aw_b\right| \leq \frac{C\norm{w}_1^2\sqrt{\log(d/\delta)}}{\sqrt{n}}$ for any $w \in \RR^d$.
\end{proof}

\begin{lemma}[Fixed-design error, see e.g. {\cite[Theorem 7.20]{wainwright2019high}}]\label{lem:fixed-design-prediction-error}
There is a constant $C_{\ref{lem:fixed-design-prediction-error}}$ with the following property. Fix $n,d \in \NN$ and $\sigma,k > 0$ and let $X \in [-1,1]^{n \times d}$. Fix some $w^* \in \RR^n$ with $\norm{w^*}_1 \leq k$. Define $y = Xw^* + \xi$ where $\xi_1,\dots,\xi_n$ are independent random variables satisfying $\EE \xi_i = 0$ and $|\xi_i| \leq \sigma$ almost surely. Define the estimator
\[ \hat{w} \in \argmin_{w \in \RR^d: \norm{w}_1 \leq k} \sum_{i=1}^m (\langle x_i,w\rangle - y_i)^2.\]
Then with probability $1-\delta$, it holds that
\[\frac{1}{n}\norm{X(\hat{w}-w^*)}_2^2 \leq \frac{C_{\ref{lem:fixed-design-prediction-error}}\sigma k\sqrt{\log(d/\delta)}}{\sqrt{n}}.\]
\end{lemma}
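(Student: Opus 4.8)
\textbf{Proof proposal for Lemma~\ref{lem:fixed-design-prediction-error} (fixed-design prediction error for the constrained Lasso).}

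The plan is to run the standard ``basic inequality'' argument for the constrained Lasso, specialized to the slow-rate regime where we make no restricted-eigenvalue assumption. First I would record the optimality of $\hat w$: since $\hat w$ minimizes the empirical squared loss over the $\ell_1$-ball $\{\|w\|_1 \le k\}$ and $w^*$ is feasible, we have $\sum_i (\langle x_i,\hat w\rangle - y_i)^2 \le \sum_i (\langle x_i,w^*\rangle - y_i)^2$. Substituting $y_i = \langle x_i,w^*\rangle + \xi_i$ and expanding the squares, the $\sum_i \xi_i^2$ terms cancel, and rearranging yields the basic inequality
\[
\frac{1}{n}\|X(\hat w - w^*)\|_2^2 \le \frac{2}{n}\sum_{i=1}^n \xi_i \langle x_i, \hat w - w^*\rangle = \frac{2}{n}\langle X^\t \xi,\ \hat w - w^*\rangle.
\]
Then I would bound the right-hand side by Hölder's inequality: $\frac{2}{n}\langle X^\t\xi, \hat w - w^*\rangle \le \frac{2}{n}\|X^\t\xi\|_\infty \cdot \|\hat w - w^*\|_1 \le \frac{4k}{n}\|X^\t\xi\|_\infty$, using $\|\hat w\|_1, \|w^*\|_1 \le k$ so that $\|\hat w - w^*\|_1 \le 2k$.

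The remaining step is to control $\|X^\t\xi\|_\infty = \max_{j\in[d]} |\sum_{i=1}^n \xi_i X_{ij}|$ with high probability. For each fixed $j$, the summands $\xi_i X_{ij}$ are independent, mean zero (since $\E\xi_i = 0$ and $X$ is deterministic / fixed design), and bounded in absolute value by $\sigma$ (since $|\xi_i|\le\sigma$ and $|X_{ij}|\le 1$). Hoeffding's inequality gives $\Pr[|\sum_i \xi_i X_{ij}| > t] \le 2\exp(-t^2/(2n\sigma^2))$, and a union bound over $j\in[d]$ shows that with probability at least $1-\delta$ we have $\|X^\t\xi\|_\infty \le \sigma\sqrt{2n\log(2d/\delta)}$. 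Plugging this into the bound from the previous paragraph gives
\[
\frac{1}{n}\|X(\hat w - w^*)\|_2^2 \le \frac{4k}{n}\cdot\sigma\sqrt{2n\log(2d/\delta)} = 4\sqrt{2}\cdot\frac{\sigma k\sqrt{\log(2d/\delta)}}{\sqrt n},
\]
which is of the claimed form after absorbing constants (and the harmless $\log 2$) into $C_{\ref{lem:fixed-design-prediction-error}}$.

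There is no real obstacle here — this is a textbook argument (indeed the excerpt cites Wainwright Theorem~7.20) — but the one place to be slightly careful is the boundedness of the noise used in the concentration step. The lemma only assumes $|\xi_i|\le\sigma$ a.s.\ (not subgaussianity), which is exactly what Hoeffding's inequality for bounded variables needs, so this is fine; I would just make sure to invoke the bounded version of Hoeffding rather than a subgaussian tail bound. One could alternatively state the maximal inequality for subgaussian noise, but since the paper's applications (e.g.\ in \cref{lem:feasibility}, where $\xi_i = \phiavg_{h+1}(x_{h+1}^i)_\ell - \E[\cdots]$ is bounded) only ever use bounded noise, the bounded-noise version suffices and is cleanest.
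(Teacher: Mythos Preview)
Your proposal is correct and is exactly the standard basic-inequality argument for the constrained Lasso slow rate. The paper does not supply its own proof of this lemma---it simply cites \cite[Theorem~7.20]{wainwright2019high}---so your argument is precisely the textbook proof being referenced.
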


The following bound on out-of-sample prediction error of the constrained Lasso is immediate from combining \cref{lem:l1-generalization,lem:fixed-design-prediction-error}.

\begin{corollary}[Random-design error]
    \label{cor:random-design-prediction-error}
There is some constant $C_{\ref{cor:random-design-prediction-error}}$ with the following property. Fix $n,d\in\NN$ and $k,\sigma,\delta>0$ and let $\nu$ be a distribution on $[-1,1]^d$. Fix some $w^* \in \RR^n$ with $\norm{w^*}_1 \leq k$. Consider i.i.d. samples $(x_i,y_i)_{i=1}^n$ where $x_i \sim \nu$ and $y_i = \langle x_i,w^*\rangle + \xi_i$ where $\xi_i$ is independent noise satisfying $\EE \xi_i = 0$ and $|\xi_i| \leq \sigma$ almost surely. Define the estimator
\[ \hat{w} \in \argmin_{w \in \RR^d: \norm{w}_1 \leq k} \sum_{i=1}^n (\langle x_i,w\rangle - y_i)^2.\]
Then with probability $1-\delta$, it holds that
\[\EE_{x\sim \nu} [\langle x,w^* - \hat{w}\rangle^2] \leq \frac{C_{\ref{cor:random-design-prediction-error}}(k+\sigma) k\sqrt{\log(d/\delta)}}{\sqrt{n}}.\]
\end{corollary}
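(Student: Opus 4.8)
The plan is to deduce \cref{cor:random-design-prediction-error} by combining the fixed-design Lasso guarantee (\cref{lem:fixed-design-prediction-error}) with the $\ell_1$-uniform-convergence bound (\cref{lem:l1-generalization}), via a standard two-step argument: first control the in-sample (empirical) prediction error using the fixed-design result conditioned on the draw of the covariates, and then transfer that bound from the empirical distribution to the population distribution $\nu$ using uniform convergence over the $\ell_1$-ball. First I would fix the realization of $x_1,\dots,x_n$ and view $y_i = \langle x_i, w^\ast\rangle + \xi_i$ as a fixed-design regression problem with design matrix $X$ whose rows are the $x_i$ (note $X \in [-1,1]^{n\times d}$ since $\nu$ is supported on $[-1,1]^d$), ground truth $w^\ast$ with $\|w^\ast\|_1 \le k$, and independent, mean-zero, $\sigma$-bounded noise $\xi_i$. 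Then $\hat w$ is exactly the constrained Lasso estimator of \cref{lem:fixed-design-prediction-error}, so with probability at least $1-\delta/2$ (over the noise, conditionally on the covariates),
\[ \frac1n \sum_{i=1}^n \langle x_i, \hat w - w^\ast\rangle^2 = \frac1n \|X(\hat w - w^\ast)\|_2^2 \le \frac{C_{\ref{lem:fixed-design-prediction-error}} \sigma k \sqrt{\log(d/\delta)}}{\sqrt n}. \]
Integrating out the covariates, this bound on the empirical squared error holds with probability at least $1-\delta/2$ overall.

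Next I would apply \cref{lem:l1-generalization} with the $\ell_1$-radius taken to be $2k$ (since $\|\hat w - w^\ast\|_1 \le \|\hat w\|_1 + \|w^\ast\|_1 \le 2k$ by the constraint in the definition of $\hat w$ and the hypothesis on $w^\ast$), and with failure probability $\delta/2$: with probability at least $1-\delta/2$,
\[ \sup_{w:\ \|w\|_1 \le 2k} \left| \E_{x\sim\nu}[\langle w, x\rangle^2] - \frac1n\sum_{i=1}^n \langle w, x_i\rangle^2 \right| \le \frac{4 C_{\ref{lem:l1-generalization}} k^2 \sqrt{\log(2d/\delta)}}{\sqrt n}. \]
On the intersection of the two good events (which has probability at least $1-\delta$ by a union bound), instantiating the supremum at $w = \hat w - w^\ast$ and combining the two displays gives
\[ \E_{x\sim\nu}[\langle x, w^\ast - \hat w\rangle^2] \le \frac{C_{\ref{lem:fixed-design-prediction-error}}\sigma k \sqrt{\log(d/\delta)}}{\sqrt n} + \frac{4C_{\ref{lem:l1-generalization}} k^2 \sqrt{\log(2d/\delta)}}{\sqrt n} \le \frac{C_{\ref{cor:random-design-prediction-error}}(k+\sigma)k\sqrt{\log(d/\delta)}}{\sqrt n}, \]
where $C_{\ref{cor:random-design-prediction-error}} := C_{\ref{lem:fixed-design-prediction-error}} + 8 C_{\ref{lem:l1-generalization}}$ absorbs the $\log 2$ factor (using $\sqrt{\log(2d/\delta)} \le \sqrt 2\sqrt{\log(d/\delta)}$ whenever $d/\delta \ge 2$, and adjusting constants otherwise). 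This is essentially a bookkeeping proof; there is no real obstacle, but the one point requiring a little care is making sure the $\ell_1$-radius passed to \cref{lem:l1-generalization} accounts for $\hat w - w^\ast$ rather than just $w^\ast$ (hence the factor $2k$, and the factor $4$ in the resulting bound), and tracking the failure probabilities so they union-bound to $\delta$.
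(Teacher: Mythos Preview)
Your proposal is correct and follows exactly the approach the paper indicates: the paper states only that the corollary ``is immediate from combining \cref{lem:l1-generalization,lem:fixed-design-prediction-error},'' and you have carried out precisely that combination, including the correct bookkeeping (conditioning on covariates for the fixed-design step, doubling the $\ell_1$-radius to $2k$ for $\hat w - w^\ast$, and union-bounding the failure probabilities).
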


\subsection{Lemmas for Rademacher complexity}

The main result of this section is \cref{lem:min-feature-rc}, which bounds the Rademacher complexity of a certain class of ``min-linear'' functions. We also state a standard uniform convergence bound based on Rademacher complexity (\cref{lem:unif-conv}).

\begin{definition}
For a set $\MX$ and a class $\MF$ of functions $f : \MX \ra \BR$, and $n \in \BN$, the \emph{Rademacher complexity} of $\MF$ with respect to samples $x_1, \ldots, x_n \in \MX$ is
\begin{align}
\MR_n(\MF; x_{1:n}) := \frac{1}{n} \E_{\ep_{1:n} \sim \unif(\{ \pm 1 \})} \left[ \sup_{f \in \MF} \sum_{i=1}^n \ep_i f(x_i) \right]\nonumber.
\end{align}
We further write $\MR_n(\MF) := \sup_{x_{1:n}} \MR_n(\MF; x_{1:n})$.

The \emph{Gaussian complexity} of $\MF$ with respect to samples $x_1,\dots,x_n \in \MX$ is
\[\MG_n(\MF; x_{1:n}) := \frac{1}{n}\E_{\xi_{1:n} \sim N(0,1)}\left[\sup_{f\in\MF} \sum_{i=1}^n \xi_i f(x_i) \right].\] 
We write $\MG_n(\MF) = \sup_{x_{1:n}} \MG_n(\MF;x_{1:n})$.
\end{definition}

It will be more convenient to work with the Gaussian complexity, which upper bounds the Rademacher complexity as stated below:

\begin{lemma}[see e.g. {\cite[Exercise 5.5]{wainwright2019high}}]\label{lemma:rademacher-gaussian}
Let $n \in \NN$. For any set $\MX$ and class $\MF$ of functions $f: \MX \to \RR$, it holds that $\MR_n(\MF) \leq \sqrt{\frac{\pi}{2}} \MG_n(\MF)$.
\end{lemma}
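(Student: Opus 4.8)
The final statement to prove is \cref{lemma:rademacher-gaussian}: for any class $\MF$ of functions on a set $\MX$ and any $n \in \NN$, we have $\MR_n(\MF) \leq \sqrt{\pi/2}\, \MG_n(\MF)$. This is a standard comparison inequality between Rademacher and Gaussian complexities. The plan is to fix arbitrary sample points $x_1,\dots,x_n$ and show the pointwise bound $\MR_n(\MF; x_{1:n}) \leq \sqrt{\pi/2}\,\MG_n(\MF; x_{1:n})$, after which taking suprema over $x_{1:n}$ on both sides yields the claim (using that the right side is dominated by $\sqrt{\pi/2}\,\MG_n(\MF)$).

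First I would reduce to a statement about the empirical process indexed by the vectors $v_f := (f(x_1),\dots,f(x_n)) \in \RR^n$ ranging over $f \in \MF$; write $S \subseteq \RR^n$ for the (bounded, for the bound to be finite) set of such vectors. Then $n\,\MR_n(\MF;x_{1:n}) = \E_\ep \sup_{v \in S} \langle \ep, v\rangle$ with $\ep$ a vector of i.i.d.\ Rademacher signs, and $n\,\MG_n(\MF;x_{1:n}) = \E_\xi \sup_{v \in S}\langle \xi, v\rangle$ with $\xi \sim N(0,I_n)$. The key observation is a conditioning/symmetrization trick: write each Gaussian coordinate as $\xi_i = \ep_i' |\xi_i|$ where $\ep_i'$ is an independent Rademacher sign (valid since the standard normal is symmetric about $0$), so conditionally on $|\xi| = (|\xi_1|,\dots,|\xi_n|)$, the vector $\xi$ has the same distribution as $(\ep_i |\xi_i|)_i$.

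The main step is then: $\E_\xi \sup_{v\in S}\langle\xi,v\rangle = \E_{|\xi|}\,\E_\ep \sup_{v \in S} \sum_i \ep_i |\xi_i| v_i \geq \E_{|\xi|}\big[ (\min_i \E|\xi_i|)\cdot(\text{something})\big]$ — more carefully, I would use that $\E_\ep \sup_v \sum_i \ep_i |\xi_i| v_i \geq \E_\ep \sup_v \sum_i \ep_i \E|\xi_i| v_i$ is \emph{not} immediately true, so instead the cleaner route is: by Jensen applied to the inner expectation over $|\xi|$ (the map $t \mapsto \E_\ep \sup_v \sum_i \ep_i t_i v_i$ is convex in $t$ since a supremum of linear functions is convex, and it's also even in each $t_i$), we get $\E_{|\xi|} \E_\ep \sup_v \sum_i \ep_i |\xi_i| v_i \geq \E_\ep \sup_v \sum_i \ep_i (\E|\xi_i|) v_i$. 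Since $\E|\xi_i| = \sqrt{2/\pi}$ for all $i$, the right-hand side equals $\sqrt{2/\pi}\,\E_\ep \sup_v \langle \ep, v\rangle = \sqrt{2/\pi}\, n\, \MR_n(\MF;x_{1:n})$. Rearranging gives $n\,\MR_n(\MF;x_{1:n}) \leq \sqrt{\pi/2}\, n\, \MG_n(\MF;x_{1:n})$, and dividing by $n$ and taking suprema over $x_{1:n}$ finishes the proof.

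The main obstacle — really the only subtle point — is justifying the convexity/Jensen step cleanly: one must check that $t \mapsto \E_\ep \sup_{v \in S} \sum_i \ep_i t_i v_i$ is convex (clear, as an expectation of suprema of affine functions of $t$) and that pushing the expectation over $|\xi|$ inside via Jensen goes in the correct direction (it does, since Jensen for a convex function gives $\E[\varphi(|\xi|)] \geq \varphi(\E|\xi|)$). I would also note the boundedness caveat: if $\MF$ is unbounded on the sample both sides are $+\infty$ and there is nothing to prove, so I can assume $S$ is bounded. I expect this to be a short, self-contained proof of two to three paragraphs once the symmetrization and Jensen steps are spelled out.
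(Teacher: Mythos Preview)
Your proposal is correct and is the standard proof of this comparison inequality. The paper does not actually include a proof of this lemma; it simply cites it as a known result (Exercise~5.5 in \cite{wainwright2019high}), so there is nothing to compare against. Your sign--magnitude decomposition $\xi_i = \ep_i |\xi_i|$ followed by Jensen's inequality applied to the convex map $t \mapsto \E_\ep \sup_{v} \sum_i \ep_i t_i v_i$ is exactly the textbook argument, and the evaluation $\E|\xi_i| = \sqrt{2/\pi}$ gives the constant. The only minor comment is that the evenness remark is not needed for the Jensen step; convexity alone suffices.
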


In particular, Gaussian complexity has the following composition property. The proof essentially follows that of \cite[Theorem 14]{bartlett2002rademacher}.\footnote{In \cite{bartlett2002rademacher}, the Rademacher complexity and Gaussian complexity are defined slightly differently (with absolute values around the sum). This introduces a slight technical flaw in the proof of \cite[Theorem 14]{bartlett2002rademacher}, which can be avoided in several ways. Perhaps the simplest is to drop the absolute values. This matches the definition in e.g. \cite{shalev2014understanding}, and still suffices for uniform convergence as stated in \cref{lem:unif-conv}.}

\begin{lemma}
  \label{lem:rc-composition}
  Let $\MX$ be a set. Fix $A,L \in \BN$ and let $\MF_1, \ldots, \MF_A$ be classes of functions mapping $\MX$ to $\BR$. Let $\phi : \BR^A \ra \BR$ be $L$-Lipschitz with respect to the Euclidean distance on $\BR^A$. Let $\MF$ be the class of real-valued functions on $\MX$ defined as follows:
  \begin{align}
\MF := \left\{ x \mapsto \phi(f_1(x), \ldots, f_A(x)) \ : \ f_1 \in \MF_1, \ldots, f_A \in \MF_A \right\}\nonumber.
  \end{align}
  Then for all $n \in \BN$, 
  \begin{align}
\MG_n(\MF) \leq L \sum_{a=1}^A \MG_n(\MF_a)\nonumber.
  \end{align}
\end{lemma}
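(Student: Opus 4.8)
\textbf{Proof proposal for \cref{lem:rc-composition}.}

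The plan is to follow the structure of the standard vector-contraction argument for Rademacher/Gaussian complexity (as in \cite{bartlett2002rademacher, shalev2014understanding}), adapting it to the one-sided (no absolute values) definition of Gaussian complexity used in this paper. Fix samples $x_1, \ldots, x_n \in \MX$; it suffices to bound $\MG_n(\MF; x_{1:n})$ by $L \sum_{a=1}^A \MG_n(\MF_a)$, and then take the supremum over $x_{1:n}$. Write $g_i \sim N(0,1)$ for the i.i.d.\ Gaussians defining the complexity, and abbreviate $f(x) = (f_1(x), \ldots, f_A(x)) \in \RR^A$. We wish to bound $\frac1n \E_{g} \sup_{f_1 \in \MF_1, \ldots, f_A \in \MF_A} \sum_{i=1}^n g_i \, \phi(f(x_i))$.

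The key step is a Gaussian comparison inequality (Slepian's lemma, or more precisely its consequence for suprema as used in \cite[Theorem 14]{bartlett2002rademacher}). Consider the two Gaussian processes indexed by tuples $(f_1, \ldots, f_A) \in \MF_1 \times \cdots \times \MF_A$: the first is $X_f := \sum_{i=1}^n g_i\, \phi(f(x_i))$, and the second is $Y_f := L \sum_{a=1}^A \sum_{i=1}^n g_{i,a} f_a(x_i)$ where $\{g_{i,a}\}$ are fresh i.i.d.\ Gaussians. One checks the increment comparison: for any two tuples $f = (f_a)$ and $f' = (f'_a)$,
\begin{align*}
\E[(X_f - X_{f'})^2] &= \sum_{i=1}^n (\phi(f(x_i)) - \phi(f'(x_i)))^2 \leq L^2 \sum_{i=1}^n \| f(x_i) - f'(x_i) \|_2^2 \\
&= L^2 \sum_{i=1}^n \sum_{a=1}^A (f_a(x_i) - f'_a(x_i))^2 = \E[(Y_f - Y_{f'})^2],
\end{align*}
using the $L$-Lipschitz property of $\phi$ with respect to the Euclidean norm on $\RR^A$. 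By Slepian's inequality (in the form comparing expected suprema of Gaussian processes with dominated increments), we conclude $\E \sup_f X_f \leq \E \sup_f Y_f$. The right-hand side decouples over $a$: $\E \sup_f Y_f = L \sum_{a=1}^A \E \sup_{f_a \in \MF_a} \sum_{i=1}^n g_{i,a} f_a(x_i) = L \sum_{a=1}^A n \, \MG_n(\MF_a; x_{1:n}) \leq L n \sum_{a=1}^A \MG_n(\MF_a)$. Dividing by $n$ and taking the supremum over $x_{1:n}$ gives the claim.

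The main obstacle is stating the Gaussian comparison lemma in exactly the one-sided form needed: the classical Slepian lemma is usually invoked for $\E \sup$ when the processes are centered and one has pointwise control $\E[X_f^2] = \E[Y_f^2]$ together with $\E[X_f X_{f'}] \geq \E[Y_f Y_{f'}]$, which is equivalent to the increment inequality above together with equal variances — and here the variances are indeed equal when one appends a dummy zero-coordinate or checks directly that $\E[X_f^2] = \sum_i \phi(f(x_i))^2$ need not equal $\E[Y_f^2] = L^2 \sum_i \|f(x_i)\|_2^2$. To handle this cleanly, I would instead use the standard device (as in \cite{bartlett2002rademacher}) of augmenting the second process with one extra independent Gaussian term carrying the slack in variances, or simply cite the version of the contraction principle for Gaussian complexity that only requires the increment domination $\E[(X_f - X_{f'})^2] \leq \E[(Y_f - Y_{f'})^2]$ to conclude $\E\sup X_f \le \E \sup Y_f$ (this holds because one may add to $Y_f$ an independent centered Gaussian of the appropriate $f$-dependent variance without decreasing the expected supremum, via Jensen, and this restores the hypotheses of Slepian's lemma). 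Once this lemma is in hand, the rest is the routine decoupling computation above.
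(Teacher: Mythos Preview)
Your proposal is correct and follows the same route as the paper: define the two Gaussian processes $X_f$ and $Y_f$, verify the increment inequality $\E[(X_f-X_{f'})^2]\le\E[(Y_f-Y_{f'})^2]$ via the Lipschitz property, apply a Gaussian comparison to get $\E\sup_f X_f\le\E\sup_f Y_f$, and then decouple the supremum over $a$. The only difference is in how the comparison step is justified: the paper invokes the Sudakov--Fernique inequality (stated in the paper as \cref{lemma:sf-ineq}), which needs only the increment domination and no variance-matching, so the obstacle you flag with Slepian's lemma simply does not arise---the ``version that only requires increment domination'' you allude to at the end is precisely Sudakov--Fernique, and that is what the paper uses.
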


\begin{proof}
Fix $x_1,\dots,x_n \in \MX$. Let $Z_1,\dots,Z_n \sim N(0,1)$ be independent standard normal random variables. For each $\ff = f_{1:A} \in \MF_1 \times \dots \times \MF_A$, define the random variable
\[X_{\ff} := \sum_{i=1}^n Z_i \phi(f_1(x_i),\dots,f_A(x_i)).\]
Also, for each $a \in [A]$, let $Z'_{a1},\dots,Z'_{an} \sim N(0,1)$ be independent standard normal random variables. For each $\ff = f_{1:A} \in \MF_1 \times \dots \MF_A$, define the random variable
\[Y_{\ff} := \sum_{a = 1}^A \sum_{i=1}^n Z'_{ai} f_a(x_i).\]
On the one hand, we have $n\MG_n(\MF;x_{1:n}) = \E \sup_{\ff} X_{\ff}$. On the other hand, \[\E\sup_{\ff}Y_{\ff} = \E \sup_{\ff} \sum_{a=1}^A \sum_{i=1}^n Z'_{ai} f_a(x_i) = \sum_{a=1}^A \E \sup_{f_a \in \MF_a} \sum_{i=1}^n Z'_{ai} f_a(x_i) = \sum_{a=1}^A n\MG_n(\MF_a; x_{1:n}).\]
It remains to show that $\E \sup_{\ff} X_\ff \leq L\E \sup_\ff Y_\ff$. By the Sudakov-Fernique inequality (\cref{lemma:sf-ineq}) applied to the centered Gaussian processes $\{X_\ff\}_\ff$ and $\{Y_\ff\}_\ff$, it suffices to show that $\E(X_\ff - X_{\ff'})^2 \leq L^2\E(Y_\ff - Y_{\ff'})^2$ for all $\ff,\ff' \in \MF_1\times\dots\times\MF_A$. Fix $\ff = f_{1:A}$ and $\ff' = f'_{1:A}$. Then we have
\begin{align*}
\E(X_\ff - X_{\ff'})^2
&= \sum_{i=1}^n \left(\phi(f_1(x_i),\dots,f_A(x_i)) - \phi(f'_1(x_i),\dots,f'_A(x_i))\right)^2 \\
&\leq L^2 \sum_{i=1}^n \sum_{a=1}^A (f_a(x_i) - f'_a(x_i))^2 \\ 
&= L^2 \E(Y_\ff - Y_{\ff'})^2
\end{align*}
where the inequality uses the assumption that $\phi: (\RR^A,\norm{\cdot}_2) \to (\RR,|\cdot|)$ is $L$-Lipschitz. The lemma follows.
\end{proof}

\begin{lemma}[Sudakov-Fernique inequality; see e.g. {\cite[Theorem 2.2.3]{adler2007random}}]\label{lemma:sf-ineq}
Let $T$ be an index set, and let $(X_t)_{t \in T}$ and $(Y_t)_{t \in T}$ be centered Gaussian processes. If $\E(X_t - X_s)^2 \leq \E(Y_t - Y_s)^2$ for all $s,t \in T$, then $\E \sup_{t\in T} X_t \leq \E \sup_{t \in T} Y_t$.
\end{lemma}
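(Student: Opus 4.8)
The plan is to prove the Sudakov--Fernique inequality by the classical Gaussian interpolation (``smart path'') argument. First reduce to the case that $T$ is finite: for separable centered Gaussian processes the quantity $\E\sup_{t\in T}$ is the increasing limit of $\E\max_{t\in F}$ over finite $F\subseteq T$, so by monotone convergence it suffices to prove $\E\max_{t\in F}X_t\le\E\max_{t\in F}Y_t$ for each finite $F$; we also replace $(X_t)_{t\in F}$ and $(Y_t)_{t\in F}$ by mutually independent copies, which changes neither side of the claimed inequality nor the hypothesis (both depend only on the individual covariances). Next, smooth the maximum: for $\beta>0$ set $f_\beta(x):=\beta^{-1}\log\sum_i e^{\beta x_i}$ on $\RR^F$, which is smooth and satisfies $\max_i x_i\le f_\beta(x)\le \max_i x_i+\beta^{-1}\log|F|$, with gradient $\partial_i f_\beta(x)=p_i(x)$ and Hessian $\partial^2_{ij}f_\beta(x)=\beta\bigl(p_i(x)\delta_{ij}-p_i(x)p_j(x)\bigr)$, where $p(x)$ is the softmax probability vector. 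It then suffices to show $\E f_\beta(X)\le \E f_\beta(Y)$ and let $\beta\to\infty$.

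For the interpolation step, write $\Sigma_X,\Sigma_Y$ for the two covariance matrices on $F$ and $c:=\Sigma_X-\Sigma_Y$, and for $\theta\in(0,1)$ let $Z(\theta):=\sqrt{\theta}\,X+\sqrt{1-\theta}\,Y$, a centered Gaussian vector with covariance $\theta\Sigma_X+(1-\theta)\Sigma_Y$, with $Z(\theta)\to X$ as $\theta\uparrow1$ and $Z(\theta)\to Y$ as $\theta\downarrow0$ in $L^2$. Differentiating in $\theta$ and applying Gaussian integration by parts (Stein's identity) to the jointly Gaussian pair $(Z(\theta),\dot Z(\theta))$ gives, for $\theta\in(0,1)$,
\[
\frac{d}{d\theta}\,\E f_\beta(Z(\theta))=\sum_{i,j}\E\bigl[\partial^2_{ij}f_\beta(Z(\theta))\bigr]\cdot\Cov\bigl(Z_i(\theta),\dot Z_j(\theta)\bigr).
\]
Because $X$ and $Y$ are independent and $Z_i=\sqrt\theta X_i+\sqrt{1-\theta}Y_i$, $\dot Z_j=\tfrac{1}{2\sqrt\theta}X_j-\tfrac{1}{2\sqrt{1-\theta}}Y_j$, the cross terms vanish and $\Cov(Z_i(\theta),\dot Z_j(\theta))=\tfrac12 c_{ij}$, independent of $\theta$. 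Substituting the Hessian formula and using $\sum_i p_i=1$ to symmetrize,
\[
\frac{d}{d\theta}\,\E f_\beta(Z(\theta))=\frac{\beta}{2}\,\E\Bigl[\sum_i p_i c_{ii}-\sum_{i,j}p_i p_j c_{ij}\Bigr]=\frac{\beta}{4}\,\E\Bigl[\sum_{i,j}p_i p_j\bigl(c_{ii}+c_{jj}-2c_{ij}\bigr)\Bigr].
\]
Now $c_{ii}+c_{jj}-2c_{ij}=\E(X_i-X_j)^2-\E(Y_i-Y_j)^2\le0$ by hypothesis, while $p_ip_j\ge0$ and $\beta>0$, so the derivative is $\le0$ throughout $(0,1)$. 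Hence $\theta\mapsto \E f_\beta(Z(\theta))$ is nonincreasing; passing to the limits $\theta\uparrow1$ and $\theta\downarrow0$ (justified by dominated convergence, since $f_\beta$ is $1$-Lipschitz in $\ell^\infty$ and $Z(\theta)$ converges in $L^2$) yields $\E f_\beta(X)\le\E f_\beta(Y)$. Letting $\beta\to\infty$ in the sandwich $\E\max_i X_i\le\E f_\beta(X)\le\E f_\beta(Y)\le\E\max_i Y_i+\beta^{-1}\log|F|$ finishes the finite case, and the reduction above finishes the general case.

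The main obstacle is the endpoint behavior of the path: $\dot Z(\theta)$ carries $\theta^{-1/2}$ and $(1-\theta)^{-1/2}$ singularities, so the differentiation-under-the-integral and the derivative identity are valid only on the open interval $(0,1)$, and one must argue convergence of $\E f_\beta(Z(\theta))$ at the endpoints separately rather than evaluating there. The $\sqrt\theta$ path is precisely what makes the cross-covariance terms cancel and leaves a $\theta$-independent ``difference of covariances,'' so this endpoint limiting argument seems essentially unavoidable; everything else (Stein's identity, the explicit softmax Hessian, and the sign check $c_{ii}+c_{jj}-2c_{ij}\le0$) is routine.
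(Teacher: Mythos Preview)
The paper does not prove this lemma; it is stated with a citation to Adler and Taylor and used as a black box in the proof of \cref{lem:rc-composition}. Your argument is the standard Gaussian interpolation (smart-path) proof and is correct: the soft-max approximation, the $\sqrt{\theta}$ path, the Stein/integration-by-parts computation yielding $\tfrac12 c_{ij}$, and the symmetrization giving $c_{ii}+c_{jj}-2c_{ij}\le 0$ are all right, and your remarks about handling the endpoints $\theta\in\{0,1\}$ via limits rather than direct evaluation are appropriate.
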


\begin{lemma}
  \label{lem:min-feature-rc}
  Fix $d, A, B \in \BN$ and write $\MX \subset \BR^{d \times A}$ to denote the space of $d \times A$ real-valued matrices whose columns have $\ell_\infty$ norm at most 1. For $\theta \in \BR^d$ and $X \in \MX$, write
  \begin{align}
f_\theta(X) := \min \left\{0, \min_{a \in [A]} \lng X_a, \theta \rng \right\}\nonumber,
  \end{align}
  where $X_a \in \BR^d$ denotes the $a$th column of $X$. Write $\MF_{d,B} := \{ f_\theta \ : \ \theta \in \BR^d,\ \| \theta \|_1 \leq B \}$. Then for $n \in \BN$,
  \begin{align}
\MR_n(\MF_{d,B}) \leq AB \sqrt{\frac{\pi \log (2d)}{n}} \nonumber.
  \end{align}
\end{lemma}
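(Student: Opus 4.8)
The plan is to reduce to Gaussian complexity via \cref{lemma:rademacher-gaussian}, and then decompose the min-linear function $f_\theta$ using the Lipschitz composition property in \cref{lem:rc-composition}. Concretely, note that $f_\theta(X) = \phi(\langle X_1,\theta\rangle,\dots,\langle X_A,\theta\rangle)$ where $\phi:\RR^A\to\RR$ is the function $\phi(u_1,\dots,u_A) = \min\{0,u_1,\dots,u_A\}$. This $\phi$ is $1$-Lipschitz with respect to the Euclidean norm on $\RR^A$ (it is $1$-Lipschitz in the $\ell_\infty$ norm, being a minimum of $1$-Lipschitz coordinate projections and the constant $0$, and $\|u\|_\infty \le \|u\|_2$). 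For each $a \in [A]$, let $\MF_a$ be the class of functions $X \mapsto \langle X_a,\theta\rangle$ as $\theta$ ranges over $\{\theta: \|\theta\|_1 \le B\}$. Then $\MF_{d,B}$ is exactly the composed class in the statement of \cref{lem:rc-composition}, so $\MG_n(\MF_{d,B}) \le \sum_{a=1}^A \MG_n(\MF_a)$.

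Next I would bound $\MG_n(\MF_a)$ for each fixed $a$. Given samples $X^1,\dots,X^n \in \MX$, we have
\begin{align*}
n\,\MG_n(\MF_a; X^{1:n}) &= \E_{\xi}\left[\sup_{\|\theta\|_1 \le B} \sum_{i=1}^n \xi_i \langle X^i_a,\theta\rangle\right] = \E_\xi\left[\sup_{\|\theta\|_1 \le B}\left\langle \sum_{i=1}^n \xi_i X^i_a,\ \theta\right\rangle\right] = B\cdot \E_\xi\left[\Bigl\|\sum_{i=1}^n \xi_i X^i_a\Bigr\|_\infty\right],
\end{align*}
using that the supremum of a linear functional over the $\ell_1$-ball of radius $B$ is $B$ times the $\ell_\infty$ norm of the functional. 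Now $\sum_{i=1}^n \xi_i X^i_a$ is a vector in $\RR^d$ whose $\ell$-th coordinate is a Gaussian with variance $\sum_{i=1}^n (X^i_a)_\ell^2 \le n$ (since each column of each $X^i$ has $\ell_\infty$ norm at most $1$). By the standard maximal inequality for $d$ Gaussians each with variance at most $n$, $\E_\xi\bigl[\|\sum_i \xi_i X^i_a\|_\infty\bigr] \le \sqrt{2n\log(2d)}$. Hence $\MG_n(\MF_a) \le B\sqrt{2\log(2d)/n}$.

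Combining, $\MG_n(\MF_{d,B}) \le AB\sqrt{2\log(2d)/n}$, and then \cref{lemma:rademacher-gaussian} gives $\MR_n(\MF_{d,B}) \le \sqrt{\pi/2}\cdot AB\sqrt{2\log(2d)/n} = AB\sqrt{\pi\log(2d)/n}$, as claimed. I do not anticipate a serious obstacle here; the only points requiring mild care are (i) checking that $\phi$ is genuinely $1$-Lipschitz in the Euclidean metric (so that \cref{lem:rc-composition} applies with $L=1$), and (ii) getting the constant in the Gaussian maximal inequality to be exactly $\sqrt{2}$ rather than something larger — using the bound $\E[\max_{\ell\le d} |g_\ell|] \le \sqrt{2\sigma^2\log(2d)}$ for $g_\ell$ Gaussian with variance $\le\sigma^2$ (via the $2d$-fold union over $\pm g_\ell$ and a Chernoff/Jensen argument) gives precisely the stated constant. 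The factor $A$ comes for free from the $A$-fold sum in \cref{lem:rc-composition}, since the $\phi$ here couples all $A$ coordinates.
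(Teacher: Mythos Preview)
Your proposal is correct and follows essentially the same route as the paper: pass to Gaussian complexity via \cref{lemma:rademacher-gaussian}, bound the Gaussian complexity of the $\ell_1$-bounded linear class by $B\sqrt{2\log(2d)/n}$ using the Gaussian maximal inequality, and then apply the Lipschitz composition lemma \cref{lem:rc-composition} with $\phi(z_1,\dots,z_A)=\min\{0,z_1,\dots,z_A\}$ to pick up the factor of $A$. The only cosmetic difference is that the paper phrases the linear class $\MW_{d,B}$ as functions on vectors (then evaluated at the columns $X_a$) rather than as functions on the full matrix $X$; one small imprecision shared by both write-ups is that $\MF_{d,B}$ is actually a \emph{subset} of the composed class (since all $A$ coordinates use the same $\theta$), but this is harmless by monotonicity of Gaussian complexity.
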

\begin{proof}
  Write $\MB_{d, \infty} := \{ x \in \BR^d \ : \ \| x \|_\infty \leq 1\}$, and let $\MW_{d,B}$ denote the class of functions from $\MB_{d, \infty}$ to $\BR$ defined as:
  \begin{align}
\MW_{d,B} := \left\{ x \mapsto \lng \theta, x \rng \ : \ \theta \in \BR^d,\ \| \theta\|_1 \leq B \right\}\nonumber.
  \end{align}
  For any $x_1,\dots,x_n \in \MB_{d,\infty}$ we have $\MG_n(\MW_{d,B}; x_{1:n}) = \frac{B}{n} \E_{\xi \sim N(0,I_n)} \norm{\sum_{i=1}^n \xi_i x_i}_\infty$. For any $j \in [d]$, the $j$-th entry of $\sum_{i=1}^n \xi_i x_i$ is a mean-$0$ Gaussian with variance $\sum_{i=1}^n x_{ij}^2 \leq n$, so by the Gaussian maximal inequality, it holds that $\E_{\xi \sim N(0,I_n)} \norm{\sum_{i=1}^n \xi_i x_i}_\infty \leq \sqrt{2n\log(2d)}$ and thus $\MG_n(\MW_{d,B};x_{1:n}) \leq B \sqrt{\frac{2 \log (2d)}{n}}$. Since $x_1,\dots,x_n \in \MB_{d,\infty}$ were arbitrary it follows that $\MG_n(\MW_{d,B}) \leq B\sqrt{\frac{2\log(2d)}{n}}$ as well. 
  
  We can now bound $\MR_n(\MF_{d,B})$. For the (1-Lipschitz) function $\phi : \BR^A \ra \BR$ defined by $\phi(z_1, \ldots, z_A):= \min\{0, \min_{a \in [A]} z_a \}$, the function class $\MF_{d,B}$ defined in the lemma statement can be rewritten as
  \begin{align}
\MF_{d,B} = \left\{ X \mapsto \phi(g_1(X_1), \ldots, g_A(x_A)) \ : \ g_1 \in \MG_{d,B}, \ldots, g_A \in \MG_{d,B} \right\}\nonumber.
  \end{align}
  It follows from \cref{lemma:rademacher-gaussian} and \cref{lem:rc-composition} that \[\MR_n(\MF_{d,B}) \leq \sqrt{\frac{\pi}{2}} \MG_n(\MF_{d,B}) \leq \sqrt{\frac{\pi}{2}}\sum_{a=1}^A \MG_n(\MW_{d,B}) \leq AB \sqrt{\frac{\pi \log (2d)}{n}}\]
  as claimed.
\end{proof}

\begin{lemma}[{\cite[Theorem 26.5]{shalev2014understanding}}] \label{lem:unif-conv}
  Suppose $\MX$ is a set and $\MF$ is a class of functions $f : \MX \ra [-B, B]$ for some $B > 0$. Suppose $P$ is a distribution on $\MX$. Then for any $n \in \BN$ and $\delta \in (0,1)$, with probability at least $1-\delta$ over an i.i.d.~sample $X_1, \ldots, X_n \sim P$, it holds that
  \begin{align}
\sup_{f \in \MF} \left| \E_{X \sim P}[f(X)] - \frac 1n \sum_{i=1}^n f(X_i) \right| \leq 2\MR_n(\MF) + 4B\sqrt{\frac{2 \log(4/\delta)}{n}}.\nonumber
  \end{align}
\end{lemma}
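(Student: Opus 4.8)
This is a standard symmetrization bound (it is precisely \cite[Theorem 26.5]{shalev2014understanding}), so the plan is simply to recall the usual two-step argument, taking advantage of the fact that the statement is phrased in terms of the \emph{worst-case} Rademacher complexity $\MR_n(\MF) = \sup_{x_{1:n}} \MR_n(\MF; x_{1:n})$, which means the complexity term itself need not be concentrated. First I would reduce the two-sided supremum to a one-sided one: writing $\Phi^+(x_{1:n}) := \sup_{f \in \MF}\big(\E_{X \sim P}[f(X)] - \tfrac1n\sum_{i=1}^n f(x_i)\big)$ and defining $\Phi^-$ by swapping the two terms, one has $\sup_{f \in \MF}|\E_{X\sim P}[f(X)] - \tfrac1n\sum_i f(X_i)| = \max\{\Phi^+(X_{1:n}), \Phi^-(X_{1:n})\}$, so it suffices to bound each of $\Phi^\pm(X_{1:n})$ by $2\MR_n(\MF) + 4B\sqrt{2\log(4/\delta)/n}$ with probability at least $1-\delta/2$ and take a union bound.

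Next I would apply McDiarmid's bounded-difference inequality to $\Phi^+$ (and symmetrically to $\Phi^-$). Since every $f \in \MF$ takes values in $[-B,B]$, modifying a single coordinate of $(x_1,\dots,x_n)$ changes $\tfrac1n\sum_i f(x_i)$ by at most $2B/n$ uniformly in $f$, hence changes $\Phi^+$ by at most $2B/n$; McDiarmid then yields that with probability at least $1-\delta/2$, $\Phi^+(X_{1:n}) \le \E[\Phi^+(X_{1:n})] + B\sqrt{2\log(2/\delta)/n}$, which is comfortably below the target error term.

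It remains to bound $\E[\Phi^+(X_{1:n})]$ by $2\MR_n(\MF)$, which is the symmetrization step. I would introduce an independent ghost sample $X_1',\dots,X_n' \sim P$ and i.i.d.\ Rademacher signs $\epsilon_1,\dots,\epsilon_n$, and write $\E_X[\Phi^+] = \E_X \sup_f \E_{X'}\big[\tfrac1n\sum_i(f(X_i') - f(X_i))\big] \le \E_{X,X'}\sup_f \tfrac1n\sum_i(f(X_i') - f(X_i))$ by Jensen; since $f(X_i') - f(X_i)$ is symmetric and $X,X'$ are i.i.d., inserting the signs $\epsilon_i$ leaves the expectation unchanged, and then splitting the supremum and again using symmetry of the $\epsilon_i$ gives $\E_{X,X',\epsilon}\sup_f \tfrac1n\sum_i \epsilon_i(f(X_i') - f(X_i)) \le 2\,\E_{X,\epsilon}\sup_f \tfrac1n\sum_i \epsilon_i f(X_i) = 2\,\E_X[\MR_n(\MF; X_{1:n})] \le 2\MR_n(\MF)$. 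Combining the three displays and taking a union bound over the two sides gives the claim.

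Since this is a textbook fact, there is no genuine obstacle; the only points needing a little care are the ghost-sample step (the crux of symmetrization) and the bookkeeping of constants, where the slack between the $\log(2/\delta)$ produced by McDiarmid and the stated $\log(4/\delta)$ harmlessly absorbs the union bound over $\Phi^+$ and $\Phi^-$.
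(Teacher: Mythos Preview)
The paper does not give its own proof of this lemma; it is simply cited as \cite[Theorem 26.5]{shalev2014understanding} and used as a black box. Your sketch is exactly the standard symmetrization-plus-McDiarmid argument from that reference, and the constants check out (the McDiarmid term $B\sqrt{2\log(2/\delta)/n}$ you obtain after the union bound over $\Phi^\pm$ is indeed dominated by the stated $4B\sqrt{2\log(4/\delta)/n}$).
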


\subsection{Miscellaneous lemmas}
\label{sec:lemmas-misc}

\begin{lemma}[Performance difference lemma \cite{kakade2002approximately}]\label{lemma:perf-diff}
For any MDP $M$, policies $\pi,\pi'\in\Pi$, and collection of reward functions $\bfr = (\bfr_1,\dots,\bfr_H)$, it holds that
\[\E\sups{M,\pi}\left[\sum_{h=1}^H \bfr_h(x_h,a_h)\right] - \E\sups{M,\pi'}\left[\sum_{h=1}^H \bfr_h(x_h,a_h)\right] = \sum_{h=1}^H \E\sups{M,\pi'}\left[V\sups{M,\pi,\bfr}_h(x_h) -  Q\sups{M,\pi,\bfr}_h(x_h,a_h)\right].\]
\end{lemma}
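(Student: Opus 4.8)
\textbf{Proof proposal for the performance difference lemma (\cref{lemma:perf-diff}).}

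The plan is to prove the statement by telescoping over timesteps, which is the standard approach for the performance difference lemma. Fix an MDP $M$, policies $\pi, \pi'$, and reward functions $\bfr = (\bfr_1, \dots, \bfr_H)$. I would start from the quantity $\E\sups{M,\pi'}[\sum_{h=1}^H \bfr_h(x_h,a_h)]$, which by definition of the value function equals $\E\sups{M,\pi'}[V\sups{M,\pi',\bfr}_1(x_1)]$ — but more usefully, I want to express it in terms of the $V$-functions of $\pi$, not $\pi'$. The key algebraic identity is that for a trajectory generated by $\pi'$,
\[
\E\sups{M,\pi'}\left[\sum_{h=1}^H \bfr_h(x_h,a_h)\right] = \E\sups{M,\pi'}\left[\sum_{h=1}^H \left(\bfr_h(x_h,a_h) + V\sups{M,\pi,\bfr}_{h+1}(x_{h+1}) - V\sups{M,\pi,\bfr}_{h+1}(x_{h+1})\right)\right],
\]
where I insert a telescoping pair of $V\sups{M,\pi,\bfr}$ terms at consecutive timesteps (using the convention $V\sups{M,\pi,\bfr}_{H+1} = 0$, so that adding $V\sups{M,\pi,\bfr}_1(x_1)$ at the start and subtracting $V\sups{M,\pi,\bfr}_{H+1}(x_{H+1}) = 0$ at the end is valid up to the constant $\E\sups{M,\pi'}[V\sups{M,\pi,\bfr}_1(x_1)]$).

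Concretely, the steps I would carry out are: (1) Write $\E\sups{M,\pi}[\sum_h \bfr_h(x_h,a_h)] = \E_{x_1 \sim \BP_1}[V\sups{M,\pi,\bfr}_1(x_1)]$ by definition of the value function. (2) Rearrange the left-hand side of the claimed identity as
\[
\E\sups{M,\pi'}\left[V\sups{M,\pi,\bfr}_1(x_1)\right] - \E\sups{M,\pi'}\left[\sum_{h=1}^H \bfr_h(x_h,a_h)\right],
\]
noting that $\E\sups{M,\pi'}[V\sups{M,\pi,\bfr}_1(x_1)] = \E_{x_1\sim\BP_1}[V\sups{M,\pi,\bfr}_1(x_1)]$ since both depend only on the initial distribution. (3) Insert the telescoping sum: since $V\sups{M,\pi,\bfr}_{H+1} \equiv 0$, we have $V\sups{M,\pi,\bfr}_1(x_1) = \sum_{h=1}^H \left(V\sups{M,\pi,\bfr}_h(x_h) - V\sups{M,\pi,\bfr}_{h+1}(x_{h+1})\right)$ along any trajectory. (4) Combine the two sums inside a single expectation over trajectories drawn from $\pi'$, obtaining $\sum_{h=1}^H \E\sups{M,\pi'}\left[V\sups{M,\pi,\bfr}_h(x_h) - V\sups{M,\pi,\bfr}_{h+1}(x_{h+1}) - \bfr_h(x_h,a_h)\right]$. (5) For each fixed $h$, condition on $(x_h, a_h)$ and use the tower property: $\E\sups{M,\pi'}[V\sups{M,\pi,\bfr}_{h+1}(x_{h+1}) \mid x_h, a_h] = \E\sups{M,\pi}[V\sups{M,\pi,\bfr}_{h+1}(x_{h+1}) \mid x_h=x_h, a_h=a_h]$ because the transition $\BP_h(\cdot \mid x_h, a_h)$ does not depend on the policy — only on the current state-action pair. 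Hence $\bfr_h(x_h,a_h) + \E\sups{M,\pi'}[V\sups{M,\pi,\bfr}_{h+1}(x_{h+1})\mid x_h,a_h] = Q\sups{M,\pi,\bfr}_h(x_h,a_h)$ by \cref{def:qv}. (6) Substitute to get $\sum_{h=1}^H \E\sups{M,\pi'}\left[V\sups{M,\pi,\bfr}_h(x_h) - Q\sups{M,\pi,\bfr}_h(x_h,a_h)\right]$, which is exactly the right-hand side.

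I do not anticipate a genuine obstacle here — this is a classical identity and every ingredient (the telescoping trick, the policy-independence of transitions, the definitions of $Q$ and $V$) is already available in the excerpt. The one point requiring a small amount of care is step (5): one must be careful that in the term $\E\sups{M,\pi'}[V\sups{M,\pi,\bfr}_{h+1}(x_{h+1})]$, the outer expectation over $(x_h, a_h)$ is taken with respect to $\pi'$'s visitation distribution, while the inner conditional expectation of the \emph{next} state is policy-agnostic; this is what lets the $V$-function be that of $\pi$ rather than $\pi'$. I would also make sure the convention $V\sups{M,\pi,\bfr}_{H+1} = 0$ and $Q\sups{M,\pi,\bfr}_h(x,a) = \bfr_h(x,a) + \E\sups{M,\pi}[V\sups{M,\pi,\bfr}_{h+1}(x_{h+1})\mid x_h=x,a_h=a]$ are invoked exactly as stated in \cref{def:qv}, so that the identity $V\sups{M,\pi,\bfr}_h(x) = \E_{a\sim\pi_h(x)} Q\sups{M,\pi,\bfr}_h(x,a)$ is used consistently. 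Everything else is routine manipulation of finite sums and expectations.
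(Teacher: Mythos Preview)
Your proposal is correct and gives the standard telescoping proof of the performance difference lemma. The paper does not actually supply its own proof of this statement; it is stated as a known result with a citation to \cite{kakade2002approximately}, so there is nothing further to compare.
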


\begin{lemma}[Simulation lemma; Lemma F.3 of \cite{foster2021statistical}]
  \label{lem:simulation}
  Let $M, \Mbar$ denote two horizon-$H$ MDPs with state space $\MX$, action space $\MA$, and reward functions $\bfr, \wb{\bfr}$, respectively. Then, for any $\pi \in \Pi$, 
  \begin{align}
V_1\sups{M, \pi, \bfr}(x_1) - V_1\sups{\Mbar, \pi, \wb\bfr}(x_1) &= \sum_{h=1}^H \E\sups{\Mbar, \pi} \left[ Q_h\sups{M, \pi, \bfr}(x_h, a_h) - V_{h+1}\sups{M, \pi, \bfr}(x_{h+1}) \right] + \sum_{h=1}^H \E\sups{\Mbar, \pi} \left[ \bfr_h(x_h, a_h) - \wb\bfr_h(x_h, a_h) \right]\nonumber.
  \end{align}
\end{lemma}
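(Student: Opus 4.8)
The plan is to prove the identity by the standard ``hybrid rollout'' telescoping argument: we peel the value functions of $M$ off one timestep at a time, while taking all expectations along trajectories generated by $\Mbar$.

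First I would record the two ingredients that require no telescoping. Unrolling \cref{def:qv} for the MDP $\Mbar$ with reward $\wb\bfr$ shows that the value function is the expected cumulative reward, i.e.\ $V_1\sups{\Mbar,\pi,\wb\bfr}(x_1) = \E\sups{\Mbar,\pi}\big[\sum_{h=1}^H \wb\bfr_h(x_h,a_h)\big]$. Second, for every $h$, since under the policy $\pi$ the action $a_h$ is drawn from $\pi_h(x_h)$ in \emph{both} MDPs, the tower property over $a_h \mid x_h$ together with the identity $V\sups{M,\pi,\bfr}_h(x) = \E_{a\sim\pi_h(x)} Q\sups{M,\pi,\bfr}_h(x,a)$ from \cref{def:qv} gives $\E\sups{\Mbar,\pi}\big[Q\sups{M,\pi,\bfr}_h(x_h,a_h)\big] = \E\sups{\Mbar,\pi}\big[V\sups{M,\pi,\bfr}_h(x_h)\big]$. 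This is the step where it is essential that $M$ and $\Mbar$ are run under the same policy.

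Next I would telescope. Pathwise, $\sum_{h=1}^H \big(V\sups{M,\pi,\bfr}_h(x_h) - V\sups{M,\pi,\bfr}_{h+1}(x_{h+1})\big) = V\sups{M,\pi,\bfr}_1(x_1) - V\sups{M,\pi,\bfr}_{H+1}(x_{H+1}) = V\sups{M,\pi,\bfr}_1(x_1)$, using $V\sups{M,\pi,\bfr}_{H+1}\equiv 0$ and that $x_1$ is fixed. Taking $\E\sups{\Mbar,\pi}$ of both sides and substituting the tower-property identity from the previous paragraph yields the hybrid identity
\[ V\sups{M,\pi,\bfr}_1(x_1) = \sum_{h=1}^H \E\sups{\Mbar,\pi}\big[ Q\sups{M,\pi,\bfr}_h(x_h,a_h) - V\sups{M,\pi,\bfr}_{h+1}(x_{h+1}) \big]. \]
Subtracting the cumulative-reward expression for $V\sups{\Mbar,\pi,\wb\bfr}_1(x_1)$ from this, and collecting the immediate-reward contributions out of the $Q$-terms via $Q\sups{M,\pi,\bfr}_h(x,a) = \bfr_h(x,a) + \E_{x'\sim\BP\sups{M}_h(x,a)}[V\sups{M,\pi,\bfr}_{h+1}(x')]$ (again \cref{def:qv}), rearranges the right-hand side into the stated form: a reward-discrepancy sum $\sum_h \E\sups{\Mbar,\pi}[\bfr_h(x_h,a_h) - \wb\bfr_h(x_h,a_h)]$ plus the sum of $M$-Bellman residuals $\sum_h \E\sups{\Mbar,\pi}[Q\sups{M,\pi,\bfr}_h(x_h,a_h) - V\sups{M,\pi,\bfr}_{h+1}(x_{h+1})]$ evaluated along $\Mbar$-trajectories.

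This is a routine bookkeeping lemma (it is Lemma F.3 of \cite{foster2021statistical}), so I do not expect a genuine obstacle. The only point requiring care is the measure mismatch: the value and action-value functions are those of $M$, whereas every expectation is over trajectories of $\Mbar$. Concretely, the sole non-trivial manipulation is the tower-property step $\E\sups{\Mbar,\pi}[Q\sups{M,\pi,\bfr}_h(x_h,a_h)] = \E\sups{\Mbar,\pi}[V\sups{M,\pi,\bfr}_h(x_h)]$, which would fail if $M$ and $\Mbar$ were run under different policies — and this shared-policy structure is precisely what the lemma is designed to exploit.
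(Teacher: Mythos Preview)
Your telescoping argument is correct and is the standard proof; the paper does not supply its own proof, merely citing \cite{foster2021statistical}. However, your final ``rearrangement'' step is an algebraic error. After your (correct) hybrid identity
\[ V_1\sups{M,\pi,\bfr}(x_1) = \sum_{h=1}^H \E\sups{\Mbar,\pi}\left[ Q_h\sups{M,\pi,\bfr}(x_h,a_h) - V_{h+1}\sups{M,\pi,\bfr}(x_{h+1})\right], \]
subtracting $V_1\sups{\Mbar,\pi,\wb\bfr}(x_1) = \sum_h \E\sups{\Mbar,\pi}[\wb\bfr_h(x_h,a_h)]$ yields
\[ V_1\sups{M,\pi,\bfr}(x_1) - V_1\sups{\Mbar,\pi,\wb\bfr}(x_1) = \sum_{h} \E\sups{\Mbar,\pi}\!\left[Q_h\sups{M,\pi,\bfr}(x_h,a_h) - V_{h+1}\sups{M,\pi,\bfr}(x_{h+1})\right] - \sum_h \E\sups{\Mbar,\pi}[\wb\bfr_h(x_h,a_h)]. \]
This is \emph{not} the form stated in the lemma: equating the two would force $\sum_h \E\sups{\Mbar,\pi}[\bfr_h(x_h,a_h)] = 0$, which fails already when $M=\Mbar$ and $\bfr=\wb\bfr$ is nonzero (the stated lemma would then assert $0 = V_1\sups{M,\pi,\bfr}(x_1)$). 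Your step of ``collecting the immediate-reward contributions out of the $Q$-terms'' and then writing the residual back as $Q_h - V_{h+1}$ double-counts $\bfr_h$.

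The problem is not with your argument but with the lemma as printed, which appears to contain a typo: the Bellman-residual summand should be $Q_h\sups{M,\pi,\bfr}(x_h,a_h) - \bfr_h(x_h,a_h) - V_{h+1}\sups{M,\pi,\bfr}(x_{h+1})$ (equivalently, the second sum should be $-\sum_h \E\sups{\Mbar,\pi}[\wb\bfr_h]$ rather than $+\sum_h \E\sups{\Mbar,\pi}[\bfr_h-\wb\bfr_h]$). The identity you actually derived is the correct simulation lemma, and it is precisely what the paper \emph{uses} downstream---e.g., in the proof of \cref{lem:m-mbar-delta}, where $\bfr=\wb\bfr$ is zero for $h<k$, so the sum over $h\le k-1$ correctly drops the $\bfr_k$-carrying term at $h=k$. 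Keep your proof through the subtraction step and discard the final rearrangement.
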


\subsection{Policy discretization for $\ell_1$-bounded linear MDPs}
\cref{lemma:policy-disc} below shows that in a $d$-dimensional $\ell_1$-bounded linear MDP, there is a discretization $\Pidisc$ of $\Pi$, so that for any $\pi \in \Pi$ and $\theta \in \BR^d$ specifying a linear objective, some policy $\pidisc \in \Pidisc$ optimizes the linear objective induced by $\theta$ nearly as well as $\pi$. Importantly, $\log |\Pidisc|$ depends  only logarithmically on $d$, which allows us to take a union bound over all policies in $\Pidisc$ without incurring $\poly(d)$ factors in the sample complexity.

\begin{lemma}\label{lemma:policy-disc}
  Let $\epdisc, \Cnrm > 0$, $d \in \BN$. Fix any $\ell_1$-bounded $d$-dimensional linear MDP $M = (H,\MX,\MA,\BP_1,(\phi_h)_h,(\mu_{h+1})_h,(\theta_h)_h)$ with norm bound $\Cnrm$. Then there is a set of policies $\Pidisc \subseteq \Pi$ with 
  \[ \log |\Pidisc| \leq \frac{64 \Cnrm^2 H^3}{\epdisc^2} \cdot \log(4AH\Cnrm/\epdisc)\log(2d)
  \] 
  such that the following holds: for all $\theta \in \BR^d$, $h \in [H]$, and $\pi \in \Pi$, there is some $\pidisc \in \Pidisc$ so that
  \begin{align}
\lng \E\sups{M, \pidisc}[\phi_h(x_h, a_h)], \theta \rng \geq \lng \E\sups{M,\pi}[\phi_h(x_h, a_h)], \theta \rng - \epdisc \norm{\theta}_1.\nonumber
  \end{align}
\end{lemma}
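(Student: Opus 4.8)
\textbf{Proof plan for \cref{lemma:policy-disc}.} The plan is to build $\Pidisc$ via an adaptive discretization over each layer $h \in [H]$, where at each layer we maintain finitely many ``reference directions'' $\bv$ and replace the continuum of best-responses to $\bv$ by a single policy that is $\epdisc$-suboptimal simultaneously for all nearby directions. The key observation is that, by \cref{lemma:linear-q} and \cref{lemma:linear-q-env}, for any linear reward specified by $\theta$ at step $h$ (zero rewards elsewhere), the $Q$-function at every earlier step $g < h$ is linear in $\phi_g$ with coefficient vector of $\ell_1$-norm at most $\Cnrm \norm{\theta}_1$. So the PSDP-style greedy policy that, at step $g$, plays $\argmax_{a} \langle \phi_g(x,a), \bw_g \rangle$ for the appropriate backed-up vector $\bw_g$ is exactly optimal for the objective $\langle \E\sups{M,\pi}[\phi_h(x_h,a_h)],\theta\rangle$; this is the same argument as in the proof of \cref{lem:psdp} (with the statistical error set to zero, since here we have the true $\bw_g$'s). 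The discretization therefore amounts to: (i) normalize so $\norm{\theta}_1 \le 1$; (ii) cover the $\ell_1$-ball of radius $\Cnrm H$ (the range of backed-up vectors) by a net that is fine enough in the right metric; (iii) for each net point play the associated greedy policy; (iv) union over layers $h$.

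First I would set up the PSDP recursion abstractly: given a target step $h$ and target vector $\theta$, define $\bw_h^{(h)} := \theta$ and, going backwards, $\bw_g^{(h)} := \theta'_g + \sum_{x'} V_{g+1}^{M,\hat\pi,\bfr}(x')\mu_{g+1}(x')$ as in \cref{lemma:linear-q} (with $\theta'_g = 0$ for $g<h$), so that the greedy policy $\hat\pi$ defined by $\hat\pi_g(x) = \argmax_a \langle \phi_g(x,a),\bw_g^{(h)}\rangle$ is optimal. Crucially $\norm{\bw_g^{(h)}}_1 \le \Cnrm H \norm{\theta}_1$ by the norm bounds in \cref{defn:l1lmdp}. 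Next I would show the perturbation bound: if $\bw_g$ and $\tilde\bw_g$ satisfy $\max_g \norm{\bw_g - \tilde\bw_g}_\infty \le \eta$ at every layer, then the greedy policies they induce differ in objective value by at most $O(H A \eta)$ — this is essentially the $\Delta_g(x) \le 2\|\bw_g - \tilde\bw_g\|_{(x)}$ manipulation already appearing in the proof of \cref{lem:psdp}, summed over layers and actions. The point of measuring the net in $\ell_\infty$ rather than $\ell_2$ or $\ell_1$ is that an $\ell_\infty$-net of the $\ell_1$-ball of radius $R$ in $\BR^d$ at scale $\eta$ has size only $\exp(O((R/\eta)^2 \log d))$ — this is the Maurey empirical-method / sparsification bound (every point of the $\ell_1$-ball is within $\ell_\infty$-distance $\eta$ of a convex combination of $O((R/\eta)^2 \log d)$ vertices), which is exactly where the $\log d$ (rather than $d$) dependence in the claimed bound comes from.

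Assembling: take $\eta := \epdisc/(c A H)$ for a suitable constant $c$ and $R := \Cnrm H$, so the net over backed-up vectors at a single layer has log-size $O((\Cnrm H^2 A/\epdisc)^2 \log d)$; but I actually only need to net the \emph{final} target $\theta$ in the ball of radius $1$, since the backed-up vectors $\bw_g^{(h)}$ are determined by $\theta$ and the (fixed, unknown) MDP — so it suffices that $\|\theta - \tilde\theta\|_\infty$ is small enough that the induced backed-up vectors are within $\eta$ in $\ell_\infty$ at every layer, which by linearity of the backup and the norm bounds holds once $\|\theta - \tilde\theta\|_1 \le \eta/(\Cnrm H)$; netting the $\ell_1$-ball of radius $1$ to $\ell_\infty$-scale $\eta/(\Cnrm H)$ via Maurey gives log-size $O((\Cnrm H/\eta)^2 \log d) = O((\Cnrm H^2 A/\epdisc)^2 \log d)$. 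Taking a union over the $H$ choices of target step $h$ multiplies this by $H$, absorbed into the constant, and matches the stated bound $\frac{64\Cnrm^2 H^3}{\epdisc^2}\log(4AH\Cnrm/\epdisc)\log(2d)$ (the $\log(4AH\Cnrm/\epdisc)$ factor coming from the precise counting in the Maurey net, where each coordinate of each of the $O((\Cnrm H/\eta)^2)$ selected vertices is an integer multiple of $1/(\text{that many})$, giving $\log$ of a polynomial in $AH\Cnrm/\epdisc$).

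\textbf{Main obstacle.} The delicate part is the interaction between the discretization scale and the \emph{chained} nature of the backup: perturbing $\theta$ perturbs $\bw_{h-1}^{(h)}$, which perturbs $V_{h-1}$, which perturbs $\bw_{h-2}^{(h)}$, and so on, so a naive analysis loses a factor of $\Cnrm$ (or worse, $\Cnrm^{H}$) per layer. I would control this by noting that each backup map $\theta \mapsto \bw_g^{(h)}$ is \emph{linear} in $\theta$ (for fixed greedy policy structure — and here we should be careful: the greedy policy itself changes with $\theta$, so strictly the map is piecewise linear) with operator norm (from $\ell_1$ to $\ell_1$) at most $\Cnrm H$ \emph{total} across all layers, not per layer — because the bound in \cref{lemma:linear-q} is $\norm{\bv_h}_1 \le \Cnrm \sum_{k=h}^H \norm{\theta'_k}_1$, a single geometric-free sum. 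So the right statement is: $\|\bw_g^{(h)}(\theta) - \bw_g^{(h)}(\tilde\theta)\|_1 \le \Cnrm H \|\theta - \tilde\theta\|_1$ uniformly in $g$, avoiding exponential blowup. The only remaining subtlety — handling the change of the $\argmax$ when two directions straddle a tie — is dealt with exactly as in \cref{lem:psdp}: one never needs the greedy policies to agree, only that \emph{some} fixed greedy policy (the one attached to the net point $\tilde\theta$) is near-optimal for the true $\theta$, which follows from the two-sided $\Delta_g(x)$ bound regardless of ties.
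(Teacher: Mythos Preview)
Your plan has a real gap at the discretization step, stemming from a mismatch between the norm in which Maurey sparsification is cheap and the norm your perturbation argument actually needs. You assert that ``if $\bw_g$ and $\tilde\bw_g$ satisfy $\max_g \|\bw_g - \tilde\bw_g\|_\infty \le \eta$ then the greedy policies differ in objective by $O(HA\eta)$,'' but this is false: the per-state regret is controlled by $|\langle \phi_g(x,a),\bw_g-\tilde\bw_g\rangle|$, and since the only assumption on the features is $\|\phi_g(x,a)\|_\infty\le 1$ (not $\|\phi_g(x,a)\|_1$), Holder gives you $\|\bw_g-\tilde\bw_g\|_1$, not $\|\bw_g-\tilde\bw_g\|_\infty$. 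The same issue recurs when you propagate closeness through the Bellman backup: each layer of the recursion $\bw_g=\sum_{x'}V_{g+1}(x')\mu_{g+1}(x')$ with $V_{g+1}(x')=\max_a\langle\phi_{g+1}(x',a),\bw_{g+1}\rangle$ turns $\ell_\infty$-closeness of $\bw_{g+1}$ into only $d\Cnrm\cdot\ell_\infty$-closeness of $\bw_g$, so you lose a factor of $d$ per layer. Your text then slides from ``$\|\theta-\tilde\theta\|_\infty$ small'' to ``$\|\theta-\tilde\theta\|_1\le\eta/(\Cnrm H)$'' to ``netting to $\ell_\infty$-scale via Maurey'' as if these were interchangeable; they are not, and the $\ell_1$-ball has no $\ell_1$-net of size subexponential in $d$.

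The paper avoids any global netting of $\theta$. Instead, $\Pidisc$ is taken to be the set of \emph{all} linear policies $\pilin(\bw_1,\ldots,\bw_H;\phi)$ whose weight vectors have $\|\bw_h\|_1\le 1$ and coordinates in $(1/s)\mathbb{Z}$, for $s=O(\Cnrm^2H^2\epdisc^{-2}\log(AH\Cnrm/\epdisc))$; this set has size at most $(2d)^{sH}$, matching the stated bound. For given $\theta,h$, the existence of a good $\pidisc\in\Pidisc$ is shown by constructing $\bw_1,\ldots,\bw_h$ \emph{sequentially} via the probabilistic method: having already fixed $\bw_1,\ldots,\bw_{k-1}$ (and hence the roll-in law of $x_k$ under $\pidisc$), one applies Maurey sparsification (\cref{lem:sample-w}) to the $Q$-value vector $\bv_k^\pi$ of the fixed optimal policy $\pi$ to produce a random sparse $\bw_k$. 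The crucial point is that the ``test vectors'' against which Maurey must be accurate are the feature vectors $\phi_k(x_k,a)$ for $x_k$ drawn from the \emph{already-determined} roll-in distribution, so one can bound the \emph{expected} per-step regret $\E^{M,\pidisc}[\langle\bv_k^\pi,\phi_k(x_k,\pi_k(x_k))-\phi_k(x_k,(\pidisc)_k(x_k))\rangle]$ directly by averaging first over the Maurey randomness (for each fixed $x_k$, with a union bound only over the $A$ actions) and then over $x_k$, and finally invoking the probabilistic method to fix $\bw_k$. No uniform-in-$x$ control of $\|\bv_k^\pi-\|\bv_k^\pi\|_1\bw_k\|$ in any norm is ever required; this layer-by-layer adaptivity is exactly what lets the argument go through with $\log d$ rather than $d$.
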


We next prove \cref{lemma:policy-disc}. We first prove the following basic lemma that allow us to sparsify a vector of small $\ell_1$ norm. 
\begin{lemma}
  \label{lem:sample-w}
  Consider any vector $w \in\BR^d$. Define $P \in \Delta([d])$ by $P(i) = \frac{|w_i|}{\| w \|_1}$. Given $n \in \BN$, consider the random vector $\hat w \in \BR^d$ defined as follows: take i.i.d.~samples $i_1, \ldots, i_n \sim P$, and write $\hat w = \frac{\|w \|_1}{n} \sum_{j=1}^n \mathrm{sign}(w_{i_j}) \cdot e_{i_j}$. Then for any fixed $v \in \BR^d$ with $\| v \|_\infty \leq 1$, it holds that
  \begin{align}
\Pr \left( | \lng w - \hat w, v \rng | > \frac{2\| w \|_1 \cdot \sqrt{\log 1/\delta}}{\sqrt{n}} \right) \leq \delta\nonumber.
  \end{align}
\end{lemma}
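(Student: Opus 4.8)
\textbf{Proof plan for Lemma~\ref{lem:sample-w}.}
The plan is to recognize $\hat w$ as an empirical mean of i.i.d.\ bounded random vectors whose expectation is exactly $w$, and then apply a standard sub-Gaussian concentration bound (Hoeffding) to the scalar $\langle \hat w, v\rangle$. Concretely, write $X_j := \|w\|_1 \cdot \operatorname{sign}(w_{i_j}) \cdot e_{i_j} \in \RR^d$ for $j \in [n]$, so that $\hat w = \frac{1}{n}\sum_{j=1}^n X_j$. First I would check unbiasedness: for each coordinate $\ell \in [d]$, $\EE[(X_j)_\ell] = \|w\|_1 \cdot P(\ell) \cdot \operatorname{sign}(w_\ell) = \|w\|_1 \cdot \frac{|w_\ell|}{\|w\|_1} \cdot \operatorname{sign}(w_\ell) = w_\ell$, so $\EE[X_j] = w$ and hence $\EE[\hat w] = w$ and $\EE\langle \hat w, v\rangle = \langle w, v\rangle$ for any fixed $v$.

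Next I would fix $v \in \RR^d$ with $\|v\|_\infty \le 1$ and consider the scalar random variables $Y_j := \langle X_j, v\rangle = \|w\|_1 \cdot \operatorname{sign}(w_{i_j}) \cdot v_{i_j}$. Since $|v_{i_j}| \le 1$, we have $|Y_j| \le \|w\|_1$ almost surely, so each $Y_j$ lies in the interval $[-\|w\|_1, \|w\|_1]$ of length $2\|w\|_1$. The $Y_j$ are i.i.d.\ with common mean $\langle w, v\rangle$, and $\langle \hat w, v\rangle = \frac{1}{n}\sum_{j=1}^n Y_j$. By Hoeffding's inequality,
\[
\Pr\left( \left| \frac{1}{n}\sum_{j=1}^n Y_j - \langle w, v\rangle \right| > t \right) \le 2\exp\left( -\frac{2n^2 t^2}{n \cdot (2\|w\|_1)^2} \right) = 2\exp\left( -\frac{n t^2}{2\|w\|_1^2} \right).
\]
Setting the right-hand side equal to $\delta$ (or rather, bounding it by $\delta$) and solving for $t$: choosing $t = \frac{2\|w\|_1 \sqrt{\log(1/\delta)}}{\sqrt n}$ gives $2\exp(-2\log(1/\delta)) = 2\delta^2 \le \delta$ for $\delta \le 1/2$ — so a small cosmetic check is needed, or one uses the one-sided Hoeffding bound $\exp(-nt^2/(2\|w\|_1^2))$ without the factor of $2$, which directly yields exactly $\delta$ with the stated threshold. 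I would use the latter (or absorb the factor of $2$), and then $\Pr(|\langle w - \hat w, v\rangle| > \frac{2\|w\|_1\sqrt{\log(1/\delta)}}{\sqrt n}) \le \delta$, as claimed.

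This lemma is entirely routine and presents no real obstacle; the only point requiring minor care is matching the exact constant in the tail bound (whether to use the one-sided or two-sided Hoeffding inequality and whether the "$2$" in front of $\exp$ matters), but since the statement has a generous constant $2$ in the deviation threshold, any standard version suffices. The substantive use of this lemma will be downstream in the proof of \cref{lemma:policy-disc}, where it is applied to sparsify the (bounded-$\ell_1$-norm) occupancy-induced vectors and thereby build the small discretized policy set $\Pidisc$.
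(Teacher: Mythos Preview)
Your proof is correct and takes exactly the same approach as the paper: identify $Y_j = \|w\|_1 \cdot \mathrm{sign}(w_{i_j}) \cdot v_{i_j}$ as bounded i.i.d.\ variables with mean $\langle w, v\rangle$, then apply Hoeffding's inequality. The paper's proof is in fact just the two-sentence version of what you wrote, without spelling out the constant-matching detail.
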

\begin{proof}
For each $j \in [n]$, the random variable $\| w \|_1 \cdot \lng \mathrm{sign}(w_{i_j}) \cdot  e_{i_j}, v \rng$ has absolute value at most $\| w \|_1$ and has expectation equal to $\lng w, v \rng$. The result then follows from Hoeffding's inequality. 
\end{proof}

The proof of \cref{lemma:policy-disc} proceeds by letting $\Pidisc$ be a set of \emph{linear policies}, as defined below.
\begin{defn}[Linear policy]
  Given a sequence of vectors $\bw = (\bw_1, \ldots, \bw_H) \in (\BR^d)^H$ and features $\hat\phi = (\hat \phi_1, \ldots, \hat \phi_H)$, with $\hat \phi_h : \MX \times \MA \ra \BR$, we define the (deterministic) \emph{linear policy} corresponding to $\bw$ and $\hat\phi$, denoted $\pilin(\bw;\hat\phi)$, as follows:
  \begin{align}
\pilin(\bw;\hat\phi)_h(x) := \argmax_{a \in \MA} \lng \hat\phi_h(x,a), \bw_h \rng \qquad \forall h \in [H], x \in \MX\nonumber.
  \end{align}
\end{defn}

We remark that a naive attempt to proving \cref{lemma:policy-disc} would let $\Pidisc$ be the set of all $\pilin(\bw; \phi)$, where $\bw$ ranges over an $\epdisc$-net of $(\BR^d)^H$. However, the resulting set $\Pidisc$ would have $\log |\Pidisc| \geq \Omega(d)$, which does not obtain the desired $\log(d)$ scaling of $\log |\Pidisc|$. To overcome this issue, we only include policies $\pilin(\bw; \hat \phi\sups{\hat M})$ where $\bw$ is sufficiently sparse. 
\begin{proof}[Proof of \cref{lemma:policy-disc}]
  Write $\delta := \epdisc/(4AH\Cnrm)$ and $s := \frac{64\Cnrm^2 H^2 \cdot \log 1/\delta}{\epdisc^2}$. Define \[\Pidisc := \{ \pilin((\bw_1, \ldots, \bw_H); \phi) \ : \ \| \bw_h \|_1 \leq 1 \text{ and } (\bw_h)_i \in (1/s)\mathbb{Z} \quad \forall h \in [H], i \in [d]\}.\] It's easy to see that $|\Pidisc| \leq (2d)^{sH}$, which implies the claimed bound on $\log|\Pidisc|$. It remains to show that for all $\pi \in \Pi$, $h \in [H]$, and $\theta \in \BR^d$, there is some $\pidisc \in \Pidisc$ so that %
  \begin{align}
    \label{eq:pidisc0}
    \lng \E\sups{M,\pidisc}[\phi_h(x_h, a_h)], \theta \rng \geq \lng \E\sups{M,\pi}[\phi_h(x_h, a_h)], \theta \rng - \epdisc \norm{\theta}_1.
  \end{align}

  Fix $\theta \in \RR^d$ and $h \in [H]$. To establish this claim, we first note that without loss of generality, $\pi \in \argmax_{\pi' \in \Pi} \lng \E\sups{M,\pi'}[\phi_h(x_h, a_h)], \theta \rng$ (as otherwise we may replace $\pi$ with such a maximizer). By \cref{lemma:linear-q}, for each $k \in [h]$, there is a vector $\bv^\pi_k \in \BR^d$ with $\| \bv_k^\pi \|_1 \leq \Cnrm \cdot \norm{\theta}_1$ so that $Q\sups{M,\pi,\bfr}_k(x, a) = \lng \phi_k(x,a), \bv_k^\pi\rng$ for all $x,a$, where $\bfr = (\bfr_1,\dots,\bfr_h)$ is the reward function defined by
  \[ \bfr_k(x,a) = \begin{cases} \lng \phi_k(x,a), \theta \rng & \text{ if } k = h \\ 0 & \text{ otherwise} \end{cases}.\]
  Without loss of generality, $\pi$ is deterministic, with $\pi_k(x) \in \argmax_{a \in \MA} Q^{M,\pi,\bfr}_k(x,a)$ for all $x\in\MX$ and $k \in [h]$. By \cref{lemma:perf-diff}, for any deterministic policy $\pi'$, we have
  \begin{align}
\lng \theta, \E\sups{M,\pi}[\phi_h(x_h, a_h)] - \E\sups{M,\pi'}[\phi_h(x_h, a_h)] \rng 
&= \sum_{k=1}^h \E\sups{M,\pi'} \left[ \lng \bv_k^\pi, \phi_k(x_k, \pi_k(x_k)) - \phi_k(x_k, \pi_k'(x_k)) \rng \right]\label{eq:pd-disc}.
  \end{align}
  We define $\pidisc = \pilin(\bw_1,\dots,\bw_H)$ where we will iteratively define $\bw_1,\dots,\bw_h$ (to establish \cref{eq:pidisc0}, it does not matter how $\bw_{h+1},\dots,\bw_H$ are picked). Specifically, fix $k \in [h]$ and suppose that we have already picked $\bw_1,\dots,\bw_{k-1}$. We will show that there must exist a choice of $\bw_k$ such that
  \begin{align}
 \E\sups{M,\pidisc} \left[ \lng \bv_k^\pi, \phi_k(x_k, \pi_k(x_k)) - \phi_k(x_k, (\pidisc)_k(x_k)) \rng \right] \leq \epdisc \norm{\theta}_1/H.\label{eq:wk-apx-error}
  \end{align}
  Once we have proven this, we can iteratively apply it for $k=1,\dots,h$ to construct $\pidisc$. By applying \cref{eq:pd-disc} with $\pi' = \pidisc$, $\pidisc$ will satisfy \cref{eq:pidisc0}. It remains to prove the existence of $\bw_k$ so that the induced policy $\pidisc$ satisfies \cref{eq:wk-apx-error}, which we do so by the probabilistic method.

  Suppose that we randomly pick $\bw_k = \hat{w}/\norm{\bv_k^\pi}_1$ where $\hat{w}$ is the random vector generated by the procedure in \cref{lem:sample-w} with $n := s$ and $w := \bv_k^\pi$. It follows from \cref{lem:sample-w} that $\norm{\bw_k}_1 \leq 1$ and $(\bw_k)_i \in (1/s)\mathbb{Z}$ for all $i \in [d]$, and moreover that for any $x_k \in \MX$,
  \begin{align}
\Pr \left( | \lng \bv_k^\pi - \norm{\bv_k^\pi}_1 \bw_k, \phi_k(x_k,a) \rng | \leq \frac{\epdisc}{4\Cnrm H} \norm{\bv_k^\pi}_1 \ \ \forall a \in \MA \right) \geq 1-\delta A\nonumber.
  \end{align}
  For the above inequality we have also used the choice of parameters $s, \delta$ and a union bound over $a \in \MA$.
  As a result, for any fixed $x_k \in \MX$, we have with probability at least $1-\delta A$ (over the choice of $\bw_k$) that%
  \begin{align}
    \lng \bv_k^\pi, \phi_k(x_k, (\pidisc)_k(x_k)) \rng &\geq  \norm{\bv_k^\pi}_1\lng \bw_k, \phi_k(x_k, (\pidisc)_k(x_k)) \rng - \epdisc\norm{\bv_k^\pi}_1/(4\Cnrm H) \nonumber\\
    &\geq  \norm{\bv_k^\pi}_1\lng \bw_k, \phi_k(x_k, \pi_k(x_k)) \rng - \epdisc\norm{\bv_k^\pi}_1/(4\Cnrm H)\nonumber\\
    &\geq  \lng \bv_k^\pi, \phi_k(x_k, \pi_k(x_k)) \rng - \epdisc\norm{\bv_k^\pi}_1/(2\Cnrm H)\label{eq:whp-apx-error}
  \end{align}
  where the second inequality uses the definition that $(\pidisc)_k(x) \in \argmax_{a \in \MA} \langle \bw_k, \phi_k(x_k,a)\rangle$. Also, for any $x_k \in \MX$ we have with probability $1$ that
  \begin{equation} \langle \bv_k^\pi, \phi_k(x_k,(\pidisc)_k(x_k))\rangle \geq \langle \bv_k^\pi, \phi_k(x_k,\pi_k(x_k))\rangle - 2\norm{\bv_k^\pi}_1. \label{eq:as-apx-error}
  \end{equation}
  Taking expectation over the randomness in choosing $\bw_k$, it therefore holds that for any fixed $x_k \in \MX$,
  \[ \EE_{\bw_k}[\langle \bv_k^\pi, \phi_k(x_k,\pi_k(x_k)) - \phi_k(x_k, (\pidisc)_k(x_k))\rangle] \leq \frac{\epdisc}{2\Cnrm H}\norm{\bv_k^\pi}_1 + 2\delta A \norm{\bv_k^\pi}_1 \leq \frac{\epdisc}{\Cnrm H}\norm{\bv_k^\pi}_1\]
  where the last inequality is by choice of $\delta$. Now note that the prior choice of $\bw_1,\dots,\bw_{k-1}$ fully determines $(\pidisc)_{1:k-1}$ and therefore the distribution of $x_k \sim \pidisc$. Thus, we get
  \[ \EE\sups{M,\pidisc}\left[\EE_{\bw_k}[\langle \bv_k^\pi, \phi_k(x_k,\pi_k(x_k)) - \phi_k(x_k, (\pidisc)_k(x_k))\rangle] \right]\leq \frac{\epdisc}{\Cnrm H}\norm{\bv_k^\pi}_1.\]
  and thus, since $\bw_k$ is independent of $x_k$,
  \begin{align}
\E_{\bw_k} \left[ \E\sups{M,\pidisc} \left[ \lng \bv_k^\pi, \phi_k(x_k, \pi_k(x_k)) - \phi_k(x_k, (\pidisc)_k(x_k)) \rng \right]\right] &\leq  \frac{\epdisc}{\Cnrm H}\norm{\bv_k^\pi}_1\nonumber.
  \end{align}
  In particular, by the probabilistic method, there exists a choice of $\bw_k$ so that the induced policy $\pidisc$ satisfies 
  \begin{equation*}
 \E\sups{M,\pidisc} \left[ \lng \bv_k^\pi, \phi_k(x_k, \pi_k(x_k)) - \phi_k(x_k, (\pidisc)_k(x_k)) \rng \right] \leq  \frac{\epdisc}{\Cnrm H}\norm{\bv_k^\pi}_1.
  \end{equation*}
  Since $\norm{\bv_k^\pi}_1 \leq \Cnrm \cdot \norm{\theta}_1$, we get that $\bw_k$ satisfies \cref{eq:wk-apx-error}, which completes the proof.
\end{proof}

\section{Prior approaches for constructing policy covers}
\label{sec:prior-failures}
In this section, we discuss why existing computationally efficient approaches from prior work do not suffice to efficiently construct a policy cover in sparse linear MDPs (let alone the more general $\ell_1$-bounded linear MDPs), without incurring $\poly(d)$ sample complexity. In particular, we consider the following question: if we are given $\alpha$-approximate policy covers $\Psi_1, \ldots, \Psi_h$, how can we construct an $\alpha$-approximate policy cover $\Psi_{h+1}$? 
\paragraph{Exploration via basis vectors.} A naive approach is to iterate over standard basis vectors $\theta = e_i$, and, at each step $i$, to use the covers $\Psi_1, \ldots, \Psi_h$ together with \PSDP to compute a policy $\pi\^{e_i}$ that approximately maximizes $\E\sups{M, \pi\^{e_i}}\lng \phi_h(x_h, a_h), e_i \rng$. Then we add the policy $\pi\^{e_i}$  to $\Psi_{h+1}$ if $\E\sups{M, \pi\^{e_i}} \lng \phi_h(x_h, a_h), e_i \rng$ is significantly greater than $\max_{\pi \in \Psi_{h+1}} \E\sups{M, \pi} \lng \phi_h(x_h,a_h), e_i \rng$. Unfortunately, this approach runs into issues regarding cancellations in the expected feature vectors: even if $\pi^{(e_1)}$ maximizes $\E\sups{M,\pi}\langle \phi_{h}(x_{h},a_{h}),e_1\rangle$ and $\pi^{(2)}$ maximizes $\E\sups{M,\pi}\langle \phi_{h}(x_{h},a_{h}),e_2\rangle$, due to cancellations there may be a policy $\pi^\st$ so that $\E\sups{M,\pi^\st}\langle \phi_{h}(x_{h},a_{h}),e_1+e_2\rangle$ is much larger than both $\E\sups{M,\pi^{(e_1)}}\langle \phi_{h}(x_{h},a_{h}),e_1+e_2\rangle$ and $\E\sups{M,\pi^{(e_2)}}\langle \phi_{h}(x_{h},a_{h}),e_1+e_2\rangle$.

\paragraph{Barycentric spanners.} To deal with the issue of cancellation, one can try to choose the directions $\theta$ adaptively by constructing an (approximate) barycentric spanner \cite{awerbuch2008online} of the $d$-dimensional polytope $\{ \E\sups{M, \pi}[\phi_h(x_h , a_h)] \ : \ \pi \in \Pi \}$. While this approach has been successful for many reinforcement learning and bandit problems \cite{awerbuch2008online,dani2007price, kakade2007playing, lattimore2017end, foster2021statistical, golowich2022learning,mhammedi2023efficient} (in particular, \cite{mhammedi2023efficient} gave a new algorithm for linear MDPs based on a generalization of the barycentric spanner algorithm from \cite{awerbuch2008online}), it fails in our setting since a barycentric spanner is typically of size $|\Psi_{h+1}| = d$, and the parameter $\alpha$ in \cref{eq:approx-pc} scales inversely proportional to $|\Psi_{h+1}|$. Thus, the sample complexity of \PSDP in future steps would scale polynomially in $d$.

\paragraph{Representation learning.} In the the case that the MDP $M$ is $k$-sparse (\cref{def:sparse-lmdp}, which is a special case of $\ell_1$-bounded linear MDPs), one could attempt to circumvent the issue in the previous bullet point by applying one of several approaches from prior work on representation learning in RL \cite{modi2021model, zhang2022efficient, mhammedi2023efficient} to learn a (size-$k$) set $S \subset [d]$ of features that well-approximate the transitions. One could then construct a barycentric spanner of the feature vectors restricted to coordinates in $S$, which is guaranteed to have size at most $|S| \leq k$. Unfortunately, this approach runs into computational issues, as discussed in \cref{sec:related}: in order to implement the oracles in such existing algorithms, there is no clear way to avoid iterating over all subsets $S$ with $|S| \leq k$, which takes time $d^k$. Moreover, there is some evidence that such brute-force approaches cannot be improved in general \cite{gupte2020fine, zhang2014lower}. Learning a larger set $S\subset [d]$ of $k/\epsilon$ features for some accuracy parameter $\epsilon>0$ could plausibly avoid these intractability results. However, doing so would still seem to require implementing a max-min optimization oracle that finds some discriminator function (roughly, from the class of value functions) that maximizes the representation error of the current features. It is unclear how to do this in a computationally efficient manner.

\paragraph{Existence of a small policy cover.} Finally, we remark that in the more general $\ell_1$-bounded setting (\cref{defn:l1lmdp}) that we consider, none of the above approaches could even plausibly show \emph{existence} of a policy cover of size $\poly(k)$, when the $\ell_1$ norm bound is $k$. This results from the fact that the above approaches are linear algebraic in nature, relying on the fact that in $k$-sparse linear MDPs, it is only necessary to explore a $k$-dimensional subspace of features. Such a fact fails to hold in the $\ell_1$-bounded setting.

\section{Convex optimization details}\label{sec:opt-details}

The algorithms \ESC (\cref{alg:esc}) and \ESCt (\cref{alg:esc-trunc}) both require solving a (feasibility) convex program. For simplicity, we assumed in the main body of the paper that we could efficiently compute solutions that exactly satisfy the constraints of the respective programs whenever they were feasible. In this section we remove that assumption, by showing two facts:

\begin{enumerate}
    \item First, we can solve a relaxation of each program in polynomial time via the ellipsoid algorithm. 
    \item Second, the guarantees of \ESC and \ESCt still hold (up to constant factors) when they compute solutions to the relaxed programs rather than solutions to the original programs. 
\end{enumerate}
We focus on \cref{alg:esc} here; the details for \cref{alg:esc-trunc} are essentially the same.

\paragraph{Solving a relaxed program via the ellipsoid algorithm.} Instead of solving \cref{eq:program} in \cref{alg:esc}, we use the ellipsoid algorithm to solve the following program \cref{eq:program-relaxed}, where we take $\eprelax = \min( \epneg,\epcvx,\epneg\epapx/(2m), \epapx/(2m\Cnrm))$. %
        
        \begin{subequations}
        \label{eq:program-relaxed}
        \begin{align}
        \sum_{j=1}^m \norm{\hat\mu_{h+1}^j}_1 &\leq \Cnrm + \eprelax & \label{eq:hatmu-cnrm-constraint-relaxed}\\
        \langle \phi_h(x_h^i,a_h^i), \hat\mu_{h+1}^j\rangle &\geq -\eprelax &\forall i \in [n], j \in [m] \label{eq:hatmu-nneg-constraint-relaxed} \\
        \frac{1}{n} \sum_{i=1}^n \left(\langle \phi_h(x_h^i,a_h^i), \hat \bw_\ell\rangle - \sum_{j=1}^m \left\langle \phi_h(x_h^i,a_h^i), \hat\mu_{h+1}^j \right \rangle \phiavg_{h+1}(\tilde x_{h+1}^j)_\ell \right)^2 &\leq \epcvx^2 + \eprelax^2 &\forall \ell \in [d] \label{eq:what-wprime-constraint-relaxed}
        \end{align}
        \end{subequations}

Since each constraint of the program \cref{eq:program-relaxed} is either an $\ell_1$-norm constraint, or linear, or of the form $\norm{Ax-b}_2^2 \leq c$, it's clear that we can implement a separating hyperplane oracle in time $\poly(d,m,n)$. Additionally, by \cref{eq:hatmu-cnrm-constraint-relaxed} and the fact that $\eprelax \leq 1 \leq \Cnrm$, the feasible region of \cref{eq:program-relaxed} is contained in an $\ell_2$-ball of radius $2\Cnrm$ (centered at the origin) within $\RR^{dm}$. Finally, whenever the original program \cref{eq:program} is feasible, say realized by $(\hat \mu_{h+1}^j)_{j=1}^m \in (\BR^d)^m$, we claim that the feasible region of the relaxed program contains an $\ell_2$-ball of radius $\eprelax^2 / (5m\Cnrm \sqrt{d}) \leq \eprelax / (m\sqrt{d})$. Indeed, consider any $(\hat \nu_{h+1}^j)_{j=1}^m \in (\BR^d)^m$ which has $\ell_2$-distance (in $\BR^{dm}$) at most $\eprelax^2/(5m\Cnrm\sqrt{d})$ from $(\hat \mu_{h+1}^j)_{j=1}^m$. Then $(\hat \nu_{h+1}^j)_{j=1}^m$ satisfies the relaxed program \cref{eq:program-relaxed}:
\begin{itemize}
\item To see that \cref{eq:hatmu-cnrm-constraint-relaxed} holds, we compute
  \begin{align}
\sum_{j=1}^m \| \hat \nu_{h+1}^j \|_1 \leq \sum_{j=1}^m \| \hat \mu_{h+1}^j \|_1 + \sqrt{d} \sum_{j=1}^m \| \hat \nu_{h+1}^j - \hat \mu_{h+1}^j \|_2 \leq \Cnrm + \eprelax\nonumber.
  \end{align}
\item To see that \cref{eq:hatmu-nneg-constraint-relaxed} holds, note that for all $i \in [n], j \in [m]$, we have \[\lng \phi_h(x_h^i, a_h^i), \hat \nu_{h+1}^j \rng \geq \lng \phi_h(x_h^i, a_h^i), \hat \mu_{h+1}^j \rng - \| \phi_h(x_h^i, a_h^i) \|_2 \cdot \| \hat \nu_{h+1}^j - \hat \mu_{h+1}^j \|_2 \geq -\eprelax.\]
\item To see that \cref{eq:what-wprime-constraint-relaxed} holds, note that for any $\ell \in [d]$ and $i \in [n]$,
\begin{align*}
&\left| \sum_{j=1}^m \left\lng \phi_h(x_h^i, a_h^i), \hat \mu_{h+1}^j \right\rng \phiavg_{h+1}(\tilde x_{h+1}^j)_\ell - \sum_{j=1}^m \left\lng \phi_h(x_h^i, a_h^i), \hat \nu_{h+1}^j \right\rng \phiavg_{h+1}(\tilde x_{h+1}^j)_\ell \right| \\ 
&\leq m\sqrt{d} \max_{j \in [m]} \| \hat \mu_{h+1}^j - \hat \nu_{h+1}^j \|_2
\end{align*}
and also 
\[\left|\langle \phi_h(x_h^i,a_h^i), \hat \bw_\ell\rangle - \sum_{j=1}^m \left\langle \phi_h(x_h^i,a_h^i), \hat\mu_{h+1}^j \right \rangle \phiavg_{h+1}(\tilde x_{h+1}^j)_\ell\right| \leq \norm{\hat{\bw}_\ell}_1 + \sum_{j=1}^m \norm{\hat\mu_{h+1}^j}_1 \leq 2\Cnrm\]
and similarly
\[\left|\langle \phi_h(x_h^i,a_h^i), \hat \bw_\ell\rangle - \sum_{j=1}^m \left\langle \phi_h(x_h^i,a_h^i), \hat\nu_{h+1}^j \right \rangle \phiavg_{h+1}(\tilde x_{h+1}^j)_\ell\right| \leq 3\Cnrm.\]
Thus, since $(\hat\mu_{h+1}^j)_{j=1}^m)$ satisfies \cref{eq:what-wprime-constraint}, we see that
  \begin{align}
    & \frac{1}{n} \sum_{i=1}^n \left(\langle \phi_h(x_h^i,a_h^i), \hat \bw_\ell\rangle - \sum_{j=1}^m \left\langle \phi_h(x_h^i,a_h^i), \hat\nu_{h+1}^j \right \rangle \phiavg_{h+1}(\tilde x_{h+1}^j)_\ell \right)^2\nonumber\\
    &\leq   \epcvx^2 + m\sqrt{d} \max_{j \in [m]} \| \hat \mu_{h+1}^j - \hat \nu_{h+1}^j \|_2 \cdot 5\Cnrm \leq \epcvx^2 + \eprelax^2\nonumber,
  \end{align}
  ensuring that $(\hat\nu_{h+1}^j)_{j=1}^m$ satisfies \cref{eq:what-wprime-constraint-relaxed}.
\end{itemize}
Since $\eprelax^2 / (5m\Cnrm \sqrt{d}) \geq \poly(m^{-1}, \Cnrm^{-1}, \epapx, \epneg, \epcvx)$, it follows that the ellipsoid algorithm finds a solution to \cref{eq:program-relaxed} in time $\poly(d,m,n, \log(dm\Cnrm / (\epapx \epneg \epcvx)))$ \cite[Theorem 2.4]{bubeck2014convex}. 

\paragraph{Correctness guarantees.} It remains to argue that a solution to this relaxed program still suffices to prove \cref{thm:muhat-coreset}, the main guarantee of \ESC, up to constant factors. Indeed, under the conditions of \cref{thm:muhat-coreset}, let $(\hat\mu_{h+1}^j)_{j=1}^m$ be a solution to \cref{eq:program-relaxed}. For every $j \in [m]$ define \[\tilde\mu_{h+1}^j = \begin{cases} 0 & \text{ if } \norm{\hat\mu_{h+1}^j}_1 < \epapx/m \\ \frac{\Cnrm}{\Cnrm + \eprelax} \cdot \hat\mu_{h+1}^j & \text{ otherwise.}\end{cases}\] 
We claim that $(\tilde \mu_{h+1}^j)_{j=1}^m$ is an $(3\epapx, 3\epneg, \Cnrm)$-\coreset (\cref{def:not-core-set}):
\begin{itemize}

\item \cref{it:emulator-C-bound} is immediate from \cref{eq:hatmu-cnrm-constraint-relaxed}. 

\item Following the original proof of \cref{thm:muhat-coreset}, it's easy to check that the approximate nonnegativity condition (\cref{it:emulator-epneg-bound}) holds for each $j \in [m]$ with error at most $\epneg\norm{\tilde\mu^j_{h+1}}_1 + \eprelax$ (in particular, this follows from \cref{eq:apx-nneg-final}). If $\tilde\mu_{h+1}^j \neq 0$, then $\norm{\tilde\mu_{h+1}^j}_1 \geq  \frac 12 \norm{\hat\mu_{h+1}^j}_1 \geq \epapx/(2m)$, so this error is at most $3\epneg\norm{\tilde\mu^j_{h+1}}_1$ by choice of $\eprelax \leq \epneg\epapx/m$. Of course, if $\tilde\mu_{h+1}^j=0$, then the approximate nonnegativity condition for $j$ is satisfied with error $0$.

\item Finally, since \cref{lem:muhat-approx} still holds with $\epcvx^2$ replaced by $\epcvx^2 + \eprelax^2 \leq 2\epcvx^2$, we get that the vectors $(\hat\mu_{h+1}^j)_{j=1}^m$ satisfy \cref{it:emulator-epapx-bound} with bound $\epapx\sqrt{2}$. Since $\norm{\tilde\mu_{h+1}^j - \hat\mu_{h+1}^j}_1 \leq \max\{\epapx/m, \eprelax \cdot \| \hat \mu_{h+1}^j \|_1 \} \leq \epapx/m$ for all $j \in [m]$ (where the last inequality uses that $\eprelax\leq \epapx/(2m\Cnrm)$ and $\norm{\hat\mu_{h+1}^j}_1 \leq \Cnrm+\eprelax\leq2\Cnrm$), it follows that the vectors $(\tilde\mu_{h+1}^j)_{j=1}^m$ satisfy \cref{it:emulator-epapx-bound} with bound $\epapx(1+\sqrt{2}) \leq 3\epapx$.
\end{itemize}
The claim follows.

Finally, we note that by decreasing the parameters $\epapx, \epneg$ that are passed to \cref{thm:muhat-coreset} by a factor of 3, we can ensure that the solution $(\hat \mu_{h+1}^j)_{j=1}^m$ to \cref{eq:program-relaxed} is in fact a $(\epapx, \epneg, \Cnrm)$-\coreset (see \cref{rmk:convex-program-efficient}).

\section{Reducing sparse linear regression to policy learning}\label{app:slr-to-rl}

In this appendix, we show that the boundedness assumption on the reward vector in our definition of a sparse linear MDP (\cref{def:sparse-lmdp}) is necessary for statistically and computationally efficient learning, barring breakthroughs in sparse linear regression. In particular, we show that even for the special case $H=1$ (i.e. sparse contextual bandits), polynomial dependence on the bound is necessary. The reduction is likely not novel, but we include it for completeness.

Suppose that there is an algorithm $\MH$ that, for any $\epsilon>0$, for any $d$-dimensional $k$-sparse linear MDP (\cref{defn:l1lmdp}) with $A$ actions and horizon $H=1$, where the (sparse) reward vector satisfies $\norm{\theta_1}_1 \leq B$, learns a policy $\pi$ with suboptimality at most $\epsilon$ using $N(d,k,A,B,\epsilon)$ samples and $T(d,k,A,B,\epsilon)$ runtime, with high probability (omitting the dependence on failure probability for simplicity). We do not make any assumption about the representation of the policy, other than that it be efficiently queryable: for any given state $x$, one can compute $\pi(x)$ (or more generally, for a stochastic policy $\pi$, draw a sample from $\pi(x)$) in time $T(d,k,A,B,\epsilon)$.

We consider an instance of noiseless sparse linear regression of the following form. Let $\MD \in \Delta([-1,1]^d)$ be a known covariate distribution (i.e., so that we can draw an arbitrary number of unlabelled samples from it). Fix some unknown $k$-sparse vector $w^\st \in \RR^d$ with $\norm{w^\st}_1 \leq B$ and $\langle x, w^\st\rangle \in [0,1]$ for all $x \in \supp(\MD)$. We are given $m$ independent labelled samples $(x^i,y^i)_{i=1}^m$ where $x^i \sim \MD$ and $y^i = \langle w^\st, x^i\rangle \in [0,1]$. Our goal is to find $\hat w \in \RR^d$ approximately minimizing the out-of-sample prediction error $\E_{x \sim \MD} \langle w^\st - \hat w, x\rangle^2$.

\paragraph{Preliminary notation.} Fix $\epdisc>0$. Define a horizon-$1$ linear MDP $M^{w^\st} = (1, \MX,\MA,\BP_1,\phi,\theta_1^{w^\st})$ as follows. Let the set of states be $\MX := \supp(\MD) \subseteq [-1,1]^d$, and let the set of actions be $\MA := \{0, \epdisc, 2\epdisc,\dots,1\}$. Define the feature mapping $\phi: \MX\times\MA \to \RR^{\binom{d}{2} + d + 1}$ as follows:
\[(x,a) \mapsto \phi(x,a) := (1-a^2, ax_1, \dots, ax_d, x_1^2, x_1x_2, \dots, x_d^2).\]
Let the initial distribution be $\BP_1 := \MD$. Finally, let the reward vector be %
\[\theta_1^{w^\st} := (1, 2w^\st_1,\dots,2w^\st_d,-(w^\st_1)^2, -2w^\st_1 w^\st_2, \dots, -(w^\st_d)^2)\]
so that $\langle \phi(x,a), \theta_1^{w^\st}\rangle = 1 - (\langle w^\st, x\rangle - a)^2$ for any $(x,a) \in \MX\times\MA$. Note that $M$ is a $\binom{d}{2}+d+1$-dimensional $\binom{k}{2}+k+1$-sparse linear MDP with $1/\epdisc$ actions and horizon $1$, and we have $\norm{\theta_1^{w^\st}}_1 \leq (B+1)^2$ by definition of $\theta_1^{w^\st}$ and the fact that $\norm{w^\st}_1 \leq B$. Also, there is a policy with expected reward at least $1 - \epdisc^2$: namely, the map $x \mapsto \epdisc \lfloor \langle w^\st,x\rangle / \epdisc\rfloor$.

\paragraph{Reduction.} We can now define our sparse linear regression algorithm. We invoke the algorithm $\MH$ with error parameter $\epdisc^2$, and simulate interaction with $M^{w^\st}$ using the samples $(x^i,y^i)_{i=1}^m$. In particular, at the beginning of an episode we give $\MH$ a new covariate $x^i \sim \MD$. Then $\MH$ chooses an action $a$, and we return $1 - (y^i - a)^2$. Note that this reward is equal to $\langle \phi(x^i, a), \theta_1^{w^\st}\rangle$, as desired. Also, it's clear that given $x$ and $a$ we can compute $\phi(x,a)$. Thus, so long as $N(\binom{d}{2}+d+1,\binom{k}{2}+k+1,\epdisc^{-1},(B+1)^2,\epdisc^2) \leq m$, we have that $\MH$ produces a policy $\hat\pi$ with suboptimality at most $\epdisc^2$, in time at most $T(\binom{d}{2}+d+1,\binom{k}{2}+k+1,\epdisc^{-1},(B+1)^2,\epdisc^2)$. As a result, $\hat\pi$ satisfies the bound
\[\E_{x\sim\MD} \E_{a \sim \hat\pi(x)} (\langle w^\st, x\rangle - a)^2 \leq 2\epdisc^2.\]

We now draw $n := \poly(d,\epdisc^{-1},B)$ unlabeled samples $(\tilde x^j)_{j=1}^n$ independently from $\MD$, and for each $j \in [n]$ draw $a^j \sim \hat\pi(\tilde x^j)$. (Note that we use here that $\MD$ is known and efficiently sampleable, which allows us to draw many samples from it without incurring a larger sample complexity cost.) Finally, we compute 
\[ \hat{w} := \argmin_{w \in \RR^d,\ \| w \|_2 \leq B} \frac{1}{n}\sum_{j=1}^n (\langle w, x^j\rangle - a^j)^2.\] 
Standard concentration arguments give that $\E_{x\sim \MD} \langle w^\st - \hat w, x\rangle^2 \leq 3\epdisc^2$ with high probability, as desired. Setting $\epsilon := 3\epdisc^2$, we get that this algorithm computes a regressor $\hat w$ with prediction error at most $\epsilon$, with high probability. Moreover, it uses $N(O(d^2),O(k^2),O(\epsilon^{-1/2}), O(B^2+1), O(\epsilon))$ samples and $\poly(d,\epsilon^{-1/2},B) \cdot T(O(d^2),O(k^2),\epsilon^{-1/2},B^2+1,\epsilon^{-1})$ time. 

\paragraph{Consequences.} Our algorithm for learning sparse linear MDPs with the above parameter bounds has sample complexity $\poly(k,A,H,B,\epsilon^{-1},\log d)$ and time complexity $\poly(d,A,H,B,\epsilon^{-1})$. Ignoring computational efficiency, it's possible to achieve sample complexity $\poly(k,A,H,\epsilon^{-1},\log dB)$, i.e. with a much weaker dependence on the norm bound $B$ \cite{jiang2017contextual}. However, in the above sparse linear regression setting, there is no known computationally efficient algorithm that achieves better than polynomial dependence on $B$ in the sample complexity (even when the covariate distribution $\mathcal{D}$ is known, and the sample complexity only measures labelled samples).\footnote{See e.g. the discussion after Theorem 7.20 in \cite{wainwright2019high} on rates achieved by Lasso under various conditions. The ``fast rate'' sample complexity has no dependence on $B$ (and better dependence on the desired prediction error) but requires the covariate distribution $\MD$ to satisfy a restricted eigenvalue condition. The ``slow rate'' sample complexity holds for any sub-Gaussian covariate distribution, but has polynomial dependence on $B$. No computationally efficient algorithm is known to achieve the fast rate without a restricted eigenvalue condition.

To be clear, the majority of the literature on sparse linear regression studies the more general setting where $\MD$ is unknown. However, even when $\MD$ is partially or completely known, significant algorithmic challenges remain \cite{kelner2022power, kelner2023feature}, and removing the restricted eigenvalue condition seems out-of-reach of current techniques. On a similar note, our assumptions above that $\MD$ has bounded support, and that $\langle x,w^\st\rangle \in [0,1]$ for all $x \in \supp(\MD)$, are somewhat non-standard. However, since they do not preclude the types of constructions that seem hard for current algorithms (e.g. where the covariates contain a sparse approximate linear dependence, and $w^\st$ is a large vector in the direction of this dependence), it's unclear whether these assumptions make the problem any easier. In fact, we would not be surprised if the general setting where $\MD$ is sub-Gaussian (but still known) can be reduced to our setting. 

See also \cite{zhang2014lower, gupte2020fine}, which prove lower bounds for the more challenging problem of outputting a {sparse} estimator, for related discussion.}
The above reduction implies that this same dependence may be necessary in the reinforcement learning setting as well. 

\begin{remark}
The above reduction only pertains to the norm of the reward vector, not the norm of the ``cumulative transitions'' $\sum_{x\in\MX} \mu_h(x)$. Of course, when $H=1$ there are no transitions. But by considering $H=2$, one can show that the norm bound on the cumulative transitions is necessary as well. In particular, rather than simulating interaction with the horizon-$1$ linear MDP $M^{w^\st}$, we simulate interaction with a horizon-$2$ linear MDP where at step $2$ there are two states $r_1$ and $r_0$, which have deterministic rewards $1$ and $0$ respectively. Essentially, $\theta_1^{w^\st}$ now defines $\mu_2(r_1)$, and we define $\mu_2(r_0)$ so that all probabilities sum to $1$. This can be done by augmenting $\phi(x,a)$ with an additional feature $a^2$. We omit the details.
\end{remark}

\section{Details for decision tree block MDPs}\label{sec:dec-tree-linear}

In this appendix, we show that decision tree block MDPs, as introduced in \cref{sec:bmdp}, are sparse linear MDPs with known feature mapping but unknown sparsity pattern. \cref{cor:dec-tree-mdp-informal} (stated formally below as \cref{cor:dec-tree-mdp}) then follows immediately from \cref{thm:main}.

\subsection{Preliminaries on block MDPs.}
We first formally define block MDPs \cite{du2019provably}.%
\begin{definition}[Block MDP (BMDP)]\label{def:bmdp}
  Let $M$ be an MDP with state space $\MX$, action space $\MA$, horizon $H$, initial distribution $\BP_1 \in \Delta(\MX)$, transition distributions $\BP_h : \MX \times \MA \ra \Delta(\MA)$, and rewards $\bfr_h : \MX \times \MA \ra [0,1]$. For a set $\MS$ and a mapping $\rho^\st : \MX \ra \MS$, we say that that $M$ is a \emph{block MDP (BMDP)} with \emph{latent state space} $\MS$ and \emph{decoding function} $\rho^\st$ if the following holds: for each $h \in [H-1]$, there are \emph{latent transition distributions} $\tilde \BP_h : \MS \times \MA \ra \Delta(\MS)$ and \emph{emission distributions} $\tilde \BO_{h+1} : \MS \ra \Delta(\MX)$, and for each $h \in [H]$ there is a \emph{latent reward function} $\tilde \bfr_h : \MS \times \MA \ra [0,1]$, so that for all $x,x' \in \MX$ and $a \in \MA$,
  \begin{itemize}
    \item $\BP_h(x'|x,a) = \tilde{\BP}_h(\rho^\st(x')|\rho^\st(x),a)\tilde{\BO}_{h+1}(x'|\rho^\st(x'))$, and
    \item $\bfr_h(x,a) = \tilde{\bfr}_h(\rho^\st(x),a)$.
    \end{itemize}
    Moreover, it is required that for all distinct $s,s' \in \MS$, the supports of $\tilde \BO_{h+1}(\cdot | s)$ and $\tilde \BO_{h+1}(\cdot | s')$ are disjoint. 
\end{definition}
The decoding function $\rho^\st$, latent transitions $\tilde \BP_h$, emissions $\tilde \BO_{h+1}$, and latent reward functions $\tilde \bfr_h$ are \emph{unknown} to the learning algorithm. However, the following \emph{realizability} assumption is made:
\begin{assumption}[Realizability for BMDPs]
  \label{asm:bmdp-rlz}
  For some \emph{known} class $\Phi \subset (\MX \ra \MS)$, we have that $\rho^\st \in \Phi$.
\end{assumption}
All prior work in the literature assumes some type of oracle to access the class $\Phi$: for instance, \cite{modi2021model, zhang2022efficient} assume access to an oracle which can solve max-min problems where the maximization is over $\phi \in \Phi$ as well as a discriminator class defined in terms of $\Phi$, and the minimization is over linear weight vectors. %
The oracle in \cite{mhammedi2023representation} only requires minimizing over $\phi \in \Phi$, but (as discussed further in \cref{sec:related}) is required to be proper, i.e., to output an element of $\Phi$, and is not known to be efficiently implementable for essentially any interesting concrete classes. 

\subsection{Learning decision tree BMDPs}

\paragraph{Decision tree BMDPs.}
We next introduce a concrete set of decoding functions $\Phi$ for which we can establish end-to-end computationally efficient learning algorithms for the corresponding family of BMDPs. 
Fix $n,s \in \NN$, and let $\Phi_{n,s}$ be the class of functions $\phi: \{0,1\}^n \to [s]$ that can be expressed as depth-$\log(s)$ decision trees, where, for example, depth-$1$ decision trees are functions of the form 
\[ \rho(x) = \begin{cases} s_0 & \text{ if } x_j = 0 \\ s_1 & \text{ if } x_j = 1 \end{cases}\] 
for $s_0,s_1 \in [s]$ and $j \in [n]$, and more generally, a depth-$k$ decision tree is a function of the form
\[ \rho(x) = \begin{cases} \psi^0(x) & \text{ if } x_j = 0 \\ \psi^1(x) & \text{ if } x_j = 1 \end{cases} \] 
for depth-$(k-1)$ decision trees $\psi^0,\psi^1$ and $j \in [n]$.

\begin{definition}[Decision tree BMDP]\label{defn:dec-tree-mdp}
  For $n,s \in \NN$, let $M$ be an MDP with state space $\MX = \{0,1\}^n$, action space $\MA$, and horizon $H \in \NN$. We say that $M$ is an \emph{$(n,s)$-decision tree BMDP} if it is a BMDP (\cref{def:bmdp}) with latent state space $\MS = [s]$ and some decoding function $\rho^\st \in \Phi_{n,s}$. 
\end{definition}

It is straightforward to see that $\log |\Phi_{n,s}| \leq O(s \log n)$, and therefore numerous existing results in the literature \cite{jiang2017contextual,jin2021bellman, zhang2022horizon,mhammedi2023efficient} imply that there is an algorithm that requires $\poly(A, H, s, \log n, 1/\ep)$ samples and which outputs an $\ep$-optimal policy. However, the best known computational cost of any such algorithm was at least $n^{O(s)}$ (see e.g. the discussion in \cite[Section~7]{modi2021model}, which likely also applies to \cite{zhang2022efficient}). In constrast, as a corollary of our results, we get an algorithm whose computational cost scales only as $n^{O(\log s)}$ (without sacrificing sample-efficiency). 

\paragraph{Clause feature mapping.} To apply \cref{thm:main}, we need to show that decision tree BMDPs are a special case of sparse linear MDPs. To do so, let $\mathfrak{C}_{n,s}$ be the set of Boolean conjunctions of length $\log(s)$ in the variables $x_1,\dots,x_n$ (e.g. $x_1 \land \lnot x_3 \land x_4$ is a conjunction of length three). Note that $|\mathfrak{C}_{n,s}| \leq \binom{2n}{\log s}$. We define a feature mapping $\phidt{n,s}: \{0,1\}^n \times \MA \to \RR^{|\mathfrak{C}_{n,s}| \cdot |\MA|}$. Each feature vector is indexed by clauses $\MC \in \mf{C}_{n,a}$  and actions $a \in \MA$. %
For a clause $\MC \in \mathfrak{C}_{n,s}$ and action $a \in \MA$, we define the component of $\phidt{n,s}$ corresponding to $(\MC, a)$, namely $\phidt{n,s}_{(\MC,a)}: \{0,1\}^n \times \MA \to \RR$, by
\begin{align}\label{eq:phi-dt} \phidt{n,s}_{(\MC,a)}(x',a') := \mathbbm{1}[(\MC(x') = \texttt{True}) \land (a = a')].\end{align}

Note that for any given $x'\in\MX,a'\in\MA$, one can evaluate $\phidt{n,s}(x',a')$ in time $n^{O(\log s)} \cdot |\MA|$. We are now ready to formally state our main result regarding efficient learnability of decision tree BMDPs:
\begin{corollary}[Formal version of \cref{cor:dec-tree-mdp-informal}]
  \label{cor:dec-tree-mdp}
Let $n,s,A,H \in \NN$ and $\epsilon,\delta>0$. Let $M$ be a block MDP on $\{0,1\}^n$ with (unknown) decoding function $\rho^\st \in \Phi_{n,s}$. Then with probability at least $1-\delta$, $\OPT(\epsilon,\delta)$ with feature mapping $\phi^{\mathsf{DT}}$ outputs a policy with suboptimality at most $\epsilon$. Moreover, the sample complexity of the algorithm is $\poly(s, A, H, \epsilon^{-1}, \log(n/\delta))$, and the time complexity is $\poly(n^{\log s}, A, H, \epsilon^{-1},\log(1/\delta))$.
\end{corollary}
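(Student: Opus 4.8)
The plan is to show that any $(n,s)$-decision tree BMDP (\cref{defn:dec-tree-mdp}) is an $\ell_1$-bounded linear MDP (\cref{defn:l1lmdp}) — in fact a sparse linear MDP (\cref{def:sparse-lmdp}) — with respect to the \emph{known} feature mapping $\phidt{n,s}$ defined in \cref{eq:phi-dt}, with dimension $d = |\mathfrak{C}_{n,s}| \cdot A \le \binom{2n}{\log s} A = n^{O(\log s)}$, sparsity $k = O(sA)$, and norm bound $\Cnrm = \poly(s,A,H)$. Once this is established, \cref{cor:dec-tree-mdp} follows by invoking \cref{thm:main} with this feature mapping: the sample complexity $\poly(\Cnrm, A, H, \epsilon^{-1}, \log(d/\delta))$ becomes $\poly(s, A, H, \epsilon^{-1}, \log(n/\delta))$ since $\log d = O(\log(s)\log n)$, and the time complexity $\poly(d, A, H, \epsilon^{-1}, \log(1/\delta))$ becomes $\poly(n^{\log s}, A, H, \epsilon^{-1}, \log(1/\delta))$ (using also that $\phidt{n,s}(x,a)$ can be evaluated in time $n^{O(\log s)} A$, so the oracle access model of \cref{sec:prelim-linear} is efficiently implementable).

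The key step is constructing the transition features $\mu^M_{h+1}: \MX \to \RR^d$ and reward vectors $\theta^M_h \in \RR^d$. The main idea: since $\rho^\st \in \Phi_{n,s}$ is a depth-$\log(s)$ decision tree, each latent state $\sigma \in [s]$ corresponds to at least one root-to-leaf path, i.e. a conjunction $\MC_\sigma \in \mathfrak{C}_{n,s}$ of length at most $\log s$, such that $\rho^\st(x) = \sigma \iff \MC_\sigma(x) = \texttt{True}$ (we may pad shorter paths; at most one leaf per latent state since a decision tree partitions $\MX$). Then for $x' \in \MX$ with $\rho^\st(x') = \sigma$, the true transition density $\BP^M_h(x' \mid x,a) = \tilde\BP_h(\sigma \mid \rho^\st(x), a)\, \tilde\BO_{h+1}(x' \mid \sigma)$. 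Writing $\rho^\st(x) = \tau$, we have $\mathbbm{1}[\rho^\st(x) = \tau] = \phidt{n,s}_{(\MC_\tau, a)}(x,a)$, so $\tilde\BP_h(\sigma \mid \tau, a) = \sum_{\tau' \in [s]} \tilde\BP_h(\sigma \mid \tau', a) \, \phidt{n,s}_{(\MC_{\tau'}, a)}(x, a) = \langle \phidt{n,s}(x,a), w_{h,\sigma} \rangle$ where $w_{h,\sigma}$ has nonzero entries only in coordinates $(\MC_{\tau'}, a)$ for $\tau' \in [s]$, $a \in \MA$. Hence setting $\mu^M_{h+1}(x') := \tilde\BO_{h+1}(x' \mid \rho^\st(x'))\cdot w_{h, \rho^\st(x')}$ gives $\BP^M_h(x'\mid x,a) = \langle \phidt{n,s}(x,a), \mu^M_{h+1}(x')\rangle$ exactly. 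The support of every $\mu^M_{h+1}(x')$ lies in the common set $S := \{(\MC_\sigma, a) : \sigma \in [s], a \in \MA\}$, which has $|S| \le sA$. Similarly, $\bfr_h(x,a) = \tilde\bfr_h(\rho^\st(x), a) = \sum_\sigma \tilde\bfr_h(\sigma, a)\,\phidt{n,s}_{(\MC_\sigma,a)}(x,a) = \langle \phidt{n,s}(x,a), \theta^M_h\rangle$ with $\supp(\theta^M_h) \subseteq S$. This is \cref{prop:phidt-sparse} in the paper (referenced in \cref{sec:bmdp}). For the norm bounds: $\|\phidt{n,s}(x,a)\|_\infty \le 1$ trivially; $\|\theta^M_h\|_1 = \sum_{\sigma, a} \tilde\bfr_h(\sigma, a) \le sA$; and $\sum_{x' \in \MX} \|\mu^M_{h+1}(x')\|_1 = \sum_{\sigma, a}\sum_{x': \rho^\st(x') = \sigma} \tilde\BO_{h+1}(x'\mid\sigma)\,\tilde\BP_h(\sigma\mid\cdot) \le sA$ since $\sum_{x'} \tilde\BO_{h+1}(x'\mid\sigma) = 1$ for each $\sigma$. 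So $\Cnrm = O(sA)$ suffices, and in particular $\Cnrm = \poly(s)$ as required after \cref{def:sparse-lmdp}. The nonnegativity and normalization conditions (that $\BP^M_h(\cdot\mid x,a) \in \Delta(\MX)$) hold automatically because we are reproducing the genuine transition kernel of the BMDP.

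I expect the main obstacle to be purely bookkeeping rather than conceptual: carefully handling the correspondence between latent states and leaves when the decision tree is not "full" (multiple leaves may be labeled with the same latent state, or some latent states may be unreachable), and ensuring the chosen conjunctions $\{\MC_\sigma\}$ genuinely satisfy $\mathbbm{1}[\rho^\st(x) = \sigma] = \phidt{n,s}_{(\MC_\sigma, a)}(x, a) / \mathbbm{1}[a \text{ matches}]$ as an identity over all $x$, which requires that the leaves of $\rho^\st$ form a partition refining $\{\rho^{\st,-1}(\sigma)\}$. If a latent state $\sigma$ corresponds to several leaves $\MC_\sigma^{(1)}, \dots, \MC_\sigma^{(r)}$, one simply expands the coordinate set and sums over all of them; since the total number of leaves in a depth-$\log s$ tree is at most $s$, the sparsity bound $k \le sA$ is unaffected. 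A secondary minor point is confirming that \cref{thm:main}'s access model — querying $\phi_h(x,a)$ — is met: evaluating $\phidt{n,s}(x,a)$ requires checking all $|\mathfrak{C}_{n,s}| \le n^{O(\log s)}$ clauses, which is within the stated $\poly(n^{\log s})$ time budget. With these details in place, \cref{cor:dec-tree-mdp} is an immediate consequence of \cref{thm:main} applied to the sparse linear MDP just constructed.
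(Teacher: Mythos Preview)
Your proposal is correct and follows essentially the same approach as the paper: reduce to \cref{thm:main} via \cref{prop:phidt-sparse}, constructing $\mu_{h+1}$ and $\theta_h$ supported on the clause-action pairs coming from the leaves of $\rho^\st$. One small remark: your initial parenthetical ``at most one leaf per latent state'' is not quite right, but your later paragraph fixes this by indexing over leaves (of which there are at most $s$) rather than latent states --- which is exactly how the paper phrases it, writing the tree as conjunctions $\MC_1,\dots,\MC_s$ with leaf labels $g_1,\dots,g_s \in [s]$.
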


\begin{proposition}
  \label{prop:phidt-sparse}
Let $n,s\in\NN$, and let $M$ be an $(n,s)$-decision tree BMDP. Then $M$ is a $sA$-sparse linear MDP with bound $s$ (\cref{def:sparse-lmdp}), where the feature mapping is $\phidt{n,s}$. 
\end{proposition}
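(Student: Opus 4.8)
I need to prove that an $(n,s)$-decision tree BMDP is an $sA$-sparse linear MDP with bound $s$ under the feature mapping $\phidt{n,s}$.

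\textbf{The plan.} The key observation is that, because $\rho^\st \in \Phi_{n,s}$ is a depth-$\log(s)$ decision tree, each latent state $t \in [s]$ corresponds to a leaf of the tree, and the set of states $\{x' : \rho^\st(x') = t\}$ is exactly the set of inputs satisfying a particular conjunction $\MC_t$ of length (at most) $\log(s)$ — namely the conjunction of literals along the root-to-leaf path. (If the leaf has depth $< \log s$ one can pad the conjunction with dummy literals to make it length exactly $\log s$, or just note that shorter conjunctions are also in $\mathfrak{C}_{n,s}$ — I will need to be slightly careful about the exact definition of $\mathfrak{C}_{n,s}$; if it is conjunctions of length \emph{at most} $\log s$ then no padding is needed, otherwise I pad.) Thus $\phidt{n,s}_{(\MC_t, a)}(x', a') = \mathbbm{1}[\rho^\st(x') = t \wedge a = a']$. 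The plan is to define the support set $S \subseteq [\abs{\mathfrak{C}_{n,s}} \cdot A]$ to be the $sA$ coordinates indexed by pairs $(\MC_t, a)$ for $t \in [s]$, $a \in \MA$, and then exhibit the reward vectors $\theta_h$ and transition vectors $\mu_{h+1}(x')$ supported on $S$ that reproduce the BMDP dynamics.

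\textbf{Key steps.} First, I would set up notation: for each leaf/latent state $t$, let $\MC_t$ be its associated conjunction, so that the coordinate $(\MC_t,a)$ of $\phidt{n,s}(x',a')$ equals $\mathbbm{1}[\rho^\st(x')=t]\mathbbm{1}[a=a']$. Second, for the rewards: using $\bfr_h(x,a) = \tilde\bfr_h(\rho^\st(x),a)$ from \cref{def:bmdp}, I define $\theta_h$ by setting its $(\MC_t,a)$-coordinate to $\tilde\bfr_h(t,a)$ (and all other coordinates to $0$), so that $\langle \phidt{n,s}_h(x,a), \theta_h\rangle = \sum_{t,a'} \mathbbm{1}[\rho^\st(x)=t]\mathbbm{1}[a=a']\tilde\bfr_h(t,a') = \tilde\bfr_h(\rho^\st(x),a) = \bfr_h(x,a)$; since $\tilde\bfr_h \in [0,1]$ and there are $sA$ nonzero entries, $\norm{\theta_h}_\infty \le 1 \le s$ and $\supp(\theta_h) \subseteq S$. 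Third, for the transitions: using $\BP_h(x'|x,a) = \tilde\BP_h(\rho^\st(x')|\rho^\st(x),a)\tilde\BO_{h+1}(x'|\rho^\st(x'))$, I define $\mu_{h+1}(x')$ by setting its $(\MC_t,a)$-coordinate to $\tilde\BP_h(\rho^\st(x')\mid t, a)\,\tilde\BO_{h+1}(x'\mid\rho^\st(x'))$, and check that $\langle \phidt{n,s}_h(x,a), \mu_{h+1}(x')\rangle = \tilde\BP_h(\rho^\st(x')\mid\rho^\st(x),a)\tilde\BO_{h+1}(x'\mid\rho^\st(x')) = \BP_h(x'\mid x,a)$, using that the only surviving term is $t=\rho^\st(x)$, $a'=a$. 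Clearly $\supp(\mu_{h+1}(x')) \subseteq S$. Fourth, I verify the norm bound $\sum_{x'\in\MX}\norm{\mu_{h+1}(x')}_1 \le s$: for fixed $(\MC_t,a)$, $\sum_{x'}\tilde\BP_h(\rho^\st(x')\mid t,a)\tilde\BO_{h+1}(x'\mid\rho^\st(x')) = \sum_{t'\in[s]}\tilde\BP_h(t'\mid t,a)\sum_{x':\rho^\st(x')=t'}\tilde\BO_{h+1}(x'\mid t') = \sum_{t'}\tilde\BP_h(t'\mid t,a) = 1$ (here I use disjointness of emission supports so $\tilde\BO_{h+1}(x'\mid\rho^\st(x'))$ is the honest emission density and sums to $1$ over each latent block). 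But this is $\norm{\mu_{h+1}(\cdot)}$ summed over \emph{coordinates} and then... actually I should be careful: $\sum_{x'}\norm{\mu_{h+1}(x')}_1 = \sum_{x'}\sum_{(\MC_t,a)}|\mu_{h+1}(x')_{(\MC_t,a)}|$. Swapping the order of summation, this equals $\sum_{(\MC_t,a)} 1 = sA$. Hmm — that gives bound $sA$, not $s$. I would double-check whether the claimed bound should be $sA$ or whether I'm overcounting; looking again, for fixed $x'$ the coordinate $(\MC_t,a)$ of $\mu_{h+1}(x')$ is $\tilde\BP_h(\rho^\st(x')\mid t,a)\tilde\BO_{h+1}(x'\mid\rho^\st(x'))$, which is nonzero for \emph{all} $t$ and $a$, so $\norm{\mu_{h+1}(x')}_1 = \tilde\BO_{h+1}(x'\mid\rho^\st(x'))\sum_{t,a}\tilde\BP_h(\rho^\st(x')\mid t, a)$, and $\sum_{x'}$ of this is $\sum_{t'}(\sum_{t,a}\tilde\BP_h(t'\mid t,a)) \le s A$ — so the honest bound is $sA$. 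I would flag this discrepancy and check whether the proposition statement is using a convention where features are $\mathbbm{1}[\rho(x')=t]$ without the action (absorbing the action elsewhere) which would give bound $s$; if the statement as written needs bound $sA$, I'd note that $sA$-sparse with $\ell_\infty$ bound $s$ still suffices for \cref{thm:main} up to polynomial factors, or adjust the feature construction. Finally, $\norm{\phidt{n,s}_h(x,a)}_\infty \le 1$ is immediate since it is a $0/1$ vector.

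\textbf{Main obstacle.} The conceptual content is entirely in the first two steps — recognizing that decision tree leaves are conjunctions and that the block structure makes the transition kernel bilinear in the "latent indicator" features. The main technical care point (and the thing I would scrutinize most) is the norm bookkeeping in the fourth step: getting the exact constant ($s$ versus $sA$) right, and making sure the disjoint-emission-support hypothesis in \cref{def:bmdp} is invoked correctly so that $\sum_{x':\rho^\st(x')=t'}\tilde\BO_{h+1}(x'\mid t') = 1$. If the intended bound is genuinely $s$ and not $sA$, I would need to either modify $\phidt{n,s}$ (e.g., fold the action dependence differently) or the proposition statement; this is the one place where the plan might require a small fix rather than routine calculation.
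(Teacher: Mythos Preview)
Your construction of $\theta_h$ and $\mu_{h+1}(x')$ is exactly the same as the paper's, and the verification of the inner products is correct. The only issue is in your norm bookkeeping: you computed $\sum_{x'} \norm{\mu_{h+1}(x')}_1$ and obtained $sA$, but \cref{def:sparse-lmdp} requires a bound on $\sum_{x'} \norm{\mu_{h+1}(x')}_\infty$, not the $\ell_1$ norm. (Recall that the $\ell_1$ bound in \cref{defn:l1lmdp} is what you get \emph{after} combining the $\ell_\infty$ bound $B$ with the sparsity $k$, yielding $\Cnrm = kB$.) Since each coordinate of $\mu_{h+1}(x')$ is at most $\tilde\BO_{h+1}(x'\mid\rho^\st(x'))$, you have $\norm{\mu_{h+1}(x')}_\infty \le \tilde\BO_{h+1}(x'\mid\rho^\st(x'))$, and summing over $x'$ gives exactly $s$ by the disjoint-support argument you already outlined. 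So there is no discrepancy, no need to modify $\phidt{n,s}$ or the statement, and your ``main obstacle'' dissolves once you use the correct norm.
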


\begin{proof}
Let $\rho^\st \in \Phi_{n,s}$ be the decoding function for $M$, and let $\tilde{\BP}_h$ and $\tilde{r}_h$ denote the latent transitions and latent rewards respectively, as per \cref{defn:dec-tree-mdp,def:bmdp}. Since $\rho^\st$ can be expressed as a depth-$\log(s)$ decision tree, there are conjunctions $\MC_1,\dots,\MC_s \in \mathfrak{C}$ and latent states $g_1,\dots,g_s \in [s]$ such that for every $x \in \{0,1\}^n$, there is exactly one $i \in [s]$ with $\MC_i(x) = \texttt{True}$, and moreover $\rho^\st(x) = g_i$. 

Now for any $h \in [H-1]$ and $x' \in \MX$, we define the vector $\mu_{h+1}(x') \in \RR^{|\mathfrak{C}|\times|\MA|}$ by
\[\mu_{h+1}(x')_{(\MC,a)} := \BO_{h+1}(x'|\rho^\st(x')) \cdot \sum_{i=1}^s \mathbbm{1}[\MC=\MC_i] \tilde{\BP}_h(\rho^\st(x')|g_i, a).\]
Then for any $h \in [H-1]$, $x,x'\in\MX$, and $a \in \MA$, we can observe that
\begin{align*}\langle \phidt{n,s}(x,a), \mu_{h+1}(x')\rangle 
&= \tilde{\BO}_{h+1}(x'|\rho^\st(x')) \cdot \sum_{i=1}^s \sum_{a'\in\MA} \phidt{n,s}_{(\MC_i,a')}(x,a) \tilde{\BP}_h(\rho^\st(x')|g_i,a') \\ 
&= \tilde{\BO}_{h+1}(x'|\rho^\st(x')) \cdot \sum_{i=1}^s \sum_{a'\in\MA} \One{\MC_i(x) = \texttt{True}} \mathbbm{1}[a'=a] \tilde{\BP}_h(\rho^\st(x')|g_i,a') \\ 
&= \tilde{\BO}_{h+1}(x'|\rho^\st(x')) \cdot \sum_{i=1}^s \One{\MC_i(x)=\texttt{True}} \tilde{\BP}_h(\rho^\st(x')|g_i,a) \\
&= \tilde{\BO}_{h+1}(x'|\rho^\st(x')) \cdot \tilde{\BP}_h(\rho^\st(x')|\rho^\st(x),a) = \BP_h(x'|x,a)
\end{align*}
where the first two equalities are by the definitions of $\mu_{h+1}(x')$ and $\phidt{n,s}(x,a)$ respectively, the fourth equality is by the representation of $\rho^\st$ described above, and the final equality is by \cref{defn:dec-tree-mdp}.

Next, for any $h \in [H]$, we define the vector $\theta_h \in \RR^{|\mathfrak{C}|\times|\MA|}$ by 
\[(\theta_h)_{(\MC,a)} := \sum_{i=1}^s \mathbbm{1}[\MC=\MC_i] \tilde{\bfr}_h(g_i, a).\] 
Similar to above, we can compute that for any $h \in [H]$, $x \in \MX$, and $a \in \MA$,
\[\langle \phidt{n,s}(x,a), \theta_h\rangle = \sum_{i=1}^s\One{\MC_i(x)=\texttt{True}} \tilde{\bfr}_h(g_i,a) = \tilde{\bfr}_h(\rho^\st(x),a) = \bfr_h(x,a)\] 
as desired. Thus, the transitions and rewards are linear in $\phidt{n,s}$. It's clear by definition that the vectors $(\mu_{h+1}(x'))_{h \in [H-1], x' \in \MX}$ and $(\theta_h)_{h\in[H]}$ have common support of size at most $sA$. %
It only remains to bound the norms of the vectors. Indeed, for any $h \in [H-1]$ and $x' \in \MX$, it's clear that $\norm{\mu_{h+1}(x')}_\infty \leq \BO_{h+1}(x'|\rho^\st(x'))$. Thus,
\[\sum_{x'\in\MX} \norm{\mu_{h+1}(x')}_\infty \leq \sum_{x'\in\MX} \BO_{h+1}(x'|\rho^\st(x')) = \sum_{i=1}^s \sum_{x'\in\MX: \rho^\st(x')=i} \BO_{h+1}(x'|i) = s.\]
Similarly, it's clear that $\norm{\theta_h}_\infty \leq 1$ for all $h \in [H]$. Finally, the fact that $\norm{\phidt{n,s}(x,a)}_\infty \leq 1$ for all $x\in\MX$, $a \in \MA$ is immediate from the definition.
\end{proof}

Now \cref{cor:dec-tree-mdp} is immediate from \cref{thm:main} with ambient dimension $d := n^{O(\log s)}$ and sparsity $k := sA$.

\section{Computational lower bounds for block MDPs}\label{sec:block-lb}

In this section, we state a (previously unwritten) result due to Sitan Chen, Fred Koehler, Morris Yau, and the second author. The result is that there is a family of block MDPs (BMDPs), defined by an explicit decoding function class $\Phi$, such that the time complexity of any learning algorithm must scale nearly polynomially in $|\Phi|$, assuming the hardness of learning noisy parity \cite{blum2003noise} (a standard conjecture in learning theory). In contrast, the optimal sample complexity for learning this family of BMDPs only scales with $\log |\Phi|$. 

We first describe the hard family of BMDPs. 
Fix $n \in \NN$. For any vector $w^\st \in \{0,1\}^n$, we define an MDP $M(w^\st)$ with horizon $H = 2$, action set $\MA = \{0,1\}$, and state space $\MX = \{0,1\}^n \cup \{ \mf s_0, \mf s_1 \}$. The initial state distribution is uniform over $\{0,1\}^n \subset \MX$. %
All rewards at step $h=1$ are 0, and the rewards at step $h=2$ are given by $\bfr_2(x,a) = \One{x = \mf s_1}$. Finally, the transitions are parametrized by $w^\st$, as follows: for $x \in \MX, a \in \MA$, we define
\begin{align}
\BP_1(\mf s_1 | x, a) = \frac 13 + \frac 13 \One{\lng x, w^\st \rng \equiv a \pmod{2}}, \qquad \BP_1(\mf s_0 | x,a) = 1- \BP_1(\mf s_1 | x,a)\nonumber.
\end{align}

Each MDP $M(w^\st)$ is a block MDP with latent state space $\MS = \{0,1\}$, and with decoding function given by $\rho^\st(x) = \rho_{w^\st}(x)$, where we define, for $w \in \{0,1\}^n$, 
\begin{align}
  \rho_w(x) := \begin{cases}
    \lng x, w \rng \pmod{2} &: x \in \{0,1\}^n \\
    0 &: x = \mf s_0 \\
    1 &: x = \mf s_1.
  \end{cases}\nonumber
\end{align}
Note that the latent state transitions and emission distributions of this block MDP do not depend on $w^\st$, and are given as follows (using the notation of \cref{def:bmdp}):
\begin{align}
  \tilde \BP_1(s' | s, a) = \frac 13 + \frac 13 \cdot \One{s=s'} \qquad & \forall a \in \MA, s \in \MS\nonumber\\
  \tilde \BO_2(\mf s_s | s) = 1 \qquad & \forall s \in \MS \nonumber.
\end{align}
The true decoding function $\rho^\st$ is contained in the function class $\Phi := \{\rho_w(x) \ : \ w \in \{0,1\}^n\}$, which has size $2^n$. Thus, a near-optimal policy can be found with sample complexity $\poly(n)$ (e.g., \cite{jiang2017contextual,jin2021bellman,du2021bilinear}). However, there is likely no computationally efficient learning algorithm:

\begin{proposition}%
  \label{prop:parity-hardness}
Suppose that improperly PAC learning a parity function over the uniform distribution on $\{0,1\}^n$ with noise level $1/3$ and constant advantage $\delta \in (0,1/2)$ requires time $\exp(n^{\Omega(1)})$. Then for any $\epsilon \in (0, 1/6-\delta/3]$, any algorithm that learns an $\epsilon$-near optimal policy in the family of block MDPs $\MM = \{M(w): w \in \{0,1\}^n\}$ also requires time $\exp(n^{\Omega(1)})$.
\end{proposition}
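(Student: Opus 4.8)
The plan is to reduce the problem of (improperly) PAC-learning a noisy parity to the problem of learning a near-optimal policy in the family $\MM$. Suppose we are given access to labelled samples $(x^i, y^i)$ where $x^i \sim \Unif(\{0,1\}^n)$ and $y^i = \langle x^i, w^\st\rangle \oplus \xi^i \pmod 2$, with $\xi^i$ an independent $\mathrm{Ber}(1/3)$ noise bit; the goal is to output a (possibly improper) hypothesis $\hat h : \{0,1\}^n \to \{0,1\}$ with $\Pr_x[\hat h(x) = \langle x, w^\st\rangle] \geq 1/2 + \delta$. First I would observe that an episode of interaction with $M(w^\st)$ can be \emph{simulated} using such samples: to simulate the initial draw and first transition under a chosen action $a \in \{0,1\}$, take a fresh labelled sample $(x, y)$, return $x$ as the initial state; if $y \equiv a \pmod 2$ then transition to $\mf s_1$ with probability $2/3$ and to $\mf s_0$ with probability $1/3$, and if $y \not\equiv a$ then transition to $\mf s_1$ with probability $1/3$ and $\mf s_0$ with probability $2/3$. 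A short calculation shows that, averaging over the noise bit $\xi$ in the sample, this reproduces exactly the transition law $\BP_1(\mf s_1 \mid x, a) = \tfrac13 + \tfrac13 \One{\langle x, w^\st\rangle \equiv a}$: indeed conditioned on $x$, the event $\{y \equiv a\}$ has probability $\tfrac23$ when $\langle x,w^\st\rangle \equiv a$ and $\tfrac13$ otherwise, and combining with the second-stage randomness gives $\tfrac23\cdot\tfrac23 + \tfrac13\cdot\tfrac13 = \tfrac59$ versus $\tfrac13\cdot\tfrac23 + \tfrac23\cdot\tfrac13 = \tfrac49$; wait, this needs the reweighting to be done more carefully, so the precise simulation is: output $\mf s_1$ iff ($y \equiv a$ and a fresh $\mathrm{Ber}(q_1)$ coin is heads) or ($y \not\equiv a$ and a fresh $\mathrm{Ber}(q_0)$ coin is heads), and solve for $q_0, q_1$ so that the marginals match. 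This is a routine two-equations-two-unknowns step. Crucially, the rewards at step $2$ depend only on the observed state ($\One{x_2 = \mf s_1}$), so no further knowledge of $w^\st$ is needed to run the simulation.

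Next I would run the hypothesized policy-learning algorithm on this simulated MDP with suboptimality parameter $\epsilon$, obtaining a policy $\hat\pi$ with value within $\epsilon$ of optimal. The optimal policy in $M(w^\st)$ is clear: at an initial state $x$, play the action $a = \langle x, w^\st\rangle \pmod 2$, which yields $\BP_1(\mf s_1 \mid x, a) = 2/3$ and hence value $2/3$; any other deterministic action at $x$ yields $1/3$, so the optimal value is $V^\st = 2/3$, and more generally the value of a (possibly randomized) policy $\pi$ equals $\tfrac13 + \tfrac13 \E_{x}\big[\Pr_{a \sim \pi_1(x)}[a \equiv \langle x, w^\st\rangle]\big]$. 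Therefore an $\epsilon$-near-optimal $\hat\pi$ satisfies $\E_x\big[\Pr_{a\sim\hat\pi_1(x)}[a \equiv \langle x, w^\st\rangle]\big] \geq 1 - 3\epsilon$. Defining the (randomized, but we can derandomize by taking the more likely action, or just output a randomized hypothesis if the learning model permits) hypothesis $\hat h(x) := \argmax_{a} \hat\pi_1(x)(a)$ — or simply sampling $a \sim \hat\pi_1(x)$ — gives $\Pr_x[\hat h(x) = \langle x, w^\st\rangle] \geq 1 - 3\epsilon \geq 1/2 + \delta$ by the assumption $\epsilon \leq 1/6 - \delta/3$. Since $\hat h$ is queryable in $\poly(n)$ time per query (given $\hat\pi$'s representation, which the policy learner outputs) plus whatever the learner costs, and the simulation overhead is polynomial, a $\exp(n^{o(1)})$-time policy learner would give a $\exp(n^{o(1)})$-time improper noisy-parity learner with advantage $\delta$, contradicting the assumption. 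Note also that the $\delta$-advantage improper learning of parities with noise is the standard formulation equivalent (up to polynomial factors and advantage amplification) to the search version; I would cite \cite{blum2003noise} and remark that constant advantage suffices here because we only need a single reduction, not amplification.

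The main obstacle is getting the simulation of the transition kernel exactly right from noisy labels: the noise in the parity sample is ``baked into'' the stochasticity of the block-MDP transition, and one must verify that the simulated transition law matches $\BP_1$ \emph{on the nose} (not just approximately), since the policy learner's guarantee is with respect to the true MDP $M(w^\st)$. Concretely, one picks auxiliary Bernoulli parameters $q_0, q_1 \in [0,1]$ so that $\tfrac13 q_1' + \tfrac23 q_0' $-type expressions reproduce $5/9$ and $4/9$ appropriately — actually since $\Pr[y \equiv a \mid x] \in \{1/3, 2/3\}$ depending on whether $\langle x, w^\st\rangle \equiv a$, and we want the conditional-on-$x$ probability of outputting $\mf s_1$ to be $2/3$ in one case and $1/3$ in the other, we need $\tfrac23 q_1 + \tfrac13 q_0 = \tfrac23$ and $\tfrac13 q_1 + \tfrac23 q_0 = \tfrac13$, whose solution is $q_1 = 1, q_0 = 0$. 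So in fact the simulation is trivially clean: output $\mf s_1$ iff $y \equiv a \pmod 2$. I would double-check this degenerate-looking solution is consistent (it is, because the $1/3$ noise rate exactly matches the $1/3$ ``base rate'' in $\BP_1$), which actually simplifies the argument considerably. A secondary subtlety is the precise PAC-learning model for the hardness assumption (distributional assumption = uniform, noise model = random classification noise at rate $1/3$, improper, constant advantage) — I would state these explicitly so the reduction is airtight, and note that the restriction $\epsilon \le 1/6 - \delta/3$ in the proposition statement is exactly what makes $1 - 3\epsilon \ge 1/2 + \delta$.
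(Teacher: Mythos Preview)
Your proposal is correct and follows essentially the same approach as the paper's proof: simulate episodes of $M(w^\st)$ from noisy-parity samples (outputting $\mf s_1$ iff $y\equiv a$, exactly as you eventually derive), compute the value of a policy as $\tfrac13+\tfrac13\Pr_x[\pi(x)\equiv\langle x,w^\st\rangle]$, and convert $\epsilon$-suboptimality into advantage $1-3\epsilon\ge\tfrac12+\delta$. The paper's write-up is terser and omits the $q_0,q_1$ detour, but the argument is the same.
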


\begin{proof}
The problem of  PAC learning noisy parity functions (with noise level $1/3$) can be reduced to that of finding a near-optimal policy in the class $\MM$ of BMDPs. This follows because an interaction with $M(w^\st)$ can be simulated using a noisy parity sample $(x,y)$, i.e. where $x \sim \Unif(\{0,1\}^n)$ and $y \in \{0,1\}$ satisfies $\Pr[y \equiv \langle x, w^\st\rangle \pmod{2}] = 2/3$. Moreover, any policy $\pi: \MX \to \Delta(\{0,1\})$ has expected reward $\frac{1}{3}+\frac{1}{3}\Pr_{x \sim \{0,1\}^n}[\pi(x) \equiv \langle x, w^\st\rangle \pmod{2}]$. Thus, finding a policy with constant suboptimality $\epsilon \in (0, 1/6)$ gives, in the same running time, a (possibly improper) predictor with constant advantage at predicting $\langle x,w^\st\rangle \bmod{2}$. 
\end{proof}
We remark that the best-known algorithm for learning parity functions from noisy samples has time complexity $\exp(\Omega(n/\log n))$ \cite{blum2003noise}.

\begin{remark}
\cref{prop:parity-hardness} relies on hardness of improper learning, i.e. the problem of finding \emph{any} predictor $\pi$ such that $\Pr_{x\sim\{0,1\}^n}[\pi(x) \equiv \langle x,w^\st\rangle \pmod{2}] \geq 1/2+\delta$ for some constant $\delta>0$. However, by self-reducibility of parity functions, this is equivalent to hardness of proper learning (up to polynomial factors), i.e. the problem of finding $w \in \{0,1\}^n$ such that $\Pr_{x\sim\{0,1\}^n}[\langle x,w\rangle \equiv \langle x,w^\st\rangle \pmod{2}] \geq 1/2+\delta'$, for some (possibly different) constant $\delta'>0$. 
\end{remark}

\begin{remark}
  \label{rmk:supervised-rl}
The above reduction also straightforwardly extends to \emph{any} function class $\Phi$ of maps $\phi: \{0,1\}^n \to \{0,1\}$, providing a direct reduction from supervising learning to reinforcement learning in block MDPs with two latent states and decoding function class $\Phi$. If the improper learning problem is computationally hard for $\Phi$ (as is believed to be the case for many concrete classes even when $\log |\Phi| = \poly(n)$), then there is no hope for solving the corresponding reinforcement learning problem in a computationally efficient manner.
\end{remark}

\begin{remark}[Computational-statistical gaps in RL]
Since, as we have remarked above, the class $\MM$ can be learned \emph{statistically efficiently} (i.e., with $\poly(n)$ samples), \cref{prop:parity-hardness} yields a computational-statistical gap in reinforcement learning. Such gaps have recently been shown \cite{kane2022computational, liu2023exponential}, using a much more involved technique, for the problem of learning a near-optimal policy in the class of MDPs for which $Q^\st(x,a)$ and $V^\st(x)$ are assumed to be linear functions of known features. 
\end{remark}

\end{document}